\documentclass{iitthesis-au}

\usepackage{xcolor} 
\definecolor{COLORa}{HTML}{131FAA}
\definecolor{COLORb}{HTML}{6633C6}
\definecolor{COLORc}{HTML}{5AA136}
\usepackage{url}
\usepackage[hypertexnames=false]{hyperref} 
\hypersetup{
    colorlinks=true,
    linkcolor=COLORa,
    filecolor=COLORa,     
    urlcolor=COLORa,
    citecolor=COLORa} 
\usepackage{float}
\usepackage{amsmath}
\usepackage{amssymb}

\usepackage{algorithm}
\usepackage[commentColor=black]{algpseudocodex}

\usepackage{graphicx}
\usepackage{rotating}
\usepackage{epsfig}
\usepackage{booktabs}
\usepackage{amsthm}
\usepackage{tikz}
\usepackage{cleveref}
\usepackage{subcaption}
\usepackage{multirow}
\usepackage{enumitem}
\usepackage{caption} 
\captionsetup{format=plain} 
\captionsetup{font={stretch=1.0}}

\newtheorem{theorem}{Theorem}[section]
\newtheorem{corollary}[theorem]{Corollary}
\newtheorem{lemma}[theorem]{Lemma}
\newtheorem{definition}[theorem]{Definition}
\newtheorem{proposition}[theorem]{Proposition}

\newtheorem{condition}[theorem]{Condition}
\newtheorem{remark}[theorem]{Remark}

\bibliographystyle{ieeetr}

\usepackage{accents} 

\newcommand{\D}{\mathrm{d}}


\newcommand{\tB}{{\widetilde{B}}}
\newcommand{\tc}{{\widetilde{c}}}

\newcommand{\tD}{{\widetilde{D}}}

\newcommand{\tf}{{\widetilde{f}}}

\newcommand{\tG}{{\widetilde{G}}}
\newcommand{\tH}{{\widetilde{H}}}

\newcommand{\tK}{{\widetilde{K}}}

\newcommand{\tL}{{\widetilde{L}}}

\newcommand{\tr}{{\widetilde{r}}}
\newcommand{\tR}{{\widetilde{R}}}
\newcommand{\ts}{{\widetilde{s}}}
\newcommand{\tS}{{\widetilde{S}}}

\newcommand{\tV}{{\widetilde{V}}}

\newcommand{\ty}{{\widetilde{y}}}

\newcommand{\talpha}{{\widetilde{\alpha}}}

\newcommand{\tEta}{{\widetilde{\eta}}}

\newcommand{\tkappa}{{\widetilde{\kappa}}}
\newcommand{\tlambda}{{\widetilde{\lambda}}}

\newcommand{\tmu}{{\widetilde{\mu}}}

\newcommand{\tsigma}{{\widetilde{\sigma}}}

\newcommand{\ba}{{\boldsymbol{a}}}

\newcommand{\bc}{{\boldsymbol{c}}}

\newcommand{\bD}{{\boldsymbol{D}}}
\newcommand{\be}{{\boldsymbol{e}}}

\newcommand{\bg}{{\boldsymbol{g}}}

\newcommand{\bJ}{{\boldsymbol{J}}}

\newcommand{\bK}{{\boldsymbol{K}}}
\newcommand{\bl}{{\boldsymbol{l}}}

\newcommand{\bM}{{\boldsymbol{M}}}
\newcommand{\bn}{{\boldsymbol{n}}}

\newcommand{\bR}{{\boldsymbol{R}}}
\newcommand{\bs}{{\boldsymbol{s}}}
\newcommand{\bS}{{\boldsymbol{S}}}
\newcommand{\bt}{{\boldsymbol{t}}}
\newcommand{\bT}{{\boldsymbol{T}}}
\newcommand{\bu}{{\boldsymbol{u}}}
\newcommand{\bU}{{\boldsymbol{U}}}
\newcommand{\bv}{{\boldsymbol{v}}}
\newcommand{\bV}{{\boldsymbol{V}}}
\newcommand{\bw}{{\boldsymbol{w}}}

\newcommand{\bx}{{\boldsymbol{x}}}
\newcommand{\bX}{{\boldsymbol{X}}}
\newcommand{\by}{{\boldsymbol{y}}}
\newcommand{\bY}{{\boldsymbol{Y}}}
\newcommand{\bz}{{\boldsymbol{z}}}

\newcommand{\balpha}{{\boldsymbol{\alpha}}}
\newcommand{\bbeta}{{\boldsymbol{\beta}}}

\newcommand{\bDelta}{{\boldsymbol{\Delta}}}

\newcommand{\bvarepsilon}{{\boldsymbol{\varepsilon}}}

\newcommand{\bEta}{{\boldsymbol{\eta}}}
\newcommand{\btheta}{{\boldsymbol{\theta}}}

\newcommand{\blambda}{{\boldsymbol{\lambda}}}

\newcommand{\bmu}{{\boldsymbol{\mu}}}

\newcommand{\brho}{{\boldsymbol{\rho}}}

\newcommand{\btau}{{\boldsymbol{\tau}}}

\newcommand{\bchi}{{\boldsymbol{\chi}}}

\newcommand{\bPsi}{{\boldsymbol{\Psi}}}
\newcommand{\bomega}{{\boldsymbol{\omega}}}



\newcommand{\hbDelta}{{\widehat{\boldsymbol{\Delta}}}}




\newcommand{\tbf}{{\widetilde{\boldsymbol{f}}}}

\newcommand{\tbK}{{\widetilde{\boldsymbol{K}}}}

\newcommand{\tby}{{\widetilde{\boldsymbol{y}}}}


\newcommand{\tbDelta}{{\widetilde{\boldsymbol{\Delta}}}}

\newcommand{\tblambda}{{\widetilde{\boldsymbol{\lambda}}}}

\newcommand{\tbomega}{{\widetilde{\boldsymbol{\omega}}}}



\newcommand{\sbtau}{{\starhat{\boldsymbol{\tau}}}}

\newcommand{\sbomega}{{\starhat{\boldsymbol{\omega}}}}




\newcommand{\hf}{{\widehat{f}}}

\newcommand{\hg}{{\widehat{g}}}

\newcommand{\hn}{{\widehat{n}}}

\newcommand{\hV}{{\widehat{V}}}


\newcommand{\hmu}{{\widehat{\mu}}}
\newcommand{\hnu}{{\widehat{\nu}}}

\newcommand{\hsigma}{{\widehat{\sigma}}}


\newcommand{\cK}{{\widecheck{K}}}

\newcommand{\cP}{{\widecheck{P}}}





\newcommand{\fu}{{\mathfrak{u}}}

\newcommand{\fv}{{\mathfrak{v}}}





\renewcommand{\sf}{{\starhat{f}}}





\newcommand{\stau}{{\starhat{\tau}}}








\newcommand{\mA}{{\mathsf{A}}}

\newcommand{\mB}{{\mathsf{B}}}

\newcommand{\mC}{{\mathsf{C}}}

\newcommand{\mD}{{\mathsf{D}}}

\newcommand{\mE}{{\mathsf{E}}}

\newcommand{\mF}{{\mathsf{F}}}

\newcommand{\mG}{{\mathsf{G}}}
\newcommand{\mh}{{\mathsf{h}}}
\newcommand{\mH}{{\mathsf{H}}}
\newcommand{\mi}{{\mathsf{i}}}
\newcommand{\mI}{{\mathsf{I}}}
\newcommand{\mj}{{\mathsf{j}}}
\newcommand{\mJ}{{\mathsf{J}}}
\newcommand{\mk}{{\mathsf{k}}}
\newcommand{\mK}{{\mathsf{K}}}

\newcommand{\mL}{{\mathsf{L}}}

\newcommand{\mM}{{\mathsf{M}}}

\newcommand{\mP}{{\mathsf{P}}}

\newcommand{\mR}{{\mathsf{R}}}

\newcommand{\mS}{{\mathsf{S}}}
\newcommand{\mT}{{\mathsf{T}}}

\newcommand{\mU}{{\mathsf{U}}}
\newcommand{\mv}{{\mathsf{v}}}
\newcommand{\mV}{{\mathsf{V}}}

\newcommand{\mx}{{\mathsf{x}}}
\newcommand{\mX}{{\mathsf{X}}}
\newcommand{\my}{{\mathsf{y}}}

\newcommand{\mz}{{\mathsf{z}}}


\newcommand{\mGamma}{{\mathsf{\Gamma}}}

\newcommand{\mDelta}{{\mathsf{\Delta}}}

\newcommand{\mLambda}{{\mathsf{\Lambda}}}

\newcommand{\mXi}{{\mathsf{\Xi}}}

\newcommand{\mPi}{{\mathsf{\Pi}}}

\newcommand{\mSigma}{{\mathsf{\Sigma}}}

\newcommand{\mUpsilon}{{\mathsf{\Upsilon}}}


\newcommand{\bmi}{{\boldsymbol{\mathsf{i}}}}

\newcommand{\bmz}{{\boldsymbol{\mathsf{z}}}}



\newcommand{\tmA}{{\widetilde{\mathsf{A}}}}

\newcommand{\tmC}{{\widetilde{\mathsf{C}}}}

\newcommand{\tmE}{{\widetilde{\mathsf{E}}}}

\newcommand{\tmK}{{\widetilde{\mathsf{K}}}}


\newcommand{\tmLambda}{{\widetilde{\mathsf{\Lambda}}}}




\newcommand{\calB}{{\mathcal{B}}}

\newcommand{\calD}{{\mathcal{D}}}

\newcommand{\calE}{{\mathcal{E}}}

\newcommand{\calF}{{\mathcal{F}}}

\newcommand{\calG}{{\mathcal{G}}}

\newcommand{\calK}{{\mathcal{K}}}

\newcommand{\calL}{{\mathcal{L}}}

\newcommand{\calM}{{\mathcal{M}}}

\newcommand{\calN}{{\mathcal{N}}}

\newcommand{\calO}{{\mathcal{O}}}

\newcommand{\calP}{{\mathcal{P}}}

\newcommand{\calR}{{\mathcal{R}}}

\newcommand{\calT}{{\mathcal{T}}}

\newcommand{\calU}{{\mathcal{U}}}

\newcommand{\calX}{{\mathcal{X}}}





\newcommand{\tcalK}{{\widetilde{\mathcal{K}}}}





\newcommand{\bbB}{{\mathbb{B}}}

\newcommand{\bbC}{{\mathbb{C}}}

\newcommand{\bbE}{{\mathbb{E}}}

\newcommand{\bbG}{{\mathbb{G}}}

\newcommand{\bbN}{{\mathbb{N}}}

\newcommand{\bbR}{{\mathbb{R}}}

\newcommand{\bbU}{{\mathbb{U}}}

\newcommand{\bbV}{{\mathbb{V}}}

\newcommand{\bbZ}{{\mathbb{Z}}}




\newcommand{\rC}{{\mathring{C}}}

\newcommand{\rK}{{\mathring{K}}}

\newcommand{\rr}{{\mathring{r}}}


\newcommand{\rmu}{{\mathring{\mu}}}


\newcommand{\rrC}{{\ringring{C}}}

\newcommand{\rrK}{{\ringring{K}}}

\newcommand{\rrr}{{\ringring{r}}}



\newcommand{\dC}{{\dot{C}}}

\newcommand{\dK}{{\dot{K}}}

\newcommand{\dr}{{\dot{r}}}



\newcommand{\ddC}{{\ddot{C}}}

\newcommand{\ddK}{{\ddot{K}}}

\newcommand{\ddr}{{\ddot{r}}}



\newcommand{\dddK}{{\dddot{K}}}

\newcommand{\dddr}{{\dddot{r}}}


\newcommand{\llvert}{\left\lvert}
\newcommand{\rrvert}{\right\rvert}
\newcommand{\llVert}{\left\lVert}
\newcommand{\rrVert}{\right\rVert}
\newcommand{\llangle}{\left\langle}
\newcommand{\rrangle}{\right\rangle}

\newcommand{\bzero}{{\boldsymbol{0}}}
\newcommand{\bone}{{\boldsymbol{1}}}


\newcommand{\simiid}{\overset{\mathrm{IID}}{\sim}}

\newcommand{\Var}{\mathrm{Var}}
\newcommand{\Cov}{\mathrm{Cov}}


\newcommand{\qtq}[1]{\quad\mathrm{#1}\quad}
\newcommand{\qqtqq}[1]{\qquad\mathrm{#1}\qquad}

\newcommand{\trace}{\mathrm{trace}}

\DeclareMathOperator*{\argmin}{argmin}
\DeclareMathOperator*{\argmax}{argmax}

\newcommand{\diag}{\mathrm{diag}}
\newcommand{\wal}{\mathrm{wal}}
\newcommand{\starhat}[1]{\accentset{\star}{#1}}

\newcommand{\approxhat}[1]{\accentset{\approx}{#1}}

\newcommand\ringring[1]{%
  {
   \mathop{\kern0pt #1}\limits^{
     \vbox to-1.85ex{
       \kern-2ex 
       \hbox to 0pt{\hss\normalfont\kern.1em \r{}\kern-.45em \r{}\hss}%
       \vss 
     }
   }
  }
}

\usepackage{listings}
\definecolor{darkgreen}{rgb}{0,0.6,0}
\lstdefinestyle{Python}{
    showstringspaces=false,
    language        = Python,
    basicstyle      = \small\ttfamily,
    morekeywords = {as},
    keywordstyle    = \color{blue},
    stringstyle     = \color{purple},
    commentstyle    = \color{darkgreen}\ttfamily,
    breaklines = true,
	postbreak=\text{$\hookrightarrow$\space},
	alsoletter = {>,.} ,
    morekeywords = [2]{>>>,...},
    keywordstyle = [2]\color{cyan}\bfseries}

\newcommand{\patchoverfull}{\newline}

\DeclareFontFamily{U}{mathx}{}
\DeclareFontShape{U}{mathx}{m}{n}{<-> mathx10}{}
\DeclareSymbolFont{mathx}{U}{mathx}{m}{n}
\DeclareMathAccent{\widehat}{0}{mathx}{"70}
\DeclareMathAccent{\widecheck}{0}{mathx}{"71}

\begin{document}

\title{Algorithms and Scientific Software for Quasi-Monte Carlo, \\ Fast Gaussian Process Regression, and Scientific Machine Learning}
\author{Aleksei Gregory Sorokin}
\degree{Doctor of Philosophy}
\dept{Applied Mathematics}
\date{December 2025}
\maketitle

\prelimpages

\begin{acknowledgment} 
\indent Thank you to Mariss, my mother, my father, my brother, my grandmother, and my aunt for your unwavering kindness, compassion, and guidance. Thank you to Professor Fred Hickernell for your mentorship, support, and guidance throughout my undergraduate and graduate studies. Thank you to Sou-Cheng Choi, Mike McCourt, Pieterjan Robbe, Jagadeeswaran Rathinavel, Aadit Jain, and the rest of the \texttt{QMCPy} team for your support and collaboration. Thank you to my thesis committee for your time and feedback. 

This material is based upon work supported by the U.S. Department of Energy, Office of Science, Office of Workforce Development for Teachers and Scientists, Office of Science Graduate Student Research (SCGSR) program. The SCGSR program is administered by the Oak Ridge Institute for Science and Education for the DOE under contract number DE-SC0014664.

This work is also supported by the National Science Foundation DMS Grant No. 2316011. 
\end{acknowledgment}

\begin{authorship} \label{sec:authorship}
    I, Aleksei Gregory Sorokin, attest that the work in this thesis is substantially my own.

	In accordance with the disciplinary norm of authorship in applied mathematics, the following collaborations occurred in the thesis. (Please consult Appendix S of the IIT Faculty Handbook for further information.) 

	My advisor, Prof. Fred J. Hickernell, contributed to the conceptualization and progress of the research project, as is the norm for an applied mathematics supervisor.  

    I am grateful to have had a number of opportunities to work with excellent collaborators across academia, national labs, and industry. Below I list some of these projects and cite our resulting papers. 

    \begin{itemize}
        \item \cite{sorokin.fastgps_probnum25}: \textbf{Fast Gaussian Process Regression for High-Dimensional Functions with Derivative Information.} This project was developed during my 2025 DOE SCGSR (US Department of Energy Office of Science Graduate Student Research) Program fellowship appointment at Sandia National Laboratories in  collaboration with Pieterjan Robbe and Fred J. Hickernell. This paper was published in the \emph{Proceedings of Machine Learning Research (PMLR) in the First International Conference on Probabilistic Numerics 2025} with a Creative Commons Attribution 4.0 International License \url{https://creativecommons.org/licenses/by/4.0}. See \Cref{sec:fmtgps} for details of this project. 
        \item \cite{hickernell.qmc_what_why_how}: \textbf{Quasi-Monte Carlo Methods: What, Why, and How?} This was a collaboration with Fred J. Hickernell and Nathan Kirk which was published in the \emph{Monte Carlo and Quasi-Monte Carlo Methods 2024 Proceedings}. See \Cref{sec:qmc} for details of this project, reproduced with permission from Springer Nature.  
        \item \cite{sorokin.MC_vector_functions_integrals}: \textbf{On Bounding and Approximating Functions of Multiple Expectations Using Quasi-Monte Carlo.} This was a collaboration with Jagadeeswaran Rathinavel which was published in the \emph{Monte Carlo and Quasi-Monte Carlo Methods 2022 Proceedings}. See  \Cref{sec:stop_crit_vectorziation} for details of this project, reproduced with permission from Springer Nature. 
        \item \cite{choi.challenges_great_qmc_software}: \textbf{Challenges in Developing Great Quasi-Monte Carlo Software.} This was a collaboration with Sou-Cheng T. Choi, Yuhan Ding, Fred J. Hickernell, and Jagadeeswaran Rathinavel which was published in the \emph{Monte Carlo and Quasi-Monte Carlo Methods 2022 Proceedings}. See \Cref{sec:qmc} for details of this project, reproduced with permission from Springer Nature. 
        \item \cite{choi.QMC_software}: \textbf{Quasi-Monte Carlo Software.} This was a  collaboration with Sou-Cheng T. Choi, Fred J. Hickernell, Jagadeeswaran Rathinavel, and Michael J. McCourt which was published in the \emph{Monte Carlo and Quasi-Monte Carlo Methods 2020 Proceedings}. See \Cref{sec:qmc} for details of this project, reproduced with permission from Springer Nature.
        \item \cite{sorokin.QMC_IS_QMCPy}: \textbf{(Quasi-)Monte Carlo Importance Sampling with QMCPy.} This was a collaboration with Fred J. Hickernell, Sou-Cheng T. Choi, Michael J. McCourt, and Jagadeeswaran Rathinavel which was published in the \emph{2021 IIT Undergraduate Research Journal}. See \Cref{sec:variable_transforms} for details of this project. 
        \item \cite{sorokin.RTE_DeepONet}: \textbf{A Neural Surrogate Solver for Radiation Transfer.} This project was developed during my 2024 Scientific Machine Learning Researcher appointment at FM (Factory Mutual Insurance Company) in collaboration with Xiaoyi Lu and Yi Wang. The paper was published through the \emph{NeurIPS 2024 Workshop on Data-Driven and Differentiable Simulations, Surrogates, and Solvers} with a Creative Commons Attribution-ShareAlike 4.0 International License \url{https://creativecommons.org/licenses/by-sa/4.0/}. See \Cref{sec:RTE_DeepONet} for details of this project. 
        \item \cite{sorokin.sigopt_mulch}: \textbf{SigOpt Mulch: An Intelligent System for AutoML of Gradient Boosted Trees.} This project was developed during my 2021 Machine Learning Engineer Intern appointment at SigOpt (now a part of Intel) in collaboration with Michael J. McCourt, Xinran Zhu, Eric Hans Lee, and Bolong Cheng. The paper was published in a \emph{2023 Knowledge-Based Systems Journal}. This project is not discussed in this thesis. 
        \item \cite{sorokin.gp4darcy}: \textbf{Computationally Efficient and Error Aware Surrogate Construction for Numerical Solutions of Subsurface Flow Through Porous Media.} This project was developed during my 2023 Graduate Internship appointment at Los Alamos National Laboratory in collaboration with Aleksandra Pachalieva, Daniel O'Malley, James M. Hyman, Fred J. Hickernell, and Nicolas W. Hengartner. The paper was published in a \emph{2024 Advances in Water Resources Journal}. See \Cref{sec:gp4darcy} for details of this project.
        \item \cite{gjergo.GalCEM1}: \textbf{GalCEM. I. An Open-Source Detailed Isotopic Chemical Evolution Code} and \cite{GalCEM.software}: \textbf{GalCEM: GALactic Chemical Evolution Model.} These papers were in collaboration with Eda Gjergo, Anthony Ruth, Emanuele Spitoni, Francesca Matteucci, Xilong Fan, Jinning Liang, Marco Limongi, Yuta Yamazaki, Motohiko Kusakabe, and Toshitaka Kajino. The two papers were published in the \emph{2023 Astrophysical Journal Supplement Series} and \emph{Astrophysics Source Code Library} respectively. We have also recently posted a preprint \cite{gjergo2025massive} on \textbf{Massive Star Formation at Supersolar Metallicities: Constraints on the Initial Mass Function} in collaboration with Eda Gjergo, Zhiyu Zhang, Pavel Kroupa, Zhiqiang Yan, Ziyi Guo, Tereza Jerabkova, Akram Hasani Zoonozi, and Hosein Haghi. This project is not discussed in this thesis. 
        \item \cite{sorokin.2025.ld_randomizations_ho_nets_fast_kernel_mats}: \textbf{QMCPy: A Python Software for Randomized Low-Discrepancy Sequences, Quasi-Monte Carlo, and Fast Kernel Methods.}  This is currently an unpublished preprint. We review this material in \Cref{sec:rldseqs_acm_toms}.
        \item \cite{sorokin.adaptive_prob_failure_GP}: \textbf{Credible Intervals for Probability of Failure with Gaussian Processes.} This was developed during my 2022 Givens Associate Internship appointment at Argonne National Laboratories in collaboration with Vishwas Rao. The paper is currently an unpublished preprint. See \Cref{sec:pfgpci} for details of this project.
        \item \cite{jain.bernstein_betting_confidence_intervals}: \textbf{Empirical Bernstein and Betting Confidence Intervals for Randomized Quasi-Monte Carlo.} This project was a collaboration with Art B. Owen, Aadit Jain, and Fred J. Hickernell. The paper is currently an unpublished preprint. This project is not discussed in this thesis.
        \item \cite{bacho.CHONKNORIS}, \textbf{Operator Learning at Machine Precision.} This project was developed during my 2025 DOE SCGSR fellowship at Sandia National Laboratories under the guidance of Fred J. Hickernell and Pieterjan Robbe along with collaborators from the California Institute of Technology, including Houman Owhadi, Aras Bacho, Xianjin Yang, Th\'eo Bourdais, Edoardo Calvello, and Matthieu Darcy, as well as collaborators from the University of Washington, including Alex Hsu and Bamdad Hosseini. The paper is currently an unpublished preprint. See \Cref{sec:CHONKNORIS} for details of this project. 
        \item \cite{sorokin.FastBayesianMLQMC}, \textbf{Fast Bayesian Multilevel Quasi-Monte Carlo.} This project was developed during my 2025 DOE SCGSR fellowship at Sandia National Laboratories in collaboration with Pieterjan Robbe, Fred J. Hickernell, Gianluca Geraci, and Michael S. Eldred. The paper is currently an unpublished preprint. See \Cref{sec:FastBayesianMLQMC} for details of this project.
    \end{itemize}
\end{authorship}

\tableofcontents
\clearpage

\listoftables
\clearpage

\listoffigures
\clearpage

\clearpage

\begin{abstract}
    \indent Most scientific domains elicit the development of efficient algorithms and accessible scientific software. This thesis unifies our developments in three broad domains: Quasi-Monte Carlo (QMC) methods for efficient high-dimensional integration,  Gaussian process (GP) regression for high-dimensional interpolation with built-in uncertainty quantification, and scientific machine learning (sciML) for modeling partial differential equations (PDEs) with mesh-free solvers. For QMC, we built new algorithms for vectorized error estimation and developed \texttt{QMCPy} (\url{https://qmcsoftware.github.io/QMCSoftware/}): an open-source Python interface to randomized low-discrepancy sequence generators, automatic variable transforms, adaptive error estimation procedures, and diverse use cases. For GPs, we derived new digitally-shift-invariant kernels of higher-order smoothness, developed novel fast multitask GP algorithms, and produced the scalable Python software \texttt{FastGPs} (\url{https://alegresor.github.io/fastgps/}). For sciML, we developed a new algorithm capable of machine precision recovery of PDEs with random coefficients. We have also studied a number of applications including GPs for probability of failure estimation, multilevel GPs for the Darcy flow equation, neural surrogates for modeling radiative transfer, and fast GPs for Bayesian multilevel QMC.
\end{abstract}

\textpages

\Chapter{Introduction} 

My PhD research has spanned a number of different scientific domains and ranged from theoretical developments to algorithmic enhancements to accessible implementations. This chapter discusses motivations and applications for a number of different domains upon which my research has touched. The advantages and limitations of each method will also be highlighted.

\Section{Quasi-Monte Carlo for High-Dimensional Numerical Integration}

\Subsection{Motivation and Applications} 

Low-discrepancy (LD) sequences, also called Quasi-random sequences, judiciously explore the unit cube in high dimensions. They were first developed to replace the independent points used in Monte Carlo simulation for high-dimensional numerical integration. The resulting Quasi-Monte Carlo (QMC) methods have proven theoretically superior to independent Monte Carlo methods for a large class of nicely behaved functions \cite{niederreiter.qmc_book,dick.digital_nets_sequences_book,kroese.handbook_mc_methods,dick2022lattice,lemieux2009monte,sloan1994lattice,dick.high_dim_integration_qmc_way}. \Cref{fig:points} shows the more uniform coverage of LD points compared to independent ones, and \Cref{fig:qmc_convergence} shows the resulting QMC algorithms attain superior convergence rates compared to independent Monte Carlo and grid-based cubature schemes for a six-dimensional Keister function \cite{keister.multidim_quadrature_algorithms_keister_fun}. The QMC advantage has also been shown across a variety of scientific disciplines including:
\begin{enumerate}
    \item \textbf{Financial modeling,} especially for option pricing \cite{joy1996quasi,lai1998applications,l2004quasi,l2009quasi,xu2015high,giles.mlqmc_path_simulation}. 
    \item \textbf{Solving PDEs (partial differential equations),} often with random coefficients which are sampled at LD locations \cite{graham2011quasi,kuo2012quasi,kuo2015multi,graham2015quasi,kuo.application_qmc_elliptic_pde,robbe.multi_index_qmc}.
    \item \textbf{Ray tracing,} for  graphics rendering and simulating light transport \cite{jensen2003monte,raab2006unbiased,waechter2011quasi}.
\end{enumerate}

\begin{figure}[!ht]
    \centering
    \includegraphics[width=1\linewidth]{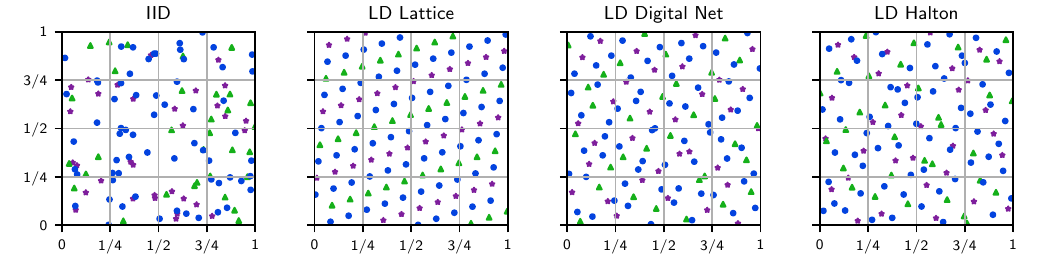}
    \caption{IID (independent identically distributed) and LD (low-discrepancy) point sets. IID points have gaps and clusters while the dependent LD points fill space more evenly. Quasi-Monte Carlo methods use LD points for high-dimensional integral approximation and typically converge faster than IID Monte Carlo methods. The points are colored as follows: purple stars for the first 32 points, green triangles for the next 32, and blue circles for the subsequent 64. The lattice has been randomly shifted, the digital net has been randomized with a nested uniform scramble, and the Halton points have been randomized with a linear matrix scramble and permutation scramble.}
    \label{fig:points}
\end{figure}

\begin{figure}[!ht]
    \centering
    \includegraphics[width=1\linewidth]{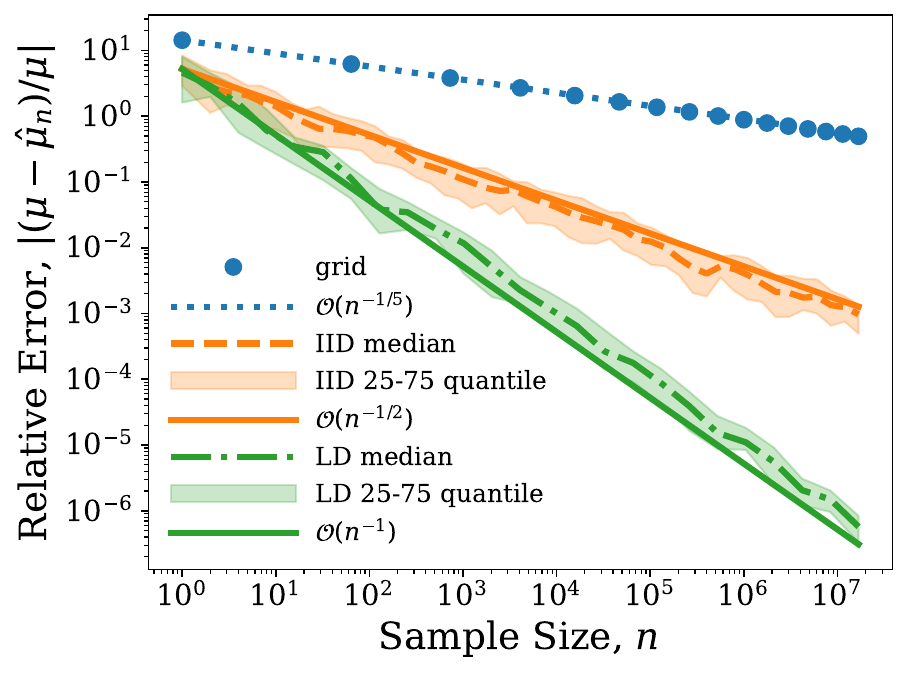}
    \caption{The relative error of approximating the mean of a $d=6$-dimensional Keister function \cite{keister.multidim_quadrature_algorithms_keister_fun} for various choices of nodes. Low-discrepancy (LD) nodes for Quasi-Monte Carlo methods achieve superior convergence rates compared to IID Monte Carlo methods and grid-based cubature schemes.}
    \label{fig:qmc_convergence}
\end{figure}

\Subsection{Randomized Low-Discrepancy Sequences and Randomized QMC}

Randomized LD sequences \cite{owen1995randomly,l2016randomized,owen.variance_alternative_scrambles_digital_net,MATOUSEK1998527,tezuka2002randomization}, such as those shown in \Cref{fig:points}, provide additional advantages:
\begin{enumerate}
    \item \textbf{Enabling QMC error estimation.} One may use independent randomizations of an LD sequence to give independent estimates to the sample mean. These independent estimates may be used to compute standard confidence intervals such as Student's-$t$ confidence intervals, see \Cref{sec:stop_crit_qmc_rep_student_t} or \cite{lecuyer.RQMC_CLT_bootstrap_comparison} or \cite[Chapter 17]{owen.mc_book_practical}.
    \item \textbf{Avoiding boundary observations.} The first point of an unrandomized LD sequence is often the origin, which is undesirable in many applications. For example, transformations such as the inverse CDF of a normal will map the origin to infinite values which could break simulations. Randomized LD sequences almost surely avoid sampling the boundary of the unit cube.
    \item \textbf{Avoid fooling functions.} While mainly a theoretical advantage, randomizing LD sequences negates the probability of pairing to adverse functions. 
\end{enumerate}
For these reasons, randomized LD sequences are typically used in practical applications, and randomization is the default in the vast majority of software libraries for which they are implemented.

While QMC is theoretically superior to independent Monte Carlo for sufficiently regular functions, simply replacing independent points with LD sequences often provides an advantage for functions which do not necessarily satisfy such regularity conditions. For example, later we will show that QMC provides superior error estimates compared to independent Monte Carlo for various option pricing examples. Such examples often contain kinks which destroy the regularity required for QMC methods to provide a theoretically guaranteed advantage. Even so, QMC is shown to provide improved convergence rates compared to IID Monte Carlo for these integrands without sufficient regularity.

Moreover, it is often more difficult to determine if a function satisfies QMC-regularity conditions that it would be to determine an expression for the mean which QMC tries to approximate. In our experience, for numerical integration, replacing IID points with LD points has never worsened performance and often enables superior rates of convergence, even for functions which are not sufficiently regular or for which verifying regularity conditions is infeasible.

\Section{Gaussian Processes for High-Dimensional Interpolation with Uncertainty Quantification}

\Subsection{Motivation and Applications} 

Gaussian process regression models, which we will simply refer to as GPs, are valuable tools for statistical machine learning that come with built-in uncertainty quantification (UQ) \cite{rasmussen.gp4ml,fasshauer.meshfree_approx_methods_matlab}. Specifically, one assumes the unknown function is a draw from a GP so that the posterior distribution conditioned on past function observations is also a GP. The posterior mean is often taken as the regression prediction, and UQ is typically quantified through some function of the posterior covariance. \Cref{fig:noisy_lattice_qgp.4} gives a simple sketch of a GP model. 

\begin{figure}[ht!]
    \centering
    \includegraphics[width=.8\textwidth]{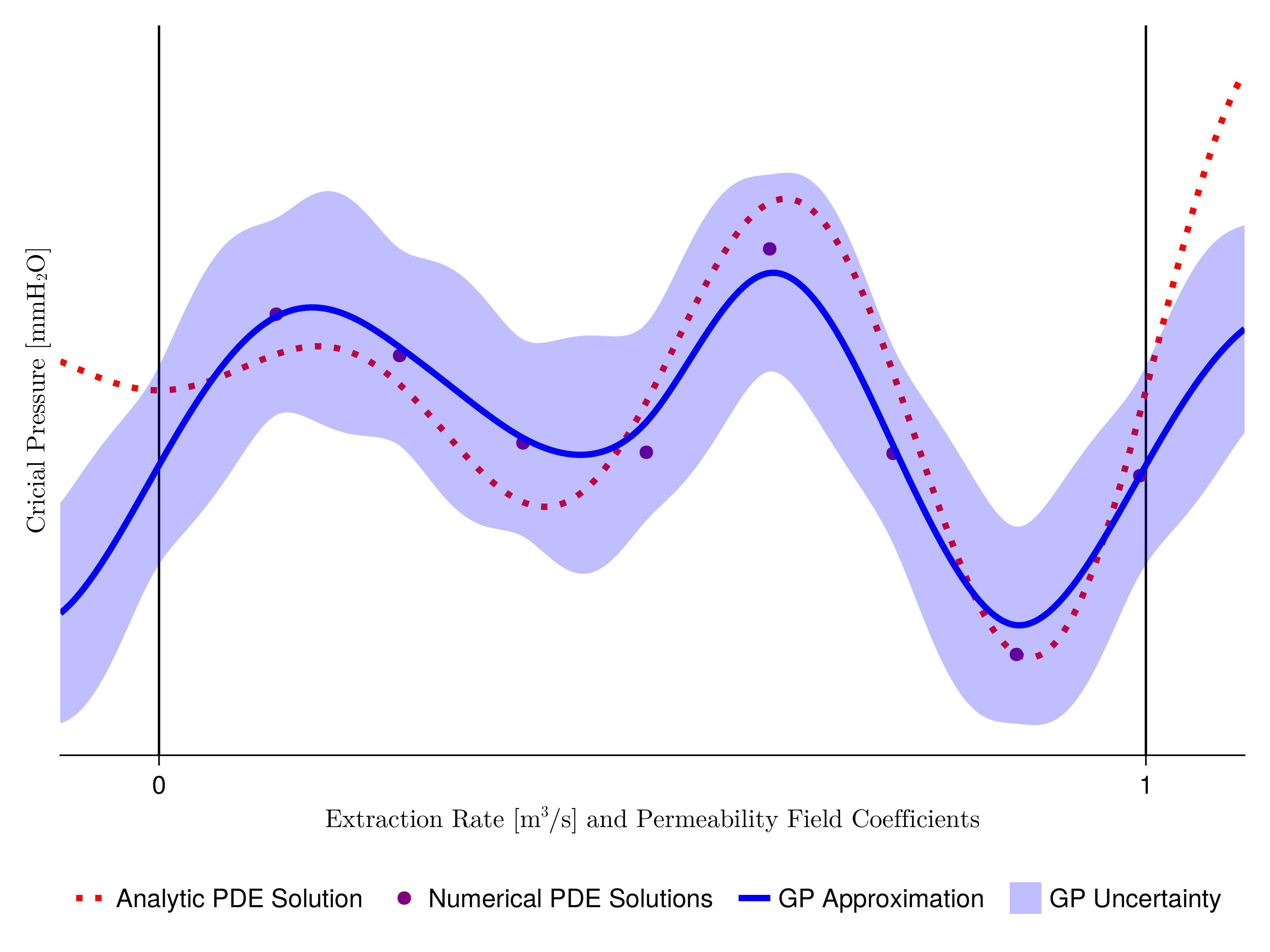}
    \caption{Sketch of a Gaussian process (GP) model which predicts critical pressure in the subsurface from a controlled extraction rate. The true analytic PDE solution is unknown, and we only have access to noisy observations in the form of numerical PDE solutions. We fit a GP in blue where the posterior mean is the GP approximation and the GP uncertainty is quantified through point-wise confidence intervals derived from the posterior variance.}
    \label{fig:noisy_lattice_qgp.4}
\end{figure}

The kernels used for GPs correspond to certain RKHSs (reproducing kernel Hilbert spaces) \cite{aronszajn.theory_of_reproducing_kernels}. In fact, GP regression and RKHS kernel interpolation (or more generally RKHS kernel ridge regression) share a number of similarities. We will give a few connections in \Cref{sec:gp_rkhs_connections}, see \cite{owhadi.book_operator_adapted_wavelets,kanagawa.gp_rkhs_connections} or \cite[Chapter 6]{rasmussen.gp4ml} for more details. The most well know similarities are that the GP posterior mean is equivalent to the optimal RKHS kernel interpolant, and noisy observations for GPs correspond to the optimal RKHS kernel ridge regression prediction which penalizes the RKHS norm of the prediction model. 

GPs, and their equivalents, have found applications across a broad range of scientific domains. These include:
\begin{enumerate}
    \item \textbf{Bayesian optimization,} to find global minima of a function which is expensive to evaluate \cite{bo_tutorial_frazier,snoek2012practical,wu2020practical}. Here the GP uncertainty guides sequential sampling to trade off exploration in regions of high uncertainty and exploitation of regions where the GP prediction is small.  
    \item \textbf{Solving PDEs.} For PDEs with deterministic coefficients, often the solution values and derivatives at collocation points are jointly optimized to minimize the norm of the kernel interpolant in the corresponding RKHS \cite{cockayne2017probabilistic,chen.learning_nonlinear_PDE_GP,chen.sparse_cholesky_PDE_kernel_methods,batlle2025error,long2024kernel}. For PDEs with random coefficients, the GP model often maps the coefficients in a Karhunen--Lo\`eve type expansion to the PDE solution from an existing solver \cite{kaarnioja.kernel_interpolants_lattice_rkhs,kaarnioja.kernel_interpolants_lattice_rkhs_serendipitous,batlle.operator_learning_kernel_methods,sorokin.gp4darcy}.
    \item \textbf{Bayesian cubature,} where the integrand is assumed to be a GP, so the posterior distribution of the integral has a closed form Gaussian distribution \cite{briol2019probabilistic,o1991bayes,rasmussen2003bayesian,rathinavel.bayesian_QMC_lattice,rathinavel.bayesian_QMC_sobol,rathinavel.bayesian_QMC_thesis}. 
    \item \textbf{Reliability analysis,} which is closely related to rare event simulation and level set estimation, tries to quantify the probability that a simulation subject to random conditions will fail \cite{rackwitz2001reliability,renganathan.PF_CAMERA_multifidelity,dubourg.metamodel_IS_reliability_analysis,bae.pf_gp_uncertainty_reduction,zanette.super_LSE_GP,sorokin.adaptive_prob_failure_GP}. Oftentimes, as in Bayesian optimization, uncertainty in the GP model guides sequential sampling to select informant settings at which to simulate.   
\end{enumerate} 

\Subsection{Multitask GPs} \label{sec:background_mtgps}

There has also been a recent push to model correlated simulations using multitask GPs which capture both inter-task and intra-task covariances \cite{bonilla2007multi,chai2010multi}. Specifically, multitask GP kernels model covariances between both the spatial inputs and the correlated simulation tasks. Often this kernel will be written as a product between a kernel over the spatial inputs and a freely-parameterized kernel over the task indices. Multitask GP models have proven valuable in:
\begin{enumerate}
    \item \textbf{Multitask Bayesian optimization,} for simultaneously minimizing multiple objectives or for meta-learning across correlated problems \cite{swersky2013multi,khatamsaz2023multi,huang2021bayesian,dai2020multi}. 
    \item \textbf{Medical treatment analysis,} for modeling treatment effects or correlated patient time series data \cite{alaa2017bayesian,durichen2014multi,futoma2017learning,chen2023multi}.
    \item \textbf{Engineering applications,} where designs effects may be simulated under a variety of controls \cite{williams2008multi,liu2023learning}. 
\end{enumerate}

\Subsection{GPs with Derivatives} \label{sec:background_gps_derivatives} 

Finally, there has been a growing interest in GP models which incorporate derivative observations \cite[Chapter 9.4]{rasmussen.gp4ml}. Such observations have been shown to improve performance across a number of applications:
\begin{enumerate}
    \item \textbf{Bayesian optimization,} for rough surface terrain reconstruction \cite{eriksson2018scaling} and training neural networks with auto-differentiation \cite{padidar2021scaling}.
    \item \textbf{Bayesian cubature,} where the gradient, and possibly Hessian, have been shown to enhance GP predictions \cite{wu2017exploiting}.
    \item \textbf{Modeling dynamical systems,} where one not only observes a function but also the derivatives \cite{solak.gp_derivatives}. 
    \item \textbf{Solving PDEs,} where the PDE solution is modeled as a GP and the function values, including derivative observations, are optimized to minimize the solution's RKHS norm under the PDE constrains at each collocation point \cite{chen.learning_nonlinear_PDE_GP,chen.sparse_cholesky_PDE_kernel_methods}.
\end{enumerate}

\Section{Accelerated GPs for Controlled Experiments} \label{sec:background_fgps}

The main drawback of standard Gaussian process regression methods is they typically require $\calO(n^2d+n^3)$ computations and $\calO(n^2)$ storage in the number of collocation points $n$ and dimension $d$. These costs quickly become prohibitive for many problems, even with access to HPC (high performance computing) systems. The crux of the problem is that one must solve a linear system in the SPD (symmetric positive definite) Gram matrix of pairwise kernel evaluations at collocation points. For this, a Cholesky factorization and back-substitution solves are standard. Often, one will also need to compute the determinant of the Gram matrix in order to optimize kernel hyperparameters. 

A large body of research has gone into accelerating standard Gaussian process regression techniques. In the following subsections we will briefly review popular methods to speed up GP inference and motivate our focus on fast GPs pairing low-discrepancy sequences and (digitally)-shift-invariant kernels. 

\Subsection{Preconditioned Conjugate Gradient Descent with Black Box Matrix Multiplication} \label{sec:pcg_bbmm}

Perhaps the most generally applicable and most widely used GP acceleration method is to solve the Gram matrix linear system using a preconditioned conjugate gradient (PCG) method and, if necessary, use an iterative approximation to the determinant \cite{gardner.gpytorch_GPU_conjugate_gradient}. This enables HPC systems to utilize  efficient matrix multiplication routines for which hardware like GPUs can provide massive benefits. This method, which backends popular software like \texttt{GPyTorch} \cite{gardner.gpytorch_GPU_conjugate_gradient}, reduces the complexity of standard Gaussian process regression to $\mathcal{O}(n^2d)$ when the Gram matrix is well conditioned. However, PCG may fail to provide significant savings when the Gram matrix is ill-conditioned, as often occurs when including derivative-observations or other highly-correlated observations. Moreover, these iterative techniques, and many other accelerated GP techniques, can only provide approximate solutions to the GP fitting problem. 

\Subsection{Forcing Gram Matrix Structure} \label{sec:forcing_gram_matrix_structure}

Exact GPs may be made more affordable by forcing structure into the Gram matrix. Nicely structured Gram matrices may permit reduced storage requirements and/or reduced costs for computing the product, inverse, or determinant. Exact, structured GPs typically have two requirements: 
\begin{enumerate}
    \item One must control the design of experiments in order to use special collocation points. 
    \item One must choose from a restricted class of kernels whose structures will pair nicely with the collocation points.
\end{enumerate}
The first requirement is prohibitive for many GP regression problems where the dataset is given to the modeler. However, many scientific computing applications allow for controlled experiments, for example when building surrogate models to computer experiments or sampling random coefficient inputs to a simulation. The second requirement may be more or less restrictive depending on what information is known about the simulation and which kernels are supported by the GP acceleration technique. 

\Subsection{Structured Kernel Interpolation with Grids and Product Kernels}

One popular structure-inducing method is to pair product kernels with Cartesian grid sampling locations to produce Kronecker product Gram matrices. For $n$ points in $d$ dimensions, one samples $\calO(n^{1/d})$ grid points in each dimension and the resulting Kronecker product Gram matrices require only $\calO(n^{2/d}d)$ storage (compared to the standard $\calO(n^2)$ requirement) and permit GP fitting at only $\calO(n^{2/d}d^2+n^{3/d}d)$ cost (compared to the standard $\calO(n^2d+n^3)$ cost) \cite{saatcci2012scalable,gardner2018product}. Additionally, if the covariance kernel is stationary then each matrix in the Kronecker product becomes Toeplitz and GP fitting costs can be further reduced to $\calO(d n^{1+1/d})$. 

These Kronecker structures can also be exploited for unstructured data by using a technique known as structured kernel interpolation (SKI) \cite{wilson2015kernel}. In essence, SKI uses pseudo-data that lies on a Cartesian grid and then interpolates the kernel between grid points in a sparse manner. This structure is typically used to further accelerate the black-box matrix multiplication in the PCG scheme of \cite{gardner.gpytorch_GPU_conjugate_gradient}.

Using grid points in higher dimensions can be intractable and is often undesirable. For example, in dimension $d=47$ using just $2$ grid points in each dimension would require $n=2^{47} \approx 1.4 \times 10^{14}$ function evaluations whose values would take up over a petabyte of storage in $64$ bit precision. It was proposed in \cite{wilson2014fast} to use partial grid structure and virtual observations. This may yield inexact GP fits as one must approximately solve the Gram matrix system and compute its determinant, often using PCG and block box matrix multiplication as proposed in \cite{gardner.gpytorch_GPU_conjugate_gradient} and discussed in \Cref{sec:pcg_bbmm}. Another drawback of using (partial) grids is their one-dimensional projections only have $n_j$ unique points, which is often quite small for large $d$. \Cref{fig:points_probnum25} shows $64$ grid points in $d=3$ dimensions with each $1$-dimensional projection having only $4$ unique values.

\begin{figure}[!ht]
    \centering
    \includegraphics[width=1\linewidth]{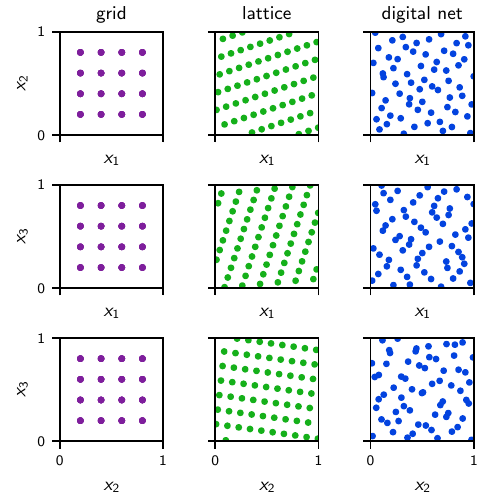}
    \caption{Projections of $n=64$ points in $d=3$ dimensions from a regular grid, shifted lattice point set, and digitally-shifted digital net. Notice the lattice and digital net low-discrepancy point sets more evenly fill the unit cube and have more diverse projections.}
    \label{fig:points_probnum25}
\end{figure}

\Subsection{Fast GPs with Low-Discrepancy Sequences and \\ (Digitally)-Shift-Invariant Kernels} \label{sec:intro_fastgps} 

Low-discrepancy (LD) sequences, especially randomized versions, have also been extensively used as efficient experimental designs. Two simple benefits over grid points are:
\begin{enumerate}
    \item LD sequences have unique points when projected onto subsets of dimensions. This is illustrated in \Cref{fig:points_probnum25} where all $64$ LD points are visible in any marginal projection, while for $64$ grid points only $16$ unique values are visible in any two-dimensional projection and only $4$ unique values are visible in any one-dimensional projection.  
    \item LD sequences attain nice uniformity properties in high dimensions for samples sizes $n=2^p$ for most $p \geq 0$. On the other hand, the minimum number of grid points grows exponentially with the dimension as illustrated in the earlier example in $d=47$ dimensions which would require at least $n=2^{47}$ function evaluations for practical application. 
\end{enumerate}

LD sequences have an additional benefit for accelerating GP regression: Pairing certain LD points with certain kernels yields structured Gram matrices which are diagonalizable by fast transforms \cite{zeng.spline_lattice_digital_net,zeng.spline_lattice_error_analysis}. These fast GP methods  require only $\calO(n)$ storage and $\calO(n \log n + nd)$ computations including kernel hyperparameter optimization. We will focus on two such pairings which were initially proposed in \cite{zeng.spline_lattice_error_analysis,zeng.spline_lattice_digital_net}: 
\begin{enumerate}
    \item Pairing rank-1 lattices with shift-invariant kernels produces circulant Gram matrices diagonalizable by fast Fourier transforms (FFTs) \cite{cooley1965algorithm} which cost only $\calO(n \log n)$. 
    \item Pairing digital nets in base $2$ with digitally-shift-invariant kernels produces Gram matrices diagonalizable by the fast Walsh--Hadamard transform (FWHT) \cite{fino.fwht} which also costs only $\calO(n \log n)$.  
\end{enumerate}

These methods face the same two limitations enumerated in \Cref{sec:forcing_gram_matrix_structure}. The first limitation is that one must control the design of experiments as was the case for exact GPs with grid points and product kernels. The SKI methods \cite{wilson2015kernel}, which use grids as inducing points to accommodate unstructured data, are not immediately applicable as sparse interpolation is not straightforward for LD points. It has recently been shown that certain LD point sets are also quasi-uniform in terms of their covering radius and separation radius \cite{dick2025quasi_lattices,dick2025quasi_digital_nets}. These special LD sequences will lead to tighter error bounds and improved convergence rates when coupled with the space filling terms in Sobolev inequalities which are commonly used in kernel method error analysis. However, other point sets, such as regular grids, can also obtain optimal quasi-uniformity rates of convergence, albeit at samples sizes which occur much less frequently than for LD point sets.

The second and primary limitation of fast GPs is the requirement to use shift-invariant (SI) or digitally-shift-invariant (DSI) kernels supported on the unit cube domain. These structures preclude most popular kernels such as the squared exponential, rational quadratic, and  Mat\'ern family. SI kernels are periodic and kernels of higher-order smoothness are known \cite{kaarnioja.kernel_interpolants_lattice_rkhs,kaarnioja.kernel_interpolants_lattice_rkhs_serendipitous,cools2021fast,cools2020lattice,sloan2001tractability,kuo2004lattice}. However, most functions which are encountered in practice are not periodic, and it is even less common to deal with functions which are both smoothly periodic and have derivatives which are also periodic.  While smoothness-preserving periodizing transforms exist, they often create large bumps in the function which increase variation and therefore degrade SI kernel based estimates. In our experiments, we found that using SI kernels on common benchmark problems often limited performance due to the unreasonable periodicity assumption, even after using periodizing transforms.

While the currently available DSI kernels are not periodic, they are discontinuous. Interestingly, in \Cref{sec:dsi_kernels_smooth_functions} will derive new discontinuous DSI kernels for which the corresponding RKHSs contain smooth functions. Unlike SI kernels, fast GPs with DSI kernels usually achieved competitive error estimates compared to more popular kernels while also enabling accelerated modeling and reduced storage requirements. However, practitioners may be troubled by additional concerns regarding using DSI kernel GPs. The first is that, despite their ability to model smooth functions, the discontinuous nature of DSI kernels means that the corresponding DSI-GPs have discontinuous prior draws, posterior means, and posterior variance estimates. The second troubling aspect is that the RKHSs corresponding to our new DSI kernels contain not only smooth functions but also functions which are not desirably smooth. This often leads to fast DSI-GPs with conservative error estimates as they must also consider functions outside the Sobolev classes of smooth functions which are often studied through the use of Mat\'ern kernels.

\Cref{fig:2d_gp_error} compares fast GP methods with LD points and (digitally)-shift-invariant kernels against a standard GP regression method for the highly oscillatory Ackley function in two dimensions \cite{ackley2012connectionist}. With $4096$ observations, the fast GP methods achieve slightly better accuracy and are over a thousand times faster to fit compared to the standard GP. 

\begin{figure*}[!ht]
    \centering
    \includegraphics[width=1\linewidth]{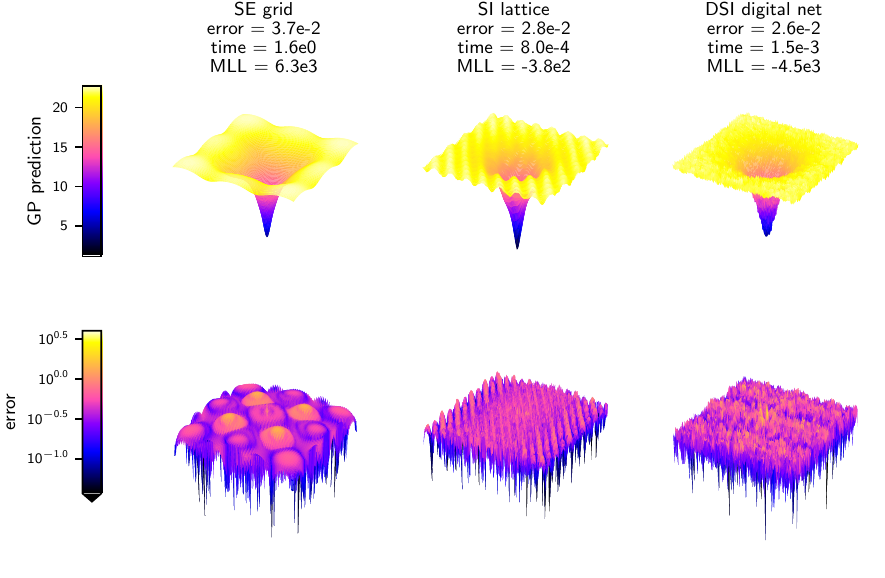}
    \caption{Comparison between standard GP regression using grid points with a squared exponential (SE) kernel and fast GP methods pairing either lattices with a shift-invariant (SI) kernel or digital nets with a DSI kernel. We model the Ackley function in $d=2$ dimensions sampled with $n=4096$ observations. Metrics for $L_2$ relative error, time per optimization step, and marginal log-likelihood (MLL) are also given.}
    \label{fig:2d_gp_error}
\end{figure*}

\Subsection{Applications of Fast GPs} \label{sec:apps_fast_gps}

The structures underlying fast GP methods have also been applied to accelerate kernel interpolation methods for:
\begin{enumerate}
    \item \textbf{Solving PDEs,} often with random coefficients which are sampled at low-discrepancy locations \cite{kaarnioja.kernel_interpolants_lattice_rkhs,kaarnioja.kernel_interpolants_lattice_rkhs_serendipitous,sorokin.gp4darcy}.
    \item \textbf{Fast Bayesian cubature,} which enables uncertainty quantification for Quasi-Monte Carlo with a single randomized low-discrepancy sequence \cite{rathinavel.bayesian_QMC_lattice,rathinavel.bayesian_QMC_sobol,rathinavel.bayesian_QMC_thesis}. This improves upon popular Quasi-Monte Carlo methods which typically require multiple randomized low-discrepancy sequences to enable error estimation. 
    \item \textbf{Fast discrepancy computations,} which may exploit the Gram matrix structures for kernel discrepancy computations in RKHSs (reproducing kernel Hilbert spaces) \cite{hickernell.generalized_discrepancy_quadrature_error_bound,hickernell1998lattice}. 
\end{enumerate}

\Section{Scientific Machine Learning for Modeling PDEs \\ with Mesh-Free Solvers} \label{sec:background_sciml}

Scientific machine learning (sciML) has garnered significant interest in recent years thanks to its ability to automatically solve PDEs. SciML carries a number of advantages over traditional PDE solvers 
\begin{enumerate}
    \item \textbf{Easy setup of solvers,} with just a few lines of code specifying the PDE constraints in a compatible format. 
    \item \textbf{Flexible parameterizations,} through a variety of neural network architectures or GP kernel specifications.
    \item \textbf{Scalable training,} enabled by extensive work on GPU-acceleration libraries such as \texttt{PyTorch} \cite{PyTorch.software} and their extensions to enable multi-GPU and cluster support.  
    \item \textbf{Rapid inference,} thanks to the fast and parallelizable evaluation of existing sciML models. This is especially evident in operator learning of PDEs with random coefficients \cite{boulle2024mathematical,kovachki2024operator} where inference for an unseen realization only requires a single pass through the sciML model rather than completely rerunning a traditional solver. 
\end{enumerate}

\Subsection{Neural Network Architectures} 

Neural networks are by far the most popular class of sciML models, and they have been applied to countless PDEs across the literature. Below we describe a few landmark neural architectures. We refer the interested reader to reviews of physics informed machine learning \cite{karniadakis2021physics} and neural operators \cite{kovachki2023neural}. 
\begin{enumerate}
    \item \textbf{Physics Informed Neural Networks (PINNs),} which solve a PDE (with deterministic coefficients) by modeling the solution as a neural network and then training to minimize a physics-informed loss \cite{raissi2019physics}. Such a loss function is typically composed of PDE constraints and implemented using automatic-differentiation to infer derivatives of the PDE solution. 
    \item \textbf{Deep Operator Networks (DeepONets),} which model an operator through a basis expansion by combining basis coefficient predictions from a (branch) network (taking random coefficient realizations as inputs) with basis function predictions from a (trunk) network (taking the collocation points in the domain as inputs) \cite{Lu21Nature}. 
    \item \textbf{Fourier Neural Operators (FNOs),} which are resolution-invariant operator learning models that map inputs of any resolution to input Fourier coefficients, use a neural network to map input Fourier coefficients to output Fourier coefficients, and then infer the solution at an arbitrary resolution \cite{li2020fourier}.
\end{enumerate}
\Cref{fig:poolfire} considers radiative transfer in a pool fire and compares solutions from a traditional finite volume solver against predictions from a DeepONet model. Notice the DeepONet is able to capture the physical behavior of the fire and provide homogeneous radiation predictions thanks to its mesh-free training and inference. This problem is detailed in \Cref{sec:RTE_DeepONet}.

\begin{figure}[!ht]
    \centering
    \includegraphics[width=1\linewidth, clip=True]{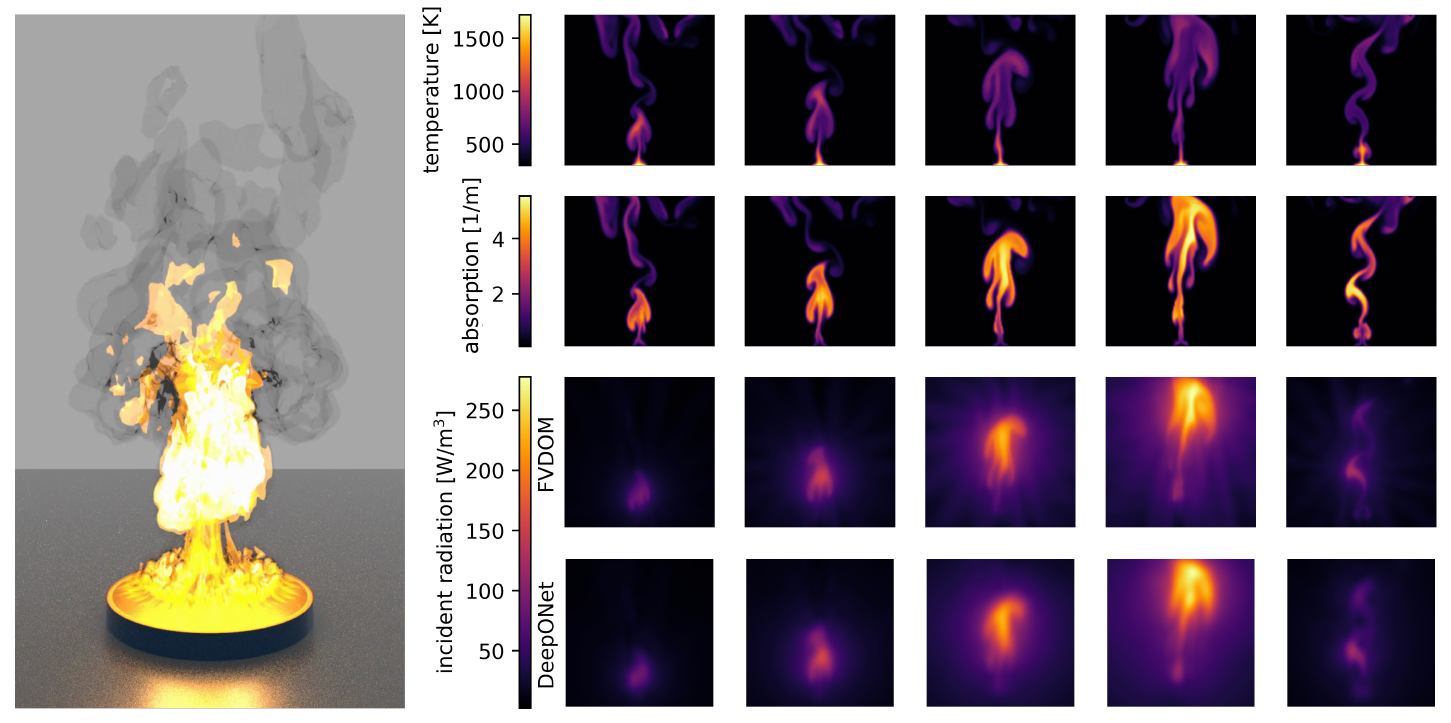}
    \caption{Predicted incident radiation from unknown temperature and absorption coefficients in a pool fire as described in \Cref{sec:RTE_DeepONet}. We compare DeepONet solutions against those from a reference Finite Volume Discrete-Ordinate Method (FVDOM) solver. Notice that FVDOM suffers from the ray effect where predicted radiation around the fire has greater intensity along the fixed discretization directions. The mesh-free nature of DeepONet fitting and inference enables more homogeneous predictions which more closely align with physical experimentation.}
    \label{fig:poolfire}
\end{figure}

\Subsection{GP Methods}

GP methods for sciML offer strong theoretical foundations and convergence guarantees \cite{owhadi.book_operator_adapted_wavelets}. Such guarantees are less readily available for neural architectures, although recent progress has been made in this direction \cite{kovachki2024operator,lanthaler2024discretization,grohs2025theory,reinhardt2024statistical}. 

\begin{enumerate}
    \item \textbf{GPs methods for PDEs,} may solve a PDE (with deterministic coefficients) by optimizing values and derivative observations at collocation points (subject to PDE constrains) to minimize the kernel interpolant's RKHS norm \cite{chen.learning_nonlinear_PDE_GP,chen.sparse_cholesky_PDE_kernel_methods}. 
    \item \textbf{GP operator learning,} often fits a vector-valued GP to map observations of the input to observations of the output \cite{kadri2016operator,nelsen2021random,batlle.operator_learning_kernel_methods}. GPs are also used to map the input observations to the input function and to map the output observations to the output function, both of which can be done in a resolution invariant manner. 
    \item \textbf{Hybrid GP and neural network operator learning,} may set the prior mean of a GP to a neural network to leverage both the strong theoretical backing of GPs and the great flexibility of neural architectures \cite{mora2025operator,owhadi2023ideas,owhadi2019kernel}.
\end{enumerate}

\Subsection{Use of Low-Discrepancy Points in SciML}

Low-discrepancy (LD) points are widely used as space filling designs for training neural networks, even if their explicit use is not an emphasized in many publications. There have been a few recent works explicitly showing the advantage of training neural sciML models using LD collocation points \cite{longo2021higher,chen2021quasi,keller2025regularity}. Moreover, as referenced in \Cref{sec:apps_fast_gps}, fast GPs using LD points and shift-invariant kernels have been used for solving a variety of PDEs with random coefficients \cite{kaarnioja.kernel_interpolants_lattice_rkhs,kaarnioja.kernel_interpolants_lattice_rkhs_serendipitous,sorokin.gp4darcy}. This year, papers have come out showing certain LD point sets are also optimally space filling (quasi-uniform) \cite{dick2025quasi_digital_nets,dick2025quasi_lattices}. Such LD points could pair nicely with the space filling terms in Sobolev inequalities which are commonly used in proving convergence rates for GP methods \cite{chen.learning_nonlinear_PDE_GP,chen.sparse_cholesky_PDE_kernel_methods,kadri2016operator,nelsen2021random,batlle.operator_learning_kernel_methods}. 

\Subsection{A Word of Caution on Using SciML Methods}

We emphasize that caution should be taken when using sciML to replace traditional numerical PDE solvers. While theoretical guarantees exist for GP methods, and recent work has found similar guarantees for neural architectures, these assurances are often asymptotic and generally weaker than those available for traditional methods such as finite difference methods, finite volume methods, or finite element methods. Moreover, the design choice of sciML models is often rather heuristic and typically found using a guess-and-check method. While prior knowledge of a simulation may be encoded into say the GP kernel or the neural network activation functions, such choices present additional difficulties and opportunities for misspecification compared to more robust traditional methods.

Finally, we note that sciML models generally require expensive training, achieve significantly lower accuracy, and are not necessarily faster to evaluate compared to traditional solvers. While traditional solvers often require constructing and solving large linear systems, training sciML models often require an unknown number of parameter-optimization steps which can scale poorly, especially for larger models. There is also a trade-off for sciML models between accuracy and inference speed. In settings where inference speed is paramount, sciML models can be developed which are capable of almost instantaneous inference but achieve low-accuracy predictions. However, increasing the complexity of a sciML model, for example by increasing the neural network depth, may not necessarily achieve desirable accuracy. Large scale models that are capable of comparable accuracy often end up emulating the underlying numerical solvers and thus may not provide the desired accelerated inference. In other words, there is no free lunch with sciML models, but they still provide value, especially in settings where low accuracy is acceptable in exchange for rapid inference. 

\Chapter{Contributions and Outline} 

This chapter highlights our contributions across Quasi-Monte Carlo, fast Gaussian process regression, and scientific machine learning ranging from theory to implementations to applications. First, \Cref{sec:contrib_list} will list our contributions with citations to our work. Then, \Cref{sec:qmcpy_features} will detail our \texttt{QMCPy} software contribution. Next, \Cref{sec:fastgps_features} will detail our \texttt{FastGPs} software contribution. Finally, \Cref{sec:apps_overview} will detail our novel applications in the fields of uncertainty quantification and scientific machine learning. The links within these sections will act as an outline for the remainder of this thesis.

\Section{List of Contributions} \label{sec:contrib_list}

The following list details our contributions with citations to our work. 
\begin{enumerate}
    \item Developed \texttt{QMCPy} (\url{https://qmcsoftware.github.io/QMCSoftware/}) \cite{choi.QMC_software,choi.challenges_great_qmc_software,hickernell.qmc_what_why_how,sorokin.2025.ld_randomizations_ho_nets_fast_kernel_mats,sorokin.MC_vector_functions_integrals}, an open-source Python library of algorithms from across the Quasi-Monte Carlo community in a unified framework. The features of \texttt{QMCPy} are detailed in \Cref{sec:qmcpy_features}. 
    \item Built \texttt{LDData} (\url{https://github.com/QMCSoftware/LDData}), a repository for lattice generating vectors, digital net generating matrices, and quality low-discrepancy point sets. This data is stored in new standardized formats and an interface to these formats was implemented into \texttt{QMCPy}. \Cref{sec:rldseqs_acm_toms} provides examples using \texttt{LDData}.
    \item Implemented into \texttt{QMCPy} the first Python interface to support lattices, higher-order digital nets, higher-order digital net scrambling with either linear matrix scrambling (LMS) or nested uniform scrambling (NUS), and Halton scrambling with either LMS or NUS \cite{sorokin.2025.ld_randomizations_ho_nets_fast_kernel_mats}. \Cref{sec:rldseqs_acm_toms} details all the randomized LD sequences supported by \texttt{QMCPy}. 
    \item Developed algorithms to estimate and quantify error for functions of multiple integrals approximated using Quasi-Monte Carlo \cite{sorokin.MC_vector_functions_integrals}. Our \texttt{QMCPy} implementation adaptively select the number of points required to meet a user-specified error tolerance on the combined quantity of interest. This work is detailed in \Cref{sec:stop_crit_vectorziation} along with a number of examples. 
    \item Derived a new set of digitally-shift-invariant (DSI) kernels whose corresponding RKHSs contain smooth functions \cite{sorokin.2025.ld_randomizations_ho_nets_fast_kernel_mats}. This includes a new order $4$ smoothness kernel whose form has not appeared elsewhere in the literature. DSI kernels are treated in \Cref{sec:dsi_kernels}.
    \item Implemented into \texttt{QMCPy} the first Python interface to shift-invariant (SI) and digitally-shift-invariant (DSI) kernels of varying smoothness \cite{sorokin.2025.ld_randomizations_ho_nets_fast_kernel_mats}. Such kernels are required to enable fast kernel computations. SI kernels are discussed in \Cref{sec:si_kernels} while DSI kernels are discussed in \Cref{sec:dsi_kernels}. 
    \item Implemented into \texttt{QMCPy} the most efficient Python routines available to perform the fast Fourier transform in bit-reversed order (FFTBR), the inverse FFTBR (IFFTBR), and the fast Walsh--Hadamard transform (FWHT) \cite{sorokin.2025.ld_randomizations_ho_nets_fast_kernel_mats}. Such transforms are required to enable fast kernel computations as detailed in \Cref{sec:fast_kernel_methods}. 
    \item Developed \texttt{FastGPs} (\url{https://alegresor.github.io/fastgps/}) \cite{sorokin.fastgps_probnum25}, an open-source Python library for fast Gaussian process regression algorithms pairing low-discrepancy points with (digitally)-shift-invariant kernels. The features of \texttt{FastGPs} are detailed in  \Cref{sec:fastgps_features}.
    \item Implemented into \texttt{FastGPs} the first open-source code for fast Gaussian process regression with support for both the lattice with shift-invariant kernel version and digital net with digitally-shift-invariant-kernel version \cite{sorokin.fastgps_probnum25}. Fast GPs are detailed in \Cref{sec:fast_gps}.
    \item Derived and analyzed novel fast multitask GPs (\Cref{sec:fmtgps}) and fast derivative-informed GP (\Cref{sec:gps_deriv_informeds}) \cite{sorokin.fastgps_probnum25}. 
    \item Applied Quasi-Monte Carlo, fast Gaussian process regression, and/or scientific machine learning to a number of applications with collaborators from academia, national labs, and industry \cite{sorokin.adaptive_prob_failure_GP,sorokin.gp4darcy,sorokin.RTE_DeepONet,bacho.CHONKNORIS,sorokin.FastBayesianMLQMC,sorokin.sigopt_mulch,gjergo.GalCEM1,GalCEM.software,gjergo2025massive}. \Cref{sec:apps_overview} provides an overview of these applications as well as links to sections detailing specific projects.
\end{enumerate}

\Section{\texttt{QMCPy} Software for Quasi-Monte Carlo Methods} \label{sec:qmcpy_features}

While Quasi-Monte Carlo methods have been extensively studied, their implementations are often scattered or even missing in some popular programming languages \cite{choi.QMC_software}. \texttt{QMCPy} implements algorithms from across the QMC community into a unified and accessible Python framework for both theoreticians and practitioners. Below we summarize the supported features in \texttt{QMCPy} and provide pointers to related codes:
\begin{enumerate}
    \item \textbf{Randomized LD Sequences} (\Cref{sec:rldseqs_acm_toms}). High quality pseudo-random number generators (PRNGs) have been extensively developed \cite{lecuyer.random_number_generation_book}, and are readily available in nearly all popular programming languages. Generators and randomization routines for LD point sets have been generally less readily available than their PRNG counterparts. \texttt{QMCPy} supports the point sets and randomization routines described below. These features are also supported in the comprehensive Java software \texttt{SSJ}\footnote{\url{https://simul.iro.umontreal.ca/ssj/}} \cite{lecuyer.ssj_software}. The C++ \texttt{LatNet Builder}\footnote{\label{url:latnetbuilder}\url{https://github.com/umontreal-simul/latnetbuilder}} software \cite{LatNetBuilder.software} and the multi-language \texttt{Magic Point Shop (MPS)}\footnote{\label{url:mps}\url{https://people.cs.kuleuven.be/~dirk.nuyens/qmc-generators/}} \cite{kuo.application_qmc_elliptic_pde} both provide search routines for finding good lattice generating vectors and digital net generating matrices. \texttt{QMCPy} integrates with the new \texttt{LDData} repository which contains a variety of these pregenerated vectors and matrices in standardized formats. \texttt{LDData} additionally includes popular choices from the websites of Frances Kuo on lattices\footnote{\url{https://web.maths.unsw.edu.au/~fkuo/lattice/index.html}} \cite{cools2006constructing,nuyens2006fast} and Sobol' points\footnote{\url{https://web.maths.unsw.edu.au/~fkuo/sobol/index.html}} (a special case of digital nets) \cite{joe2003remark,joe2008constructing}.
    \begin{enumerate}
        \item \textbf{Lattice Points} (\Cref{sec:lattices}). Rank-1 lattices may be generated in radical inverse or linear order, and may be randomized using shifts modulo one. Randomly shifted rank-1 lattices are also available in \texttt{MPS} and \texttt{GAIL}\footnote{\label{url:gail}\url{http://gailgithub.github.io/GAIL_Dev/}} (MATLAB's Guaranteed Automatic Integration Library) \cite{GAIL.software,hickernell2018monte}.
        \item \textbf{Digital Nets} (\Cref{sec:dnets}). Base $2$ digital nets, including higher-order digital nets, may be generated in either radical inverse or Gray code order, and may be randomized with linear matrix scrambling (LMS) \cite{owen.variance_alternative_scrambles_digital_net}, digital shifts, nested uniform scrambling (NUS) \cite{owen1995randomly}, and/or permutation scrambling. Early implementations of unrandomized digital sequences, including the Faure, Sobol', and Niederreiter--Xing constructions, can be found in \cite{fox1986algorithm,bratley1992implementation,bratley2003implementing,pirsic2002software}. Considerations for implementing scrambles were discussed in \cite{hong2003algorithm}. Support for combining LMS with digital shifts is also provided in MATLAB, \texttt{MPS}, and both the \texttt{PyTorch}\footnote{\url{https://pytorch.org/}} \cite{PyTorch.software} and \texttt{SciPy}\footnote{\label{url:scipy}\url{https://scipy.org/}} \cite{SciPy.software} Python packages.
        \item \textbf{Halton Points} (\Cref{sec:Halton}). As with digital nets, Halton point sets may be randomized with LMS, digital shifts, NUS, and/or permutation scrambles. The implementation of Halton point sets and randomizations have been treated in \cite{owen_halton,wang2000randomized}. The \texttt{QRNG}\footnote{\label{url:qrng}\url{https://cran.r-project.org/web/packages/qrng/qrng.pdf}} (Quasi-Random Number Generators) R package \cite{qrng.software} implements generalized Halton point sets \cite{faure2009generalized} which use optimized digital permutation scrambles; these are also supported in \texttt{QMCPy}.
    \end{enumerate}
    \item \textbf{Variable Transformations} (\Cref{sec:variable_transforms}). These define the distribution of stochasticity in the underlying problem and  automatically rewrite user-defined functions into QMC-compatible forms. The available transforms are mainly wrappers around distributions provided by the \texttt{SciPy} Python package.  
    \item \textbf{Error Estimators for (Q)MC and Multilevel (Q)MC} (\Cref{sec:qmc_stopping_crit}). We provide numerous adaptive error estimation algorithms which automatically select the number of points required for a (Q)MC approximation to be within user-specified error tolerances. QMC error estimation is treated more broadly in \cite{owen.error_QMC_review}, while \cite{clancy2014cost,adaptive_qmc} detail additional considerations for adaptive QMC algorithms. \texttt{QMCPy}'s adaptive error estimation procedures are described below. The single-level algorithms were also implemented in \texttt{GAIL}. 
    \begin{enumerate}
        \item \textbf{Monte Carlo with Independent Points} (\Cref{sec:stop_crit_iid}). Confidence intervals are derived from either a central limit theorem (CLT) heuristic or a guaranteed version of the CLT for functions with bounded kurtosis \cite{hickernell.MC_guaranteed_CI}.
        \item \textbf{QMC with Multiple Randomizations} (\Cref{sec:stop_crit_qmc_rep_student_t}). Student's-$t$ confidence intervals are evaluated based on independent mean estimates from independent randomizations of an LD point set, see \cite{lecuyer.RQMC_CLT_bootstrap_comparison} or \cite[Chapter 17]{owen.mc_book_practical}. 
        \item \textbf{QMC via Decay Tracking using a Single Randomized LD Sequence} (\Cref{sec:stop_crit_qmc_decay_tracking}). Guaranteed error bounds are available for cones of functions whose Fourier or Walsh coefficients decay predictably \cite{hickernell.adaptive_dn_cubature,adaptive_qmc,cubqmclattice,ding2018adaptive}. 
        \item \textbf{QMC via Bayesian Cubature using a Single Randomized LD Sequence} (\Cref{sec:stop_crit_qmc_fast_bayes}) Posterior credible intervals on the integral of a Gaussian process may be computed quickly by exploiting fast kernel computations \cite{rathinavel.bayesian_QMC_lattice,rathinavel.bayesian_QMC_sobol,rathinavel.bayesian_QMC_thesis}.
        \item \textbf{Multilevel Monte Carlo with Independent Points} (\Cref{sec:stop_crit_multilevel}). We have implemented standard multilevel Monte Carlo \cite{giles.MLMC_path_simulation,giles2015multilevel} and continuation multilevel Monte Carlo algorithms \cite{collier2015continuation}.
        \item \textbf{Multilevel QMC with Multiple Randomizations} (\Cref{sec:stop_crit_multilevel}). We have implemented standard multilevel QMC \cite{giles.mlqmc_path_simulation} and continuation multilevel QMC algorithms \cite{robbe2019multilevel}. 
    \end{enumerate}
    \item \textbf{Fast Kernel Computations} (\Cref{sec:fast_kernel_methods}). Matching a rank-1 lattice in radical inverse order to a shift-invariant (SI) RKHS kernel yields a Gram matrix diagonalizable by the fast Fourier transform (FFT) \cite{cooley1965algorithm} in bit-reversed order (FFTBR) and its inverse (IFFTBR). Similarly, matching a base $2$ digital net in radical inverse order to a digitally-shift-invariant (DSI) kernel yields a Gram matrix diagonalizable by the fast Walsh--Hadamard transform (FWHT) \cite{fino.fwht}. The  currently supported kernels and fast transforms are described below. \texttt{PyTorch} compatible versions of these methods are also maintained in order to enable GPU acceleration. 
    \begin{enumerate}
        \item \textbf{RKHS Kernels} (\Cref{sec:si_kernels,sec:dsi_kernels}). We provide SI kernels and DSI kernels, including those of higher-order smoothness. We also maintain an interface to a number of popular kernels, such as squared exponential, rational quadratic, and Mat\'ern kernels, whose implementations may also be found in the \texttt{GPyTorch} \cite{gardner.gpytorch_GPU_conjugate_gradient} and the \texttt{scikit-learn}\footnote{\label{url:scikitlearn}\url{https://scikit-learn.org/}} \cite{scikit-learn} Python packages among others. SI kernels of arbitrary smoothness are well known and can be computed based on the Bernoulli polynomials \cite{kaarnioja.kernel_interpolants_lattice_rkhs,kaarnioja.kernel_interpolants_lattice_rkhs_serendipitous,cools2021fast,cools2020lattice,sloan2001tractability,kuo2004lattice}. DSI kernels of order $1$ smoothness were derived in \cite{dick.multivariate_integraion_sobolev_spaces_digital_nets}. In \cite{sorokin.2025.ld_randomizations_ho_nets_fast_kernel_mats}, we derived new higher-order DSI kernels whose RKHSs contain smooth functions. Low-order kernel forms appeared in \cite{baldeaux.polylat_efficient_comp_worse_case_error_cbc} as worst-case error bounds on QMC rules using higher-order polynomial lattices, but there they were not interpreted as DSI kernels. The form of the order $4$ smoothness DSI kernel has not appeared elsewhere in the literature.
        \item \textbf{Fast Transforms} (\Cref{sec:fft_si_kernels_r1lattices,sec:fwht_dsi_kernels_dnb2s}). We provide interfaces to the FFTBR, IFFTBR, and FWHT algorithms which all have $\mathcal{O}(n \log n)$ complexity in the number of points $n$. Our FFTBR and IFFT implementations use the FFT routines in the \texttt{SciPy}. At the time of publishing, a significantly slower implementation of the FWHT is also available in the \texttt{SymPy}\footnote{\label{url:sympy}\url{https://www.sympy.org}} Python package \cite{10.7717/peerj-cs.103}. 
    \end{enumerate}
\end{enumerate}

\Section{\texttt{FastGPs} Software for Fast Gaussian Process} \label{sec:fastgps_features}

\texttt{FastGPs} implements fast Gaussian process (GP) regression methods which pair low-discrepancy (LD) points to special shift-invariant (SI) or digitally-shift-invariant (DSI) kernels, see the background in \Cref{sec:intro_fastgps}. Fast GPs are detailed in \Cref{sec:gps} which use the fast kernel computations described in \Cref{sec:fast_kernel_methods}. Below we summarize the supported features in \texttt{FastGPs}:

\begin{enumerate}
    \item \textbf{Kernel hyperparameter optimization,} with support for optimizing either the marginal log-likelihood (MLL) or generalized cross validation (GCV) loss.
    \item \textbf{Fast Bayesian cubature,} for uncertainty quantification in Quasi-Monte Carlo. 
    \item \textbf{Batched GPs,} for simultaneously modeling vector-output simulations.
    \item \textbf{GPU support,} enabled by the \texttt{PyTorch} stack. 
    \item \textbf{Flexible LD sequence and SI/DSI kernel specifications,} using \texttt{QMCPy}.
    \item \textbf{Efficient variance projections,} for non-greedy Bayesian optimization in multilevel Monte Carlo and multilevel Quasi-Monte Carlo.
    \item \textbf{Fast multitask GPs,} with support for different sample sizes and LD sequence randomizations for each task.
    \item \textbf{Derivative-informed GPs,} for simulations coupled with automatic differentiation. 
\end{enumerate}

\Section{Applications in Uncertainty Quantification and Scientific Machine Learning} \label{sec:apps_overview}

We have developed a number of applications across a variety of disciplines. Some of these projects are listed below along with the resulting citations and references. 

\begin{enumerate}
    \item \cite{sorokin.FastBayesianMLQMC} proposed fast Bayesian multilevel QMC without replications; see \Cref{sec:FastBayesianMLQMC} for details.
    \item \cite{sorokin.adaptive_prob_failure_GP} predicted the probability of system failure and quantified prediction error when using Gaussian processes models, see \Cref{sec:pfgpci} for details.
    \item \cite{sorokin.gp4darcy} modeled subsurface flow through porous media with multilevel fast Gaussian processes; see \Cref{sec:gp4darcy} for details.
    \item \cite{sorokin.RTE_DeepONet} modeled radiative transfer with deep operator networks (DeepONets); see \Cref{sec:RTE_DeepONet} for details.
    \item \cite{bacho.CHONKNORIS} presented the CHONKNORIS method for solving nonlinear parameterized PDEs to machine precision using scientific machine learning; see \Cref{sec:CHONKNORIS} for details.
    \item \cite{sorokin.sigopt_mulch} developed SigOpt Mulch, a tool for enhanced machine learning of gradient boosted trees by meta-learning priors for Bayesian optimization; this is not discussed further in this thesis.
    \item \cite{jain.bernstein_betting_confidence_intervals} studied betting confidence intervals for randomized Quasi-Monte Carlo, specifically looking at the trade-off between the size and number of randomizations of a low-discrepancy point set; this is not discussed further in this thesis.
    \item \cite{gjergo.GalCEM1,GalCEM.software,gjergo2025massive} delivered codes for modeling galactic chemical evolution; this is not discussed further in this thesis.
\end{enumerate}

\Chapter{Notation}

We will denote the set of positive integers by $\bbN = \{1,2,\dots\}$ and the set of nonnegative integers by $\bbN_0 = \bbN \cup \{0\}$. The set of all integers will be denoted by $\bbZ$, and the set of all integers excluding $0$ will be denoted by $\bbZ_0 = \bbZ \setminus \{0\}$. The reals and complex numbers will be denoted by $\bbR$ and $\bbC$ respectively. We will also use intuitive notations to further restrict sets, e.g., $\bbR_{>1/2} = \{x \in \bbR: x > 1/2\}$ or $\bbN_{\geq 2} = \{i \in \bbN: i \geq 2\}$. We will often use $\calX \subset \bbR^d$ to denote the domain of a function. 

Capital letters in sans-serif font will denote matrices, e.g., $\mK \in \bbR^{n \times n'}$ may denote an $n \times n'$ matrix of kernel evaluations at pairs of points. The transpose of a matrix $\mK$ is $\mK^\intercal$ and its conjugate is $\overline{\mK}$. The identity matrix is $\mI$. 

Lower-case letters in sans-serif font will denote digits in a base $b$ expansion, e.g., for $i \in \bbN_0$ we may write $i = \sum_{t \in \bbN_0} \mi_t b^t$. This may be combined with bold notation when denoting the vector of base $b$ digits, e.g., $\bmi = (\mi_0,\mi_1,\dots,\mi_{m-1})^\intercal$. Modulo is always taken elementwise, e.g., rank-1 lattices will use the notation $\bx \bmod 1 = (x_1 \bmod 1,\dots,x_d \bmod 1)^\intercal$ and digital nets will take all matrix operations to be carried modulo $b$. Permutations may be identified by vectors, e.g., a permutation $\pi: \{0,1,2\} \to \{0,1,2\}$ with $\pi(0) = 2$, $\pi(1) = 0$, and $\pi(2)=1$ may be denoted by $\pi = (2,0,1)$. 

For probability, we will let $\bbE$ denote the expectation, sometimes using notations like $\bbE_X$ or $\bbE_\bbG$ if we need to specify which random variable $X$ or measure $\bbG$ over which the expectation is taken. Similarly, $\bbV$ will denote the variance. The vertical bar notation is used for conditional random variables, e.g., $Z_1 | Z_2$ is $Z_1$ conditioned on $Z_2$. 

We will use $\odot$ to denote elementwise multiplication (the Hadamard product), $\oplus$ to denote digit-wise addition modulo $b$ for some base $b$, and $\ominus$ to denote digit-wise subtraction modulo $b$. The later two notations will be defined more precisely before they are used. We will use $\otimes$ to denote the Kronecker product where, for an $m \times n$ matrix $\mA$ and a $p \times q$ matrix $\mB$, the Kronecker product $\mA \otimes \mB$ is the $pm \times qn$ block matrix 
$$\mA \otimes \mB = \begin{pmatrix} a_{11} \mB & \cdots & a_{1n} \mB \\ \vdots & \ddots & \vdots \\ a_{m1} \mB & \cdots & a_{mn} \mB \end{pmatrix}.$$ 

We will use the shorthand $\partial_x f := \D f / \D x$ and often write $f^{(\bbeta)} := \partial_{x_1}^{\beta_1} \cdots \partial_{x_d}^{\beta_d} f(\bx)$ for $\bbeta \in \bbN_0^d$. Similarly, we may write $K^{(\bbeta,\bbeta')}(\bx,\bx') := \partial_{x_1}^{\beta_1} \cdots \partial_{x_d}^{\beta_d} \partial_{x_1'}^{\beta_1'} \cdots \partial_{x_d'}^{\beta_d'} K(\bx,\bx')$ for $\bbeta,\bbeta' \in \bbN_0^d$.   Notations such as $f^p(\bx)$ will be used instead of writing $(f(\bx))^p$. Putting a bar over a symbol will denote the complex conjugate, e.g., for $c = a+b\sqrt{-1} \in \bbC$ we have $\overline{c} = a-b\sqrt{-1}$. The complex conjugate is assumed to be applied elementwise when annotating vectors or matrices. Similarly, integrals of functions with multiple outputs are understood to be taken elementwise, e.g., if $\boldsymbol{f}: \calX \to \bbR^2$ then $\int \boldsymbol{f} := (\int f_1, \int f_2)$. We will often use $1_{C}(\bx)$ to denote an indicator which is $1$ if condition $C$ is satisfied and $0$ otherwise. For a symmetric non-negative kernel $K$ we will denote its corresponding RKHS by $H(K)$. 

\Chapter{Quasi-Monte Carlo Methods} \label{sec:qmc}

This chapter details Quasi-Monte Carlo (QMC) methods and our implementation into \texttt{QMCPy}. \Cref{sec:qmcpy_code_setup} will discussion installing and setting up \texttt{QMCPy} in order to follow along with the codes presented in this chapter. \Cref{sec:qmc_context} will then formulate QMC methods which we break into four components: Randomized low-discrepancy sequences (\Cref{sec:rldseqs_acm_toms}), automatic variable transformations (\Cref{sec:variable_transforms}), user-specified problems (\Cref{{sec:qmc_problems}}), and adaptive stopping criterion algorithms for error estimation (\Cref{sec:qmc_stopping_crit}). \Cref{sec:stop_crit_vectorziation} details our novel vectorized stopping criterion algorithms for functions of multiple expectations.

\Section{\texttt{QMCPy} Setup and Code Reproducibility} \label{sec:qmcpy_code_setup}

The features discussed in this chapter require our novel \texttt{QMCPy} package (\url{https://qmcsoftware.github.io/QMCSoftware/}) with version 2.0 or later, which relies on the \texttt{NumPy} and \texttt{SciPy} Python packages. \texttt{QMCPy} is readily installed using the command \texttt{pip install -U qmcpy}, after which we may import our required packages:

\lstinputlisting[style=Python]{snippets_qmc/imports.py}

Most of \texttt{QMCPy}'s randomized low-discrepancy sequence routines are wrappers around efficient C implementations in our \texttt{QMCToolsCL} package \\ (\url{https://qmcsoftware.github.io/QMCToolsCL/}). These low level routines could be used to rapidly build similar interfaces in other programming languages which support C extensions.

\Section{Context} \label{sec:qmc_context} 

This section provides known background on Monte Carlo (MC) and Quasi-Monte Carlo (QMC) methods. MC and QMC methods approximate a high-dimensional integral over the unit cube by the sample average of function evaluations at certain sampling locations:
\begin{equation}
    \mu := \bbE[f(\bX)] = \int_{[0,1]^d} f(\bx) \D \bx \approx \frac{1}{n} \sum_{i=0}^{n-1} f(\bx_i) =: \hmu, \qquad \bX \sim \calU[0,1]^d.
    \label{eq:mc_approx}
\end{equation}
Here $f: [0,1]^d \to \bbR$ is a given integrand and $\{\bx_i\}_{i=0}^{n-1} \in [0,1]^{n \times d}$ is a point set. For problems with non-uniform measures, one may apply a variable transformation as discussed in \Cref{sec:variable_transforms}. Classic MC methods choose the sampling locations to be independent and identically distributed (IID) $d$-dimensional standard uniforms $\bx_0,\dots,\bx_{n-1} \simiid \calU[0,1]^d$. IID-MC methods for \eqref{eq:mc_approx} have a root mean squared error (RMSE) of $\calO(n^{-1/2})$.

QMC methods \cite{niederreiter.qmc_book,dick.digital_nets_sequences_book,kroese.handbook_mc_methods,dick2022lattice,lemieux2009monte,sloan1994lattice,dick.high_dim_integration_qmc_way} replace IID point sets with LD point sets which more evenly cover the unit cube $[0,1]^d$. For integrands with bounded variation, plugging LD point sets into \eqref{eq:mc_approx} yields a worst-case error rate of $\calO(n^{-1+\delta})$ with $\delta>0$ arbitrarily small. Some popular LD point sets are plotted in \Cref{fig:pointsets} including randomized rank-1 lattices, base $2$ digital nets (including higher-order versions), and Halton point sets.

Randomized Quasi-Monte Carlo (RQMC) uses randomized LD point sets to give improved convergence rates and enable practical error estimation. Specifically, if we again assume the integrand has bounded variation, then certain RQMC methods can achieve a RMSE of $\calO(n^{-3/2+\delta})$. For rank-1 lattices, randomization is typically done using random shifts modulo $1$. For digital nets, randomization is typically done using nested uniform scrambling (NUS) or the cheaper combination of linear matrix scrambling (LMS) with digital shifts / permutations \cite{MATOUSEK1998527,owen.variance_alternative_scrambles_digital_net,owen_halton,owen.gain_coefficients_scrambled_halton}.

Higher-order LD point sets were designed to yield faster convergence for integrands with additional smoothness. For integrands with square integrable mixed partial derivatives up to order $\alpha>1$, plugging higher-order digital nets into $\hmu$ \eqref{eq:mc_approx} yields a worst-case error rate of $\calO(n^{-\alpha+\delta})$ \cite{dick.walsh_spaces_HO_nets,dick.qmc_HO_convergence_MCQMC2008,dick.decay_walsh_coefficients_smooth_functions,goda.ho_qmc_recent_advances}. RQMC using higher-order digital nets with higher-order NUS or LMS has been shown to achieve an RMSE of order $\calO(n^{-\alpha-1/2+\delta})$ \cite{dick.higher_order_scrambled_digital_nets}. Higher-order digital nets with NUS and with LMS plus digital shifts are also shown in \Cref{fig:pointsets}.  

\begin{figure}[!ht]
    \centering
    \includegraphics[width=1\textwidth]{./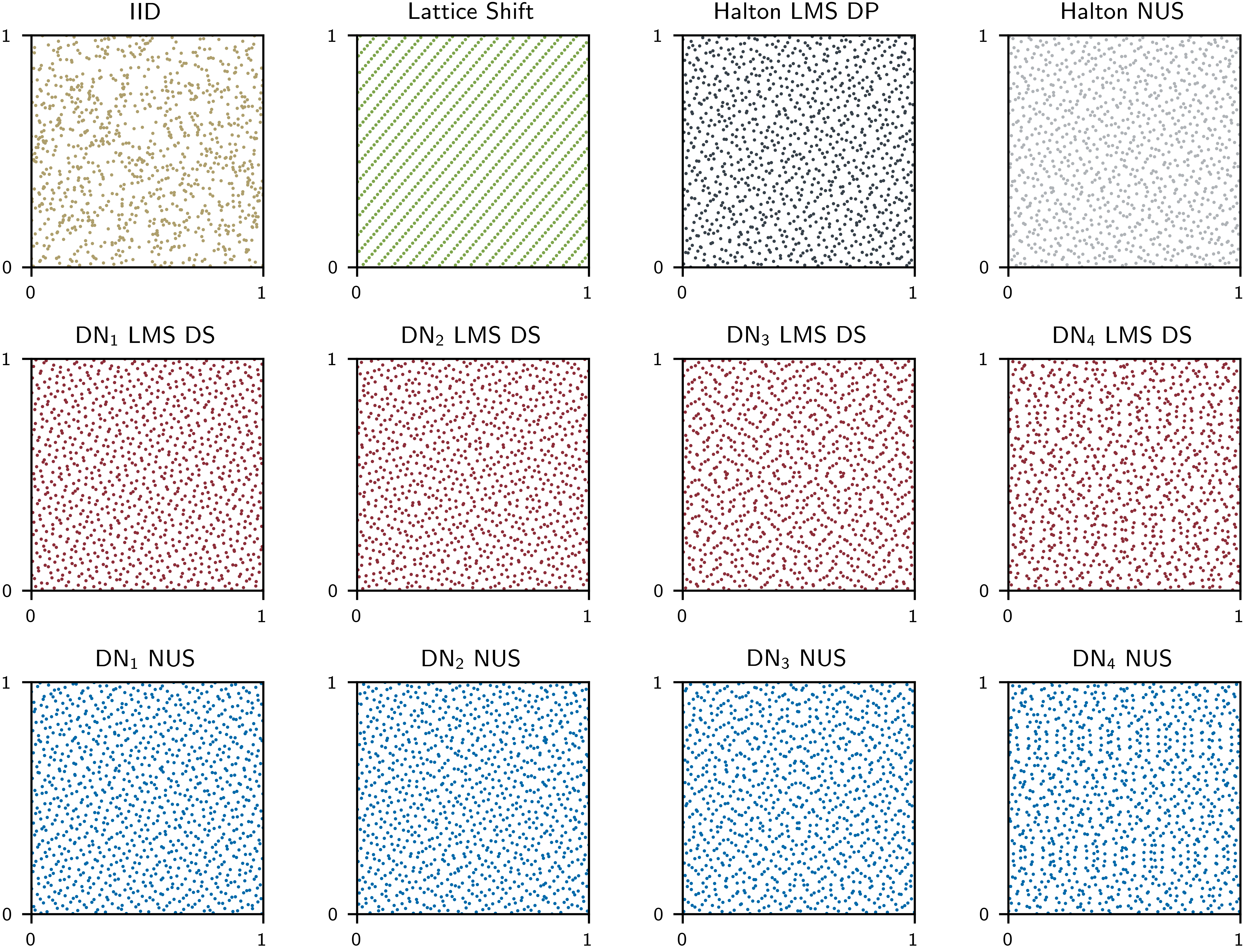}
    \caption{An independent identically distributed (IID) point set and low-discrepancy (LD) point sets of size $n=2^{10}=1024$. Notice the LD points more evenly fill the space than IID points, which leads to faster convergence of Quasi-Monte Carlo methods compared to IID Monte Carlo methods. Randomized rank-1 lattices, digital nets in base $2$, and Halton points are shown. Randomizations include shift modulo 1, linear matrix scramblings (LMS), digital shifts (DS), digital permutations (DP), and nested uniform scramblings (NUS). Digital interlacing of order $\alpha$ is used to create higher-order digital nets in base $2$ (DN${}_\alpha$).}
    \label{fig:pointsets}
\end{figure}

\Section{Randomized Low-Discrepancy Sequences} \label{sec:rldseqs_acm_toms}

This section details randomized low-discrepancy sequences formulated throughout the literature. Our novel contribution is to provide a unified, accessible, and efficient implementation of these sequences and their randomizations in \texttt{QMCPy}. We will detail rank-1 lattices (\Cref{sec:lattices}), digital nets (\Cref{sec:dnets}), and Halton sequences (\Cref{sec:Halton}) before running a series of numerical experiments (\Cref{sec:rldseqs_numerical_experiments}). 

Let us begin with some notation. For a fixed prime base $b \in \bbN$, write $i \in \bbN_0$ as $i = \sum_{t=0}^\infty \mi_t b^t$ so $\mi_t$ is the $t^\mathrm{th}$ digit in the base $b$ expansion of $i$. We denote the vector of the first $m$ base $b$ digits of $i$ by
$$\bD_m(i) = (\mi_0,\mi_1,\dots,\mi_{m-1})^\intercal.$$
For $\bmi = \bD_m(i)$, we denote the radical inverse of $i$ by
$$F_m(\bmi) = \sum_{t=1}^m \mi_{t-1} b^{-t} \in [0,1).$$
Finally, let
\begin{equation}
    v(i) = \sum_{t=1}^\infty \mi_{t-1} b^{-t}
    \label{eq:v}
\end{equation}
so that $v(i) = F_m(\bD_m(i))$ when $i < b^m$, which is always the case in this chapter. The van der Corput sequence in base $b$ is $\{v(i)\}_{i \geq 0}$.

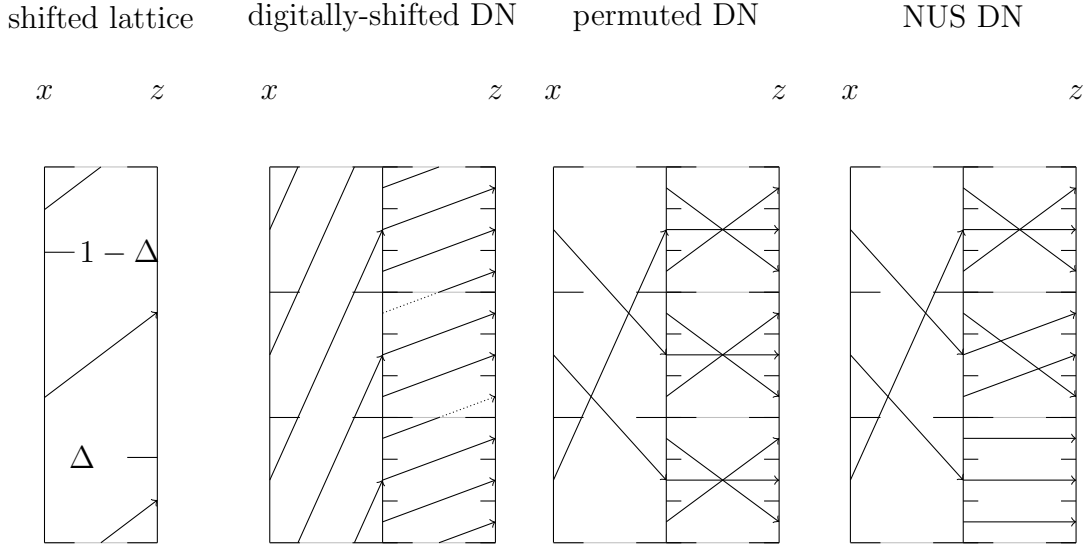
\begin{figure}[!ht]
    \centering
    \newcommand{\scale}{5}
    \newcommand{\axsep}{1.5}
    \newcommand{\thisDelta}{0.227}
    \begin{subfigure}[t]{.2\textwidth}
    \begin{tikzpicture}
        \draw (0,\scale+1) node{$x$};
        \draw (\axsep,\scale+1) node{$z$};
        \draw[-,color=lightgray] (0,0) -- (\axsep,0);
        \draw[-,color=lightgray] (0,\scale) -- (\axsep,\scale);
        \draw[-] (0,0) -- (0,\scale);
        \draw[-] (\axsep,0) -- (\axsep,\scale); 
        \draw[-] (0,0) -- (.4,0);
        \draw[-] (0,\scale) -- (.4,\scale);
        \draw[-] (\axsep-.4,0) -- (\axsep,0);
        \draw[-] (\axsep-.4,\scale) -- (\axsep,\scale);
        \draw (\axsep/2,\scale+2) node{shifted lattice};
        \draw (1,\scale-\scale*\thisDelta) node{$1-\Delta$};
        \draw (\axsep-1,\scale*\thisDelta) node{$\Delta$};
        \draw[-] (0,\scale-\scale*\thisDelta) -- (.4,\scale-\scale*\thisDelta);
        \draw[-] (\axsep-.4,\scale*\thisDelta) -- (\axsep,\scale*\thisDelta);
        \draw[->] (0,\scale/2-\scale*\thisDelta/2) -- (\axsep,\scale-\scale/2+\scale*\thisDelta/2);
        \draw[-] (0,\scale-\scale*\thisDelta/2)-- (\axsep/2,\scale);
        \draw[->] (\axsep/2,0)-- (\axsep,\scale*\thisDelta/2);
    \end{tikzpicture}
    \end{subfigure}
    \begin{subfigure}[b]{.25\textwidth}
    \begin{tikzpicture}
        \draw (0,\scale+1) node{$x$};
        \draw (2*\axsep,\scale+1) node{$z$};
        \draw[-] (0,0) -- (0,\scale);
        \draw[-] (\axsep,0) -- (\axsep,\scale); 
        \draw[-] (2*\axsep,0) -- (2*\axsep,\scale); 
        \draw[-,color=lightgray] (0,0) -- (\axsep,0);
        \draw[-,color=lightgray] (0,\scale) -- (\axsep,\scale);
        \foreach \x in {0,...,3}{
            \draw[-,color=lightgray] (\axsep,\scale*\x/3) -- (2*\axsep,\scale*\x/3);
            \draw[-] (0,\scale*\x/3) -- (.4,\scale*\x/3);
            \draw[-] (\axsep-.4,\scale*\x/3) -- (\axsep+.4,\scale*\x/3);
            \draw[-] (2*\axsep-.4,\scale*\x/3) -- (2*\axsep,\scale*\x/3);
        }
        \foreach \x in {0,...,9}{
            \draw[-] (\axsep,\scale*\x/9) -- (\axsep+.2,\scale*\x/9);
            \draw[-] (2*\axsep-.2,\scale*\x/9) -- (2*\axsep,\scale*\x/9);
        }
        \draw (\axsep,\scale+2) node{digitally-shifted DN};
        \draw[->] (0,\scale*1.5/9)-- (\axsep,\scale*7.5/9);
        \draw[-] (0,\scale*4.5/9)-- (\axsep*3/4,\scale);
        \draw[->] (\axsep*3/4,0)-- (\axsep,\scale*1.5/9);
        \draw[-] (0,\scale*7.5/9)-- (\axsep/4,\scale);
        \draw[->] (\axsep/4,0)-- (\axsep,\scale*4.5/9);
        \draw[->] (\axsep,\scale/18) -- (2*\axsep,\scale*3/18);
        \draw[->] (\axsep,\scale*3/18) -- (2*\axsep,\scale*5/18);
        \draw[-] (\axsep,\scale*5/18) -- (\axsep+\axsep/2,\scale/3);
        \draw[->] (\axsep+\axsep/2,0) -- (2*\axsep,\scale/18);
        \draw[->] (\axsep,\scale/3+\scale/18) -- (2*\axsep,\scale/3+\scale*3/18);
        \draw[->] (\axsep,\scale/3+\scale*3/18) -- (2*\axsep,\scale/3+\scale*5/18);
        \draw[-,densely dotted] (\axsep,\scale/3+\scale*5/18) -- (\axsep+\axsep/2,\scale/3+\scale/3);
        \draw[->,densely dotted] (\axsep+\axsep/2,\scale/3) -- (2*\axsep,\scale/3+\scale/18);
        \draw[->] (\axsep,\scale*2/3+\scale/18) -- (2*\axsep,\scale*2/3+\scale*3/18);
        \draw[->] (\axsep,\scale*2/3+\scale*3/18) -- (2*\axsep,\scale*2/3+\scale*5/18);
        \draw[-] (\axsep,\scale*2/3+\scale*5/18) -- (\axsep+\axsep/2,\scale*2/3+\scale/3);
        \draw[->] (\axsep+\axsep/2,\scale*2/3) -- (2*\axsep,\scale*2/3+\scale/18);
    \end{tikzpicture}
    \end{subfigure}
    \begin{subfigure}[b]{.25\textwidth}
    \begin{tikzpicture}
        \draw (0,\scale+1) node{$x$};
        \draw (2*\axsep,\scale+1) node{$z$};
        \draw[-] (0,0) -- (0,\scale);
        \draw[-] (\axsep,0) -- (\axsep,\scale); 
        \draw[-] (2*\axsep,0) -- (2*\axsep,\scale);
        \draw[-,color=lightgray] (0,0) -- (\axsep,0);
        \draw[-,color=lightgray] (0,\scale) -- (\axsep,\scale); 
        \foreach \x in {0,...,3}{
            \draw[-,color=lightgray] (\axsep,\scale*\x/3) -- (2*\axsep,\scale*\x/3);
            \draw[-] (0,\scale*\x/3) -- (.4,\scale*\x/3);
            \draw[-] (\axsep-.4,\scale*\x/3) -- (\axsep+.4,\scale*\x/3);
            \draw[-] (2*\axsep-.4,\scale*\x/3) -- (2*\axsep,\scale*\x/3);
        }
        \foreach \x in {0,...,9}{
            \draw[-] (\axsep,\scale*\x/9) -- (\axsep+.2,\scale*\x/9);
            \draw[-] (2*\axsep-.2,\scale*\x/9) -- (2*\axsep,\scale*\x/9);
        }
        \draw (\axsep,\scale+2) node{permuted DN};
        \draw[->] (0,\scale*1/6) -- (\axsep,\scale*5/6);
        \draw[->] (0,\scale*3/6) -- (\axsep,\scale*1/6);
        \draw[->] (0,\scale*5/6) -- (\axsep,\scale*3/6);
        \draw[->] (\axsep,\scale*1/18) -- (2*\axsep,\scale*5/18);
        \draw[->] (\axsep,\scale*3/18) -- (2*\axsep,\scale*3/18);
        \draw[->] (\axsep,\scale*5/18) -- (2*\axsep,\scale*1/18);
        \draw[->] (\axsep,\scale*7/18) -- (2*\axsep,\scale*11/18);
        \draw[->] (\axsep,\scale*9/18) -- (2*\axsep,\scale*9/18);
        \draw[->] (\axsep,\scale*11/18) -- (2*\axsep,\scale*7/18);
        \draw[->] (\axsep,\scale*13/18) -- (2*\axsep,\scale*17/18);
        \draw[->] (\axsep,\scale*15/18) -- (2*\axsep,\scale*15/18);
        \draw[->] (\axsep,\scale*17/18) -- (2*\axsep,\scale*13/18);
    \end{tikzpicture}
    \end{subfigure}
    \begin{subfigure}[b]{.25\textwidth}
    \begin{tikzpicture}
        \draw (0,\scale+1) node{$x$};
        \draw (2*\axsep,\scale+1) node{$z$};
        \draw[-] (0,0) -- (0,\scale);
        \draw[-] (\axsep,0) -- (\axsep,\scale); 
        \draw[-] (2*\axsep,0) -- (2*\axsep,\scale); 
        \draw[-,color=lightgray] (0,0) -- (\axsep,0);
        \draw[-,color=lightgray] (0,0+\scale) -- (\axsep,0+\scale);
        \foreach \x in {0,...,3}{
            \draw[-,color=lightgray] (\axsep,\scale*\x/3) -- (2*\axsep,\scale*\x/3);
            \draw[-] (0,\scale*\x/3) -- (.4,\scale*\x/3);
            \draw[-] (\axsep-.4,\scale*\x/3) -- (\axsep+.4,\scale*\x/3);
            \draw[-] (2*\axsep-.4,\scale*\x/3) -- (2*\axsep,\scale*\x/3);
        }
        \foreach \x in {0,...,9}{
            \draw[-] (\axsep,\scale*\x/9) -- (\axsep+.2,\scale*\x/9);
            \draw[-] (2*\axsep-.2,\scale*\x/9) -- (2*\axsep,\scale*\x/9);
        }
        \draw (\axsep,\scale+2) node{NUS DN};
        \draw[->] (0,\scale*1/6) -- (\axsep,\scale*5/6);
        \draw[->] (0,\scale*3/6) -- (\axsep,\scale*1/6);
        \draw[->] (0,\scale*5/6) -- (\axsep,\scale*3/6);
        \draw[->] (\axsep,\scale*1/18) -- (2*\axsep,\scale*1/18);
        \draw[->] (\axsep,\scale*3/18) -- (2*\axsep,\scale*3/18);
        \draw[->] (\axsep,\scale*5/18) -- (2*\axsep,\scale*5/18);
        \draw[->] (\axsep,\scale*7/18) -- (2*\axsep,\scale*9/18);
        \draw[->] (\axsep,\scale*9/18) -- (2*\axsep,\scale*11/18);
        \draw[->] (\axsep,\scale*11/18) -- (2*\axsep,\scale*7/18);
        \draw[->] (\axsep,\scale*13/18) -- (2*\axsep,\scale*17/18);
        \draw[->] (\axsep,\scale*15/18) -- (2*\axsep,\scale*15/18);
        \draw[->] (\axsep,\scale*17/18) -- (2*\axsep,\scale*13/18);
    \end{tikzpicture}
    \end{subfigure}
    \caption{Low-discrepancy randomization routines in dimension $d=1$. Each vertical line is a unit interval with $0$ at the bottom and $1$ at the top. A given interval is partitioned at the horizontal ticks extending to the right, and then rearranged following the arrows to create the partition of the right interval as shown by the ticks extending to the left. For the shifted rank-1 lattice and digitally-shifted digital net, when the arrow hits a horizontal gray bar it is wrapped around to the next gray bar below. See for example the dotted line in the digitally-shifted digital net. The lattice shift is $\Delta = \thisDelta$. All digital nets (DNs) use base $b=3$ and $t_\mathrm{max}=2$ digits of precision. The digital shift is $\bDelta = (2,1)^\intercal$. Dropping the $j$ subscript for dimension, the digital permutations in the third panel are $\pi_1 = (2,0,1)$ and $\pi_2 = (2,1,0)$. Notice $\pi_1$ is equivalent to a digital shift by $2$, but $\pi_2$ cannot be written as a digital shift. The nested uniform scramble (NUS) has digital permutations $\pi = (2,0,1)$, $\pi_0 = (2,1,0)$, $\pi_1 = (0,1,2)$, and $\pi_2 = (1,2,0)$. Notice the permuted digital net in the third panel has permutations depending only on the current digit in the base $b$ expansion. In contrast, the full NUS scrambling in the fourth panel has permutations which depend on all previous digits in the base $b$ expansion.}
    \label{fig:ld_randomizations}
\end{figure}

\Subsection{Rank-1 Lattices} \label{sec:lattices}

Consider a fixed \emph{generating vector} $\bg \in \bbN^d$ and fixed prime base $b$. Then we define the \emph{lattice sequence} 
\begin{equation}
    \bz_i = v(i) \bg \bmod 1, \qquad i \geq 0.
    \label{eq:unrandomized_lattice}
\end{equation}
If $n=b^m$ for some $m \in \bbN_0$, then the lattice point set $\{\bz_i\}_{i=0}^{n-1}\in [0,1)^{n \times d}$ is equivalent to $\{\bg i/n \bmod 1\}_{i=0}^{n-1}$ where we say the former is in \emph{radical inverse order} while the latter is in \emph{linear order}. When $n$ is not of the form $b^m$, then linear and radical inverse order will not generate the same lattice point set. 

\begin{enumerate}
    \item \textbf{Shifted lattice.} For a shift $\bDelta \in [0,1)^d$, we define the shifted point set 
    $$\bx_i= (\bz_i + \bDelta) \bmod 1$$ 
\end{enumerate}
where $\bz_i$ denotes the unrandomized lattice point in \eqref{eq:unrandomized_lattice}. The shift operation modulo 1 is visualized in \Cref{fig:ld_randomizations}. Randomized lattices use $\bDelta \sim \calU[0,1]^d$. In the following code snippet we generate $R$ independently shifted lattices with shifts $\bDelta_1,\dots,\bDelta_R \simiid \calU[0,1]^d$. 

\lstinputlisting[style=Python]{snippets_qmc/lattice.py}

\noindent Here we have used a generating vector from \cite{cools2006constructing} which is stored in a standardized format in the \texttt{LDData} repository. Other generating vectors from \texttt{LDData} may be used by passing in a file name from \url{https://github.com/QMCSoftware/LDData/tree/main/lattice/} or by passing an explicit array. 

\Subsection{Digital Nets} \label{sec:dnets}

Consider \emph{generating matrices} $\mC_1,\dots\mC_d \in \{0,\dots,b-1\}^{t_\mathrm{max} \times m}$ where $t_\mathrm{max},m \in \bbN$ are fixed and $b$ is a given prime base. One may relax the assumption that $b \in \bbN$ is prime, but we do not consider that here or in the \texttt{QMCPy} implementation. The first $n=b^m$ points of a digital sequence form a \emph{digital net} $\{\bz_i\}_{i=0}^{b^m-1} \in [0,1)^{b^m \times d}$ where, in \emph{radical inverse order}, for $1 \leq j \leq d$ and $0 \leq i < b^m$ with base $b$ digit vector $\bmi = \bD_m(i)$, 
$$z_{ij} = F_{t_\mathrm{max}}(\bmz_{ij}) \qquad\text{with}\qquad \bmz_{ij} = \mC_j \bmi \bmod b.$$

\begin{enumerate}
    \item \textbf{Digitally-shifted digital net.} Similar to lattices, one may apply a shift $\mDelta \in [0,1)^{t_\mathrm{max} \times d}$ to the digital net to get a digitally-shifted digital net $\{\bx_i\}_{i=0}^{b^m-1}$ where 
    $$x_{ij} = F_{t_\mathrm{max}}((\bmz_{ij} + \bDelta_j ) \bmod b)$$
    and $\bDelta_j$ is the $j^\mathrm{th}$ column of $\mDelta$. Randomly shifted digital nets use $\mDelta \simiid \calU\{0,\dots,b-1\}^{t_\mathrm{max} \times d}$, i.e., each digit is chosen uniformly from $\{0,\dots,b-1\}$. 
    \item \textbf{Digitally permuted digital net.} In what follows we will denote permutations of $\{0,\dots,b-1\}$ by $\pi$. Suppose we are given a set of permutations
    $$\mPi = \{\pi_{j,t}: \quad 1 \leq j \leq d, \quad 0 \leq t < t_\mathrm{max}\}.$$
    Then we may construct the digitally permuted digital net $\{\bx_i\}_{i=0}^{b^m-1}$ where 
    $$x_{ij} = F_{t_\mathrm{max}}((\pi_{j,0}(\mz_{ij0}),\pi_{j,1}(\mz_{ij1}),\dots,\pi_{j,t_\mathrm{max}-1}(\mz_{ij(t_\mathrm{max}-1)}))^\intercal).$$
    Randomly permuted digital nets use independent permutations chosen uniformly over all permutations of $\{0,\dots,b-1\}$. 
    \item \textbf{Nested Uniform Scrambling (NUS).} In the context of digital nets, NUS is often called Owen scrambling for its conception in \cite{owen1995randomly}. As before, $\pi$ denotes permutations of $\{0,\dots,b-1\}$. Now suppose we are given a set of permutations 
    $$\calP = \{\pi_{j,\mv_1\cdots\mv_t}: \quad 1 \leq j \leq d, \quad 0 \leq t < t_\mathrm{max}, \quad \mv_k \in \{0,\dots,b-1\}, \quad 0 \leq k \leq t \}.$$
    Then a \emph{nested uniform scrambling} of a digital net is $\{\bx_i\}_{i=0}^{b^m-1}$ where
    $$x_{ij} = F_{t_\mathrm{max}}\left(\begin{pmatrix} \pi_{j,}(\mz_{ij0}) \\ \pi_{j,\mz_{ij0}}(\mz_{ij1}) \\ \pi_{j,\mz_{ij0}\mz_{ij1}}(\mz_{ij2}) \\ \vdots \\ \pi_{j,\mz_{ij0}\mz_{ij1}\cdots\mz_{ij(t_\mathrm{max}-2)}}(\mz_{ij(t_\mathrm{max}-1)}) \end{pmatrix} \right).$$ 
    As the number of elements in $\calP$ is  
    $$\lvert \calP \rvert = d(1+b+b^2+\dots+b^{t_\mathrm{max}-1}) = d \frac{b^{t_\mathrm{max}}-1}{b-1},$$ 
    our implementation cannot afford to generate all permutations a priori. Instead, permutations are generated and stored only as needed. As with digitally-permuted digital nets, NUS uses independent uniform random permutations.  
\end{enumerate}
The randomization routines described above are visualized in \Cref{fig:ld_randomizations}. 

\emph{Linear matrix scrambling} (LMS) is a computationally cheaper but less complete version of NUS which has proven sufficient for many practical problems. LMS uses scrambling matrices $\mS_1,\dots,\mS_d \in \{0,\dots,b-1\}^{t_\mathrm{max} \times t_\mathrm{max}}$ and sets the LMS generating matrices $\tmC_1,\dots,\tmC_d \in \{0,\dots,b-1\}^{t_\mathrm{max} \times m}$ so that 
$$\tmC_j = \mS_j \mC_j \bmod b$$
for $j=1,\dots,d$. Following \cite{owen.variance_alternative_scrambles_digital_net}, let us denote elements in $\{1,\dots,b-1\}$ by $h$ and elements in $\{0,\dots,b-1\}$ by $g$. Then common structures for $\mS_j$ include 
$$\begin{pmatrix} h_{11} & 0 & 0 & 0 & \dots \\ g_{21} & h_{22} & 0 & 0 & \dots \\ g_{31} & g_{32} & h_{33} & 0 & \dots \\ g_{41} & g_{42} & g_{43} & h_{44} & \dots \\ \vdots & \vdots & \vdots & \vdots & \ddots \end{pmatrix}, \begin{pmatrix} h_1 & 0 & 0 & 0 & \dots \\ g_2 & h_1 & 0 & 0 & \dots \\ g_3 & g_2 & h_1 & 0 & \dots \\ g_4 & g_3 & g_2 & h_1 & \dots \\ \vdots & \vdots & \vdots & \vdots & \ddots \end{pmatrix}, \begin{pmatrix} h_1 & 0 & 0 & 0 & \dots \\ h_1 & h_2 & 0 & 0 & \dots \\ h_1 & h_2 & h_3 & 0 & \dots \\ h_1 & h_2 & h_3 & h_4 & \dots \\ \vdots & \vdots & \vdots & \vdots & \ddots \end{pmatrix}$$
which corresponds to Matou\v{s}ek's linear scrambling \cite{MATOUSEK1998527}, Tezuka's $i$-binomial scrambling \cite{tezuka2002randomization}, and Owen's striped LMS \cite{owen.variance_alternative_scrambles_digital_net} (not to be confused with NUS which is often called Owen scrambling). Random LMS chooses $g$ and $h$ values all independently and uniformly from $\{1,\dots,b-1\}$ and $\{0,\dots,b-1\}$ respectively. \cite{owen.variance_alternative_scrambles_digital_net} analyzes these scramblings and their connection to NUS.
 
\emph{Digital interlacing} enables the construction of higher-order digital nets. For integer order $\alpha \geq 1$, interlacing $\mA_1,\mA_2,\dots \in \{0,\dots,b-1\}^{t_\mathrm{max} \times m}$ gives $\widehat{\mA}_1,\widehat{\mA}_2,\dots \in \{0,\dots,b-1\}^{\alpha t_\mathrm{max} \times m}$ satisfying $\widehat{\mA}_{jtk} = A_{\widehat{j},\widehat{t},k}$ where $\widehat{j} = \alpha (j-1) + (t \bmod \alpha) + 1$ and $\widehat{t} = \lfloor t / \alpha \rfloor$ for $j \geq 1$ and $0 \leq t < \alpha t_\mathrm{max}$ and $1 \leq k \leq m$. For example, with $m=2$, $t_\mathrm{max}=2$, and $\alpha=2$ we have
$$\mA_1 = \begin{pmatrix} a_{101} & a_{102} \\ a_{111} & a_{112} \end{pmatrix} \;\;\; \mA_2 = \begin{pmatrix} a_{201} & a_{202} \\ a_{211} & a_{212} \end{pmatrix} \;\;\; \mA_3 = \begin{pmatrix} a_{301} & a_{302} \\ a_{311} & a_{312} \end{pmatrix} \;\;\; \mA_4 = \begin{pmatrix} a_{401} & a_{402} \\ a_{411} & a_{412} \end{pmatrix}$$
$$\widehat{\mA}_1 = \begin{pmatrix} a_{101} & a_{102} \\ a_{201} & a_{202} \\ a_{111} & a_{112} \\ a_{211} & a_{212} \end{pmatrix} \qquad \widehat{\mA}_2 = \begin{pmatrix} a_{301} & a_{302} \\ a_{401} & a_{402} \\ a_{311} & a_{312} \\ a_{411} & a_{412}\end{pmatrix}.$$
Without scrambling, higher-order digital nets may be directly generated from $\widehat{\mC}_1$, $\dots$, $\widehat{\mC}_d$, the interlaced generating matrices resulting from interlacing $\mC_1,\dots,\mC_{\alpha d}$. Higher-order NUS \cite{dick.higher_order_scrambled_digital_nets} requires generating a digital net from $\mC_1,\dots,\mC_{\alpha d}$, applying NUS to the resulting $\alpha d$-dimensional digital net, and then interlacing the resulting digit matrices $\{\bmz_{i,1}^\intercal\}_{i=0}^{b^m-1},\dots,\{\bmz_{i,\alpha d}^\intercal\}_{i=0}^{b^m-1} \in \{0,\dots,b-1\}^{t_\mathrm{max} \times m}$. For higher-order LMS, one applies LMS to the generating matrices $\mC_1,\dots,\mC_{\alpha d}$ to get $\tmC_1,\dots,\tmC_{\alpha d}$, then interlaces $\tmC_1,\dots,\tmC_{\alpha d}$ to get $\widehat{\tmC}_1,\dots,\widehat{\tmC}_d$, then generates the digital net from $\widehat{\tmC}_1,\dots,\widehat{\tmC}_d$. As we show in the numerical experiments in \Cref{sec:rldseqs_numerical_experiments}, LMS is significantly faster than NUS (especially for higher-order nets) while still achieving higher-order rates of RMSE convergence. 

A subtle difference between the above presentation and practical implementation is that $t_\mathrm{max}$ may change with randomization in practice. For example, suppose we are given generating matrices $\mC_1,\dots,\mC_d \in \{0,\dots,b-1\}^{32 \times32}$ but would like the shifted digital net to have $64$ digits of precision. Then we should generate $\bDelta \in \{0,\dots,b-1\}^{t_\mathrm{max} \times d}$ with $t_\mathrm{max}=64$ and treat $\mC_j$ as $t_\mathrm{max} \times t_\mathrm{max}$ matrices with appropriate rows and columns zeroed out. 

Gray code ordering of digital nets enables computing the next point $\bx_{i+1}$ from $\bx_i$ by only adding a single column from each generating matrix. Specifically, the $q^\mathrm{th}$ column of each generating matrix gets digitally added to the previous point where $q-1$ is the index of the only digit to be changed in Gray code ordering. Gray code orderings for $b=2$ and $b=3$ are shown in \Cref{tab:graycode}. 

\begin{table}[!ht]
    \caption{Gray code order for bases $b=2$ and $b=3$. In Gray code order only one digit is incremented or decremented by $1$ (modulo $b$) when $i$ is incremented by $1$.}
    \centering
    \begin{tabular}{r | r l | r l }
        $i$ & $i_2$ & Gray code  $i_2$ & $i_3$ & Gray code $i_3$ \\
        \hline 
        $0$ & $0000_2$ & $0000_2$ & $00_3$ & $00_3$ \\
        $1$ & $0001_2$ & $0001_2$ & $01_3$ & $01_3$ \\ 
        $2$ & $0010_2$ & $0011_2$ & $02_3$ & $02_3$ \\ 
        $3$ & $0011_2$ & $0010_2$ & $10_3$ & $12_3$ \\ 
        $4$ & $0100_2$ & $0110_2$ & $11_3$ & $11_3$ \\ 
        $5$ & $0101_2$ & $0111_2$ & $12_3$ & $10_3$ \\ 
        $6$ & $0110_2$ & $0101_2$ & $20_3$ & $20_3$ \\ 
        $7$ & $0111_2$ & $0100_2$ & $21_3$ & $21_3$ \\ 
        $8$ & $1000_2$ & $1000_2$ & $22_3$ & $22_3$ \\
        \hline 
    \end{tabular}
    \label{tab:graycode} 
\end{table}

The following code generates $\alpha=2$ higher-order digital nets in base $2$ with  $R$ independent LMS and digital shift combinations. 

\lstinputlisting[style=Python]{snippets_qmc/dnb2.py}

\noindent Here we have used a set of Sobol' generating matrices from Joe and Kuo\footnote{The ``new-joe-kuo-6.21201'' direction numbers from \\ \url{https://web.maths.unsw.edu.au/~fkuo/sobol/index.html}} \cite{joe2008constructing} which are stored in a standardized format in the \texttt{LDData} repository. Other generating matrices from \texttt{LDData} may be used by passing in a file name from \url{https://github.com/QMCSoftware/LDData/blob/main/dnet/} or by passing in an explicit array. 

\Subsection{Halton Sequences} \label{sec:Halton}

The digital sequences described in the previous section used a fixed prime base $b$. One may allow each dimension $j \in \{1,\dots,d\}$ to have its own prime base $b_j$. The most popular of such constructions is the Halton sequence which sets $b_j$ to the $j^\mathrm{th}$ prime, sets $\mC_j$ to the identity matrix, and sets $t_\mathrm{max} = m$. This enables the simplified construction of Halton points $\{\bx_i\}_{i=0}^{n-1}$ via
$$\bx_i = (v_{b_1}(i),\dots,v_{b_d}(i))^\intercal$$
where we have added a subscript to $v$ in \eqref{eq:v} to denote the base dependence. 

Almost all the methods described for digital sequences are immediately applicable to Halton sequences after accounting for the differing bases across dimensions. However, digital interlacing is not generally applicable when the bases differ. Halton with random starting points has also been explored in \cite{wang2000randomized}, although we do not consider this here. The following code generates a Halton point set with $R$ independent LMS and digital permutation combinations. Specifically, we generate $R$ independent LMS-Halton point sets and then apply an independent permutation scramble to each.

\lstinputlisting[style=Python]{snippets_qmc/halton.py}

\noindent The \texttt{"QRNG"} randomization  follows the \texttt{QRNG} software package \cite{qrng.software} in generating a generalized Halton point set \cite{faure2009generalized} using an optimized,  deterministic set of permutation scrambles and random digital shifts. 

\Subsection{Numerical Experiments} \label{sec:rldseqs_numerical_experiments}

These numerical experiments were carried out on the CPUs of a 2023 MacBook Pro with an M3 Max processor. \Cref{fig:timing} compares the wall-clock time required to generate point sets and perform fast transforms for high dimensions and/or a high number of randomizations. \Cref{sec:fast_kernel_methods} will detail the fast transforms including the FFT in bit-reversed order (FFTBR), its inverse (IFFTBR), and the fast Walsh--Hadamard transform (FWHT); see also \Cref{fig:fast_transforms}. Following the theory, our implementation scales linearly in the number of dimensions and randomizations. For large numbers of randomizations, our vectorized \texttt{QMCPy} generators significantly outperform the looped implementations in \texttt{SciPy} and \texttt{PyTorch}. For fast transforms, our FFTBR and IFFTBR implementations are the same speed as the underlying FFT and IFFT algorithms in \texttt{SciPy}. Our FWHT implementation is significantly faster than the \texttt{SymPy} version, especially when applying the FWHT to multiple sequences simultaneously. IID points, randomly shifted lattices, and digital nets with LMS and digital shifts (including higher-order versions) are the fastest sequences to generate. Digital nets in base $b=2$ exploit Gray code order, integer storage of bit-vectors, and exclusive or (XOR) operations to perform digital addition. Halton point sets are slower to generate as they cannot use these exploits. NUS, especially higher-order versions, are significantly slower to generate than LMS randomizations. Moreover, higher-order LMS with digital shifts is sufficient to achieve higher-order RMSE convergence as we show in the next experiment.  

\begin{figure}[!ht]
    \centering
    \includegraphics[width=1\textwidth]{./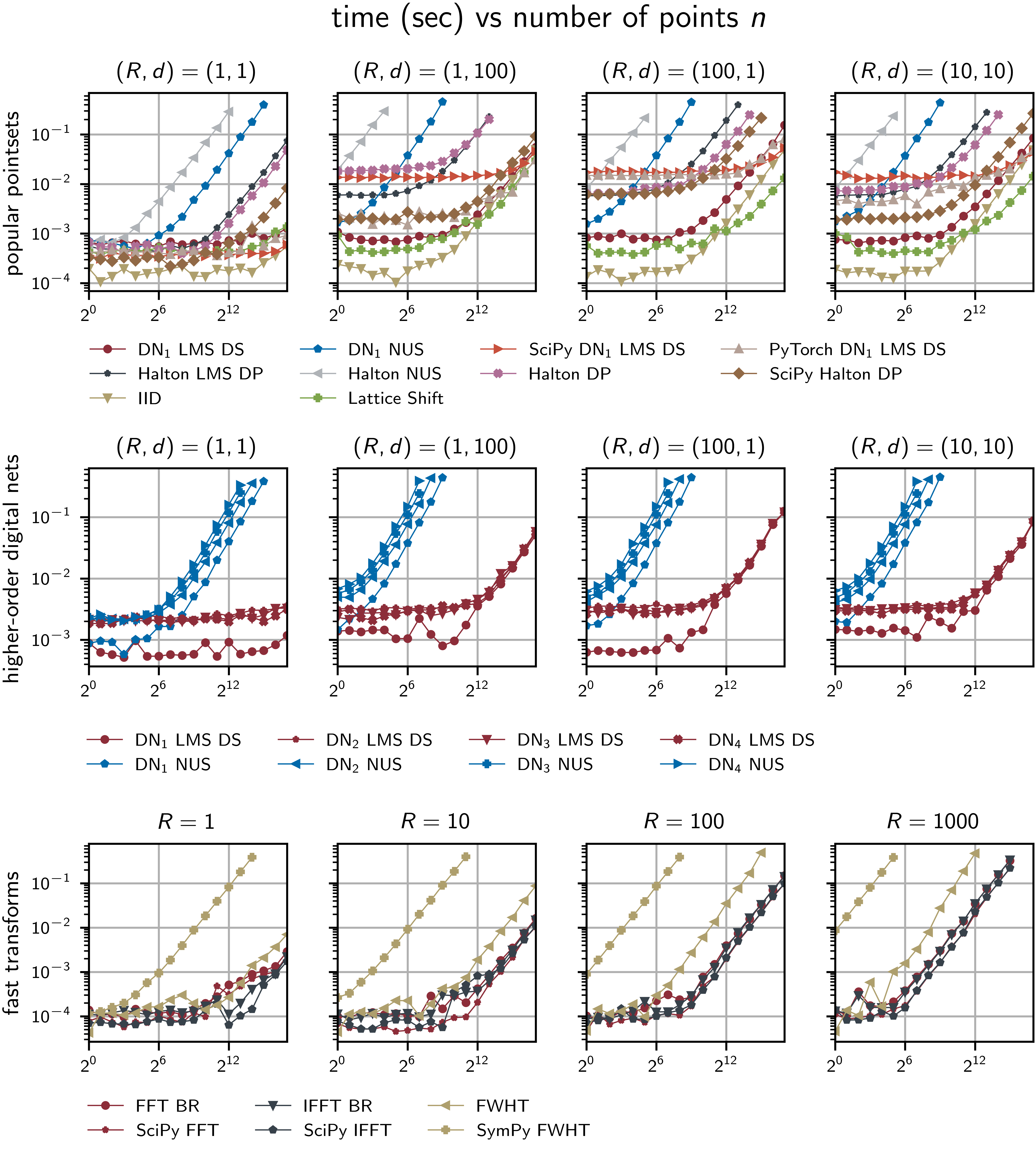}
    \caption{Comparison of time required to generate point sets and perform fast transforms. For IID and randomized LD generators, we vary the number of randomizations $R$, the number of dimensions $d$, and the sequence size $n$. Timings include initialization, and a new point set is generating for each plotted $(R,n,d)$. Fast transforms are applied to $R$ sequences of size $n$ in a vectorized fashion.}
    \label{fig:timing}
\end{figure}

\Cref{fig:convergence} plots the RMSE convergence of RQMC methods applied to the integrands described below.
\begin{enumerate}
    \item \textbf{Simple function, $d=1$,} has $f(x) = x e^x-1$. This was used in \cite{dick.higher_order_scrambled_digital_nets} where higher-order digital net scrambling was first proposed.
    \item \textbf{Simple function, $d=2$,} has $f(\bx) = x_2 e^{x_1 x_2}/(e-2)-1$. This was also considered in \cite{dick.higher_order_scrambled_digital_nets}. 
    \item \textbf{Oakley \& O'Hagan, $d=2$,} has $f(\bx) = g((\bx-1/2)/50)$ for $g(\bt) = 5+t_1+t_2+2\cos(t_1)+2\cos(t_2)$, see \cite{oakley2002bayesian} or the VLSE\footnote{\label{url:VLSE}\url{https://www.sfu.ca/~ssurjano/uq.html}} (Virtual Library of Simulation Experiments).
    \item \textbf{G-Function, $d=3$,} has $f(\bx) = \prod_{j=1}^d \frac{\lvert 4x_j-2\rvert-a_j}{1+a_j}$ with $a_j = (j-2)/2$ for $1 \leq j \leq d$, see  \cite{crestaux2007polynomial,marrel2009calculations} or the VLSE. 
    \item \textbf{Oscillatory Genz, $d=3$,} has $f(\bx) = \cos\left(-\sum_{j=1}^d c_j x_j \right)$ with coefficients of the third kind $c_j = 4.5 \tc_j/\sum_{j=1}^d \tc_j$ where $\tc_j = \exp\left(j \log\left(10^{-8}\right)/d\right)$. This is a common test function for uncertainty quantification which is available in the \texttt{Dakota}\footnote{\label{url:dakota}\url{https://dakota.sandia.gov/}} software \cite{adams2020dakota} among others.  
    \item \textbf{Corner-peak Genz, $d=3$,} has $f(\bx) = \left(1+\sum_{j=1}^d c_j x_j\right)^{-(d+1)}$ with coefficients of the second kind $c_j = 0.25 \tc_j/\sum_{j=1}^d \tc_j$ where $\tc_j = 1/j^2$. This is also available in \texttt{Dakota}. 
\end{enumerate}
For each problem, the RMSE of the (Q)MC estimator $\hmu$ in \eqref{eq:mc_approx} is approximated using $300$ independent randomizations of an IID or randomized LD point sets from \texttt{QMCPy}. IID points consistently achieve the theoretical $\calO(n^{1/2})$ convergence rate. For shifted rank-1 lattices, we periodized the integrand using a baker transform which does not change the mean, i.e., we use $\tilde{f}(\bx) = f(1-2\lvert \bx-1/2\rvert)$ in place of $f(\bx)$ in \eqref{eq:mc_approx}. Shifted lattices consistently attained RMSEs like $\mathcal{O}(n^{-1})$, except for on the $d=3$ G-Function where higher-order convergence was unexpectedly observed. Our base $2$ digital nets with LMS and DS, including higher-order versions with higher-order scrambling, consistently achieved the lowest RMSEs and best rates of convergence. For the $d=1$ integrand, we are able to realize the theoretical rate of $\calO(n^{-\min\{\alpha,\talpha\}-1/2+\delta})$ where $\alpha$ is the higher-order digital interlacing of the net, $\talpha$ is the smoothness of the integrand, and $\delta>0$ is arbitrarily small. For the $d=2$ and $d=3$-dimensional integrands we were also able to achieve higher-order convergence, but with varying rates of success.

\begin{figure}[!ht]
    \centering
    \includegraphics[width=1\textwidth]{./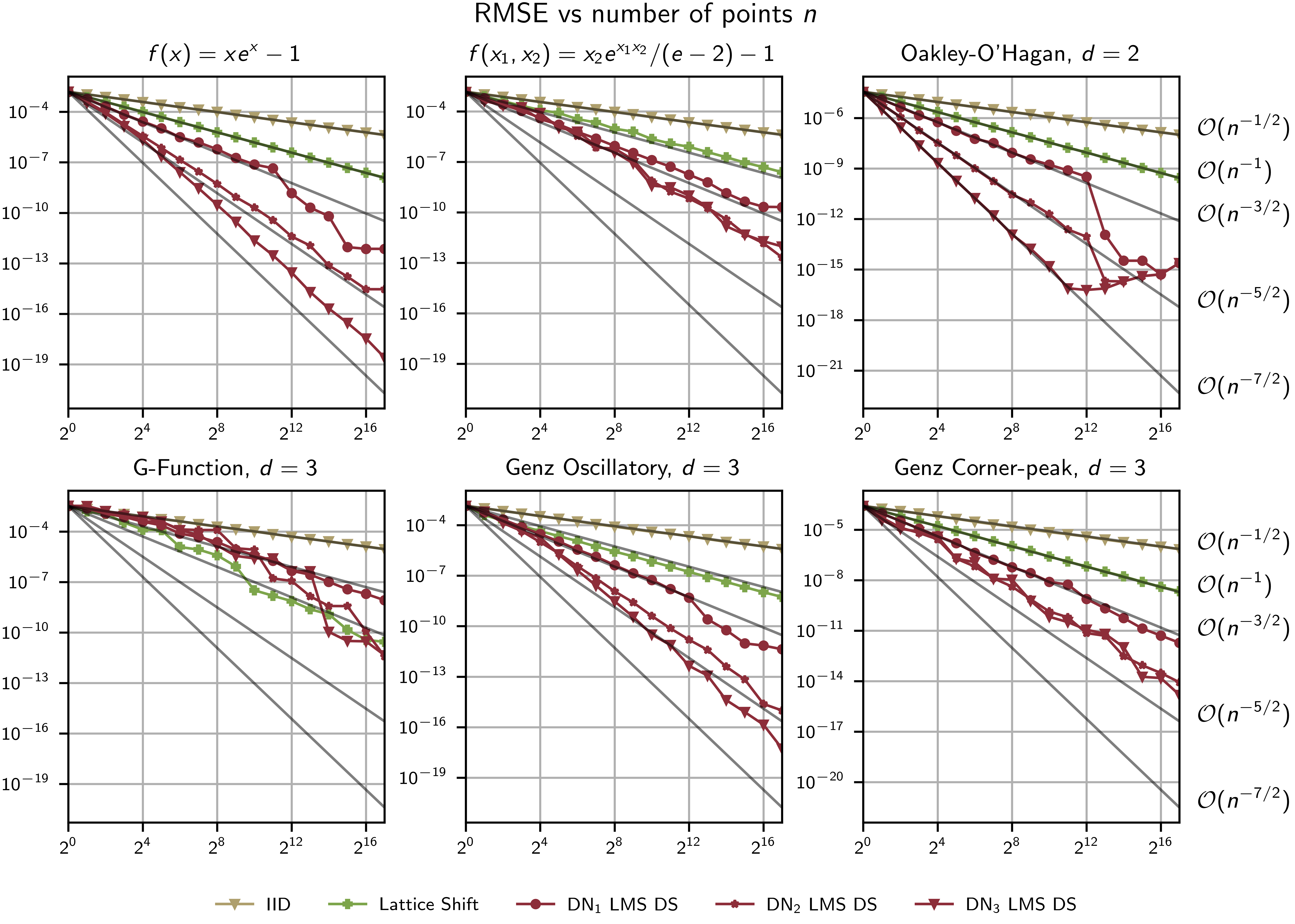}
    \caption{The root mean squared error (RMSE) of the randomized QMC (RQMC) estimate for a few different integrands. Higher-order digital nets with linear matrix scrambling and digital shifts achieve higher-order convergence and significantly outperform both IID points and randomly shifted lattices.}
    \label{fig:convergence}
\end{figure}

\Section{Variable Transformations} \label{sec:variable_transforms}

Here we will review well known results from probability which enable us to rewrite our problem into a QMC compatible form.  Often the QMC problem in \eqref{eq:mc_approx} may be written as 
\begin{equation}
    \mu = \bbE[g(\bT)] = \bbE[f(\bX)], \qquad \bX \sim \calU[0,1]^d
    \label{eq:mc_approx_g_T}
\end{equation}
where $\bT$ is a $d$-dimensional random variable with joint density $\lambda$. Since QMC nodes have low-discrepancy with the standard uniform distribution $\bX \sim \calU[0,1]^d$, we perform an invertible variable transformation $\bPsi: [0,1]^d \to \calM$ satisfying $\mathrm{range}(\bT) \subseteq \calM$. This implies
$\mu = \bbE[f(\bX)]$ where  
$$
f(\bX) = g(\bPsi(\bX)) \frac{\lambda (\bPsi(\bX))}{\nu(\bPsi(\bX))},
$$
and $\nu$ is the density of $\bPsi(\bX)$. Specifically, $\nu(\bPsi(\bX))= \lvert \bJ_\bPsi(\bX) \rvert^{-1}$ where $\lvert \bJ_\bPsi(\bX) \rvert$ is the determinant of the Jacobian of $\bPsi$. The above formulation is equivalent to importance sampling by a distribution with density $\nu$. Optimally, one would choose $\nu \propto g\lambda$ so that $f$ is constant, but this is typically infeasible in practice due to our limited knowledge of $g$. Choosing $\bPsi$ so that $\bT \sim \bPsi(\bX)$ gives the simplification $f(\bX) = g(\bPsi(\bX))$. The next few subsections review some measures and transforms available in \texttt{QMCPy}. 

\Subsection{Uniform Measure} 

For $\bT \sim \calU[\bl,\bu]$ with lower and upper bounds $\bl,\bu \in \bbR^d$ satisfying $\bl \leq \bu$ elementwise we set 
$$\bPsi(\bx) = \bl + (\bu - \bl) \odot \bx$$
with $\odot$ denoting the Hadamard (elementwise) product. 

\Subsection{Gaussian Measure} 

For a Gaussian $\bT \sim \calN(\bM,\mK=\mL \mL^\intercal)$ with mean $\bM \in \bbR^d$ and covariance $\mK \in \bbR^{d \times d}$ we set 
$$\bPsi(\boldsymbol{x}) = \bM+\mL\Phi^{-1}(\bx)$$
where $\Phi^{-1}$ is the inverse CDF of a standard Gaussian distribution applied elementwise. QMC is more effective if the variation of $f$ is concentrated in the lower indexed coordinates.  For finance examples where the outcome depends more on the gross behavior of the underlying stochastic process, rather than the high frequency behavior, it makes sense to choose the transformation accordingly. For example, in \Cref{fig:asian_option_pda_vs_chol} we see that if  $\bT$ contains observations from a Brownian motion, then an eigendecomposition of $\mK = \mL \mL^\intercal$ shows greater convergence gains compared to the Cholesky decomposition \cite{AcwBroGla97}.

\begin{figure}[!ht]
	\centering
    \includegraphics[width=1\textwidth]{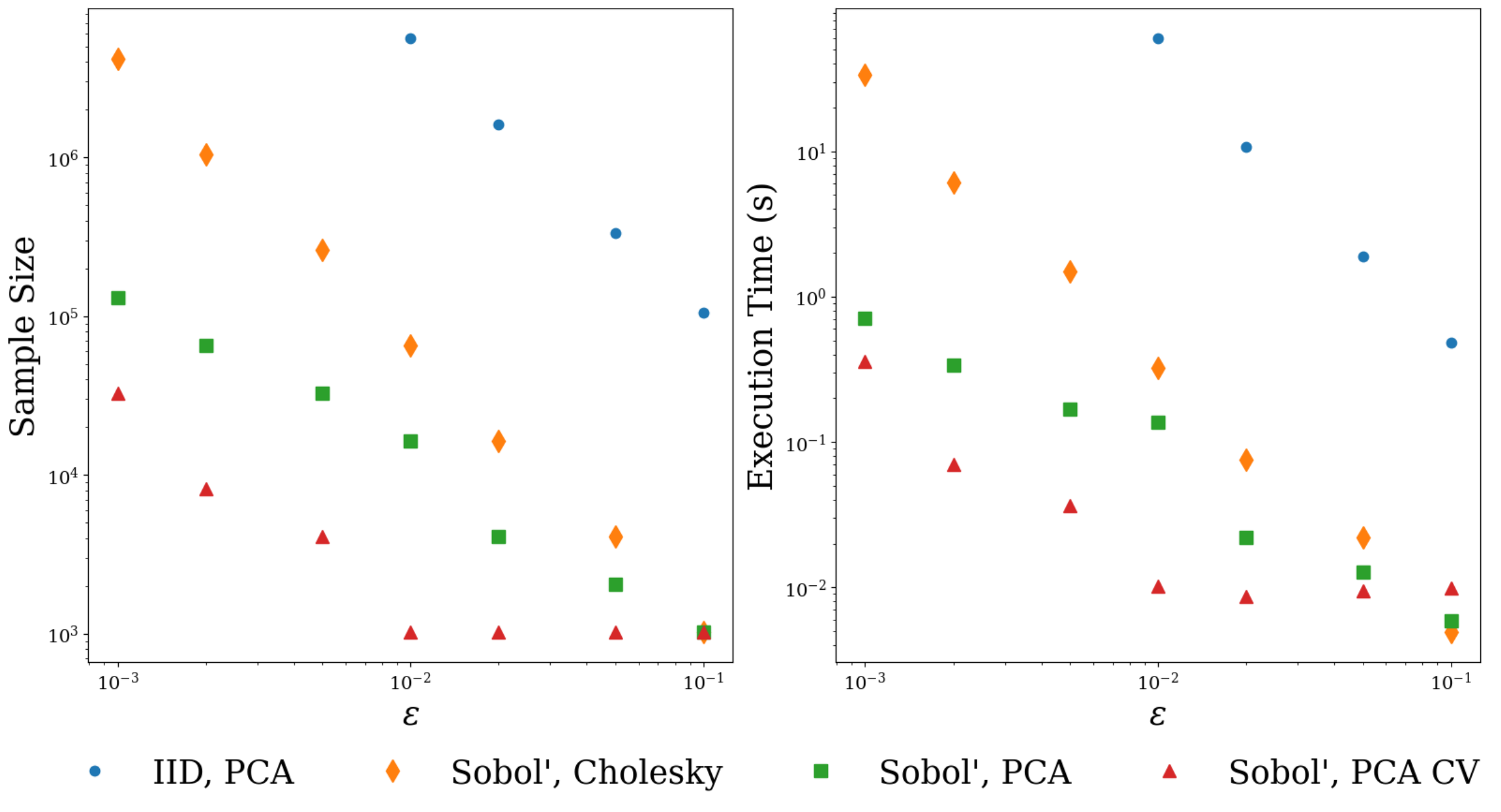}
	\caption{Comparison of IID Monte Carlo and QMC stopping criteria for estimating the fair price of an Asian option with arithmetic mean to within an error tolerance $\varepsilon$. For QMC with Sobol' points, using a principal component analysis (PCA) decomposition of the Brownian motion provides substantial savings compared to the Cholesky decomposition. Using an Asian option with a geometric mean control variate (CV) provides additional savings. \label{fig:asian_option_pda_vs_chol}}
\end{figure}

\Subsection{Brownian Motion} \label{sec:brownain_motion} 

A Brownian motion with an initial value $B_0$, drift $\gamma$, and diffusion $\sigma^2$ observed at times $0 \leq \tau_1 < \tau_2 < \dots < \tau_d$ is a multivariate Gaussian,
$$(B(\tau_1),\dots,B(\tau_d))^\intercal \sim \calN(\bM,\mK),$$
with respective mean and covariance
$$\bM = B_0 + \gamma \btau \qqtqq{and} \mK = \sigma^2 \left(\min(\tau_j,\tau_{j'})\right)_{j,j'=1}^{d}.$$

\Subsection{Other Measures} 

\texttt{QMCPy} supports a number of other measures, including, but not limited to:
\begin{enumerate}
    \item \textbf{\texttt{SciPy}'s stats distributions,} including the ability to specify a multivariate measure with independent marginals. 
    \item \textbf{Geometric Brownian motion,} which is commonly used for financial modeling. 
    \item \textbf{Lebesgue measures,} which requires importance sampling.
\end{enumerate}

\Section{Problems} \label{sec:qmc_problems} 

Each problem must specify both the true measure $\bT$ and integrand $g$ as written in \eqref{eq:mc_approx_g_T}. This section describes a few of the problems supported in \texttt{QMCPy}. \Cref{sec:financia_options} comprises the majority of this section which will discuss the well-studied financial option pricing problems, and then \Cref{sec:other_problems} will briefly mention support for other user-specified problems and frameworks. 

\Subsection{Financial Options} \label{sec:financia_options}

Define the geometric Brownian motion with start price $S_0$, drift $\gamma$, and volatility $\sigma$ observed at times $0 < \tau_1 < \tau_2 < \dots < \tau_d$ to be 
$$\bS(\bT) = S_0 \exp((\gamma-\sigma^2/2)\btau+\sigma \bT)$$
where $\bT \sim \calN(\bzero,\mK)$ is a standard Brownian motion so $\mK = (\min(\tau_j,\tau_{j'}))_{j,j'=1}^d$, i.e., a Brownian motion as defined in \Cref{sec:brownain_motion} with initial value $B_0=0$, drift $\gamma=0$, and diffusion $\sigma=1$. 

For a financial option exercised at the final monitoring time, the discounted payoff with interest rate $r >0$ is
$$g(\bT) = P(\bS(\bT))e^{-r \tau_d}.$$
Various payoff functions $P$ for different options are defined below, often with respect to some strike price $K$ and/or the value of the path at exercise time which we will denote by $S_{-1} := S_d$. 
\begin{enumerate}
    \item \textbf{European options,} where the call and put options have respective payoffs 
    $$P(\bS) = \max\{S_{-1}-K,0\} \qqtqq{and} P(\bS) = \max\{K-S_{-1},0\}.$$
    \item \textbf{Asian options,} whose value is the average of an asset path across time. We use the trapezoidal rule to approximate either the arithmetic mean by
    $$A(\bS) = \frac{1}{d}\left[\frac{1}{2} S_0 + \sum_{j=1}^{d-1} S_j + \frac{1}{2} S_{-1}\right]$$
    or the geometric mean by
    $$A(\bS) = \left[\sqrt{S_0} \prod_{j=1}^{d-1} S_j \sqrt{S_{-1}}\right]^{1/d}.$$
    Asian call and put option then have respective payoffs
    $$P(\bS) = \max\{A(\bS)-K,0\} \qqtqq{and} P(\bS) = \max\{K-A(\bS),0\}.$$
    \Cref{fig:asian_option_pda_vs_chol} gave an example of Asian option pricing using the automatic stopping criterion discussed in the next section.
    \item \textbf{Barrier options,} are either ``in'' options, which are activated when the path crosses the barrier $B$, or ``out'' options, which are activated only if the path never crosses the barrier $B$. An ``up'' Barrier option satisfies $S_0<B$ while a ``down'' option satisfies $S_0>B$, both indicating the direction of the barrier from the start price. Barrier ``up-in'' call and put option have respective payoffs
    \begin{align*}
        P(\bS) &= \begin{cases} \max\{S_{-1}-K,0\}, & \text{any } \bS \geq B \\ 0, & \mathrm{otherwise} \end{cases} \qquad\text{and} \\
        P(\bS) &= \begin{cases} \max\{K-S_{-1},0\}, & \text{any } \bS \geq B \\ 0, & \mathrm{otherwise} \end{cases}.
    \end{align*}
    Barrier ``up-out'' call and put options have respective payoffs
    \begin{align*}
        P(\bS) &= \begin{cases} \max\{S_{-1}-K,0\}, & \text{all } \bS < B \\ 0, & \mathrm{otherwise} \end{cases} \qquad\text{and} \\
        P(\bS) &= \begin{cases} \max\{K-S_{-1},0\}, & \text{all } \bS < B \\ 0, & \mathrm{otherwise} \end{cases}.
    \end{align*}
    
    Barrier ``down-in'' call and put option have respective payoffs
    \begin{align*}
        P(\bS) &= \begin{cases} \max\{S_{-1}-K,0\}, & \text{any } \bS \leq B \\ 0, & \mathrm{otherwise} \end{cases} \qquad\text{and} \\
        P(\bS) &= \begin{cases} \max\{K-S_{-1},0\}, & \text{any } \bS \leq B \\ 0, & \mathrm{otherwise} \end{cases}.
    \end{align*}
    
    Barrier ``down-out'' call and put option have respective payoffs
    \begin{align*}
        P(\bS) &= \begin{cases} \max\{S_{-1}-K,0\}, & \text{all } \bS > B \\ 0, & \mathrm{otherwise} \end{cases} \qquad\text{and} \\
        P(\bS) &= \begin{cases} \max\{K-S_{-1},0\}, & \text{all } \bS > B \\ 0, & \mathrm{otherwise} \end{cases}.
    \end{align*}
    \item \textbf{Lookback options,} where the call and put options have respective payoffs
    $$P(\bS) = S_{-1}-\min(\bS), \qquad P(\bS) = \max(\bS)-S_{-1}.$$
    \item \textbf{Digital options,} with payout $\rho$ have respective call and put payoffs
    $$P(\bS) = \begin{cases} \rho, & S_{-1} \geq K \\ 0, & \mathrm{otherwise} \end{cases}, \qquad P(\bS) =  \begin{cases} \rho, & S_{-1} \leq K \\ 0, & \mathrm{otherwise} \end{cases}.$$
\end{enumerate}
Multilevel versions of these options are also supported.

\Subsection{Other Problems} \label{sec:other_problems}

Below we describe some other problem specification methods in \texttt{QMCPy} 
\begin{enumerate}
    \item \textbf{Custom problems,} where the user supplies the integrand $g$ and true measure $\bT$ explicitly. 
    \item \textbf{Problems from \texttt{UM-Bridge}\footnote{\url{https://um-bridge-benchmarks.readthedocs.io/en/docs/}} \cite{umbridge.software},} the ``UQ and Model Bridge'' software, which uses containerization to make numerical simulations accessible across any programming language. \texttt{UM-Bridge} also supports HPC load balancing and numerous scheduler backends. 
\end{enumerate}

\Section{Stopping Criteria} \label{sec:qmc_stopping_crit} 

This section discusses existing IID-MC and QMC error approximation methods. In \texttt{QMCPy}, we have used these error approximations to automatically determine the smallest number of points required for an approximation to meet user-specified error tolerances. \Cref{SoRa_table:qmcpy_sc} and \Cref{SoRa_table:qmcpy_sc_cont} compare features of the currently available adaptive stopping criteria in \texttt{QMCPy}, which will be discussed in the following subsections. Afterwards, \Cref{sec:stop_crit_vectorziation} will discuss our novel contribution to support bounding functions of multiple expectations. 

Here we will detail error approximation methods which determine bounds $[\mu^-,\mu^+]$ on a scalar mean $\mu$ in \eqref{eq:mc_approx} which hold with uncertainty less than some threshold $\alpha \in (0,1)$ (not to be confused with $\alpha$ the smoothness parameter). Specifically, given sampling nodes $\{\boldsymbol{x}_i\}_{i=0}^{n-1}$ and corresponding function evaluations $\{f(\boldsymbol{x}_i)\}_{i=0}^{n-1}$, we discuss methods for determining bounds $-\infty \leq \mu^- \leq \mu^+ \leq \infty$ so that $\mu \in [\mu^-,\mu^+]$ with probability greater than or equal to $1-\alpha$. 

\begin{table}[!ht]
    \caption{A comparison of adaptive stopping criteria classes in the \texttt{QMCPy} library. Additional details are given in \Cref{SoRa_table:qmcpy_sc_cont}. \emph{Point sets} indicate classes of compatible sequences in \texttt{QMCPy}. For example, \texttt{CubQMCRepStudentT} is compatible with any low-discrepancy (\texttt{LD}) sequence including base 2 digital nets (\texttt{DigitalNetB2}) and integration lattices (\texttt{Lattice}). However, \texttt{CubQMCNetG} is only compatible with \texttt{DigitalNetB2} sequences and will not work with \texttt{Lattice} or other LD sequences. \emph{Vectorized} algorithms are capable of simultaneously approximating multiple expectations or even functions of multiple expectations, see \Cref{sec:stop_crit_vectorziation}. The \texttt{GAIL} MATLAB library \cite{GAIL.software} also implements the single-level algorithms.}
    \centering
    \begin{tabular}{r c c c c }
        \texttt{QMCPy} Class & Point sets & Guaranteed & Vectorized & Multilevel\\
        \hline
        \texttt{CubMCCLT} & \texttt{IID} & & & \\
        \texttt{CubMCCLTVec} & \texttt{IID} & & $\checkmark$ & \\
        \texttt{CubMCG} & \texttt{IID} & $\checkmark$ & & \\
        \texttt{CubQMCRepStudentT} & \texttt{LD} & & $\checkmark$ & \\
        \texttt{CubQMCNetG} & \texttt{DigitalNetB2} & $\checkmark$ & $\checkmark$ &  \\
        \texttt{CubQMCLatticeG} & \texttt{Lattice} & $\checkmark$ & $\checkmark$ &  \\
        \texttt{CubQMCBayesNetG} & \texttt{DigitalNetB2} & $\checkmark$ & $\checkmark$ & \\
        \texttt{CubQMCBayesLatticeG} & \texttt{Lattice} & $\checkmark$ & $\checkmark$ & \\
        \texttt{CubMLMC} & \texttt{IID} & & & $\checkmark$ \\
        \texttt{CubMLMCCont} & \texttt{IID} & & & $\checkmark$ \\
        \texttt{CubMLQMC} & \texttt{LD} & & & $\checkmark$ \\
        \texttt{CubMLQMCCont} & \texttt{LD} & & & $\checkmark$ \\
        \hline 
    \end{tabular}
    \label{SoRa_table:qmcpy_sc}
\end{table}

\begin{table}[!ht]
    \caption{A comparison of adaptive stopping criteria classes in the \texttt{QMCPy} library. This is a continuation of \Cref{SoRa_table:qmcpy_sc}. \emph{Bounds} specify the method of error estimation as discussed in the corresponding subsections. Deterministic bounds hold with probability $1$, i.e., for any $\alpha \in (0,1)$. Probabilistic and Bayesian bounds are tailored to the choice of uncertainty $\alpha$. \emph{Replications} indicate the number of randomizations one must apply to an LD point set; this is not applicable (NA) for IID points.}
    \centering
    \begin{tabular}{r c c c c }
        \texttt{QMCPy} Class & Bounds & Replications & Section & References \\
        \hline
        \texttt{CubMCCLT} & Probabilistic & NA& \Cref{sec:stop_crit_iid} & \cite{hickernell.MC_guaranteed_CI} \\
        \texttt{CubMCCLTVec} & Probabilistic & NA & \Cref{sec:stop_crit_iid} & \cite{hickernell.MC_guaranteed_CI} \\
        \texttt{CubMCG} & Probabilistic & NA & \Cref{sec:stop_crit_iid} & \cite{hickernell.MC_guaranteed_CI} \\
        \texttt{CubQMCRepStudentT} & Probabilistic & $>1$ & \Cref{sec:stop_crit_qmc_rep_student_t} & \cite{lecuyer.RQMC_CLT_bootstrap_comparison,owen.mc_book_practical} \\
        \texttt{CubQMCNetG} & Deterministic & $1$ & \Cref{sec:stop_crit_qmc_decay_tracking} & \cite{hickernell.adaptive_dn_cubature} \\
        \texttt{CubQMCLatticeG} & Deterministic & $1$ & \Cref{sec:stop_crit_qmc_decay_tracking} & \cite{cubqmclattice} \\
        \texttt{CubQMCBayesNetG} & Bayesian & $1$ & \Cref{sec:stop_crit_qmc_fast_bayes} & \cite{rathinavel.bayesian_QMC_sobol} \\
        \texttt{CubQMCBayesLatticeG} & Bayesian & $1$ & \Cref{sec:stop_crit_qmc_fast_bayes} & \cite{rathinavel.bayesian_QMC_lattice} \\ 
        \texttt{CubMLMC} & Probabilistic & NA & \Cref{sec:stop_crit_multilevel} & \cite{giles.MLMC_path_simulation,giles2015multilevel} \\
        \texttt{CubMLMCCont} & Probabilistic & NA & \Cref{sec:stop_crit_multilevel} & \cite{collier2015continuation} \\
        \texttt{CubMLQMC} & Probabilistic & $>1$ & \Cref{sec:stop_crit_multilevel} & \cite{giles.mlqmc_path_simulation} \\
        \texttt{CubMLQMCCont} & Probabilistic & $>1$ & \Cref{sec:stop_crit_multilevel} & \cite{robbe2019multilevel} \\
        \hline
    \end{tabular}
    \label{SoRa_table:qmcpy_sc_cont}
\end{table}

\Subsection{IID Monte Carlo} \label{sec:stop_crit_iid}

When $\{\boldsymbol{x}_i\}_{i=0}^{n-1}$ are IID and $f$ has a finite variance, the central limit theorem (CLT) may provide a heuristic $1-\alpha$ confidence interval for $\mu$ by setting $\mu^\pm = \hat{\mu} \pm Z_{\alpha/2}\sigma/\sqrt{n}$. Here $Z_{\alpha/2}$ is the inverse CDF of a standard normal distribution at $1-\alpha/2$, and $\hat{\mu}$ is the sample average of function evaluations as in \eqref{eq:mc_approx}. The variance 
\begin{equation}
    \sigma^2 = \bbE[(f(\bX)-\mu)^2], \qquad \bX \sim \calU[0,1]^d
    \label{eq:var}
\end{equation}
may be approximated by the unbiased estimator
\begin{equation}
    \hat{\sigma}^2 = \frac{1}{n-1}\sum_{i=0}^{n-1} (f(\boldsymbol{x}_i)-\hat{\mu})^2,
    \label{eq:sample_standard_deviation_unbiased}
\end{equation}
perhaps multiplied by an inflation factor $C^2>1$ for a more conservative estimate. The resulting  heuristic bounds on $\mu$ are 
\begin{equation}
    \mu^\pm = \hat{\mu} \pm CZ_{\alpha/2} \hat{\sigma} / \sqrt{n}
    \label{eq:mu_pm_clt}
\end{equation}

These quantities are used in the following stopping criteria in different ways  
\begin{enumerate}
    \item \texttt{CubMCCLTVec}, iteratively doubles the samples size until the bounds in \eqref{eq:mu_pm_clt} are sufficiently tight. \Cref{sec:stop_crit_vectorziation} details how this simple algorithm may be used for vectorized approximation and bounding functions of multiple expectations.
    \item \texttt{CubMCCLT}, is a two-step method which first takes a pilot sample of size $n_0$ in order to compute an initial sample mean $\hat{\mu}_0$ and unbiased sample standard $\hat{\sigma}$ in \eqref{eq:sample_standard_deviation_unbiased}. Then, for absolute and relative error tolerances $\varepsilon_\mathrm{abs},\varepsilon_\mathrm{rel} > 0$, one may set the combined tolerance to $\varepsilon = \max\{\varepsilon_\mathrm{abs},\varepsilon_\mathrm{rel} \hat{\mu}_0\}$. A final sample of $n$ new IID points is chosen to satisfy $CZ_{\alpha/2} \hat{\sigma} / \sqrt{n} < \varepsilon$, so we use the smallest acceptable sample size $n = \lceil (C Z_{\alpha/2} \hat{\sigma} / \varepsilon)^2 \rceil$.
    \item \texttt{CubMCG} \cite{hickernell.MC_guaranteed_CI,jiang2016guaranteed}, extends \texttt{CubMCCLT} to accommodate finite $n$ and provide bounds that are guaranteed to satisfy the uncertainty threshold. Specifically, their algorithm holds for the cone of functions with modified kurtosis $\bbE[(f(\bX)-\mu)^4]/\sigma^4$ less than some parameter $\tkappa_\mathrm{max}$. As before, $\bX \sim \calU[0,1]^d$, the true mean is $\mu$ from \eqref{eq:mc_approx}, and $\sigma^2$ is the variance in \eqref{eq:var}. This two-step method relies on the Berry--Esseen and Chebyshev inequalities to compute conservative fixed width confidence intervals for functions in the cone. Note that the resulting bounds from this algorithm are different from the CLT bounds in \eqref{eq:mu_pm_clt}. \cite{hickernell.MC_guaranteed_CI} originally proposed the \texttt{CubMCG} algorithm for absolute error tolerances while \cite{jiang2016guaranteed} provides additional details and an extension to also accommodate relative error tolerances.  
\end{enumerate}
The two-stage sampling techniques of \texttt{CubMCCLT} and \texttt{CubMCG} make them incompatible with the vectorized approximation extension discussed in \Cref{sec:stop_crit_vectorziation}.

\Subsection{QMC with Replications and Student Confidence Intervals} \label{sec:stop_crit_qmc_rep_student_t}

This method, called \texttt{CubQMCRepStudentT} utilizes IID randomizations of an LD sequence and then derives bounds based on the IID sample averages. Specifically, suppose $\{\bx_{1i}\}_{i=0}^{n-1}$, $\dots$, $\{\bx_{Ri}\}_{i=0}^{n-1}$ denote $R$ IID randomizations of an LD point set where typically $R$ is small, e.g., $R=15$. Then, following  \cite{lecuyer.RQMC_CLT_bootstrap_comparison} and \cite[Chapter 17]{owen.mc_book_practical}, the RQMC estimate
\begin{equation}
    \hmu = \frac{1}{R} \sum_{r=1}^R \hat{\mu}_r \qquad \text{where} \quad \hat{\mu}_r = \frac{1}{n} \sum_{i=0}^{n-1} f(\bx_{ri})
    \label{eq:RQMC}
\end{equation}
gives a practical $100(1-\alpha)\%$ confidence interval
\begin{equation}
    \mu^\pm = \hmu \pm C t_{R-1,\alpha/2} \hsigma/\sqrt{R}
    \label{eq:rqmc_student_bounds}
\end{equation}
 for $\mu$. Here
\begin{equation}
    \hsigma^2 = \frac{1}{R-1} \sum_{r=1}^R (\hmu_r - \hmu)^2,
    \label{eq:sample_var_replicated_qmc}
\end{equation}
$t_{R-1,\alpha/2}$ is the $\alpha/2$ quantile of a Student's-$t$ distribution with $R-1$ degrees of freedom, and $C>1$ is again an inflation factor for the standard deviation. An adaptive version of this algorithm iteratively doubles the samples size $n$ until the bounds in \eqref{eq:rqmc_student_bounds} are sufficiently tight. \Cref{sec:stop_crit_vectorziation} details how this algorithm may be used for vectorized approximation and bounding functions of multiple expectations.

In the following code, we use \texttt{QMCPy} to build an RQMC estimate to the mean of the corner-peak Genz function in $d=50$ dimensions defined as 
$$f(\bx) = \left(1+\sum_{j=1}^d c_j x_j\right)^{-(d+1)}$$
with coefficients of the second kind $c_j = 0.25 \tc_j/\sum_{j=1}^d \tc_j$ where $\tc_j = 1/j^2$. Using $R$ replications and a fixed number of points $n$, we construct a Student's-$t$ confidence interval as in \eqref{eq:RQMC} and the discussion thereafter. \texttt{SciPy} is used to compute the necessary quantile. 

\lstinputlisting[style=Python]{snippets_qmc/genz_ex_1.py}

The next snippet shows how to use an adaptive version of this algorithm which automatically selects the number of points $n$ required to meet a user-specified absolute error tolerance.

\lstinputlisting[style=Python]{snippets_qmc/genz_ex_2.py}

\Subsection{Fast Decay Tracking QMC Without Replications} \label{sec:stop_crit_qmc_decay_tracking} 

The \texttt{CubQMCLatticeG} and \texttt{CubQMCNetG} algorithms track the decay of basis coefficients using only a single randomized LD sequence, see \cite{adaptive_qmc} for a unified treatment. These algorithms provide deterministic bounds $\mu^\pm$ on $\mu$ for functions in a cone parameterized by the decay rate of the complex exponential Fourier coefficients for integration lattices \cite{cubqmclattice} or the Walsh coefficients for digital sequences \cite{hickernell.adaptive_dn_cubature}. Again, an adaptive version of this algorithm iteratively doubles the samples size until the bounds are sufficiently tight. \Cref{sec:stop_crit_vectorziation} details how this algorithm may be used for vectorized approximation and bounding functions of multiple expectations. \Cref{fig:qmc_convergence} and \Cref{fig:asian_option_pda_vs_chol} show results when applying this algorithm to the Keister integrand and an Asian option pricing example respectively.  

\Subsection{Fast Bayesian QMC Without Replications} \label{sec:stop_crit_qmc_fast_bayes} 

The algorithms \texttt{CubQMCBayesLatticeG} and \texttt{CubQMCBayesNetG} take a Bayesian approach to error estimation in assuming the integrand is a realization of a Gaussian process. Since the integral of a Gaussian process is still Gaussian, one may compute posterior credible interval bounds on the true mean. As with \texttt{CubQMCLatticeG} and \texttt{CubQMCNetG}, only a single randomized LD sequence is required, and this algorithm may be made adaptive by doubling the sample size until the bounds are sufficiently tight. \Cref{sec:stop_crit_vectorziation} details how this algorithm may be used for vectorized approximation and bounding functions of multiple expectations.

Utilizing special kernels matched to LD sequences enables the Gaussian process to be fit at $\mathcal{O}(n \log n)$ cost, including hyperparameter estimation. \cite{rathinavel.bayesian_QMC_thesis} gives a unified treatment of these algorithms for either lattices with shift-invariant kernels \cite{rathinavel.bayesian_QMC_lattice} or digital nets with digitally-shift-invariant kernels \cite{rathinavel.bayesian_QMC_sobol}. These special pairings of points and kernels will be further discussed in \Cref{sec:fast_kernel_methods}, and their applications to fast Gaussian process regression will be discussed in \Cref{sec:fast_gps}.

\Subsection{Multilevel (Q)MC} \label{sec:stop_crit_multilevel}

Here we will give a quick overview of the Multilevel Monte Carlo (MLMC) and Multilevel Quasi-Monte Carlo (MLQMC) algorithms implemented into \texttt{QMCPy} and point to a few extensions throughout the literature. \Cref{sec:FastBayesianMLQMC} will discuss our new algorithm which uses fast Bayesian cubature for MLQMC without replications for a fixed number of levels. The standard MLMC and MLQMC with replications algorithms will be fully detailed there. Extending our new fast Bayesian MLQMC algorithms to support adaptive multilevel schemes and developing an implementation into \texttt{QMCPy} are areas for future work.

MLMC and MLQMC exploit cheap low-fidelity approximations to $f$ to accelerate standard IID-MC and QMC methods. Specifically, suppose we have access to low-fidelity surrogates $f_\ell$ to the full fidelity-model $f_L := f$ where $1 \leq \ell \leq L$. The basis of ML(Q)MC methods is to expand the target expectation into the telescoping sum 
\begin{equation}
    \mu = \bbE[f_L(\bX)] = \sum_{\ell = 1}^L \bbE\left[f_\ell(\bX) - f_{\ell-1}(\bX)\right] = \sum_{\ell=1}^L \bbE[Y_\ell(\bX)]
    \label{eq:mlmc_telescoping_sum}
\end{equation} 
where $Y_\ell = f_\ell-f_{\ell-1}$, and we use the convention $f_0(\bX) = 0$. Each of the $\bbE[Y_\ell(\bX)]$ are approximated using independent IID-MC or randomized QMC estimators. Assume a single evaluation of $f_\ell$ costs $C_\ell$, and we are given a target budget or error tolerance. 

Multilevel IID-MC estimates to $\bbE[Y_\ell(\bX)]$ with $n_\ell$ samples on level $\ell$ will optimally set $n_\ell$ proportional to  $\sigma/\sqrt{C_\ell}$ where $\sigma_\ell^2 = \bbV[Y_\ell(\bX)]$ is the variance of the $\ell^\mathrm{th}$ difference. Approximation of this variance and the resulting algorithm are detailed in \cite{giles.MLMC_path_simulation,giles2015multilevel} and implemented into the \texttt{CubMLMC} class. \cite{collier2015continuation} algorithmically enhances \texttt{CubMLMC} to a continuation multilevel IID-MC algorithm called \texttt{CubMLMCCont}. This algorithm intelligently trades off bias and variance estimates for a decreasing sequence of error tolerances while exploiting structure in higher tolerance approximations to accelerate lower tolerance solutions. 

MLQMC methods typically use independent replicated QMC estimates on each level, e.g., $R=8$ independent randomizations of a low-discrepancy point set on each of the levels. See \Cref{sec:stop_crit_qmc_rep_student_t} for a discussion of the replicated QMC estimate for a single-level problem. For MLQMC, one iteratively doubles $n_\ell$, the number of points in the low-discrepancy sequence, on the level with the maximum error-to-work ratio $\hsigma_\ell^2/(R n_\ell C_\ell)$ where $\hsigma_\ell$ is the variance of the combined approximate on level $\ell$ analogous to \eqref{eq:sample_var_replicated_qmc}. The resulting \texttt{CubMLQMC} algorithm is detailed in \cite{giles.mlqmc_path_simulation}, with \cite{robbe2019multilevel} providing the continuation version \texttt{CubMLQMCCont}.  

Often we deal with a maximum level $L$ which is not fixed. For example, option pricing is an infinite-dimensional problem where the true expected value depends on the asset being monitored at an infinite number of time steps. The algorithms and references in this subsection have implemented schemes which adapt the number of levels $L$ to account for potential truncation errors. Multilevel algorithms are not compatible with the vectorized approximation techniques discussed in the next section. 

A number of ML(Q)MC extensions exist. One idea for infinite level ML(Q)MC problems is that randomly selecting the level at which to sample leads to unbiased estimates \cite{rhee2015unbiased}. For simulations which also evaluate all lower fidelity levels when queried, samples may be recycled using the unbiased random level idea in a scheme known as recycled ML(Q)MC \cite{kumar2017multigrid,robbe2019recycling}. Another extension is multi-index (Q)MC which considers problems for which multiple fidelity parameters exist \cite{robbe.multi_index_qmc}. For example, \cite{robbe2016dimension} considers a heat exchanger modeled by a numerical PDE solver with one fidelity parameter controlling the spatial discretization and another controlling the time step size. \cite{robbe2019multilevel} provides a unified view of ML(Q)MC methods and their extensions. 

\Section{Vectorized Stopping Criterion for Functions of Multiple Expectations} \label{sec:stop_crit_vectorziation}

\begin{quotation}
    This section summarizes \cite{sorokin.MC_vector_functions_integrals}, a paper I published with Jagadeeswaran Rathinavel. This is our novel contribution to the literature on adaptive (Q)MC algorithms.
\end{quotation}

This section will describe (single-level) adaptive IID-MC and QMC methods which propagate bounds on scalar expectations to bounds on functions of multiple expectations. Specifically, we describe extensions of the vectorized-compatible (Q)MC error bounding algorithms in \Cref{SoRa_table:qmcpy_sc} to support adaptive sampling for satisfying general error criteria on functions of a common array of expectations. Although several functions involving multiple expectations are being evaluated, only one random sequence is required, albeit sometimes of larger dimension than the underlying randomness.  In this section only, we will use bold notation to denote multidimensional arrays, i.e., tensors. 

For many problems, an array quantity of interest (QOI) $\boldsymbol{s} \in \mathbb{R}^{\boldsymbol{d}_{\boldsymbol{s}}}$ may be formulated as a function of an array mean $\boldsymbol{\mu} = \mathbb{E}[\boldsymbol{f}(\boldsymbol{X})] \in \mathbb{R}^{\boldsymbol{d}_{\boldsymbol{\mu}}}$ where $\bX \sim \calU[0,1]^d$ as before. Here $\boldsymbol{\mu}$ is a multidimensional array, which we simply call an array, with shape vector $\boldsymbol{d}_{\boldsymbol{\mu}}$, e.g., $\boldsymbol{\mu} \in \mathbb{R}^{(2,3)}$ indicates $\boldsymbol{\mu}$ is a $2 \times 3$ matrix. Similarly, we allow $\boldsymbol{s}$ to be an array with shape $\boldsymbol{d}_{\boldsymbol{s}}$. The integrand is now $\boldsymbol{f}: [0,1]^{d} \to \mathbb{R}^{\boldsymbol{d}_{\boldsymbol{\mu}}}$. 

The QOI array is formulated from the mean array via a function $\boldsymbol{C}: \mathbb{R}^{\boldsymbol{d}_{\boldsymbol{\mu}}} \to \mathbb{R}^{\boldsymbol{d}_{\boldsymbol{s}}}$ so that $\boldsymbol{s} = \boldsymbol{C}(\boldsymbol{\mu})$.
Example QOI arrays include
\begin{itemize}
    \item an $a \times b$ mean matrix where $\boldsymbol{C}$ is the identity and $\boldsymbol{d}_{\boldsymbol{s}} = \boldsymbol{d}_{\boldsymbol{\mu}} = (a,b)$,
    \item a Bayesian posterior mean where $s = C(\mu_1,\mu_2) = \mu_1/\mu_2$, $d_s = 1$, and $d_{\boldsymbol{\mu}} = 2$, 
    \item $c$ closed and total sensitivity indices requiring $\boldsymbol{d}_{\boldsymbol{s}} = (2,c)$ and $\boldsymbol{d}_{\boldsymbol{\mu}} = (2,3,c)$ to formulate $s_{ij} = C_{ij}(\boldsymbol{\mu}) =  \mu_{i3j}/(\mu_{i2j}-\mu_{i1j}^2)$ for $i \in \{1,2\}$, $j \in \{1,\dots,c\}$.
\end{itemize}
These examples will be detailed in \Cref{sec:vsc_numerical_experiments_v}.

The main result of this section is the development of \Cref{SoRa_algo:MCStoppingCriterion}, a generalization of \cite{adaptive_qmc}, which
\begin{enumerate}
    \item produces bounds $[\boldsymbol{s}^-,\boldsymbol{s}^+]$ on QOI $\boldsymbol{s}$ which hold with elementwise uncertainty below a user-specified threshold array $\boldsymbol{\alpha}^{(\boldsymbol{s})} \in (0,1)^{\boldsymbol{d}_{\boldsymbol{s}}}$, 
    \item computes an optimal QOI approximation $\hat{\boldsymbol{s}}$ based on bounds $[\boldsymbol{s}^-,\boldsymbol{s}^+]$, a user-specified error metric, and user-specified error tolerance,
    \item repeats with increasing sample sizes until the stopping criterion is satisfied. 
\end{enumerate}
The algorithm utilizes existing (Q)MC methods that, given an appropriate set of sampling nodes and their corresponding function evaluations, produce bounds $[\boldsymbol{\mu}^-,\boldsymbol{\mu}^+]$ on the mean $\boldsymbol{\mu}$ that hold with elementwise uncertainty below a derived threshold array $\boldsymbol{\alpha}^{(\boldsymbol{\mu})} \in (0,1)^{\boldsymbol{d}_{\boldsymbol{\mu}}}$. Many such algorithms were given in \Cref{sec:qmc_stopping_crit}. A dependency mapping from $\boldsymbol{s}$ to $\boldsymbol{\mu}$ is used to derive $\boldsymbol{\alpha}^{(\boldsymbol{\mu})}$ from $\boldsymbol{\alpha}^{(\boldsymbol{s})}$. Interval arithmetic functions are used to propagate mean bounds $[\boldsymbol{\mu}^-,\boldsymbol{\mu}^+]$ to QOI bounds $[\boldsymbol{s}^-,\boldsymbol{s}^+]$. These interval arithmetic functions are derived from $\boldsymbol{C}$ and problem specific QOI restrictions. When existing approximations to QOI are sufficiently accurate, the dependency function may tell the algorithm that certain outputs of $\boldsymbol{f}$ are not necessary to evaluate, a principal we call \emph{economic evaluation}. Our implementations in \texttt{QMCPy} incorporate shared samples, parallel computation, and economic evaluation to efficiently find bounds and approximations satisfying flexible user specifications.

The remainder of this section is organized as follows. In \Cref{sec:vsc_comb_sol_approx} we consider the case where $\boldsymbol{\mu}$ is an array and $s$ is a scalar. Here we describe how to set array $\boldsymbol{\alpha}^{(\boldsymbol{\mu})}$ based on scalar $\alpha^{(s)}$ and how to propagate bounds $[\boldsymbol{\mu}^-,\boldsymbol{\mu}^+]$ on $\boldsymbol{\mu}$ to bounds $[s^-,s^+]$ on $s$ so that both hold with uncertainty below $\alpha^{(s)}$. \Cref{sec:vsc_opt_comb_sol_sc} derives a stopping criterion for adaptive sampling and optimal approximation $\hat{s}$ of scalar QOI $s$. Both the approximation and stopping criterion are based on $[s^-,s^+]$, a user-specified error metric, and a user-specified error threshold. Considerations for extending to array QOI $\boldsymbol{s}$, including a generalized method for setting $\boldsymbol{\alpha}^{(\boldsymbol{\mu})}$ and a strategy for economic evaluation, are discussed in \Cref{sec:vsc_vectorized_implementation} before presenting the unifying \Cref{SoRa_algo:MCStoppingCriterion}. \Cref{sec:vsc_numerical_experiments_v} gives examples from machine learning and sensitivity analysis.   

\Subsection{Bounds on a scalar QOI} \label{sec:vsc_comb_sol_approx}

Here we discuss how to compute bounds $[s^-,s^+]$ on scalar QOI $s = C(\boldsymbol{\mu})$ so that 
\begin{equation}
    P(s \in [s^-,s^+]) \geq 1-\alpha^{(s)}
    \label{SoRa_eq:scalar_comb_desired_ineq}
\end{equation}
where $\alpha^{(s)} \in (0,1)$ is an uncertainty threshold on the QOI bounds. Here $C: \mathbb{R}^{\boldsymbol{d}_{\boldsymbol{\mu}}} \to \mathbb{R}$ combines the array mean $\boldsymbol{\mu}$ into a scalar QOI $s$. First, we discuss how to set the array of mean uncertainty thresholds $\boldsymbol{\alpha}^{(\boldsymbol{\mu})} \in (0,1)^{\boldsymbol{d}_{\boldsymbol{\mu}}}$ so the resulting mean bounds $[\boldsymbol{\mu}^-,\boldsymbol{\mu}^+]$ contain $\boldsymbol{\mu}$ with uncertainty below $\alpha^{(s)}$. Then we discuss how the user may utilize $C$ to define \emph{bound functions} $C^-,C^+: \mathbb{R}^{\boldsymbol{d}_{\boldsymbol{\mu}}} \times \mathbb{R}^{\boldsymbol{d}_{\boldsymbol{\mu}}} \to \mathbb{R}$ so that setting $s^- = C^-(\boldsymbol{\mu}^-,\boldsymbol{\mu}^+)$ and $s^+ = C^+(\boldsymbol{\mu}^-,\boldsymbol{\mu}^+)$ ensures \eqref{SoRa_eq:scalar_comb_desired_ineq} is satisfied.

Let $N = \lvert \boldsymbol{d}_{\boldsymbol{\mu}} \rvert$ be the number of elements in an $\mathbb{R}^{\boldsymbol{d}_{\boldsymbol{\mu}}}$ array and set each element of $\boldsymbol{\alpha}^{(\boldsymbol{\mu})}$ to the constant $\alpha^{(s)}/ N$. Then Boole's inequality \cite{boole1847mathematical} implies that if $[\boldsymbol{\mu}^-,\boldsymbol{\mu}^+]$ are chosen so that
\begin{equation}
    P(\mu_{\boldsymbol{k}} \in [\mu_{\boldsymbol{k}}^-,\mu_{\boldsymbol{k}}^+]) \geq 1-\alpha_{\boldsymbol{k}}^{(\boldsymbol{\mu})}  \qquad \text{for all }\boldsymbol{1} \leq \boldsymbol{k} \leq \boldsymbol{d}_{\boldsymbol{\mu}},
    \label{SoRa_eq:indv_prob_bounds}
\end{equation}
then  
\begin{equation}
    P(\boldsymbol{\mu} \in [\boldsymbol{\mu}^-,\boldsymbol{\mu}^+]) \geq 1-\alpha^{(s)}.
    \label{SoRa_eq:indv_prob_bounds_all}
\end{equation}
The bounds in \eqref{SoRa_eq:indv_prob_bounds} may be found using any of the vectorized-compatible (Q)MC methods in \Cref{SoRa_table:qmcpy_sc}.

To propagate bounds $[\boldsymbol{\mu}^-,\boldsymbol{\mu}^+]$ on mean $\boldsymbol{\mu}$ to bounds $[s^-,s^+]$ on QOI $s$, the user must define functions $C^-$ and $C^+$ using interval arithmetic \cite{interval_analysis} and problem specific knowledge. These functions must ensure $s \in [C^-(\boldsymbol{\mu}^-,\boldsymbol{\mu}^+),C^+(\boldsymbol{\mu}^-,\boldsymbol{\mu}^+)]$ whenever $\boldsymbol{\mu} \in [\boldsymbol{\mu}^-,\boldsymbol{\mu}^+]$. Without problem specific knowledge, one may set 
\begin{equation}
    s^- = C^-(\boldsymbol{\mu}^-,\boldsymbol{\mu}^+) = \min_{\boldsymbol{\mu} \in [\boldsymbol{\mu}^-,\boldsymbol{\mu}^+]} C(\boldsymbol{\mu}), \qquad 
    s^+= C^+(\boldsymbol{\mu}^-,\boldsymbol{\mu}^+) = \max_{\boldsymbol{\mu} \in [\boldsymbol{\mu}^-,\boldsymbol{\mu}^+]} C(\boldsymbol{\mu}).
    \label{SoRa_eq:C_minus_C_plus}
\end{equation}
\Cref{SoRa_table:elementary_ops_Cpm} provides examples of such interval arithmetic functions for some basic operations. Problem specific knowledge may be used to further shrink the naive bounds in bounds \eqref{SoRa_eq:C_minus_C_plus}. For example, if $s$ is a probability then $0 \leq s^- \leq s^+ \leq 1$ may be encoded into $C^-$ and $C^+$. See the sensitivity index computation in \Cref{sec:vsc_numerical_experiments_v} for a more nuanced example.

In all, given $\alpha^{(s)}$, we may set $\boldsymbol{\alpha}^{(\boldsymbol{\mu})}=\alpha^{(s)}/N$ elementwise then use scalar (Q)MC  algorithms to find $[\boldsymbol{\mu}^-,\boldsymbol{\mu}^+]$ satisfying \eqref{SoRa_eq:indv_prob_bounds} so that \eqref{SoRa_eq:indv_prob_bounds_all} holds. Then setting $[s^-,s^+]$ via \eqref{SoRa_eq:C_minus_C_plus}, potentially combined with problem specific knowledge, guarantees \eqref{SoRa_eq:scalar_comb_desired_ineq} holds.

\begin{table}[!ht]
    \caption{Interval arithmetic functions for elementary operations.}
    \centering
    \resizebox{\columnwidth}{!}{%
    \begin{tabular}{r  c  c}
        $s=C(\boldsymbol{\mu})$ & $s^- = C^-(\boldsymbol{\mu}^-,\boldsymbol{\mu}^+)$ & $s^+ = C^+(\boldsymbol{\mu}^-,\boldsymbol{\mu}^+)$ \\
        \hline
        $\mu_1+\mu_2$ & $\mu_1^-+\mu_2^-$ & $\mu_1^++\mu_2^+$ \\
        $\mu_1-\mu_2$ & $\mu_1^--\mu_2^+$ & $\mu_1^+-\mu_2^-$ \\
        $\mu_1 \cdot \mu_2$ & $\min(\mu_1^-\mu_2^-,\mu_1^-\mu_2^+,\mu_1^+\mu_2^-,\mu_1^+\mu_2^+)$ & $\max(\mu_1^-\mu_2^-,\mu_1^-\mu_2^+,\mu_1^+\mu_2^-,\mu_1^+\mu_2^+)$ \\
        $\mu_1 / \mu_2$ & $\begin{cases} -\infty, & 0 \in [\mu_2^-,\mu_2^+] \\ \min\left(\frac{\mu_1^-}{\mu_2^-},\frac{\mu_1^+}{\mu_2^-},\frac{\mu_1^-}{\mu_2^+},\frac{\mu_1^+}{\mu_2^+}\right), & 0 \notin [\mu_2^-,\mu_2^+] \end{cases}$ & $\begin{cases} \infty, & 0 \in [\mu_2^-,\mu_2^+] \\ \max\left(\frac{\mu_1^-}{\mu_2^-},\frac{\mu_1^+}{\mu_2^-},\frac{\mu_1^-}{\mu_2^+},\frac{\mu_1^+}{\mu_2^+}\right), & 0 \notin [\mu_2^-,\mu_2^+] \end{cases}$ \\
        $\min(\mu_1,\mu_2)$ & $\min(\mu_1^-,\mu_2^-)$ & $\min(\mu_1^+,\mu_2^+)$ \\
        $\max(\mu_1,\mu_2)$ & $\max(\mu_1^-,\mu_2^-)$ & $\max(\mu_1^+,\mu_2^+)$ \\
        \hline
    \end{tabular}
    }
    \label{SoRa_table:elementary_ops_Cpm}
\end{table}

\Subsection{Optimal Approximation of a Scalar QOI} \label{sec:vsc_opt_comb_sol_sc}

Here we derive a stopping criterion for adaptive sampling and an optimal approximation $\hat{s}$ of a scalar QOI $s$. Let $h_\varepsilon: \mathbb{R} \to \mathbb{R}^+$ be an error metric dependent on some error tolerance $\varepsilon$ so that the stopping criterion is met if and only if the QOI approximation $\hat{s}$ satisfies 
\begin{equation}
    \lvert s-\hat{s} \rvert \leq h_\varepsilon(s) \qquad \forall\; s \in [s^-,s^+].
    \label{SoRa_eq:sc_raw}
\end{equation}
\Cref{SoRa_thm:shat_opt} determines the optimal $\hat{s}$ and an equivalent condition to \eqref{SoRa_eq:sc_raw} when $h_\varepsilon$ is a metric map, i.e., Lipschitz continuous with Lipschitz constant at most $1$. Some compatible error metric options are
\begin{subequations}
\begin{align}
    h_\varepsilon(s) &= \max\left(\varepsilon^\text{abs},\lvert s \rvert \varepsilon^\text{rel} \right) \quad &&\text{absolute or relative error satisfied,} \label{SoRa_eq:h_abs_or_rel} \\
    h_\varepsilon(s) &= \min\left(\varepsilon^\text{abs},\lvert s \rvert \varepsilon^\text{rel} \right) \quad &&\text{absolute and relative error satisfied.} \label{SoRa_eq:h_abs_and_rel}
\end{align}
\end{subequations}
\begin{theorem} \label{SoRa_thm:shat_opt}
    Suppose that  $h_\varepsilon$ satisfies the metric map condition
    \begin{equation}
        \lvert h_\varepsilon(s_1) - h_\varepsilon(s_2) \rvert \leq \lvert s_1 - s_2 \rvert \qquad \forall\; s_1,s_2 \in \mathbb{R}.
        \label{SoRa_eq:metric_map_cond}
    \end{equation}
    Then error criterion  \eqref{SoRa_eq:sc_raw} holds if and only if 
    \begin{equation}
        s^+-s^- \leq h_\varepsilon(s^-)+h_\varepsilon(s^+).
        \label{SoRa_eq:sc}
    \end{equation}
    Furthermore, the choice of 
    \begin{equation}
        \hat{s} = \frac{1}{2}\left[s^-+s^++h_\varepsilon(s^-)-h_\varepsilon(s^+)\right]
        \label{SoRa_eq:shat_opt}
    \end{equation}
    minimizes $\sup_{s \in [s^-,s^+]} \lvert s - \hat{s} \rvert -h_\varepsilon(s)$ for any choice of $s^{\pm}$ with $s^- < s^+$.
\end{theorem}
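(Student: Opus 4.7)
The plan is to prove the two claims together by reducing the supremum of $g(s) := |s - \hat{s}| - h_\varepsilon(s)$ over $s \in [s^-, s^+]$ to a maximum at the endpoints. First I would show the monotonicity lemma: for $s_1 < s_2$ in $[\hat{s}, \infty)$ the metric map condition gives
\[
g(s_2) - g(s_1) = (s_2 - s_1) - \bigl(h_\varepsilon(s_2) - h_\varepsilon(s_1)\bigr) \geq (s_2-s_1) - |s_2-s_1| = 0,
\]
so $g$ is nondecreasing on $[\hat{s}, \infty)$; symmetrically, $g$ is nonincreasing on $(-\infty, \hat{s}]$. Consequently, no matter where $\hat{s}$ lies relative to $[s^-,s^+]$, the supremum of $g$ over $[s^-,s^+]$ is attained at an endpoint, i.e.
\[
M(\hat{s}) := \sup_{s\in[s^-,s^+]} g(s) = \max\bigl\{(\hat{s}-s^-) - h_\varepsilon(s^-),\ (s^+-\hat{s}) - h_\varepsilon(s^+)\bigr\}
\]
provided $\hat{s}\in[s^-,s^+]$, with an analogous single-term expression outside that interval.

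Next I would minimize $M(\hat{s})$ over $\hat{s}\in[s^-,s^+]$. The two terms in the max are affine in $\hat{s}$ with slopes $+1$ and $-1$, so the minimax is obtained by equating them, yielding exactly $\hat{s}^* = \tfrac{1}{2}\bigl[s^-+s^++h_\varepsilon(s^-)-h_\varepsilon(s^+)\bigr]$ as claimed in \eqref{SoRa_eq:shat_opt}. The metric map condition ensures $|h_\varepsilon(s^-)-h_\varepsilon(s^+)|\leq s^+-s^-$, so $\hat{s}^*\in[s^-,s^+]$, and a short calculation gives the optimal value
\[
M(\hat{s}^*) = \tfrac{1}{2}(s^+-s^-) - \tfrac{1}{2}\bigl(h_\varepsilon(s^-)+h_\varepsilon(s^+)\bigr).
\]
To finish the optimality claim I must also rule out choices $\hat{s}\notin[s^-,s^+]$; by the monotonicity lemma, for $\hat{s}\leq s^-$ one has $M(\hat{s}) = s^+-\hat{s}-h_\varepsilon(s^+)$, which is strictly decreasing in $\hat{s}$ and therefore minimized over $(-\infty,s^-]$ at $\hat{s}=s^-$, a value that the metric map condition shows is at least $M(\hat{s}^*)$; the case $\hat{s}\geq s^+$ is symmetric.

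Finally, since \eqref{SoRa_eq:sc_raw} is precisely the statement $M(\hat{s})\leq 0$ for the chosen $\hat{s}$, and since for any $\hat{s}$ in the interval $M(\hat{s})\geq M(\hat{s}^*)$, criterion \eqref{SoRa_eq:sc_raw} is satisfiable by some $\hat{s}$ if and only if $M(\hat{s}^*)\leq 0$, which rearranges to \eqref{SoRa_eq:sc}. Moreover, when \eqref{SoRa_eq:sc} holds the optimal $\hat{s}^*$ in \eqref{SoRa_eq:shat_opt} realizes the stopping criterion, and when it fails no choice can.

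The routine parts are the monotonicity lemma and the affine minimax calculation; the one place requiring care is confirming the interior optimum $\hat{s}^*$ indeed beats all exterior candidates, which is where the 1-Lipschitz hypothesis on $h_\varepsilon$ is used a second time (beyond guaranteeing $\hat{s}^*\in[s^-,s^+]$). I do not expect any serious obstacle, but I would state the monotonicity lemma explicitly since it drives both conclusions.
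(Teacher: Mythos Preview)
Your proposal is correct and follows essentially the same approach as the paper: reduce the supremum over $[s^-,s^+]$ to the two endpoints via the monotonicity of $g(s)=|s-\hat s|-h_\varepsilon(s)$ coming from the $1$-Lipschitz condition, then equate the two affine endpoint expressions to locate the minimax $\hat s$. You are in fact a bit more explicit than the paper about verifying $\hat s^*\in[s^-,s^+]$ and ruling out exterior candidates, but the structure of the argument is identical.
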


\begin{proof}
    Define $g(s,\hat{s})=\lvert s - \hat{s} \rvert -h_\varepsilon(s)$. From \eqref{SoRa_eq:metric_map_cond}, it follows that if  $s^- \leq s \leq \hat{s}$ then $g(s^-,\hat{s})-g(s,\hat{s}) \geq 0$, and if $\hat{s} \leq s \leq s^+$ then $g(s^+,\hat{s})-g(s,\hat{s})  \geq 0$. This means that $g(\cdot,\hat{s})$ attains its maximum at either $s^-$ or $s^+$ so that
    \begin{equation*}
        \max_{s \in [s^-,s^+]} g(s,\hat{s}) = \max_{s \in \{s^-,s^+\}} g(s,\hat{s}).
    \end{equation*}
    
    Next, we find the optimal choice of $\hat{s}$.  The function $g(s^-,\cdot)$ is monotonically decreasing to the left of  $s^-$ and monotonically increasing to the right of $s^-$. Similarly, $g(s^+,\cdot)$ is monotonically decreasing to the left of $s^+$ and monotonically increasing to the right of $s^+$. This means that the optimal choice of $\hat{s}$ to minimize $\max_{s \in \{s^-,s^+\}} g(s,\hat{s})$ lies in $[s^-,s^+]$ and satisfies $g(s^-,\hat{s}) = g(s^+,\hat{s})$, that is, 
    $$\hat{s} - s^- - h_\varepsilon(s^-) = s^+ - \hat{s} - h_\varepsilon(s^+).$$
    Solving for the optimal value of $\hat{s}$ leads to \eqref{SoRa_eq:shat_opt}.
    
    For this optimal $\hat{s}$, 
    $$2 \max_{s \in [s^-,s^+]} g(s,\hat{s}) =  s^+  -  s^-  - h_\varepsilon(s^-) - h_\varepsilon(s^+).$$
    The error criterion is equivalent to $\max_{s \in [s^-,s^+]} g(s,\hat{s}) \le 0 $.  This can only hold under condition  \eqref{SoRa_eq:sc}. 
\end{proof}

\Subsection{Adaptive Algorithm with Extension to Array QOI} \label{sec:vsc_vectorized_implementation}

Up to this point, we have assumed an array mean $\boldsymbol{\mu}$ was used to compute a scalar QOI $s$. We now relax this assumption to enable approximation of array QOI $\boldsymbol{s}$. The optimal approximation $\hat{\boldsymbol{s}}$ and stopping criterion may still be computed by elementwise application of \eqref{SoRa_eq:shat_opt} and \eqref{SoRa_eq:sc}. 

For some integrands $\boldsymbol{f}$ it is possible to avoid evaluating particular integrand outputs when all affected QOI have already been sufficiently approximated. In such cases, the user may enable economic evaluation by defining a dependency function $\boldsymbol{D}: \{\text{True},\text{False}\}^{\boldsymbol{d}_{\boldsymbol{s}}} \to \{\text{True},\text{False}\}^{\boldsymbol{d}_{\boldsymbol{\mu}}}$ which maps stopping flags on QOI to stopping flags on means. The latter indicates which outputs the integrand is required to compute in the next iteration.  We say (QOI) index $\boldsymbol{1} \leq \boldsymbol{l} \leq \boldsymbol{d}_{\boldsymbol{s}}$ depends on (mean) index $\boldsymbol{1} \leq \boldsymbol{k} \leq \boldsymbol{d}_{\boldsymbol{\mu}}$ if the $\boldsymbol{k}^\text{th}$ entry is $\text{True}$ in the output of evaluating $\boldsymbol{D}$ at the multidimensional array with only the $\boldsymbol{l}^\text{th}$ entry set to $\text{True}$.

Moreover, $\boldsymbol{D}$ is used to compute mean uncertainty levels $\boldsymbol{\alpha}^{(\boldsymbol{\mu})}$ from combined uncertainty levels $\balpha^{(\bs)}$ in the spirit of Boole's inequality as done in \Cref{sec:vsc_comb_sol_approx}. The idea is to ensure that each element of $\boldsymbol{\alpha}^{(\boldsymbol{s})}$ is greater than the sum of elements in $\boldsymbol{\alpha}^{(\boldsymbol{\mu})}$ with dependent indices. Specifically, let $\boldsymbol{N} \in \mathbb{N}^{\boldsymbol{d}_{\boldsymbol{s}}}$ contain the number of dependent mean for each QOI. That is, for every $\boldsymbol{1} \leq \boldsymbol{l} \leq \boldsymbol{d}_{\boldsymbol{s}}$, $N_{\boldsymbol{l}}$ is the number of indices dependent on $\boldsymbol{l}$. For every $\boldsymbol{1} \leq \boldsymbol{k} \leq \boldsymbol{d}_{\boldsymbol{\mu}}$, if $\boldsymbol{l}$ is dependent on $\boldsymbol{k}$ then $\alpha_{\boldsymbol{l}}^{(\boldsymbol{s})}/N_{\boldsymbol{l}}$ is a candidate for $\alpha_{\boldsymbol{k}}^{(\boldsymbol{\mu})}$. We then set $\alpha_{\boldsymbol{k}}^{(\boldsymbol{\mu})}$ to the minimum among all candidates for $\alpha_{\boldsymbol{k}}^{(\boldsymbol{\mu})}$, assuming the candidate set is not empty.

While not theoretically required, our implementation practically requires that each index $\boldsymbol{1} \leq \boldsymbol{k} \leq \boldsymbol{d}_{\boldsymbol{\mu}}$ be a dependency of exactly one index $\boldsymbol{1} \leq \boldsymbol{l} \leq \boldsymbol{d}_{\boldsymbol{s}}$. To illustrate this requirement and the previously discussed dependency structure, let us consider the simple example with QOI $s_1 = \mu_1 + \mu_2$ and $s_2 = \mu_1 + \mu_3$. Suppose after some iteration that $s_1$ is sufficiently approximated and $s_2$ is not. Since $\mu_1$ is a dependency of $s_2$, we would like to continue sampling for $\mu_1$ to get a better approximation of $s_2$. However, changes in the bounds on $\mu_1$ will change the bounds on $s_1$ and may potentially make the approximation of $s_1$ become insufficient again. This out-of-sync nature of the sampling occurs because $\mu_1$ is a dependency of more than one QOI. To remedy this, let $\mu_4 = \mu_1$ and set $s_2 = \mu_4 + \mu_3$. Now each mean is a dependency of exactly one QOI as required. In practice this remedy amounts to copying integrand outputs at index $1$ into index $4$, thus increasing storage requirements in favor of potentially avoiding evaluating integrand outputs at index $2$ or $3$. This dependency structure is illustrated below with dependency function $D(b_1,b_2) = (b_1,b_1,b_2,b_2)$ where $b_1,b_2 \in \{\text{True},\text{False}\}$. 

\begin{figure}[!ht]
    \centering
\begin{tikzpicture}[main/.style = {draw, circle}] 
    \node at (0,0)  [main,line width=3pt] (1) {$\mu_1$}; 
    \node at (2,0)  [main,line width=3pt] (2) {$\mu_2$};
    \node at (4,0)  [main,line width=3pt] (3) {$\mu_3$};
    \node at (6,0)  [main,line width=3pt] (4) {$\mu_4$};
    \node at (1,1) [main,line width=3pt] (5) {$s_1$};
    \node at (5,1) [main,line width=3pt] (6) {$s_2$};
    \draw[->,line width=3pt] (5) -- (1);
    \draw[->,line width=3pt] (5) -- (2);
    \draw[->,line width=3pt] (6) -- (3);
    \draw[->,line width=3pt] (6) -- (4);
\end{tikzpicture}
\end{figure}

\Cref{SoRa_algo:MCStoppingCriterion} details the adaptive procedure developed throughout this chapter. Notice that the implementation does not require specifying $\boldsymbol{C}$ despite its use in deriving the necessary $\boldsymbol{C}^-$ and $\boldsymbol{C}^+$ inputs. The cost of this algorithm is concentrated on evaluating the function at an IID or LD sequence. In practice, the run time may be reduced through parallel and/or economic evaluation.

\begin{algorithm}[!ht]
    \fontsize{12}{10}\selectfont
    \caption{Adaptive (Quasi-)Monte Carlo for Array QOI}
    \label{SoRa_algo:MCStoppingCriterion}
    \begin{algorithmic}
        \Require $\boldsymbol{f}: (0,1)^d \to \mathbb{R}^{\boldsymbol{d}_{\boldsymbol{\mu}}}$, the integrand where $\boldsymbol{\mu} = \mathbb{E}[\boldsymbol{f}(\boldsymbol{X})]$ for $\boldsymbol{X} \sim \mathcal{U}[0,1]^d$.
        \Require $\boldsymbol{C}^-,\boldsymbol{C}^+: \mathbb{R}^{\boldsymbol{d}_{\boldsymbol{\mu}}} \times \mathbb{R}^{\boldsymbol{d}_{\boldsymbol{\mu}}} \to \mathbb{R}^{\boldsymbol{d}_{\boldsymbol{s}}}$, generalization of \eqref{SoRa_eq:C_minus_C_plus} so $\boldsymbol{\mu} \in [\boldsymbol{\mu}^-,\boldsymbol{\mu}^+]$ implies $\boldsymbol{s} \in [\boldsymbol{C}^-(\boldsymbol{\mu}^-,\boldsymbol{\mu}^+),\boldsymbol{C}^+(\boldsymbol{\mu}^-,\boldsymbol{\mu}^+)]$.
        \Require $\boldsymbol{\alpha}^{(\boldsymbol{s})} \in (0,1)^{\boldsymbol{d}_{\boldsymbol{s}}}$, the desired uncertainty thresholds on QOI bounds so the returned $[\boldsymbol{s}^-,\boldsymbol{s}^+]$ will satisfy $P(s_{\boldsymbol{l}} \in [s_{\boldsymbol{l}}^-,s_{\boldsymbol{l}}^+]) \geq 1-\alpha^{(\boldsymbol{s})}_{\boldsymbol{l}}$ for any $\boldsymbol{1} \leq \boldsymbol{l} \leq \boldsymbol{d}_{\boldsymbol{s}}$.
        \Require $h_{\boldsymbol{l},\varepsilon_l}: \mathbb{R} \to \mathbb{R}^+$ for $\boldsymbol{1} \leq \boldsymbol{l} \leq \boldsymbol{d}_{\boldsymbol{l}}$, see \eqref{SoRa_eq:h_abs_or_rel} or \eqref{SoRa_eq:h_abs_and_rel} for examples. Stopping flag at index $\boldsymbol{l}$ is set to $\text{True}$ when $\lvert s_{\boldsymbol{l}} - \hat{s}_{\boldsymbol{l}} \rvert \leq h_{\boldsymbol{l},\varepsilon_l}(s_{\boldsymbol{l}})$ for all $s_{\boldsymbol{l}} \in [s_{\boldsymbol{l}}^-,s_{\boldsymbol{l}}^+]$.
        \Require $\boldsymbol{D}: \{\text{True},\text{False}\}^{\boldsymbol{d}_{\boldsymbol{s}}} \to \{\text{True},\text{False}\}^{\boldsymbol{d}_{\boldsymbol{\mu}}}$, maps stopping flags on QOI $\boldsymbol{s}$ to stopping flags on mean $\boldsymbol{\mu}$. 
        \Require A scalar (Q)MC algorithm capable of producing bounds on a mean which holds with uncertainty below a specified threshold. See the vectorized-compatible (Q)MC methods in \Cref{SoRa_table:qmcpy_sc}.
        \Require A generator of IID or LD sequences compatible with the scalar MC or QMC algorithm. 
        \Require $m_1 \in \mathbb{N}$, where $2^{m_1}$ is the initial number of samples.
        \State $n_\text{start} \gets 1$ \Comment{lower index in node sequence(s)}
        \State $n_\text{end} \gets 2^{m_1}$ \Comment{upper index in node sequence(s)}
        \State $\boldsymbol{b}^{(\boldsymbol{\mu})} \gets \text{False}^{\boldsymbol{d}_{\boldsymbol{\mu}}}$ \Comment{stopping flags on the mean}
        \State $\boldsymbol{b}^{(\boldsymbol{s})} \gets \text{False}^{\boldsymbol{d}_{\boldsymbol{s}}}$ \Comment{stopping flags on QOI}
        \State Set $\boldsymbol{\alpha}^{(\boldsymbol{\mu})}$ based on $\boldsymbol{\alpha}^{(\boldsymbol{s})}$ using $\boldsymbol{D}$ \Comment{See the discussion in \Cref{sec:vsc_vectorized_implementation}.}
        \While{$b_{\boldsymbol{l}}^{(\boldsymbol{s})} = \text{False}$ for some $\boldsymbol{1} \leq \boldsymbol{l} \leq \boldsymbol{d}_{\boldsymbol{s}}$} \Comment{A QOI is not sufficiently bounded}
            \State Generate nodes from the IID or LD sequence(s) from index $n_\text{start}$ to $n_\text{end}$
            \State Evaluate $\boldsymbol{f}$ at the new nodes where $\boldsymbol{b}^{(\boldsymbol{\mu})}=\textbf{\text{False}}$ \Comment{may be done in parallel}
            \State Update $\boldsymbol{\mu}^-$ and $\boldsymbol{\mu}^+$ using the scalar (Q)MC algorithm where $\boldsymbol{b}^{(\boldsymbol{\mu})}=\textbf{\text{False}}$
            \State $[\boldsymbol{s}^-,\boldsymbol{s}^+] \gets \left[\boldsymbol{C}^-(\boldsymbol{\mu}^-,\boldsymbol{\mu}^+),\boldsymbol{C}^+(\boldsymbol{\mu}^-,\boldsymbol{\mu}^+)\right]$ 
            \State $b^{(\boldsymbol{s})}_{\boldsymbol{l}} \gets \text{Bool}\left(s_{\boldsymbol{l}}^+-s_{\boldsymbol{l}}^- < h_{\boldsymbol{l},\varepsilon_l}(s_{\boldsymbol{l}}^-)+h_{\boldsymbol{l},\varepsilon_l}(s_{\boldsymbol{l}}^+)\right),\quad \boldsymbol{1} \leq \boldsymbol{l} \leq \boldsymbol{d}_{\boldsymbol{s}}$ \Comment{\eqref{SoRa_eq:sc}}
            \State $\boldsymbol{b}^{(\boldsymbol{\mu})} \gets \boldsymbol{D}\left(\boldsymbol{b}^{(\boldsymbol{s})}\right)$
            \State $n_\text{start} \gets n_\text{end}+1$
            \State $n_\text{end} \gets 2n_\text{start}$
        \EndWhile
        \State $\hat{s}_{\boldsymbol{l}} \gets \frac{1}{2}[s_{\boldsymbol{l}}^-+s_{\boldsymbol{l}}^++h_{\boldsymbol{l},\varepsilon_l}(s_{\boldsymbol{l}}^-)-h_{\boldsymbol{l},\varepsilon_l}(s_{\boldsymbol{l}}^+)], \quad\boldsymbol{1} \leq \boldsymbol{l} \leq \boldsymbol{d}_{\boldsymbol{s}}$ \Comment{\eqref{SoRa_eq:shat_opt}}
        \\ \Return $\hat{\boldsymbol{s}},[\boldsymbol{s}^-,\boldsymbol{s}^+]$
    \end{algorithmic}
\end{algorithm}

\Subsection{Numerical Experiments} \label{sec:vsc_numerical_experiments_v}

We exemplify the capabilities of \Cref{SoRa_algo:MCStoppingCriterion} on problems from machine learning and global sensitivity analysis. As a simple example, let us compute the expected displacement and stress on a cantilever beam
\begin{equation*}
    \tD(T_1,T_2,T_3) = \frac{4l^3}{T_1 w t} \sqrt{\frac{T_2^2}{t^4} + \frac{T_3^2}{w^4}} \qqtqq{and} \tS(T_1,T_2,T_3) = 600\left(\frac{T_2}{wt^2} + \frac{T_3}{w^2t}\right)
    \label{SoRa_eq:cantilever_beam}
\end{equation*}
respectively as defined in \cite{VLSE}. Here $l=100$, $w=4$, $t=2$ are constants and $T_1$, $T_2$, $T_3$ are independent Gaussian random variables with mean and standard deviation as defined as in the \Cref{tab:cantilever_beam} below.
\begin{table}[!ht]
    \caption{Variable descriptions for the cantilever beam function.}
    \centering
    \begin{tabular}{r c c l}
        Gaussian & mean & standard deviation & description \\ 
        \hline
        $T_1$ & $2.9 \times 10^7$ & $1.45 \times 10^6$ & Young's modulus of beam material \\
        $T_2$ & $500$ & $100$ & horizontal load on beam \\
        $T_3$ & $1000$ & $100$ & vertical load on beam \\
        \hline 
    \end{tabular}
    \label{tab:cantilever_beam}
\end{table}

The following code snippet approximates $s_1 = \mu_1 = \mathbb{E}\left[\tD(\bT)\right]$ and $s_2 = \mu_2 = \mathbb{E}\left[\tS(\bT)\right]$ where $\boldsymbol{C}^-$, $\boldsymbol{C}^+$, and $\boldsymbol{D}$ are defaulted to the identity functions. Notice that \texttt{CustomFun} is initialized with $g=\left(\tD,\tS\right)$ as a function of the Gaussian random vector $\bT$ rather than the transformed integrand $f$, which is a function of $\boldsymbol{X} \sim \mathcal{U}(0,1)^3$. Here the default absolute or relative error metric \eqref{SoRa_eq:h_abs_or_rel} is used.

\lstinputlisting[style=Python]{snippets_qmc/cantilever_beam.py}

\Subsubsection{Vectorized Acquisition Functions for Bayesian Optimization}

Bayesian optimization (BO) is a sequential optimization technique that attempts to find the global maximum of a black box function $\varphi: (0,1)^{\nu} \to \mathbb{R}$. It is assumed that $\varphi$ is expensive to evaluate, so we must strategically select sampling locations that maximize some utility or acquisition function. At a high level, BO 
\begin{enumerate}
    \item Iteratively samples $\varphi$ at locations maximizing the acquisition function,
    \item Updates a Gaussian process surrogate based on these new observations,
    \item Updates an acquisition function based on the updated surrogate,
    \item repeats until the budget for sampling $\varphi$ has expired.
\end{enumerate}
Bayesian optimization is detailed in \cite{snoek2012practical} while Gaussian process regression is given individual treatment in \Cref{sec:gps} and more generally in \cite{rasmussen.gp4ml}

Concretely, suppose we have already sampled $\varphi$ at $\boldsymbol{z}_1,\dots,\boldsymbol{z}_{N} \in [0,1]^{\nu}$ to collect data $\mathcal{D}=\{(\boldsymbol{z}_i,y_i)\}_{i=1}^N$ where $y_i = \varphi(\boldsymbol{z}_i)$. BO may then fit a Gaussian process surrogate to data $\mathcal{D}$. The next $d$ sampling locations may then be chosen to maximize an acquisition function $\alpha: [0,1]^{(d,\nu)} \to \mathbb{R}$ which takes a matrix whose rows are the next sampling locations to a payoff value. Specifically, we set $\boldsymbol{z}_{N+1}, \dots, \boldsymbol{z}_{N+d}$ to be the rows of $\mathop{\text{argmax}}_{\boldsymbol{Z} \in [0,1]^{(d,\nu)}}\alpha(\boldsymbol{Z})$. Many acquisition functions may be expressed as an expectation of the form $\alpha(\boldsymbol{Z}) = \mathbb{E}\left[a(\boldsymbol{y}) \mid \boldsymbol{y} \sim \mathcal{N}\left(\boldsymbol{M},\boldsymbol{K}\right)\right]$ where $\boldsymbol{M} \in \mathbb{R}^{d}$ and  $\boldsymbol{K} \in \mathbb{R}^{(d,d)}$ are respectively the posterior mean and covariance of the Gaussian process at points $\boldsymbol{Z}$. Here we focus on the $q$-Expected Improvement ($q$EI) acquisition function which uses $a(\boldsymbol{y}) = \max_{1 \leq i \leq d} (y_i - y^*)_+$ where $y^*= \max\left(y_1,\dots,y_N\right)$ is the current maximum and $(\cdot)_+ = \max(\cdot,0)$. 

Suppose we choose the argument maximum  from among a finite set of $d$-sized batches $\boldsymbol{Z}_1,\dots,\boldsymbol{Z}_{d_{\boldsymbol{\mu}}} \in [0,1]^{(d,\nu)}$ so that $\boldsymbol{z}_{N+1}, \dots,\boldsymbol{z}_{N+d}$ are set to be the rows of 
$$\mathop{\text{argmax}}_{\boldsymbol{Z} \in \{\boldsymbol{Z}_1,\dots,\boldsymbol{Z}_{d_{\boldsymbol{\mu}}}\}}\alpha(\boldsymbol{Z}).$$
We may vectorize the acquisition function computations to 
$$s_i = \mu_i = \alpha(\boldsymbol{Z}_i) = \mathbb{E}\left[a\left(\boldsymbol{A}_i\boldsymbol{\Phi}^{-1}(\boldsymbol{X})+\boldsymbol{M}_i\right)\right]$$
for $i=1,\dots,d_\mu$ where $\boldsymbol{X} \sim \mathcal{U}(0,1)^d$ and $\boldsymbol{\Phi}^{-1}$ is the inverse CDF of the standard Gaussian taken elementwise. Now $\boldsymbol{M}_i$ and $\boldsymbol{K}_i = \boldsymbol{A}_i\boldsymbol{A}_i^\intercal$ are the posterior mean and covariance respectively of the Gaussian process at $\boldsymbol{Z}_i$ so that $\boldsymbol{A}_i\boldsymbol{\Phi}^{-1}(\boldsymbol{X})+\boldsymbol{M}_i \sim \mathcal{N}\left(\boldsymbol{M}_i,\boldsymbol{K}_i\right)$ for $i=1,\dots,d_{\boldsymbol{\mu}}$.

Since the quantity of interest is simply the vector of expectations, one may set $\boldsymbol{C}$, $\boldsymbol{C}^-$, $\boldsymbol{C}^+$, and $\boldsymbol{D}$ to appropriate identity functions. The process described above is visualized in \Cref{SoRa_fig:bo_qei} for $\nu=1$ and $d=2$. In this example, it may be more intuitive to make $\boldsymbol{d}_{\boldsymbol{\mu}}$ have length $2$ so the matrix of means reflects the grid of white dots in the right panel of \Cref{SoRa_fig:bo_qei}. 

Notice the optimal next points trade off exploiting parts of the domain where performant samples have already been observed and  exploring parts of the domain where the Gaussian process has large uncertainty.  As $d$ and/or $\nu$ grow, the number of candidates $d_\mu$ required for the discrete argument maximum to be a good approximation of the continuous argument maximum grows rapidly, thus rendering such non-greedy acquisition functions intractable for even moderate values of $d$ or $\nu$.

\begin{figure}[!ht]
    \centering
    \includegraphics[width=.9\textwidth]{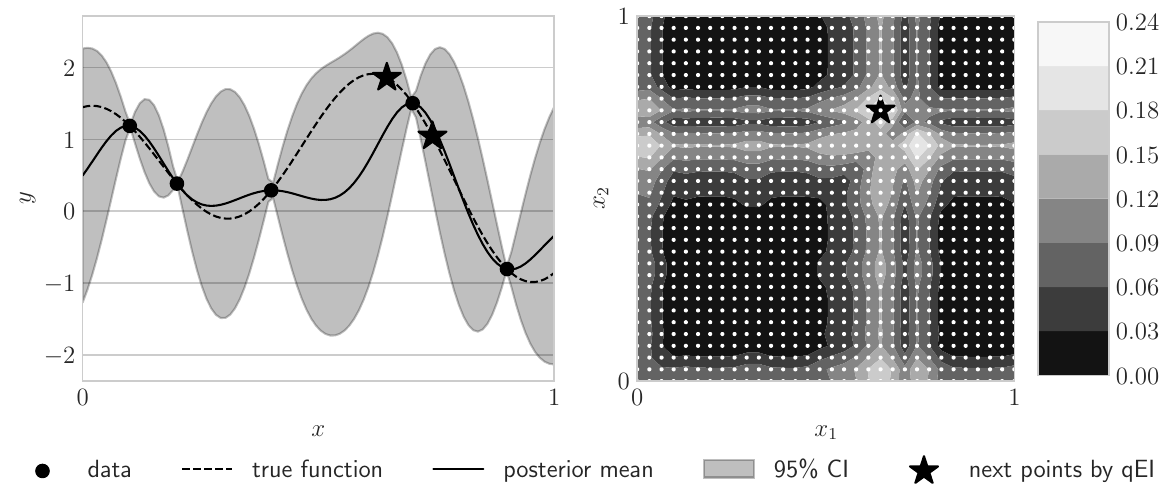}
    \caption{First, the true function has been sampled at the data points shown in the left panel. Next, a Gaussian process is fit to the data points to approximate the true function. The posterior mean and $95\%$ confidence interval (CI) of the Gaussian process are shown in the left panel. With $d=2$, a fine grid of candidates is chosen in $[0,1]^{2}$ and depicted in the right panel. The presented (Q)MC algorithm is then used to approximate the acquisition function value at each of the candidate grid points. These approximations are made into a contour plot in the right panel. The discrete argument maximum among these approximations on the fine grid is the next size $d$ batch of points by $q$-expected Improvement ($q$EI). These next points for sequential optimization are visualized in both the right and left panels by the black stars.}
    \label{SoRa_fig:bo_qei}
\end{figure}

\Subsubsection{Bayesian Posterior Mean}

The Bayesian framework combines prior knowledge of random parameters $\boldsymbol{\Theta} \in \mathbb{R}^{d_{\boldsymbol{s}}}$ with observational data and a likelihood function $\rho$ to construct a model-aware posterior distribution on $\boldsymbol{\Theta}$. Suppose we have a dataset of observations $\boldsymbol{y} = (y_1,\dots,y_{N})$ taken at IID locations $\boldsymbol{z}_1,\dots,\boldsymbol{z}_{N}$ respectively. Then Bayes' rule may be used to write the posterior density of $\boldsymbol{\Theta}$ as 
$$P\left(\boldsymbol{\theta} \mid \boldsymbol{y} \right) = \frac{P(\boldsymbol{y} \mid \boldsymbol{\theta}) P(\boldsymbol{\theta})}{P\left(\boldsymbol{y}\right)} = \frac{\prod_{i=1}^{N} \rho(y_i \mid \boldsymbol{\theta}) P(\boldsymbol{\theta})}{\mathbb{E}\left[\prod_{i=1}^{N} \rho(y_i \mid \boldsymbol{\theta})\right]}.$$
Here the expectation is taken with respect to the prior distribution on $\boldsymbol{\Theta}$ with density $P(\boldsymbol{\theta})$, and $P\left(\boldsymbol{y} \mid \boldsymbol{\theta} \right)$ is the likelihood density which factors into the product of likelihoods $\rho(y_i \mid \boldsymbol{\theta})$ since the observations are IID. 

A useful quantity of interest is the posterior mean of $\boldsymbol{\Theta}$. In this example, the QOI is the posterior mean $\boldsymbol{s}$ which may be written as the ratio of expectations via $\boldsymbol{s} = \mathbb{E}\left[\boldsymbol{\Theta} \mid \boldsymbol{y}\right] = \mathbb{E}\left[\boldsymbol{\Theta} \; \prod_{i=1}^{N} \rho(y_i \mid \boldsymbol{\Theta})\right]/\mathbb{E}\left[\prod_{i=1}^{N} \rho(y_i \mid \boldsymbol{\Theta})\right]$. As before, the expectations are taken with respect to the prior distribution on $\boldsymbol{\Theta}$. In the framework of this section, $\boldsymbol{\mu} \in \mathbb{R}^{(2, d_{\boldsymbol{s}})}$ where for $k=1,\dots,d_{\boldsymbol{s}}$ we have 
$$\mu_{0k} = \mathbb{E}\left[\Theta_k \prod_{i=1}^{N} \rho(y_i \mid \boldsymbol{\Theta})\right], \quad \mu_{1k} = \mathbb{E}\left[\prod_{i=1}^{N} \rho(y_i \mid \boldsymbol{\Theta})\right], \quad \text{and} \quad s_k = \frac{\mu_{0k}}{\mu_{1k}}.$$
Defining $\boldsymbol{C}^-$ and $\boldsymbol{C}^+$ follow from vectorizing the quotient forms in \Cref{SoRa_table:elementary_ops_Cpm} while the dependency function $\boldsymbol{D}: \{\text{True},\text{False}\}^{d_{\boldsymbol{s}}} \to \{\text{True},\text{False}\}^{(2, d_{\boldsymbol{s}})}$ is defined by stacking the row vectors of QOI flags on top of itself. 

While $\mu_{1k}$ is the same for all $1 \leq k \leq d_{\boldsymbol{s}}$, we opt to keep separate denominator estimates for each coefficient. Maintaining only a single estimate of the denominator would open up the possibility that a coefficient is sufficiently approximated in one iteration and then insufficiently approximated the next due to an unfavorable update in the denominator. In this way, maintaining separate estimates of the denominator favors economic evaluation in favor of increased storage.

Consider Bayesian logistic regression as a concrete example. Here the dataset contains observations $y_i \in \{0,1\}$ at locations $\boldsymbol{z}_i = \left(z_{i1},\dots,z_{i(d_{\boldsymbol{s}}-1)},1\right)$ for $i=1,\dots,N$ where the last value is $1$ to accommodate an intercept term. The sigmoid likelihood function $\rho(y_i = 1 \mid \boldsymbol{\theta}) = \frac{\exp(\boldsymbol{\theta}.\boldsymbol{z}_i)}{1+\exp(\boldsymbol{\theta}.\boldsymbol{z}_i)}$ may be used to yield
$$P(\boldsymbol{y} \mid \boldsymbol{\theta}) = \prod_{i=1}^N \left(\frac{\exp(\boldsymbol{\theta}.\boldsymbol{z}_i)}{1+\exp(\boldsymbol{\theta}.\boldsymbol{z}_i)}\right)^{y_i} \left(1-\frac{\exp(\boldsymbol{\theta}.\boldsymbol{z}_i)}{1+\exp(\boldsymbol{\theta}.\boldsymbol{z}_i)}\right)^{1-y_i} = \prod_{i=1}^N \frac{\exp(\boldsymbol{\theta}.\boldsymbol{z}_i)^{y_i}}{1+\exp(\boldsymbol{\theta}.\boldsymbol{z}_i)}.$$

The following code performs Bayesian logistic regression in \texttt{QMCPy} on the Haberman's Survival Dataset retrieved from the UCI Machine Learning Repository \cite{uci_ml_repo}. Here we use a normal prior $\boldsymbol{\Theta} \sim \mathcal{N}(\boldsymbol{M},\boldsymbol{K})$ so that
$$P(\boldsymbol{\theta}) = \frac{\exp\left(-(\boldsymbol{\theta}-\boldsymbol{M})^\intercal\boldsymbol{K}^{-1}(\boldsymbol{\theta}-\boldsymbol{M})/2\right)}{\sqrt{(2\pi)^d\lvert \det(\boldsymbol{K})\rvert }}.$$
This example uses the absolute and relative error metric \eqref{SoRa_eq:h_abs_and_rel}. Notice that the number of samples required to approximate each coefficient is different. 

\lstinputlisting[style=Python]{snippets_qmc/blr.py}

\Subsubsection{Sensitivity Indices}

Sensitivity analysis quantifies how uncertainty in a function output may be attributed to subsets of function inputs. Functional ANOVA (analysis of variance) decomposes a function $\varphi \in L^2(0,1)^\nu$ into the sum of orthogonal functions $(\varphi_u)_{u \subseteq {1:\nu}}$. Here $1:\nu=\{1,\dots,\nu\}$ denotes the set of all dimensions and $\varphi_u \in L^2(0,1)^{\lvert u \rvert}$ denotes a sub-function dependent only on inputs $\boldsymbol{x}_u = (x_j)_{j \in u}$ where $\lvert u \rvert$ is the cardinality of $u$. By construction, these sub-functions sum to the objective function so that $\varphi(\boldsymbol{x}) = \sum_{u \subseteq 1:\nu} \varphi_u(\boldsymbol{x}_u)$ \cite[Appendix A]{owen.mc_book_practical}. 
The orthogonality of sub-functions enables the variance of $\varphi$ to be decomposed into the sum of variances of sub-functions. Specifically, denoting the variance of $\varphi$ by $\sigma^2$, we may write $\sigma^2 = \sum_{u \subseteq 1:\nu} \sigma^2_u$ where $\sigma^2_u$ is the variance of sub-function $\varphi_u$. The sub-variance $\sigma_u$ quantifies the variance of $\varphi$ attributable to inputs $u \subseteq 1:\nu$.  The \emph{closed and total Sobol' indices}
\begin{align*}
    \underline{\tau}_u^2 &= \sum_{v \subseteq u} \sigma^2_v = \int_{[0,1]^{2\nu}} f(\boldsymbol{x})[f(\boldsymbol{x}_{u_j},\boldsymbol{z}_{-{u_j}})-f(\boldsymbol{z})]\mathrm{d}\boldsymbol{x}\mathrm{d}\boldsymbol{z} \quad \text{and}  \\ 
    \overline{\tau}_u^2 &= \sum_{v \cap u \neq \emptyset} \sigma^2_v = \frac{1}{2}\int_{[0,1]^{2\nu}} [f(\boldsymbol{z})-f(\boldsymbol{x}_u,\boldsymbol{z}_{-{u_j}})]^2\mathrm{d}\boldsymbol{x}\mathrm{d}\boldsymbol{z}
    \label{SoRa_eq:sobol_indices}
\end{align*}
quantify the variance attributable to subsets of $u$ and subsets containing $u$ respectively. Here the notation $(\boldsymbol{x}_{u},\boldsymbol{z}_{-u})$ denotes a point where the value at index $1 \leq j \leq \nu$ is $x_j$ if $j \in u$ and $z_j$ otherwise. The \emph{closed and total sensitivity indices} $\underline{s}_u = \underline{\tau}_u^2/\sigma^2$ and $\overline{s}_u = \overline{\tau}_u^2/\sigma^2$ respectively normalize the Sobol' indices to quantify the proportion of variance explained by a given subset of inputs. 

Suppose one is interested in computing the closed and total sensitivity indices of $\varphi$ at $u_1,\dots,u_c \subseteq 1:\nu$. Then we may choose the mean $\boldsymbol{\mu} \in \mathbb{R}^{(2, 3, c)}$ so that $\boldsymbol{\mu}_1,\boldsymbol{\mu}_2 \in \mathbb{R}^{(3,c)}$ contain values for the closed and total sensitivity indices respectively. Specifically,  $\boldsymbol{\mu}_{11},\boldsymbol{\mu}_{21} \in \mathbb{R}^c$ contain the closed and total Sobol' indices respectively while $\boldsymbol{\mu}_{i2}, \boldsymbol{\mu}_{i3} \in \mathbb{R}^c$ contain first and second moments respectively for any $i \in \{1,2\}$. For the QOI $\boldsymbol{s} \in \mathbb{R}^{(2, c)}$, we set $\boldsymbol{s}_1, \boldsymbol{s}_2 \in \mathbb{R}^c$ to contain the closed and total sensitivity indices respectively. 

Bounds may be propagated via $\boldsymbol{C}^-,\boldsymbol{C}^+:\mathbb{R}^{(2, 3, c)} \to \mathbb{R}^{(2, c)}$ defined for $i \in \{1,2\}$ and $j \in \{1,\dots,c\}$  by  
\begin{align*}
    C_{ij}^-(\boldsymbol{\mu}^-,\boldsymbol{\mu}^+) 
    = \begin{cases} 
        \text{clip}\left(\min\left(\frac{\mu_{i1j}^-}{\mu_{i3j}^+-\left(\mu_{i2j}^-\right)^2},\frac{\mu_{i1j}^-}{\mu_{i3j}^+-\left(\mu_{i2j}^+\right)^2}\right)\right), & \mu_{i3j}^- - \left(\mu_{i2j}^\pm\right)^2 >0 \\
        0, &\text{else}
     \end{cases} 
\end{align*}
with $C^+_{ij}(\boldsymbol{\mu}^-,\boldsymbol{\mu}^+)$ defined similarly and where $\text{clip}(\cdot) = \min(1,\max(0,\cdot))$ restricts values between 0 and 1. Above we have encoded the facts that sensitivity indices are between $0$ and $1$, the variance of $\varphi$ is non-negative, and Sobol' indices are non-negative. The dependency function $\boldsymbol{D}:\{\text{True},\text{False}\}^{(2, c)} \to \{\text{True},\text{False}\}^{(2, 3, c)}$ may be defined by broadcasting shapes so that for any $(1,1,1) \leq (i,j,k) \leq (2,3,c)$ we have $D_{ikj}(\boldsymbol{b}^{(\boldsymbol{s})}) = b_{ij}^{(\boldsymbol{s})}$. 

The \texttt{QMCPy} implementation further generalize to allow array objective functions $\boldsymbol{\varphi}: (0,1)^\nu \to \mathbb{R}^{\tilde{\boldsymbol{d}}_{\boldsymbol{\mu}}}$ so $\boldsymbol{d}_{\boldsymbol{\mu}} = (2,3,c,\tilde{\boldsymbol{d}}_{\boldsymbol{\mu}})$ and $\boldsymbol{d}_{\boldsymbol{s}} = (2,c,\tilde{\boldsymbol{d}}_{\boldsymbol{\mu}})$. Here the notation of nested vectors indicates that, for example, that  $(2,c,(5,6)) =(2,c,5,6)$. Also, notice that $d = 2 \nu$ in general. That is, the dimension of the node sequence is twice the size of the input dimension to $\varphi$.

Sensitivity indices present an illustrative case for computational complexity. Suppose the QMC algorithm takes $2^m$ total samples to accurately approximate all closed and total sensitivity indices for $u_1,\dots,u_c \subseteq 1:\nu$. Then the computational cost is $\$(\boldsymbol{\varphi})(2+c)2^m$ since every time our sensitivity index function is evaluated at $(\boldsymbol{x},\boldsymbol{z}) \in [0,1]^{2\nu}$ we must evaluate the users objective function at $\boldsymbol{x}$, $\boldsymbol{z}$, and $(\boldsymbol{x}_{u_j},\boldsymbol{z}_{-{u_j}})$ for $j=1,\dots,c$. If a user is only interested in approximating singleton sensitivity indices, $u_j = \{j\}$ for $j=1,\dots,\nu$, then it is possible to reduce the cost from $\$(\boldsymbol{\varphi})(2+\nu)2^m$ to $\$(\boldsymbol{\varphi})2^{m+1}$ using order $1$ replicated designs \cite{alex2008comparison,tissot2015randomized}. Such designs have been extended to  digital sequences in \cite{galquin.sobol_seq_replicated_designs_construction} and utilized for sensitivity index approximation in \cite{rugama.sobol_indices_error_estimation}.

A first example computes sensitivity indices of the Ishigami function \cite{ishigami1990importance} $g(\bT) = (1+bT_3^4)\sin(T_1)+a\sin^2(T_2)$ where $\bT \sim \mathcal{U}(-\pi,\pi)^3$ and $a=7$, $b=0.1$ as in \cite{crestaux2007polynomial,marrel2009calculations}. \Cref{SoRa_fig:ishigami} visualizes the resulting optimal approximations and QOI bounds which capture the exact sensitivity indices of the Ishigami function. 

\begin{figure}[!ht]
    \centering
    \includegraphics[width=.8\textwidth]{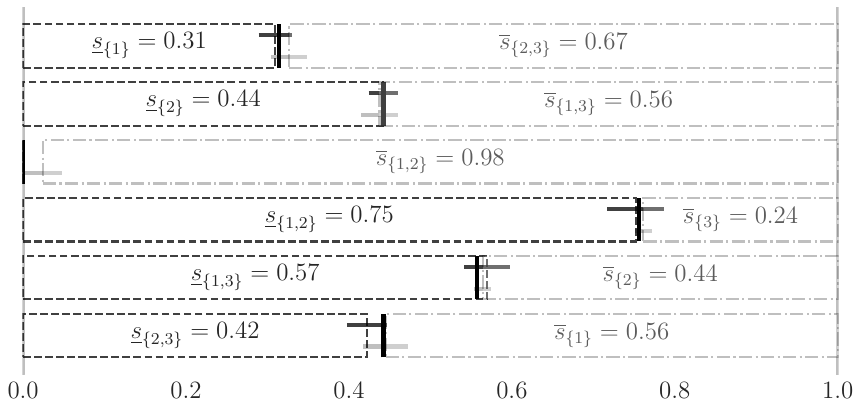}
    \caption{Approximate closed and total sensitivity indices for the Ishigami function illustrating the relationship $\underline{s}_u + \overline{s}_{u^c} = 1$ for all $u \subseteq 1:d$. In each row, the closed sensitivity index bar is  extended to the right from $0$ while the total sensitivity index bar is extended to the left from $1$. The bars should meet at the heavy vertical line for the analytic quantity of interest (QOI) $\underline{s}_u=1-\overline{s}_{u^c}$. The darker and lighter horizontal lines within each row depict the bounds for the closed and total sensitivity indices respectively. The heavy vertical line crossing both horizontal lines in each row indicates the true QOI is indeed captured in the bounds.}
    \label{SoRa_fig:ishigami}
\end{figure}

In another example, we compute sensitivity indices of a neural network classifier \cite{he2015delving} for the Iris dataset \cite{uci_ml_repo}. This example was inspired by a similar experiment in \cite{hoyt2021efficient}. The dataset consists of attributes sepal length (SL), sepal width (SW), petal length (PL), and petal width (PW), all in centimeters, from which an Iris is to be classified as either the \emph{setosa}, \emph{versicolor}, or \emph{virginica} species. We begin by fitting a neural network classifier that takes in input features and outputs a size $3$ vector of probabilities for each species summing to $1$. Taking the argument maximum among these three probabilities gives a species prediction. On a held out portion of the dataset, the neural network attains 98\% classification accuracy and may therefore be deemed a high quality surrogate for the true relation between input features and species classification. 

Our problem is to quantify, for each species, the variability in the classification probability attributed to a set of inputs. In other words, we would like to compute the sensitivity indices for each species probability. 
In the code below we compute all sensitivity indices with the help of the scikit-learn package \cite{scikit-learn} for splitting off a holdout dataset and training the neural network classifier. 
Here $\boldsymbol{d}_{\boldsymbol{\mu}} = (2,3,14,3)$ and $\boldsymbol{d}_{\boldsymbol{s}} = (2,14,3)$ since we have $3$ species classes, $14$ sensitivity indies of interest, and we are computing both the closed and total sensitivity indices. \Cref{SoRa_fig:nn_si} visualizes closed sensitivity index approximations. 

\lstinputlisting[style=Python]{snippets_qmc/nn.py}

\begin{figure}[!ht]
    \centering
    \includegraphics[width=.8\textwidth]{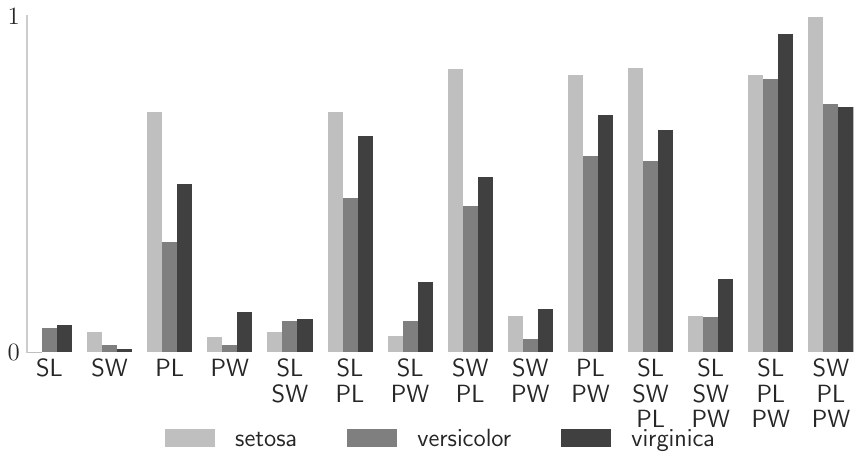}
    \caption{Closed sensitivity indices for neural network classification probability of each Iris species.}
    \label{SoRa_fig:nn_si}
\end{figure}

\Chapter{Fast Kernel Methods} \label{sec:fast_kernel_methods}

This chapter will discuss fast kernel methods enabled by pairing certain low-discrepancy sequences with special kernel forms. We will begin with some motivation and background in \Cref{sec:context_fast_kernel_methods}. Then, \Cref{sec:kernel_matrix_structure} will detail the kernel matrix structures which will arise when using certain low-discrepancy sequences paired with (digitally-)shift-invariant kernels. Specifically, we will use the rank-1 lattices from \Cref{sec:lattices} and base $2$ digital nets from \Cref{sec:dnets}. \Cref{sec:kernel_forms} will then detail the special multivariate kernel forms we use which are built from the univariate shift-invariant (SI) kernels presented in \Cref{sec:si_kernels} or the univariate digitally-shift-invariant (DSI) kernels presented in \Cref{sec:dsi_kernels}

\Section{Context} \label{sec:context_fast_kernel_methods}

Let us begin by defining Hermitian positive definite/semi-definite kernels and matrices. We will denote the domain by $\calX \subseteq \bbR^d$, and the fast kernel computations we present will use kernels defined over $\calX = [0,1)^d$. 

\begin{definition}[HPD and HPSD kernels] \label{def:spd_SPSD_kernels}
    A complex-valued kernel $K: \calX \times \calX \to \bbC$ is HPD (Hermitian positive definite) if it is conjugate symmetric,
    $$K(\bx,\bx') = \overline{K(\bx',\bx)} \qquad \forall\; \bx,\bx' \in \calX,$$
    and positive definite,
    \begin{equation}
        \sum_{i,i'=0}^{n-1} c_i \overline{c_{i'}} K(\bx_i,\bx_{i'}) > 0 \qquad \forall\; n \in \bbN, \quad c_0,\dots,c_{n-1} \in \bbC, \quad \bx_0,\dots,\bx_{n-1} \in \calX.
        \label{eq:spd_k_func_cond}
    \end{equation}
    A complex-valued kernel $K: \calX \times \calX \to \bbC$ is HPSD (Hermitian positive semi-definite) if $K$ is conjugate symmetric and \eqref{eq:spd_k_func_cond} holds with the positive constraint relaxed to a non-negative constraint. 
\end{definition}
\begin{definition}[HPD and HPSD matrices] \label{def:spd_SPSD_matrices}
    A complex-valued matrix $\mK \in \bbC^{n \times n}$ is HPD (Hermitian positive definite) if it is conjugate symmetric,
    $$\mK = \overline{\mK}^\intercal,$$
    and positive definite,
    \begin{equation}
        \overline{\bc}^\intercal \mK \bc > 0, \qquad \forall\; \bc \in \bbC^n.
        \label{eq:spd_k_mat_cond}
    \end{equation}
    A matrix $\mK \in \bbC^{n \times n}$ is HPSD (Hermitian positive semi-definite) if $\mK$ is conjugate symmetric and \eqref{eq:spd_k_mat_cond} holds with the positive constraint relaxed to a non-negative constraint. 
\end{definition}
Clearly all HPD kernels are HPSD kernels and all HPD matrices are HPSD matrices. When the kernel $K$ is real-valued HPSD or HPD we may call it SPD (symmetric positive definite) or SPSD (symmetric positive semi-definite) respectively. Similarly, when a matrix $\mK$ is real-valued HPSD or HPD we may call it SPD or SPSD respectively. Going forward, when we refer to a kernel it is assumed to be HPSD or SPSD appropriately. The following theorem from \cite{aronszajn.theory_of_reproducing_kernels} connects HPSD/SPSD kernels and RKHSs (reproducing kernel Hilbert spaces). 
\begin{theorem}[Moore--Aronszajn]
    \label{thm:moore_aronszajn}
    If $K$ is a HPSD or SPSD kernel, then there is a unique Hilbert space $H(K)$ for which $K$ is the reproducing kernel. 
\end{theorem}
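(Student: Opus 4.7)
The plan is to carry out the classical Moore--Aronszajn construction, building $H(K)$ explicitly as the completion of the span of kernel sections and then establishing uniqueness via the reproducing property.

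\textbf{Construction of a pre-Hilbert space.} First I would define
\[
H_0 := \mathrm{span}_{\bbC}\{K(\cdot,\bx) : \bx \in \calX\},
\]
whose elements are finite linear combinations $f = \sum_{i=0}^{n-1} c_i K(\cdot,\bx_i)$. On $H_0$ I would define the candidate inner product
\[
\langle f, g \rangle_{H(K)} := \sum_{i=0}^{n-1}\sum_{j=0}^{m-1} c_i \overline{c_j'}\, K(\bx_j', \bx_i)
\]
for $f = \sum_i c_i K(\cdot,\bx_i)$ and $g = \sum_j c_j' K(\cdot,\bx_j')$. The first step is to check that this is well-defined (independent of the chosen representation of $f$ and $g$), which follows by rewriting $\langle f,g\rangle$ as $\sum_j \overline{c_j'} f(\bx_j') = \sum_i c_i \overline{g(\bx_i)}$; the key observation is that this reformulation shows the value depends only on the functions $f, g$, not on the representing coefficients. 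Conjugate symmetry is immediate from the conjugate symmetry of $K$, and sesquilinearity is routine.

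\textbf{Positivity and the reproducing property.} Positivity $\langle f, f\rangle_{H(K)} \geq 0$ is exactly the HPSD condition \eqref{eq:spd_k_func_cond}. The reproducing property
\[
\langle f, K(\cdot,\bx)\rangle_{H(K)} = f(\bx), \qquad \forall f \in H_0, \ \bx \in \calX,
\]
follows by direct computation. From the reproducing property and Cauchy--Schwarz one obtains
\[
\lvert f(\bx)\rvert^2 \leq \langle f,f\rangle_{H(K)}\, K(\bx,\bx),
\]
which shows that $\langle f,f\rangle_{H(K)} = 0$ implies $f \equiv 0$, promoting the semi-inner product to a genuine inner product and making $H_0$ a pre-Hilbert space.

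\textbf{Completion.} The main technical step, and the one requiring the most care, is to complete $H_0$ to a Hilbert space $H(K)$ whose elements are genuine functions on $\calX$, rather than abstract equivalence classes of Cauchy sequences. Given a Cauchy sequence $(f_n) \subset H_0$, the pointwise bound
\[
\lvert f_n(\bx) - f_m(\bx)\rvert \leq \lVert f_n - f_m\rVert_{H(K)}\sqrt{K(\bx,\bx)}
\]
shows that $(f_n(\bx))$ is Cauchy in $\bbC$ for each $\bx$, so $f(\bx) := \lim_n f_n(\bx)$ exists pointwise. I would then define $H(K)$ as the set of such pointwise limits, verify that the limit is independent of the representing Cauchy sequence, and show the inner product extends by continuity. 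Verifying that the reproducing property and completeness persist under this identification is the delicate part, since one must rule out distinct Cauchy sequences producing the same pointwise limit with different norms, which again follows from the pointwise bound above.

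\textbf{Uniqueness.} Finally, suppose $\tilde H$ is another Hilbert space of functions on $\calX$ containing every $K(\cdot,\bx)$ and having $K$ as its reproducing kernel. The reproducing property forces $\langle K(\cdot,\bx), K(\cdot,\bx')\rangle_{\tilde H} = K(\bx',\bx) = \langle K(\cdot,\bx), K(\cdot,\bx')\rangle_{H(K)}$, so the inner products agree on $H_0$. Since $H_0$ is dense in both $\tilde H$ and $H(K)$ (the orthogonal complement of $H_0$ in any RKHS with kernel $K$ is trivial by the reproducing property), the two Hilbert spaces coincide as sets of functions with identical inner products.
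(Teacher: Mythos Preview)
Your proof sketch is the classical Moore--Aronszajn construction and is correct. The paper does not actually prove this theorem; it simply states the result and cites \cite{aronszajn.theory_of_reproducing_kernels}, so there is no in-paper proof to compare against, but your argument is precisely the standard one from Aronszajn's original work.
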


We will motivate our kernel computations with the following two examples. For these, suppose $H$ is an RKHS with SPD kernel $K: [0,1)^d \times [0,1)^d \to \bbR$, $\{\bx_i\}_{i=0}^{n-1} \subset [0,1]^d$ is a point set, and $\mK = (K(\bx_i,\bx_{i'}))_{i,i'=0}^{n-1}$ is a SPD kernel matrix. 

\begin{enumerate}
    \item \textbf{Discrepancy Computation} The error of the Quasi-Monte Carlo approximation in \eqref{eq:mc_approx} can be bounded by the product of two terms: a measure of variation of the function $f$ and a measure of the discrepancy of the point set $\{\bx_i\}_{i=0}^{n-1}$.  The most well known bound of this type is the Koksma--Hlawka inequality \cite{hickernell.generalized_discrepancy_quadrature_error_bound,dick.high_dim_integration_qmc_way,hickernell1999goodness,niederreiter.qmc_book}. The discrepancy of $\{\bx_i\}_{i=0}^{n-1}$ is frequently computed when designing and evaluating low-discrepancy point sets, see \cite{rusch2024message} for a newer idea in this area where good point sets are generated using neural networks trained with a discrepancy-based loss function.  
    
    Let us consider the more general setting of approximating the mean $\mu$ in \eqref{eq:mc_approx} by a weighted cubature rule $\sum_{i=0}^{n-1} \omega_i f(\bx_i)$. If we assume $f \in H$, then the discrepancy in the error bound of such a cubature rule takes the form
    $$\int_{[0,1]^d} \int_{[0,1]^d} K(\bx,\bx') \D \bx \D \bx' - 2 \sum_{i=0}^{n-1} \omega_i \int_{[0,1]^d} K(\bx',\bx_i) \D \bx' + \sum_{i,i'=0}^{n-1} \omega_i \omega_{i'} K(\bx_i,\bx_{i'})$$
    following \cite{hickernell.generalized_discrepancy_quadrature_error_bound}. Evaluating the discrepancy above requires computing the matrix-vector product $\mK \bomega$ where $\bomega = \{\omega_i\}_{i=0}^{n-1}$. Moreover, this discrepancy is minimized for a given point set by setting the weights to $\sbomega = \mK^{-1} \bK$ where $\bK = \left(\int_{[0,1]^d} K(\bx,\bx_i) \D \bx\right)_{i=0}^{n-1}$.
    \item \textbf{Kernel Interpolation} Suppose we would like to approximate $f: [0,1)^d \to \bbR$ given observations $\by = \{y_i\}_{i=0}^{n-1}$ of $f$ at $\{\bx_i\}_{i=0}^{n-1}$ satisfying $y_i = f(x_i)$ for $0 \leq i < n$. Then an optimal kernel interpolant approximates $f$ by $\hf \in H$ where 
    $$\hf(\bx) = \sum_{i=0}^{n-1} \omega_i K(\bx,\bx_i)$$
    and $\bomega = \mK^{-1} \by$. The above kernel interpolant may be reinterpreted as the posterior mean in Gaussian process regression, see \cite{rasmussen.gp4ml} or \Cref{sec:gps}. Fitting a Gaussian process regression model often includes optimizing a kernel's hyperparameters, which may also be done by computing  $\mK \bomega$ and $\mK^{-1} \by$.
\end{enumerate}

Underlying these problems, and many others, is the requirement to compute the matrix-vector product $\mK \by$ or solve the linear system $\mK^{-1}\by$. The standard cost of these computations are $\calO(n^2)$ and $\calO(n^3)$ respectively, typically using a Cholesky decomposition and back-substitutions to solving the linear system. One method to reduce these high costs is to induce structure into $\mK$. \cite{zeng.spline_lattice_digital_net,zeng.spline_lattice_error_analysis} proposed structure-inducing methods which use certain kernels $K$ and points $\{\bx_i\}_{i=0}^{n-1}$ which enable both computations to be done at only $\calO(n \log n)$ cost. Such pairings were recently studied in the context of fast Bayesian cubature \cite{rathinavel.bayesian_QMC_lattice,rathinavel.bayesian_QMC_sobol,rathinavel.bayesian_QMC_thesis}. At least two such pairings exist:
\begin{enumerate}
    \item When a rank-1 lattice point set $\{\bx_i\}_{i=0}^{n-1}$ in linear order is paired with a shift-invariant (SI) kernel, $\mK$ is circulant and thus diagonalizable by the fast Fourier transform (FFT) and inverse FFT (IFFT). Using lattices in the extensible radical inverse order requires using the bit-reversed fast transforms FFTBR and IFFTBR. \Cref{sec:fft_si_kernels_r1lattices} will detail the permuted circulant structures and the FFTBR/IFFTBR, then \Cref{sec:kernel_forms} and \Cref{sec:si_kernels} will study the commonly used class of SI kernels.
    \item When a base $2$ digital net $\{\bx_i\}_{i=0}^{n-1}$ in the extensible radical inverse order is paired with a digitally-shift-invariant (DSI) kernel, $\mK$ becomes RSBT (recursive symmetric block Toeplitz) and thus diagonalizable by the fast Walsh--Hadamard transform (FWHT). \Cref{sec:fwht_dsi_kernels_dnb2s} will detail RSBT structures and the FWHT, then \Cref{sec:kernel_forms} and \Cref{sec:dsi_kernels} will study DSI kernels, including new higher-order smoothness forms derived in this thesis.
\end{enumerate}
\Cref{tab:com_kernel_costs} compares these fast kernel methods against standard techniques in terms of both storage and computational costs. \Cref{fig:fast_transforms} gives schematics of the FFTBR and FWHT algorithms which enable these fast kernel methods.

\begin{table}[!ht]
    \caption{Comparison of storage and cost requirements for kernel methods. When evaluating (forming) $\mK$, we assume the cost of evaluating the kernel is $\calO(d)$. Factorization of the symmetric positive semi-definite Gram matrix $\mK$ is the cost of computing the eigendecomposition or Cholesky decomposition. The costs of matrix-vector multiplication $\mK \by$, solving a linear system $\mK^{-1} \by$ (when $\mK$ is symmetric positive definite), and computing the determinant $\lvert \mK \rvert$ are the costs after performing the decomposition. Both storage and kernel computation costs are greatly reduced by pairing certain low-discrepancy point sets with special shift-invariant (SI) or digitally-shift-invariant (DSI) kernels. These fast algorithms rely on the fast Fourier transform in bit-reversed order (FFTBR), its inverse (IFFTBR), and the fast Walsh--Hadamard transform (FWHT).}
    \centering
    \small
    \begin{tabular}{ccccc} 
        $\{\bx_i\}_{i=0}^{n-1}$ points  & $K$ structure & factor $\mK$ method & $\mK$ storage & form $\mK$ cost \\ 
        \hline 
        any & general SPD & Cholesky  & $\calO(n^2)$ & $\calO(n^2 d)$\\
        rank-1 lattice & SPD SI & (I)FFTBR & $\calO(n)$ & $\calO(n d)$\\ 
        base $2$ digital net & SPD DSI & FWHT & $\calO(n)$ & $\calO(n d)$ \\
        \hline 
        \hline
        $\{\bx_i\}_{i=0}^{n-1}$ & factor $\mK$ cost & $\mK \by$ cost & $\mK^{-1} \by$ cost  & $\lvert \mK \rvert$ cost  \\ 
        \hline 
        any  & $\calO(n^3)$ & $\calO(n^2)$ & $\calO(n^2)$ & $\calO(n^2)$ \\
        rank-1 lattice  & $\calO(n \log n)$ & $\calO(n \log n)$ & $\calO(n \log n)$ & $\calO(n \log n)$ \\ 
        base $2$ digital net & $\calO(n \log n)$ & $\calO(n \log n)$ & $\calO(n \log n)$ & $\calO(n \log n)$ \\
        \hline 
    \end{tabular}
    \label{tab:com_kernel_costs}
\end{table}

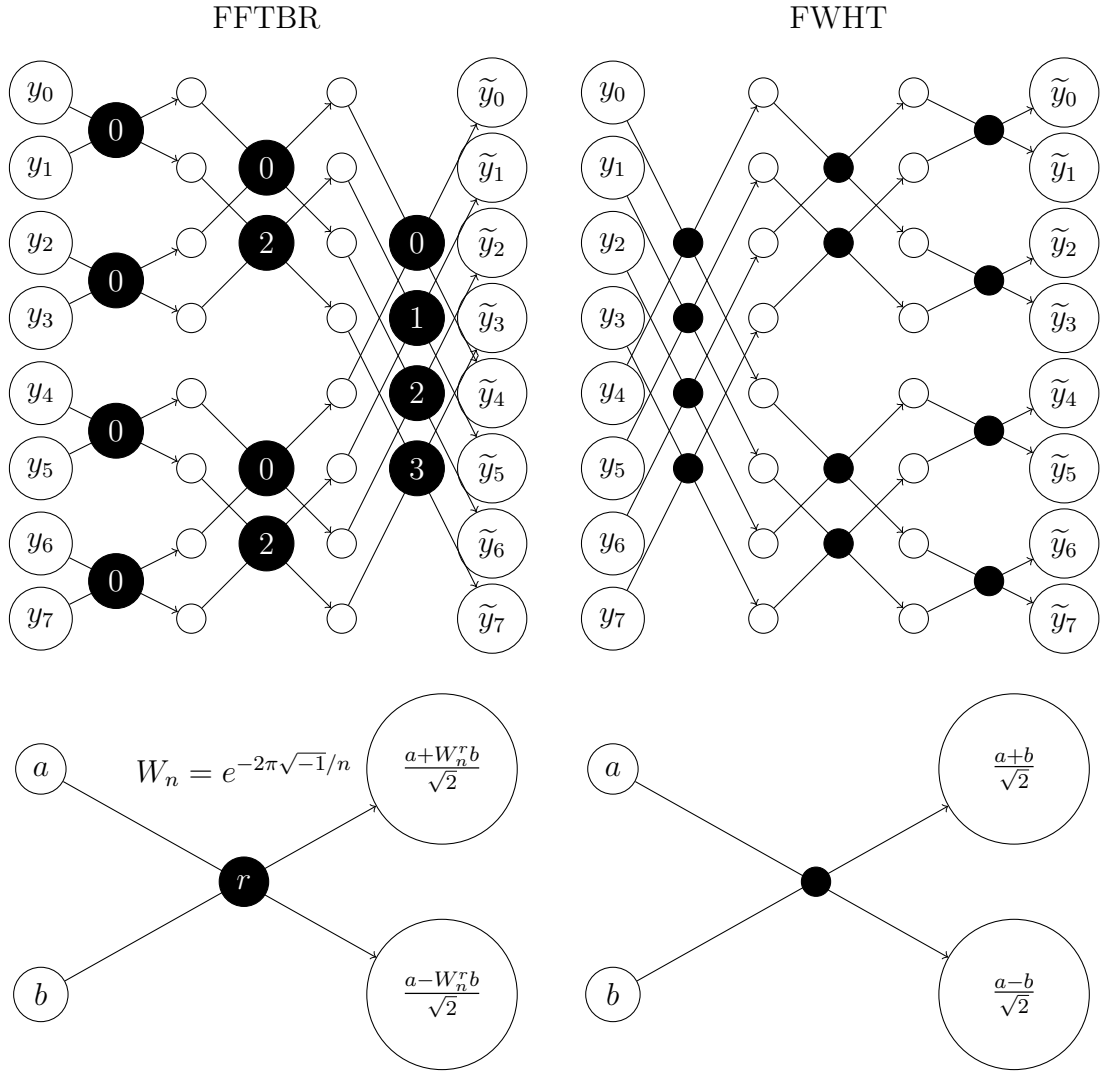
\begin{figure}[!ht]
    \centering
    \newcommand{\h}{1}
    \newcommand{\w}{2}
    \newcommand{\y}{4}
    \newcommand{\z}{6}
    \newcommand{\p}{5}
    \begin{subfigure}[t]{.49\textwidth}
    \begin{tikzpicture}
        \draw (\z/2,\p+8*\h) node{FFTBR};
        \draw ( 0,\p+7*\h) node[draw,circle](l00){$y_0$}; 
        \draw ( 0,\p+6*\h) node[draw,circle](l10){$y_1$}; 
        \draw ( 0,\p+5*\h) node[draw,circle](l20){$y_2$}; 
        \draw ( 0,\p+4*\h) node[draw,circle](l30){$y_3$}; 
        \draw ( 0,\p+3*\h) node[draw,circle](l40){$y_4$}; 
        \draw ( 0,\p+2*\h) node[draw,circle](l50){$y_5$}; 
        \draw ( 0,\p+1*\h) node[draw,circle](l60){$y_6$}; 
        \draw ( 0,\p+0*\h) node[draw,circle](l70){$y_7$}; 
        \draw (\w,\p+7*\h) node[draw,circle](l01){};
        \draw (\w,\p+6*\h) node[draw,circle](l11){};
        \draw (\w,\p+5*\h) node[draw,circle](l21){};
        \draw (\w,\p+4*\h) node[draw,circle](l31){};
        \draw (\w,\p+3*\h) node[draw,circle](l41){};
        \draw (\w,\p+2*\h) node[draw,circle](l51){};
        \draw (\w,\p+1*\h) node[draw,circle](l61){};
        \draw (\w,\p+0*\h) node[draw,circle](l71){};
        \draw (\y,\p+7*\h) node[draw,circle](l02){};
        \draw (\y,\p+6*\h) node[draw,circle](l12){};
        \draw (\y,\p+5*\h) node[draw,circle](l22){};
        \draw (\y,\p+4*\h) node[draw,circle](l32){};
        \draw (\y,\p+3*\h) node[draw,circle](l42){};
        \draw (\y,\p+2*\h) node[draw,circle](l52){};
        \draw (\y,\p+1*\h) node[draw,circle](l62){};
        \draw (\y,\p+0*\h) node[draw,circle](l72){};
        \draw (\z,\p+7*\h) node[draw,circle](l03){$\ty_0$};
        \draw (\z,\p+6*\h) node[draw,circle](l13){$\ty_1$};
        \draw (\z,\p+5*\h) node[draw,circle](l23){$\ty_2$};
        \draw (\z,\p+4*\h) node[draw,circle](l33){$\ty_3$};
        \draw (\z,\p+3*\h) node[draw,circle](l43){$\ty_4$};
        \draw (\z,\p+2*\h) node[draw,circle](l53){$\ty_5$};
        \draw (\z,\p+1*\h) node[draw,circle](l63){$\ty_6$};
        \draw (\z,\p+0*\h) node[draw,circle](l73){$\ty_7$};
        \draw[->] (l00) -- (l11); 
        \draw[->] (l10) -- (l01); 
        \draw[->] (l20) -- (l31); 
        \draw[->] (l30) -- (l21); 
        \draw[->] (l40) -- (l51); 
        \draw[->] (l50) -- (l41); 
        \draw[->] (l60) -- (l71); 
        \draw[->] (l70) -- (l61);
        \draw (\w/2,\p+0.5*\h) node[draw,circle,fill=black,text=white]{$0$};
        \draw (\w/2,\p+2.5*\h) node[draw,circle,fill=black,text=white]{$0$};
        \draw (\w/2,\p+4.5*\h) node[draw,circle,fill=black,text=white]{$0$};
        \draw (\w/2,\p+6.5*\h) node[draw,circle,fill=black,text=white]{$0$};
        \draw[->] (l01) -- (l22); 
        \draw[->] (l11) -- (l32); 
        \draw[->] (l21) -- (l02); 
        \draw[->] (l31) -- (l12); 
        \draw[->] (l41) -- (l62); 
        \draw[->] (l51) -- (l72); 
        \draw[->] (l61) -- (l42); 
        \draw[->] (l71) -- (l52);
        \draw (\w/2+\y/2,\p+1*\h) node[draw,circle,fill=black,text=white]{$2$};
        \draw (\w/2+\y/2,\p+2*\h) node[draw,circle,fill=black,text=white]{$0$};
        \draw (\w/2+\y/2,\p+5*\h) node[draw,circle,fill=black,text=white]{$2$};
        \draw (\w/2+\y/2,\p+6*\h) node[draw,circle,fill=black,text=white]{$0$};
        \draw[->] (l02) -- (l43); 
        \draw[->] (l12) -- (l53); 
        \draw[->] (l22) -- (l63); 
        \draw[->] (l32) -- (l73); 
        \draw[->] (l42) -- (l03); 
        \draw[->] (l52) -- (l13); 
        \draw[->] (l62) -- (l23); 
        \draw[->] (l72) -- (l33);
        \draw (\y/2+\z/2,\p+2*\h) node[draw,circle,fill=black,text=white]{$3$};
        \draw (\y/2+\z/2,\p+3*\h) node[draw,circle,fill=black,text=white]{$2$};
        \draw (\y/2+\z/2,\p+4*\h) node[draw,circle,fill=black,text=white]{$1$};
        \draw (\y/2+\z/2,\p+5*\h) node[draw,circle,fill=black,text=white]{$0$};
        \draw (0,3*\h) node[draw,circle](wit){$a$};
        \draw (0,0) node[draw,circle](wib){$b$};
        \draw (\y/3+2*\z/3,3*\h) node[draw,circle,minimum size=2cm](wot){$\frac{a+W_n^rb}{\sqrt{2}}$};
        \draw (\y/3+2*\z/3,0) node[draw,circle,minimum size=2cm](wob){$\frac{a-W_n^rb}{\sqrt{2}}$};
        \draw[->] (wit) -- (wob); 
        \draw[->] (wib) -- (wot); 
        \draw (.45*\z,1.5*\h) node[draw,circle,fill=black,text=white]{$r$};
        \draw (.45*\z,3*\h) node{$W_n = e^{-2 \pi \sqrt{-1}/n}$};
    \end{tikzpicture}
    \end{subfigure}
    \begin{subfigure}[t]{.49\textwidth}
    \begin{tikzpicture}
        \draw (\z/2,\p+8*\h) node{FWHT};
        \draw ( 0,\p+7*\h) node[draw,circle](l00){$y_0$};
        \draw ( 0,\p+6*\h) node[draw,circle](l10){$y_1$};
        \draw ( 0,\p+5*\h) node[draw,circle](l20){$y_2$};
        \draw ( 0,\p+4*\h) node[draw,circle](l30){$y_3$};
        \draw ( 0,\p+3*\h) node[draw,circle](l40){$y_4$};
        \draw ( 0,\p+2*\h) node[draw,circle](l50){$y_5$};
        \draw ( 0,\p+1*\h) node[draw,circle](l60){$y_6$};
        \draw ( 0,\p+0*\h) node[draw,circle](l70){$y_7$};
        \draw (\w,\p+7*\h) node[draw,circle](l01){};
        \draw (\w,\p+6*\h) node[draw,circle](l11){};
        \draw (\w,\p+5*\h) node[draw,circle](l21){};
        \draw (\w,\p+4*\h) node[draw,circle](l31){};
        \draw (\w,\p+3*\h) node[draw,circle](l41){};
        \draw (\w,\p+2*\h) node[draw,circle](l51){};
        \draw (\w,\p+1*\h) node[draw,circle](l61){};
        \draw (\w,\p+0*\h) node[draw,circle](l71){};
        \draw (\y,\p+7*\h) node[draw,circle](l02){};
        \draw (\y,\p+6*\h) node[draw,circle](l12){};
        \draw (\y,\p+5*\h) node[draw,circle](l22){};
        \draw (\y,\p+4*\h) node[draw,circle](l32){};
        \draw (\y,\p+3*\h) node[draw,circle](l42){};
        \draw (\y,\p+2*\h) node[draw,circle](l52){};
        \draw (\y,\p+1*\h) node[draw,circle](l62){};
        \draw (\y,\p+0*\h) node[draw,circle](l72){};
        \draw (\z,\p+7*\h) node[draw,circle](l03){$\ty_0$};
        \draw (\z,\p+6*\h) node[draw,circle](l13){$\ty_1$};
        \draw (\z,\p+5*\h) node[draw,circle](l23){$\ty_2$};
        \draw (\z,\p+4*\h) node[draw,circle](l33){$\ty_3$};
        \draw (\z,\p+3*\h) node[draw,circle](l43){$\ty_4$};
        \draw (\z,\p+2*\h) node[draw,circle](l53){$\ty_5$};
        \draw (\z,\p+1*\h) node[draw,circle](l63){$\ty_6$};
        \draw (\z,\p+0*\h) node[draw,circle](l73){$\ty_7$};
        \draw[->] (l00) -- (l41); 
        \draw[->] (l10) -- (l51); 
        \draw[->] (l20) -- (l61); 
        \draw[->] (l30) -- (l71); 
        \draw[->] (l40) -- (l01); 
        \draw[->] (l50) -- (l11); 
        \draw[->] (l60) -- (l21); 
        \draw[->] (l70) -- (l31);
        \draw (\w/2,\p+2*\h) node[draw,circle,fill=black,text=white]{};
        \draw (\w/2,\p+3*\h) node[draw,circle,fill=black,text=white]{};
        \draw (\w/2,\p+4*\h) node[draw,circle,fill=black,text=white]{};
        \draw (\w/2,\p+5*\h) node[draw,circle,fill=black,text=white]{};
        \draw[->] (l01) -- (l22); 
        \draw[->] (l11) -- (l32); 
        \draw[->] (l21) -- (l02); 
        \draw[->] (l31) -- (l12); 
        \draw[->] (l41) -- (l62); 
        \draw[->] (l51) -- (l72); 
        \draw[->] (l61) -- (l42); 
        \draw[->] (l71) -- (l52);
        \draw (\w/2+\y/2,\p+1*\h) node[draw,circle,fill=black,text=white]{};
        \draw (\w/2+\y/2,\p+2*\h) node[draw,circle,fill=black,text=white]{};
        \draw (\w/2+\y/2,\p+5*\h) node[draw,circle,fill=black,text=white]{};
        \draw (\w/2+\y/2,\p+6*\h) node[draw,circle,fill=black,text=white]{};
        \draw[->] (l02) -- (l13); 
        \draw[->] (l12) -- (l03); 
        \draw[->] (l22) -- (l33); 
        \draw[->] (l32) -- (l23); 
        \draw[->] (l42) -- (l53); 
        \draw[->] (l52) -- (l43); 
        \draw[->] (l62) -- (l73); 
        \draw[->] (l72) -- (l63);
        \draw (\y/2+\z/2,\p+0.5*\h) node[draw,circle,fill=black,text=white]{};
        \draw (\y/2+\z/2,\p+2.5*\h) node[draw,circle,fill=black,text=white]{};
        \draw (\y/2+\z/2,\p+4.5*\h) node[draw,circle,fill=black,text=white]{};
        \draw (\y/2+\z/2,\p+6.5*\h) node[draw,circle,fill=black,text=white]{};
        \draw (0,3*\h) node[draw,circle](wit){$a$};
        \draw (0,0) node[draw,circle](wib){$b$};
        \draw (\y/3+2*\z/3,3*\h) node[draw,circle,minimum size=2cm](wot){$\frac{a+b}{\sqrt{2}}$};
        \draw (\y/3+2*\z/3,0) node[draw,circle,minimum size=2cm](wob){$\frac{a-b}{\sqrt{2}}$};
        \draw[->] (wit) -- (wob); 
        \draw[->] (wib) -- (wot); 
        \draw (.45*\z,1.5*\h) node[draw,circle,fill=black,text=white]{};
    \end{tikzpicture}
    \end{subfigure}
    \caption{Schematics of the fast Fourier transform in bit-reversed order (FFTBR) and fast Walsh--Hadamard transform (FWHT). FFTBR is performed via an FFT decimation in time algorithm without the initial reordering of inputs. The inverse FFTBR (IFFTBR) may be performed by propagating $\{\ty_i\}_{i=0}^{2^m-1}$ to $\{y_i\}_{i=0}^{2^m-1}$ in the other direction through the FFTBR algorithm.}
    \label{fig:fast_transforms}
\end{figure}  

\Section{Kernel Matrix Structures} \label{sec:kernel_matrix_structure}

This section will discuss the nice structures which appear in the kernel matrix of pairwise kernel evaluations at two sets of points when special kernels and point sets are used. The structures we present are generalizations of structures studied elsewhere in the literature where we additionally support kernel matrices formed from low-discrepancy sequences of potentially different sizes and with potentially different randomizations. This will later be exploited in \Cref{sec:fast_gps} for fast Gaussian process regression and in \Cref{sec:fmtgps} fast multitask Gaussian process regression where kernel matrices with different sequences of different sizes will be used for each task. 

Throughout this section, let $\mI_r$ denote the size $2^r \times 2^r$ identity matrix for $r \in \bbN_0$. We will begin with a condition characterizing the fast transform matrices which we will later see are eigenvectors of nicely structured kernel matrices.

\begin{condition} \label{cond:fast_transform}
    A sequence of matrices $\left(\mV_m\right)_{m \in \bbN_0}$ satisfy this condition if for every $m \in \bbN_0$:
    \begin{enumerate}
        \item $\mV_m \in \bbC^{2^m \times 2^m}$.
        \item $\mV_m$ is symmetric, i.e., $\mV_m^\intercal = \mV_m$, and unitary, i.e., $\overline{\mV_m} \mV_m = \mI_m$.
        \item The zeroth column of $\mV_m$ is the constant vector $1/\sqrt{2^m}$.
        \item Computing $\mV_m \by$ costs $\calO(2^m m)$ for any $\by \in \bbC^{2^m}$. 
        \item For some $\bomega_m \in \bbR^{2^m}$, 
        $$\overline{\mV_{m+1}} = \frac{1}{\sqrt{2}}\begin{pmatrix} \overline{\mV_m} & \diag(\bw_m) \overline{\mV_m} \\ \overline{\mV_m} & - \diag(\bw_m) \overline{\mV_m} \end{pmatrix}.$$
    \end{enumerate}
\end{condition}

When point sets of different sizes are used, the resulting kernel matrices have blocks with nice structure. Here we prove a useful property for such kernel matrices.

\begin{lemma} \label{lemma:remove_krons}
    For $(\mV_m)_{m \in \bbN_0}$ satisfying \Cref{cond:fast_transform} and any $p,q \in \bbN_0$:
    \begin{enumerate}
        \item If $p \geq q$ and 
        $$\mK := (\mI_{p-q} \otimes \mV_q) \mLambda \overline{\mV_q} \in \bbR^{2^p \times 2^q}$$
        for some $2^{p-q} \times 1$ block matrix $\mLambda$ with $2^q \times 2^q$ diagonal blocks, then 
        $$\mK = \mV_p \tmLambda \overline{\mV_q}$$
        for some different $2^{p-q} \times 1$ block matrix $\tmLambda$ with $2^q \times 2^q$ diagonal blocks. 
        \item If $p \leq q$ and 
        $$\mK := \mV_p \mLambda (\mI_{q-p} \otimes \overline{\mV_q}) \in \bbR^{2^p \times 2^q}$$
        for some $1 \times 2^{q-p}$ block matrix $\mLambda$ with $2^p \times 2^p$ diagonal blocks, then 
        $$\mK = \mV_p \tmLambda \overline{\mV_q}$$
        for some different $1 \times 2^{q-p}$ block matrix $\tmLambda$ with $2^p \times 2^p$ diagonal blocks.
    \end{enumerate}  
\end{lemma}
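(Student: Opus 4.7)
The plan is to reduce both parts to a single structural identity. Since $\mV_q$ is symmetric and unitary (items 2 and 3 of \Cref{cond:fast_transform}), $\overline{\mV_q}\mV_q = \mI_q$. For part 1, left-multiplying the hypothesis by $\overline{\mV_p}$ and right-multiplying by $\mV_q$ produces
\begin{equation*}
\tmLambda \;:=\; \overline{\mV_p}\,\mK\,\mV_q \;=\; \overline{\mV_p}(\mI_{p-q}\otimes\mV_q)\,\mLambda,
\end{equation*}
so the representation $\mK = \mV_p\tmLambda\overline{\mV_q}$ follows from symmetry and unitarity of $\mV_p$. The remaining task is to show that $\overline{\mV_p}(\mI_{p-q}\otimes\mV_q)\mLambda$ is a $2^{p-q}\times 1$ block matrix with diagonal $2^q\times 2^q$ blocks.

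To establish this, I would iterate the recursion of item 5 of \Cref{cond:fast_transform}. Writing $k := p-q$ and factoring each step as
\begin{equation*}
\overline{\mV_{m+1}} \;=\; \tfrac{1}{\sqrt{2}}\,\mA_{m+1}\,(\mI_1\otimes\overline{\mV_m}),\qquad \mA_{m+1} \;:=\; \begin{pmatrix}\mI_m & \diag(\bw_m) \\ \mI_m & -\diag(\bw_m)\end{pmatrix},
\end{equation*}
induction yields $\overline{\mV_p} = 2^{-k/2}\mH_k(\mI_k\otimes\overline{\mV_q})$ where $\mH_k := \mA_{q+k}(\mI_1\otimes \mA_{q+k-1})\cdots(\mI_{k-1}\otimes\mA_{q+1})$. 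Then $\overline{\mV_p}(\mI_k\otimes\mV_q) = 2^{-k/2}\mH_k$, so it suffices to verify that $\mH_k$, viewed at granularity $2^q$, is a $2^k\times 2^k$ block matrix whose $2^q\times 2^q$ sub-blocks are all diagonal. This rests on two observations: each factor $\mI_{j-1}\otimes\mA_{q+k-j+1}$ has this structure because $\mA_r$ is built from $\mI$ and $\pm\diag$ blocks (all diagonal) and left-tensoring by $\mI_{j-1}$ replicates the pattern along a block diagonal; and products of such matrices preserve the structure because sums and products of diagonal matrices are diagonal. Right-multiplying by $\mLambda$ then preserves the property.

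Part 2 follows from part 1 by transposition and complex conjugation, using that each $\mV_m$ is symmetric. Interpreting the Kronecker factor as $\mI_{q-p}\otimes\overline{\mV_p}$ for dimensional consistency, taking the Hermitian conjugate of the hypothesis yields $\overline{\mK^\intercal} = (\mI_{q-p}\otimes\mV_p)\,\overline{\mLambda^\intercal}\,\overline{\mV_p}$, which is precisely the form handled by part 1 with $p$ and $q$ interchanged; note that $\overline{\mLambda^\intercal}$ is then a $2^{q-p}\times 1$ block matrix of diagonal $2^p\times 2^p$ blocks. Applying part 1 and undoing the Hermitian conjugate produces the claim. The main obstacle throughout is careful bookkeeping of nested block granularities: the recursion naturally operates at scale $2^{q+j-1}$ at level $j$, while the target claim is phrased uniformly at scale $2^q$, so I must verify that diagonality at coarser scales descends cleanly to the finer scale $2^q$.
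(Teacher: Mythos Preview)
Your proposal is correct and follows essentially the same approach as the paper: both reduce to showing that $\overline{\mV_p}(\mI_{p-q}\otimes\mV_q)$ is a $2^{p-q}\times 2^{p-q}$ block matrix with $2^q\times 2^q$ diagonal blocks, and both establish this by iterating the recursion in item 5 of \Cref{cond:fast_transform}. The paper runs the induction directly on $\mD_m := \overline{\mV_m}(\mI_{m-q}\otimes\mV_q)$, while you unfold the recursion fully into the product $\mH_k$ and then argue each factor (and hence the product) has diagonal sub-blocks; these are two presentations of the same argument, and your observation about the apparent dimensional typo in part 2 (reading $\overline{\mV_p}$ for $\overline{\mV_q}$) is consistent with how the paper handles it by simply invoking symmetry.
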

\begin{proof}
    Without loss of generality, assume $p \geq q$. The result follows if we can show $\overline{\mV_p} (\mI_{p-q} \otimes \mV_q)$ is a $2^{p-q} \times 2^{p-q}$ block matrix with $2^q \times 2^q$ diagonal blocks as this would imply 
    $$\mK = \mV_p \overline{\mV_p} (\mI_{p-q} \otimes \mV_q) \mLambda \overline{\mV_q} = \mV_p \tmLambda \overline{\mV_q}$$
    for some different $2^{p-q} \times 1$ block matrix $\tmLambda$ with $2^q \times 2^q$ diagonal blocks. We show this by induction. This clearly holds when $p=q$. Suppose this holds for all $p \in \{q,\dots,m\}$ with $m>q$, and denote the $2^{m-q} \times 2^{m-q}$ block matrix with $2^q \times 2^q$ diagonal blocks by $\mD = \overline{\mV_m} (\mI_{m-q} \otimes \mV_q)$. Then \Cref{cond:fast_transform} implies 
    \begin{align*}
        \overline{\mV_{m+1}} (\mI_{m+1-q} \otimes \mV_q) &= \frac{1}{\sqrt{2}} \begin{pmatrix} \overline{\mV_m} & \diag(\bw_m) \overline{\mV_m} \\ \overline{\mV_m} & - \diag(\bw_m) \overline{\mV_m} \end{pmatrix} \begin{pmatrix} \mI_{m-q} \otimes \mV_q & \\ & \mI_{m-q} \otimes \mV_q \end{pmatrix} \\
        &= \frac{1}{\sqrt{2}}  \begin{pmatrix} \mD & \diag(\bomega_m) \mD \\ \mD & - \diag(\bomega_m) \mD \end{pmatrix}
    \end{align*}
    which is a $2^{m+1-q} \times 2^{m+1-q}$ block matrix with $2^q \times 2^q$ diagonal blocks, so  the result follows.
\end{proof} 

Now, let us characterize a special collection of two point sets, a kernel, and a set of fast transform matrices. 

\begin{condition} \label{cond:special_pairing}
    A pair of sequences $(\bx_i)_{i \in \bbN_0},(\bx_i')_{i \in \bbN_0} \subset \calX$, a SPSD kernel $K: \calX \times \calX \to \bbR$, and a set of matrices $\left(\mV_m\right)_{m \in \bbN_0}$ satisfy this condition if \Cref{cond:fast_transform} holds, and for any $p,q,r,c \in \bbN_0$ it holds that  
    $$\left(K(\bx_i,\bx_k')\right)_{i=r2^p,k=c2^q}^{(r+1)2^p,(c+1)2^q} = \mV_p \mLambda \overline{\mV_q} \in \bbR^{2^p \times 2^q}$$
    for some $2^{p-\min(p,q)} \times 2^{q-\min(p,q)}$ block matrix $\mLambda$ with $2^{\min(p,q)} \times 2^{\min(p,q)}$ diagonal blocks. Note that $\mLambda$ implicitly depends on $p$, $q$, $r$, and $c$ while $\mV_p$ only depends on $p$. 
\end{condition}

The following theorem shows how to exploit such collections for fast kernel computations. 

\begin{theorem} \label{thm:fast_eigenvalue_computation} 
    Under \Cref{cond:special_pairing}, suppose
    $$\mK = \mV_p \mLambda \overline{\mV_q} \in \bbR^{2^p \times 2^q}$$
    for some $2^{p-\min(p,q)} \times 2^{q-\min(p,q)}$ block matrix $\mLambda$ with $2^{\min(p,q)} \times 2^{\min(p,q)}$ diagonal blocks. Then $\mK$ is fully specified by the $2^{\max(p,q)}$ entries of $\mLambda$ across all diagonal blocks. Moreover, these entries $\blambda \in \bbC^{2^{\max(p,q)}}$ can be computed at $\calO(2^{\max(p,q)} \max(p,q))$ cost with only $\calO(2^{\max(p,q)})$ storage using the following formulas. 
    \begin{enumerate}
        \item If $p \geq q$, then $\blambda = \sqrt{2^q} \; \overline{\mV_p} \mK_{:,0}$.
        \item If $q \geq p$, then $\blambda = \sqrt{2^p} \; \mV_q \mK_{0,:}^\intercal$.
    \end{enumerate} 
    Here $\mK_{:,0}$ and $\mK_{0,:}$ are the zeroth column and row of $\mK$ respectively.
\end{theorem}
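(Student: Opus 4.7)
The plan is to reduce the claim to a direct calculation that exploits two pieces of structure: the constant-zeroth-column property of $\mV_m$ (which by symmetry also forces its zeroth row to be constant) and the fact that $\mLambda$ consists of diagonal blocks. I will handle the counting/storage first, then derive each formula, treating the two cases in parallel.

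Assume $p \geq q$. Then $\mLambda$ is a $2^{p-q} \times 1$ stack of $2^q \times 2^q$ diagonal blocks, which together contain exactly $2^{p-q} \cdot 2^q = 2^{\max(p,q)}$ possibly nonzero entries on the block diagonals; all remaining entries are forced to be zero. Since $\mV_p$ and $\overline{\mV_q}$ are prescribed by \Cref{cond:special_pairing}, collecting these entries into a vector $\blambda \in \bbC^{2^{\max(p,q)}}$ specifies $\mK$ completely. The case $q \geq p$ is analogous.

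For the formulas, I would extract the zeroth column of $\mK$. Using the fact from \Cref{cond:fast_transform} that the zeroth column of $\mV_q$ is the constant vector $\bone_{2^q}/\sqrt{2^q}$, where $\bone_{2^q}$ denotes the all-ones vector, and that this entry is real, we obtain
\begin{equation*}
\mK_{:,0} \;=\; \mV_p \mLambda \left(\overline{\mV_q}\right)_{:,0} \;=\; \frac{1}{\sqrt{2^q}}\, \mV_p \mLambda \bone_{2^q}.
\end{equation*}
The key observation is that each $2^q \times 2^q$ diagonal block of $\mLambda$ maps $\bone_{2^q}$ to the vector of its own diagonal entries, so $\mLambda \bone_{2^q}$ is precisely the concatenation $\blambda$ of all block diagonals. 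Applying unitarity $\overline{\mV_p}\mV_p = \mI_p$ then yields $\blambda = \sqrt{2^q}\,\overline{\mV_p} \mK_{:,0}$. When $q \geq p$, the analogous computation applied to the zeroth row of $\mK$, together with the symmetry $\mV_q^\intercal = \mV_q$, gives $\blambda = \sqrt{2^p}\, \mV_q \mK_{0,:}^\intercal$.

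The cost and storage claims then follow immediately: only one column or row of $\mK$ (of length $2^{\max(p,q)}$) is required as input, the output $\blambda$ has the same length, and \Cref{cond:fast_transform} guarantees that applying $\mV_m$ or $\overline{\mV_m}$ to a vector costs $\calO(2^m m)$. The only mildly delicate step is the bookkeeping that verifies $\mLambda \bone_{2^q}$ really concatenates the diagonals in the expected order; this is routine indexing, but the formal proof should spell it out explicitly to ensure the final vector $\blambda$ is unambiguously identified with the block diagonals of $\mLambda$.
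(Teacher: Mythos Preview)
Your proposal is correct and follows essentially the same approach as the paper's proof: both arguments extract the zeroth column (or row) of $\mK$, use that the zeroth column of $\overline{\mV_q}$ is the constant vector $\bone/\sqrt{2^q}$, identify $\mLambda\bone$ with $\blambda$, and then invert using unitarity. Your write-up is a bit more explicit about the block-diagonal bookkeeping and the cost/storage claims, which the paper leaves implicit, but the underlying computation is identical.
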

\begin{proof}
    If $p \geq q$,
    $$\blambda = \mLambda \bone_q = \sqrt{2^q} \;\overline{\mV_p} \mV_p \mLambda \overline{\bv_1} = \sqrt{2^q} \;\overline{\mV_p} \mK_{:,0}.$$
    If $q \geq p$, then 
    $$\blambda^\intercal = (\bone_p \mLambda)^\intercal = \sqrt{2^p} (\bv_1^\intercal \mLambda \overline{\mV_q} \mV_q)^\intercal  = \sqrt{2^p} (\mK_{0,:} \mV_q)^\intercal = \sqrt{2^p} \mV_q \mK_{0,:}^\intercal.$$
\end{proof}

\begin{corollary} \label{corr:fast_gram_matrix_comps}
    Under \Cref{cond:special_pairing}, if $\mK = \mV_m \mLambda \mV_m \in \bbR^{2^m \times 2^m}$ then one can compute $\blambda = \sqrt{2^m}\; \overline{\mV_m} \mK_{:,0}$ at $\calO(2^m m)$ cost and with $\calO(2^m)$ storage. One may also perform a matrix-vector multiplication, solve a linear system (assuming $\mK$ SPD), and compute the determinant of $\mK$ at $\calO(2^mm)$ cost using 
    \begin{equation*}
        \mK \by = \mV_m(\tby \odot \blambda), \qquad \mK^{-1} \by = \mV_m (\tby ./ \blambda), \qquad \lvert \mK \rvert = \prod_{i=0}^{2^m-1} \lambda_i
    \end{equation*}
    where $\tby = \mV_m \tby$,  $\odot$ denotes the elementwise (Hadamard) product, and $./$ denotes the elementwise division.
\end{corollary}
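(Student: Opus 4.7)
The plan is to assemble the four claims directly from \Cref{cond:fast_transform} and \Cref{thm:fast_eigenvalue_computation} applied at $p=q=m$, noting that in this case the block matrix $\mLambda$ in \Cref{cond:special_pairing} consists of a single $2^m \times 2^m$ diagonal block, i.e., $\mLambda = \diag(\blambda)$. Throughout I will take $\tby := \overline{\mV_m}\by$; a single application of $\overline{\mV_m}$ (which inherits the fast-transform property from $\mV_m$ via conjugation) costs $\calO(2^m m)$ and requires only $\calO(2^m)$ storage by Condition~4.2.

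First, for the eigenvalue computation I would invoke \Cref{thm:fast_eigenvalue_computation} with $p=q=m$, which gives $\blambda = \sqrt{2^m}\,\overline{\mV_m}\mK_{:,0}$ at cost $\calO(2^m m)$ and storage $\calO(2^m)$; note that only the zeroth column of $\mK$ need ever be formed, avoiding the $\calO(2^{2m})$ storage of the full Gram matrix. Second, for the matrix--vector product I would write
\begin{equation*}
  \mK\by \;=\; \mV_m\,\mLambda\,\overline{\mV_m}\,\by \;=\; \mV_m\bigl(\diag(\blambda)\,\tby\bigr) \;=\; \mV_m(\tby \odot \blambda),
\end{equation*}
so that the work consists of one application of $\overline{\mV_m}$, one elementwise product, and one application of $\mV_m$, each $\calO(2^m m)$.

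Third, for the linear solve, SPD-ness of $\mK$ together with the fact that $\mV_m$ is unitary forces all $\lambda_i>0$, so $\mLambda$ is invertible with $\mLambda^{-1}=\diag(1/\blambda)$ and
\begin{equation*}
  \mK^{-1}\by \;=\; \mV_m\,\mLambda^{-1}\,\overline{\mV_m}\,\by \;=\; \mV_m(\tby\,./\,\blambda),
\end{equation*}
again at $\calO(2^m m)$ cost. Fourth, for the determinant I would use multiplicativity together with the unitarity identity $\overline{\mV_m}\mV_m = \mI_m$, which yields $\lvert\mV_m\rvert\cdot\lvert\overline{\mV_m}\rvert = 1$, so
\begin{equation*}
  \lvert\mK\rvert \;=\; \lvert\mV_m\rvert\cdot\lvert\mLambda\rvert\cdot\lvert\overline{\mV_m}\rvert \;=\; \prod_{i=0}^{2^m-1}\lambda_i,
\end{equation*}
computable in $\calO(2^m)$ once $\blambda$ is in hand.

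There is no genuine obstacle here: everything reduces to bookkeeping around the fast-transform diagonalization and its conjugate. The only subtlety worth being careful about is that the expression $\mV_m\mLambda\mV_m$ in the hypothesis must be read consistently with \Cref{thm:fast_eigenvalue_computation} (where the right factor is $\overline{\mV_q}$), so I would either add a half-sentence noting that real-valuedness of $\mK$ together with $\mV_m^\intercal = \mV_m$ makes the two factorizations coincide (absorbing any phase into $\mLambda$), or simply state the result for the factorization $\mK = \mV_m\mLambda\overline{\mV_m}$ inherited from \Cref{cond:special_pairing}. After that choice, each of the four bullets is a one-line computation as above.
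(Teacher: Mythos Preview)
Your proposal is correct and matches what the paper intends: the corollary is stated without proof as an immediate consequence of \Cref{thm:fast_eigenvalue_computation} at $p=q=m$, and your four one-line computations are precisely the routine bookkeeping that justification requires. Your flagging of the $\mV_m\mLambda\mV_m$ versus $\mV_m\mLambda\overline{\mV_m}$ discrepancy (and the typo $\tby=\mV_m\tby$, which should read $\tby=\overline{\mV_m}\by$) is also apt; reading the hypothesis as the factorization inherited from \Cref{cond:special_pairing} is the right resolution.
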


\Subsection{SI Kernels, Rank-1 Lattices, and the Fast Fourier Transform (FFT)} \label{sec:fft_si_kernels_r1lattices}

\begin{definition} \label{def:shift_invar}
    A function $K: [0,1)^d \times [0,1)^d \to \bbR$ is shift-invariant (SI) if for any $\bx,\bx' \in [0,1)^d$
    $$K(\bx,\bx') = Q(\bx - \bx' \bmod 1)$$
    for some $Q: [0,1)^d \to \bbR$. 
\end{definition}

Suppose $n=2^m$ for some $m \geq 0$ and let $W_m = \exp(-2 \pi \sqrt{-1}/2^m)$. Then we will denote the scaled Fourier matrix by $\overline{\mF_m} = \{W_m^{ij}/\sqrt{2^m}\}_{i,j=0}^{2^m-1}$ which satisfies $\mF_m \overline{\mF_m} = \mI_m$. We will call $\overline{\mF_m}$ and $\mF_m$ the DFT (discrete Fourier transform) and IDFT (inverse DFT) matrices respectively. Multiplication by $\overline{\mF_m}$ and $\mF_m$ can each be performed at $\calO(2^m m) = \calO(n \log n)$ cost using the FFT (fast Fourier transform) and IFFT (inverse FFT) respectively \cite{cooley1965algorithm}. This section will use shifted rank-1 lattices which were detailed in \Cref{sec:lattices}. 

\begin{lemma} \label{lemma:si_lattice_linear_circulant}
    The following holds for any shifted rank-1 lattices in linear order \\ $(\bx_i)_{i \in \bbN_0} := ((\bg i/2^m+\bDelta) \bmod 1)_{i \in \bbN_0}$ and $(\bx_i')_{i \in \bbN_0} := ((\bg i/2^m+\bDelta') \bmod 1)_{i \in \bbN_0}$  with common generating vector $\bg \in \bbN^d$ and any shifts $\bDelta,\bDelta' \in [0,1)^d$. When $K$ is a SPSD SI kernel, the following eigendecomposition holds
    $$(K(\bx_i,\bx_k'))_{i,k=0}^{2^m-1} = \mF_m \mLambda \overline{\mF_m}.$$
\end{lemma}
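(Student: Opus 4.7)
The plan is to exploit the shift-invariant structure of $K$ together with the integer generating vector $\bg$ to show the Gram matrix is circulant, and then invoke the well-known diagonalization of circulant matrices by the DFT.

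First, I would use Definition~\ref{def:shift_invar} to write
\[
K(\bx_i,\bx_k') = Q\bigl((\bx_i - \bx_k') \bmod 1\bigr) = Q\bigl((\bg(i-k)/2^m + \bDelta - \bDelta') \bmod 1\bigr),
\]
where the shift terms $\bDelta$ and $\bDelta'$ enter only through their difference. Defining $\tilde a_j := Q\bigl((\bg j/2^m + \bDelta - \bDelta') \bmod 1\bigr)$ for $j \in \bbZ$, and using that $\bg \in \bbN^d$ has integer entries, I would observe $\tilde a_{j+2^m} = \tilde a_j$ since adding $\bg$ inside the outer mod-$1$ operation is a no-op. Hence $\tilde a$ is $2^m$-periodic, and setting $a_r := \tilde a_r$ for $0 \leq r < 2^m$ gives
\[
K(\bx_i,\bx_k') = a_{(i-k) \bmod 2^m}, \qquad 0 \leq i,k < 2^m.
\]
This exhibits the Gram matrix as circulant in the usual sense, since its $(i,k)$ entry depends only on $(i-k)\bmod 2^m$.

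Next, I would show that every $2^m \times 2^m$ circulant matrix is diagonalized by $\mF_m$. Using the conventions $\mF_{m,ij} = W_m^{-ij}/\sqrt{2^m}$ and $\overline{\mF_{m,ij}} = W_m^{ij}/\sqrt{2^m}$ with $W_m = \exp(-2\pi\sqrt{-1}/2^m)$, a direct computation yields
\[
\bigl(\mF_m \mLambda \overline{\mF_m}\bigr)_{ik} = \frac{1}{2^m}\sum_{j=0}^{2^m-1} \lambda_j\, W_m^{j(k-i)}.
\]
Choosing $\lambda_j = \sum_{r=0}^{2^m-1} a_r W_m^{rj}$, i.e., $\blambda = \sqrt{2^m}\,\overline{\mF_m}\,\mK_{:,0}$ (matching the formula from Theorem~\ref{thm:fast_eigenvalue_computation} applied with $p=q=m$), the orthogonality identity $\sum_j W_m^{j(r - (i-k))} = 2^m\,\mathbf{1}_{r \equiv (i-k) \bmod 2^m}$ collapses the double sum to $a_{(i-k) \bmod 2^m} = K(\bx_i,\bx_k')$, giving the claimed factorization.

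The proof is almost entirely routine once the circulant reduction is in hand; the only slightly delicate point, and the one I would state most carefully, is verifying the $2^m$-periodicity of $\tilde a$. This uses two facts simultaneously: $\bg$ has integer entries (so $\bg \bmod 1 = \bzero$), and the outer mod-$1$ absorbs this cleanly regardless of the shifts $\bDelta,\bDelta' \in [0,1)^d$. This is also precisely the place where allowing distinct shifts $\bDelta \neq \bDelta'$ (as needed later for multitask GPs) causes no difficulty, since only the difference $\bDelta - \bDelta'$ appears. No assumption of SPSD-ness is actually needed to produce the factorization; SPSD only ensures the eigenvalues $\blambda$ are non-negative real numbers, which is consistent with interpreting $\mF_m \mLambda \overline{\mF_m}$ as a genuine eigendecomposition.
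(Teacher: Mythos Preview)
Your proof is correct and follows essentially the same route as the paper: both arguments use shift-invariance and the integrality of $\bg$ to show the Gram matrix entry depends only on $(i-k)\bmod 2^m$, hence is circulant, and then appeal to the DFT diagonalization of circulant matrices. The paper simply cites the circulant eigendecomposition as well known rather than writing out the explicit verification you give, but the substance is identical.
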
 
\begin{proof} 
    For this proof only, denote $\{\bx-\bx'\}_p := (\bx-\bx') \bmod p$. Then with $n=2^m$, 
    \begin{align*}
        \{\bx_i - \bx_k'\}_1 &= \{\{\bg i/n+\bDelta\}_1-\{\bg k/n+\bDelta'\}_1\}_1 \\
        & = \{\{\bg\{i-k\}_n/n + \bDelta\}_1-\{\bg 0/n+\bDelta'\}_1\}_1 = \{\bx_{\{i-k\}_n} - \bx_0'\}_1.
    \end{align*}
    Since $K$ is SI, we see $K(\bx_i,\bx_k') = K(\bx_{\{i-k\}_n},\bx_0')$, so the matrix is circulant. The eigendecomposition of a circulant matrix in terms of the DFT and IDFT matrices is well known, see for example \cite{gray2006toeplitz}.
\end{proof}

\Cref{fig:idea_points_kernels_structures} visualizes lattices in linear order, SI kernels, and the resulting circulant kernel matrices. Let $R_m(i)$ reverse the first $m$ bits of $0 \leq i < 2^m$ in base $b=2$, i.e., if $i=\sum_{k=0}^{m-1} \mi_k 2^k$ then $R(i) = \sum_{k=0}^{m-1} \mi_{m-k-1} 2^k$. Notice that $v(i)=R_m(i)/n$ where $v(i)$ is the van der Corput sequence defined in \eqref{eq:v}. Let $\mP_m$ be a permutation matrix with the property that $\mP_m^\intercal (y_i)_{i=0}^{2^m-1} = (y_{R(i)})_{i=0}^{2^m-1}$ bit-reverses the ordering. Let us denote $\overline{\mV_m} := \overline{\mF_m} \mP_m^\intercal$ and $\mV_m = \mP_m \mF_m$, which we will call the DFTBR (DFT in bit-reversed order) and inverse IDFTBR (inverse DFTBR) matrices respectively. Multiplication by $\overline{\mV_m}$ may be done by performing a bit-reversal at $\calO(2^m)$ cost and then applying the FFT algorithm at costs $\calO(2^m m)$. This algorithm, which we call the FFTBR (FFT bit-reversed), is visualized in \Cref{fig:fast_transforms}. Similarly, multiplication by $\mV_m$ may be done by applying the IFFT algorithm at $\calO(2^m m)$ cost and then performing a bit-reversal at $\calO(2^m)$ cost. We call this algorithm the IFFTBR (IFFT bit-reversed). Better yet, the FFTBR algorithm is equivalent to a decimation-in-time FFT without the initial bit-reversal step, and likewise for the IFFTBR. 

\begin{lemma}
    The DFTBR matrices $\left(\mV_m\right)_{m \in \bbN_0}$ satisfy \Cref{cond:fast_transform} with $\bw_m = \left(\exp(-\pi \sqrt{-1}i/2^m)\right)_{i=0}^{2^m-1}$.
\end{lemma}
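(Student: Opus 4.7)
The plan is to verify the five properties of Condition~\ref{cond:fast_transform} in sequence. Items 1, 3, and 4 follow directly from the definitions $\mV_m = \mP_m \mF_m$ and $\overline{\mV_m} = \overline{\mF_m} \mP_m^\intercal$. Item 2 reduces to the unitarity of $\mF_m$ together with $\mP_m^\intercal \mP_m = \mI_m$, which holds because the bit-reversal $R_m$ is an involution. The heart of the proof is Item 5, the block recursion expressing $\overline{\mV_{m+1}}$ in terms of $\overline{\mV_m}$; this reduces to a short entry-wise calculation once the arithmetic of bit-reversal and of the twiddle factors is in place.

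First I would record the entry-wise formula
$$(\overline{\mV_m})_{ij} \;=\; (\overline{\mF_m})_{i, R_m(j)} \;=\; \frac{W_m^{\,i R_m(j)}}{\sqrt{2^m}},$$
using that right multiplication by $\mP_m^\intercal$ column-permutes $\overline{\mF_m}$ by bit-reversal. With this, Item 1 is immediate from the shapes, Item 4 follows by composing the IFFT ($\calO(2^m m)$ cost) with a bit-reversal ($\calO(2^m)$ cost), and Item 2 follows since
$$\overline{\mV_m}\mV_m \;=\; \overline{\mF_m}\,\mP_m^\intercal \mP_m \,\mF_m \;=\; \overline{\mF_m}\mF_m \;=\; \mI_m.$$
For Item 3, the zeroth column of $\mF_m$ has every entry equal to $1/\sqrt{2^m}$, and any permutation leaves a constant vector unchanged, so the same is true of $\mV_m = \mP_m \mF_m$.

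For Item 5 I would use the following arithmetic identities. For $0 \leq j < 2^m$,
$$R_{m+1}(j) = 2 R_m(j), \qquad R_{m+1}(2^m + j) = 2 R_m(j) + 1,$$
obtained by observing that the high bit of the $(m{+}1)$-bit representation becomes the low bit under reversal. Together with $W_{m+1}^2 = W_m$, $W_{m+1}^{2^m} = -1$, $W_{m+1}^{2^{m+1}} = 1$, and $W_{m+1}^i = w_{m,i}$ from the construction of $\bw_m$, these identities are substituted into the entry formula above on each of the four $2^m \times 2^m$ sub-blocks of $\overline{\mV_{m+1}}$ indexed by $(i,j),\, (i, 2^m+j),\, (2^m+i, j),$ and $(2^m+i, 2^m+j)$ with $0 \leq i,j < 2^m$. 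Each block then collapses in one line; for example the top-right block is
$$\overline{\mV_{m+1}}_{i,\, 2^m+j} \;=\; \frac{W_{m+1}^{\,i(2R_m(j)+1)}}{\sqrt{2^{m+1}}} \;=\; \frac{W_m^{\,i R_m(j)}\, W_{m+1}^{\,i}}{\sqrt{2^{m+1}}} \;=\; \frac{w_{m,i}}{\sqrt{2}}\, \overline{\mV_m}_{ij},$$
while $W_{m+1}^{2^{m+1}} = 1$ gives bottom-left equal to $\overline{\mV_m}_{ij}/\sqrt{2}$ and $W_{m+1}^{2^m} = -1$ gives bottom-right equal to $-w_{m,i}\overline{\mV_m}_{ij}/\sqrt{2}$, matching the claimed block structure exactly.

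The only real obstacle is keeping the bookkeeping among $R_m$ versus $R_{m+1}$, powers of $W_m$ versus $W_{m+1}$, and the four block regions straight; once the identities above are in hand each block reduces to a single-line computation, and the recursion assembles immediately.
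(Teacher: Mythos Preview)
Your proposal is correct and takes essentially the same approach as the paper: both establish the entry formula $(\overline{\mV_m})_{ij} = W_m^{iR_m(j)}/\sqrt{2^m}$, invoke the identities $R_{m+1}(j) = 2R_m(j)$ and $R_{m+1}(2^m+j) = 2R_m(j)+1$ together with $W_{m+1}^2 = W_m$ and $W_{m+1}^{2^m} = -1$, and then read off the four blocks. Your write-up is in fact more explicit than the paper's, which dispatches Items 1--4 in a single sentence and only sketches the block calculation for Item 5.
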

\begin{proof}
    The first four properties immediately follow from the prior definitions and discussion. For the last property, notice that $\overline{\mV_{m+1}} = \left(W_{m+1}^{i R_{m+1}(j)}/\sqrt{2^{m+1}}\right)_{i,j=0}^{2^{m+1}-1}$. For $0 \leq j < 2^m$ we have $R_{m+1}(j) = 2 R_m(j)$ so $W^{i R_{m+1}(j)}_{m+1}= W^{i R_m(j)}_m$. For $2^m \leq j < 2^{m+1}$ we have $R_{m+1}(j) = 2R_m(j-2^m)+1$ so $W^{i R_{m+1}(j)}_{m+1} = W_m^{i R(j-2^m)} W_{m+1}^i$. For $0 \leq i < 2^m$ we have $W_{m+1}^{2^m+i} = -W_{m+1}^i$. Notice that $\bw_{m+1}$ is the $\alpha=2$ interlacing of $\bw_m$ and $W_{m+1} \bw_m$. 
\end{proof}

\begin{lemma} \label{lemma:si_gram_mat_struct}
    The following holds for any shifted rank-1 lattices in radical inverse order $(\bx_i)_{i \in \bbN_0} := ((\bg v(i)+\bDelta) \bmod 1)_{i \in \bbN_0}$ and $(\bx_i')_{i \in \bbN_0} := ((\bg v(i)+\bDelta') \bmod 1)_{i \in \bbN_0}$ with common generating vector $\bg \in \bbN^d$ and any shifts $\bDelta,\bDelta' \in [0,1)^d$. When $K$ is a SPSD SI kernel we have the following eigendecomposition
    $$(K(\bx_i,\bx_k'))_{i,k=0}^{2^m-1} = \mV_m \mLambda \overline{\mV_m}.$$
\end{lemma}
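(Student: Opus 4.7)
The plan is to reduce this lemma to the linear-order case already established in Lemma~\ref{lemma:si_lattice_linear_circulant} via a bit-reversal similarity transform. The starting observation is that for $0 \le i < 2^m$ the radical inverse equals $v(i) = R_m(i)/2^m$, so each radical-inverse-order lattice point agrees with a linear-order lattice point at the bit-reversed index:
\begin{equation*}
    \bx_i = (\bg\, v(i) + \bDelta) \bmod 1 = (\bg\, R_m(i)/2^m + \bDelta) \bmod 1 = \tbx_{R_m(i)},
\end{equation*}
where $\tbx_j := (\bg j/2^m + \bDelta) \bmod 1$, and likewise $\bx_k' = \tbx_{R_m(k)}'$ for the second shift $\bDelta'$.

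From here I would write the radical-inverse kernel matrix as a row-and-column permutation of the linear-order kernel matrix. Letting $\tmK = (K(\tbx_j, \tbx_l'))_{j,l=0}^{2^m-1}$, the identity $(\mP_m^\intercal)_{ij} = \delta_{j,R_m(i)}$ gives $\mK = \mP_m^\intercal\, \tmK\, \mP_m$. A short computation using that $R_m$ is an involution shows $\mP_m = \mP_m^\intercal$: indeed $(\mP_m)_{ij} = \delta_{i,R_m(j)} = \delta_{R_m(i),j} = \delta_{j,R_m(i)} = (\mP_m^\intercal)_{ij}$.

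Now invoke Lemma~\ref{lemma:si_lattice_linear_circulant}: since $\tbx_j$ and $\tbx_l'$ are shifted rank-$1$ lattices in linear order with common generator $\bg$, $\tmK = \mF_m\,\mLambda\,\overline{\mF_m}$ for some diagonal $\mLambda \in \bbC^{2^m\times 2^m}$. Substituting and grouping factors using $\mP_m = \mP_m^\intercal$,
\begin{equation*}
    \mK = \mP_m^\intercal \mF_m\, \mLambda\, \overline{\mF_m} \mP_m = (\mP_m \mF_m)\, \mLambda\, (\overline{\mF_m}\, \mP_m^\intercal) = \mV_m\, \mLambda\, \overline{\mV_m},
\end{equation*}
by the definitions $\mV_m = \mP_m \mF_m$ and $\overline{\mV_m} = \overline{\mF_m}\, \mP_m^\intercal$ introduced just before the statement. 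This yields the claimed eigendecomposition and matches the block-diagonal form required by Condition~\ref{cond:special_pairing} in the special case $p=q=m$, where the block structure collapses to a single diagonal block. There is no real obstacle: the only step that deserves care is verifying that the bit-reversal matrix is symmetric, which is immediate from $R_m$ being an involution.
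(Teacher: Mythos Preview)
Your proof is correct and follows essentially the same route as the paper's: reduce to the linear-order case of Lemma~\ref{lemma:si_lattice_linear_circulant} via the bit-reversal permutation $\mP_m$, then absorb $\mP_m$ into the Fourier matrices using the definitions $\mV_m = \mP_m \mF_m$ and $\overline{\mV_m} = \overline{\mF_m}\,\mP_m^\intercal$. The paper's one-line argument leaves the symmetry $\mP_m = \mP_m^\intercal$ implicit, whereas you make it explicit via the involution property of $R_m$; this is a helpful clarification but not a different idea.
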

\begin{proof} 
    \Cref{lemma:si_lattice_linear_circulant} and the definition of $\mP_m$ directly imply $\mP_m^\intercal \mK \mP_m = \mF_m \mLambda \overline{\mF_m}$ from which the theorem follows.  
\end{proof} 

\begin{lemma} \label{lemma:lattice_si_blocked_mat_kron}
    The following holds for any shifted rank-1 lattices in radical inverse order $(\bx_i)_{i \in \bbN_0} := ((\bg v(i)+\bDelta) \bmod 1)_{i \in \bbN_0}$ and $(\bx_i')_{i \in \bbN_0} := ((\bg v(i)+\bDelta') \bmod 1)_{i \in \bbN_0}$  with common generating vector $\bg \in \bbN^d$ and any shifts $\bDelta,\bDelta' \in [0,1)^d$. For any $m,r,c \in \bbN_0$, when $K$ is a SPSD SI kernel we have the following eigendecomposition
    $$(K(\bx_i,\bx_k'))_{i=r2^m,k=c2^m}^{(r+1)2^m-1,(c+1)2^m-1} = \mV_m \mLambda \overline{\mV_m}$$
\end{lemma}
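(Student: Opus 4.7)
The plan is to observe that each block submatrix is itself a kernel matrix of the form covered by \Cref{lemma:si_gram_mat_struct}, but associated with two \emph{shifted} rank-1 lattices having the same generating vector $\bg$ and potentially different shifts. Once this reduction is made, the eigendecomposition follows immediately from the already-established \Cref{lemma:si_gram_mat_struct}.

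The key identity is a splitting of the radical inverse $v$ under the block decomposition $i = r2^m + i'$ with $0 \leq i' < 2^m$. Writing out the base-2 expansions, the first $m$ digits of $r2^m + i'$ coincide with those of $i'$ and the remaining digits come from $r$, so
\begin{equation*}
    v(r 2^m + i') \;=\; v(i') \;+\; 2^{-m} v(r).
\end{equation*}
Hence for $0 \leq i' < 2^m$,
\begin{equation*}
    \bx_{r2^m + i'} \;=\; \bigl(\bg\, v(i') + \tbDelta\bigr) \bmod 1,
    \qquad \tbDelta \;:=\; \bigl(\bg\, 2^{-m} v(r) + \bDelta\bigr) \bmod 1,
\end{equation*}
and analogously $\bx_{c2^m + k'}' = (\bg\, v(k') + \tbDelta') \bmod 1$ with $\tbDelta' := (\bg\, 2^{-m} v(c) + \bDelta') \bmod 1$. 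In other words, the two sub-sequences indexed by the $r$-th and $c$-th blocks of size $2^m$ are themselves shifted rank-1 lattices of length $2^m$ in radical inverse order with common generating vector $\bg$ but with modified shifts $\tbDelta$ and $\tbDelta'$.

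With this observation in hand, the block of interest is exactly
\begin{equation*}
    \bigl(K(\bx_i, \bx_k')\bigr)_{i = r2^m, k = c2^m}^{(r+1)2^m - 1,\, (c+1)2^m - 1}
    \;=\; \bigl(K(\bx_{r 2^m + i'}, \bx_{c 2^m + k'}')\bigr)_{i', k' = 0}^{2^m - 1},
\end{equation*}
which, by the preceding reduction, is the kernel matrix of two size-$2^m$ shifted rank-1 lattices in radical inverse order sharing the generating vector $\bg$. Applying \Cref{lemma:si_gram_mat_struct} directly to this matrix yields $\mV_m \mLambda \overline{\mV_m}$ for some diagonal $\mLambda$, completing the proof. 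No step is particularly an obstacle here; the only subtlety is to verify cleanly that the shift-invariant property of $K$ together with the addition formula for $v$ makes the block indexing irrelevant apart from redefining the shifts, after which the block structure is inherited from \Cref{lemma:si_gram_mat_struct}.
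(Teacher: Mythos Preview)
Your proof is correct and follows essentially the same approach as the paper's: both arguments observe that the block indexed by $[r2^m,(r+1)2^m)$ is itself a shifted rank-1 lattice in radical inverse order with a modified shift, and then invoke \Cref{lemma:si_gram_mat_struct}. The only cosmetic difference is that you write the additive constant as $2^{-m}v(r)$ while the paper writes it as $v(r2^m)$; these are equal, so the arguments are identical in substance.
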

\begin{proof} 
    For $r 2^m \leq i < (r+1)2^m$ we have $v(i) = v(i-r2^m) + v(r2^m)$, so $(\bx_i)_{i=r2^m}^{(r+1)2^m-1} = ((\bg v(i)+\hbDelta) \bmod 1)_{i=0}^{2^m-1}$ where $\hbDelta = (\bDelta + \bg v(r2^m)) \bmod 1$, i.e., $(\bx_i)_{i=r2^m}^{(r+1)2^m-1}$ is $(\bx_i)_{i=0}^{2^m-1}$ after a shift by $\hbDelta$ and modulo 1. Similarly, for $c 2^m \leq i < (c+1)2^m$ we have $v(i) = v(i-c2^m) + v(c2^m)$, so $(\bx_i')_{i=c2^m}^{(c+1)2^m-1} = ((\bg v(i)+\tbDelta) \bmod 1)_{i=0}^{2^m-1}$ where $\tbDelta = (\bDelta' + \bg v(c2^m)) \bmod 1$, i.e., $(\bx_i')_{i=c2^m}^{(c+1)2^m-1}$ is $(\bx_i')_{i=0}^{2^m-1}$ after a shift by $\tbDelta$ and modulo 1. The result then follows from \Cref{lemma:si_gram_mat_struct}.   
\end{proof} 

\begin{theorem} \label{thm:lattice_si_special_pairing}
    \Cref{cond:special_pairing} is satisfied for DFTBR matrices $\left(\mV_m\right)_{m \in \bbN_0}$, a SPSD SI kernel $K: [0,1)^d \times [0,1)^d \to \bbR$, and any shifted rank-1 lattices in radical inverse order $(\bx_i)_{i \in \bbN_0} := ((\bg v(i)+\bDelta) \bmod 1)_{i \in \bbN_0}$ and $(\bx_i') := ((\bg v(i)+\bDelta') \bmod 1)_{i \in \bbN_0}$  with common generating vector $\bg \in \bbN^d$ and any shifts $\bDelta,\bDelta' \in [0,1)^d$.
\end{theorem}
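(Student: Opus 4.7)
The plan is to reduce \Cref{cond:special_pairing} to the square-block case already established in \Cref{lemma:lattice_si_blocked_mat_kron} by partitioning the off-square $2^p \times 2^q$ block into an array of $2^{\min(p,q)} \times 2^{\min(p,q)}$ subblocks, and then to convert the resulting Kronecker structure into a single fast-transform decomposition via \Cref{lemma:remove_krons}. Note that the preceding lemma already verifies that the DFTBR matrices $(\mV_m)_{m \in \bbN_0}$ satisfy \Cref{cond:fast_transform}, so only the block-eigenstructure claim remains.

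Fix $p,q,r,c \in \bbN_0$ and denote $\mK := \bigl(K(\bx_i,\bx_k')\bigr)_{i=r2^p,k=c2^q}^{(r+1)2^p-1,(c+1)2^q-1}$. Without loss of generality, assume $p \geq q$; the opposite case is handled symmetrically (by partitioning horizontally instead of vertically) and invokes the $p \leq q$ branch of \Cref{lemma:remove_krons}. Partition $\mK$ into $2^{p-q}$ vertically stacked square subblocks of size $2^q \times 2^q$, where the $s^\mathrm{th}$ subblock (for $0 \leq s < 2^{p-q}$) has row-start index $(r 2^{p-q} + s) 2^q$ and column-start index $c \cdot 2^q$. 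Since both starting indices are integer multiples of $2^q$, \Cref{lemma:lattice_si_blocked_mat_kron} applies to each subblock individually, yielding an eigendecomposition
$$\bigl(K(\bx_i,\bx_k')\bigr)_{i=(r2^{p-q}+s)2^q,k=c2^q}^{(r2^{p-q}+s+1)2^q-1,(c+1)2^q-1} = \mV_q \mLambda_s \overline{\mV_q}$$
for some $2^q \times 2^q$ diagonal matrix $\mLambda_s$.

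Stacking these factorizations gives
$$\mK = (\mI_{p-q} \otimes \mV_q)\,\mLambda\,\overline{\mV_q},$$
where $\mLambda$ is the $2^{p-q} \times 1$ block matrix whose diagonal blocks are $\mLambda_0, \ldots, \mLambda_{2^{p-q}-1}$. This is exactly the hypothesis of the first part of \Cref{lemma:remove_krons}, so there exists a $2^{p-q} \times 1$ block matrix $\tmLambda$ with $2^q \times 2^q$ diagonal blocks such that $\mK = \mV_p\,\tmLambda\,\overline{\mV_q}$. Since $\min(p,q) = q$, this is precisely the block-diagonal structure demanded by \Cref{cond:special_pairing}.

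The argument is essentially bookkeeping once the two earlier lemmas are in hand, so there is no substantial obstacle; the only point requiring care is to verify that the row/column offsets of each square subblock remain integer multiples of $2^q$ (respectively $2^p$ in the symmetric case), which follows immediately from the factorizations $r 2^p = (r 2^{p-q}) 2^q$ and the arbitrariness of the nonnegative integer offsets in \Cref{lemma:lattice_si_blocked_mat_kron}. Combining the two cases completes the verification of \Cref{cond:special_pairing}.
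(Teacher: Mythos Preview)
Your proof is correct and follows essentially the same approach as the paper, which states only that the result is an immediate consequence of combining \Cref{lemma:lattice_si_blocked_mat_kron} with \Cref{lemma:remove_krons}. You have simply spelled out explicitly how these two lemmas fit together via the partition into $2^{|p-q|}$ square subblocks, which is exactly the intended mechanism.
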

\begin{proof} 
    This is an immediate consequence of combining  \Cref{lemma:lattice_si_blocked_mat_kron} with \Cref{lemma:remove_krons}.
\end{proof}

\Subsection{DSI Kernels, Digital Nets, and the Fast Walsh--Hadamard Transform (FWHT)} \label{sec:fwht_dsi_kernels_dnb2s}

\begin{definition}
    For a fixed prime base $b$, denote the digit-wise addition and subtraction of $x,x' \in [0,1)$ by 
    $$x \oplus x' = \sum_{\ell \geq 0} ((\mx_\ell + \mx'_\ell) \bmod b) 2^{-\ell-1} \qqtqq{and} x \ominus y = \sum_{\ell \geq 0} ((\mx_\ell - \mx'_\ell) \bmod b) 2^{-\ell-1}$$
    where $x = \sum_{\ell \geq 0} \mx_\ell 2^\ell$ and $x' = \sum_{\ell \geq 0} \mx_\ell' 2^\ell$. When applied to vectors, these operations are applied elementwise, i.e., for any $\bx,\bx' \in [0,1)^d$ 
    $$\bx \oplus \bx' = (x_1 \oplus x_1', \dots, x_d \oplus x_d') \qqtqq{and} \bx \ominus \bx' = (x_1 \ominus x_1', \dots, x_d \ominus x_d').$$
\end{definition}

We technically require the base $b$ expansions of $x,x' \in [0,1)$ do not end in infinite trails of the digit $b-1$. This will be assumed throughout, and in not an issue for our finite precision implementation. 

\begin{definition} \label{def:dig_shift_invar}
    A function $K:[0,1)^d \times [0,1)^d \to \bbR$ is digitally-shift-invariant (DSI) if for any $\bx,\bx' \in [0,1)^d$
    $$K(\bx,\bx') = Q(\bx \ominus \bx')$$
    for some $Q: [0,1)^d \to \bbR$. 
\end{definition}

Let $n=2^m$ where $m \geq 0$. The $2^m \times 2^m$ scaled Hadamard matrix $\mV_m$ may be defined recursively starting from $\mV_0 = (1)$ and using the relation   
$$\mV_{m+1} = \frac{1}{\sqrt{2}} \begin{pmatrix} \mV_m, & \mV_m \\ \mV_m & -\mV_m \end{pmatrix}.$$
Multiplication by the scaled Hadamard matrix $\mV_m$ can be performed in $\calO(2^m m)$ computations using the fast Walsh--Hadamard transform (FWHT) \cite{fino.fwht} visualized in \Cref{fig:fast_transforms}. 

\begin{lemma} 
    The real-valued scaled Hadamard matrices $(V_m)_{m \in \bbN_0}$ satisfy \\ \Cref{cond:fast_transform} with $\bomega_m = \bone_m$ the constant length $2^m$ vector of ones.
\end{lemma}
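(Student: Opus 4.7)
The plan is to verify each of the five properties of \Cref{cond:fast_transform} in turn, mostly by a straightforward induction on $m$ using the recursive definition $\mV_0 = (1)$ and $\mV_{m+1} = \tfrac{1}{\sqrt{2}}\bigl(\begin{smallmatrix} \mV_m & \mV_m \\ \mV_m & -\mV_m \end{smallmatrix}\bigr)$. The key simplification is that all entries of $\mV_m$ are real, so $\overline{\mV_m} = \mV_m$ throughout, and the ``$\bomega_m = \bone_m$'' claim collapses $\diag(\bomega_m)$ to $\mI_m$, making property~5 essentially tautological.

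First I would handle properties 1 and 5 together: the recursion immediately places $\mV_{m+1} \in \bbR^{2^{m+1}\times 2^{m+1}} \subset \bbC^{2^{m+1}\times 2^{m+1}}$, and plugging $\bomega_m = \bone_m$ into the formula in property~5 reproduces the defining recursion exactly (after conjugation, which does nothing). Property~4 is a citation: the FWHT of \cite{fino.fwht}, already invoked in the paragraph preceding the lemma, evaluates $\mV_m \by$ in $\calO(2^m m)$ operations.

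Next, properties 2 and 3 are small inductions. For the zeroth-column claim, the base case $\mV_0 = (1) = 1/\sqrt{2^0}$ is immediate, and the recursion stacks two copies of the zeroth column of $\mV_m$ and scales by $1/\sqrt{2}$, giving $1/\sqrt{2^{m+1}}$ as required. For symmetry, transposition commutes with the block decomposition and preserves the signs in the four blocks, so $\mV_{m+1}^\intercal = \mV_{m+1}$ follows from $\mV_m^\intercal = \mV_m$. For unitarity, since $\mV_m$ is real, $\overline{\mV_m}\mV_m = \mV_m^2$; a direct $2\times 2$ block multiplication gives
\begin{equation*}
    \mV_{m+1}^2 = \tfrac{1}{2}\begin{pmatrix} 2\mV_m^2 & 0 \\ 0 & 2\mV_m^2 \end{pmatrix} = \mI_{m+1},
\end{equation*}
where the off-diagonal blocks cancel thanks to the sign pattern $(+,+;+,-)$.

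I do not anticipate a serious obstacle: every property either restates the recursion, is a one-line base case, or reduces under the induction hypothesis to a $2\times 2$ block identity. The only point worth being careful about is bookkeeping in property~5 — namely confirming that ``$\overline{\mV_{m+1}}$ equals the indicated block matrix'' is the same relation as the defining recursion once one uses realness and $\bomega_m=\bone_m$ — so I would state that equivalence explicitly to close the proof.
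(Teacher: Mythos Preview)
Your proposal is correct and follows the same approach as the paper, which simply states that the properties ``immediately follow from the prior definitions and discussion.'' You have spelled out in detail what the paper leaves implicit --- the inductive verifications of symmetry, unitarity, and the zeroth column, plus the observation that realness and $\bomega_m=\bone_m$ collapse property~5 into the defining recursion --- but the underlying logic is identical.
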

\begin{proof}
    These properties immediately follow from the prior definitions and discussion.
\end{proof}

This section will use the digitally-shifted digital nets discussed in \Cref{sec:dnets}. Here we will only consider base $b=2$ digital nets for which digit-wise addition and subtraction are equivalent and may be computed quickly using efficient XOR (exclusive or) operations. The theory should readily extend to digital nets with arbitrary prime base $b$.

Let us denote the generating matrices by $\mG \in [0,1)^{d \times \infty}$, where the generating matrix $\mC_j \in \{0,1\}^{t_{\max} \times m}$ for $j \in \{1,\dots,d\}$ presented in \Cref{sec:dnets} is now identified with the $j^\mathrm{th}$ row of $\mG$ by letting $m \to \infty$ and writing each column of $\mC_j$ in a floating point representation. We will denote the $\ell^\mathrm{th}$ column of $\mG$ by $\bg_\ell$ for $\ell \in \bbN_0$, e.g., we denote the zeroth column of $\mG$ by $\bg_0$. 

Note that $\mG$ may be the result of applying linear matrix scrambling (LMS) to some original generating matrices, but the following theory does not permit different LMSs for each sequence, i.e., the same $\mG$ must be used for each sequence. Higher-order digital nets and higher-order scrambling with LMS is permitted under the same restrictions. Nested uniform scrambling (NUS) is not permitted.

Recall that we write the binary expansion of $i \in \bbN_0$ as $i = \sum_{\ell \in \bbN_0} \mi_\ell 2^\ell$.
\begin{lemma} \label{lemma:dsi_gram_mat_struct}
    The following holds for any digitally-shifted base $2$ digital nets in radical inverse order $(\bx_i)_{i \in \bbN_0} := (\oplus_{\ell \in \bbN_0} \mi_\ell \bg_\ell \oplus \bDelta)_{i \in \bbN_0}$ and $(\bx_i')_{i \in \bbN_0} := (\oplus_{\ell \in \bbN_0} \mi_\ell \bg_\ell \oplus \bDelta')_{i \in \bbN_0}$ with common generating matrices $\mG \in [0,1)^{d \times \infty}$ and any shifts $\bDelta,\bDelta' \in [0,1)^d$. When $K$ is a SPSD DSI kernel we have the following eigendecomposition
    \begin{equation}
        (K(\bx_i,\bx_k'))_{i,k=0}^{2^m-1} = \mV_m \mLambda \mV_m.
        \label{eq:RSBT}
    \end{equation}
\end{lemma}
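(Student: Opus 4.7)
The plan is to exploit the digital-shift-invariance to show that $(K(\bx_i,\bx_k'))_{i,k}$ only depends on $i \oplus k$ (integer XOR), and then observe that any such matrix is diagonalized by the scaled Hadamard matrix $\mV_m$.

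First, I would unpack the DSI property and the digital-net construction. Since $K$ is DSI, $K(\bx_i,\bx_k') = Q(\bx_i \ominus \bx_k')$ for some $Q$. In base $b=2$, $\oplus$ and $\ominus$ coincide (both are XOR on each digit), so
\begin{equation*}
\bx_i \ominus \bx_k' \;=\; \Bigl(\bigoplus_{\ell \in \bbN_0} \mi_\ell \bg_\ell \oplus \bDelta\Bigr) \oplus \Bigl(\bigoplus_{\ell \in \bbN_0} \mk_\ell \bg_\ell \oplus \bDelta'\Bigr) \;=\; \bigoplus_{\ell \in \bbN_0}(\mi_\ell \oplus \mk_\ell)\bg_\ell \oplus (\bDelta \oplus \bDelta').
\end{equation*}
Writing $j = i \oplus k \in \bbN_0$ so that $\mj_\ell = \mi_\ell \oplus \mk_\ell$, this shows $K(\bx_i,\bx_k')$ depends only on $i \oplus k$. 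Denote this value by $f(i \oplus k)$.

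Next, I would verify the eigenstructure of any matrix $\mK \in \bbR^{2^m \times 2^m}$ with $K_{ik} = f(i \oplus k)$. The scaled Hadamard matrix has entries $(\mV_m)_{ij} = 2^{-m/2}(-1)^{\langle \bD_m(i), \bD_m(j)\rangle}$ where $\langle \cdot,\cdot\rangle$ denotes the bitwise dot product modulo $2$. Direct computation gives
\begin{equation*}
(\mV_m \mK \mV_m)_{ab} \;=\; \frac{1}{2^m} \sum_{i,k=0}^{2^m-1} (-1)^{\langle \bD_m(a),\bD_m(i)\rangle + \langle \bD_m(k),\bD_m(b)\rangle} f(i \oplus k).
\end{equation*}
Substituting $k = i \oplus j$ and using bilinearity of $\langle \cdot,\cdot \rangle$ under XOR separates the inner sum into $\sum_{i} (-1)^{\langle \bD_m(a \oplus b),\bD_m(i)\rangle}$, which is $2^m$ if $a = b$ and $0$ otherwise. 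Therefore $\mV_m \mK \mV_m$ is diagonal, and the diagonal entries are the Walsh transform coefficients of $f$.

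Since $\mV_m$ is symmetric and unitary with $\mV_m \mV_m = \mI_m$ (Condition~\ref{cond:fast_transform}), we can rewrite the resulting equation as $\mK = \mV_m \mLambda \mV_m$ for a diagonal $\mLambda$, giving \eqref{eq:RSBT}. I expect the main subtlety to be bookkeeping in step two: ensuring the identification $j = i \oplus k \Leftrightarrow \mj_\ell = \mi_\ell \oplus \mk_\ell$ for all $\ell$ is valid under our convention that base-$2$ expansions do not end in an infinite trail of $1$s, and verifying that bitwise dot products behave linearly with respect to XOR (which is immediate over $\bbF_2$). Everything else is a routine change of summation variables.
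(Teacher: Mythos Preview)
Your proof is correct and takes a genuinely different route from the paper. The paper argues by induction on $m$: it writes the $2^{j+1}\times 2^{j+1}$ Gram matrix as a $2\times 2$ block matrix, observes that the second half of the point sequence is a digital shift of the first half (by $\bg_j$), uses the DSI property to conclude $\mK_{11}=\mK_{00}$ and $\mK_{01}=\mK_{10}$, applies the inductive hypothesis to each block, and then uses the recursive definition of $\mV_{j+1}$ to check that $\mV_{j+1}\mK\mV_{j+1}$ is block-diagonal. Your argument instead identifies the Gram matrix as a group matrix over $\bbF_2^m$ (entries depend only on $i\oplus k$) and diagonalizes it in one shot via character orthogonality, using that the scaled Hadamard matrix is the character table of $\bbF_2^m$. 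Your approach is more conceptual and avoids the bookkeeping of the induction; the paper's inductive argument, on the other hand, makes the recursive symmetric block Toeplitz (RSBT) structure explicit and dovetails with how the fast transform is actually implemented. Both are valid, and your Step~1 observation that $K(\bx_i,\bx_k')=f(i\oplus k)$ is precisely the content the paper extracts piecewise through the induction.
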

\begin{proof} 
    This clearly holds for $m=0$. Assume this holds for $m \in \{0,\dots,j\}$ with $j>0$, and let us denote
    $$\mK := (K(\bx_i,\bx_k'))_{i,k=0}^{2^{j+1}-1} = \begin{pmatrix} \mK_{00} & \mK_{01} \\ \mK_{10} & \mK_{11} \end{pmatrix}.$$
    The eigendecomposition $\mK_{00} = \mV_j \mLambda_{00} \mV_j$ holds by assumption. Notice that 
    \begin{align*}
        (\oplus_{\ell \in \bbN_0} \mi_\ell \bg_\ell \oplus \bDelta)_{i=2^j}^{2^{j+1}-1} &= (\oplus_{\ell \in \bbN_0} \mi_\ell \bg_\ell \oplus \hbDelta)_{i=0}^{2^j-1}, \qquad \hbDelta = \bDelta \oplus \bg_j, \qquad\mathrm{and} \\
        (\oplus_{\ell \in \bbN_0} \mi_\ell \bg_\ell \oplus \bDelta')_{i=2^j}^{2^{j+1}-1} &= (\oplus_{\ell \in \bbN_0} \mi_\ell \bg_\ell \oplus \tbDelta)_{i=0}^{2^j-1}, \qquad \tbDelta = \bDelta' \oplus \bg_j. 
    \end{align*}
    Then the assumption and the digitally-shift-invariant property imply that $\mK_{01} = \mK_{10} = \mV_j \mLambda_{01} \mV_j$ and $\mK_{11} = \mK_{00}$. Therefore, 
    \begin{align*}
        \mV_{j+1} \mK \mV_{j+1} &= \frac{1}{2} \begin{pmatrix} \mV_j, & \mV_j \\ \mV_j & -\mV_j \end{pmatrix} \begin{pmatrix} \mV_j \mLambda_{00} \mV_j, & \mV_j \mLambda_{01} \mV_j \\ \mV_j \mLambda_{01} \mV_j & \mV_j \mLambda_{00} \mV_j \end{pmatrix} \begin{pmatrix} \mV_j, & \mV_j \\ \mV_j & -\mV_j \end{pmatrix} \\
        &= \begin{pmatrix} \mLambda_{00} + \mLambda_{10} & \\ & \mLambda_{00} - \mLambda_{01} \end{pmatrix},
    \end{align*}
    which completes the proof by induction. A more detailed discussion and proof can be found in \cite[Section 5.3.2]{rathinavel.bayesian_QMC_thesis}.  
\end{proof} 

Kernel matrices taking the form of \eqref{eq:RSBT} are said to be RSBT (recursive symmetric block Toeplitz). \Cref{fig:idea_points_kernels_structures} visualizes base 2 digital nets in radical inverse order, DSI kernels, and the resulting RSBT kernel matrices. 

\begin{lemma} \label{lemma:dnet_dsi_blocked_mat_kron}
    The following holds for any digitally-shifted base $2$ digital nets in radical inverse order $(\bx_i)_{i \in \bbN_0} := (\oplus_{\ell \in \bbN_0} \mi_\ell \bg_\ell \oplus \bDelta)_{i \in \bbN_0}$ and $(\bx_i')_{i \in \bbN_0} := (\oplus_{\ell \in \bbN_0} \mi_\ell \bg_\ell \oplus \bDelta')_{i \in \bbN_0}$ with common generating matrices $\mG \in [0,1)^{d \times \infty}$ and any shifts $\bDelta,\bDelta' \in [0,1)^d$. For any $m,r,c \in \bbN_0$, when $K$ is a SPSD DSI kernel we have the following eigendecomposition
    $$(K(\bx_i,\bx_k'))_{i=r2^m,k=c2^m}^{(r+1)2^m-1,(c+1)2^m-1} = \mV_m \mLambda \overline{\mV_m}$$
\end{lemma}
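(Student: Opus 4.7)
The plan is to mirror the argument used for Lemma \ref{lemma:lattice_si_blocked_mat_kron} in the shifted-lattice / SI setting: show that each length-$2^m$ slice of the full digital net, indexed from $r 2^m$ through $(r+1)2^m - 1$, is itself a digitally-shifted digital net built from the \emph{same} generating matrices $\mG$ but with a different digital shift. Once this is established for both $(\bx_i)$ and $(\bx'_i)$, the claim reduces immediately to the (already proved) base case in Lemma \ref{lemma:dsi_gram_mat_struct}.

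The key observation is that for $r 2^m \leq i < (r+1) 2^m$, setting $\widetilde{i} := i - r 2^m \in \{0,\dots,2^m-1\}$, the binary digits of $i$ split cleanly: $\mi_\ell = \widetilde{\mi}_\ell$ for $0 \leq \ell < m$, and $\mi_{\ell+m} = \mr_\ell$ for $\ell \geq 0$, where $r = \sum_{\ell \geq 0} \mr_\ell 2^\ell$. No carries appear because $\widetilde{i} < 2^m$, so the upper and lower digit blocks do not interact. Substituting into the defining formula,
\begin{equation*}
\bigoplus_{\ell \in \bbN_0} \mi_\ell \bg_\ell \;=\; \bigoplus_{\ell=0}^{m-1} \widetilde{\mi}_\ell \bg_\ell \;\oplus\; \bigoplus_{\ell \geq 0} \mr_\ell \bg_{\ell+m} \;=\; \bigoplus_{\ell \in \bbN_0} \widetilde{\mi}_\ell \bg_\ell \;\oplus\; \hbDelta_r,
\end{equation*}
where $\hbDelta_r := \bigoplus_{\ell \geq 0} \mr_\ell \bg_{\ell+m} \in [0,1)^d$. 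Therefore
\begin{equation*}
(\bx_i)_{i=r2^m}^{(r+1)2^m-1} \;=\; \Bigl( \bigoplus_{\ell \in \bbN_0} \widetilde{\mi}_\ell \bg_\ell \;\oplus\; \hbDelta \Bigr)_{\widetilde{i}=0}^{2^m-1}, \qquad \hbDelta := \bDelta \oplus \hbDelta_r,
\end{equation*}
which is a digitally-shifted base $2$ digital net with generating matrices $\mG$ and shift $\hbDelta$. The same argument applied to $(\bx_i')_{i=c2^m}^{(c+1)2^m-1}$ yields a digitally-shifted digital net with generating matrices $\mG$ and shift $\tbDelta := \bDelta' \oplus \hbDelta_c$, where $\hbDelta_c := \bigoplus_{\ell \geq 0} \mc_\ell \bg_{\ell+m}$.

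The two slices now fit exactly the hypotheses of Lemma \ref{lemma:dsi_gram_mat_struct} (same generating matrices, arbitrary shifts $\hbDelta$ and $\tbDelta$), so
\begin{equation*}
\bigl(K(\bx_i,\bx_k')\bigr)_{i=r2^m,\,k=c2^m}^{(r+1)2^m-1,\,(c+1)2^m-1} \;=\; \mV_m \mLambda \overline{\mV_m}
\end{equation*}
for some diagonal $\mLambda$ (and here $\overline{\mV_m} = \mV_m$ since the Hadamard matrix is real). The only place requiring mild care is verifying that the digit-splitting step is carry-free and that $\hbDelta_r \in [0,1)^d$, which both follow from $\widetilde{i} < 2^m$ and from $\oplus$ being defined digit-wise modulo $2$; I expect no genuine obstacle.
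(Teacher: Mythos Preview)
Your proof is correct and follows essentially the same approach as the paper: both show that the slice $(\bx_i)_{i=r2^m}^{(r+1)2^m-1}$ is itself a digitally-shifted digital net with the same generating matrices and a modified shift (the paper writes the modified shift as $\hbDelta = \oplus_{\ell \in \bbN_0} \mj_\ell \bg_\ell \oplus \bDelta$ with $j=r2^m$, which coincides with your $\bDelta \oplus \hbDelta_r$), then invoke Lemma~\ref{lemma:dsi_gram_mat_struct}. Your digit-splitting justification is in fact more explicit than the paper's, which simply asserts the identity.
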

\begin{proof} 
    We have 
    \begin{align*}
        (\oplus_{\ell \in \bbN_0} \mi_\ell \bg_\ell \oplus \bDelta)_{i=r2^m}^{(r+1)2^m-1} &= (\oplus_{\ell \in \bbN_0} \mi_\ell \bg_\ell \oplus \hbDelta)_{i=0}^{2^m-1}, \quad  \hbDelta = \oplus_{\ell \in \bbN_0} \mj_\ell \bg_\ell \oplus \bDelta, \quad j = r2^m, \\
        (\oplus_{\ell \in \bbN_0} \mi_\ell \bg_\ell \oplus \bDelta')_{i=c2^m}^{(c+1)2^m-1} &= (\oplus_{\ell \in \bbN_0} \mi_\ell \bg_\ell \oplus \tbDelta)_{i=0}^{2^m-1}, \quad  \tbDelta = \oplus_{\ell \in \bbN_0} \mj_\ell \bg_\ell \oplus \bDelta', \quad j = c2^m.
    \end{align*}
    The result then follows from \Cref{lemma:dsi_gram_mat_struct}.
\end{proof} 

\begin{theorem} \label{thm:dnet_dsi_special_pairing}
    \Cref{cond:special_pairing} is satisfied for scaled Hadamard matrices $\left(\mV_m\right)_{m \in \bbN_0}$, a SPSD DSI kernel $K: [0,1)^d \times [0,1)^d \to \bbR$, and any digitally-shifted base $2$ digital nets in radical inverse order  $(\bx_i)_{i \in \bbN_0} := (\oplus_{\ell \in \bbN_0} \mi_\ell \bg_\ell \oplus \bDelta)_{i \in \bbN_0}$ and $(\bx_i')_{i \in \bbN_0} := (\oplus_{\ell \in \bbN_0} \mi_\ell \bg_\ell \oplus \bDelta')_{i \in \bbN_0}$ with common generating matrices $\mG \in [0,1)^{d \times \infty}$ and any shifts $\bDelta,\bDelta' \in [0,1)^d$.
\end{theorem}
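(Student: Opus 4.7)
The proof plan is to reduce the general statement to the square-block case already established in \Cref{lemma:dnet_dsi_blocked_mat_kron}, using \Cref{lemma:remove_krons} to clean up the leftover Kronecker-identity factor. Since the scaled Hadamard matrices have already been shown to satisfy \Cref{cond:fast_transform}, and since \Cref{lemma:dnet_dsi_blocked_mat_kron} handles precisely the diagonal size case $p=q=m$, the remaining task is to verify the block-matrix condition when $p \neq q$.

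First, I would fix arbitrary $p,q,r,c \in \bbN_0$ and set $s = \min(p,q)$. The block $\mK := (K(\bx_i,\bx_k'))_{i=r2^p,\;k=c2^q}^{(r+1)2^p,\;(c+1)2^q}$ naturally partitions into a grid of square $2^s \times 2^s$ sub-blocks indexed by $(\rho,\gamma)$ with $0 \le \rho < 2^{p-s}$ and $0 \le \gamma < 2^{q-s}$. Each sub-block is of the form $(K(\bx_i,\bx_k'))_{i=(r 2^{p-s}+\rho)2^s,\;k=(c2^{q-s}+\gamma)2^s}^{(r2^{p-s}+\rho+1)2^s,\;(c2^{q-s}+\gamma+1)2^s}$, which by \Cref{lemma:dnet_dsi_blocked_mat_kron} (applied with index shifts $r' = r2^{p-s}+\rho$ and $c' = c2^{q-s}+\gamma$) equals $\mV_s \mLambda_{\rho,\gamma} \mV_s$ for some diagonal $\mLambda_{\rho,\gamma}$.

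Assembling these sub-blocks, I would write $\mK = (\mI_{p-s} \otimes \mV_s)\,\mLambda\,(\mI_{q-s} \otimes \mV_s)$ where $\mLambda$ is the $2^{p-s} \times 2^{q-s}$ block matrix whose $(\rho,\gamma)$ block is the diagonal $\mLambda_{\rho,\gamma}$. If $p \geq q$, then $s=q$, so the right factor is $\mI_0 \otimes \mV_q = \mV_q = \overline{\mV_q}$ (real-valued Hadamard), and applying the first part of \Cref{lemma:remove_krons} to $(\mI_{p-q} \otimes \mV_q)\mLambda \overline{\mV_q}$ produces the desired form $\mV_p \tmLambda \overline{\mV_q}$ with $\tmLambda$ having $2^q \times 2^q$ diagonal blocks. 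The case $q \geq p$ is symmetric, using the second part of \Cref{lemma:remove_krons}.

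The only subtlety, and thus the main thing to double-check, is that the block-structure hypothesis of \Cref{lemma:remove_krons} is genuinely met: namely, that $\mLambda$ is a $2^{p-s} \times 2^{q-s}$ block matrix with \emph{diagonal} $2^s \times 2^s$ blocks. This follows from the fact that each sub-block $\mV_s \mLambda_{\rho,\gamma} \mV_s$ has already been diagonalized, so after stripping the outer $\mV_s$ factors what remains is a grid of diagonal blocks -- exactly what \Cref{lemma:remove_krons} requires. Once this is confirmed, the theorem follows by a one-line invocation analogous to the proof of \Cref{thm:lattice_si_special_pairing}.
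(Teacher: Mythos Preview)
Your proposal is correct and takes essentially the same approach as the paper: the paper's proof is the one-liner ``This is an immediate consequence of combining \Cref{lemma:dnet_dsi_blocked_mat_kron} with \Cref{lemma:remove_krons},'' and what you have written is precisely a careful unpacking of how those two lemmas combine. Your observation that $s=\min(p,q)$ collapses one of the Kronecker factors to the identity (so that \Cref{lemma:remove_krons} applies directly) is exactly the mechanism the paper is relying on implicitly.
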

\begin{proof} 
    This is an immediate consequence of combining \Cref{lemma:dnet_dsi_blocked_mat_kron} with \Cref{lemma:remove_krons}.
\end{proof} 

\begin{figure}[!ht]
    \centering
    \includegraphics[width=1\textwidth]{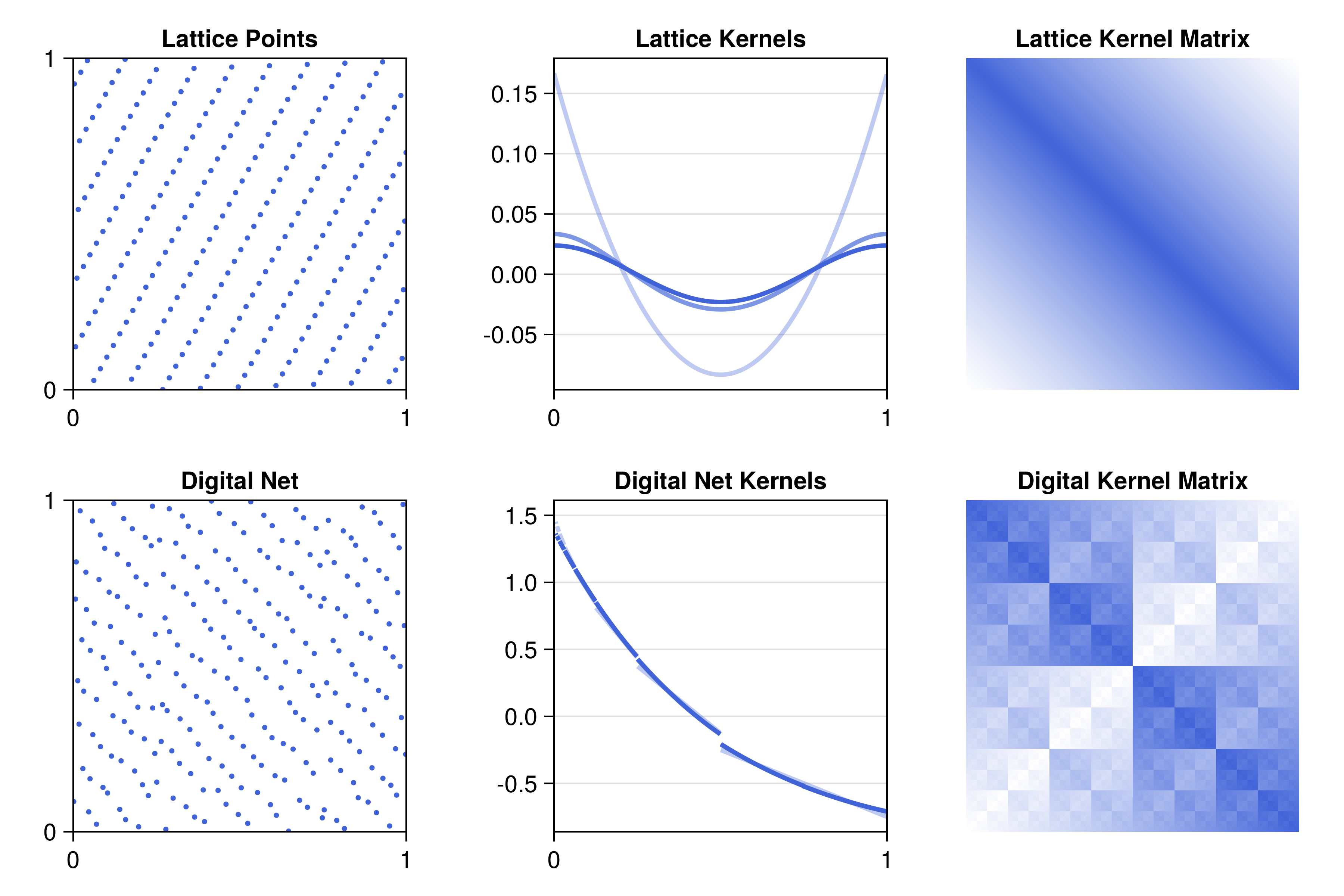} 
    \caption{Lattice points in linear order paired with shift-invariant (SI) kernels produce circulant kernel (Gram) matrices. Base $2$ digital nets in radical inverse order paired with digitally-shift-invariant (DSI) kernels produce RSBT (recursive symmetric block Toeplitz) Gram matrices. Circulant and RSBT matrices have eigendecompositions with nicely structured eigenvector matrices which enable fast Gram matrix computations.}
    \label{fig:idea_points_kernels_structures}
\end{figure}

\Section{Kernel Forms} \label{sec:kernel_forms}

Here we will discuss how the special multivariate kernels we use for fast kernel computations are built from univariate kernel forms. The resulting mixture-product kernels and product kernels have been considered elsewhere in the literature. We provide code examples using these multivariate kernels for the fast kernel matrix computations discussed in \Cref{sec:kernel_matrix_structure}. Univariate SI kernels are discussed in \Cref{sec:si_kernels} and DSI kernels in \Cref{sec:dsi_kernels}.

\Subsection{Multivarite Kernel Decompositions} 

In this section we will discuss kernels of varying smoothness which are SI (\Cref{def:shift_invar}) or DSI (\Cref{def:dig_shift_invar}). Let us motivate our kernels forms with the following functional ANOVA decomposition \cite[Appendix A]{owen.mc_book_practical}. 

\begin{remark}
    If $\bbV[f(\bX)] < \infty$ for $\bX \sim \calU([0,1]^d)$, then 
    $$f(\bx) = \sum_{\fu \subseteq \{1,\dots,d\}} f_\fu(\bx), \quad f_\fu(\bx) = \int_{[0,1]^d} f(\bx) \D \bx_{- \fu} - \sum_{\fv \subsetneq \fu} f_\fv(\bx), \quad f_\emptyset = \int_{[0,1]^d} f(\bx) \D \bx$$
    where $f_\fu$ only depends on $\bx_\fu := (x_j)_{j \in \fu}$, not $\bx_{-\fu} := (x_j)_{j \notin \fu}$.  
\end{remark}

Recall sums and products of kernels are still kernels \cite{aronszajn.theory_of_reproducing_kernels}, see also \cite[Chapter 2]{duvenaud2014automatic} for a discussion on expressing structure through combining kernels. Our SI and DSI kernels will be products and sums of one-dimensional SI and DSI kernels. Specifically, for some compatible smoothness parameters $\balpha \in \bbR_{>0}$, some weights $(\tEta_\fu)_{\fu \subseteq \{1,\dots,d\}} \subset \bbR_{\geq 0}$, and a global scaling parameter $\gamma \in \bbR_{>0}$, we will consider multivariate \emph{mixture-product kernels} of the form 
\begin{equation}
    K(\bx,\bx') = \gamma \sum_{\fu \subseteq \{1,\dots,d\}} \tEta_\fu \prod_{j \in \fu} K_{\alpha_j}(x_j,x_j')
    \label{eq:mixture_product_kernel}
\end{equation}
where the product over the empty set is taken to be $1$, and we require $\tEta_\emptyset = 1$. 

\emph{Product kernels} are a special case of mixture-product kernels \eqref{eq:mixture_product_kernel} where $\tEta_\fu = \prod_{j \in \fu} \eta_j$ for some per-dimension weights $\{\eta_j\}_{j=1}^d$  (which we may also call lengthscales) so that
\begin{equation}
    K(\bx,\bx') = \gamma \prod_{j=1}^d \left[1+\eta_j K_{\alpha_j}(x_j,x_j')\right].
    \label{eq:product_kernels}
\end{equation}
Product kernels, while less expressive, only cost $\calO(d)$ to evaluate compared to the $\calO(2^d)$ cost for mixture-product kernels, assuming the one-dimensional kernels cost $\calO(1)$ to evaluate. \cite{kaarnioja.kernel_interpolants_lattice_rkhs_serendipitous} proposes serendipitous weights, a special form of product weights, which showed unexpectedly strong performance for fast lattice based kernel interpolation. 

The next two lemmas will consider derivatives and integrals of mixture-product and product kernels respectively. Such derivatives and integrals will appear later when considering derivative-informed Gaussian process regression (\Cref{sec:gps_deriv_informeds}) and Bayesian cubature (\Cref{sec:mtgp_bayesian_cubature} and \Cref{sec:fmtgp_bayesian_cubature}). For $\bbeta,\bbeta' \in \bbN_0^d$ we will denote the support by $S(\bbeta+\bbeta') = \{j \in \{1,\dots,d\}: \beta_j+\beta_j' > 0\}$ with cardinality $\lvert S(\bbeta+\bbeta') \rvert$. 
\begin{lemma}
    For compatible $\bbeta,\bbeta' \in \bbN_0^d$ and any $\bx,\bx' \in [0,1)^d$, the mixture-product kernels \eqref{eq:mixture_product_kernel} satisfy 
    \begin{align*}
        K^{(\bbeta,\bbeta')}&(\bx,\bx') = \gamma \left(\prod_{j \in S(\bbeta+\bbeta')} K_{\alpha_j}^{(\beta_j,\beta_j')}(x_j,x_j')\right) \\
        &\cdot\left(\sum_{\fu \subseteq \{1,\dots,d\} \setminus S(\bbeta+\bbeta')} \tEta_{\fu \cup S(\bbeta+\bbeta')} \prod_{j \in \fu} K_{\alpha_j}(x_j,x_j')\right), \\
        \int_{[0,1]^d} K^{(\bbeta,\bbeta')}&(\bx,\bx') \D \bx = \gamma \left(\prod_{j \in S(\bbeta+\bbeta')} \int_0^1 K_{\alpha_j}^{(\beta_j,\beta_j')}(x_j,x_j') \D x_j\right) \\
        &\cdot\left(\sum_{\fu \subseteq \{1,\dots,d\} \setminus S(\bbeta+\bbeta')} \tEta_{\fu \cup S(\bbeta+\bbeta')} \prod_{j \in \fu} \int_0^1 K_{\alpha_j}(x_j,x_j') \D x_j\right), \\
        \int_{[0,1]^{2d}} K^{(\bbeta,\bbeta')}&(\bx,\bx') \D \bx \D \bx' = \gamma \left(\prod_{j \in S(\bbeta+\bbeta')} \int_0^1 \int_0^1 K_{\alpha_j}^{(\beta_j,\beta_j')}(x_j,x_j') \D x_j \D x_j'\right) \\
        &\cdot\left(\sum_{\fu \subseteq \{1,\dots,d\} \setminus S(\bbeta+\bbeta')} \tEta_{\fu \cup S(\bbeta+\bbeta')} \prod_{j \in \fu} \int_0^1 \int_0^1 K_{\alpha_j}(x_j,x_j') \D x_j \D x_j'\right)
    \end{align*}
    whenever the appropriate kernel derivatives exist. Assuming the cost of evaluating the one-dimensional kernels and their integrals are all $\calO(1)$, then the cost of evaluating any of the above expressions is $\calO(\lvert S(\bbeta+\bbeta') \rvert + 2^{d-\lvert S(\bbeta+\bbeta') \rvert})$. 
\end{lemma}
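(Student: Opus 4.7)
The plan is to push the differential operator and the integrals inside the finite sum defining $K$ and then exploit the factorization of the product. First I would invoke linearity of $\partial_{\bx}^{\bbeta}\partial_{\bx'}^{\bbeta'}$ to write
$$K^{(\bbeta,\bbeta')}(\bx,\bx') = \gamma \sum_{\fu \subseteq \{1,\dots,d\}} \tEta_\fu \; \partial_{\bx}^{\bbeta}\partial_{\bx'}^{\bbeta'} \prod_{j \in \fu} K_{\alpha_j}(x_j,x_j').$$
Since the factor $K_{\alpha_j}(x_j,x_j')$ depends on no other coordinates, the mixed partial factorizes across $j$, and any index $j$ with $\beta_j+\beta_j'>0$ but $j\notin\fu$ produces $\partial_{x_j}^{\beta_j}\partial_{x_j'}^{\beta_j'}(1) = 0$. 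Hence only subsets $\fu$ containing $S(\bbeta+\bbeta')$ survive.

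Next I would reindex the surviving sum by writing $\fu = \fv \cup S(\bbeta+\bbeta')$ with $\fv \subseteq \{1,\dots,d\} \setminus S(\bbeta+\bbeta')$; this is a bijection because $\fv$ and $S(\bbeta+\bbeta')$ are disjoint. Splitting the product along this partition and pulling the derivative factors over $S(\bbeta+\bbeta')$ out in front yields the stated derivative formula. For the two integral identities, I would invoke Fubini (the integrand is a finite sum of separable products of one-dimensional continuous functions, so integrability is immediate) to exchange the sum with the integrals and to distribute the $d$-fold or $2d$-fold integral over the product. For indices $j\in\fv$ the derivative is trivial, yielding $\int_0^1 K_{\alpha_j}(x_j,x_j')\,\D x_j$ or its double-integral analogue; for indices $j\in S(\bbeta+\bbeta')$ the derivative factor integrates to $\int_0^1 K_{\alpha_j}^{(\beta_j,\beta_j')}(x_j,x_j')\,\D x_j$ or its analogue.

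The cost estimate then follows by inspection: evaluating the $\lvert S(\bbeta+\bbeta')\rvert$ derivative factors in front costs $\calO(\lvert S(\bbeta+\bbeta')\rvert)$ under the $\calO(1)$ assumption, while the remaining sum ranges over the $2^{d-\lvert S(\bbeta+\bbeta')\rvert}$ subsets $\fv$, and each such term is a product of at most $d-\lvert S(\bbeta+\bbeta')\rvert$ scalar factors that can be built incrementally by a Gray-code style traversal at $\calO(1)$ amortized work per subset.

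The only nontrivial step is justifying that the derivative truly kills all subsets $\fu$ missing some index in $S(\bbeta+\bbeta')$; the main obstacle is simply being careful about the bookkeeping when $S(\bbeta+\bbeta')=\emptyset$ (in which case the leading product is the empty product $1$ and the formula reduces to $K$ itself) and when $S(\bbeta+\bbeta')=\{1,\dots,d\}$ (in which case the trailing sum collapses to the single term $\fv=\emptyset$ with weight $\tEta_{\{1,\dots,d\}}$). These boundary cases should be checked explicitly to confirm consistency with the conventions $\prod_\emptyset=1$ and $\tEta_\emptyset=1$.
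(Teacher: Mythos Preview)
The paper states this lemma without proof, so there is no argument to compare against; your proposal supplies exactly the routine verification the paper omits, and it is correct. The only quibble is terminological: the $\calO(1)$-amortized-per-subset cost for the sum is most cleanly obtained by a dynamic-programming enumeration of the products (for each $\fv$, reuse the product over $\fv$ minus its largest element) rather than a literal Gray-code walk, since the latter would require division to remove a factor and could fail if some $K_{\alpha_j}(x_j,x_j')=0$; but the complexity conclusion is the same.
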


\begin{lemma}
    For compatible $\bbeta,\bbeta' \in \bbN_0^d$ and any $\bx,\bx' \in [0,1)^d$, the product kernels \eqref{eq:product_kernels} satisfy 
    \begin{align*}
        K^{(\bbeta,\bbeta')}(\bx,\bx') =& \gamma \prod_{j \in \{1,\dots,d\}} \left(1_{\beta_j+\beta_j'=0} + \eta_j K_{\alpha_j}^{(\beta_j,\beta_j')}(x_j,x_j')\right), \\
        \int_{[0,1]^d} K^{(\bbeta,\bbeta')}(\bx,\bx') \D \bx =& \gamma \prod_{j \in \{1,\dots,d\}} \left(1_{\beta_j+\beta_j'=0} + \eta_j \int_0^1 K_{\alpha_j}^{(\beta_j,\beta_j')}(x_j,x_j') \D x_j \right), \\
        \int_{[0,1]^{2d}} K^{(\bbeta,\bbeta')}(\bx,\bx') \D \bx \D \bx' =& \gamma \prod_{j \in \{1,\dots,d\}} \left(1_{\beta_j+\beta_j'=0} + \eta_j \int_0^1 \int_0^1 K_{\alpha_j}^{(\beta_j,\beta_j')}(x_j,x_j') \D x_j \D x_j'\right)
    \end{align*}
    where $1_{\beta_j+\beta_j'=0}$ is $1$ if $\beta_j+\beta_j'=0$ and $0$ otherwise. 
\end{lemma}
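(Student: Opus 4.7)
The plan is to exploit the separability of the product kernel by writing it as
\[
K(\bx,\bx') = \gamma \prod_{j=1}^d g_j(x_j,x_j'), \qquad g_j(x_j,x_j') := 1 + \eta_j K_{\alpha_j}(x_j,x_j'),
\]
so that each factor depends only on the pair $(x_j,x_j')$. Since the mixed partial derivative operator $\partial_{x_1}^{\beta_1} \cdots \partial_{x_d}^{\beta_d} \partial_{x_1'}^{\beta_1'} \cdots \partial_{x_d'}^{\beta_d'}$ acts only within the $j$-th factor when applied to $(x_j,x_j')$, differentiation commutes with the product and yields
\[
K^{(\bbeta,\bbeta')}(\bx,\bx') = \gamma \prod_{j=1}^d g_j^{(\beta_j,\beta_j')}(x_j,x_j').
\]
The only case-by-case observation is that when $\beta_j + \beta_j' = 0$ the factor is unchanged and equals $g_j(x_j,x_j') = 1 + \eta_j K_{\alpha_j}(x_j,x_j')$, while for $\beta_j + \beta_j' \geq 1$ the constant $1$ is annihilated and only $\eta_j K_{\alpha_j}^{(\beta_j,\beta_j')}(x_j,x_j')$ survives. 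Combining these two cases via the indicator $1_{\beta_j + \beta_j' = 0}$ gives the first claimed identity.

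For the two integral identities, I would invoke Fubini's theorem to factor the $d$-fold (respectively $2d$-fold) Lebesgue integral over $[0,1]^d$ (respectively $[0,1]^{2d}$) into a product of one-dimensional integrals, each acting only on its associated factor:
\[
\int_{[0,1]^d} \prod_{j=1}^d g_j^{(\beta_j,\beta_j')}(x_j,x_j') \, \D \bx = \prod_{j=1}^d \int_0^1 g_j^{(\beta_j,\beta_j')}(x_j,x_j') \, \D x_j,
\]
and analogously for the $\D \bx \D \bx'$ case. Re-expanding $g_j^{(\beta_j,\beta_j')}$ using the indicator decomposition from the first step then yields the second and third identities.

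The main obstacle is mild: one must justify that differentiation commutes with the finite product (immediate because each factor depends on disjoint coordinate pairs) and that Fubini is applicable, which requires only the existence of the one-dimensional kernel derivatives and their univariate integrals over $[0,1]$ and $[0,1]^2$ assumed in the statement. No deeper analytic input is needed: once the variable separation is made explicit, the lemma reduces to an elementary identity about products of univariate functions, and the cost estimate follows from the fact that each of the $d$ factors in the product is evaluated in $\calO(1)$ time under the stated assumption.
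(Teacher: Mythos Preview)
Your proposal is correct and is the natural argument. The paper states this lemma without proof, treating the identities as immediate consequences of the separable product structure; your write-up supplies exactly that justification (factor-wise differentiation on disjoint coordinates, the constant $1$ surviving only when $\beta_j+\beta_j'=0$, and Fubini for the integrals). One minor remark: the cost estimate you mention at the end belongs to the preceding mixture-product lemma, not to this one, so you can drop that sentence.
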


\Subsection{Examples of Fast Kernel Computations} 

The following codes perform fast kernel computations with the kernel (Gram) matrix 
$$\mK = (K(\bx_i,\bx_{i'}))_{i,i'=0}^{2^m-1}.$$
Specifically, we evaluate the kernel matrix product $\mK \by = \bu$ and solve the linear system $\mK \bv = \by$ for $\bv$, each at $\calO(n \log n)$ cost and requiring only $\calO(n)$ storage following \Cref{corr:fast_gram_matrix_comps}. We also give an example of how to compute $\tby_\mathrm{full} = \overline{\mV_{m+1}} \begin{pmatrix} \by \\ \by_\mathrm{new} \end{pmatrix}$ given $\tby = \overline{\mV_m} \by$ for special fast transform matrices satisfying \Cref{cond:fast_transform}. This is useful for efficient updates after doubling the sample size while retaining the original points. 

First, we consider an SI product kernel $K$ paired with a shifted rank-1 lattice in radical inverse order $(\bx_i)_{i=0}^{2^{m-1}}$.
\lstinputlisting[style=Python]{snippets_qmc/si_lattice.py}
Similarly, we consider a DSI product kernel $K$ paired with a shifted base $2$ digital net in radical inverse order $(\bx_i)_{i=0}^{2^m-1}$.
\lstinputlisting[style=Python]{snippets_qmc/dsi_dnet.py}

The following subsections will define various univariate SI (\Cref{sec:si_kernels}) and DSI (\Cref{sec:dsi_kernels}) kernels $K_\alpha$ which are used to construct the multivariate kernels above. While the univariate kernels we present are SPSD, the multivariate mixture-product or product kernels will be SPD as we always add $\gamma>0$ (corresponding to the $\fu = \emptyset$ term in \eqref{eq:mixture_product_kernel}). Again, we will consider derivatives and integrals of univariate kernels which will be useful later when considering derivative-informed Gaussian process regression (\Cref{sec:gps_deriv_informeds}) and Bayesian cubature (\Cref{sec:mtgp_bayesian_cubature} and \Cref{sec:fmtgp_bayesian_cubature}). Moreover, the univariate SI and DSI kernels we present will all satisfy 
$$\int_0^1 K_\alpha^{(\beta,\beta')}(x,x') \D x' = \int_0^1 \int_0^1 K_\alpha^{(\beta,\beta')}(x,x') \D x \D x' = 0$$
for all $x \in [0,1)$ and all compatible $\alpha \in \bbR_{>0}$ and $\beta,\beta' \in \bbN_0$.

\Section{Shift-Invariant (SI) Kernels} \label{sec:si_kernels}

Here we present weighted Korobov spaces which have been extensively studied throughout the literature \cite{kaarnioja.kernel_interpolants_lattice_rkhs,kaarnioja.kernel_interpolants_lattice_rkhs_serendipitous,cools2021fast,cools2020lattice,sloan2001tractability,kuo2004lattice}. To the best of our knowledge, the kernel derivative forms have not appeared elsewhere in the literature. 

\Subsection{Fourier series of smooth functions}

Assume $f:[0,1] \to \bbR$ has an absolutely convergent Fourier series 
$$f(x) = \sum_{k \in \bbZ} \hf(k) e^{2 \pi \sqrt{-1} k x}, \qquad \hf(k) = \int_0^1 f(x) e^{-2 \pi \sqrt{-1} k x} \D x$$
where $\hf(k)$ are Fourier coefficients. For $\beta \in \bbN_0$, if $f^{(\beta)}(x)$ has an absolutely convergent Fourier series and $f^{(\beta')}$ is periodic for all $\beta' \in \{1,\dots,\beta-1\}$, then 
$$f^{(\beta)}(x) = \sum_{k \in \bbZ} \hf(k) (2 \pi \sqrt{-1} k)^\beta  e^{2 \pi \sqrt{-1} k x}.$$
This implies the following relationship between Fourier coefficients of the function and Fourier coefficients of its derivative:
$$\widehat{f^{(\beta)}}(k)= (2 \pi \sqrt{-1} k)^\beta \hf(k).$$

\Subsection{Periodic Sobolev Spaces}

\begin{definition} \label{def:si_kernel}
    Suppose $\alpha \in \bbR_{>1/2}$. We define the weight $\tr_\alpha: \bbZ_0 \to \bbR$ by
    \begin{equation*}
        \tr_\alpha^2(k) := k^{-2\alpha},
    \end{equation*}
    and define our SPSD SI kernel $\tK_\alpha: [0,1] \times [0,1] \to \bbR$ to be
    $$\tK_\alpha(x,x') = \sum_{k \in \bbZ_0} \tr_\alpha^2(k) e^{2 \pi \sqrt{-1} k(x - x')} = \sum_{k \in \bbZ_0} \tr_\alpha^2(k) \cos(2 \pi k(x - x')).$$ 
\end{definition}

\begin{theorem}
    For $\alpha \in \bbN$ suppose the real-valued functions $f$ and $g$ have absolutely convergent Fourier series and both $f^{(\beta)}$ and $g^{(\beta)}$ are periodic for all $\beta \in \{1,\dots,\alpha-1\}$. Then for the inner product 
    \begin{equation}
        \langle f,g \rangle_{\tK_\alpha} :=  \sum_{k \in \bbZ_0} \tr_\alpha(k)^{-2} \hf_k \overline{\hg_k}
        \label{eq:inner_prod_fourier}
    \end{equation}
    we have
    $$\langle f,g \rangle_{\tK_\alpha} = (-1)^\alpha (2 \pi)^{-2\alpha} \int_0^1 f^{(\alpha)}(x) g^{(\alpha)}(x) \D x.$$
\end{theorem}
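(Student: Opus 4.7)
The plan is to deduce the identity by applying Parseval's (Plancherel's) identity to the $\alpha$-th derivatives $f^{(\alpha)}, g^{(\alpha)}$ and then substituting the derivative--Fourier relation $\widehat{f^{(\beta)}}(k) = (2\pi \sqrt{-1} k)^\beta \hat f(k)$ displayed just above the theorem. Thus the right-hand integral gets rewritten as a sum over $k$ of pairs of Fourier coefficients of $f$ and $g$, which will match the definition \eqref{eq:inner_prod_fourier} once the weight $\tr_\alpha(k)^{-2} = k^{2\alpha}$ is recognized.

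First, I would upgrade the derivative--Fourier relation to hold for all $k \in \bbZ$, not only $k \neq 0$. The periodicity of $f^{(\beta)}$ for $1 \leq \beta \leq \alpha-1$ lets iterated integration by parts on $\int_0^1 f^{(\alpha)}(x) e^{-2\pi\sqrt{-1}kx}\,\D x$ kill all boundary terms; at $k=0$ this also yields the vanishing mean $\int_0^1 f^{(\alpha)}(x)\,\D x = f^{(\alpha-1)}(1) - f^{(\alpha-1)}(0) = 0$. The same holds for $g$. Second, I would invoke Parseval's identity
\[
\int_0^1 f^{(\alpha)}(x) g^{(\alpha)}(x)\,\D x
= \sum_{k \in \bbZ} \widehat{f^{(\alpha)}}(k)\, \overline{\widehat{g^{(\alpha)}}(k)},
\]
noting that since $g$ is real-valued, $g^{(\alpha)}$ is real and $\overline{g^{(\alpha)}(x)} = g^{(\alpha)}(x)$. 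Finally, I would substitute the derivative--Fourier relation and simplify using the bookkeeping
\[
(2\pi\sqrt{-1}k)^\alpha\, \overline{(2\pi\sqrt{-1}k)^\alpha}
= (-1)^\alpha (2\pi\sqrt{-1}k)^{2\alpha}
= (-1)^\alpha (2\pi)^{2\alpha} (\sqrt{-1})^{2\alpha} k^{2\alpha},
\]
together with $(\sqrt{-1})^{2\alpha} = (-1)^\alpha$. The $k=0$ term drops out by the vanishing mean above, leaving the sum over $k \in \bbZ_0$ which, against $\tr_\alpha^{-2}(k) = k^{2\alpha}$, is exactly $\langle f,g\rangle_{\tK_\alpha}$ up to the constant $(-1)^\alpha (2\pi)^{-2\alpha}$ as claimed.

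The main delicate point is not the algebra but justifying that Parseval applies: one needs $f^{(\alpha)}, g^{(\alpha)} \in L^2([0,1])$ and term-by-term differentiability of the Fourier series up to order $\alpha$. The absolute convergence and periodicity hypotheses suffice only weakly; in the RKHS context where $\langle f,f\rangle_{\tK_\alpha} < \infty$, this $L^2$ membership is automatic, so the identity is most cleanly stated and proved for $f,g \in H(\tK_\alpha)$. The sign-bookkeeping through powers of $\sqrt{-1}$ (keeping careful track that $(-1)^\alpha$ survives in the final constant rather than cancelling) is the only step that invites error and is where I would take most care.
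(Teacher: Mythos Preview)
Your approach is exactly the paper's: apply Parseval to $f^{(\alpha)},g^{(\alpha)}$ and substitute $\widehat{f^{(\alpha)}}(k)=(2\pi\sqrt{-1}\,k)^\alpha\hf(k)$. The paper runs the computation in the other direction (starting from the sum and separately expanding the integral via orthogonality of exponentials), but it is the same argument.

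One point deserves scrutiny: your own displayed algebra gives
\[
(2\pi\sqrt{-1}\,k)^\alpha\,\overline{(2\pi\sqrt{-1}\,k)^\alpha}
=(-1)^\alpha(2\pi)^{2\alpha}(\sqrt{-1})^{2\alpha}k^{2\alpha}
=(-1)^\alpha(-1)^\alpha(2\pi)^{2\alpha}k^{2\alpha}
=(2\pi)^{2\alpha}k^{2\alpha},
\]
so the two factors of $(-1)^\alpha$ \emph{do} cancel, contrary to your closing remark that one ``survives.'' Your computation is the correct one: the left side is just $\lvert(2\pi\sqrt{-1}\,k)^\alpha\rvert^2$, which is real and positive. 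A quick check with $\alpha=1$, $f=g=\cos(2\pi x)$ confirms $\langle f,f\rangle_{\tK_1}=1/2$ while $\int_0^1(f')^2=2\pi^2$, so the constant is $(2\pi)^{-2}$ with no sign. The paper's proof drops a conjugate in the same step and thereby matches its own statement, but the $(-1)^\alpha$ in the theorem appears to be spurious. Trust your algebra here rather than bending it to fit the stated constant.
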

\begin{proof}
    \begin{align*}
        \langle f,g \rangle_{\tK_\alpha} &= \sum_{k \in \bbZ_0} k^{2\alpha} \hf(k) \overline{\hg(k)} \\
        &= \sum_{k \in \bbZ_0} (2 \pi \sqrt{-1})^{-2\alpha} \widehat{f^{(\alpha)}(k)} \overline{\widehat{g^{(\alpha)}}(k)} \\
        &= (-1)^\alpha (2 \pi)^{-2 \alpha} \sum_{k \in \bbZ_0} \widehat{f^{(\alpha)}}(k) \overline{\widehat{g^{(\alpha)}}(k)}
    \end{align*}
    and 
    \begin{align*} 
        \int_0^1 f^{(\alpha)}(x) g^{(\alpha)}(x) \D x &= \int_0^1 \left(\sum_{k \in \bbZ_0} \widehat{f^{(\alpha)}}(k) e^{-2 \pi k x}\right)\left(\sum_{h \in \bbZ_0} \overline{\widehat{g^{(\alpha)}}(h) e^{-2 \pi k x}} \right) \D x \\
        &= \sum_{k,h \in \bbZ_0} \widehat{f^{(\alpha)}}(k) \overline{\widehat{g^{(\alpha)}}(h)} \int_0^1 e^{2 \pi i x (k-h)} \D x \\
        &= \sum_{k,h \in \bbZ_0} \widehat{f^{(\alpha)}}(k) \overline{\widehat{g^{(\alpha)}}(h)} \delta_{k,h} \\
        &= \sum_{k \in \bbZ_0} \widehat{f^{(\alpha)}}(k) \overline{\widehat{g^{(\alpha)}}(k)}.
    \end{align*}
\end{proof}

\begin{remark}
    For $\balpha \in \bbN^d$, the RKHS with mixture-product kernels \eqref{eq:mixture_product_kernel} formed from univariate SI kernels $\tK_{\alpha_j}$ is a weighted periodic unanchored Sobolev space of dominating smoothness of order $\balpha$ with RKHS norm 
    $$\lVert f \rVert^2 := \sum_{\fu \subseteq \{1,\dots,d\}} \frac{1}{(2 \pi)^{2 \lvert \fu \rvert}\gamma_\fu} \int_{[0,1]^{\lvert \fu \rvert}} \llvert \int_{[0,1]^{s - \lvert \fu \rvert}} \left(\prod_{j \in \fu} \frac{\partial^{\alpha_j}}{\partial y_j^{\alpha_j}}\right) f(\by) \D \by_{- \fu} \rrvert^2 \D \by_\fu.$$
    Here $f:[0,1]^d \to \bbR$, $\by_\fu = (\by_j)_{j \in \fu}$, $\by_{-\fu} := (y_j)_{j \in \{1,\dots,d\} \setminus \fu}$, and $\lvert \fu \rvert$ is the cardinality of $\fu \subseteq \{1,\dots,d\}$. This is a special case of the weighted Korobov space which allows for real smoothness parameters $\balpha \in \bbR_{>1/2}$ as we have discussed throughout this section. 
\end{remark}

\begin{theorem}
    \label{thm:K_lattice_beta_kappa}
    Let $\alpha \in \bbR_{>1/2}$. Suppose $\beta,\beta' \in \bbN_0$ satisfy $2\alpha-\beta-\beta'>1$. Then, $\tK_\alpha^{(\beta,\beta')}: [0,1) \times [0,1) \to \bbR$ is also a SPSD SI kernel which may be written as
    $$\tK_\alpha^{(\beta,\beta')}(x,y) = (2 \pi \sqrt{-1})^{\beta + \beta'} (-1)^{\beta'} \tK_{\alpha-\beta/2-\beta'/2}(x,y).$$
    Moreover, for any $x \in [0,1)$,
    $$\int_0^1 \tK_\alpha^{(\beta,\beta')}(x,x') \D x' = \int_0^1 \int_0^1 \tK_\alpha^{(\beta,\beta')}(x,x') \D x \D x' = 0.$$
\end{theorem}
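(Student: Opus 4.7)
My plan is to differentiate the Fourier representation of $\tK_\alpha$ term by term, recognize the result as a rescaled kernel of the same family, and then read off the SPSD/SI structure and the integral identities from the resulting formula.

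First, starting from $\tK_\alpha(x,x') = \sum_{k \in \bbZ_0} \tr_\alpha^2(k) e^{2\pi\sqrt{-1} k(x-x')}$, I would compute
\[
\partial_x^\beta \partial_{x'}^{\beta'} e^{2\pi\sqrt{-1} k(x-x')} = (2\pi\sqrt{-1}k)^\beta (-2\pi\sqrt{-1}k)^{\beta'} e^{2\pi\sqrt{-1} k(x-x')} = (2\pi\sqrt{-1})^{\beta+\beta'}(-1)^{\beta'} k^{\beta+\beta'} e^{2\pi\sqrt{-1} k(x-x')},
\]
so combining with $\tr_\alpha^2(k) = k^{-2\alpha}$ turns the $k$-th summand into $(2\pi\sqrt{-1})^{\beta+\beta'}(-1)^{\beta'}\,\tr_{\alpha-\beta/2-\beta'/2}^2(k)\,e^{2\pi\sqrt{-1} k(x-x')}$. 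Summing over $k \in \bbZ_0$ reproduces $(2\pi\sqrt{-1})^{\beta+\beta'}(-1)^{\beta'}\tK_{\alpha-\beta/2-\beta'/2}(x,x')$, which is the claimed identity. The exchange of $\sum$ and $\partial$ is justified by the Weierstrass M-test once the differentiated terms are bounded in absolute value by $|k|^{\beta+\beta'-2\alpha}$: the hypothesis $2\alpha - \beta - \beta' > 1$ is precisely what makes this summable.

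For the SPSD/SI property, the hypothesis also implies $\alpha - \beta/2 - \beta'/2 > 1/2$, so by \Cref{def:si_kernel} the kernel $\tK_{\alpha - \beta/2 - \beta'/2}$ is itself a legitimate SPSD SI kernel. The derivative kernel depends only on $x - x'$, so it is SI by construction, and SPSD will follow once I verify that the scalar prefactor produces a real nonnegative multiple of $\tK_{\alpha-\beta/2-\beta'/2}$. The integral identities then drop out immediately: applying $\int_0^1 (\cdot)\,\D x'$ to the (still absolutely convergent) term-by-term Fourier series and using the orthogonality $\int_0^1 e^{2\pi\sqrt{-1} k x'}\D x' = 0$ for $k \in \bbZ_0$ kills every summand, and the double integral is trivially zero by the same reasoning.

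The main obstacle is the scalar-prefactor bookkeeping needed to conclude SPSD. When $\beta+\beta'$ is even, $(2\pi\sqrt{-1})^{\beta+\beta'}(-1)^{\beta'}$ is a real scalar and the identification with \Cref{def:si_kernel} is direct. When $\beta+\beta'$ is odd, the prefactor is purely imaginary, and I would pair the $+k$ and $-k$ terms in the spectral sum to show that the resulting summation is also purely imaginary, so the product is real; confirming that this real combination is nonnegative, and hence that the stated identity really lands inside the SPSD class, is the most delicate step of the proof.
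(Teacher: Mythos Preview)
Your approach---term-by-term differentiation of the Fourier series to obtain the scaled-kernel identity, then orthogonality of $e^{2\pi\sqrt{-1}kx'}$ over $[0,1)$ for the integral claims---is exactly what the paper does. On the SPSD bookkeeping you flag as the main obstacle: the paper handles real-valuedness in one line (``$\tK_\alpha^{(\beta,\beta')}$ is real-valued since $\tK_\alpha$ is real-valued'') and does not argue positivity beyond the displayed Mercer-type sum; observe that the prefactor $(2\pi\sqrt{-1})^{\beta+\beta'}(-1)^{\beta'}$ reduces to the positive scalar $(2\pi)^{2\beta}$ exactly when $\beta=\beta'$, which is the only case where SPSD is meaningful---for $\beta\neq\beta'$ the object is a cross-covariance block rather than a symmetric kernel, so your parity concern is pointing at an imprecision in the statement rather than a gap in your argument.
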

\begin{proof}
    $\tK_\alpha^{(\beta,\beta')}$ is real-valued since $\tK_\alpha$ is real-valued. The first assertion follows from 
    $$\tK_\alpha^{(\beta,\beta')}(x,y) = \sum_{k \in \bbZ_0} (2 \pi \sqrt{-1} k)^\beta r(k) e^{2 \pi \sqrt{-1} k x}  \overline{r(k) (2 \pi \sqrt{-1} k)^{\beta'} e^{2 \pi \sqrt{-1} k y}}.$$
    The integral forms follow from the fact that for $k \in \bbZ_0$ we have $\int_0^1 e^{2 \pi \sqrt{-1} k x} = 0$.   
\end{proof}

\begin{theorem} \label{thm:si_kernel_bpoly_form}
    For $\alpha \in \bbN$ and $\beta,\beta' \in \bbN_0$ satisfying $2\alpha-\beta-\beta'>1$, we have that for any $x,x' \in [0,1)$
    $$\tK_\alpha^{(\beta,\beta')}(x,x') = \frac{(-1)^{\alpha+\beta'+1}(2 \pi)^{2 \alpha}}{(2 \alpha - \beta - \beta')!} B_{2 \alpha - \beta - \beta'}(x - x' \bmod 1)$$
    where $B_p$ is the $p^\mathrm{th}$ Bernoulli polynomial.
\end{theorem}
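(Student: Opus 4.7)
The plan is to combine \Cref{thm:K_lattice_beta_kappa} with the classical Fourier expansion of the periodic Bernoulli polynomials. Since \Cref{thm:K_lattice_beta_kappa} already rewrites $\tK_\alpha^{(\beta,\beta')}(x,x')$ as the scalar $(2\pi\sqrt{-1})^{\beta+\beta'}(-1)^{\beta'}$ times $\tK_{\alpha-\beta/2-\beta'/2}(x,x')$, the only remaining task is to recognise this second kernel, when the effective smoothness exponent is a positive integer, as a Bernoulli polynomial up to explicit constants.

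Concretely, I would first set $p := 2\alpha-\beta-\beta'$ and observe that the hypotheses $\alpha \in \bbN$, $\beta,\beta' \in \bbN_0$, and $p > 1$ force $p$ to be an integer with $p \geq 2$. Unfolding \Cref{def:si_kernel} yields the absolutely convergent series $\tK_{p/2}(x,x') = \sum_{k \in \bbZ_0} k^{-p} e^{2\pi\sqrt{-1}k(x-x')}$, whose value depends on $x-x'$ only through $x-x' \bmod 1$ because $k$ is an integer. Next, I would invoke the standard identity $\sum_{k \in \bbZ_0} k^{-p} e^{2\pi\sqrt{-1}ky} = -(2\pi\sqrt{-1})^p B_p(\{y\})/p!$, valid for every integer $p \geq 2$, which is the classical Fourier expansion of the periodized Bernoulli polynomial and can be derived by integration by parts against the defining recursion $B_p' = p B_{p-1}$, $\int_0^1 B_p = 0$.

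Substituting that identity into the prefactored form from \Cref{thm:K_lattice_beta_kappa} reduces the proof to collecting constants: the exponents combine via $(\beta+\beta') + p = 2\alpha$, and the power of $\sqrt{-1}$ simplifies through $(\sqrt{-1})^{2\alpha} = (-1)^\alpha$. Tracking the three signs $-1$ (from the Bernoulli identity), $(-1)^\alpha$, and $(-1)^{\beta'}$ produces the announced overall sign $(-1)^{\alpha+\beta'+1}$, while the constants collapse to $(2\pi)^{2\alpha}/(2\alpha-\beta-\beta')!$.

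The only delicate point I anticipate is the boundary case $p = 1$, where the Bernoulli Fourier series diverges (or rather, fails pointwise at integer arguments of $y$); this is precisely why the hypothesis $2\alpha-\beta-\beta' > 1$ combined with integrality is essential and must be invoked before quoting the Fourier identity. No other analytic input beyond absolute convergence is required, so I expect the proof to be short once this caveat is handled.
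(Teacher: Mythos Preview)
Your proposal is correct. The paper actually states \Cref{thm:si_kernel_bpoly_form} without proof, presumably because the Bernoulli polynomial representation of the Korobov kernel is classical; your argument via \Cref{thm:K_lattice_beta_kappa} followed by the standard Fourier expansion $\sum_{k\in\bbZ_0} k^{-p} e^{2\pi\sqrt{-1}ky} = -(2\pi\sqrt{-1})^p B_p(\{y\})/p!$ for integer $p\ge 2$ is exactly the natural route and the constant bookkeeping you outline checks out.
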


\begin{theorem} \label{thm:msdiff_Kcheckhat}
    For $\alpha>\bbR_{>3/2}$, both of the following limits exist and are finite:
    $$\lim_{h \to 0} \frac{\tK_\alpha(x,x)-\tK_\alpha(x+h,x)}{h^2},$$
    $$\lim_{h_1,h_2 \to 0} \frac{\tK_\alpha(x+h_1,x+h_2)-\tK_\alpha(x+h_1,x) - \tK_\alpha(x+h_2,x) + \tK_\alpha(x,x)}{h_1h_2}.$$
\end{theorem}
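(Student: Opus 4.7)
The plan is to use the Fourier series representation from \Cref{def:si_kernel} together with dominated convergence. Writing $\tK_\alpha(x+a,x+b)=Q(a-b)$ with $Q(t)=\sum_{k\in\bbZ_0}k^{-2\alpha}e^{2\pi\sqrt{-1}kt}$, both finite-difference expressions become series over $\bbZ_0$. The role of the hypothesis $\alpha>3/2$ is precisely to ensure that $\sum_{k\in\bbZ_0}k^{-2\alpha+2}$ converges, which supplies the dominating series in each case.

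For the first limit, I would first observe that $\sum_{k\in\bbZ_0}k^{-2\alpha}(1-e^{2\pi\sqrt{-1}kh})=\sum_{k\in\bbZ_0}k^{-2\alpha}(1-\cos(2\pi kh))$ because the imaginary part cancels under the symmetry $k\mapsto -k$ of the weight $k^{-2\alpha}$. Using $1-\cos(2\pi kh)=2\sin^2(\pi kh)$ gives
$$\frac{\tK_\alpha(x,x)-\tK_\alpha(x+h,x)}{h^2}=2\sum_{k\in\bbZ_0}k^{-2\alpha}\,\frac{\sin^2(\pi kh)}{h^2}.$$
The pointwise bound $\lvert\sin(\pi kh)\rvert\le\pi\lvert k\rvert\lvert h\rvert$ dominates the summand by $2\pi^2 k^{-2\alpha+2}$, which is summable when $\alpha>3/2$, so dominated convergence yields the finite limit $2\pi^2\sum_{k\in\bbZ_0}k^{-2\alpha+2}$.

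For the second limit, I would write the four-term expression as $\sum_{k\in\bbZ_0}k^{-2\alpha}\bigl[e^{2\pi\sqrt{-1}k(h_1-h_2)}-e^{2\pi\sqrt{-1}kh_1}-e^{2\pi\sqrt{-1}kh_2}+1\bigr]$. The algebraic identity
$$(e^{2\pi\sqrt{-1}kh_1}-1)(e^{-2\pi\sqrt{-1}kh_2}-1)=e^{2\pi\sqrt{-1}k(h_1-h_2)}-e^{2\pi\sqrt{-1}kh_1}-e^{-2\pi\sqrt{-1}kh_2}+1$$
differs from the bracketed expression by $e^{-2\pi\sqrt{-1}kh_2}-e^{2\pi\sqrt{-1}kh_2}$, whose contribution vanishes after summing against $k^{-2\alpha}$ by the same $k\mapsto -k$ symmetry. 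Dividing by $h_1 h_2$ then leaves
$$\sum_{k\in\bbZ_0}k^{-2\alpha}\,\frac{e^{2\pi\sqrt{-1}kh_1}-1}{h_1}\cdot\frac{e^{-2\pi\sqrt{-1}kh_2}-1}{h_2},$$
with each factor having modulus at most $2\pi\lvert k\rvert$ via $\lvert e^{iy}-1\rvert\le\lvert y\rvert$. The summand is thus dominated by $4\pi^2 k^{-2\alpha+2}$, and dominated convergence gives the joint limit $4\pi^2\sum_{k\in\bbZ_0}k^{-2\alpha+2}$, which is finite.

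The main obstacle is the joint limit in the second part: a naive termwise Taylor argument has no obvious common dominating series because the bracket does not immediately factor. Finding the algebraic identity above that decouples $h_1$ and $h_2$---so each difference quotient admits the elementary bound $\lvert e^{iy}-1\rvert\le\lvert y\rvert$ independently---is what enables the dominated convergence step, and the threshold $\alpha>3/2$ is then forced by the need to sum $k^{-2\alpha+2}$.
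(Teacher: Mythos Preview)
Your proof is correct and follows essentially the same route as the paper: expand via the Fourier series of $\tK_\alpha$, compute the termwise limits, and justify the interchange by exhibiting a summable majorant that requires $\alpha>3/2$. Your factorization $(e^{2\pi\sqrt{-1}kh_1}-1)(e^{-2\pi\sqrt{-1}kh_2}-1)$ for the second limit is in fact more explicit than the paper's treatment, which works directly with the cosine form and asserts the interchange with a somewhat loose ``absolutely convergent'' remark; your bound $4\pi^2\lvert k\rvert^{-2\alpha+2}$ makes the dominated-convergence step airtight.
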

\begin{proof}
    We have 
    \begin{align*}
        \lim_{h \to 0} &\frac{\tK_\alpha(x,x)-\tK_\alpha(x+h,x)}{h^2} = \lim_{h \to 0} \frac{\tK_\alpha(0,0)-\tK_\alpha(h,0)}{h^2} \\
        &= \lim_{h \to 0} \frac{1}{h^2} \sum_{k \in \bbZ_0} \frac{1-\cos(2 \pi k h)}{k^{2 \alpha}} \\
        &= \sum_{k \in \bbZ_0} \frac{1}{k^{2\alpha}} \lim_{h \to 0} \frac{1-\cos(2 \pi k h)}{h^2} \\
        &= 2 \pi^2 \sum_{k \in \bbZ_0} \frac{1}{k^{2 (\alpha-1)}}
    \end{align*}
    and 
    \begin{align*}
        \lim_{h_1,h_2 \to 0} &\frac{\tK_\alpha(x+h_1,x+h_2)-\tK_\alpha(x+h_1,x) - \tK_\alpha(x+h_2,x) + \tK_\alpha(x,x)}{h_1h_2} \\
        &= \lim_{h_1,h_2 \to 0} \frac{\tK_\alpha(h_1-h_2,0)-\tK_\alpha(h_1,0) - \tK_\alpha(h_2,0) + \tK_\alpha(0,0)}{h_1h_2} \\
        &= \lim_{h_1,h_2 \to 0} \frac{1}{h_1h_2} \sum_{k \in \bbZ_0} \frac{\cos(2 \pi k (h_1-h_2)) - \cos(2 \pi k h_1) - \cos(2 \pi k h_2) + 1}{k^{2 \alpha}} \\
        &= \sum_{k \in \bbZ_0} \frac{1}{k^{2\alpha}} \lim_{h_1,h_2 \to 0} \frac{\cos(2 \pi k (h_1-h_2)) - \cos(2 \pi k h_1) - \cos(2 \pi k h_2) + 1}{h_1h_2} \\
        &= 4 \pi^2 \sum_{k \in \bbZ_0} \frac{1}{k^{2(\alpha-1)}}.
    \end{align*}
    We have passed the limits into the sum because the sum is absolutely convergent when $\alpha \in \bbR_{>1/2}$. Moreover, both final expressions are proportional to $\sum_{k \in \bbZ_0} \frac{1}{k^{2(\alpha-1)}}$, which is finite whenever $\alpha>3/2$.
\end{proof}

\Section{Digitally-Shift-Invariant (DSI) Kernels} \label{sec:dsi_kernels}

Here we will present classes of DSI kernels. An existing class of low-order smoothness DSI kernels is studied in \cite{dick.multivariate_integraion_sobolev_spaces_digital_nets}. Here we derive new higher-order smoothness DSI kernels which exploit the decay of Walsh coefficients of smooth functions as studied in \cite{dick.decay_walsh_coefficients_smooth_functions}. The forms of our new higher-order smoothness DSI kernels have appeared in worst case error bounds for Quasi-Monte Carlo methods using polynomial lattice rules \cite{baldeaux.polylat_efficient_comp_worse_case_error_cbc}, but there they were not interpreted kernels. We also derive a new expression for a certain $\alpha=4$ higher-order smoothness DSI kernels which has not appeared elsewhere in the literature.

In what follows, we will assume $b \in \bbN$ is a fixed prime base. Let $\beta(x)$ be the power of $b^{-1}$ of the first non-zero digit in the base $b$ expansion of $x \in [0,1)$, i.e., 
$$\beta(x) := - \lfloor \log_b(x) \rfloor.$$

\begin{definition} \label{def:walsh_funcs}
    For $k \in \bbN_0$ with $k = \sum_{\ell \in \bbN_0} \mk_\ell b^\ell$ and any $x \in [0,1)$ with $x = \sum_{\ell \in \bbN} \mx_\ell b^{-\ell} \in [0,1)$, define the $k^\mathrm{th}$ Walsh function 
    $$\wal_k(x) = \exp\left(\frac{2 \pi \sqrt{-1}}{b} \sum_{\ell \in \bbN_0} \mk_\ell \mx_{\ell+1}\right).$$
\end{definition}

\begin{lemma} \label{lemma:walsh_func_properties}
    Let $k,h \in \bbN_0$ and $x,y \in [0,1)$, with the restriction that if $x,y$ are not $b$-adic rationals then $x \oplus y$ and $x \ominus y$ are not allowed to be a $b$-adic rationals. Then
    \begin{enumerate}
        \item $\wal_k(0) = 1.$
        \item \begin{alignat*}{2}
            \wal_k(x) \wal_h(x) &= \wal_{k \oplus h}(x), \qquad \wal_k(x)\overline{\wal_h(x)} &&= \wal_{k \ominus h}(x), \\
            \wal_k(x)\wal_k(y) & = \wal_k(x \oplus y), \qquad \wal_k(x) \overline{\wal_k(y)} &&= \wal_k(x \ominus y).
        \end{alignat*}
        \item
        $$\int_0^1 \wal_k(x) \D x = \begin{cases} 1, & k = 0 \\ 0, & k > 0 \end{cases}.$$
        \item $\left\{\wal_k: k \in \bbN_0\right\}$ is a complete orthonormal system in $\calL_2([0,1))$, the space of square integrable functions.
        \item For any $f \in \calL_2([0,1))$ we have  
        $$\int_0^1 f(\sigma) \D \sigma = \int_0^1 f(x \oplus \sigma) \D \sigma = \int_0^1 f(x \ominus \sigma) \D \sigma.$$
        \item $$\sum_{k=0}^{b^a-1} \wal_k(x) = \begin{cases} b^a, & a \leq \beta(x) - 1, \\ 0, & \mathrm{otherwise} \end{cases}.$$ 
    \end{enumerate}
\end{lemma}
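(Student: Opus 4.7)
The plan is to establish each property in turn, exploiting the fact that $\wal_k(x)$ depends only on the first $\lfloor \log_b k \rfloor + 1$ digits of $x$ and that the phase is a linear function (modulo $b$) of the digit products $\mk_\ell \mx_{\ell+1}$. Properties 1 and 2 should be essentially immediate from the definition. For 1, all digits of $0$ are $0$, so the exponent vanishes. For 2, each identity follows by substituting the base-$b$ digit expansions of $k \oplus h$, $k \ominus h$, $x \oplus y$, and $x \ominus y$ into \Cref{def:walsh_funcs}: since $\oplus,\ominus$ act digit-wise modulo $b$, and $e^{2\pi\sqrt{-1}/b}$ is a primitive $b^{\text{th}}$ root of unity, digit-wise addition mod $b$ in the exponent is the same as multiplication (or conjugation for subtraction) of the Walsh values. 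The technical restriction on $b$-adic rationals is precisely what guarantees the digit expansions used on both sides of each identity agree.

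For property 3, I will handle $k = 0$ trivially, then for $k \geq 1$ let $a \in \bbN_0$ be such that $b^a \leq k < b^{a+1}$, so $\mk_a \neq 0$. Partitioning $[0,1)$ into $b^{a+1}$ intervals of length $b^{-a-1}$ fixes the first $a+1$ digits of $x$ on each sub-interval and leaves the remaining digits free. On each such sub-interval the value $\wal_k(x)$ is constant, and summing over the $b$ choices of the $(a+1)^{\text{st}}$ digit $\mx_{a+1}$ gives $\sum_{\mx_{a+1}=0}^{b-1} e^{2\pi\sqrt{-1}\mk_a\mx_{a+1}/b} = 0$, which yields $\int_0^1 \wal_k(x)\D x = 0$.

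For property 4, orthonormality follows by combining 2 and 3: $\int_0^1 \wal_k(x) \overline{\wal_h(x)} \D x = \int_0^1 \wal_{k \ominus h}(x) \D x$, which equals $1$ iff $k = h$. Completeness is the classical fact that the Walsh system forms an orthonormal basis of $\calL_2([0,1))$ (it coincides with the character group of the compact abelian group $\bbZ_b^{\bbN}$ under the Monna map); I would simply cite this rather than reproduce the Stone--Weierstrass / density argument. Property 5 then follows by expanding $f = \sum_k \hat f_k \wal_k$ in $\calL_2$, applying the product identity in 2 to write $\wal_k(x \oplus \sigma) = \wal_k(x)\wal_k(\sigma)$, and using 3 to annihilate all terms with $k \geq 1$, leaving only $\hat f_0 = \int_0^1 f$; the $\ominus$ case is identical.

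For property 6, the key observation is that $\sum_{k=0}^{b^a-1} \wal_k(x)$ factors as $\prod_{\ell=0}^{a-1} \sum_{\mk_\ell=0}^{b-1} e^{2\pi\sqrt{-1}\mk_\ell \mx_{\ell+1}/b}$, since every $k \in \{0,\dots,b^a-1\}$ is uniquely specified by its first $a$ digits. The inner geometric sum equals $b$ when $\mx_{\ell+1} = 0$ and $0$ otherwise, so the whole product is $b^a$ precisely when the first $a$ digits of $x$ (i.e., $\mx_1,\dots,\mx_a$) all vanish, which is exactly the condition $a \leq \beta(x) - 1$, and is $0$ otherwise. The only mild subtlety will be verifying the factorization carefully in light of the $b$-adic rational caveat, but since the sum is finite this should not pose a real obstacle. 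Overall the main delicate step is 4 (completeness), where I would lean on a reference rather than an in-text proof.
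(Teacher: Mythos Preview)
Your proposal is correct and in fact goes well beyond what the paper does: the paper's own proof of this lemma is simply a citation, stating that the properties are well known and referring the reader to \cite{baldeaux.HO_nets_RKHS}, \cite{baldeaux.polylat_efficient_comp_worse_case_error_cbc}, \cite{chrestenson.class_generalized_walsh_functions}, and \cite{walsh.closed_set_normal_orthogonal_functionss} for details. Your sketches for each item are the standard arguments and would constitute a genuine proof where the paper provides none; in particular your treatment of properties 3 and 6 via digit-wise factorization and geometric sums of $b^{\text{th}}$ roots of unity, and your derivation of 5 from 2--4, are exactly how one would fill in the details the paper omits.
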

\begin{proof}
    These properties are well known and stated in many papers, e.g., \cite[Proposition 3]{baldeaux.HO_nets_RKHS}. Most proofs are straightforward, while others can be found in \cite{baldeaux.polylat_efficient_comp_worse_case_error_cbc} or \cite{chrestenson.class_generalized_walsh_functions} or \cite{walsh.closed_set_normal_orthogonal_functionss}. 
\end{proof} 

For $k \in \bbN$ write $k = \sum_{\ell=1}^{\# k} \mk_{a_\ell} b^{a_\ell}$ where $a_1 > \dots > a_{\#k} \geq 0$ and $\mk_{a_\ell} \in \{1,\dots,b-1\}$ for $1 \leq \ell \leq \#k$. Here $\#k$ is the number of non-zero elements in the base $b$ expansion of $k$. For $k=0$, set $\#k=0$ and set the empty sum to $0$ as well. Moreover, we may set $k' = k-\mk_{a_1} b^{a_1}$. 

\begin{definition} \label{def:dick_weight_func} 
    We define the Dick weight functions for $\alpha,k \in \bbN_0$ as 
    $$\mu_\alpha(k) = \sum_{\ell=1}^{\min(\alpha,\#k)} (a_\ell+1), \qquad \rmu_\alpha(k) = \mu_\alpha(k) + \max(\alpha-\#k,0)(a_{\#k}+1)$$
    where $\mu_\alpha(0) = \mu_0(k) = \rmu_\alpha(0) = \rmu_0(k) = 0$. These are used extensively in \cite{dick.decay_walsh_coefficients_smooth_functions} which provides many of the inequalities used in the following subsections. 
\end{definition}

\Subsection{RKHS Preliminaries for Higher-Order Smoothness DSI Kernels} 

For $x \in [0,1)$ and $k \in \bbN_0$ define the Walsh coefficient
$$\hf(k) = \int_0^1 f(x) \overline{\wal_k(x)} \D x.$$ 
If $f$ has an absolutely convergent Walsh series representation then 
$$f(x) = \sum_{k \in \bbN_0} \hf(k) \wal_k(x).$$
For any kernel $K: [0,1) \times [0,1) \to \bbR$, we will denote its Walsh coefficient at $k,h \in \bbN_0$ (when it exists) by
$$\calK(k,h) := \int_0^1 \int_0^1 K(x,x') \overline{\wal_k(x)} \wal_h(x') \D x' \D x.$$ 
We will carry accents through this notation as well, e.g., $\tcalK(k,h)$ is the Walsh coefficient corresponding to $\tK$. 

\begin{definition} \label{def:Q_DSI_kernel_from_Q}
    Following \cite[Defintion 6]{dick.multivariate_integraion_sobolev_spaces_digital_nets}, for any SPSD kernel $K: [0,1) \times [0,1) \to \bbR$, we define the associated SPSD DSI kernel for any $x,x' \in [0,1)$ to be  
    $$K^\mathrm{DSI}(x,x') := \int_0^1 K(x \oplus \sigma, x' \oplus \sigma) \D \sigma.$$
\end{definition}
 
\begin{lemma} \label{lemma:Q_DSI_kernel_form}
    For $K^\mathrm{DSI}$ in \Cref{def:Q_DSI_kernel_from_Q}, if $\calK(k,0) = \calK(0,k) = 0$ for all $k \in \bbN_0$, then
    $$K^\mathrm{DSI}(x,x') = \sum_{k \in \bbN} \calK(k,k) \wal_k(x \ominus x'), \qquad (\calK(k,k))_{k \in \bbN} \subset \bbR.$$ 
\end{lemma}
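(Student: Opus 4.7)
My plan is to expand $K$ in a double Walsh series, substitute into the definition of $K^\mathrm{DSI}$, and then collapse the double sum to a diagonal sum using orthonormality of Walsh functions together with their multiplicative property under $\oplus$.

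First, since $\{\wal_k\}_{k \in \bbN_0}$ is a complete orthonormal system in $\calL_2([0,1))$ by \Cref{lemma:walsh_func_properties}, any suitably regular real-valued kernel $K$ admits the expansion $K(x,x') = \sum_{k,h \in \bbN_0} \calK(k,h) \wal_k(x) \overline{\wal_h(x')}$, where $\calK(k,h)$ is the Walsh coefficient defined in the excerpt. (A preliminary step here is to note that $K$ is bounded enough for this expansion to make sense and to justify the interchange of sum and integral below; the standard $L_2$ argument suffices and I will gloss over it as routine.)

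Next, I would substitute this expansion into $K^\mathrm{DSI}(x,x') = \int_0^1 K(x \oplus \sigma, x' \oplus \sigma) \D \sigma$ and use the multiplicative identities $\wal_k(x \oplus \sigma) = \wal_k(x)\wal_k(\sigma)$ and $\overline{\wal_h(x' \oplus \sigma)} = \overline{\wal_h(x')}\,\overline{\wal_h(\sigma)}$ from \Cref{lemma:walsh_func_properties}, obtaining
\begin{equation*}
K^\mathrm{DSI}(x,x') = \sum_{k,h \in \bbN_0} \calK(k,h) \wal_k(x) \overline{\wal_h(x')} \int_0^1 \wal_k(\sigma) \overline{\wal_h(\sigma)} \D \sigma.
\end{equation*}
By orthonormality the inner integral equals $\delta_{k,h}$, so only the diagonal terms survive, giving $K^\mathrm{DSI}(x,x') = \sum_{k \in \bbN_0} \calK(k,k) \wal_k(x) \overline{\wal_k(x')}$. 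Applying once more the multiplicative identity $\wal_k(x) \overline{\wal_k(x')} = \wal_k(x \ominus x')$, and invoking the hypothesis $\calK(0,0) = 0$ (the $k=0$ case of the standing assumption) to drop the $k = 0$ term, yields the desired formula.

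It remains to show $\calK(k,k) \in \bbR$ for each $k \in \bbN$. Since $K$ is symmetric and real-valued, swapping the dummy variables $x \leftrightarrow x'$ in the definition of $\calK(k,k)$ gives $\calK(k,k) = \int_0^1 \int_0^1 K(x,x') \wal_k(x) \overline{\wal_k(x')} \D x \D x' = \overline{\calK(k,k)}$, which is exactly what is needed.

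The only step that requires any care is the justification for interchanging the infinite sum with the integral over $\sigma$ when substituting the Walsh expansion into $K^\mathrm{DSI}$; everything else is direct from the properties already recorded in \Cref{lemma:walsh_func_properties}. I would handle this by Parseval-type arguments under the standing regularity assumption on $K$ (or by appealing to dominated convergence once $K$ is known to be bounded), and otherwise treat it as routine.
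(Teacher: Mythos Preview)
Your proposal is correct and follows essentially the same approach as the paper: expand $K$ in its double Walsh series, substitute into the definition of $K^{\mathrm{DSI}}$, use the multiplicative property of Walsh functions under $\oplus$ to separate variables, apply orthonormality to collapse to the diagonal, and use symmetry of $K$ to conclude $\calK(k,k)\in\bbR$. The only cosmetic difference is that the paper invokes the hypothesis $\calK(k,0)=\calK(0,k)=0$ up front to sum over $k,h\in\bbN$ from the start, whereas you keep $k,h\in\bbN_0$ and drop the $k=0$ term at the end.
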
 
\begin{proof}
    Following \cite[Lemma 5]{dick.multivariate_integraion_sobolev_spaces_digital_nets}, we have that 
    \begin{align*}
        K^\mathrm{DSI}(x,x') &= \int_0^1 K(x \oplus \sigma, x' \oplus \sigma) \D \sigma \\
        &= \int_0^1 \sum_{k,h \in \bbN} \calK(k,h) \wal_k(x \oplus \sigma) \overline{\wal_h(x' \oplus \sigma)} \D \sigma  \\ 
        &= \sum_{k,h \in \bbN} \calK(k,h) \wal_k(x) \overline{\wal_k(x')} \int_0^1 \wal_k(\sigma) \overline{\wal_h(\sigma)} \D \sigma  \\
        &= \sum_{k \in \bbN} \calK(k,k) \wal_k(x \ominus x')
    \end{align*}
    where $\calK(k,k) \in \bbR$ follows from $K(x,x') = \overline{K(x',x)}$. 
\end{proof} 

\begin{lemma} \label{lemma:RKHS_feature_map_form} 
    If a SPSD DSI kernel $K: [0,1) \times [0,1) \to \bbR$ can be written as 
    $$K(x,x') = \sum_{k \in \bbN} r^2(k) \wal_k(x \ominus x') \qquad \forall\; x,x' \in [0,1)$$
    for some $r: \bbN \to \bbC$, then the RKHS of $K$ can be written as 
    $$H(K) = \left\{f: [0,1) \to \bbR \; | \quad \; \exists\; (w_k)_{k \in \bbN} \in \ell^2(\bbC), \quad f(x) =  \sum_{k \in \bbN} w_k r(k) \wal_k(x) \right\}.$$
\end{lemma}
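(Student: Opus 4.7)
My plan is to exhibit the right-hand side as a Hilbert space on which $K$ has the reproducing property, then invoke the uniqueness part of the Moore--Aronszajn Theorem \ref{thm:moore_aronszajn} to conclude that it equals $H(K)$. Denote the candidate space by $\calH$. The basic intuition is that the map $(w_k)_{k \in \bbN} \mapsto \sum_k w_k r(k)\wal_k$ should be a unitary isomorphism from (a real subspace of) $\ell^2(\bbC)$ onto $\calH$.

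Step 1 is to equip $\calH$ with the inner product $\langle f, g \rangle_\calH := \sum_{k \in \bbN} w_k \overline{v_k}$ where $f = \sum_k w_k r(k) \wal_k$ and $g = \sum_k v_k r(k) \wal_k$; for this I need the coefficients to be uniquely determined by $f$. Assuming $r(k) \neq 0$ for every $k$, the Walsh orthonormality in Lemma \ref{lemma:walsh_func_properties} gives $w_k = \hf(k)/r(k)$, where $\hf(k) = \int_0^1 f(x)\overline{\wal_k(x)}\,\D x$; the $r(k) = 0$ case is handled by restricting to indices with $r(k) \neq 0$ (these terms contribute nothing to $K$ and hence should not be counted in $\calH$). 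Step 2 is to check that $\calH$ is complete by transporting completeness from $\ell^2(\bbC)$ along this correspondence, and to check that the defining series converges uniformly/pointwise (the Cauchy--Schwarz bound $\lvert\sum w_k r(k)\wal_k(x)\rvert \leq \lVert(w_k)\rVert_{\ell^2} (\sum \lvert r(k)\rvert^2)^{1/2}$ uses $K(x',x') = \sum_k r^2(k) < \infty$, which holds by SPSD-ness).

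Step 3 is to show $K(\cdot, x') \in \calH$ for every fixed $x' \in [0,1)$. Using $\wal_k(x \ominus x') = \wal_k(x)\overline{\wal_k(x')}$ from Lemma \ref{lemma:walsh_func_properties}, I rewrite $K(x,x') = \sum_k \bigl(r(k)\overline{\wal_k(x')}\bigr)\, r(k)\wal_k(x)$, identifying the coefficients as $u_k(x') = r(k)\overline{\wal_k(x')}$; these satisfy $\sum_k \lvert u_k(x')\rvert^2 = \sum_k \lvert r(k)\rvert^2 < \infty$, so $K(\cdot, x') \in \calH$. Step 4 is the reproducing property:
\begin{equation*}
\langle f, K(\cdot, x')\rangle_\calH = \sum_{k \in \bbN} w_k \,\overline{u_k(x')} = \sum_{k \in \bbN} w_k\, r(k)\wal_k(x') = f(x'),
\end{equation*}
where the middle equality uses that $r^2(k)$ is necessarily real and nonnegative because $K$ is real-valued and SPSD (so $r(k)$ is either real or purely imaginary, and in both cases $\overline{r(k)} \wal_k(x') = r(k) \wal_k(x')$ after absorbing signs; alternatively one verifies $\overline{u_k(x')} = r(k)\wal_k(x')$ directly from $r^2(k) \in \bbR_{\geq 0}$). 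Steps 1--4 together identify $\calH$ as a reproducing kernel Hilbert space with kernel $K$, and Theorem \ref{thm:moore_aronszajn} then forces $\calH = H(K)$.

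The main obstacle is bookkeeping around well-definedness and the real-valued constraint rather than any deep computation. Specifically: (i) the representation $f = \sum w_k r(k)\wal_k$ must determine $(w_k)$ uniquely, which requires handling the $r(k) = 0$ indices carefully; (ii) although $\wal_k$ is complex-valued and we are indexing $\calH$ by complex sequences, the space must consist of real-valued functions, which imposes a conjugation symmetry on $(w_k r(k))$ inherited from the real-valuedness of the Walsh expansion of $K$; and (iii) justifying that $\calH$ is closed under the inner product (rather than merely a pre-Hilbert space) reduces to completeness of $\ell^2$, but one should verify that pointwise evaluation is continuous on $\calH$ so that the limit in $\calH$-norm of a Cauchy sequence agrees with a genuine function on $[0,1)$, which in turn follows from the kernel's diagonal being bounded.
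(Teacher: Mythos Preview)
The paper states this lemma without proof, treating it as a standard feature-map characterization of an RKHS. Your proposal supplies exactly the standard argument: identify $\calH$ with (a subspace of) $\ell^2$ via the map $(w_k) \mapsto \sum_k w_k r(k)\wal_k$, verify $K(\cdot,x') \in \calH$ and the reproducing property using $\wal_k(x\ominus x') = \wal_k(x)\overline{\wal_k(x')}$, and invoke Moore--Aronszajn uniqueness. This is correct and is precisely what the paper would expect a reader to fill in.

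Two minor tightenings are worth making. First, in Step~4 your phrase ``$r(k)$ is either real or purely imaginary, and in both cases $\overline{r(k)}\wal_k(x') = r(k)\wal_k(x')$ after absorbing signs'' is slightly off: if $r(k)$ were purely imaginary then $r^2(k) < 0$, contradicting the SPSD hypothesis, so in fact $r^2(k) \in \bbR_{\geq 0}$ forces $r(k) \in \bbR$ outright, and then $\overline{u_k(x')} = r(k)\wal_k(x')$ is immediate. Second, the real-valuedness constraint on $f$ (your obstacle~(ii)) is genuine in bases $b>2$ where $\wal_k$ is complex-valued; it translates into a conjugation symmetry on the Walsh coefficients $\hf(k)$, and the ``real subspace of $\ell^2(\bbC)$'' you allude to should be made explicit as the set of $(w_k)$ for which $\sum_k w_k r(k)\wal_k$ is real-valued. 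In base $b=2$ (the case the paper ultimately cares about) all Walsh functions are $\pm 1$-valued and this issue evaporates.
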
 

\begin{theorem} \label{thm:RKHS_inclusions_mercer}
    Suppose we are given a SPSD kernel $K$ and construct $K^\mathrm{DSI}$ as in \Cref{def:Q_DSI_kernel_from_Q}. Moreover, suppose we construct the kernel
    $$R(x,y) = \sum_{k \in \bbN} r^2(k) \wal_k(x \ominus x')$$
    by choosing $(r(k))_{k \in \bbN} \subset \bbR_{>0}$ so that $R$ is SPSD. 
    \begin{enumerate}
        \item If there is some $C \in \bbR_{\geq 0}$ such that $\calK(k,k) \leq C r^2(k)$ for every $k \in \bbN$, then $H(K^\mathrm{DSI}) \subseteq H(R)$.
        \item If there is some $C \in \bbR_{\geq 0}$ such that $\calK(k,k) \geq C r^2(k) $ for every $k \in \bbN$, then $H(R) \subseteq H(K^\mathrm{DSI})$.
    \end{enumerate} 
\end{theorem}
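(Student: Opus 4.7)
The plan is to reduce both RKHSs to explicit weighted-$\ell^2$ spaces of Walsh coefficients, after which the two inclusions become coordinate-wise comparisons. First I would invoke Lemma \ref{lemma:Q_DSI_kernel_form} to get the expansion $K^\mathrm{DSI}(x,x') = \sum_{k \in \bbN} \calK(k,k) \wal_k(x \ominus x')$, noting that the SPSD hypothesis on $K$ is inherited by $K^\mathrm{DSI}$ so $\calK(k,k) \geq 0$ for every $k$. I would then apply Lemma \ref{lemma:RKHS_feature_map_form} to $R$ directly (the weights $r(k)$ are positive by assumption) and a mild extension of the same lemma to $K^\mathrm{DSI}$, yielding
$$\|f\|_{H(R)}^2 = \sum_{k \in \bbN} \frac{|\hf(k)|^2}{r^2(k)}, \qquad \|f\|_{H(K^\mathrm{DSI})}^2 = \sum_{\substack{k \in \bbN \\ \calK(k,k)>0}} \frac{|\hf(k)|^2}{\calK(k,k)},$$
where membership in $H(K^\mathrm{DSI})$ additionally forces $\hf(k)=0$ at any index with $\calK(k,k)=0$.

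For part one, let $f \in H(K^\mathrm{DSI})$. Whenever $\calK(k,k)>0$ the hypothesis gives $r^2(k) \geq \calK(k,k)/C$, and whenever $\calK(k,k)=0$ the Walsh coefficient $\hf(k)$ already vanishes, so term-by-term summation produces the norm bound $\|f\|_{H(R)}^2 \leq C\,\|f\|_{H(K^\mathrm{DSI})}^2 < \infty$. Hence $f \in H(R)$, which establishes $H(K^\mathrm{DSI}) \subseteq H(R)$.

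For part two, assume $C>0$ (otherwise the hypothesis reduces to the trivially true $\calK(k,k) \geq 0$ and the containment need not hold). Then $\calK(k,k) \geq C r^2(k) > 0$ for every $k \in \bbN$, so no coefficient vanishes and dividing yields $1/\calK(k,k) \leq 1/(C r^2(k))$ term-by-term. For $f \in H(R)$ this gives $\|f\|_{H(K^\mathrm{DSI})}^2 \leq C^{-1} \|f\|_{H(R)}^2 < \infty$, and $H(R) \subseteq H(K^\mathrm{DSI})$ follows.

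The main obstacle I anticipate is the zero-eigenvalue bookkeeping in the Walsh expansion of $K^\mathrm{DSI}$: Lemma \ref{lemma:RKHS_feature_map_form} as stated uses strictly positive weights, so I would need to justify the extension to the nonnegative case by either quotienting out the degenerate Walsh modes or, equivalently, showing that functions in $H(K^\mathrm{DSI})$ must have vanishing Walsh coefficients at every $k$ with $\calK(k,k)=0$. Once that coefficient-wise characterization is in hand, both inclusions collapse to the one-line estimates above.
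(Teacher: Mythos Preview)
Your approach is essentially the same as the paper's: both characterize $H(R)$ and $H(K^{\mathrm{DSI}})$ via Lemma~\ref{lemma:RKHS_feature_map_form} and then compare termwise in the Walsh basis. The paper phrases the comparison through the feature-map representation $f=\sum_k w_k r(k)\wal_k$ with $(w_k)\in\ell^2$, while you phrase it through the equivalent weighted-$\ell^2$ norm $\|f\|^2=\sum_k |\hf(k)|^2/r^2(k)$; the underlying argument is identical.

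One remark on your anticipated obstacle: Lemma~\ref{lemma:RKHS_feature_map_form} as stated in the paper allows $r:\bbN\to\bbC$ with no positivity assumption, so it applies directly to $K^{\mathrm{DSI}}$ with $r(k)=\sqrt{\calK(k,k)}$. When $\calK(k,k)=0$ the feature-map form automatically forces the $k$th term of any $f\in H(K^{\mathrm{DSI}})$ to vanish, so no separate extension or quotienting is required. Your observation that part two needs $C>0$ is correct and is a point the paper's proof leaves implicit.
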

\begin{proof}
    By \Cref{lemma:RKHS_feature_map_form} we have 
    $$H(R) = \left\{f: [0,1) \to \bbR \;|\; \exists\; (w_k)_{k \in \bbN} \in \ell^2(\bbC), f(x) = \sum_{k \in \bbN} w_k r(k) \wal_k(x)\right\} \qquad\text{and}$$
    $$H(K^\mathrm{DSI}) = \left\{f: [0,1) \to \bbR \;|\; \exists\; (v_k)_{k \in \bbN} \in \ell^2(\bbC), f(x) = \sum_{k \in \bbN} v_k \sqrt{\calK(k,k)} \wal_k(x)\right\}$$
    where $\ell^2(\bbC)$ is the space of sequences $(x_k)_{k \geq 0} \subset \bbC$ satisfying $\sum_{k \geq 0} \lvert x_k \rvert^2 < \infty$. 
    
    For the first claim, if $f \in H(K^\mathrm{DSI})$ then for some $(v_k)_{k \in \bbN} \subset \ell^2(\bbC)$ we have 
    $$f(x) = \sum_{k \in \bbN} v_k \sqrt{\calK(k,k)} \wal_k(x).$$
    Define $w_k = v_k \sqrt{\calK(k,k)}/r(k)$ so that 
    $$f(x) = \sum_{k \in \bbN} w_k r(k) \wal_k(x).$$ By the assumption, there is some $C \in \bbR_{\geq 0}$ such that $\calK(k,k) \leq C r^2(k)$ which implies $w_k \leq \sqrt{C} v_k$ for all $k \in \bbN_0$. This shows $(w_k) \in \ell^2(\bbC)$ and thus $f \in H(R)$. 

    For the second claim, if $f \in H(R)$ then for some $(w_k)_{k \in \bbN} \subset \ell^2(\bbC)$ we have 
    $$f(x) = \sum_{k \in \bbN} w_k r(k) \wal_k(x).$$
    Define $v_k = w_k r(k)/\sqrt{\calK(k,k)}$ so that 
    $$f(x) = \sum_{k \in \bbN} v_k \sqrt{\calK(k,k)} \wal_k(x).$$ By the assumption, there is some $C \in \bbR_{\geq 0}$ such that $ r^2(k) \leq C \calK(k,k)$ which implies $v_k \leq \sqrt{C} w_k$ for all $k \in \bbN$. This shows $(v_k) \in \ell^2(\bbC)$ and thus $f \in H(K^\mathrm{DSI})$.
\end{proof}

\Subsection{DSI Kernels for Smooth Periodic Functions} \label{sec:dsi_kernels_smooth_periodic_functions} 

Recall that for $\alpha \in \bbR_{>1/2}$, the SPSD kernel $\tK_\alpha$ in \Cref{def:si_kernel} corresponds to the weighted Korobov RKHS of smooth periodic functions. 

\begin{lemma} \label{lemma:baldeaux2009_lemma19}
    For every $\alpha \in \bbN$  we have $\tcalK_\alpha(k,0) = \tcalK_\alpha(0,k) = 0$ for all $k \in \bbN_0$ and there is some $\rC_\alpha \in \bbR_{>0}$ such that for any $k \in \bbN$
    $$\tcalK_\alpha(k,k) \leq \rC_\alpha b^{-2\rmu_\alpha(k)}.$$
\end{lemma}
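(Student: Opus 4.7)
\emph{Vanishing claim.} Since $\wal_0 \equiv 1$ by \Cref{lemma:walsh_func_properties}, Fubini gives
\[
\tcalK_\alpha(k,0) = \int_0^1 \overline{\wal_k(x)}\left[\int_0^1 \tK_\alpha(x,x')\,\D x'\right]\D x.
\]
I would then substitute the absolutely convergent Fourier series of \Cref{def:si_kernel} (it is absolutely convergent because $\alpha \in \bbN$ gives $2\alpha \ge 2$), interchange sum and integral, and observe that $\int_0^1 e^{-2\pi\sqrt{-1}mx'}\,\D x' = 0$ for every $m \in \bbZ_0$. Hence the inner bracket is identically zero and $\tcalK_\alpha(k,0)=0$; the computation for $\tcalK_\alpha(0,k)$ is identical (or follows from Hermitian symmetry of $\tK_\alpha$).

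\emph{Reduction of the decay bound.} For the estimate on $\tcalK_\alpha(k,k)$, I would again insert the Fourier series of $\tK_\alpha$ and split the double integral. Defining the mixed Walsh--Fourier coefficients $\hw_{k,m} := \int_0^1 e^{2\pi\sqrt{-1}mx}\overline{\wal_k(x)}\,\D x$, this yields
\[
\tcalK_\alpha(k,k) \;=\; \sum_{m \in \bbZ_0} m^{-2\alpha}\,\lvert \hw_{k,m}\rvert^2.
\]
The problem is thus reduced to a one-dimensional question on the Walsh coefficients of the smooth function $x\mapsto e^{2\pi\sqrt{-1}mx}$.

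\emph{Applying Walsh-decay estimates.} I would then invoke the decay bounds of \cite{dick.decay_walsh_coefficients_smooth_functions} for Walsh coefficients of functions with bounded derivatives; specialized to $e^{2\pi\sqrt{-1}m\,\cdot}$, these give sharp control on $\lvert \hw_{k,m}\rvert$ of the form $b^{-\rmu_\alpha(k)}$ with an $m$-dependent factor that, crucially, is large only on a range of $m$ determined by the top nonzero digits of $k$. Combining this with the $m^{-2\alpha}$ weight and summing, one obtains $\tcalK_\alpha(k,k) \leq \rC_\alpha b^{-2\rmu_\alpha(k)}$. This computation is exactly the content of \cite[Lemma~19]{baldeaux.polylat_efficient_comp_worse_case_error_cbc}, from which the constant $\rC_\alpha$ may be taken directly; an alternative route is to apply the integration-by-parts Walsh-coefficient lemma of \cite{dick.decay_walsh_coefficients_smooth_functions} twice (once for each variable) to the Bernoulli form $\tK_\alpha(x,x')= c_\alpha B_{2\alpha}(x-x' \bmod 1)$ from \Cref{thm:si_kernel_bpoly_form}, the factor $2$ in $2\rmu_\alpha(k)$ reflecting the two variables.

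\emph{Main obstacle.} The delicate point is sharpness: the naive single-variable estimate $\lvert \hw_{k,m}\rvert \leq C_\alpha (2\pi\lvert m\rvert)^\alpha b^{-\rmu_\alpha(k)}$ produces a divergent sum $\sum_m m^{-2\alpha}\cdot m^{2\alpha}$, so that direct bound is useless. The proof must instead track that, for each fixed $k$, the integration-by-parts cancellation saturates only for $m$ in a window dictated by the top $\alpha$ digits of $k$, with rapid additional decay outside that window; balancing these two regimes is what produces a bound that is both uniform in $m$ once weighted by $m^{-2\alpha}$ and tight enough to keep the $b^{-2\rmu_\alpha(k)}$ factor. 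This dyadic (digit-wise) bookkeeping on $k$ is where the real work of \cite[Lemma~19]{baldeaux.polylat_efficient_comp_worse_case_error_cbc} lies, and would be the technical bottleneck in a self-contained write-up.
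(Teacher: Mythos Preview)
Your proposal is correct and in fact goes well beyond the paper's own proof, which is simply a one-line citation: ``This is a weaker version of \cite[Lemma 19]{baldeaux.HO_nets_RKHS} which gives explicit bounds on $\lvert \tcalK_\alpha(k,h) \rvert$ for any $k,h \in \bbN$.'' Your vanishing argument and your reduction $\tcalK_\alpha(k,k) = \sum_{m\in\bbZ_0} m^{-2\alpha}\lvert \hw_{k,m}\rvert^2$ are both correct, and you have accurately identified where the real difficulty lies (the digit-wise balancing needed to avoid the divergent naive bound). One minor correction: the paper attributes the result to \cite[Lemma~19]{baldeaux.HO_nets_RKHS}, not to \cite{baldeaux.polylat_efficient_comp_worse_case_error_cbc} as you wrote; these are distinct Baldeaux references in the bibliography.
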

\begin{proof}
    This is a weaker version of \cite[Lemma 19]{baldeaux.HO_nets_RKHS} which gives explicit bounds on $\lvert \tcalK_\alpha(k,h) \rvert$ for any $k,h \in \bbN$. 
\end{proof}

\begin{lemma}  \label{lemma:dick2008thm16}
    For $\alpha \in \bbN_{\geq 2}$, $f \in H\left(\tK_\alpha\right)$, and $k \in \bbN$ there is some $\rrC_\alpha \in \bbR_{>0}$ such that 
    $$\left\lvert \hf(k) \right\rvert \leq \rrC_\alpha b^{-\rmu_\alpha(k)}.$$
\end{lemma}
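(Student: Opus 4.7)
My plan is to reduce this to the Walsh-coefficient decay analysis of Dick in [dick.decay_walsh_coefficients_smooth_functions], adapted to our periodic Korobov setting. The key observation is that $f\in H(\tK_\alpha)$ with $\alpha\in\bbN_{\geq 2}$ has absolutely convergent Fourier series with $\widehat{f^{(\alpha)}}\in\ell^2(\bbZ)$, so by the Sobolev embedding (which holds once $\alpha\geq 2>1/2$) the derivatives $f^{(\beta)}$ for $0\le\beta\le\alpha-1$ are continuous and periodic, and $f^{(\alpha)}\in L^2([0,1))$ with $L^2$-norm controlled by $(2\pi)^\alpha \lVert f\rVert_{\tK_\alpha}$.

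Next I would write $\hf(k)=\int_0^1 f(x)\overline{\wal_k(x)}\,\D x$ and perform $\alpha$ successive integrations by parts. The technical engine is a lemma (essentially [dick.decay_walsh_coefficients_smooth_functions, Lemma~13]) showing that a primitive of $\overline{\wal_k}$ on $[0,1)$ can be written as a bounded Walsh-type oscillation of magnitude $\calO(b^{-(a_1+1)})$, where $a_1$ is the position of the leading non-zero base-$b$ digit of $k$. Because each intermediate derivative $f^{(\beta)}$ is periodic on $[0,1)$, the boundary terms vanish and one ends up integrating $f^{(1)}$ against the primitive of $\overline{\wal_k}$, which is of magnitude $\calO(b^{-(a_1+1)})$ multiplied by $\overline{\wal_{k'}}$ where $k'=k-\mk_{a_1}b^{a_1}$.

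Iterating this step peels off the digits of $k$ one by one: after $\min(\alpha,\#k)$ iterations one has extracted the factor $b^{-(a_1+1)}b^{-(a_2+1)}\cdots b^{-(a_{\min(\alpha,\#k)}+1)}=b^{-\mu_\alpha(k)}$ and is left with an integral of $f^{(\min(\alpha,\#k))}$ against either $\overline{\wal_{k''}}$ (if $\#k\geq\alpha$, giving directly the $b^{-\mu_\alpha(k)}=b^{-\rmu_\alpha(k)}$ bound via the bound $\lVert f^{(\alpha)}\rVert_{L^2}\leq (2\pi)^\alpha\lVert f\rVert_{\tK_\alpha}$ and Cauchy--Schwarz) or $\wal_0=1$ (if $\#k<\alpha$). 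In the second case the remaining function $f^{(\#k)}$ is integrated against constants on intervals of length $b^{-(a_{\#k}+1)}$, so each of the $\alpha-\#k$ remaining integrations by parts contributes a further factor $b^{-(a_{\#k}+1)}$, yielding the extra $(\alpha-\#k)(a_{\#k}+1)$ in the exponent and recovering precisely $b^{-\rmu_\alpha(k)}$.

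The main obstacle will be the careful bookkeeping in the second case above: making sure that on the ``flat'' interval where the Walsh primitive reduces to a constant, the repeated integration by parts against the constant primitives indeed produces the clean factor $b^{-(\alpha-\#k)(a_{\#k}+1)}$ rather than a messy residue involving other digits. Once this is handled, the constants accumulated at each step are absolute (depending only on $b$ and $\alpha$), and one concludes by setting $\rrC_\alpha$ equal to the product of these constants times $(2\pi)^\alpha\lVert f\rVert_{\tK_\alpha}$ (or, since the statement allows $\rrC_\alpha$ to be $f$-dependent, absorbing the Korobov norm).
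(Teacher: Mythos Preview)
Your plan is correct in outline and is essentially a reconstruction of Dick's argument; note that the paper's own proof is just the one-line citation ``\cite[Theorem 16]{dick.decay_walsh_coefficients_smooth_functions} gives an explicit constant $\rrC_\alpha$,'' so you are supplying a proof of the cited result rather than an alternative.

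One point to tighten: your description of the $\#k<\alpha$ case is not quite the right mechanism. After $\#k$ integrations by parts the leading term is \emph{not} ``constant on intervals of length $b^{-(a_{\#k}+1)}$''---it is the $\wal_0\equiv 1$ term from Fine's primitive formula, and integrating $f^{(\#k)}$ against $1$ gives zero by periodicity (this is exactly where $f\in H(\tK_\alpha)$, rather than the non-periodic Sobolev space $H(\cK_\alpha)$, is used). What remains is the tail $\sum_{r\ge 1} b^{-r}\,\overline{\wal_{b^{a_{\#k}+r}+\cdots}}$ from the primitive, i.e.\ Walsh functions with a new leading digit at position $a_{\#k}+r\ge a_{\#k}+1$. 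The next integration by parts against each of these extracts at least $b^{-(a_{\#k}+2)}$; combined with the prefactor $b^{-r}$ and summed over $r$, one recovers the required extra $(\alpha-\#k)(a_{\#k}+1)$ in the exponent. So the ``careful bookkeeping'' you flag is genuinely where the work lies, but the reason the extra decay appears is the vanishing-by-periodicity of the constant term, not a piecewise-constant structure.

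A second minor point: you say $\rrC_\alpha$ may absorb $\lVert f\rVert_{\tK_\alpha}$. That is how the lemma is used downstream (see the proof of \Cref{thm:RKHS_DSI_periodic_inclusions}, where only $\sum_k b^{-\rmu_\alpha(k)}<\infty$ matters), so an $f$-dependent constant is indeed acceptable here.
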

\begin{proof}
    \cite[Theorem 16]{dick.decay_walsh_coefficients_smooth_functions} gives an explicit constant $\rrC_\alpha$.
\end{proof}

\begin{definition} \label{def:RKHS_DSI_periodic_weak}
    Fix $\alpha \in \bbN_{\geq 2}$. We define the weight $\rr_\alpha: \bbN \to \bbR$ by 
    $$\rr_\alpha^2(k) = b^{-2\rmu_\alpha(k)},$$
    and define our SPSD DSI kernel $\rK_\alpha: [0,1) \times [0,1) \to \bbR$ to be 
    $$\rK_\alpha(x,x') = \sum_{k \in \bbN} \rr_\alpha^2(k) \wal_k(x \ominus x').$$
\end{definition}

\begin{definition} \label{def:RKHS_DSI_periodic_strong}
    Fix $\alpha \in \bbN_{\geq 2}$. We define the weight $\rrr_\alpha: \bbN \to \bbR$ by 
    $$\rrr_\alpha^2(k) = b^{-\rmu_\alpha(k)}$$
    and define our SPSD DSI kernel $\rrK_\alpha: [0,1) \times [0,1) \to \bbR$ to be 
    $$\rrK_\alpha(x,x') = \sum_{k \in \bbN} \rrr_\alpha^2(k) \wal_k(x \ominus x').$$
\end{definition}

\begin{theorem} \label{thm:RKHS_DSI_periodic_inclusions}
    For any $\alpha \in \bbN_{ \geq 2}$ we have 
    $$H\left(\tK_\alpha^\mathrm{DSI}\right) \subseteq H(\rK_\alpha) \subseteq H(\rrK_\alpha)$$
    and
    $$H\left(\tK_\alpha\right) \subseteq H(\rrK_\alpha).$$
\end{theorem}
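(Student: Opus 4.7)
The plan is to establish each of the three inclusions by combining the structural characterization of the relevant RKHSs (via Lemma \ref{lemma:RKHS_feature_map_form}) with the decay estimates already collected in Lemmas \ref{lemma:baldeaux2009_lemma19} and \ref{lemma:dick2008thm16}, and the inclusion criterion of Theorem \ref{thm:RKHS_inclusions_mercer}. The three inclusions are nearly independent; each reduces to checking a pointwise comparison of coefficient sequences and invoking a standard $\ell^2$ argument.

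First I would dispatch $H(\tK_\alpha^\mathrm{DSI}) \subseteq H(\rK_\alpha)$ by a direct appeal to part (1) of Theorem \ref{thm:RKHS_inclusions_mercer}, taking $K = \tK_\alpha$ (so that $K^\mathrm{DSI} = \tK_\alpha^\mathrm{DSI}$) and $r = \rr_\alpha$. Lemma \ref{lemma:baldeaux2009_lemma19} furnishes exactly the required bound $\tcalK_\alpha(k,k) \le \rC_\alpha\, \rr_\alpha^2(k)$, as well as the vanishing of the mixed coefficients $\tcalK_\alpha(k,0) = \tcalK_\alpha(0,k)=0$ needed so that Lemma \ref{lemma:Q_DSI_kernel_form} applies and the hypotheses of Theorem \ref{thm:RKHS_inclusions_mercer} are legitimate. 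Note in passing that the well-definedness of $\rK_\alpha$ as an SPSD DSI kernel (which is built into its Definition \ref{def:RKHS_DSI_periodic_weak}) presupposes $\sum_{k\in\bbN} b^{-2\rmu_\alpha(k)} < \infty$, and similarly $\rrK_\alpha$ being well-defined presupposes $\sum_{k\in\bbN} b^{-\rmu_\alpha(k)} < \infty$; both facts will be used below.

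Next I would prove $H(\rK_\alpha)\subseteq H(\rrK_\alpha)$ through the feature-map description. By Lemma \ref{lemma:RKHS_feature_map_form}, any $f \in H(\rK_\alpha)$ admits a representation $f(x) = \sum_{k \in \bbN} w_k\, \rr_\alpha(k)\, \wal_k(x)$ with $(w_k) \in \ell^2(\bbC)$. Setting $v_k := w_k\, \rr_\alpha(k)/\rrr_\alpha(k) = w_k\, b^{-\rmu_\alpha(k)/2}$ rewrites the same function as $f(x) = \sum_{k\in\bbN} v_k\, \rrr_\alpha(k)\, \wal_k(x)$. Since $\rmu_\alpha(k) \ge 1$ for every $k \in \bbN$, we have $|v_k| \le |w_k|$ and hence $(v_k) \in \ell^2(\bbC)$, so $f \in H(\rrK_\alpha)$ by Lemma \ref{lemma:RKHS_feature_map_form} applied to $\rrK_\alpha$.

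Finally, for $H(\tK_\alpha) \subseteq H(\rrK_\alpha)$, the step where most of the real work is concentrated, I would expand an arbitrary $f \in H(\tK_\alpha)$ in its Walsh series $f(x) = \sum_{k \in \bbN_0} \hf(k) \wal_k(x)$, noting that the $k=0$ term drops because functions in $H(\tK_\alpha)$ have vanishing integral (the Mercer sum in Definition \ref{def:si_kernel} omits $k=0$, so $\hf(0) = \int_0^1 f = 0$). Define $v_k := \hf(k)/\rrr_\alpha(k) = \hf(k)\, b^{\rmu_\alpha(k)/2}$ so that $f(x) = \sum_{k \in \bbN} v_k\, \rrr_\alpha(k)\, \wal_k(x)$. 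Lemma \ref{lemma:dick2008thm16} gives $|\hf(k)| \le \rrC_\alpha\, b^{-\rmu_\alpha(k)}$, hence $|v_k|^2 \le \rrC_\alpha^2\, b^{-\rmu_\alpha(k)}$, and summability $\sum_{k\in\bbN} |v_k|^2 \le \rrC_\alpha^2 \sum_{k\in\bbN} b^{-\rmu_\alpha(k)} < \infty$ follows from the convergence of the defining series of $\rrK_\alpha$. Lemma \ref{lemma:RKHS_feature_map_form} then places $f$ in $H(\rrK_\alpha)$. The main obstacle, minor but worth highlighting, is this last summability check: it is not an immediate consequence of $f \in H(\tK_\alpha)$ alone, and relies on interpreting the hypothesis ``$\rrK_\alpha$ is a kernel'' as guaranteeing $\sum_{k\in\bbN} b^{-\rmu_\alpha(k)}<\infty$.
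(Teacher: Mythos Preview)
Your proposal is correct and follows essentially the same approach as the paper: the first inclusion via Theorem~\ref{thm:RKHS_inclusions_mercer} and Lemma~\ref{lemma:baldeaux2009_lemma19}, the second via the pointwise bound $\rr_\alpha^2(k)\le\rrr_\alpha^2(k)$ (the paper phrases this as another application of Theorem~\ref{thm:RKHS_inclusions_mercer}, but your direct feature-map argument is the same content), and the third via the Walsh expansion and Lemma~\ref{lemma:dick2008thm16}. The only cosmetic difference is that the paper justifies the final summability $\sum_{k\in\bbN} b^{-\rmu_\alpha(k)}<\infty$ by citing \cite[Remark~19]{dick.decay_walsh_coefficients_smooth_functions}, whereas you extract it from $\rrK_\alpha$ being a well-defined kernel; both are valid.
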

\begin{proof}
    We have $H\left(\tK_\alpha^\mathrm{DSI}\right) \subseteq H(\rK_\alpha)$ using \Cref{thm:RKHS_inclusions_mercer} combined with the assertion in \Cref{lemma:baldeaux2009_lemma19} which says $\tcalK_\alpha(k,k) \leq \rC_\alpha \rr_\alpha^2(k)$. We have $H(\rK_\alpha) \subseteq H(\rrK_\alpha)$ using \Cref{thm:RKHS_inclusions_mercer} combined with the fact that $\rr_\alpha^2(k) \leq \rrr_\alpha^2(k)$.
    
    To show $H\left(\tK_\alpha\right) \subseteq H(\rrK_\alpha)$, assume $f \in H\left(\tK_\alpha\right)$. Then, since $\int_0^1 f(x) \D x = 0$, we have that for any $x \in [0,1)$
    $$f(x) = \sum_{k \in \bbN} \hf(k) \wal_k(x) = \sum_{k \in \bbN} w_k \rrr_\alpha(k) \wal_k(x)$$ 
    for $w_k  = \hf(k) / \rrr_\alpha(k)$. Then by \Cref{lemma:RKHS_feature_map_form} it is sufficient to show $(w_k)_{k \in \bbN} \in \ell^2(\bbC)$. To see this, use \Cref{lemma:dick2008thm16} to say   
    $$\sum_{k \in \bbN} \lvert w_k \rvert^2 = \sum_{k \in \bbN} \frac{\lvert \hf(k) \rvert^2}{\rrr_\alpha^2(k)} \leq \rrC_\alpha^2 \sum_{k \in \bbN}  \frac{b^{-2\rmu_\alpha(k)}}{b^{-\rmu_\alpha(k)}} = \rrC_\alpha^2 \sum_{k \in \bbN} b^{-\rmu_\alpha(k)} < \infty$$
    where the last inequality above follows from \cite[Remark 19]{dick.decay_walsh_coefficients_smooth_functions}. 
\end{proof}

\begin{theorem} \label{thm:integrals_dsi_kernels_smooth_periodic}
    For any $\alpha \in \bbN_{ \geq 2}$ and $x \in [0,1)$ we have
    \begin{align*}
        \int_0^1 \rK_\alpha(x,x') \D x' &= \int_0^1 \int_0^1 \rK_\alpha(x,x') \D x \D x' = 0, \\
        \int_0^1 \rrK_\alpha(x,x') \D x' &= \int_0^1 \int_0^1 \rrK_\alpha(x,x') \D x \D x' = 0.
    \end{align*}
\end{theorem}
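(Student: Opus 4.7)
The plan is to reduce both kernel integrals to integrals of individual Walsh functions, which vanish for positive indices. The key observation is that in both $\rK_\alpha$ and $\rrK_\alpha$, the Walsh series is indexed by $k \in \bbN = \{1,2,\dots\}$, so no $k=0$ term appears, and the shift-invariance of Lebesgue measure under $\ominus$ (\Cref{lemma:walsh_func_properties}, part 5) lets us eliminate the dependence on $x$ before integrating.

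Concretely, I would first fix $\alpha \in \bbN_{\geq 2}$ and a single generic kernel of the form
\[
    K(x,x') = \sum_{k \in \bbN} c_k \, \wal_k(x \ominus x'),
\]
so that both cases $c_k = \rr_\alpha^2(k)$ and $c_k = \rrr_\alpha^2(k)$ are handled simultaneously. Using $\lvert \wal_k(\cdot) \rvert = 1$, absolute convergence of the Walsh series (uniformly in $x,x'$) will follow once I know $\sum_{k \in \bbN} \lvert c_k \rvert < \infty$. For $c_k = \rrr_\alpha^2(k) = b^{-\rmu_\alpha(k)}$ this is exactly the finiteness asserted via \cite[Remark 19]{dick.decay_walsh_coefficients_smooth_functions} at the end of the proof of \Cref{thm:RKHS_DSI_periodic_inclusions}; for $c_k = \rr_\alpha^2(k) = b^{-2\rmu_\alpha(k)} \le b^{-\rmu_\alpha(k)}$ the same bound gives summability.

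With absolute convergence, Fubini/Tonelli allows swapping sum and integral:
\[
    \int_0^1 K(x,x') \D x' \;=\; \sum_{k \in \bbN} c_k \int_0^1 \wal_k(x \ominus x') \D x'.
\]
Then I apply part 5 of \Cref{lemma:walsh_func_properties} with $f = \wal_k$ to rewrite $\int_0^1 \wal_k(x \ominus x') \D x' = \int_0^1 \wal_k(x') \D x'$, and part 3 gives this integral equal to $0$ whenever $k \geq 1$. Hence $\int_0^1 K(x,x') \D x' = 0$ for every $x \in [0,1)$. The double integral is then immediate: integrating $0$ over $x \in [0,1)$ gives $0$ (or, equivalently, apply Fubini once more).

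The only real subtlety is the justification of the interchange of sum and integral; everything else is essentially bookkeeping. I would confirm summability of $(b^{-\rmu_\alpha(k)})_{k \in \bbN}$ by the same reference already invoked in the preceding theorem, so no new machinery is needed and the proof can be written in just a few lines.
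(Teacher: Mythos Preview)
Your proposal is correct and follows essentially the same route as the paper's proof, which simply says the result is an immediate consequence of property~3 in \Cref{lemma:walsh_func_properties}. You have merely filled in the details the paper leaves implicit (absolute convergence via the Dick reference to justify Fubini, and property~5 to remove the $x$-dependence before applying property~3).
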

\begin{proof}
    This is an immediate consequence of the third property in \Cref{lemma:walsh_func_properties}. 
\end{proof}

\Subsection{DSI Kernels for Smooth Functions} \label{sec:dsi_kernels_smooth_functions}

\begin{definition} \label{def:sobolev_space_H_alpha}
    For $\alpha \in \bbN$ define the RKHS $H\left(\cK_\alpha\right)$ of real-valued functions $f:[0,1) \to \bbR$ with inner product 
    $$\langle f,g \rangle_{\cK_\alpha} = \sum_{l=1}^{\alpha-1} \int_0^1 f^{(l)}(x) \D x \int_0^1 g^{(l)}(x) \D x + \int_0^1 f^{(\alpha)}(x) g^{(\alpha)}(x) \D x$$
    and SPSD kernel defined for any $x,x' \in [0,1)$ as 
    $$\cK_\alpha(x,x') = \sum_{l=1}^\alpha \frac{B_l(x) B_l(x')}{(l!)^2} + \frac{(-1)^{\alpha+1}}{(2\alpha)!} B_{2\alpha}((x-x') \bmod 1)$$
    where $B_p$ is the $p^\mathrm{th}$ Bernoulli polynomial. 
\end{definition}

\begin{remark}
    $H\left(\cK_\alpha\right)$ is an RKHS of smooth, not necessarily periodic, functions. Moreover, $H\left(\cK_\alpha\right)$ contains the RKHS of smooth periodic functions $H\left(\tK_\alpha\right)$ with $\tK_\alpha$ given in \Cref{def:si_kernel}. Specifically, for $\alpha \in \bbN$,
    $$H\left(\tK_\alpha\right) = \left\{f \in H\left(\cK_\alpha\right): \int_0^1 f^{(l)}(x) \D x = 0, \quad \forall\; 0 \leq l < \alpha\right\}.$$
\end{remark}

\begin{lemma}
    For $\alpha=1$ and any $x,x' \in [0,1)$ we have
    $$\cK_1^\mathrm{DSI}(x,x') =  \begin{cases} \frac{1}{6}, & x = y \\ \frac{1}{6} - \frac{\lvert \mx_\beta - \my_\beta \rvert (b-\lvert \mx_\beta - \my_\beta \rvert)}{b^{\beta+1}}, & x \neq y \qtq{with} \beta = \beta(x \oplus x') \end{cases}.$$
\end{lemma}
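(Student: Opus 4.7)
The proof splits naturally into the diagonal case $x = x'$ and the off-diagonal case $x \neq x'$. For $x = x'$, I would use that the map $\sigma \mapsto x \oplus \sigma$ is Lebesgue-measure-preserving on $[0,1)$: if $\sigma \sim \mathrm{Uniform}[0,1)$ then its digits $\msigma_\ell$ are i.i.d.\ uniform on $\{0,\dots,b-1\}$, hence so are the digits $(\mx_\ell + \msigma_\ell) \bmod b$ of $x \oplus \sigma$. Applying this change of variable to the defining integral gives
\[
\cK_1^\mathrm{DSI}(x,x) = \int_0^1 \cK_1(u,u)\,\D u = \int_0^1 \Bigl[B_1(u)^2 + \tfrac12 B_2(0)\Bigr]\D u = \tfrac{1}{12} + \tfrac{1}{12} = \tfrac{1}{6}.
\]

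For $x \neq x'$, I would first verify the hypothesis of \Cref{lemma:Q_DSI_kernel_form}: since $\int_0^1 B_1 = 0$ and $\int_0^1 B_2((\cdot -x')\bmod 1) = 0$, one has $\int_0^1 \cK_1(x,x')\,\D x' = 0$, so $\calK_1(k,0) = \calK_1(0,k) = 0$ for all $k \in \bbN_0$. Thus $\cK_1^\mathrm{DSI}(x,x')$ depends only on $z := x \ominus x'$, and a further change of variable $\sigma \to \sigma \ominus x'$ reduces the problem to computing $F(z) := \int_0^1 \cK_1(z \oplus \sigma,\sigma)\,\D\sigma$. Consistency of the stated formula with this reduction requires reading $\beta$ as $\beta(x \ominus x')$: the expression $\lvert \mx_\beta - \mx'_\beta\rvert(b-\lvert \mx_\beta - \mx'_\beta\rvert)$ equals $\mz_\beta(b-\mz_\beta)$, because $\lvert \mx_\beta-\mx'_\beta\rvert$ is either $\mz_\beta$ or $b-\mz_\beta$ depending on the sign of $\mx_\beta - \mx'_\beta$, and the product is symmetric under $d \leftrightarrow b-d$.

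Splitting $\cK_1 = B_1 \otimes B_1 + \tfrac12 B_2((\cdot-\cdot)\bmod 1)$, I would evaluate the two pieces separately. For $I_1 := \int_0^1 B_1(z \oplus \sigma) B_1(\sigma) \D \sigma$, expand both factors in base-$b$ digits; by independence of the $\msigma_\ell$'s the off-diagonal $(\ell \ne m)$ terms factor through $\bbE[(\mz_\ell+\msigma_\ell)\bmod b] \cdot \bbE[\msigma_m]$ and combine with the mean shifts to cancel, while a short direct summation of the $\ell=m$ terms yields
\[
    I_1 = \tfrac{1}{12} - \tfrac12 \sum_{\ell \ge 1} \mz_\ell(b-\mz_\ell)\, b^{-2\ell}.
\]
For $I_2 := \tfrac12\int_0^1 B_2(((z \oplus \sigma)-\sigma) \bmod 1)\,\D\sigma$, the digit identity $(\mz_\ell+\msigma_\ell)\bmod b - \msigma_\ell = \mz_\ell - b\cdot 1_{\mz_\ell+\msigma_\ell\ge b}$ gives
\[
    z \oplus \sigma - \sigma = z - \sum_{\ell \ge 1} c_\ell(\sigma)\, b^{1-\ell}, \qquad c_\ell(\sigma) := 1_{\mz_\ell + \msigma_\ell \ge b},
\]
with the $c_\ell$ independent $\mathrm{Bernoulli}(\mz_\ell/b)$ random variables under $\sigma \sim \mathrm{Uniform}[0,1)$. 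The $\ell=1$ carry is integer-valued and drops out modulo 1, and the expectation of $B_2$ is then computed from $B_2(t) = (t-\tfrac12)^2 - \tfrac{1}{12}$ using the independence of the carries.

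The principal obstacle is the closed-form evaluation of $I_2$ and, more importantly, demonstrating the cancellation with $I_1$ that collapses the infinite digit sum to the single surviving term at position $\beta := \beta(z)$. Writing $B_2$ as a quadratic and using that $c_\ell = 0$ for $\ell < \beta$, the diagonal second moments $\bbE[c_\ell^2] = \mz_\ell/b$ produce contributions proportional to $\mz_\ell(b-\mz_\ell) b^{-2\ell}$ that exactly cancel the $\ell > \beta$ terms of $I_1$, while cross-moments from independent carries vanish upon pairing with the linear term $-2(z-\tfrac12)$ from expanding the square. The single remaining contribution, coming from the first non-zero digit, assembles as $-\mz_\beta(b-\mz_\beta) b^{-(\beta+1)}$, and summing $I_1 + I_2$ produces the claimed $\tfrac16 - \mz_\beta(b-\mz_\beta)\, b^{-(\beta+1)}$.
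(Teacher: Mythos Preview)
The paper does not actually prove this lemma; it cites Appendix~B of Dick's paper on multivariate integration in Sobolev spaces. Your direct computation is therefore not redundant, and your diagonal case and your formula $I_1 = \tfrac{1}{12} - \tfrac12\sum_{\ell \geq 1}\mz_\ell(b-\mz_\ell)b^{-2\ell}$ are both correct. Your observation that the statement should read $\beta = \beta(x \ominus x')$ is also right (for $b>2$ the positions where $x \oplus x'$ and $x \ominus x'$ first have a nonzero digit differ).

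The gap is in your handling of the $\bmod 1$ in $I_2$. You dispose of it by saying ``the $\ell=1$ carry is integer-valued and drops out,'' but this only settles the case $\beta=1$. For $\beta \geq 2$ one has $c_\ell = 0$ for $\ell < \beta$ automatically, and the remaining quantity $z - \sum_{\ell \geq \beta} c_\ell b^{1-\ell}$ is negative precisely when $c_\beta = 1$ (since $z < b^{1-\beta}$), so the reduction modulo $1$ is still active. The clean way to resolve this is to observe that, uniformly in $\beta \geq 1$,
\[
W \;=\; z - \sum_{\ell > \beta} c_\ell\, b^{1-\ell} \;+\; c_\beta\bigl(1 - b^{1-\beta}\bigr),
\]
which one checks lies in $[0,1)$ by the same digit estimate you used implicitly for $\beta=1$. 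From this, $\bbE[W] = \mz_\beta/b$ and $\operatorname{Var}(W) = (1-b^{1-\beta})^2\,\mz_\beta(b-\mz_\beta)\,b^{-2} + \sum_{\ell>\beta}\mz_\ell(b-\mz_\ell)b^{-2\ell}$, whence
\[
I_2 \;=\; \tfrac{1}{12} - \mz_\beta(b-\mz_\beta)\,b^{-(\beta+1)} + \tfrac12\sum_{\ell \geq \beta}\mz_\ell(b-\mz_\ell)b^{-2\ell}.
\]
Thus the entire $b^{-2\ell}$ series cancels against $I_1$ (including $\ell=\beta$, not just $\ell>\beta$ as you wrote), and the surviving $-\mz_\beta(b-\mz_\beta)b^{-(\beta+1)}$ comes from the cross term $-2b^{1-\beta}$ in expanding $(1-b^{1-\beta})^2$. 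Your phrase ``cross-moments vanish upon pairing with the linear term'' is also not quite right: they do not vanish but rather combine into $(\bbE[W]-\tfrac12)^2$, which is what you actually need.
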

\begin{proof}
    This is shown in \cite[Appendix B]{dick.multivariate_integraion_sobolev_spaces_digital_nets}.
\end{proof}

\begin{corollary}
    For $\alpha=1$ and $b=2$ we have 
    $$\cK_1^\mathrm{DSI}(x,x') =  \begin{cases} \frac{1}{6}, & x = y \\ \frac{1}{6} - 2^{-1-\beta(x \oplus x')}, & x \neq y \end{cases}.$$
\end{corollary}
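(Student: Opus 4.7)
The plan is to derive the corollary as a direct specialization of the preceding lemma to the binary base. The lemma already gives the full formula for general prime base $b$; what remains is to verify the arithmetic simplifications that occur when $b=2$.

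First I would reduce the nontrivial case $x \neq x'$ by inspecting the two-digit quantity $\lvert \mx_\beta - \my_\beta \rvert$ at index $\beta = \beta(x \oplus x')$. Recall that $\beta(x \oplus x')$ is defined as $-\lfloor \log_b(x \oplus x') \rfloor$, so it is precisely the position of the first nonzero base-$b$ digit of $x \oplus x'$. When $b=2$, the digits $\mx_\beta, \my_\beta$ lie in $\{0,1\}$, and the fact that the $\beta^\mathrm{th}$ digit of $x \oplus x'$ is nonzero (necessarily equal to $1$ in base $2$) forces $\mx_\beta \neq \my_\beta$. Hence $\lvert \mx_\beta - \my_\beta \rvert = 1$.

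Next I would substitute this into the general formula from the lemma. The numerator becomes
$$\lvert \mx_\beta - \my_\beta \rvert \bigl(b - \lvert \mx_\beta - \my_\beta \rvert\bigr) = 1 \cdot (2 - 1) = 1,$$
while the denominator is $b^{\beta+1} = 2^{\beta+1}$. Thus the correction term simplifies to $2^{-(\beta+1)} = 2^{-1-\beta(x \oplus x')}$, which, subtracted from $\tfrac{1}{6}$, yields the claimed piecewise expression. The $x = x'$ branch is inherited verbatim from the lemma.

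There is no real obstacle here; the only point requiring care is the justification that in base $2$ the first nonzero digit of $x \oplus x'$ is automatically $1$, which forces $\lvert \mx_\beta - \my_\beta \rvert = 1$. Everything else is direct arithmetic substitution into the formula established by the preceding lemma.
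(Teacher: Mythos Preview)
Your proposal is correct and is exactly the intended argument: the paper states this corollary without proof, treating it as an immediate specialization of the preceding lemma to $b=2$. Your observation that the first nonzero binary digit of $x \oplus x'$ forces $\lvert \mx_\beta - \my_\beta \rvert = 1$, followed by the arithmetic substitution $1\cdot(2-1)/2^{\beta+1} = 2^{-1-\beta}$, is precisely the reduction the corollary is recording.
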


\begin{lemma} \label{lemma:baldeaux2009_proposition20}
    For every $\alpha \in \bbN$ we have $\widecheck{\calK}_\alpha(k,0) = \widecheck{\calK}_\alpha(0,k) = 0$ for all $k \in \bbN_0$ and there is some $\dC_\alpha \in \bbR_{>0}$ such that for any $k \in \bbN$
    $$\widecheck{\calK}_\alpha(k,k) \leq \dC_\alpha b^{-2\mu_\alpha(k)}.$$
\end{lemma}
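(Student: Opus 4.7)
The plan is to split the kernel using its explicit formula from \Cref{def:sobolev_space_H_alpha},
\[
\cK_\alpha(x,x') = \sum_{l=1}^\alpha \frac{B_l(x) B_l(x')}{(l!)^2} + \frac{(-1)^{\alpha+1}}{(2\alpha)!} B_{2\alpha}((x-x') \bmod 1),
\]
and treat the vanishing claim and the decay bound independently, mirroring the structure used in \Cref{lemma:baldeaux2009_lemma19} for $\tK_\alpha$.

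For the vanishing, I would set one of the Walsh indices to $0$, recall $\wal_0 \equiv 1$, and show that the resulting iterated integral vanishes by integrating out the unpaired variable. For the separable part $B_l(x)B_l(x')/(l!)^2$, the integral in that variable equals $\int_0^1 B_l(t)\,\D t$, which is zero for every $l\geq 1$ by the classical Bernoulli identity. For the shift-invariant part, I would substitute $u=(x-x')\bmod 1$, which has unit Jacobian and preserves the unit interval because the integrand is $1$-periodic, reducing the inner integral to $\int_0^1 B_{2\alpha}(u)\,\D u = 0$. The cases $\widecheck{\calK}_\alpha(k,0)$ and $\widecheck{\calK}_\alpha(0,k)$ follow symmetrically, and the case $k=0$ requires no separate argument since the same integrals vanish.

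For the decay bound, I would use the bilinearity of Walsh coefficients to write
\[
\widecheck{\calK}_\alpha(k,k) = \sum_{l=1}^\alpha \frac{|\widecheck{B}_l(k)|^2}{(l!)^2} + \frac{(-1)^{\alpha+1}}{(2\alpha)!}\,\widecheck{M}_\alpha(k,k),
\]
where $\widecheck{B}_l(k) = \int_0^1 B_l(x)\,\overline{\wal_k(x)}\,\D x$ and $\widecheck{M}_\alpha(k,k)$ is the Walsh coefficient of $B_{2\alpha}((x-x')\bmod 1)$. For the separable terms, the bounds $|\widecheck{B}_l(k)| \leq C_l\, b^{-\mu_\alpha(k)}$ with $1\leq l \leq \alpha$ follow from the Walsh-coefficient decay estimates of \cite{dick.decay_walsh_coefficients_smooth_functions} applied to the polynomial $B_l$, which has all derivatives uniformly bounded on $[0,1]$. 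For the shift-invariant term, I would expand $B_{2\alpha}$ in its classical Fourier series, exchange summation with Walsh integration, and reduce the problem to bounding mixed Fourier--Walsh integrals $\int_0^1 e^{2\pi\sqrt{-1}h x}\,\overline{\wal_k(x)}\,\D x$; combining the $h^{-2\alpha}$ Fourier decay with Walsh cancellation yields a bound of the same order $b^{-2\mu_\alpha(k)}$ up to a constant depending on $\alpha$. The constant $\dC_\alpha$ is then the sum of the per-piece constants.

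The main obstacle is the shift-invariant piece: because $(x-x')\bmod 1$ is formed using standard (not digital) modular subtraction, the Walsh coefficient does not factor across $x$ and $x'$, so one must either appeal directly to the Walsh-decay bounds of \cite{dick.decay_walsh_coefficients_smooth_functions} for $B_{2\alpha}\circ(\cdot\bmod 1)$ or perform a Fourier-then-Walsh estimate with careful uniform control of constants. A secondary subtlety is tracking the drop from the sharper $\rmu_\alpha$ rate used in the periodic setting of \Cref{lemma:baldeaux2009_lemma19} down to the weaker $\mu_\alpha$ rate appropriate here, which reflects that $H(\cK_\alpha)$ admits non-periodic functions whose low-frequency polynomial components are not killed by integration against $\wal_k$.
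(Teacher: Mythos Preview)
The paper does not prove this lemma directly; it simply cites \cite[Proposition 20]{baldeaux.HO_nets_RKHS} and notes that the present statement is a weaker version. Your proposal instead sketches a self-contained argument, so the comparison is citation versus direct proof.

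Your outline is sound, and the vanishing claim is handled correctly. For the decay bound, your decomposition into separable and shift-invariant pieces is the right move, and invoking \Cref{lemma:dick2008thm14} for each $B_l$ (which lies in $H(\cK_\alpha)$ since all its derivatives are bounded polynomials) gives $|\widecheck{B}_l(k)|\leq C_l\,b^{-\mu_\alpha(k)}$ as you claim. However, you are making the shift-invariant piece harder than it needs to be. By \Cref{thm:si_kernel_bpoly_form}, the term $\frac{(-1)^{\alpha+1}}{(2\alpha)!}B_{2\alpha}((x-x')\bmod 1)$ is exactly $(2\pi)^{-2\alpha}\tK_\alpha(x,x')$, so its diagonal Walsh coefficient is $(2\pi)^{-2\alpha}\tcalK_\alpha(k,k)$, which is already bounded by $\rC_\alpha b^{-2\rmu_\alpha(k)}\leq \rC_\alpha b^{-2\mu_\alpha(k)}$ via \Cref{lemma:baldeaux2009_lemma19}. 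No Fourier--Walsh estimate is needed. Your proposed Fourier-then-Walsh route is risky as written: applying Dick's bound to $e^{2\pi\sqrt{-1}hx}$ yields a constant growing like $|h|^\alpha$, which exactly cancels the $h^{-2\alpha}$ decay and leaves a divergent sum, so that path genuinely requires the sharper $\rmu_\alpha$ rate (i.e., the content of \Cref{lemma:baldeaux2009_lemma19}) rather than the naive $\mu_\alpha$ bound. Your closing remark about the rate drop is correct and is precisely the point: the separable polynomial terms are what force the weaker $\mu_\alpha$ rate, while the periodic part retains $\rmu_\alpha$.
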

\begin{proof}
    This is a weaker version of \cite[Proposition 20]{baldeaux.HO_nets_RKHS} which gives explicit bounds on $\lvert \widecheck{\calK}_\alpha(k,h) \rvert$ for any $k,h \in \bbN$. 
\end{proof}

\begin{lemma} \label{lemma:dick2008thm14}
    For $\alpha \in \bbN_{ \geq 2}$, $f \in H\left(\cK_\alpha\right)$, and $k \in \bbN$ there is some $\ddC_\alpha < \infty$ independent of $k$ such that 
    $$\lvert \hf(k) \rvert \leq \ddC_\alpha b^{-\mu_\alpha(k)}.$$   
\end{lemma}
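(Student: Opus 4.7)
The plan is to prove the bound by repeated integration by parts against the Walsh character $\wal_k$, exploiting the fact that $f \in H(\cK_\alpha)$ forces $f$ and its derivatives up to order $\alpha$ to be sufficiently regular. The Dick weight $\mu_\alpha(k) = \sum_{\ell=1}^{\min(\alpha,\#k)} (a_\ell+1)$ will emerge naturally by picking off the $\min(\alpha,\#k)$ largest digit positions of $k$, since each successful integration-by-parts step against $\wal_k$ will contribute a factor $b^{-(a_\ell+1)}$.

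First I would split into the cases $\#k \geq \alpha$ and $\#k < \alpha$. In both cases the strategy is the same, but the bound has two flavors: when $\#k \geq \alpha$ we can only effectively exploit the top $\alpha$ digit positions, and when $\#k < \alpha$ we use all of them (with the smoothness of $f$ still giving room to spare). To integrate by parts, I would write $k = \sum_{\ell=1}^{\#k} \mk_{a_\ell} b^{a_\ell}$ and peel off the leading digit: set $k_1 = k - \mk_{a_1} b^{a_1}$, so $\wal_k(x) = \wal_{k_1}(x)\wal_{\mk_{a_1}b^{a_1}}(x)$. Using property~6 of \Cref{lemma:walsh_func_properties} (together with the explicit form of $\wal_{\mk_{a_1}b^{a_1}}$ as a piecewise constant of scale $b^{-(a_1+1)}$), one finds an antiderivative $W_k$ of $\overline{\wal_k}$ whose supremum norm is bounded by $C b^{-(a_1+1)}$ and which vanishes at $0$ and $1$ when $k \neq 0$. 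Integration by parts therefore yields
\begin{equation*}
\hf(k) = -\int_0^1 f'(x) W_k(x)\,\D x,
\end{equation*}
so that $\lvert \hf(k)\rvert \leq C b^{-(a_1+1)} \lVert f'\rVert_\infty$ (or an $L_1$ norm). Iterating the procedure, the $\ell$-th integration by parts uses an antiderivative of the previous antiderivative; one must verify inductively that this $\ell$-fold antiderivative is $O(b^{-(a_1+1)-(a_2+1)-\cdots-(a_\ell+1)})$ in supremum norm, while still being periodic so that no boundary terms appear.

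The hard part will be constructing and bounding these iterated antiderivatives of $\overline{\wal_k}$ in a way that produces exactly the factor $b^{-\mu_\alpha(k)}$ after $\min(\alpha,\#k)$ steps, and then handling what happens when $\#k < \alpha$ so that further integration by parts against the lower-order derivatives of $f$ costs no extra powers of $b$ but still leaves an integrand that can be controlled by the $H(\cK_\alpha)$ norm. I expect the argument to rely crucially on a combinatorial identity of the form
\begin{equation*}
\int_0^1 \wal_k(x)\,B_j(x)\,\D x \;=\; O\!\bigl(b^{-\sum_{\ell=1}^{\min(j,\#k)}(a_\ell+1)}\bigr),
\end{equation*}
for the Bernoulli polynomials $B_j$, obtained by writing $B_j$ in its Fourier series and analyzing the cross-terms via property~6 of \Cref{lemma:walsh_func_properties}.

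Once the iterated-antiderivative bound is in hand, I would apply Cauchy--Schwarz in the final step, combining $\int_0^1 f^{(\alpha)}(x)g(x)\D x \leq \lVert f^{(\alpha)}\rVert_{L_2}\lVert g\rVert_{L_2}$ with the boundary values $\int_0^1 f^{(l)}(x)\D x$ arising from the intermediate steps. Controlling both by the RKHS norm $\lVert f\rVert_{\cK_\alpha}$ via \Cref{def:sobolev_space_H_alpha} then produces the explicit constant $\ddC_\alpha$, which depends only on $\alpha$, $b$, and the constants from the Bernoulli-polynomial Walsh expansions. The case $\#k < \alpha$ is handled by noting that after $\#k$ integrations by parts the remaining antiderivative is constant between $b$-adic intervals of the highest depth, and the final integrations against derivatives of $f$ contribute only the bounded quantity $\max(\alpha-\#k,0)(a_{\#k}+1)$ absorbed into $\ddC_\alpha$ rather than into the exponent of $b$; this recovers the weight $\mu_\alpha(k)$ rather than $\rmu_\alpha(k)$, matching the statement of the lemma.
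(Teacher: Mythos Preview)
Your high-level strategy---iterated integration by parts against $\overline{\wal_k}$, peeling off one leading digit at each step---is exactly how Dick proves the result you need. But the paper does not reprove it: its proof simply invokes \cite[Theorem 14]{dick.decay_walsh_coefficients_smooth_functions}, which already gives
\[
\lvert \hf(k)\rvert \;\le\; \sum_{l=\#k}^{\alpha} C'_l\, b^{-\rmu_l(k)} \;+\; C''_\alpha\, b^{-\rmu_\alpha(k)},
\]
with $C'_l$ controlled by $\lvert\int_0^1 f^{(l)}\rvert$ and $C''_\alpha$ by $\int_0^1 \lvert f^{(\alpha)}\rvert$. Membership in $H(\cK_\alpha)$ makes all of these finite (Cauchy--Schwarz for the last), and then one bounds every term by the worst exponent in the sum. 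Since $\rmu_l(k)$ is nondecreasing in $l$, the worst term is $l=\#k$, and $\rmu_{\#k}(k)=\mu_\alpha(k)$ when $\#k\le\alpha$; when $\#k>\alpha$ the sum is empty and $\rmu_\alpha(k)=\mu_\alpha(k)$. This yields $\ddC_\alpha=(\alpha+1)\max_l(C'_l,C'_\alpha+C''_\alpha)$.

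Two specifics in your plan deserve correction. First, your account of why $\mu_\alpha$ rather than $\rmu_\alpha$ appears is inverted: in the case $\#k<\alpha$ the integration-by-parts argument does \emph{not} stop at $\#k$ steps and absorb the rest into the constant. It continues, producing boundary terms $\int_0^1 f^{(l)}$ for each $l=\#k,\dots,\alpha-1$ and a final bulk term with $f^{(\alpha)}$, each carrying its own exponent $\rmu_l(k)$. The $\mu_\alpha(k)$ bound comes from the \emph{weakest} of these (namely $l=\#k$), not from throwing away extra decay. Second, your proposed detour through Walsh coefficients of Bernoulli polynomials is unnecessary. The clean mechanism is Fine's formula $J_k(x)=\int_0^x\wal_k(t)\,\D t$ expressed as a Walsh series (cf.\ the reference to \cite[Equation 3.6]{fine.walsh_functions} later in the chapter): this keeps you in the Walsh basis at every step, so the iteration is just bookkeeping on the leading-digit index rather than an analysis of polynomial--Walsh cross terms.
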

\begin{proof}
    \cite[Theorem 14]{dick.decay_walsh_coefficients_smooth_functions} tells us that for any $k \in \bbN$
    \begin{equation}
        \label{eq:sobolev_upper_bound_C}
        \lvert \hf(k) \rvert \leq \sum_{l=\#k}^\alpha C'_l b^{-\rmu_l(k)} + C''_\alpha b^{-\rmu_\alpha(k)}
    \end{equation}
    where for $\#k>\alpha$ the empty sum $\sum_{l=\#k}^\alpha$ is defined as $0$ and
    \begin{align*}
        C'_l &:= \left\lvert \int_0^1 f^{(l)}(x) \D x \right\rvert \frac{1}{\left(2\sin(\pi/b)\right)^l} \left(1+\frac{1}{b}+\frac{1}{b(b+1)}\right)^{\max(0,l-2)}, \\
        C''_\alpha &:= \int_0^1 \left\lvert f^{(\alpha)}(x) \right\rvert \D x \frac{2}{\left(2\sin(\pi/b)\right)^\alpha} \left(1+\frac{1}{b}+\frac{1}{b(b+1)}\right)^{\alpha-2} \left(3+\frac{2}{b}+\frac{2b-1}{b-1}\right).
    \end{align*}
    If $f \in H\left(\cK_\alpha\right)$ then $\left(\int_0^1 f^{(l)}(x) \D x \right)^2 < \infty$ for $0 \leq l \leq \alpha-1$ and $\int_0^1 \left(f^{(\alpha)}(x)\right)^2 \D x < \infty$. Thus, $C'_l < \infty$ for $0 \leq l < \alpha$ and $C''_\alpha < \infty$ follows from the Cauchy--Schwarz inequality. Let $C_\alpha''' := \max\left(C'_0,\dots,C'_{\alpha-1},C'_\alpha+C''_\alpha\right)$ so that for any $k \in \bbN_0$ we have 
    $$\lvert \hf(k) \rvert \leq C_\alpha' \left[\sum_{l=\#k}^{\alpha-1} b^{-\rmu_l(k)} + b^{-\rmu_\alpha(k)}\right].$$
    Now, notice that $\rmu_l(k)$ is non-decreasing in $l$ for every $k \in \bbN_0$, so setting $\ddC_\alpha = (\alpha+1)C_\alpha'''$ and recalling $\mu_\alpha(k) = \rmu_{\min(\alpha,\#k)}(k)$ yields the result. 
\end{proof}

\begin{definition} \label{def:RKHS_DSI_weak}
    Fix $\alpha \in \bbN_{\geq 2}$. We define the weight $\dr_\alpha: \bbN \to \bbR$ by 
    $$\dr_\alpha^2(k) = b^{-2\mu_\alpha(k)},$$
    and define our SPSD DSI kernel $\dK_\alpha: [0,1) \times [0,1) \to \bbR$ to be 
    $$\dK_\alpha(x,x') = \sum_{k \in \bbN} \dr_\alpha^2(k) \wal_k(x \ominus x').$$
\end{definition}

\begin{definition} \label{def:RKHS_DSI_strong}
    Fix $\alpha \in \bbN_{\geq 2}$. For $\tau \in \bbR_{<1}$, and $\zeta \in \bbR_{\leq 1}$ we define the weight $\ddr_{\alpha,\tau,\zeta}: \bbN \to \bbR$ by 
    $$\ddr_{\alpha,\tau,\zeta}^2(k) = b^{-\mu_\alpha(k) - \tau \mu_1(k') - \zeta \mu_{\alpha-2}(k'')},$$
    and define our SPSD DSI kernel $K_{\alpha,\tau,\zeta}: [0,1) \times [0,1) \to \bbR$ to be
    \begin{equation}
        \ddK_{\alpha,\tau,\zeta}(x,x') = \sum_{k \in \bbN} \ddr_{\alpha,\tau,\zeta}^2(k) \wal_k(x \ominus x').
        \label{eq:K_net}
    \end{equation}
\end{definition}

\begin{theorem} \label{thm:RKHS_DSI_inclusions}
    For $\alpha \geq 2$ and any $\tau \in \bbR_{<1}$ and $\zeta \in \bbR_{\leq 1}$ we have 
    $$H\left(\cK_\alpha^\mathrm{DSI}\right) \subseteq H(\dK_\alpha) \subseteq H\left(\ddK_{\alpha,\tau,\zeta}\right)$$
    and
    $$H\left(\cK_\alpha\right) \subseteq H\left(\ddK_{\alpha,\tau,\zeta}\right).$$
    Moreover, for any  $\tau_1,\tau_2 \in \bbR_{<1}$ and $\zeta_1,\zeta_2 \in \bbR_{\leq 1}$ satisfying $\tau_1 \geq \tau_2$ and $\zeta_1 \geq \zeta_2$ we have 
    $$H\left(\ddK_{\alpha,\tau_1,\zeta_1}\right) \subseteq H\left(\ddK_{\alpha,\tau_2,\zeta_2}\right).$$
\end{theorem}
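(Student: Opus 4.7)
The plan is to reduce each of the four inclusions to a pointwise comparison between the Walsh-coefficient weights of the involved DSI kernels, and then invoke \Cref{thm:RKHS_inclusions_mercer} (or its underlying characterization \Cref{lemma:RKHS_feature_map_form}). Three of the four inclusions will reduce immediately to uniform weight bounds; the remaining direct inclusion $H(\cK_\alpha) \subseteq H(\ddK_{\alpha,\tau,\zeta})$ requires a summability estimate in the spirit of the final display in the proof of \Cref{thm:RKHS_DSI_periodic_inclusions}.

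First, for $H(\cK_\alpha^\mathrm{DSI}) \subseteq H(\dK_\alpha)$ I will apply \Cref{thm:RKHS_inclusions_mercer} with $K = \cK_\alpha$ and $R = \dK_\alpha$: the required vanishing $\widecheck{\calK}_\alpha(k,0) = \widecheck{\calK}_\alpha(0,k) = 0$ together with the bound $\widecheck{\calK}_\alpha(k,k) \leq \dC_\alpha b^{-2\mu_\alpha(k)} = \dC_\alpha \dr_\alpha^2(k)$ are exactly the content of \Cref{lemma:baldeaux2009_proposition20}. Next, for $H(\dK_\alpha) \subseteq H(\ddK_{\alpha,\tau,\zeta})$, notice that $\dK_\alpha$ is already DSI so that DSI-ing it is the identity, and \Cref{thm:RKHS_inclusions_mercer} therefore only requires the uniform bound $\dr_\alpha^2(k) \leq \ddr_{\alpha,\tau,\zeta}^2(k)$. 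This is equivalent to $\mu_\alpha(k) \geq \tau\mu_1(k') + \zeta\mu_{\alpha-2}(k'')$, which follows from the identity $\mu_1(k') + \mu_{\alpha-2}(k'') = \mu_\alpha(k) - (a_1+1)$ (valid for $\#k \geq 2$ and trivial for $\#k \leq 1$) together with $\tau\mu_1(k') + \zeta\mu_{\alpha-2}(k'') \leq \mu_1(k') + \mu_{\alpha-2}(k'')$, the latter holding because $\tau,\zeta \leq 1$ and $\mu_1(k'),\mu_{\alpha-2}(k'') \geq 0$. The monotonicity $H(\ddK_{\alpha,\tau_1,\zeta_1}) \subseteq H(\ddK_{\alpha,\tau_2,\zeta_2})$ under $\tau_1 \geq \tau_2$ and $\zeta_1 \geq \zeta_2$ is obtained in the same way from
\begin{equation*}
    \ddr_{\alpha,\tau_1,\zeta_1}^2(k) = \ddr_{\alpha,\tau_2,\zeta_2}^2(k) \cdot b^{(\tau_2-\tau_1)\mu_1(k') + (\zeta_2-\zeta_1)\mu_{\alpha-2}(k'')} \leq \ddr_{\alpha,\tau_2,\zeta_2}^2(k),
\end{equation*}
since the exponent is forced to be non-positive.

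For the remaining inclusion $H(\cK_\alpha) \subseteq H(\ddK_{\alpha,\tau,\zeta})$, I will mirror the template used at the end of the proof of \Cref{thm:RKHS_DSI_periodic_inclusions}. Every $f \in H(\cK_\alpha)$ satisfies $\hf(0) = \int_0^1 f(x) \D x = 0$ because $\int_0^1 \cK_\alpha(x,x') \D x = 0$ by the vanishing-integral property of the Bernoulli polynomials $B_l$ for $l \geq 1$, so $f$ admits the expansion $f(x) = \sum_{k \in \bbN} w_k \ddr_{\alpha,\tau,\zeta}(k) \wal_k(x)$ with $w_k = \hf(k)/\ddr_{\alpha,\tau,\zeta}(k)$. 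By \Cref{lemma:RKHS_feature_map_form} it suffices to prove $(w_k)_{k \in \bbN} \in \ell^2(\bbC)$, and plugging in the Walsh-coefficient bound of \Cref{lemma:dick2008thm14} reduces the problem to verifying
\begin{equation*}
    \sum_{k \in \bbN} b^{-\mu_\alpha(k) + \tau\mu_1(k') + \zeta\mu_{\alpha-2}(k'')} < \infty.
\end{equation*}

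The main obstacle will be establishing this last summability. The difficulty is that for $\#k > \alpha$ the exponent depends only on the top $\alpha$ non-zero digit positions $a_1 > \dots > a_\alpha$, yet the number of $k$ with prescribed top-$\alpha$ digits is the combinatorial multiplicity $b^{a_\alpha}$ arising from the free lower positions in $\{0,\dots,a_\alpha-1\}$. I will stratify the sum by $\#k$; the strata with $\#k < \alpha$ are controlled by direct finite-dimensional geometric series. For the tail $\#k \geq \alpha$ I will reparameterize by the top $\alpha$ positions, absorb the $b^{a_\alpha}$ multiplicity, and sum out the intermediate positions $a_3,\dots,a_{\alpha-1}$ against a polynomial binomial factor that is dominated by $b^{a_2}$. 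The resulting reduced sum collapses to a double geometric series in $(a_1,a_2)$ whose convergence is driven precisely by the hypothesis $\tau < 1$, while the weaker bound $\zeta \leq 1$ suffices because the factor $b^{-(1-\zeta)\sum_{\ell=3}^{\min(\alpha,\#k)}(a_\ell+1)}$ has non-positive exponent and can be bounded above by $1$ throughout the estimate.
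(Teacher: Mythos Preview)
Your proposal is correct and follows essentially the same route as the paper: the three ``easy'' inclusions are handled exactly as in the paper via weight comparisons and \Cref{thm:RKHS_inclusions_mercer}, and the direct inclusion $H(\cK_\alpha)\subseteq H(\ddK_{\alpha,\tau,\zeta})$ uses the same Walsh-expansion template together with \Cref{lemma:dick2008thm14}. The one genuine difference is the final summability step
\[
\sum_{k\in\bbN} b^{-\mu_\alpha(k)+\tau\mu_1(k')+\zeta\mu_{\alpha-2}(k'')}<\infty:
\]
the paper simply cites Remark~19 of \cite{dick.decay_walsh_coefficients_smooth_functions}, whereas you propose a self-contained combinatorial argument. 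Your stratification by $\#k$, absorption of the $b^{a_\alpha}$ free-digit multiplicity, and reduction to a double geometric series in $(a_1,a_2)$ converging under $\tau<1$ is valid; the intermediate sum $\sum_{a_\alpha,a_3,\dots,a_{\alpha-1}} b^{a_\alpha}$ indeed yields a factor bounded by $C_{b,\alpha}\,b^{a_2}$ after the substitution $j=a_2-1-a_\alpha$, and the remaining $\sum_{a_2} b^{(\tau-1)a_2}$ converges. Your argument has the advantage of being self-contained and making transparent why $\tau<1$ is needed while $\zeta\le 1$ suffices; the paper's citation is of course shorter.
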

\begin{proof}
    We have $H\left(\cK_\alpha^\mathrm{DSI}\right) \subseteq H(\dK_\alpha)$ using \Cref{thm:RKHS_inclusions_mercer} combined with the assertion in \Cref{lemma:baldeaux2009_proposition20} which says $\widecheck{\calK}_\alpha(k,k) \leq \dC_\alpha \dr_\alpha^2(k)$. We have $H(\dK_\alpha) \subseteq H(\ddK_{\alpha,\tau,\zeta})$ using \Cref{thm:RKHS_inclusions_mercer} combined with the fact that $\dr_\alpha^2(k) \leq \ddr_{\alpha,\tau,\zeta}^2(k)$. 

    To show $H\left(\cK_\alpha\right) \subseteq H\left(\ddK_{\alpha,\tau,\zeta}\right)$, assume $f \in H\left(\cK_\alpha\right)$. Then, since \\ $\int_0^1 f(x) \D x = 0$, we have that for any $x \in [0,1)$ 
    $$f(x) = \sum_{k \in \bbN} \hf(k) \wal_k(x) = \sum_{k \in \bbN} w_k \ddr_{\alpha,\tau,\zeta}(k) \wal_k(x)$$
    for $w_k = \hf(k) / \ddr_{\alpha,\tau,\zeta}(k)$. Then by \Cref{lemma:RKHS_feature_map_form} it is sufficient to show $(w_k)_{k \in \bbN} \in \ell^2(\bbC)$. To see this, use \Cref{lemma:dick2008thm14} to say 
    \begin{align*}
        \sum_{k \in \bbN} \lvert w_k \rvert^2 &= \sum_{k \in \bbN} \frac{\lvert \hf(k) \rvert^2}{\ddr_{\alpha,\tau,\zeta}^2(k)} \\
        &\leq C_\alpha^2 \sum_{k \in \bbN} \frac{b^{-2\mu_\alpha(k)}}{b^{-\mu_\alpha(k) - \tau \mu_1(k') - \zeta \mu_{\alpha-2}(k'')}} \\
        &= C_\alpha^2 \sum_{k \in \bbN} b^{-\mu_\alpha(k) + \tau \mu_1(k') + \zeta \mu_{\alpha-2}(k'')} \\
        &< \infty
    \end{align*}
    where the last inequality follows from \cite[Remark 19]{dick.decay_walsh_coefficients_smooth_functions}. 

    We have $H\left(\ddK_{\alpha,\tau_1,\zeta_1}\right) \subseteq H\left(\ddK_{\alpha,\tau_2,\zeta_2}\right)$ due to \Cref{thm:RKHS_inclusions_mercer} and the inequality $\ddr_{\alpha,\tau_1,\zeta_1} \leq \ddr_{\alpha,\tau_2,\zeta_2}$. 
\end{proof}

\begin{remark}
    \cite[Corollary 3.11]{dick.walsh_spaces_HO_nets} provides a much more general result for weighted function spaces $\calE_\alpha$. There it is shown that $H\left(\cK_\alpha\right) \subseteq \calE_\alpha$ for certain parameterizations of $\calE_\alpha$. Moreover, the error rate of Quasi-Monte Carlo methods for $f \in \calE_\alpha$ using higher-order digital nets is optimal.
\end{remark} 

\begin{definition} \label{def:RKHS_DSI_low_alpha}
    The following kernel was studied in \cite{dick.multivariate_integraion_sobolev_spaces_digital_nets} and corresponds to the scramble invariant kernel in \cite{yue2002discrepancy}. Fix $\alpha \in \bbR_{>0}$. We define the weight $\dddr_\alpha: \bbN \to \bbR_{>0}$ by 
    $$\dddr_\alpha^2(k) = b^{-(\alpha+1) (\mu_1(k)-1)},$$
    and define our SPSD DSI kernel $\dddK_\alpha: [0,1) \times [0,1) \to \bbR$ to be  
    $$\dddK_\alpha(x,x') = \sum_{k \in \bbN} \dddr_\alpha^2(k) \wal_k(x \ominus x')= \begin{cases} \upsilon(\alpha), & x=x' \\ \upsilon(\alpha) - b^{-\alpha(\beta(x \ominus x')-1)}\left(\upsilon(\alpha)+1\right), & x \neq x' \end{cases}$$
    where 
    $$\upsilon(\alpha) = \frac{b^{\alpha+1}(b-1)}{b^{\alpha+1}-b}.$$
\end{definition}

\begin{theorem} \label{thm:integrals_dsi_kernels_smooth}
    For any $x \in [0,1)$ we have 
    \begin{align*}
        \int_0^1 \dK_\alpha(x,x') \D x' &= \int_0^1 \int_0^1 \dK_\alpha(x,x') \D x \D x' = 0 \quad \forall\; \alpha \in \bbR_{\geq 2}, \\
        \int_0^1 \ddK_{\alpha,\tau,\zeta}(x,x') \D x' &= \int_0^1 \int_0^1 \ddK_{\alpha,\tau,\zeta}(x,x') \D x \D x' = 0 \quad \forall\; \alpha \in \bbR_{\geq 2}, \\
        \int_0^1 \dddK_{\alpha,\tau,\zeta}(x,x') \D x' &= \int_0^1 \int_0^1 \dddK_{\alpha,\tau,\zeta}(x,x') \D x \D x' = 0 \quad \forall\; \alpha \in \bbR_{\geq 0}, \;\; \tau \in \bbR_{<1} \;\; \zeta \in \bbR_{\leq 1}.
    \end{align*}
\end{theorem}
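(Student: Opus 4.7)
The plan is to observe that each of the three DSI kernels $\dK_\alpha$, $\ddK_{\alpha,\tau,\zeta}$, and $\dddK_\alpha$ is defined as a Walsh series whose summation index starts at $k=1$ rather than $k=0$:
\begin{equation*}
K(x,x') = \sum_{k \in \bbN} r^2(k)\, \wal_k(x \ominus x'),
\end{equation*}
for appropriate weights $r^2(k) \in \{\dr_\alpha^2(k),\, \ddr_{\alpha,\tau,\zeta}^2(k),\, \dddr_\alpha^2(k)\}$. The absence of the $k=0$ mode is exactly what will make the integrals vanish, so the proof is essentially the same computation as in Theorem \ref{thm:integrals_dsi_kernels_smooth_periodic}.

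First, I would fix $x \in [0,1)$ and compute $\int_0^1 \wal_k(x \ominus x')\,\D x'$ by invoking the change-of-variable identity in property 5 of Lemma \ref{lemma:walsh_func_properties}, which gives $\int_0^1 \wal_k(x \ominus \sigma)\,\D \sigma = \int_0^1 \wal_k(\sigma)\,\D \sigma$. By property 3 of the same lemma, this last integral equals $0$ whenever $k \geq 1$. Swapping the sum and integral (justified by absolute convergence of the Walsh series whenever the weights decay fast enough, which is part of the hypothesis ensuring each kernel is a well-defined SPSD kernel) then yields $\int_0^1 K(x,x')\,\D x' = 0$, and the double integral $\int_0^1 \int_0^1 K(x,x')\,\D x\,\D x' = 0$ follows immediately by Fubini.

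The bulk of the argument is therefore identical across all three kernels, and the only delicate point is justifying term-by-term integration. For $\dK_\alpha$ and $\ddK_{\alpha,\tau,\zeta}$ with $\alpha \in \bbN_{\geq 2}$, the decay of the weights is bounded above by $b^{-\mu_\alpha(k)}$ (up to an extra $\tau,\zeta$-dependent factor), which is summable by \cite[Remark 19]{dick.decay_walsh_coefficients_smooth_functions} (already used in the proof of Theorem \ref{thm:RKHS_DSI_inclusions}), so the Walsh series converges absolutely and interchange is routine.

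The main obstacle is $\dddK_\alpha$ at small $\alpha \in \bbR_{\geq 0}$, where $\dddr_\alpha^2(k) = b^{-(\alpha+1)(\mu_1(k)-1)}$ need not give an absolutely convergent series (e.g., $\alpha = 0$ produces $\sum_{k \in \bbN} b^{-(\mu_1(k)-1)} = \infty$, since there are $(b-1)b^{j}$ indices with $\mu_1(k)=j+1$). To handle this case, I would bypass the Walsh-series route entirely and integrate the closed-form expression given in Definition \ref{def:RKHS_DSI_low_alpha} directly: partition $[0,1)$ according to the value of $\beta(x \ominus x')$ (so that on each dyadic-type block $x \ominus x'$ takes values with first nonzero digit at a prescribed position), use property 5 of Lemma \ref{lemma:walsh_func_properties} again to reduce to an integral in $\sigma = x \ominus x'$, and compute the resulting geometric sum $\sum_{\beta \geq 1}(b-1)b^{-\beta}b^{-\alpha(\beta-1)}$. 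This will evaluate to exactly $\upsilon(\alpha)/(\upsilon(\alpha)+1)$ times the appropriate normalization, so that the constant $\upsilon(\alpha)$ cancels and the integral vanishes.
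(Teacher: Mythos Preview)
Your approach is essentially the paper's: it invokes property~3 of Lemma~\ref{lemma:walsh_func_properties} in one line, relying on the fact that each kernel is a Walsh series over $k\in\bbN$ with no $k=0$ term. Your added care about interchanging sum and integral is fine, but the separate closed-form route for $\dddK_\alpha$ is unnecessary. By Definition~\ref{def:RKHS_DSI_low_alpha} the kernel is only defined for $\alpha\in\bbR_{>0}$ (the ``$\alpha\in\bbR_{\geq 0}$'' and the subscripts $\tau,\zeta$ on $\dddK$ in the theorem statement are typos), and for any $\alpha>0$ one has
\[
\sum_{k\in\bbN}\dddr_\alpha^2(k)=\sum_{j\geq 0}(b-1)b^{j}\,b^{-(\alpha+1)j}=(b-1)\sum_{j\geq 0}b^{-\alpha j}=\frac{b-1}{1-b^{-\alpha}}<\infty,
\]
so the Walsh series is absolutely convergent and the same term-by-term argument applies to all three kernels.
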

\begin{proof}
    This is an immediate consequence of the third property in \Cref{lemma:walsh_func_properties}. 
\end{proof}

\Subsection{Computable DSI Kernels in Base 2}

This section explores analytical forms of $\ddK_{\alpha,0,0}$ in base $b=2$. 

\begin{definition}
    For $\alpha \in \bbN_{\geq 2}$ and $x \in [0,1)$ denote 
    \begin{equation}
        \omega_\alpha(x) := \sum_{k \in \bbN} b^{-\mu_\alpha(k)} \wal_k(x)
        \label{eq:omega_x_alpha}
    \end{equation}
    so that for $x,x' \in [0,1)$, 
    $$\ddK_{\alpha,0,0}(x,x') = \omega_\alpha(x \ominus x').$$
\end{definition}

\begin{remark}
    Explicit expressions of $\omega_2(x)$ and $\omega_3(x)$ for any prime $b \in \bbN$ are given in \cite[Theorem 3]{baldeaux.polylat_efficient_comp_worse_case_error_cbc}. 
\end{remark}

\begin{remark}
    For any $\alpha \in \bbN_{\geq 2}$ and prime $b \in \bbN$, \cite[Theorem 2]{baldeaux.polylat_efficient_comp_worse_case_error_cbc} shows how to compute $\omega_\alpha(x)$ in at most $\calO(\alpha \#x)$ operations where $x$ has at most $\#x$ digits in its base $b$ representation. 
\end{remark}

In the following lemma we derive Walsh coefficients of low-order monomials.

\begin{lemma}\label{lemma:walsh_low_order_monomials}
    Fix $b=2$. Let $f_p(x) := x^p$. When $k \in \bbN$ write 
    $$k = 2^{a_1}+\dots+2^{a_{\#k}}$$
    where $a_1 > a_2 > \dots > a_{\#k} \geq 0$. Then we have 
    \begin{align*}
        \hf_1(k) &= \begin{cases} 1/2, & k = 0 \\ -2^{-a_1-2}, & k=2^{a_1} \\ 0, & \mathrm{otherwise} \end{cases}, \\
        \hf_2(k) &= \begin{cases} 1/3, & k = 0 \\ -2^{-a_1-2}, & k=2^{a_1} \\ 2^{-a_1-a_2-3}, & k = 2^{a_1}+2^{a_2} \\ 0, & \mathrm{otherwise} \end{cases}, \\
        \hf_3(k) &= \begin{cases} 1/4, & k=0 \\ -2^{-a_1-2} + 2^{-3a_1-5}, & k = 2^{a_1} \\ 3 \cdot 2^{-a_1-a_2-4}, & k=2^{a_1}+2^{a_2} \\ -3 \cdot 2^{-a_1-a_2-a_3-5}, & k=2^{a_1}+2^{a_2}+2^{a_3} \\ 0, & \mathrm{otherwise} \end{cases}.
    \end{align*}
\end{lemma}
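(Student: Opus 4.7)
My plan is to exploit the fact that $\wal_k$ is piecewise-constant on the $n = 2^{a_1+1}$ dyadic intervals $I_j = [j/n, (j+1)/n)$, which holds because $k < 2^{a_1+1}$ and so $\wal_k(x)$ depends only on the first $a_1+1$ binary digits of $x$. Evaluating $\int_{I_j} x^p \, dx = \int_0^{1/n}(j/n+y)^p\, dy$ via the binomial theorem gives
$$\hat f_p(k) \;=\; \sum_{j=0}^{n-1} \wal_k(j/n) \int_{I_j} x^p\, dx \;=\; \frac{1}{n^{p+1}} \sum_{r=0}^p \frac{\binom{p}{r}}{p-r+1}\, S_r(k),$$
where $S_r(k) := \sum_{j=0}^{n-1} \wal_k(j/n)\, j^r$. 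The entire task then reduces to evaluating the character sums $S_r(k)$ for $r \le p \le 3$.

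Next I would write $\wal_k(j/n)$ in terms of the bits of $j$: setting $j = \sum_{\ell=0}^{a_1} \bar j_\ell\, 2^\ell$ and $c_i := a_1 - a_i$, one has $\wal_k(j/n) = (-1)^{\bar j_{c_1} + \cdots + \bar j_{c_{\#k}}}$. Expanding $j^r$ multinomially and using that the sum over $\bar j_0,\ldots,\bar j_{a_1} \in \{0,1\}^{a_1+1}$ factors across bits, each single-bit factor at position $\ell$ with multiplicity $t_\ell$ becomes $\sum_{\bar j_\ell \in \{0,1\}} (-1)^{\mathbf{1}[\ell \in C]\, \bar j_\ell}\, \bar j_\ell^{\,t_\ell}$, where $C := \{c_1,\ldots,c_{\#k}\}$. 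This single-bit factor equals $2$, $1$, $0$, or $-1$ according as $(\ell \notin C, t_\ell = 0)$, $(\ell \notin C, t_\ell \ge 1)$, $(\ell \in C, t_\ell = 0)$, or $(\ell \in C, t_\ell \ge 1)$. In particular the product vanishes unless $C \subseteq \{\ell_1,\ldots,\ell_r\}$, which forces $S_r(k) = 0$ for $r < \#k$ and hence $\hat f_p(k) = 0$ whenever $\#k > p$, knocking out the null rows of all three displayed formulas.

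The remaining cases are now enumerative. For $\hat f_1$, only $\#k = 1$ with $\ell_1 = a_1+1$ contributes a nonzero term. For $\hat f_2$ I would split $\#k=1$ into $\ell_1 = \ell_2$ (a single diagonal term) and $\ell_1 \ne \ell_2$ with one of them free (a geometric sum $\sum_m 2^{-m}$ over indices avoiding the active bit), while $\#k = 2$ forces $\{\ell_1,\ell_2\} = C$ with two orderings. For $\hat f_3$ the $\#k = 1$ case partitions into the subconfigurations ``all three $\ell_i$ at the active bit", ``exactly two at the active bit and one free", and ``exactly one at the active bit with two free indices (possibly coinciding)"; summing the geometric series $\sum_m 2^{-m}$ and $\sum_m 4^{-m}$ over indices away from the active bit produces, after several leading-order cancellations, the stated $-2^{-a_1-2} + 2^{-3a_1-5}$. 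The $\#k = 2$ and $\#k = 3$ subcases reduce to a fixed multiset of multinomial indices and are routine once the orderings are counted.

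The main obstacle will be the algebraic bookkeeping in $\hat f_3$ with $\#k = 1$, where subleading terms must combine exactly to produce the delicate $+2^{-3a_1-5}$ correction. As a sanity check and alternative route for that row, I would note that for $k = 2^{a_1}$ the formula collapses to $\hat f_p(2^{a_1}) = n^{-p-1} \sum_{r=0}^p \binom{p}{r}/(p-r+1) \cdot T_r$ with $T_r := \sum_{j=0}^{n-1} (-1)^j j^r$, and the elementary alternating sums $T_0 = 0$, $T_1 = -n/2$, $T_2 = -n^2/2 + n/2$, $T_3 = -n^3/2 + 3n^2/4$ (valid for $n$ an even power of $2$) can be substituted directly. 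A short computation then gives $\hat f_1(2^{a_1}) = -1/(2n)$, $\hat f_2(2^{a_1}) = -1/(2n)$, and $\hat f_3(2^{a_1}) = -1/(2n) + 1/(4n^3)$, which upon inserting $n = 2^{a_1+1}$ match the stated forms and certify the delicate cancellation.
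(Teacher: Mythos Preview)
Your approach is correct and genuinely different from the paper's. The paper cites the formulas for $\hf_1$ and $\hf_2$ from Dick and Pillichshammer's book, then derives $\hf_3$ recursively: it invokes Fine's formula $J_k(x) = \int_0^x \wal_k(t)\,dt = 2^{-a_1-2}\bigl[\wal_{k'}(x) - \sum_{r\ge 1} 2^{-r}\wal_{2^{a_1+r}+k}(x)\bigr]$, integrates by parts to write $\hf_3(k) = -3\cdot 2^{-a_1-2}\bigl[\hf_2(k') - \sum_{r\ge 1} 2^{-r}\hf_2(2^{a_1+r}+k)\bigr]$, and then reads off each case from the already-known $\hf_2$ table. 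Your route is instead a direct attack: exploit piecewise constancy of $\wal_k$ on dyadic intervals of width $2^{-(a_1+1)}$, expand $\int_{I_j} x^p\,dx$ binomially, and reduce everything to the bit-factorized character sums $S_r(k)$. The paper's recursion is slicker and scales to higher $p$ without new combinatorics (each $\hf_{p+1}$ comes from $\hf_p$ via the same integration-by-parts step), while your method is more self-contained---it needs no external identity like Fine's formula and gives the vanishing $\hf_p(k)=0$ for $\#k>p$ in one stroke. One small slip: when you write ``only $\#k=1$ with $\ell_1 = a_1+1$ contributes,'' the active bit position is actually $c_1 = a_1 - a_1 = 0$, not $a_1+1$; this is consistent with your later (correct) observation that $\wal_{2^{a_1}}(j/n) = (-1)^j$ in the alternating-sum sanity check.
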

\begin{proof}
    The forms for $\hf_1$ and $\hf_2$ follow from \cite[Example 14.2, Example 14.3]{dick.digital_nets_sequences_book}. 
    For $k=0$ and any $x \in [0,1)$ we have $\wal_0(x) = 1$, so  
    $$\hf_3(x) = \int_0^1 x^3 \D x = 1/4.$$ 
    Assume $k \in \bbN$ going forward. For $k=2^{a_1}+k'$ with $0 \leq k' < 2^{a_1}$, \cite[Equation 3.6]{fine.walsh_functions} implies
    $$J_k(x) := \int_0^x \wal_k(t) \D t = 2^{-a_1-2} \left[\wal_{k'}(x) - \sum_{r=1}^\infty 2^{-r} \wal_{2^{a_1+r}+k}(x)\right].$$
    Using integration by parts and the fact that $J_k(0) = J_k(1) = 0$ we have
    \begin{align*}
        \hf_3(k) &= \int_0^1 x^3 \wal_k(x) \D x 
        = \left[x^3 J_k(x) \right]_{x=0}^{x=1} - 3 \int_0^1 x^2 J_k(x) \D x \\
        &= -3*2^{-a_1-2} \left[\hf_2(k') - \sum_{r=1}^\infty 2^{-r} \hf_2(2^{a_1+r}+k)\right].
    \end{align*}
    If $\#k=1$, i.e., $k=2^{a_1}$, then 
    \begin{align*}
        \hf_3(k) &= -3*2^{-a_1-2} \left[\hf_2(0) - \sum_{r=1}^\infty 2^{-r} \hf_2(2^{a_1+r}+2^{a_1})\right] \\
        &= -3*2^{-a_1-2} \left[\frac{1}{3} - \sum_{r=1}^\infty 2^{-r} 2^{-(a_1+r)-a_1-3}\right] \\
        &= 2^{-3a_1-5} - 2^{-a_1-2}.
    \end{align*}
If $\#k=2$ then 
    $$\hf_3(k) = -3*2^{-a_1-2} \hf_2(2^{a_2}) = 3 \cdot 2^{-a_1-a_2-4}.$$
    If $\#k=3$ then 
    $$\hf_3(k) = -3*2^{-a_1-2} \hf_2(2^{a_2}+2^{a_3}) = -3 \cdot 2^{-a_1-a_2-a_3-5}.$$
    If $\#k>3$ then $\hf_3(k)=0$.
\end{proof}

The following theorem gives some explicit forms of $\omega_\alpha$ in base $b=2$.  The $\alpha \in \{2,3\}$ forms are due to \cite{baldeaux.polylat_efficient_comp_worse_case_error_cbc}. These are also useful in the computation of the worst-case error of QMC with higher-order polynomial-lattice rules as in \cite{baldeaux.polylat_efficient_comp_worse_case_error_cbc}.

\begin{theorem} \label{thm:explicit_DSI_low_order_forms}
    Fix the base $b=2$. Let $\beta(x) = - \lfloor \log_2(x) \rfloor$ and for $\nu \in \bbN$ define $t_\nu(x) = 2^{-\nu \beta(x)}$ where for $x=0$ we set $\beta(x) = t_\nu(x) = 0$. Then 
    \begin{align*}
        \omega_2(x) =& -1+-\beta(x) x + \frac{5}{2}\left[1-t_1(x)\right], \\
        \omega_3(x) =& -1+\beta(x)x^2-5\left[1-t_1(x)\right]x+\frac{43}{18}\left[1-t_2(x)\right], \\
        \omega_4(x) =& -1 -\frac{2}{3}\beta(x)x^3+5\left[1-t_1(x)\right]x^2 - \frac{43}{9}\left[1-t_2(x)\right]x \\
        &+\frac{701}{294}\left[1-t_3(x)\right]+\beta(x)\left[\frac{1}{48} S(x) - \frac{1}{42}\right],
    \end{align*}
    where 
    \begin{equation} \label{eq:K4_sum_term}
        S(x) = \sum_{a=0}^\infty \frac{\wal_{2^a}(x)}{2^{3a}}.
    \end{equation}
\end{theorem}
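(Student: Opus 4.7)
The $\alpha = 2$ and $\alpha = 3$ identities are already established in \cite{baldeaux.polylat_efficient_comp_worse_case_error_cbc}, so the essential new work is the $\alpha = 4$ case. The plan is to decompose $\omega_\alpha(x) = \sum_{\ell \geq 1} T_\ell^\alpha(x)$ according to the number of nonzero binary digits $\#k$, where $T_\ell^\alpha(x) := \sum_{k:\#k=\ell} 2^{-\mu_\alpha(k)}\wal_k(x)$, and handle the ``short'' range $\ell \leq \alpha$ and the ``long'' range $\ell > \alpha$ by quite different mechanisms.

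For $\ell \leq \alpha$, $\mu_\alpha(k) = \sum_{j=1}^{\ell}(a_j+1)$ picks up the full digit weight, so $T_\ell^\alpha(x)$ is literally the $\ell$th elementary symmetric function $e_\ell\bigl(\{\epsilon_a 2^{-a-1}\}_{a\geq 0}\bigr)$ with $\epsilon_a := \wal_{2^a}(x) = 1 - 2\mx_{a+1}$. Via Newton's identities, I would rewrite each $e_\ell$ as a polynomial in the power sums $p_k = \sum_{a\geq 0} \epsilon_a^k 2^{-k(a+1)}$. Because $\epsilon_a^2 = 1$, the even-indexed $p_k$ collapse to the constants $1/(2^k-1)$, while the only odd-indexed sums needed for $\alpha = 4$ are $p_1 = 1-2x$ and $p_3 = \tfrac{1}{8} S(x)$, with $S(x)$ precisely the series in \eqref{eq:K4_sum_term}. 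This is the source of the $S(x)$ term and of the even powers of $x$ in the closed form.

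For $\ell > \alpha$, only the top $\alpha$ positions $a_1 > \cdots > a_\alpha$ contribute to $\mu_\alpha(k)$, and the remaining $\ell - \alpha$ nonzero bits of $k$ can occupy any subset of $\{0, \dots, a_\alpha - 1\}$. Summing over $\ell > \alpha$ produces the telescoping ``bottom factor''
$$\prod_{b=0}^{a_\alpha-1}(1+\epsilon_b) \;-\; 1 \;=\; 2^{a_\alpha}\,\mathbb{1}[\beta(x) > a_\alpha] - 1,$$
yielding
$$\sum_{\ell > \alpha} T_\ell^\alpha(x) \;=\; \sum_{\substack{a_1 > \cdots > a_\alpha \geq 0 \\ a_\alpha < \beta(x)}} 2^{a_\alpha - \sum_j (a_j+1)}\prod_{j=1}^\alpha \epsilon_{a_j} \;-\; T_\alpha^\alpha(x).$$
This is where the factors $\beta(x)$ and $t_\nu(x) = 2^{-\nu\beta(x)}$ enter: since $\epsilon_b = 1$ for $b < \beta(x)-1$ and $\epsilon_{\beta(x)-1} = -1$, I would split the sum according to which (if any) of the $a_j$ equals $\beta(x)-1$, whether the remaining $a_j$ lie strictly below $\beta(x)-1$ or at or above $\beta(x)$. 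Each piece becomes a finite geometric series over the bounded indices and a geometric-like series over the $a_j \geq \beta(x)$ that reduces via $p_1$ and $p_3$. Summing and matching against the already-proved $\omega_3$ formula (which the same machinery recovers in one lower order) should give the stated $\omega_4$ identity.

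\textbf{Main obstacle.} The hard part will be the $\ell > 4$ bookkeeping: the partition by which of $a_1,\dots,a_4$ coincides with $\beta(x)-1$ combined with Newton's identities for $e_4$ produces many candidate geometric sums, and their delicate cancellations must match the specific rational constants $43/9$, $701/294$, $1/48$, and $1/42$ in the statement. Verifying those constants—and in particular isolating the unique $S(x)$-dependent contribution that survives cancellation—is expected to be the most error-prone and computation-heavy step, while no new conceptual ideas beyond those used by Baldeaux for $\alpha \leq 3$ should be required.
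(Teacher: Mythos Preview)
Your plan is correct and will work. The decomposition by $\#k$ and the ``long range'' mechanism---collapsing the lower bits via $\prod_{b<a_\alpha}(1+\epsilon_b) = 2^{a_\alpha}\mathbb{1}[a_\alpha<\beta(x)]$ and then case-splitting on which of $a_1,\dots,a_\alpha$ land below, at, or above $\beta(x)-1$---is exactly what the paper does for $\tilde s_4$. The bookkeeping is indeed tedious but mechanical, and you have correctly located where $\beta(x)$, $t_\nu(x)$, and the specific rational constants originate.

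Where your route genuinely differs is the ``short range''. The paper computes $s_3(x)=\sum_{a_1>a_2>a_3}\epsilon_{a_1}\epsilon_{a_2}\epsilon_{a_3}2^{-a_1-a_2-a_3-3}$ by first proving a separate lemma giving the Walsh coefficients of $x^3$ (via Fine's integral formula $J_k(x)=\int_0^x\wal_k$ and integration by parts), and then inverting to extract $s_3$ from a linear combination of $1,x,x^2,x^3$ and $S(x)$. Your observation that $s_\ell = e_\ell\bigl(\{\epsilon_a 2^{-a-1}\}\bigr)$ together with Newton's identities and $p_1=1-2x$, $p_2=1/3$, $p_3=S(x)/8$ gives $s_3 = -\tfrac{4}{3}x^3+2x^2-\tfrac{2}{3}x+\tfrac{1}{24}S(x)$ directly. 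This is cleaner: it bypasses the Walsh-coefficient-of-monomials lemma entirely and makes the appearance of $S(x)$ transparent as the first nontrivial odd power sum. One small correction: you only actually need $e_1,e_2,e_3$, since $e_4=T_4^4$ cancels against the $-T_\alpha^\alpha$ term in your long-range formula, so your reference to ``Newton's identities for $e_4$'' in the obstacle paragraph overstates what is required. The paper's route, in exchange, produces the Walsh expansions of low-degree monomials as a byproduct, which may have independent use.
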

\begin{proof}
    Write  
    $$\omega_\alpha(x) = \sum_{1 \leq \nu < \alpha} s_\nu(x) + \ts_\alpha(x)$$  
    where $s_\nu$ sums over all $k$ with $\#k = \nu$ and $\ts_\alpha$ sums over all $k$ with $\#k \geq \alpha$. In \cite[Corollary 1]{baldeaux.polylat_efficient_comp_worse_case_error_cbc} it was shown that 
    \begin{align*}
        s_1(x) &= -2x+1, \\
        s_2(x) &= 2x^2 - 2x + \frac{1}{3}, \\
        \ts_2(x) &=  \left[2-\beta(x)\right]x + \frac{1}{2}\left[1-5t_1(x)\right], \\
        \ts_3(x) &= -\left[2-\beta(x)\right] x^2 - \left[1-5t_1(x)\right]x + \frac{1}{18}\left[1-43t_2(x)\right]
    \end{align*}
    from which $\omega_2$ and $\omega_3$ follow. We now find expressions for $s_3$ and $\ts_4$ from which $\omega_4$ follows. 

    Assume sums over $a_i$ are over $\bbN_0$ unless otherwise restricted. \Cref{lemma:walsh_low_order_monomials} gives
    \begin{align*}
        x =& \frac{1}{2} - \sum_{a_1} \frac{\wal_{2^{a_1}}(x)}{2^{a_1+2}}, \\
        x^2 =& \frac{1}{3} - \sum_{a_1}\frac{\wal_{2^{a_1}}(x)}{2^{a_1+2}} + \sum_{a_1>a_2} \frac{\wal_{2^{a_1}+2^{a_2}}(x)}{2^{a_1+a_2+3}}, \\
        x^3 =& \frac{1}{4} - \sum_{a_1} \frac{\wal_{2^{a_1}}(x)}{2^{a_1+2}} + \frac{3}{2}\sum_{a_1>a_2} \frac{\wal_{2^{a_1}+2^{a_2}}(x)}{2^{a_1+a_2+3}} \\
        &- \frac{3}{2} \sum_{a_1>a_2>a_3} \frac{\wal_{2^{a_1}+2^{a_2}+2^{a_3}}(x)}{2^{a_1+a_2+a_3+4}} + \sum_{a_1} \frac{\wal_{2^{a_1}}(x)}{2^{3a_1+5}}
    \end{align*}
    so
    $$x^3-\frac{3}{2}x^2+\frac{1}{2}x = \frac{1}{32} \sum_{a_1} \frac{\wal_{2^{a_1}}(x)}{2^{3a_1}} - \frac{3}{4} \sum_{a_1>a_2>a_3} \frac{\wal_{2^{a_1}+2^{a_2}+2^{a_3}}(x)}{2^{a_1+a_2+a_3+3}}$$
    and 
    $$s_3(x) = \sum_{a_1>a_2>a_3} \frac{\wal_{2^{a_1}+2^{a_2}+2^{a_3}}(x)}{2^{a_1+a_2+a_3+3}} = -\frac{4}{3} x^3+ 2x^2 -\frac{2}{3}x + \frac{1}{24} \sum_{a_1} \frac{\wal_{2^{a_1}}(x)}{2^{3a_1}}.$$
    Now,
    \begin{align*}
        \ts_4(x) &= \sum_{\substack{a_1>a_2>a_3>a_4 \\ 0 \leq k < 2^{a_4}}} \frac{\wal_{2^{a_1}+2^{a_2}+2^{a_3}+2^{a_4}+k}(x)}{2^{a_1+a_2+a_3+a_4+4}} \\
        &= \sum_{a_1>a_2>a_3>a_4} \frac{\wal_{2^{a_1}+2^{a_2}+2^{a_3}+2^{a_4}}(x)}{2^{a_1+a_2+a_3+a_4+4}} \sum_{0 \leq k < 2^{a_4}} \wal_k(x).
    \end{align*}
    If $x=0$, then 
    $$\ts_4(0) = \sum_{a_1 > a_2 > a_3 > a_4} \frac{1}{2^{a_1+a_2+a_3+4}} = \frac{1}{294}.$$ 
    Going forward, assume $x \in (0,1)$ so $\beta(x) = - \lfloor \log_2(x) \rfloor$ is finite. Recall from the last property of \Cref{lemma:walsh_func_properties} that  
    $$\sum_{0 \leq k < 2^{a_4}} \wal_k(x) = \begin{cases} 2^{a_4}, & a_4 \leq \beta(x)-1 \\ 0, & a_4 > \beta(x)-1 \end{cases}.$$
    Moreover, since $\beta(x)$ is the index of the first $1$ in the base $2$ expansion of $x$, when $a_4 < \beta(x)-1$ we have $\wal_{2^{a_4}}(x) = (-1)^{\mx_{a_4+1}} = 1$ and when $a_4 = \beta(x)-1$ we have $\wal_{2^{a_4}}(x) = -1$. This implies 
    \begin{align*}
        \ts_4(x) &= \sum_{\substack{a_1>a_2>a_3>a_4 \\ \beta(x)-1 \geq a_4}} \frac{\wal_{2^{a_1}+2^{a_2}+2^{a_3}+2^{a_4}}(x)}{2^{a_1+a_2+a_3+4}} \\
        &= \sum_{\substack{a_1>a_2>a_3>a_4 \\ \beta(x)-1 > a_4}} \frac{\wal_{2^{a_1}+2^{a_2}+2^{a_3}}(x)}{2^{a_1+a_2+a_3+4}} - \sum_{a_1>a_2 > a_3 > \beta(x)-1} \frac{\wal_{2^{a_1}+2^{a_2}+2^{a_3}}(x)}{2^{a_1+a_2+a_3+4}} \\
        &=: T_1 - T_2.
    \end{align*}
    
    The first term is 
    \begin{align*}
        T_1 &= \sum_{\beta(x)-1 > a_4} \bigg(\sum_{a_1>a_2>a_3} \frac{\wal_{2^{a_1}+2^{a_2}+2^{a_3}}(x)}{2^{a_1+a_2+a_3+4}} -  \sum_{a_4  \geq a_1>a_2>a_3} \frac{1}{2^{a_1+a_2+a_3+4}} \\ & \qquad\qquad  - \sum_{a_1 > a_4 \geq a_2 > a_3} \frac{\wal_{2^{a_1}}(x)}{2^{a_1+a_2+a_3+4}} - \sum_{a_1 > a_2 > a_4 \geq a_3} \frac{\wal_{2^{a_1}+2^{a_2}}(x)}{2^{a_1+a_2+a_3+4}} 
        \bigg) \\
        &=: \sum_{\beta(x)-1 > a_4} \left[V_1(a_4)-V_2(a_4) - V_3(a_4) - V_4(a_4) \right].
    \end{align*}
    Clearly $V_1(a_4) = s_3(x)/2$ and $V_2$ is easily computed. Now
    \begin{align*}
        V_3(a_4) &= \left(\sum_{a_4 \geq a_2 > a_3} \frac{1}{2^{a_2+a_3+3}}\right)\left(\sum_{a_1>a_4} \frac{\wal_{2^{a_1}}(x)}{2^{a_1+1}}\right) \\
        &= \left(\sum_{a_4 \geq a_2 > a_3} \frac{1}{2^{a_2+a_3+3}}\right)\left(s_1(x) - \sum_{a_4 \geq a_1} \frac{1}{2^{a_1+1}} \right)
    \end{align*}
    and 
    \begin{align*}
        &V_4(a_4) = \left(\sum_{a_4 \geq a_3} \frac{1}{2^{a_3+2}}\right)\left(\sum_{a_1>a_2 > a_4} \frac{\wal_{2^{a_1}+2^{a_2}}(x)}{2^{a_1+a_2+2}}\right) \\
        &= \left(\sum_{a_4 \geq a_3} \frac{1}{2^{a_3+2}}\right)\left(\sum_{a_1>a_2} \frac{\wal_{2^{a_1}+2^{a_2}}(x)}{2^{a_1+a_2+2}} - \sum_{a_4 \geq a_1 > a_2} \frac{1}{2^{a_1+a_2+2}} - \sum_{a_1 > a_4 \geq a_2} \frac{\wal_{2^{a_1}}(x)}{2^{a_1+a_2+2}}\right) \\
        &= \left(\sum_{a_4 \geq a_3} \frac{1}{2^{a_3+2}}\right)\\
        &\cdot \left(s_2(x) - \sum_{a_4 \geq a_1 > a_2} \frac{1}{2^{a_1+a_2+2}} - \left(s_1(x) - \sum_{a_4 \geq a_1} \frac{1}{2^{a_1+1}} \right)\left(\sum_{a_4 \geq a_2} \frac{1}{2^{a_2+1}}\right)\right).
    \end{align*}
    
    The second term is 
    \begin{align*}
        T_2 =& \sum_{a_1>a_2>a_3} \frac{\wal_{2^{a_1}+2^{a_2}+2^{a_3}}(x)}{2^{a_1+a_2+a_3+4}} - \sum_{\beta(x)-1 > a_1 > a_2 > a_3} \frac{1}{2^{a_1+a_2+a_3+4}} \\ & + \sum_{\beta(x)-1 > a_2 > a_3} \frac{1}{2^{\beta(x)+a_2+a_3+3}}  \\ & - \sum_{a_1 > \beta(x)-1 > a_2 > a_3} \frac{\wal_{2^{a_1}}(x)}{2^{a_1+a_2+a_3+4}} + \sum_{a_1 > \beta(x)-1 > a_3}  \frac{\wal_{2^{a_1}}(x)}{2^{a_1+\beta(x)+a_3+3}} \\ &- \sum_{a_1 > a_2 > \beta(x)-1 > a_3}\frac{\wal_{2^{a_1}+2^{a_2}}(x)}{2^{a_1+a_2+a_3+4}} + \sum_{a_1 > a_2 > \beta(x)-1} \frac{\wal_{2^{a_1}+2^{a_2}}(x)}{2^{a_1+a_2+\beta(x)+3}} \\
        &=: W_1-W_2+W_3-W_4+W_5-W_6+W_7.
    \end{align*}
    Clearly $W_1 = s_3(x)/2$ and both $W_2$ and $W_3$ are easily computed. Similarity in the next two sums gives
    \begin{align*}
        W_5 &- W_4 = \left(\sum_{\beta(x)-1 > a_3}\frac{1}{2^{\beta(x)+a_3+2}}-\sum_{\beta(x)-1>a_2>a_3} \frac{1}{2^{a_2+a_3+3}}\right) \left(\sum_{a_1 > \beta(x)-1} \frac{\wal_{2^{a_1}}(x)}{2^{a_1+1}} \right) \\
        &= \left(\sum_{\beta(x)-1 > a_3}\frac{1}{2^{\beta(x)+a_3+2}}-\sum_{\beta(x)-1>a_2>a_3} \frac{1}{2^{a_2+a_3+3}}\right) \\
        &\cdot \left(s_1(x) - \sum_{\beta(x)-1 > a_1} \frac{1}{2^{a_1+1}} + \frac{1}{2^{\beta(x)}}\right)
    \end{align*}
    Similarity in the final two sums gives  
    \begin{align*}
        W_7 - W_6 &= \left(\frac{1}{2^{\beta(x)+1}}-\sum_{\beta(x)-1 > a_3} \frac{1}{2^{a_3+2}}\right) \left(\sum_{a_1>a_2 > \beta(x)-1} \frac{\wal_{2^{a_1}+2^{a_2}}(x)}{2^{a_1+a_2+2}}\right)
    \end{align*}
    where 
    \begin{align*}
        \sum_{a_1>a_2 > \beta(x)-1}& \frac{\wal_{2^{a_1}+2^{a_2}}(x)}{2^{a_1+a_2+2}} = \sum_{a_1>a_2} \frac{\wal_{2^{a_1}+2^{a_2}}(x)}{2^{a_1+a_2+2}} - \sum_{\beta(x)-1 > a_1 > a_2} \frac{1}{2^{a_1+a_2+2}} \\ &+ \sum_{\beta(x)-1 > a_2} \frac{1}{2^{\beta(x)+a_2+1}} - \sum_{a_1 > \beta(x)-1 > a_2} \frac{\wal_{2^{a_1}}(x)}{2^{a_1+a_2+2}} + \sum_{a_1 > \beta(x)-1} \frac{\wal_{2^{a_1}}(x)}{2^{a_1+\beta(x)+1}} \\
        &= s_2(x) - \sum_{\beta(x)-1 > a_1 > a_2} \frac{1}{2^{a_1+a_2+2}} + \sum_{\beta(x)-1 > a_2} \frac{1}{2^{\beta(x)+a_2+1}}\\
        &+ \left(\frac{1}{2^{\beta(x)}} - \sum_{\beta(x)-1 > a_2} \frac{1}{2^{a_2+1}}\right)\left(s_1(x) - \sum_{\beta(x)-1 > a_1} \frac{1}{2^{a_1+1}} + \frac{1}{2^{\beta(x)}}\right).
    \end{align*}
    This implies 
    \begin{align*}
        \ts_4(x) &=  \frac{2}{3}\left(2-\beta(x)\right)x^3 + \left(1-5\ t_1(x)\right)x^2 - \frac{1}{9} \left(1 - 43 t_2(x)\right)x \\
        &-\frac{1}{48} \left(2-\beta(x)\right)\sum_{a_1} \frac{\wal_{2^{a_1}}(x)}{2^{3a_1}} -\frac{1}{294} \left(7\beta(x)+701 t_3(x)\right) +\frac{5}{98},
    \end{align*}
    from which the result follows.
\end{proof}

\Subsection{Derivatives of DSI Kernels in Base 2}

For $b=2$, let $\bbB$ denote the set of all dyadic rationals, i.e., $x \in [0,1)$ whose binary expansion has a finite number of ones. For $x \in \bbB$ let $\Omega(x)$ be the index of the last $1$ in that binary expansion of $x$, e.g., if $x=3/4=0.11_2$ then $\Omega(x)=2$.

\begin{lemma}\label{lemma:deriv_S}
    For base $b=2$, any dyadic rational $x \in \bbB$, and $S$ defined in \eqref{eq:K4_sum_term}, the partial derivatives satisfy
    \begin{equation}
        \partial^+ S(x)=0 \qqtqq{and} \partial^- S(x) = -\infty.
    \end{equation}
\end{lemma}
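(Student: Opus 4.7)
\medskip
\noindent\textbf{Proof proposal.} The plan is to read off the structure of $S(x)$ from the binary digits of $x$ and then compare it against $S(y)$ for $y = x+h$, treating right and left perturbations separately. Since $\wal_{2^a}(x) = (-1)^{\mx_{a+1}}$ by \Cref{def:walsh_funcs}, and since the dyadic rational $x$ with $L := \Omega(x)$ has $\mx_\ell = 0$ for $\ell > L$, we can decompose
$$
S(x) \;=\; \sum_{a=0}^{L-1} \frac{(-1)^{\mx_{a+1}}}{2^{3a}} \;+\; \sum_{a=L}^{\infty}\frac{1}{2^{3a}}.
$$
I will use this as the anchor formula throughout.

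\medskip
\noindent\emph{Right derivative.} For $0 < h < 2^{-L}$, write $k = k(h) := \lfloor -\log_2 h \rfloor$, so $2^{-k-1} \le h < 2^{-k}$ and $k \to \infty$ as $h \to 0^+$. Because $\mx_\ell = 0$ for $\ell > L$ and because $h < 2^{-k}$ carries no digit into positions $\le k$, the first $k$ digits of $y := x+h$ agree with those of $x$. Therefore, using the anchor formula for $S(y)$ and cancelling matched terms,
$$
S(y) - S(x) \;=\; \sum_{a=k}^{\infty}\frac{(-1)^{\my_{a+1}} - 1}{2^{3a}},
$$
which is bounded in absolute value by $2\sum_{a \ge k} 2^{-3a} = \tfrac{16}{7}\,2^{-3k}$. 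Dividing by $h \ge 2^{-k-1}$ gives $|S(y)-S(x)|/h \le \tfrac{32}{7}\,2^{-2k} \to 0$, so $\partial^+ S(x) = 0$.

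\medskip
\noindent\emph{Left derivative.} For $-2^{-L} < h < 0$, set $y := x+h$ and observe that $y$ lies in the interval $(x - 2^{-L},\,x)$, so in the binary expansion of $y$ positions $1,\dots,L-1$ still agree with those of $x$ while position $L$ necessarily flips from $1$ to $0$ (the only digits of $y$ that change past position $L-1$ are the ones we cannot a priori control). Splitting the sum $S(y)-S(x)$ into contributions from $a \le L-2$ (which cancel), $a = L-1$ (which contributes exactly $2/2^{3(L-1)} = 8/2^{3L}$), and $a \ge L$ (where each term is either $0$ or $-2/2^{3a}$), one obtains the uniform lower bound
$$
S(y) - S(x) \;\ge\; \frac{8}{2^{3L}} - 2\sum_{a=L}^{\infty}\frac{1}{2^{3a}} \;=\; \frac{8}{2^{3L}} - \frac{16/7}{2^{3L}} \;=\; \frac{40/7}{2^{3L}} \;>\; 0,
$$
independent of $h$. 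Since the numerator is bounded below by a strictly positive constant while $h \to 0^-$, the difference quotient $(S(y)-S(x))/h$ diverges to $-\infty$, giving $\partial^- S(x) = -\infty$.

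\medskip
\noindent The main obstacle is the bookkeeping in the left-derivative case: one must verify that the borrowing produced by subtracting a small quantity from a dyadic rational forces the digit at position $L$ to flip to $0$ for every $h \in (-2^{-L},0)$, which in turn forces the jump in $S$ to be bounded below by a positive constant that depends only on $L = \Omega(x)$, not on $h$. Everything else is a straightforward consequence of the digit expansion of $S$.
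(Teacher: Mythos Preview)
Your proof is correct and follows essentially the same approach as the paper: bound the tail of the Walsh series by a geometric sum that decays faster than $h$ for the right derivative, and for the left derivative exploit the forced digit flip at position $L=\Omega(x)$ to produce a jump in $S$ bounded away from zero. One arithmetic slip to fix: $2/2^{3(L-1)} = 16/2^{3L}$, not $8/2^{3L}$, so your lower bound should read $96/(7\cdot 2^{3L})$ rather than $40/(7\cdot 2^{3L})$---either way the constant is positive and the conclusion stands.
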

\begin{proof}
    Let $\Omega = \Omega(x)$ and recall $\beta(x) = - \lfloor \log_2(x) \rfloor$ is the index of the first $1$ in the binary expansion of $x$. For $h<2^{-\Omega}$ we have 
    \begin{equation}
        S(x+h)-S(x) = \sum_{\ell \geq \Omega} \left[\frac{(-1)^{\mh_{\ell+1}}}{2^{3\ell}} - \frac{1}{2^{3\ell}}\right].
    \end{equation}
    Clearly $\partial^+ S(x) \leq 0$. However, 
    \begin{align*}
        \partial^+ S(x) &\geq \lim_{h \to 0^+} \frac{1}{h} \sum_{\ell \geq \beta(h)-1} \left[\frac{-1}{2^{3\ell}} - \frac{-1}{2^{3\ell}}\right] \\
        &= - \lim_{h \to 0^+} \frac{1}{h} \sum_{\ell \geq \beta(h)-1} \frac{1}{2^{3\ell-1}} \\
        &= - \frac{2^7}{7} \lim_{h \to 0^+} \frac{2^{-3\beta(h)}}{h} \\
        &= 0,
    \end{align*}
    so $\partial^+ S(x) = 0$. 
    
    The left derivative, assuming $x \in \bbB \setminus \{0\}$, is 
    \begin{align*}
        \partial^- S(x) =& \lim_{h \to 0^+} \frac{S(x)-S(x-h)}{h} \\
        \leq& \lim_{h \to 0^+} \frac{1}{h} \bigg[\bigg(-\frac{1}{2^{3 (\Omega-1)}} + \sum_{\ell \geq \Omega} \frac{1}{2^{3\ell}}\bigg) - \bigg(\frac{1}{2^{3(\Omega-1)}}  - \sum_{\ell \geq \Omega} \frac{1}{2^{3\ell}} \bigg)\bigg] \\
        &= \lim_{h \to 0^+} \frac{1}{h}  \left(\sum_{\ell \geq \Omega} \frac{1}{2^{3\ell-1}} - \frac{1}{2^{3\Omega-4}}\right) \\
        &= - \frac{3}{7} 2^{5-3\Omega} \left(\lim_{h \to 0^+} \frac{1}{h}\right) \\
        &= - \infty.
    \end{align*}
\end{proof}

\begin{lemma}\label{lemma:partial_plus_widetildeK234_og}
    For base $b=2$ and any dyadic rational $x \in \bbB$ we have 
    $$\partial^+ \omega_4(x)= -2 (1+\omega_3(x)) \qquad\text{and}\qquad  \partial^+ \omega_3(x) = -2 (1+\omega_2(x)).$$
\end{lemma}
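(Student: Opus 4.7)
The plan is to split into two cases: the generic case $x \in \bbB \setminus \{0\}$, and the boundary case $x = 0$. In both cases we work from the explicit forms given in \Cref{thm:explicit_DSI_low_order_forms}, but the mechanics differ sharply.

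For $x \in \bbB \setminus \{0\}$ with last one at position $\Omega := \Omega(x)$, the central observation is that adding any $h \in (0, 2^{-\Omega})$ produces no carrying into the digits of $x$, so $\beta(x+h) = \beta(x)$ and consequently $t_\nu(x+h) = t_\nu(x)$ for every $\nu$. All the ``branching'' factors in the explicit formulas for $\omega_3$ and $\omega_4$ therefore become locally constant, and the only $h$-dependence comes through the polynomial factors $(x+h)$, $(x+h)^2$, $(x+h)^3$, together with the nonsmooth term $\beta(x) S(x+h)/48$ in $\omega_4$. First, I would subtract, divide by $h$, and take $h \to 0^+$ directly. For $\omega_3$ this yields
\[
\partial^+\omega_3(x) = 2\beta(x)x - 5\bigl[1-t_1(x)\bigr],
\]
which matches $-2(1+\omega_2(x))$ after expanding the formula for $\omega_2$. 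For $\omega_4$, the polynomial terms yield $-2\beta(x)x^2 + 10[1-t_1(x)]x - \tfrac{43}{9}[1-t_2(x)]$, and the $S$-term contributes $\beta(x)\cdot \partial^+ S(x)/48 = 0$ by \Cref{lemma:deriv_S}. This matches $-2(1+\omega_3(x))$ after expanding.

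For $x=0$, the formula conventions give $\beta(0)=t_\nu(0)=0$, hence $\omega_2(0)=3/2$, $\omega_3(0)=25/18$, and $\omega_4(0) = 407/294$, so the target right-hand sides are $-5$ and $-43/9$. To take the right derivative at $0$, I would use the standard envelope $2^{-\beta(h)} \le h < 2^{-\beta(h)+1}$ to control $\beta(h)h^k \to 0$ for $k \ge 1$ and $t_\nu(h)/h^\ell \to 0$ for $\ell < \nu$. Applied to the explicit form of $\omega_3$, the terms $\beta(h)h^2$ and $[1-t_1(h)]h$ and $t_2(h)$ each vanish after dividing by $h$, leaving the constant $-5 = -2(1+\omega_2(0))$, as required.

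The main obstacle is the $x=0$ case of $\omega_4$, because the final term $\beta(h)\bigl[\tfrac{1}{48} S(h) - \tfrac{1}{42}\bigr]$ carries a factor $\beta(h)$ which blows up like $-\log_2 h$, so a crude bound on $\tfrac{1}{48}S(h)-\tfrac{1}{42}$ is not enough. Here I would compute $S(h)$ digit-by-digit: for $h > 0$ with first one at position $\beta(h)$, the sum $\sum_{a=0}^{\infty} (-1)^{\mh_{a+1}}/2^{3a}$ splits into zeros for $a < \beta(h)-1$, a $-1$ at $a=\beta(h)-1$, and a tail bounded by a geometric series of order $2^{-3\beta(h)}$. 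Summing the geometric piece $\tfrac{8}{7}(1-2^{-3(\beta(h)-1)})$ and combining with the isolated $-1$ digit yields $S(h) = \tfrac{8}{7} + O(2^{-3\beta(h)}) = \tfrac{8}{7}+O(h^3)$. Since $\tfrac{1}{48}\cdot\tfrac{8}{7}=\tfrac{1}{42}$, the leading term cancels exactly, so the whole expression is $O(h^3)$, and multiplying by $\beta(h)/h = O(h^{-1}\log(1/h))$ still gives $o(1)$. With this cancellation in hand, the remaining terms in $\omega_4(h)/h$ contribute only $-\tfrac{43}{9}[1-t_2(h)] \to -\tfrac{43}{9}$, which matches $-2(1+\omega_3(0))$, completing the proof.
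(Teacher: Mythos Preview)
Your proposal is correct and follows essentially the same approach as the paper: for $x\neq 0$ you use local constancy of $\beta$ together with \Cref{lemma:deriv_S}, and for $x=0$ you compute the limits directly, handling the $S$-term in $\omega_4$ via the same digit-wise split at $a=\beta(h)-1$ and the key cancellation $\tfrac{1}{48}\cdot\tfrac{8}{7}=\tfrac{1}{42}$. One wording slip: for $a<\beta(h)-1$ the \emph{digits} $\mh_{a+1}$ are zero, so the summands equal $+1/2^{3a}$ (not zero), which is exactly what your subsequent geometric sum $\tfrac{8}{7}(1-2^{-3(\beta(h)-1)})$ reflects.
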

\begin{proof}
    For any $x \in \bbB \setminus \{0\}$ we can choose $h>0$ small enough so that $\beta(x) = \beta(x+h)$. Combined with \Cref{lemma:deriv_S}, this implies $\omega_4$ and $\omega_3$ are both right differentiable on $\bbB \setminus \{0\}$. The fact that $\partial^+ \omega_3(0) = -5$ follows from
    $$\lim_{h \to 0^+} \frac{\omega_3(h)-\omega_3(0)}{h} = \lim_{h \to 0^+} \frac{\beta(h) h^2 - 5 \left(1-t_1(h)\right) h + \frac{43}{18}\left(1-t_2(h)\right) - \frac{43}{18}}{h}.$$
    The fact that $\partial^+ \omega_4(0) = -43/9$ follows from
    \begin{align*}
        &\lim_{h \to 0^+} \frac{\omega_4(h)-\omega_4(0)}{h} \\
        &= \lim_{h \to 0^+} \frac{-\frac{2}{3} \beta(h) h^3 + 5\left(1-t_1(h)\right)x^2 -\frac{43}{9}\left(1-t_2(h)\right)h +\frac{701}{294} \left(1-t_3(h)\right) - \frac{701}{294}}{h}
    \end{align*}
    and 
    $$\lim_{h \to 0^+} \frac{\beta(h)}{h}\left(\frac{1}{48}S(x) - \frac{1}{42}\right) = 0$$
    where $S$ is given in \eqref{eq:K4_sum_term}. The above equation follows from writing 
    \begin{align*}
        \frac{1}{48}S(x) - \frac{1}{42} &= \frac{1}{48} \left(\sum_{\beta(h)-1 > a} \frac{1}{2^{3a}} - \frac{1}{2^{3(\beta(h)-1)}} + \sum_{a > \beta(h)-1} \frac{\wal_{2^{a}}(h)}{2^{3a}}\right) - \frac{1}{42} \\
        &\in \left[-\frac{8}{21} 2^{-3\beta(h)},-\frac{1}{3} 2^{-3\beta(h)}\right] 
    \end{align*}
    so that 
    $$\lim_{h \to 0^+} \frac{\beta(h)}{h}\left(\frac{1}{48}S(x) - \frac{1}{42}\right) \geq  -\frac{8}{21} \left(\lim_{h \to 0^+} \frac{\beta(h)}{h 2^{3\beta(h)}}\right)=0$$
    and 
    $$\lim_{h \to 0^+} \frac{\beta(h)}{h}\left(\frac{1}{48}S(x) - \frac{1}{42}\right) \leq -\frac{1}{3} \left(\lim_{h \to 0^+} \frac{\beta(h)}{h2^{3\beta(h)}}\right) = 0.$$
\end{proof}

\begin{lemma} \label{lemma:partial_digital_shift}
    For base $b=2$ and dyadic rationals $x,x' \in \bbB$, the right derivatives satisfy
    \begin{equation*}
        \partial_x^+ (x \oplus x') = \partial_{x'}^+ (x \oplus x')=1,
    \end{equation*}
    while the left derivatives satisfy 
    \begin{equation*}
        \partial_x^- (x \oplus x') = \partial_{x'}^- (x \oplus x') = \infty.
    \end{equation*}
\end{lemma}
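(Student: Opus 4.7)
The plan is to compute the canonical binary expansion of $x\pm h$ for small $h>0$ and XOR it digit-by-digit against $x'$, using that $x, x'\in\bbB$ have only finitely many nonzero bits. By symmetry of $\oplus$ in its two arguments, it suffices to handle $\partial_x^{\pm}$; the $\partial_{x'}^{\pm}$ claims follow by swapping roles. Write $x=\sum_{\ell=1}^{N}\mx_\ell 2^{-\ell}$ and $x'=\sum_{\ell=1}^{M}\mx'_\ell 2^{-\ell}$ in their canonical finite expansions with $\mx_N=\mx'_M=1$, and set $\Omega=\max(N,M)$.

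For the right derivative I would restrict to $h\in(0,2^{-\Omega})$. Such an $h$ has no nonzero bits at positions $\ell\le\Omega$, so the addition $x+h$ propagates no carry into those positions. The canonical bits of $x+h$ therefore agree with those of $x$ in positions $\ell\le\Omega$ and coincide with the bits of $h$ in positions $\ell>\Omega$. Because $x'$ has only zero bits past position $\Omega$, XOR with $x'$ alters only the first $\Omega$ bits and leaves the bits of $h$ untouched. Summing the real contributions yields the identity $(x+h)\oplus x' = (x\oplus x') + h$, so $\partial_x^+(x\oplus x') = \lim_{h\to 0^+} h/h = 1$.

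The left derivative is the harder side because subtracting any $h\in(0,2^{-N})$ forces a borrow from the bit $\mx_N=1$: the canonical expansion of $x-h$ has a $0$ at position $N$ and, at positions $\ell>N$, bits encoding $2^{-N}-h$ (under the no-trailing-ones convention). Comparing $(x-h)\oplus x'$ with $x\oplus x'$ bit-by-bit, the forced flip at position $N$ contributes a real term of magnitude $2^{-N}$ (whose sign is fixed by $\mx'_N$), while the bits at positions $\ell>N$ interact with $\mx'_\ell$ to produce further $O(2^{-N})$ contributions whose signs are prescribed by $x'$. In the generic case (for example whenever $\mx'_N=1$), these contributions reinforce rather than cancel, giving a total jump bounded away from zero by a constant independent of $h$; dividing by $h\to 0^+$ then forces $\partial_x^-(x\oplus x')$ to blow up in absolute value.

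The main obstacle is the non-cancellation step in the left-derivative case: one has to track the four possibilities for $(\mx'_\ell, (x-h)_\ell)$ at each position $\ell>N$ and sum their signed real contributions $\pm 2^{-\ell}$ against the telescoping identity $\sum_{\ell>N}2^{-\ell}=2^{-N}$. Once that bookkeeping verifies a nonzero limiting jump, the infinite left derivative and the finite right derivative of the lemma both fall out.
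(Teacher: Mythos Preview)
Your right-derivative argument is correct and is exactly the paper's approach: for $h<2^{-\Omega}$ one has $(x+h)\oplus x'=(x\oplus x')+h$, whence the right derivative is $1$.

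For the left derivative, your instinct that the ``non-cancellation step'' is the crux is right, but the obstacle is not merely bookkeeping: the cancellation \emph{does} occur in a nontrivial case, so the lemma's left-derivative claim is false as stated. If $\Omega(x')<\Omega(x)=N$, then $\mx'_\ell=0$ for all $\ell\ge N$; for $0<h<2^{-\Omega}$ the subtraction $x-h$ only alters bits at positions $\ge N$, where XOR with $x'$ acts as the identity, so $(x-h)\oplus x'=(x\oplus x')-h$ exactly and the left derivative equals $1$, not $\infty$ (take $x=1/4$, $x'=1/2$). Even in the case you single out, $\mx'_N=1$, the signed jump is negative and the quotient tends to $-\infty$ rather than $+\infty$. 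A clean way to see all three behaviours: for $0<h<2^{-\Omega}$ one has
\[
(x\oplus x')-\bigl((x-h)\oplus x'\bigr)=2\Bigl(\sum_{\ell=N+1}^{\Omega}\mx'_\ell\,2^{-\ell}-\mx'_N\,2^{-N}\Bigr)+h,
\]
which equals $h$ when $\mx'_\ell=0$ for all $\ell\ge N$, is positive and bounded away from $0$ when $\mx'_N=0$ but some later $\mx'_\ell=1$, and is negative and bounded away from $0$ when $\mx'_N=1$.

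The paper's own proof of the left-derivative claim is a one-line lower bound that does not resolve these cases. Fortunately, only the right-derivative part of the lemma is used downstream (in the corollary computing $\partial_x^+\ddK_{\alpha,0,0}$), so the defect does not propagate. Keep your right-derivative argument; for the left side, either drop the claim or restate it as $\lvert\partial_x^-(x\oplus x')\rvert=\infty$ under the extra hypothesis $\Omega(x')\ge\Omega(x)$.
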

\begin{proof}
    Let $\Omega = \max\{\Omega(x),\Omega(x')\}$. For $h<2^{-\Omega}$ we have $(x+h) \oplus x' = (x \oplus x') + h$ so the right partial derivative satisfies 
    \begin{equation}
        \partial_x^+ (x \oplus x') = \lim_{h \to 0^+} \frac{((x+h) \oplus x') - (x \oplus x')}{h} =1.
    \end{equation}
    However, for $h<2^{-\Omega}$ we have $(x-h) \oplus x' \geq 2^{-a(x)}$ so the left partial derivative is $\partial_x^- (x \oplus x') = \infty$. The same arguments can be used to show $\partial_{x'}^+ (x \oplus x') = 1$ while $\partial_{x'}^- (x \oplus x') = \infty$.
\end{proof}

\begin{corollary} \label{lemma:b2_alpha234_digital_dright_derivs}
    Fix the base $b=2$. For $\alpha \in \{3,4\}$ and dyadic rationals $x,x' \in \bbB$ the right derivative satisfies 
    $$\partial_x^+ \ddK_{\alpha,0,0}(x,x') = \partial_{x'}^+ \ddK_{\alpha,0,0}(x,x') = -2 \left(1+\ddK_{\alpha-1,0,0}(x,x')\right).$$
\end{corollary}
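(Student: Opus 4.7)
The plan is to reduce the statement to a direct application of the chain rule for one-sided derivatives, using the two lemmas that have just been proved. Since $\ddK_{\alpha,0,0}(x,x') = \omega_\alpha(x \ominus x')$ by definition, and in base $b=2$ digit-wise subtraction coincides with digit-wise addition, we have $\ddK_{\alpha,0,0}(x,x') = \omega_\alpha(x \oplus x')$. This rewriting is the essential first step: it brings the problem into a form where the preceding lemmas apply verbatim.

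Next, I would observe that for dyadic rationals $x,x' \in \bbB$, their digital sum $x \oplus x'$ is again a dyadic rational, since only finitely many of its binary digits can be nonzero. This is crucial because Lemma \ref{lemma:partial_plus_widetildeK234_og}, which supplies the identity $\partial^+ \omega_\alpha(y) = -2(1+\omega_{\alpha-1}(y))$ for $\alpha \in \{3,4\}$, is only stated at dyadic rational arguments $y \in \bbB$; we need $y = x \oplus x'$ to lie in this set for the lemma to be applicable.

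The third step is the chain rule. For $h>0$ sufficiently small, Lemma \ref{lemma:partial_digital_shift} gives $(x+h) \oplus x' = (x \oplus x') + h$, so the composition $h \mapsto \omega_\alpha((x+h)\oplus x')$ coincides locally with $h \mapsto \omega_\alpha((x \oplus x') + h)$. Therefore
\begin{equation*}
\partial_x^+ \ddK_{\alpha,0,0}(x,x') \;=\; \partial^+ \omega_\alpha(x \oplus x') \;=\; -2\bigl(1+\omega_{\alpha-1}(x \oplus x')\bigr) \;=\; -2\bigl(1+\ddK_{\alpha-1,0,0}(x,x')\bigr),
\end{equation*}
where the first equality uses that $\partial_x^+(x\oplus x')=1$, the second uses Lemma \ref{lemma:partial_plus_widetildeK234_og}, and the third is the definition of $\ddK_{\alpha-1,0,0}$ in base $2$. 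The symmetric computation for $\partial_{x'}^+$ follows identically, since $\partial_{x'}^+(x\oplus x') = 1$ by the same lemma.

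I do not anticipate any significant obstacle; the result is essentially a corollary in the strict sense. The only subtlety worth flagging in the write-up is verifying that the local identity $(x+h)\oplus x' = (x \oplus x') + h$ for small $h>0$ indeed justifies the chain rule for one-sided derivatives without extra regularity, since $\omega_\alpha$ is not differentiable from the left (as Lemma \ref{lemma:deriv_S} and Lemma \ref{lemma:partial_digital_shift} make clear). But because we are working exclusively with right derivatives and the inner map $h \mapsto (x+h)\oplus x'$ matches the identity plus a constant on a right neighborhood of $0$, the chain rule applies cleanly.
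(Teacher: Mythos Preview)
Your proposal is correct and follows exactly the approach of the paper, which simply states that the result follows immediately from combining Lemma~\ref{lemma:partial_plus_widetildeK234_og} and Lemma~\ref{lemma:partial_digital_shift}. Your write-up spells out the details (the identification $\ominus = \oplus$ in base $2$, that $x \oplus x' \in \bbB$, and the justification of the one-sided chain rule) that the paper leaves implicit, but the argument is the same.
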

\begin{proof}
    This follows immediately from combining \Cref{lemma:partial_plus_widetildeK234_og} and \Cref{lemma:partial_digital_shift}.
\end{proof}

\begin{theorem} \label{thm:derivs_dddK}
    Fix the base $b=2$. For any $\alpha \in \bbR_{>0}$ and any dyadic rationals $x,x' \in \bbB$ the right derivative satisfies
    $$\partial_x^+ \dddK_\alpha(x,x') = \partial_{x'}^+ \dddK_\alpha(x,x') = 0.$$
\end{theorem}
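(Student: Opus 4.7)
The plan is to factor $\dddK_\alpha(x,x')$ through the map $z = x \oplus x'$, recognize that the resulting univariate function is piecewise constant, and then use the local identity from the proof of \Cref{lemma:partial_digital_shift} to conclude that the right derivative vanishes whenever $x \neq x'$.

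First, with $b = 2$ so that $\ominus = \oplus$, \Cref{def:RKHS_DSI_low_alpha} gives $\dddK_\alpha(x,x') = \omega(x \oplus x')$, where $\omega: [0,1) \to \bbR$ satisfies $\omega(0) = \upsilon(\alpha)$ and $\omega(z) = \upsilon(\alpha) - 2^{-\alpha(\beta(z)-1)}(\upsilon(\alpha)+1)$ for $z > 0$. Since $\beta(z) = k$ is constant on each dyadic interval $[2^{-k}, 2^{-k+1})$, the function $\omega$ is piecewise constant on $(0,1)$ and in particular has right derivative zero at every such $z$.

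Next, I would invoke the identity established in the proof of \Cref{lemma:partial_digital_shift}: writing $\Omega := \max(\Omega(x), \Omega(x'))$ for the highest active bit among $x, x' \in \bbB$, we have $(x+h) \oplus x' = (x \oplus x') + h$ for every $0 < h < 2^{-\Omega}$, because the bits of $h$ sit at positions $> \Omega$ and do not interact with those of $x$ or $x'$. When $x \neq x'$, the first $1$-bit of $x \oplus x'$ occupies some position $\beta_0 \leq \Omega$, and since $x \oplus x' \leq 2^{-\beta_0 + 1} - 2^{-\Omega}$, adding $h < 2^{-\Omega}$ keeps $(x \oplus x') + h$ in the interval $[2^{-\beta_0}, 2^{-\beta_0 + 1})$ on which $\omega$ is constant. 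Hence $\omega((x+h) \oplus x') = \omega(x \oplus x')$ identically in $h$, the difference quotient vanishes, and $\partial_x^+ \dddK_\alpha(x,x') = 0$. The equality $\partial_{x'}^+ \dddK_\alpha(x,x') = 0$ follows by swapping the roles of $x$ and $x'$ and using $x \oplus x' = x' \oplus x$.

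The main obstacle is the diagonal case $x = x'$: then $(x+h) \oplus x = h$, so $\beta(h) \to \infty$ as $h \to 0^+$ and the local-constancy argument breaks. A direct calculation yields a difference quotient of magnitude $(\upsilon(\alpha)+1) \cdot 2^{-\alpha(\beta(h)-1)}/h$, which for $h \in [2^{-m}, 2^{-m+1})$ is bounded by $(\upsilon(\alpha)+1) \cdot 2^{\alpha + m(1-\alpha)}$ and tends to zero only when $\alpha > 1$. Closing this gap will require either interpreting the claim in the pattern of \Cref{thm:integrals_dsi_kernels_smooth} (where the formula is ultimately used integrated against $x' \neq x$), or extending $\partial^+$ to the diagonal by continuity from the off-diagonal case handled above.
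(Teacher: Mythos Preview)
Your off-diagonal argument is exactly the paper's approach: the paper notes that $\beta((x+h)\oplus x')=\beta(x\oplus x')$ for $h$ small enough, which is precisely the local-constancy argument you spell out.

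Your concern about the diagonal case $x=x'$ is well-founded and in fact exposes an error in the paper's own proof. The paper does treat this case: it computes the same difference quotient you do, reducing to a constant times $\lim_{h\to 0^+} 2^{-\alpha\beta(h)}/h$, and then squeezes this quantity between $2^{-\alpha}h^{\alpha-1}$ and $h^{\alpha-1}$ using $2^{-\beta(h)}\le h\le 2^{1-\beta(h)}$. It then asserts both bounds tend to $0$. But $\lim_{h\to 0^+} h^{\alpha-1}=0$ only when $\alpha>1$; for $\alpha=1$ the ratio $2^{-\beta(h)}/h$ oscillates in $[1/2,1]$ along dyadic subsequences, and for $\alpha<1$ it diverges, exactly as your estimate shows. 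So the paper's argument is valid only for $\alpha>1$, and the theorem as stated (for all $\alpha\in\bbR_{>0}$) appears to be false on the diagonal when $\alpha\le 1$. Your instinct to restrict to $\alpha>1$ is the correct fix; the alternative of interpreting the claim off-diagonal only does not match how the result is used downstream (\Cref{corollary:msdiff_si_dsi_kernels} invokes it for $\balpha-\bbeta>2$, which is consistent with needing $\alpha>1$ anyway).
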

\begin{proof}
    For $x\neq x'$ the claim is obvious since $\beta((x+h) \oplus x')=\beta(x \oplus x')=\beta(x \oplus (x'+h))$ for $h \in (0,1)$ small enough. When $x=x'$ we have 
    \begin{align*}
        \lim_{h \to 0^+} \frac{\dddK_\alpha(x+h,x)-\dddK_\alpha(x,x)}{h} &= \lim_{h \to 0^+} \frac{\dddK_\alpha(h,0)-\dddK_\alpha(0,0)}{h} \\
        &= -\left(\frac{2^{2\alpha+1}}{2^{\alpha+1}-2} + 2^\alpha\right) \lim_{h \to 0^+} \frac{2^{-\alpha\beta(h)}}{h}.
    \end{align*}
    Now, using the fact that $2^{-\beta(h)} \leq h \leq 2^{1-\beta(h)}$ whenever $h \in (0,1)$,  we have 
    \begin{align*}
        0 =  2^{-\alpha} \lim_{h \to 0^+} \frac{h^{\alpha}}{h} \leq \lim_{h \to 0^+} \frac{2^{-\alpha\beta(h)}}{h} \leq  \lim_{h \to 0^+} \frac{h^\alpha}{h} = 0,
    \end{align*}
    so $\partial_x^+ \dddK_\alpha(x,x') = 0$. Swapping $x$ and $x'$ in the above statements gives the proof for $\partial_{x'}^+ \dddK_\alpha(x,x') = 0$. 
\end{proof}

\begin{theorem} \label{thm:msdiff_K}
    For base $b=2$ and any dyadic rational $x \in \bbB$, the right limits satisfy
    \begin{align*}
        \lim_{h \to 0^+} \frac{\ddK_{2,0,0}(x,x) - \ddK_{2,0,0}(x+h,x)}{h^2} &= \infty, \\
        \lim_{h \to 0^+} \frac{\ddK_{3,0,0}(x,x) - \ddK_{3,0,0}(x+h,x)}{h^2} &= \infty, \\
        \lim_{h \to 0^+} \frac{\ddK_{4,0,0}(x,x) - \ddK_{4,0,0}(x+h,x)}{h^2} &= \infty.
    \end{align*}
\end{theorem}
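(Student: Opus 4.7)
The plan is to reduce all three statements to a single question about $\omega_\alpha$ near the origin by exploiting the dyadic structure of $x$, and then to read off a divergent $1/h$-type term directly from the explicit expressions in \Cref{thm:explicit_DSI_low_order_forms}. First I would argue that if $\Omega(x)$ is the position of the last one-bit of $x \in \bbB$, then any $h \in (0, 2^{-\Omega(x)})$ has its first nonzero bit strictly past $\Omega(x)$, so $x$ and $h$ have disjoint binary supports. Hence $x+h$ has no carries, and since $\ominus = \oplus$ in base $2$, $(x+h) \ominus x = h$. Consequently $\ddK_{\alpha,0,0}(x,x) - \ddK_{\alpha,0,0}(x+h,x) = \omega_\alpha(0) - \omega_\alpha(h)$, and all three claims collapse to
$$\lim_{h \to 0^+} \frac{\omega_\alpha(0) - \omega_\alpha(h)}{h^2} = +\infty, \qquad \alpha \in \{2,3,4\}.$$

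Second, I would plug into the closed forms from \Cref{thm:explicit_DSI_low_order_forms} (using $\beta(0)=t_\nu(0)=0$) to obtain
\begin{align*}
\omega_2(0) - \omega_2(h) &= \beta(h)\,h + \tfrac{5}{2}\,t_1(h),\\
\omega_3(0) - \omega_3(h) &= -\beta(h)\,h^2 + 5(1-t_1(h))\,h + \tfrac{43}{18}\,t_2(h),\\
\omega_4(0) - \omega_4(h) &= \tfrac{2}{3}\beta(h)\,h^3 - 5(1-t_1(h))\,h^2 + \tfrac{43}{9}(1-t_2(h))\,h \\ &\quad + \tfrac{701}{294}\,t_3(h) - \beta(h)\bigl[\tfrac{1}{48}S(h) - \tfrac{1}{42}\bigr].
\end{align*}
The only analytic ingredients I need are the elementary bracket $2^{-\beta(h)} \le h < 2^{1-\beta(h)}$, which gives $t_\nu(h)/h^\nu \in (2^{-\nu},1]$, together with $\beta(h)\,h \to 0$ and $\beta(h)/h \to +\infty$ as $h \to 0^+$.

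Third, I would handle each $\alpha$ by identifying a single $1/h$-scale term that drives the divergence. For $\alpha=2$, dividing by $h^2$ and using $t_1(h)/h^2 \ge 1/(2h)$ yields the lower bound $\beta(h)/h + 5/(4h) \to \infty$. For $\alpha=3$, dividing by $h^2$ gives $-\beta(h) + 5(1-t_1(h))/h + (43/18)\,t_2(h)/h^2$; using $1-t_1(h) \ge 1-h$ on the middle term produces $-\beta(h) + 5/h - 5$, and since $1/h$ dominates $\beta(h) \sim \log_2(1/h)$ this again tends to $+\infty$. For $\alpha=4$, the same division leaves four terms that are harmless ($(2/3)\beta(h)\,h \to 0$, $-5(1-t_1(h))$ bounded in $[-5,0]$, $(701/294)\,t_3(h)/h^2 \ge 0$, and the $S$-term which I address below), plus the dominant $(43/9)(1-t_2(h))/h \ge (43/9)(1-h^2)/h \to \infty$.

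The main obstacle is the $S(h)$ contribution in the $\alpha=4$ case, which is the only truly nontrivial piece. I would reuse the envelope derived inside the proof of \Cref{lemma:partial_plus_widetildeK234_og}, namely $\bigl|\tfrac{1}{48}S(h) - \tfrac{1}{42}\bigr| \le \tfrac{8}{21}\,2^{-3\beta(h)} \le \tfrac{8}{21}\,h^3$, so that
$$\frac{\beta(h)}{h^2}\left|\tfrac{1}{48}S(h) - \tfrac{1}{42}\right| \;\le\; \tfrac{8}{21}\,\beta(h)\,h \longrightarrow 0,$$
thus showing this term is absorbed into the harmless pile and does not obstruct the $(43/9)/h$ divergence. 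Combining the three cases yields the theorem.
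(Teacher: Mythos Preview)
Your proof is correct and follows essentially the same approach as the paper: reduce to $\omega_\alpha(0)-\omega_\alpha(h)$ via the digital shift (the paper does this implicitly, you spell out the disjoint-support argument), then read off a $1/h$-scale term from the explicit formulas in \Cref{thm:explicit_DSI_low_order_forms}. The only cosmetic difference is in the $\alpha=4$ case, where the paper simply observes $\tfrac{1}{48}S(h)-\tfrac{1}{42}<0$ and drops the $S$-term as a nonnegative contribution to the lower bound, whereas you bound it in absolute value and show it vanishes after dividing by $h^2$; both arguments are fine.
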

\begin{proof}
    For $\alpha=2$, 
    \begin{align*}
        \lim_{h \to 0^+}& \frac{\ddK_{2,0,0}(x,x) - \ddK_{2,0,0}(x+h,x)}{h^2} = \lim_{h \to 0^+} \frac{\ddK_{2,0,0}(x \oplus x,0) - \ddK_{2,0,0}((x+h) \oplus x,0)}{h^2} \\
        &= \lim_{h \to 0^+} \frac{\ddK_{2,0,0}(0,0) - \ddK_{2,0,0}(h,0)}{h^2} \\
        &= \lim_{h \to 0^+} \frac{\beta(h)h+\frac{5}{2}2^{-\beta(h)}}{h^2} \\
        &\geq \lim_{h \to 0^+} \frac{\beta(h)}{h} \\
        &= \infty
    \end{align*}
    For $\alpha=3$, notice that $2^{-\beta(h)} \leq h$ whenever $h \in (0,1)$, so 
    \begin{align*}
        \lim_{h \to 0^+}& \frac{\ddK_{3,0,0}(x,x) - \ddK_{3,0,0}(x+h,x)}{h^2} = \lim_{h \to 0^+} \frac{\ddK_{3,0,0}(x \oplus x,0) - \ddK_{3,0,0}((x+h) \oplus x,0)}{h^2} \\
        &= \lim_{h \to 0^+} \frac{\ddK_{3,0,0}(0,0) - \ddK_{3,0,0}(h,0)}{h^2} \\
        &= \lim_{h \to 0^+} \frac{-\beta(h)h^2+5\left[1-2^{-\beta(h)}\right]h+\frac{43}{18} \cdot 2^{-2\beta(h)}}{h^2} \\
        &\geq \lim_{h \to 0^+} \left(5\left[1-2^{-\beta(h)}\right]h^{-1}-\beta(h)\right) \\
        &\geq \lim_{h \to 0^+} \left(5\left[1-h\right]h^{-1}-\beta(h)\right) \\
        &= \lim_{h \to 0^+} \left(5h^{-1}+\lfloor\log_2(h)\rfloor-5\right) \\
        &= \infty
    \end{align*}
    For $\alpha=4$, notice $S(h)/48 \leq 1/42$ so $\beta(h)\left[\frac{1}{48} S(h) - \frac{1}{42}\right]<0$ where $S$ is given in \eqref{eq:K4_sum_term}. Now, using the fact that $h/2 \leq 2^{-\beta(h)} \leq h$ whenever $h \in (0,1)$, we have
    \begin{align*}
        \lim_{h \to 0^+}& \frac{\ddK_{4,0,0}(x,x) - \ddK_{4,0,0}(x+h,x)}{h^2} = \lim_{h \to 0^+} \frac{\ddK_{4,0,0}(x \oplus x,0) - \ddK_{4,0,0}((x+h) \oplus x,0)}{h^2} \\
        &= \lim_{h \to 0^+} \frac{\ddK_{4,0,0}(0,0) - \ddK_{4,0,0}(h,0)}{h^2} \\
        &\geq \lim_{h \to 0^+} \frac{\frac{2}{3}\beta(h)h^3-5\left[1-2^{-\beta(h)}\right]h^2 + \frac{43}{9}\left[1-2^{-2\beta(h)}\right]h +\frac{701}{294}\left[2^{-3\beta(h)}\right]}{h^2} \\
        &= \lim_{h \to 0^+} \frac{-5\left[1-2^{-\beta(h)}\right]h^2 + \frac{43}{9}\left[1-2^{-2\beta(h)}\right]h +\frac{701}{294}\left[2^{-3\beta(h)}\right]}{h^2} \\
        &\geq \lim_{h \to 0^+} \left(\frac{43}{9}\left[1-2^{-2\beta(h)}\right]h^{-1}-5\left[1-2^{-\beta(h)}\right]\right) \\
        &\geq \lim_{h \to 0^+} \left(\frac{43}{9}\left[1-h^2\right]h^{-1}-5\left[1-h/2\right]\right) \\
        &= \infty
    \end{align*} 
\end{proof}

\begin{theorem} \label{thm:msdiff_Kdotdot}
    Fix the base $b=2$. For $\alpha \in \bbR_{>0}$ and any dyadic rational $x \in \bbB$, the right limit satisfies
    $$\lim_{h \to 0^+} \frac{\dddK_\alpha(x,x) - \dddK_\alpha(x+h,x)}{h^2}= \begin{cases} \infty, &\alpha \in (0,2) \\ \text{does not exist}, & \alpha=2 \\ 0, & \alpha>2\end{cases}.$$
    For $\alpha \in \bbR_{>2}$, the right limit satisfies 
    $$\lim_{h_1,h_2 \to 0+} \frac{\dddK_\alpha(x+h_1,x+h_2)-\dddK_\alpha(x+h_1,x) - \dddK_\alpha(x+h_2,x) + \dddK_\alpha(x,x)}{h_1h_2} = 0.$$
\end{theorem}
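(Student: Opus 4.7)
The plan is to exploit the digital shift invariance of $\dddK_\alpha$ together with the closed form from \Cref{def:RKHS_DSI_low_alpha} to reduce both limits to expressions in the single function $\rho(h) := 2^{-\alpha\beta(h)}$ for $h \in (0,1)$, with $\rho(0) := 0$. First I would verify that for $x \in \bbB$ and real $h_1,h_2 \in (0, 2^{-\Omega(x)})$, the sums $x+h_i$ produce no carries into positions $1,\ldots,\Omega(x)$, so the digit-wise subtractions collapse to $(x+h_i) \ominus x = h_i$ and $(x+h_1) \ominus (x+h_2) = h_1 \ominus h_2$. Combined with $\dddK_\alpha(y,0) = \upsilon(\alpha) - C(\alpha)\rho(y)$, where $C(\alpha) := 2^\alpha(\upsilon(\alpha)+1) > 0$, this reduces both theorem statements to calculations involving $\rho$.

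For the first limit, the reduction gives $\dddK_\alpha(x,x) - \dddK_\alpha(x+h,x) = C(\alpha)\rho(h)$, so the ratio is $C(\alpha)\rho(h)/h^2$. The two-sided bound $2^{-\beta(h)} \in (h/2, h]$ yields $\rho(h)/h^2 \in (2^{-\alpha}h^{\alpha-2}, h^{\alpha-2}]$. For $\alpha \in (0,2)$ the lower bound diverges, so the limit is $+\infty$; for $\alpha > 2$ the upper bound vanishes, so the limit is $0$. For $\alpha = 2$, I would establish non-existence using two explicit sequences: $h_n = 2^{-n}$ gives $\beta(h_n) = n$ and ratio $C(2)$, while $h_n = 3\cdot 2^{-n-1}$ also gives $\beta(h_n) = n$ but ratio $4C(2)/9$, producing two distinct accumulation points.

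For the second limit (with $\alpha > 2$), the reduction yields numerator $C(\alpha)\bigl[\rho(h_1) + \rho(h_2) - \rho(h_1 \ominus h_2)\bigr]$. The crux is to analyze $\rho(h_1 \ominus h_2)$ via a case split on $\beta(h_1)$ versus $\beta(h_2)$. When $\beta(h_1) \neq \beta(h_2)$, say $\beta(h_1) < \beta(h_2)$, the first nonzero digit of $h_1 \ominus h_2$ lies at position $\beta(h_1)$, giving the exact cancellation $\rho(h_1 \ominus h_2) = \rho(h_1)$ and reducing the bracket to $\rho(h_2) \leq h_2^\alpha$. Moreover $\beta(h_1) < \beta(h_2)$ forces $h_1 > h_2$, so the ratio is bounded by $h_2^{\alpha-1}/h_1 \leq h_2^{\alpha-2}$, which vanishes uniformly as $h_2 \to 0^+$; the symmetric subcase is identical.

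The hardest case is $\beta(h_1) = \beta(h_2) =: \beta$, where no exact cancellation holds. There I would use the trivial upper bound $\lvert\rho(h_1)+\rho(h_2)-\rho(h_1 \ominus h_2)\rvert \leq 2 \cdot 2^{-\alpha\beta}$ together with the matching lower bound $h_1 h_2 \geq 2^{-2\beta}$ (since both $h_i \in [2^{-\beta}, 2^{1-\beta})$) to bound the absolute ratio by $2\cdot 2^{(2-\alpha)\beta}$, which vanishes as $\beta \to \infty$ precisely because $\alpha > 2$. Combining all cases yields uniform control by a quantity that goes to $0$ as $\max(h_1,h_2) \to 0^+$, establishing the joint limit. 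This equal-$\beta$ case is also what forces the restriction $\alpha > 2$ in the second statement.
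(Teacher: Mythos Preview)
Your proposal is correct and follows essentially the same approach as the paper: reduce via digital shift invariance to the quotient $2^{-\alpha\beta(h)}/h^2$, sandwich it using $h/2 < 2^{-\beta(h)} \le h$, and for the double limit case-split on whether $\beta(h_1)$ and $\beta(h_2)$ coincide. Your handling of the no-carry condition for $x\in\bbB$ is more explicit than the paper's, and your use of absolute values in the equal-$\beta$ case replaces the paper's separate verification that the numerator is nonnegative, but otherwise the arguments match.
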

\begin{proof}
    For the first claim, 
    \begin{align*}
        \lim_{h \to 0^+}& \frac{\dddK_\alpha(x,x) - \dddK_\alpha(x+h,x)}{h^2} = \lim_{h \to 0^+} \frac{\dddK_\alpha(x \oplus x,0) - \dddK_\alpha((x+h) \oplus x,0)}{h^2} \\
        &= \lim_{h \to 0^+} \frac{\dddK_\alpha(0,0) - \dddK_\alpha(h,0)}{h^2} \\
        &= \left(\frac{2^{2\alpha+1}}{2^{\alpha+1}-2}+2^{\alpha}\right) \lim_{h \to 0^+} \frac{2^{-\alpha\beta(h)}}{h^2}.
    \end{align*}
    Recalling that $h/2 \leq 2^{-\beta(h)} \leq h$ whenever $h \in (0,1)$,
    $$\lim_{h \to 0^+}  \frac{2^{-\alpha\beta(h)}}{h^2} \in \left[2^{-\alpha}\lim_{h \to 0^+} \frac{h^{\alpha}}{h^2}, \lim_{h \to 0^+} \frac{h^{\alpha}}{h^2}\right] = \left[2^{-\alpha}\lim_{h \to 0^+} h^{\alpha-2}, \lim_{h \to 0^+} h^{\alpha-2}\right].$$
    This proves the $\alpha \in (0,2)$ and $\alpha>2$ claims. To see the $\alpha=2$ claim, take two subsequences $(h_p)_{p \in \bbN} := (1/2^p)_{p \in \bbN}$ and $(w_p)_{p \in \bbN} = (1/2^p + 1/2^{p+1})_{p \in \bbN}$ which are both monotonically decreasing to $0$ as $p \to \infty$. Then 
    $$\lim_{p \to \infty} \frac{2^{-2\beta(h_p)}}{h_p^2} = \lim_{p \to \infty} \frac{2^{-2p}}{2^{-2p}} = 1 \qqtqq{but} \lim_{p \to \infty} \frac{2^{-2\beta(w_p)}}{w_p^2} = \lim_{p \to \infty} \frac{2^{-2p}}{(2^{-p} + 2^{-p-1})^2} = \frac{4}{9},$$
    so the limit does not exist. 

    For the second claim, with $\alpha>2$, 
    \begin{align*}
        \lim_{h_1,h_2 \to 0+} &\frac{\dddK_\alpha(x+h_1,x+h_2)-\dddK_\alpha(x+h_1,x) - \dddK_\alpha(x+h_2,x) + \dddK_\alpha(x,x)}{h_1h_2} \\
        &=  \lim_{h_1,h_2 \to 0+} \frac{\dddK_\alpha(h_1 \oplus h_2)-\dddK_\alpha(h_1,0) - \dddK_\alpha(h_2,0) + \dddK_\alpha(0,0)}{h_1h_2} \\
        &= \left(\frac{2^{2\alpha+1}}{2^{\alpha+1}-2}+2^{\alpha}\right)\lim_{h_1,h_2 \to 0^+} \frac{2^{-\alpha\beta(h_1)}+2^{-\alpha\beta(h_2)}-2^{-\alpha\beta(h_1 \oplus h_2)}}{h_1 h_2}.
    \end{align*}
    Let us denote 
    $$L := \frac{2^{-\alpha\beta(h_1)}+2^{-\alpha\beta(h_2)}-2^{-\alpha\beta(h_1 \oplus h_2)}}{h_1 h_2}.$$ 
    Suppose $h_1,h_2 \in (0,1)$ and $\beta(h_1)=p$ and $\beta(h_2)=q$, i.e., $h_1 = 2^{-p}+\cdots$ and $h_2 = 2^{-q}+\cdots$. Notice that $\beta(h_1 \oplus h_2) \geq \min(p,q)$ so $(1-\alpha)\beta(h_1 \oplus h_2) \leq (1-\alpha)\min(p,q)$ and $2^{(1-\alpha)\beta(h_1 \oplus h_2)} \leq 2^{(1-\alpha) \min(p,q)}$. Therefore, 
    $$2^{-\alpha p} + 2^{-\alpha q} - 2^{-\alpha\beta(h_1 \oplus h_2)} \geq 2^{-\alpha p} + 2^{(1-\alpha)q} - 2^{-\alpha\min(p,q)} \geq 0,$$
    so $L \geq 0$. 
    We know $2^{-p} \leq h_1 \leq 2^{-p+1}$ and $2^{-q} \leq h_2 \leq 2^{-q+1}$. Let $\alpha = 2+\kappa$ for some $\kappa>0$. Then
    \begin{enumerate}
        \item If $p=q$, then 
        $$L \leq \frac{2^{-\alpha p}+2^{-\alpha p}}{2^{-p}2^{-p}} = 2^{(2-\alpha)p+1} = 2^{1-\kappa p},$$
        so $1-\kappa p \leq -\tau$ whenever $q=p \geq (1+\tau)/\kappa$, which ensures $L \leq 2^{-\tau}$
        \item If $p>q$, then 
        $$L \leq \frac{2^{-\alpha p}+2^{-\alpha q}-2^{-\alpha q}}{2^{-p}2^{-q}} = 2^{(1-\alpha)p+q} \leq 2^{(2-\alpha)q} = 2^{-\kappa q},$$
        so $-\kappa q \leq -\tau$ whenever $p > q \geq \tau/\kappa$, which ensures $L \leq 2^{-\tau}$. 
        \item If $q>p$, then 
        $$L \leq \frac{2^{-\alpha p}+2^{-\alpha q}-2^{-\alpha p}}{2^{-p}2^{-q}} = 2^{(1-\alpha)q+p} \leq 2^{(2-\alpha)p} = 2^{-\kappa p},$$
        so $-\kappa p \leq -\tau$ whenever $q > p \geq \tau/\kappa$, which ensures $L \leq 2^{-\tau}$. 
    \end{enumerate} 
    Let $\varepsilon>0$ and choose $\tau \in \bbN$ so that $2^{-\tau} < \varepsilon$. For $\delta = 2^{-(1+\tau)/\kappa}$, if $\sqrt{h_1^2+h_2^2} \leq \delta$ then $0<h_1 \leq \delta$ and $0<h_2 \leq \delta$ which ensures $p \geq (1+\tau)/\kappa$ and $q \geq (1+\tau)/\kappa$. Then all three of the above cases are satisfied which ensures $0 \leq L \leq 2^{-\tau}<\varepsilon$. 
\end{proof} 

\Chapter{Gaussian Processes} \label{sec:gps}

This section will discuss Gaussian process (GP) regression. We will begin with standard GPs in \Cref{sec:gps_standard} and then review some popular kernel forms in \Cref{sec:gp_kernels}, including summaries of the shift-invariant (SI) kernels from \Cref{sec:si_kernels} and digitally-shift-invariant (DSI) kernels from \Cref{sec:dsi_kernels}. Then, \Cref{sec:gp_rkhs_connections} will briefly mention a few connections between RKHS kernel interpolation and GP regression. Next, \Cref{sec:fast_gps} will discuss fast GPs using lattices with SI kernels or digital nets with DSI kernels using the fast kernel computations of \Cref{sec:fast_kernel_methods}. After discussing standard GPs and fast GPs, \Cref{sec:mtgps} and \Cref{{sec:fmtgps}} will detail multitask GPs and fast multitask GPs respectively with the latter constituting our novel contribution to the GP literature. \Cref{sec:gps_deriv_informeds} will discuss (fast) derivative-informed GPs as a special case of (fast) multitask GPs. Finally, \Cref{sec:gps_numerical_experiments} will present some numerical experiments using our \texttt{FastGPs} package.

\Section{Standard GPs} \label{sec:gps_standard}

Here we present some notation and background for standard GP regression in \Cref{sec:sgp_background} and then discuss kernel hyperparameter optimization in \Cref{sec:hpopt_sgps}. Standard GP regression and hyperparameter optimization has been thoroughly studied, see the cornerstone work \cite{rasmussen.gp4ml} or \cite{fasshauer.meshfree_approx_methods_matlab,craven1978smoothing,golub1979generalized,wahba1990spline}.

\Subsection{Background} \label{sec:sgp_background}

Let us assume $f$ is a GP on $\calX \subseteq \bbR^d$ with prior mean $M: \calX \to \bbR$ and a SPSD (covariance) kernel $K: \calX \times \calX \to \bbR$ (see \Cref{def:spd_SPSD_kernels}): 
\begin{align*}
    f &\sim \mathrm{GP}(M,K), \\
    \bbE[f(\bx)] &= M(\bx), \\
    \Cov[f(\bx),f(\bx')] &= K(\bx,\bx').
\end{align*}
The fast GPs we present use $\calX = [0,1)^d$, and we will always assume evaluating the kernel $K(\bx,\bx')$ costs $\calO(d)$. The parameterization of the prior mean $M$ may be quite flexible, e.g., a neural network. 

Suppose we have observed 
\begin{equation}
    \by = \boldsymbol{f}+\bvarepsilon, \qquad \boldsymbol{f} = (f(\bx_i))_{i=0}^{n-1}, \qquad (\bx_i)_{i=0}^{n-1} \subset \calX, \qquad \bvarepsilon \sim \calN(\bzero,\xi \mI).
\end{equation}
Here $\by$ are noisy function evaluations at collocation points $(\bx_i)_{i=0}^{n-1}$, and the zero-mean IID Gaussian noise $\bvarepsilon$ has common noise variance (nugget) $\xi>0$.

\begin{proposition}
    \label{proposition:gaussian_conditional_distribution}
    For arbitrary $T_1,T_2 \in \bbN$, suppose  $\ba_1 \in \bbR^{T_1}$ and $\ba_2 \in \bbR^{T_2}$ are Gaussian with 
    $$\begin{pmatrix} \ba_1 \\ \ba_2 \end{pmatrix} \sim \calN\left(\begin{pmatrix} \bmu_1 \\ \bmu_2 \end{pmatrix}, \begin{pmatrix} \mC_{11} & \mC_{12} \\ \mC_{12}^\intercal & \mC_{22} \end{pmatrix}\right)$$
    where $\bmu_1 \in \bbR^{T_1}, \bmu_2 \in \bbR^{T_2}$ and $\mC_{11} \in \bbR^{T_1 \times T_1},\mC_{12} \in \bbR^{T_1 \times T_2},\mC_{22} \in \bbR^{T_2 \times T_2}$. Then 
    $$\ba_2 \mid \ba_1 \sim \calN\left(\bmu_2 + \mC_{12}^\intercal \mC_{11}^{-1} (\ba_1 - \bmu_1), \mC_{22} - \mC_{12}^\intercal \mC_{11}^{-1} \mC_{12}\right).$$
\end{proposition}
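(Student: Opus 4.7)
The plan is to use the classical decorrelation trick: construct a linear combination of $\ba_2$ and $\ba_1$ that is uncorrelated with $\ba_1$, invoke independence for jointly Gaussian vectors, and then read off the conditional distribution. I assume throughout that $\mC_{11}$ is invertible, which is implicit in the statement of the result.

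First I would define $\tba_2 := \ba_2 - \mC_{12}^\intercal \mC_{11}^{-1} \ba_1$ and compute $\Cov[\tba_2,\ba_1] = \mC_{12}^\intercal - \mC_{12}^\intercal \mC_{11}^{-1} \mC_{11} = \bzero$. Since $(\tba_2,\ba_1)$ is an affine transformation of the jointly Gaussian vector $(\ba_1,\ba_2)$, it is itself jointly Gaussian, so zero cross-covariance upgrades to independence. A direct calculation of its mean and covariance then yields $\tba_2 \sim \calN\bigl(\bmu_2 - \mC_{12}^\intercal \mC_{11}^{-1} \bmu_1,\; \mC_{22} - \mC_{12}^\intercal \mC_{11}^{-1} \mC_{12}\bigr)$.

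Next I would write $\ba_2 = \tba_2 + \mC_{12}^\intercal \mC_{11}^{-1} \ba_1$ and condition on $\ba_1$. Because $\tba_2$ is independent of $\ba_1$, its conditional distribution equals its marginal, while the term $\mC_{12}^\intercal \mC_{11}^{-1} \ba_1$ becomes a deterministic shift that moves only the mean. Adding this shift to the mean of $\tba_2$ produces $\bmu_2 + \mC_{12}^\intercal \mC_{11}^{-1}(\ba_1 - \bmu_1)$ while leaving the covariance untouched, which is exactly the claimed formula.

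No step here is really an obstacle; the only subtlety worth flagging is the justification that $(\tba_2,\ba_1)$ is jointly Gaussian, which follows immediately from the closure of jointly Gaussian vectors under affine maps. An alternative route would be to manipulate the joint density directly using the Schur-complement block inverse of the full covariance matrix, but I prefer the decorrelation argument because it bypasses any explicit computation of determinants and inverse blocks.
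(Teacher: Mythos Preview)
Your argument is correct: the decorrelation trick is the standard and cleanest route to this result, and every step you outline is valid under the implicit assumption that $\mC_{11}$ is invertible. The paper does not give its own proof of this proposition but simply defers to \cite[Appendix A.2]{rasmussen.gp4ml}, so there is nothing further to compare against.
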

\begin{proof}
    See \cite[Appendix A.2]{rasmussen.gp4ml}.
\end{proof}

The posterior distribution of $f$ given noisy observations $\by$ is also a Gaussian process with closed form posterior mean and variance: 
\begin{align}
    f | \by &\sim \mathrm{GP}(\bbE[f | \by],\Cov[f,f | \by]), \nonumber \\
    \bbE[f(\bx) | \by] &= M(\bx)+\bK^\intercal(\bx) \tmK^{-1}(\by-\bM), \label{eq:post_mean}\\
    \Cov[f(\bx),f(\bx') | \by] &= K(\bx,\bx') - \bK^\intercal(\bx) \tmK^{-1} \bK(\bx'). \label{eq:post_cov}
\end{align}
Here $\bK(\bx) := (K(\bx,\bx_i))_{i=0}^{n-1}$ is a vector of kernel evaluations, $\bM = (M(\bx_i))_{i=0}^{n-1}$ is a vector of prior mean evaluations, and $\tmK$ is the noisy version of the Gram matrix, i.e.,
$$\tmK = \mK + \xi \mI, \qquad \mK := (K(\bx_i,\bx_{i'}))_{i,i'=0}^{n-1}.$$
Notice the posterior covariance \eqref{eq:post_cov} only depends on the sampling locations $(\bx_i)_{i=0}^{n-1}$ and kernel $K$, not the prior mean $M$ or noisy evaluations $\by$. We will always assume a positive noise variance, $\xi>0$, so $\tmK$ is always a SPD matrix (see \Cref{def:spd_SPSD_matrices}). In practice, if observations are noise free we still set the nugget $\xi$ to a small positive number for numerical stability, e.g., $\xi = 10^{-8}$. 

The computational bottleneck for GPs is the requirement to solve the linear systems in the noisy Gram matrix $\tmK$ which appear in the equations for the posterior mean \eqref{eq:post_mean} and posterior covariance \eqref{eq:post_cov}. Additionally, one may need to evaluate the determinant $\lvert \tmK \rvert$ if performing kernel hyperparameter optimization as we discuss in the next section. To solve a linear system in $\tmK$, standard practice is to compute the Cholesky decomposition, which costs $\calO(n^3)$ computations and $\calO(n^2)$ storage, and then perform back-substitution solves at cost $\calO(n^2)$. Inference of the posterior mean \eqref{eq:post_mean} and posterior covariance \eqref{eq:post_cov} can then be performed at $\calO(n)$ and $\calO(n^2)$ cost respectively. \Cref{tab:com_kernel_costs} summarizes these costs and compares against the fast GP methods we will present in \Cref{sec:fast_gps}. 

\Subsection{Hyperparameter Optimization (HPOPT)} \label{sec:hpopt_sgps}

The kernel $K$ typically relies on hyperparameters $\btheta$, e.g., $\btheta = \{\gamma,\bEta\}$ for a global scale $\gamma$ and lengthscales $\bEta$, or $\btheta = \{\balpha,\gamma,\bEta\}$ to additionally optimize smoothness parameters $\balpha$. We consider two loss functions below: the negative marginal log-likelihood (NMLL) and the  generalized cross validation (GCV). Analysis of the NMLL and GCV losses was performed in \cite{rathinavel.bayesian_QMC_thesis} for both standard GPs and the fast GP variants we will present in \Cref{sec:fast_gps}. \cite{rathinavel.bayesian_QMC_thesis} also analyzes a full Bayesian loss which we do not consider here.

\Subsection{HPOPT: Negative Marginal Log-Likelihood} \label{sec:mll_sgp}

Perhaps the most popular optimization objective for GPs is the NMLL, which, up to an additive constant independent of $\btheta$, is proportional to 
\begin{equation}
    \calL_\mathrm{NMLL}(\btheta) := (\by-M(\bx))^\intercal \tmK^{-1} (\by-M(\bx)) + \log \lvert \tmK \rvert.
    \label{eq:mll}
\end{equation}
Here we have hidden the implicit dependence of $\tmK$ on kernel hyperparameters $\btheta$. If the prior mean is a constant $\tau \in \bbR$, i.e., $M(\bx) = \tau$ for all $\bx \in \calX$, then $\tau$ may be treated as an additional hyperparameter. The NMLL loss is then proportional to 
\begin{equation*} 
    \calL_\mathrm{NMLL}(\tau,\btheta) := (\by-\tau \bone)^\intercal \tmK^{-1} (\by-\tau \bone) + \log \lvert \tmK \rvert.
\end{equation*}
Setting the derivative with respect to $\tau$ equal to $0$ gives the optimal constant prior mean 
\begin{equation*}
    \stau = \frac{\bone^\intercal \tmK^{-1} \by}{\bone^\intercal \tmK^{-1} \bone}.
    \label{eq:optimal_tau_mll}
\end{equation*}

\Subsection{HPOPT: Generalized Cross Validation} \label{sec:gcv_sgp}

The generalized cross validation loss \cite{craven1978smoothing,golub1979generalized,wahba1990spline} is proportional to
\begin{equation}
    \calL_\mathrm{GCV}(\btheta) := \frac{(\by - M(\bx))^\intercal \tmK^{-2} (\by - M(\bx))}{\mathrm{trace}^2(\tmK^{-1})}
    \label{eq:gcv}
\end{equation}
where again we have hidden the implicit dependence of $\tmK$ on kernel hyperparameters $\btheta$. If the prior mean is a constant $\tau \in \bbR$, i.e., $M(\bx) = \tau$ for all $\bx \in \calX$, the GCV loss becomes proportional to
\begin{equation*}
    \calL_\mathrm{GCV}(\tau,\btheta) := \frac{(\by - \tau \bone)^\intercal \tmK^{-2} (\by - \tau \bone)}{\mathrm{trace}^2\left(\tmK^{-1}\right)}.
\end{equation*}
Setting the derivative with respect to $\tau$ equal to $0$ gives the optimal prior mean constant
\begin{equation*}
    \stau = \frac{\bone^\intercal \tmK^{-2} \by}{\bone^\intercal \tmK^{-2} \bone}.
    \label{eq:optimal_tau_gcv}
\end{equation*}



\Section{Kernels} \label{sec:gp_kernels}

This section will review a number of available SPD kernels. This is by no means an exhaustive list, see \cite[Chapter 4]{rasmussen.gp4ml}, \cite{fasshauer.meshfree_approx_methods_matlab}, or \cite[Chapter 2]{duvenaud2014automatic} for additional details. We will begin by collecting results from \Cref{sec:si_kernels} on shift-invariant (SI) kernels and \Cref{sec:dsi_kernels} on digitally-shift-invariant (DSI) kernels. We will only present product-form SI and DSI kernels \eqref{eq:product_kernels}. Mixture of product SI and DSI kernels \eqref{eq:mixture_product_kernel} are also theoretically compatible in the fast GPs framework we will present in \Cref{sec:fast_gps}, but their evaluation cost is exponential in the dimension and therefore impractical in most applications. After presenting product-form SI and DSI kernels, we will review some popular kernels including the squared exponential (SE) kernel, the Mat\'ern kernels, and the rational quadratic kernel. \Cref{fig:gp_draws} visualizes a few draws from a GP prior when using the popular SE kernel in \Cref{def:se_kernel}, a SI kernel from \Cref{def:si_kernels}, or a DSI kernel from \Cref{def:dsi_kernels}. 

\begin{figure}[!ht]
    \centering
    \includegraphics[width=1\linewidth]{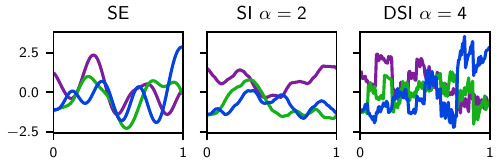}
    \caption{Prior draws from $1$-dimensional GPs having a squared exponential (SE) kernel with $\eta = 10^{-2}$, a shift-invariant (SI) kernel with $\alpha=2$ and $\eta = 1$ (see \Cref{sec:fft_si_kernels_r1lattices} or \Cref{def:si_kernels} built from univariate $\tK_\alpha$), and a digitally-SI (DSI) kernel with $\alpha=4$ and $\eta=1$ (see \Cref{sec:fwht_dsi_kernels_dnb2s} or \Cref{def:dsi_kernels} built form univariate $\ddK_{\alpha,0,0}$). The global scaling parameter $\gamma$ is chosen so all draws have similar ranges. Note the SI draws are periodic and the DSI draws are discontinuous.}
    \label{fig:gp_draws}
\end{figure}

\Subsection{Summary of Shift-Invariant (SI) Kernels}

\begin{definition} \label{def:si_kernels}
    The shift-invariant (SI) product kernels studied in \Cref{sec:si_kernels} take the form 
    $$K(\bx,\bx') = \gamma \prod_{j=1}^d \left(1+\eta_j K_{\alpha_j}(x_j,x_j')\right), \qquad \forall\; \bx,\bx' \in [0,1)^d$$
    where $\balpha \in \bbR_{>1/2}^d$ are smoothness parameters. For $\alpha \in \bbN$ and $\beta,\beta' \in \bbN_0$ satisfying $2 \alpha - \beta - \beta' > 1$, we have 
    \begin{align*}
        K_{\alpha}^{(\beta,\beta')}(x,x') &:= \tK_{\alpha}^{(\beta,\beta')}((x-x') \bmod 1,0) \\
        &= \frac{(-1)^{\alpha+\beta'+1}(2 \pi)^{2 \alpha}}{(2\alpha-\beta-\beta')!} B_{2\alpha - \beta - \beta'}((x-x') \bmod 1)
    \end{align*}
    where $B_p$ is the $p^\mathrm{th}$ Bernoulli polynomial. Explicitly, 
    \begin{equation}
        B_p(x) = \sum_{j=0}^p \binom{p}{j} \tB_{p-j} x^j = \sum_{j=0}^p c_{p,j} x^j
        \label{eq:bernoulli_poly}
    \end{equation}
    where $\tB_j$ is the $j^\mathrm{th}$ Bernoulli numbers. Some $\bc_p \in \bbR^{p+1}$ for small $p$ are given in \Cref{tab:bernoulli_coeffs}. 
    \begin{table}[!ht]
        \caption{Coefficients $\bc_p$ for the Bernoulli polynomial $B_p$ defined in \eqref{eq:bernoulli_poly}.}
        \centering
        \begin{tabular}{r | c c c c c c c c c c} 
            $p/j$ & $0$ & $1$ & $2$ & $3$ & $4$ & $5$ & $6$ & $7$ & $8$ & $9$ \\
            \hline 
            $0$ & $1$ \\ 
            $1$ & $-1/2$ & $1$ \\ 
            $2$ & $1/6$ & $-1$ & $1$ \\
            $3$ & $0$ & $1/2$ & $-3/2$ & $1$ \\
            $4$ & $-1/30$ & $0$ & $1$ & $-2$ & $1$ \\
            $5$ & $0$ & $-1/6$ & $0$ & $5/3$ & $-5/2$ & $1$ \\
            $6$ & $1/42$ & $0$ & $-1/2$ & $0$ & $5/2$ & $-3$ & $1$ \\
            $7$ & $0$ & $1/6$ & $0$ & $-7/6$ & $0$ & $7/2$ & $-7/2$ & $1$ \\
            $8$ & $-1/30$ & $0$ & $2/3$ & $0$ & $-7/3$ & $0$ & $14/3$ & $-4$ & $1$ \\
            $9$ & 0 & $-3/10$ & $0$ & $2$ & $0$ & $-21/5$ & $0$ & $6$ & $-9/2$ & 1 \\
            \hline 
        \end{tabular}
        \label{tab:bernoulli_coeffs}
    \end{table}
    For $\bbeta,\bbeta' \in \bbN_0$ satisfying $2\balpha - \bbeta-\bbeta' > 1$ elementwise, we have 
    \begin{align*}
        K^{(\bbeta,\bbeta')}(\bx,\bx') &= \gamma \prod_{j=1}^d \left(1_{\beta_j+\beta_j'=0} + \eta_j K^{(\beta_j,\beta_j')}(x_j,x_j')\right), \\
        \int_{[0,1)^d} K^{(\bbeta,\bbeta')}(\bx,\bx') \D \bx' &= \int_{[0,1)^d} \int_{[0,1)^d} K^{(\bbeta,\bbeta')}(\bx,\bx') \D \bx' \D \bx = \gamma
    \end{align*}
    for all $\bx \in [0,1)^d$ where $1_{\beta_j+\beta_j'=0}$ is $1$ if $\beta_j+\beta_j'=0$ and $0$ otherwise. In \Cref{sec:gps_deriv_informeds} we will see that, for $\balpha>3/2$ elementwise, GPs using these SI kernels are mean squared differentiable. 
\end{definition}

\Subsection{Summary of Digitally-Shift-Invariant (DSI) Kernels}

\begin{definition} \label{def:dsi_kernels}
    The digitally-shift-invariant (DSI) product kernels studied in \Cref{sec:dsi_kernels} take the form 
    $$K(\bx,\bx') = \gamma \prod_{j=1}^d \left(1+\eta_j K_{\alpha_j}(x_j,x_j')\right), \qquad \forall\; \bx,\bx' \in [0,1)^d$$
    where $\balpha \in \bbN$ are smoothness parameters. Here we only list results for the base $b=2$ kernels, although for most results \Cref{sec:dsi_kernels} has analogs for arbitrary prime bases. We have at least two computable DSI kernel options which are defined using $\beta(x) := - \lfloor \log_2(x) \rfloor$ and $x \oplus x' := \sum_{k \in \bbN} ((\mx_k + \mx_k') \bmod 2) 2^{-k}$, the XOR of binary expansions $x = \sum_{k \in \bbN} \mx_k 2^{-k}$ and $x' = \sum_{k \in \bbN} \mx_k' 2^{-k}$, with $x,x' \in [0,1)$. 
    \begin{enumerate}
        \item For $\alpha \in \bbN_{\geq 2}$, 
        \begin{align*}
            K_\alpha(x,x') &:= \ddK_{\alpha,0,0}(x \oplus x',0) \\
            \ddK_{\alpha,0,0}(x,0) &= \begin{cases} -1+-\beta(x) x + \frac{5}{2}\left[1-t_1(x)\right], & \alpha = 2 \\ -1+\beta(x)x^2-5\left[1-t_1(x)\right]x+\frac{43}{18}\left[1-t_2(x)\right], & \alpha = 3 \\ -1 -\frac{2}{3}\beta(x)x^3+5\left[1-t_1(x)\right]x^2 - \frac{43}{9}\left[1-t_2(x)\right]x \\
                \quad +\frac{701}{294}\left[1-t_3(x)\right]+\beta(x)\left[\frac{1}{48} \sum_{a=0}^\infty \frac{\wal_{2^a}(x)}{2^{3a}} - \frac{1}{42}\right], & \alpha = 4 \end{cases}
        \end{align*}
        where for $\nu \in \bbN$ we use $t_\nu(x) = 2^{-\nu \beta(x)}$. Note that for any $\alpha \in \bbN_{\geq 2}$ one may compute this $\ddK_{\alpha,0,0}(x,x')$ at $\calO(\alpha \# (x \oplus x'))$ costs using \cite[Theorem 2]{baldeaux.polylat_efficient_comp_worse_case_error_cbc} where $x \oplus x'$ has at most $\# (x \oplus x')$ digits in its base $b$ representation. The $\alpha \in \{3,4\}$ kernels are right differentiable everywhere on $[0,1)^d$, see \Cref{thm:msdiff_K}, but GPs which use these kernels are not mean squared differentiable as we will show in \Cref{sec:gps_deriv_informeds}. 
        \item For $\alpha \in \bbR_{>0}$, 
        \begin{align*}
            K_\alpha(x,x') := \dddK_\alpha(x,x') = \begin{cases} \upsilon(\alpha), & x=x' \\ \upsilon(\alpha) - 2^{-\alpha(\beta(x \oplus x')-1)}\left(\upsilon(\alpha)+1\right), & x \neq x' \end{cases} 
        \end{align*} 
        with $\upsilon(\alpha) = 2^{\alpha+1}/(2^{\alpha+1}-2)$. These kernels have right derivative $0$ everywhere, see \Cref{thm:derivs_dddK}, and for $\alpha>2$ they are mean squared differentiable as we will show in \Cref{sec:gps_deriv_informeds}. 
    \end{enumerate}
    For either of these DSI kernels,
    \begin{align*}
        \int_{[0,1)^d} K(\bx,\bx') \D \bx' &= \int_{[0,1)^d} \int_{[0,1)^d} K(\bx,\bx') \D \bx' \D \bx = \gamma
    \end{align*}
    for all $\bx \in [0,1)^d$ as was the case for the SI kernels summarized in \Cref{def:si_kernels}. 

\end{definition}

\Subsection{Popular Radially Symmetric Kernels}

Throughout the remainder of this section we will denote the scaled distance by 
$$\lVert \bx - \bx'\rVert_\bEta := \sqrt{\sum_{j=1}^d \frac{(x_j-x_j')^2}{2 \eta_j^2}}$$ 
where $\bEta \in \bbR_{>0}^d$ are lengthscales. Radially symmetric kernels are kernels which are only a function of the scaled distance between locations, i.e., $K(\bx,\bx') = Q(\lVert \bx - \bx'\rVert_\bEta)$ for some $Q: [0,\infty) \to \bbR$. We define a few of the more popular radially symmetric kernels below, see \cite[Appendix D]{fasshauer.meshfree_approx_methods_matlab} for a more in depth discussion. A global scale $\gamma>0$ will also be considered. 

\begin{definition} \label{def:se_kernel}
    The squared exponential (SE) kernel, also called the Gaussian kernel, takes the form 
    $$K(\bx,\bx') = \gamma \exp\left(-\lVert \bx - \bx'\rVert_\bEta^2\right).$$
    This may also be written as a product of one-dimensional kernels. For any $x \in [0,1]^d$ we have 
    \begin{align*} 
        \int_{[0,1]^d} K(\bx,\bx') \D \bx' &= \gamma \left(\frac{\pi}{2}\right)^{d/2} \prod_{j=1}^d \sqrt{\eta_j} \left(\mathrm{erf}\left(\frac{x}{\sqrt{2 \eta_j}}\right) - \mathrm{erf}\left(\frac{x-1}{\sqrt{2\eta_j}}\right)\right), \\
        \int_{[0,1]^d} \int_{[0,1]^d} K(\bx,\bx') \D \bx' \D \bx &= \gamma \prod_{j=1}^d \left[2\eta_j^2\left(\exp\left(-\frac{1}{2\eta_j^2}\right)-1\right)+\sqrt{2\pi}\eta_j\mathrm{erf}\left(\frac{1}{\sqrt{2}\eta_j}\right)\right]
    \end{align*}
    where $\mathrm{erf}(x) = 2/\sqrt{\pi} \int_0^x e^{-t^2} \D t$ is the error function. 
\end{definition}

\begin{definition} \label{def:matern_kernel}
    The Mat\'ern kernel with smoothness parameter $\alpha \in \bbR_{\geq 1/2}$ is 
    $$K(\bx,\bx') = \gamma \frac{2^{1-\alpha}}{\Gamma(\alpha)} \left(\sqrt{2 \alpha} \lVert \bx - \bx'\rVert_\bEta\right)^\alpha K_\alpha\left(\sqrt{2 \alpha} \lVert \bx - \bx'\rVert_\bEta\right)$$
    where $\Gamma$ is the gamma function and, in this definition only, $K_\alpha$ denotes the modified Bessel function. As $\alpha \to \infty$ the Mat\'ern kernel converges to the squared exponential kernel in \Cref{def:se_kernel}. For $\alpha = p+1/2$ with $p \in \bbN_0$ we have 
    \begin{align}
        K(\bx,\bx') &= \gamma \frac{p!}{(2p)!} e^{-\sqrt{2p+1}\lVert \bx - \bx'\rVert_\bEta} \sum_{i=0}^p \frac{(p+i)!}{i!(p-i)!} \left(2 \sqrt{2p+1} \lVert \bx - \bx'\rVert_\bEta\right)^{p-i} \nonumber \\
        &= \gamma \exp\left(-\sqrt{2 \alpha} \lVert \bx - \bx'\rVert_\bEta\right) \sum_{j=0}^{\alpha-1/2} c_{\alpha,j} \lVert \bx - \bx'\rVert_\bEta^j \label{eq:matern_alpha_half}
    \end{align}
    where $\bc_\alpha \in \bbR_{>0}^{\alpha+1/2}$ are some coefficients. Some $\bc_\alpha$ for small $\alpha$ are given in \Cref{tab:matern_kernel_constants}. 

    \begin{table}[!ht]
        \caption{Coefficients $\bc_\alpha$ for the Mat\'ern kernel when $\alpha = p+1/2$ with $p \in \bbN_0$ in the form of \eqref{eq:matern_alpha_half}.}
        \centering
        \begin{tabular}{r | c c c c c c c} 
            $\alpha/j$ & $0$ & $1$ & $2$ & $3$ & $4$ & $5$ & $6$ \\
            \hline 
            $1/2$ & $1$ \\
            $3/2$ & $1$ & $\sqrt{3}$ \\
            $5/2$ & $1$ & $\sqrt{5}$ & $5/3$ \\ 
            $7/2$ & $1$ & $\sqrt{7}$ & $14/5$ & $7 \sqrt{7}/15$ \\ 
            $9/2$ & $1$ & $3$ & $27/7$ & $18/7$ & $27/35$ \\ 
            $11/2$ & $1$ & $\sqrt{11}$ & $44/9$ & $11\sqrt{11}/9$ & $121/63$ & $121\sqrt{11}/945$ \\
            $13/2$ & $1$ & $\sqrt{13}$ & $65/11$ & $52\sqrt{13}/33$ & $338/99$ & $169\sqrt{13}/495$ & $2197/10395$ \\ 
            \hline 
        \end{tabular}
        \label{tab:matern_kernel_constants}
    \end{table}
     
\end{definition}

\begin{definition} \label{def:rq_kernel}
    The rational quadratic kernel with scale mixture parameter $\alpha \in \bbR_{>0}$ takes the form 
    $$K(\bx,\bx') = \gamma \left(1+\frac{\lVert \bx - \bx'\rVert_\bEta}{\alpha}\right)^{-\alpha}$$
    As $\alpha \to \infty$ the rational quadratic kernel converges to the squared exponential kernel in \Cref{def:se_kernel}.  
\end{definition}

\Section{Connections Between GPs and RKHS Kernel Interpolation} \label{sec:gp_rkhs_connections} 

In this section we review a few known connections between GP regression and RKHS kernel interpolation. Details on RKHS theory are available in \cite{aronszajn.theory_of_reproducing_kernels}. Additional connections between GPs and RKHSs are discussed in \cite{kanagawa.gp_rkhs_connections,owhadi.book_operator_adapted_wavelets}.  

\Subsection{Approximation}

\begin{theorem} \label{thm:representer_thm_deriv_info}
    Let $H$ be an RKHS with kernel $K$. Suppose we are given an arbitrary error function $E: \bbR^n \to \bbR$ and a regularization function $R: \bbR \to \bbR$. Then
    $$\exists\; \bc \in \bbR^n \qquad\text{such that}\qquad \bc^\intercal \bK(\cdot) \in \argmin_{f \in H} \left[E\left(\boldsymbol{f}\right)+R\left(\lVert f \rVert_H\right)\right]$$
    where $\boldsymbol{f} = (f(\bx_i))_{i=0}^{n-1}$ are function evaluations at some points $(\bx_i)_{i=0}^{n-1} \subset \calX$. 
\end{theorem}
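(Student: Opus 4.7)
The plan is to invoke the classical decomposition argument due to Kimeldorf and Wahba, later formalized as the \emph{representer theorem} by Sch\"olkopf, Herbrich, and Smola. The idea is that any candidate minimizer can be split into a component lying in the $n$-dimensional subspace spanned by the kernel sections at the data points, plus an orthogonal component which is penalized but provides no benefit.

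First, I would introduce the finite-dimensional subspace
\[
V := \mathrm{span}\{K(\cdot,\bx_0),\dots,K(\cdot,\bx_{n-1})\} \subseteq H,
\]
and decompose an arbitrary $f \in H$ as $f = f_\parallel + f_\perp$ with $f_\parallel \in V$ and $f_\perp \in V^\perp$. Since $V$ is a closed (finite-dimensional) subspace of the Hilbert space $H$, this decomposition exists and is unique. By construction, there exists $\bc \in \bbR^n$ such that $f_\parallel = \bc^\intercal \bK(\cdot) = \sum_{i=0}^{n-1} c_i K(\cdot,\bx_i)$.

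Next, I would use the reproducing property of $K$. For each $i$, by the definition of $V^\perp$,
\[
f(\bx_i) = \langle f, K(\cdot,\bx_i)\rangle_H = \langle f_\parallel, K(\cdot,\bx_i)\rangle_H + \langle f_\perp, K(\cdot,\bx_i)\rangle_H = f_\parallel(\bx_i) + 0.
\]
Hence the vector of data evaluations $\boldsymbol{f}$ depends only on $f_\parallel$, so $E(\boldsymbol{f})$ is unchanged when $f_\perp$ is set to zero. By Pythagoras in $H$,
\[
\lVert f \rVert_H^2 = \lVert f_\parallel \rVert_H^2 + \lVert f_\perp \rVert_H^2 \geq \lVert f_\parallel \rVert_H^2,
\]
with equality iff $f_\perp = 0$.

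Under the (standard) assumption that $R$ is non-decreasing on $[0,\infty)$ (which is implicit in the statement of the theorem; otherwise one may simply take $\bc$ to attain the minimum over $V$ of $f \mapsto E(\boldsymbol{f})+R(\lVert f\rVert_H)$, using continuity and coercivity assumptions on $R$), we conclude
\[
E\bigl(\boldsymbol{f}\bigr)+R\bigl(\lVert f\rVert_H\bigr) \ge E\bigl(\boldsymbol{f}_\parallel\bigr)+R\bigl(\lVert f_\parallel \rVert_H\bigr),
\]
so the infimum over $H$ is attained on the finite-dimensional subspace $V$; picking $\bc$ to be the coordinates of any minimizer of the restricted problem finishes the proof. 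The main (only) subtle point is the monotonicity hypothesis on $R$: without it the theorem is false, so I would either add this hypothesis explicitly, or restrict the argument by noting that if a minimizer exists in $H$ then one may always choose it to be of the stated form provided $R$ is non-decreasing in the norm of $f$.
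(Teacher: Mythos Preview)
Your proposal is correct and essentially identical to the paper's proof: both decompose $f$ orthogonally into a component in $\mathrm{span}\{K(\cdot,\bx_i)\}$ and a component in its orthogonal complement, use the reproducing property to show $E(\boldsymbol{f})$ depends only on the first component, and use Pythagoras together with monotonicity of $R$ to show the second component can only hurt the regularizer. You are also right that the monotonicity of $R$ is an implicit hypothesis; the paper invokes it in the proof (``recall $R$ is non-decreasing'') without stating it in the theorem.
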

\begin{proof}
    For any $f \in H$ define the orthogonal decomposition 
    $$f = \bc^\intercal \bK + \nu$$
    where $\nu(\bx_i) = \llangle \nu, K(\cdot,\bx_i) \rrangle_H = 0$ for $0 \leq i < n$. To see $E\left(\boldsymbol{f}\right)$ is independent of $\nu$, notice that 
    $$f(\bx_i) = \llangle f,K(\cdot,\bx_i) \rrangle_H = \llangle \bc^\intercal \bK + \nu,K(\cdot,\bx_i)\rrangle_H = \llangle \bc^\intercal, \bK(\cdot,\bx_i)\rrangle_H$$ 
    for $0 \leq i < n$. To see $R\left(\llVert f \rrVert_H\right)$ attains its minimum when $\nu=0$, recall $R$ is non-decreasing and 
    $$R(\lVert f \rVert_H) = R\left(\left\lVert \bc^\intercal \bK + \nu\right\rVert_H\right) = R\left(\sqrt{\left\lVert \bc^\intercal \bK \right\rVert_H^2 + \lVert \nu \rVert_H^2}\right).$$
\end{proof}

\begin{theorem} \label{thm:rkhs_approx_H}
    Suppose $\by \in \bbR^n$ and $\xi \in \bbR_{>0}$ are given. Then $f \nmid \by: \calX \to \bbR$ defined as 
    \begin{equation}
        f(\cdot) \nmid \by := \bK^\intercal(\cdot) \tmK^{-1} \by
        \label{eq:rkhs_f_given_y_beta}
    \end{equation}
    attains the minimum of 
    \begin{equation}
        \left(\by - \boldsymbol{f}\right)^\intercal \left(\by - \boldsymbol{f}\right) + \xi^2 \lVert f \rVert_H^2
        \label{eq:rkhs_L_H}
    \end{equation}
    among all $f \in H$.
\end{theorem}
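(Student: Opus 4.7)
The plan is to apply \Cref{thm:representer_thm_deriv_info} (the representer theorem) with error function $E(\boldsymbol{f}) = (\by - \boldsymbol{f})^\intercal (\by - \boldsymbol{f})$ and regularizer $R(t) = \xi^2 t^2$, which is non-decreasing on $[0,\infty)$. This immediately reduces the infinite-dimensional optimization over $f \in H$ to a finite-dimensional optimization over the coefficient vector $\bc \in \bbR^n$ in the representation $f(\cdot) = \bc^\intercal \bK(\cdot)$.

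Next, I would use the reproducing property of $K$ twice. First, $\boldsymbol{f} = (f(\bx_i))_{i=0}^{n-1} = \mK \bc$ since $f(\bx_i) = \llangle \bc^\intercal \bK, K(\cdot,\bx_i)\rrangle_H = \bc^\intercal \bK(\bx_i)$. Second,
$$\lVert f \rVert_H^2 = \llangle \bc^\intercal \bK, \bc^\intercal \bK\rrangle_H = \sum_{i,i'=0}^{n-1} c_i c_{i'} K(\bx_i,\bx_{i'}) = \bc^\intercal \mK \bc.$$
Substituting, the objective in \eqref{eq:rkhs_L_H} becomes the quadratic function of $\bc$,
$$J(\bc) := (\by - \mK \bc)^\intercal (\by - \mK \bc) + \xi^2 \bc^\intercal \mK \bc.$$

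Then I would compute $\nabla_\bc J(\bc) = -2 \mK(\by - \mK \bc) + 2\xi^2 \mK \bc = 2\mK\bigl((\mK + \xi^2 \mI)\bc - \by\bigr)$. Setting this to zero and using that $\mK$ is SPD (hence invertible), the unique stationary point is $\bc = (\mK + \xi^2 \mI)^{-1} \by$. Strict convexity of $J$ (the Hessian $2\mK^\intercal \mK + 2\xi^2 \mK$ is SPD since $\mK$ is SPD and $\xi>0$) confirms this is the global minimum. Substituting back gives $f(\cdot) \nmid \by = \bK^\intercal(\cdot)(\mK + \xi^2 \mI)^{-1}\by$, matching the statement once $\tmK$ is interpreted as $\mK + \xi^2 \mI$ in this theorem (consistent with the ridge-regression scaling).

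The proof is essentially mechanical once the representer theorem is applied; the main thing to be careful about is the $\xi$ vs.\ $\xi^2$ convention between the noise model used in \eqref{eq:post_mean}--\eqref{eq:post_cov} and the regularization weight used in \eqref{eq:rkhs_L_H}, so the statement of the theorem should be read with $\tmK = \mK + \xi^2 \mI$ here. No heavy analysis or RKHS machinery beyond the reproducing property and the representer theorem is needed.
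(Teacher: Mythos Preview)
Your proof is correct and follows essentially the same approach as the paper: invoke the representer theorem, reduce to a quadratic in $\bc$, and solve the first-order condition to obtain $\bc = (\mK + \xi^2\mI)^{-1}\by$. One minor slip: the paper only assumes $K$ (hence $\mK$) is SPSD, not SPD, so rather than cancelling $\mK$ from the stationarity equation $\mK\bigl((\mK+\xi^2\mI)\bc - \by\bigr)=0$ and claiming a unique stationary point, the paper simply exhibits $\bc = \tmK^{-1}\by$ as \emph{a} minimizer of the convex objective; your observation about the $\xi$ versus $\xi^2$ convention is correct and matches the paper's implicit use of $\tmK = \mK + \xi^2\mI$ in this proof.
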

\begin{proof}
    \Cref{thm:representer_thm_deriv_info} implies a minimizer takes the form
    \begin{equation}
        f \nmid \by = \bK^\intercal \bc
        \label{eq:rkhs_f_given_y}
    \end{equation}
    for some $\bc \in \bbR^n$. Since $\lVert f \nmid \by \rVert_H^2 = \bc^\intercal \mK \bc$, plugging $f \nmid \by$ into \eqref{eq:rkhs_L_H} gives 
    \begin{align*}
        L(\bc) &:= \left(\by - \mK\bc\right)^\intercal \left(\by - \mK\bc\right) + \xi^2 \bc^\intercal \mK \bc \\
        &= \by^\intercal \by - 2 \bc^\intercal \mK\by + \bc^\intercal \mK \tmK\bc.
    \end{align*}
    $L(\bc)$ is minimized where the gradient is $\bzero$ which occurs when 
    \begin{align*}
        \mK \tmK\bc = \mK\by.
    \end{align*}
    Since $L(\bc)$ is convex in $\bc$ and $\tmK$ is non-singular, a minimizer of $L(\bc)$ is 
    $$\bc = \tmK^{-1}\by.$$
    Plugging this $\bc$ into \eqref{eq:rkhs_f_given_y} yields \eqref{eq:rkhs_f_given_y_beta}.
\end{proof}

\begin{corollary}
    The optimal RKHS kernel interpolant in \Cref{thm:rkhs_approx_H} is equivalent to the posterior mean \eqref{eq:post_mean} of a GP with zero prior mean, i.e., 
    $$f \nmid \by = \bbE[f \mid \by] \qquad\text{subject to}\qquad  M(\bx)=0, \qquad \forall\; \bx \in \calX.$$
\end{corollary}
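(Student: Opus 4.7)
The plan is to simply compare the two closed-form expressions already derived in the excerpt: the RKHS kernel interpolant in \eqref{eq:rkhs_f_given_y_beta} and the GP posterior mean in \eqref{eq:post_mean}. No new machinery is required; the result falls out by substitution.

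First, I would recall from \Cref{thm:rkhs_approx_H} that the optimal RKHS kernel interpolant is
\begin{equation*}
    f(\cdot) \nmid \by = \bK^\intercal(\cdot)\, \tmK^{-1} \by.
\end{equation*}
Next, I would recall the GP posterior mean from \eqref{eq:post_mean}, namely
\begin{equation*}
    \bbE[f(\bx) \mid \by] = M(\bx) + \bK^\intercal(\bx)\, \tmK^{-1}(\by - \bM),
\end{equation*}
where $\bM = (M(\bx_i))_{i=0}^{n-1}$. Under the hypothesis that $M(\bx) = 0$ for all $\bx \in \calX$, we have $\bM = \bzero$, and this expression collapses to
\begin{equation*}
    \bbE[f(\bx) \mid \by] = \bK^\intercal(\bx)\, \tmK^{-1} \by,
\end{equation*}
which coincides pointwise with $f(\bx) \nmid \by$. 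Since both functions are defined on all of $\calX$, this establishes the equivalence claimed in the corollary.

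There is no real obstacle here; the corollary is essentially a bookkeeping statement that the two formulas agree once the prior mean is set to zero. The only subtlety worth noting in the write-up is that the RKHS formulation implicitly uses the regularization parameter $\xi^2$ in \eqref{eq:rkhs_L_H}, which plays exactly the role of the noise variance $\xi$ in the GP posterior (both appear in $\tmK = \mK + \xi \mI$), so no rescaling of hyperparameters is required for the identification.
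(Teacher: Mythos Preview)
Your proof is correct and is exactly the intended approach: the paper states this corollary without proof, treating it as immediate from comparing the closed-form expression \eqref{eq:rkhs_f_given_y_beta} with the posterior mean \eqref{eq:post_mean} under $M\equiv 0$, which is precisely what you do. Your side remark about the regularization parameter is a reasonable observation, though note that the paper's own statement of \Cref{thm:rkhs_approx_H} already expresses the interpolant in terms of the same $\tmK$ used in the GP section, so the identification is purely formal.
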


\Subsection{Point-wise Error}

Let us define the noisy kernel  
\begin{equation}
    \tK(\bx,\bx') = K(\bx,\bx') + \xi 1_{\bx=\bx'}
    \label{eq:noisy_kernel}
\end{equation}
where $1_{\bx=\bx'}$ is $1$ if $\bx=\bx'$ and $0$ otherwise.

\begin{lemma} \label{lemma:sup_eq_norm_tH}
    For any RKHS $H$ and any $\bc \in \bbR^n$, we have 
    $$\sup_{f \in H: \lVert f \rVert_H \leq 1} \bc^\intercal \boldsymbol{f} = \llVert \bc^\intercal \bK \rrVert_H$$
\end{lemma}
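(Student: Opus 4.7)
The plan is to use the reproducing property to rewrite $\bc^\intercal \boldsymbol{f}$ as an inner product in $H$, then apply Cauchy--Schwarz to obtain the upper bound and exhibit an explicit maximizer to achieve equality.

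First I would unfold the sum using the reproducing property of $K$: for each $0 \le i < n$ we have $f(\bx_i) = \llangle f, K(\cdot,\bx_i)\rrangle_H$, so
\begin{equation*}
\bc^\intercal \boldsymbol{f} \;=\; \sum_{i=0}^{n-1} c_i\, f(\bx_i) \;=\; \sum_{i=0}^{n-1} c_i \llangle f, K(\cdot,\bx_i)\rrangle_H \;=\; \llangle f,\, \bc^\intercal \bK\rrangle_H,
\end{equation*}
where $\bc^\intercal \bK = \sum_{i=0}^{n-1} c_i K(\cdot,\bx_i) \in H$ is a finite linear combination of kernel sections and hence lies in $H$.

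Next, by the Cauchy--Schwarz inequality in $H$, for every $f \in H$ with $\lVert f \rVert_H \le 1$,
\begin{equation*}
\bc^\intercal \boldsymbol{f} \;=\; \llangle f, \bc^\intercal \bK\rrangle_H \;\le\; \lVert f\rVert_H \, \llVert \bc^\intercal \bK \rrVert_H \;\le\; \llVert \bc^\intercal \bK \rrVert_H,
\end{equation*}
which gives the ``$\le$'' direction of the claimed equality. For the reverse direction, I would split into two cases. If $\bc^\intercal \bK = 0$ in $H$, then the upper bound is $0$ and the identity $\bc^\intercal \boldsymbol{f} = \llangle f, \bc^\intercal \bK\rrangle_H = 0$ for all $f$ shows the supremum is also $0$. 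Otherwise, choosing the explicit candidate $f^\star := \bc^\intercal \bK / \llVert \bc^\intercal \bK \rrVert_H \in H$ yields $\lVert f^\star\rVert_H = 1$ and $\bc^\intercal \boldsymbol{f}^\star = \llVert \bc^\intercal \bK \rrVert_H$, so the supremum is attained.

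There is no real obstacle here beyond being careful that $\bc^\intercal \bK \in H$ (immediate, since kernel sections $K(\cdot,\bx_i)$ belong to $H$ and $H$ is a vector space) and separating the degenerate case $\bc^\intercal \bK = 0$. The proof is essentially a one-line application of the reproducing property followed by Cauchy--Schwarz with the saturating choice made explicit.
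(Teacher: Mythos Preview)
Your proof is correct and follows essentially the same approach as the paper: rewrite $\bc^\intercal\boldsymbol{f}$ as $\llangle f,\bc^\intercal\bK\rrangle_H$ via the reproducing property, apply Cauchy--Schwarz for the upper bound, and attain it with $f^\star=\bc^\intercal\bK/\llVert\bc^\intercal\bK\rrVert_H$. You are slightly more careful in explicitly handling the degenerate case $\bc^\intercal\bK=0$, which the paper omits.
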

\begin{proof}
    The reproducing property and the Cauchy--Schwartz inequality imply
    $$\sup_{f \in H: \lVert f \rVert_H \leq 1} \bc^\intercal \boldsymbol{f} = \sup_{f \in H: \lVert f \rVert_H \leq 1} \llangle f, \bc^\intercal \bK \rrangle_H \leq \llVert \bc^\intercal \bK \rrVert_H.$$
    Let $g = \bc^\intercal \bK$. Then $g/\lVert g \rVert_H \in H$ has $H$-norm $1$ and attains the above upper bound.  
\end{proof}

\begin{theorem} \label{thm:rkhs_x_error_H_noisy_no_deriv}
    Using the notations $H := H(K)$ and $\tH := H(\tK)$, we have that for any $\bx \in \calX$, 
    $$\sup_{\tf \in \tH: \lVert \tf \rVert_\tH \leq 1} \llvert \tf(\bx) - f(\bx) \nmid \tbf \rrvert^2 = \tK(\bx,\bx) - \bK^\intercal(\bx) \tmK^{-1} \bK(\bx) - 2 \xi f(\bx) \nmid \bone(\bx)$$
    where $\bone(\bx) = (1_{\bx=\bx_i})_{i=0}^{n-1}$. 
\end{theorem}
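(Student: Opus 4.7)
The plan is to view $\tf(\bx) - f(\bx)\nmid\tbf$ as a bounded linear functional on $\tH$ and compute its norm via the Riesz representation of $\tH$. First I would rewrite
$$L(\tf) := \tf(\bx) - f(\bx) \nmid \tbf = \tf(\bx) - \bK^\intercal(\bx)\tmK^{-1}\tbf = \tf(\bx) + \bc^\intercal \tbf$$
with $\bc := -\tmK^{-1}\bK(\bx)$. Using the reproducing property in $\tH$, namely $\tf(\bx) = \langle \tf, \tK(\cdot,\bx)\rangle_\tH$ and $\tbf_i = \langle \tf, \tK(\cdot,\bx_i)\rangle_\tH$, I can express
$$L(\tf) = \Big\langle \tf,\; \tK(\cdot,\bx) + \sum_{i=0}^{n-1} c_i\, \tK(\cdot,\bx_i)\Big\rangle_\tH =: \langle \tf, g\rangle_\tH.$$
Then by the same Cauchy--Schwarz argument used in \Cref{lemma:sup_eq_norm_tH} (with the attaining function $g/\lVert g\rVert_\tH$), the supremum over the unit ball of $\tH$ of $\lvert L(\tf)\rvert$ equals $\lVert g\rVert_\tH$, so the problem reduces to computing $\lVert g\rVert_\tH^2$.

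Next I would expand using the reproducing property again to get
$$\lVert g\rVert_\tH^2 = \tK(\bx,\bx) + 2\bc^\intercal \tbK(\bx) + \bc^\intercal \tmK\, \bc,$$
where $\tbK(\bx) = (\tK(\bx,\bx_i))_{i=0}^{n-1}$ and the Gram matrix $(\tK(\bx_i,\bx_{i'}))_{i,i'=0}^{n-1}$ agrees with the earlier notation $\tmK = \mK + \xi\mI$ because of \eqref{eq:noisy_kernel}. Substituting $\bc = -\tmK^{-1}\bK(\bx)$ simplifies the quadratic-plus-linear combination to
$$\lVert g\rVert_\tH^2 = \tK(\bx,\bx) - 2\bK^\intercal(\bx)\tmK^{-1}\tbK(\bx) + \bK^\intercal(\bx)\tmK^{-1}\bK(\bx).$$

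The final step is to convert $\tbK(\bx)$ into $\bK(\bx)$. Directly from \eqref{eq:noisy_kernel}, $\tK(\bx,\bx_i) = K(\bx,\bx_i) + \xi\, 1_{\bx=\bx_i}$, so $\tbK(\bx) = \bK(\bx) + \xi\,\bone(\bx)$. Plugging this in and collecting terms yields
$$\lVert g\rVert_\tH^2 = \tK(\bx,\bx) - \bK^\intercal(\bx)\tmK^{-1}\bK(\bx) - 2\xi\,\bK^\intercal(\bx)\tmK^{-1}\bone(\bx),$$
and recognizing $\bK^\intercal(\bx)\tmK^{-1}\bone(\bx) = f(\bx)\nmid\bone(\bx)$ via definition \eqref{eq:rkhs_f_given_y_beta} produces the claimed identity after squaring the supremum.

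The only delicate point is justifying the Riesz-style identification in $\tH$ for the functional $\tf\mapsto \tf(\bx) + \bc^\intercal\tbf$, since the point $\bx$ may or may not coincide with one of the $\bx_i$. This is not really an obstacle because $\tK$ is a valid reproducing kernel on all of $\calX$ (the $\xi\,1_{\bx=\bx'}$ term is an SPD perturbation), so evaluation functionals in $\tH$ are bounded and admit representers given by $\tK(\cdot,\cdot)$ exactly as used above. Everything else is bookkeeping.
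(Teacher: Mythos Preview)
Your proof is correct and follows essentially the same route as the paper: identify the linear functional $\tf\mapsto \tf(\bx)-\bK^\intercal(\bx)\tmK^{-1}\tbf$ with its Riesz representer $g=\tK(\cdot,\bx)-\bK^\intercal(\bx)\tmK^{-1}\tbK(\cdot)$ in $\tH$, invoke the Cauchy--Schwarz argument of \Cref{lemma:sup_eq_norm_tH} to equate the supremum with $\lVert g\rVert_\tH$, expand the norm, and then use $\tbK(\bx)=\bK(\bx)+\xi\,\bone(\bx)$ to reach the stated expression. The only cosmetic difference is that the paper carries $\bK^\intercal(\bx)\tmK^{-1}$ through explicitly rather than abbreviating it as $\bc$, and it handles the absolute value by a one-line symmetry remark rather than folding it into the Riesz argument.
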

\begin{proof}
    From \Cref{thm:rkhs_approx_H}, \Cref{lemma:sup_eq_norm_tH}, and using the definition of the noisy kernel $\tK$ in \eqref{eq:noisy_kernel}, we have 
    \begin{align*}
        \sup_{f \in \tH: \lVert \tf \rVert_\tH \leq 1} &\left(\tf(\bx) - f(\bx) \nmid \tbf\right)
        = \sup_{f \in \tH: \lVert \tf \rVert_\tH \leq 1} \left(\tf(\bx) - \bK^\intercal(\bx) \tmK^{-1} \tbf\right) \\
        &= \llVert \tK(\cdot,\bx) - \bK^\intercal(\bx) \tmK^{-1} \tbK \rrVert_{\tH} \\
        &= \sqrt{\tK(\bx,\bx) - 2 \bK^\intercal(\bx) \tmK^{-1} \tbK(\bx) + \bK^\intercal(\bx) \tmK^{-1} \tmK\tmK^{-1} \bK(\bx)} \\
        &= \sqrt{\tK(\bx,\bx) - \bK^\intercal(\bx) \tmK^{-1} \bK(\bx) - 2 \xi \bK^\intercal(\bx) \tmK^{-1} \bone(\bx)} \\
        &= \sqrt{\tK(\bx,\bx) - \bK^\intercal(\bx) \tmK^{-1} \bK(\bx) - 2 \xi f(\bx) \nmid \bone(\bx)}.
    \end{align*}
    We may take care of the absolute value by noting the symmetry
    $$\sup_{\tf \in \tH: \lVert \tf \rVert_\tH \leq 1} \left(f(\bx) \nmid \tbf - \tf(\bx)\right)
    = \llVert \tK(\cdot,\bx) - \bK^\intercal(\bx) \tmK^{-1} \tbK \rrVert_{\tH}.$$
\end{proof} 

\begin{corollary} \label{corr2:rkhs_x_error_H_noisy_no_deriv}
    For $\bx \in \calX$ with  $\bx \neq \bx_i$ for $0 \leq i < n$, the posterior variance in \eqref{eq:post_cov} satisfies 
    $$\sup_{\tf \in \tH: \lVert \tf \rVert_\tH \leq 1} \llvert \tf(\bx) - f(\bx) \nmid \tbf \rrvert^2 = \tK(\bx,\bx) - \bK^\intercal(\bx) \tmK^{-1} \bK(\bx) = \Var[f(\bx) | \boldsymbol{f}] + \xi.$$
\end{corollary}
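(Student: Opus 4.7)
The plan is to derive this corollary as a direct simplification of Theorem \ref{thm:rkhs_x_error_H_noisy_no_deriv} together with the definition of the noisy kernel \eqref{eq:noisy_kernel} and the posterior covariance formula \eqref{eq:post_cov}. The whole argument rests on exploiting the hypothesis $\bx \neq \bx_i$ for all $0 \leq i < n$ to kill the indicator-based correction term.

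First, I would invoke Theorem \ref{thm:rkhs_x_error_H_noisy_no_deriv} to write
$$\sup_{\tf \in \tH: \lVert \tf \rVert_\tH \leq 1} \llvert \tf(\bx) - f(\bx) \nmid \tbf \rrvert^2 = \tK(\bx,\bx) - \bK^\intercal(\bx) \tmK^{-1} \bK(\bx) - 2 \xi f(\bx) \nmid \bone(\bx).$$
Next, I would use the assumption $\bx \neq \bx_i$ for all $0 \leq i < n$ to observe that $\bone(\bx) = (1_{\bx = \bx_i})_{i=0}^{n-1} = \bzero$. By the definition $f(\bx) \nmid \bone(\bx) := \bK^\intercal(\bx) \tmK^{-1} \bone(\bx)$ from \eqref{eq:rkhs_f_given_y_beta}, this makes the correction term vanish, leaving $\tK(\bx,\bx) - \bK^\intercal(\bx) \tmK^{-1} \bK(\bx)$.

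Then I would apply the definition of the noisy kernel in \eqref{eq:noisy_kernel}, namely $\tK(\bx,\bx) = K(\bx,\bx) + \xi 1_{\bx=\bx} = K(\bx,\bx) + \xi$, to rewrite the expression as
$$K(\bx,\bx) - \bK^\intercal(\bx) \tmK^{-1} \bK(\bx) + \xi.$$
Finally, I would identify the first two terms with the posterior covariance formula \eqref{eq:post_cov} evaluated at $\bx = \bx'$ conditional on the noisy data $\by$ (which is $\tbf$ in this RKHS notation), giving exactly $\Var[f(\bx) \mid \boldsymbol{f}] + \xi$ and completing the corollary. There is no substantial obstacle here; the only subtlety worth flagging is correctly bookkeeping the three distinct indicator expressions $1_{\bx=\bx}$ (in the noisy kernel on the diagonal), $1_{\bx=\bx_i}$ (in $\bone(\bx)$, all zero by hypothesis), and the absence of noise corrections in $\bK(\bx)$ itself, so that the simplification lands cleanly on $\Var[f(\bx) \mid \boldsymbol{f}] + \xi$.
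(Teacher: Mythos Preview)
Your proof is correct and follows exactly the approach the paper intends: the corollary is stated without proof immediately after Theorem~\ref{thm:rkhs_x_error_H_noisy_no_deriv}, and your derivation fills in precisely the expected steps---kill the $\bone(\bx)$ term via the hypothesis, then unpack $\tK(\bx,\bx) = K(\bx,\bx) + \xi$ and match against \eqref{eq:post_cov}. Your flag about bookkeeping the three distinct indicator expressions is apt and is the only place one could trip.
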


\Section{Fast GPs} \label{sec:fast_gps}

In this section we will briefly review fast GPs using SI and DSI kernels along with special point sets which enable reduced computational costs and storage requirements. These ideas were originally proposed in \cite{zeng.spline_lattice_digital_net,zeng.spline_lattice_error_analysis} and more recently used in the context of fast Bayesian cubature with fast kernel hyperparameter optimization \cite{rathinavel.bayesian_QMC_lattice,rathinavel.bayesian_QMC_sobol,rathinavel.bayesian_QMC_thesis}, see \Cref{sec:background_fgps} for more background. We will defer our discussions on fast Bayesian cubature and fast kernel hyperparameter optimization to \Cref{sec:fmtgps} where fast multitask GPs are discussed. The standard GPs discussed earlier in this chapter and the fast GPs discussed here are a special case of their multitask formulations

Suppose we have $n=2^m$ points with $m \in \bbN_0$, and we are using one of the two following point set $(\bx_i)_{i=0}^{2^m-1}$ and kernel $K$ pairings:
\begin{enumerate}
    \item A shifted rank-1 lattice in radical inverse order (\Cref{sec:lattices}) with an SI kernel (\Cref{def:si_kernels}). 
    \item A base $2$ digitally-shifted digital net in radical inverse order (\Cref{sec:dnets}) with a DSI kernel (\Cref{def:dsi_kernels}). Higher-order digital nets and/or linear matrix scrambling are permitted, but nested uniform scrambling is not allowed. 
\end{enumerate}
Recall the comparison between standard GPs and fast GPs given in \Cref{tab:com_kernel_costs}.

Under such pairings, \Cref{thm:fast_eigenvalue_computation} implies that the Gram matrix has an eigendecomposition 
$$\tmK = \mV \tmLambda \overline{\mV}, \qquad \tblambda = \tmLambda \bone =\sqrt{n} \mV \tmK_{:,0}$$
where $\tmK_{:,0}$ is the zeroth column of $\tmK$ and $\mV$ is a fast transform matrix satisfying \Cref{cond:fast_transform}. Then \Cref{corr:fast_gram_matrix_comps} implies that for any $\by \in \bbR^n$ we may write 
$$\tmK \by = \mV (\tby \odot \tblambda), \qquad \tmK^{-1} \by = \mV (\tby ./ \tblambda), \qquad \log \lvert \tmK \rvert = \bone^\intercal \log \lvert \tblambda \rvert, \qquad \mathrm{trace}(\mK) = \bone^\intercal \tblambda$$
where $\tby = \overline{\mV} \by$, the elementwise (Hadamard) product is $\odot$, elementwise division is denoted by $./$, and $\bone^\intercal \log \lvert \tblambda \rvert = \sum_{i=0}^{n-1} \log \lvert \tlambda_i \rvert.$ This implies fast GPs require only $\calO(n)$ storage and $\calO(n \log n + d n)$ computations to fit, including hyperparameter optimization via either NMLL (\Cref{sec:mll_sgp}) or GCV (\Cref{sec:gcv_sgp}).

\Section{Multitask GPs} \label{sec:mtgps} 

Often times one has access to simulations which permit evaluation at varying levels of fidelity. For example, a numerical PDE solver may be able to compute solutions with increasing mesh resolution and increasing cost. We will assume the dimension $d$ is the same on each level, as is often the case for solvers of parameterized PDEs where $d$ controls the number of terms kept in the Karhunen--Lo\`eve expansion. In such cases, we would like a GP surrogate to exploit both inter-task and intra-task covariance, where the tasks may be evaluations at different fidelity levels or more generally correlated objectives. The multitask GPs we describe here were proposed in \cite{bonilla2007multi}, see \Cref{sec:background_mtgps} for additional background. 

Suppose we have $L$ correlated functions (tasks), each acting on $\calX \subseteq \bbR^d$. We will denote this set of functions by 
$$f: \{1,\dots,L\} \times \calX \to \bbR,$$ 
where $f(\ell,\bx)$ evaluates task $\ell \in \{1,\dots,L\}$ at parameters $\bx \in \calX$. Rather than fitting independent GPs to $f(1,\cdot),\dots,f(L,\cdot)$, we would like to fit a single GP which additionally exploits covariance information between tasks. 

Like standard GPs, multitask GPs are defined with respect to an SPD kernel $K: (\{1,\dots,L\} \times \calX) \times (\{1,\dots,L\} \times \calX) \to \bbR$, see \Cref{def:spd_SPSD_kernels}, and a prior mean $M: \{1,\dots,L\} \times \calX \to \bbR$. Then our prior is
\begin{align*}
    f &\sim \mathrm{GP}(M,K), \\ 
    \bbE[f(\ell,\bx)] &= M(\ell,\bx), \\
    \Cov[f(\ell,\bx),f(\ell',\bx')] &= K((\ell,\bx),(\ell',\bx')).
\end{align*}
Suppose at level $\ell \in \{1,\dots,L\}$ we observe
$$\by_\ell = \boldsymbol{f}_\ell + \bvarepsilon_\ell, \qquad \boldsymbol{f}_\ell = (f(\ell,\bx_{\ell i}))_{i=0}^{n_\ell-1}, \qquad (\bx_{\ell i})_{i=0}^{n_\ell-1} \subset \calX, \qquad \bvarepsilon_\ell \sim \calN(0,\xi_\ell \mI).$$
Here $\by_\ell$ are noisy function evaluations at collocations points $(\bx_{\ell i})_{i=0}^{n_\ell-1}$ with IID zero-mean Gaussian noise $\bvarepsilon_\ell$ having common noise variance (nugget) $\xi_\ell > 0$. We also assume $\bvarepsilon_1,\dots,\bvarepsilon_L$ are independent.

We will denote 
\begin{align*}
    \bK_\ell(\ell',\bx') &:= (K((\ell',\bx'),(\ell,\bx_{\ell i})))_{i=0}^{n_\ell-1} \in \bbR^{n_\ell}, \\
    \bM_\ell &:= (M(\ell,\bx_{\ell i}))_{i=0}^{n_\ell-1} \in \bbR^{n_\ell}, \\
    \mK_{\ell \ell'} &:= (K((\ell,\bx_{\ell i}),(\ell',\bx_{\ell' i'})))_{i,i'=0}^{n_\ell,n_{\ell'}} \in \bbR^{n_\ell \times n_{\ell'}}, \\
    \tmK_{\ell \ell'} &:= \mK_{\ell \ell'} + \xi_\ell 1_{\ell=\ell'} \mI_{n_\ell}
\end{align*}
where $\mI_{n_\ell}$ is the $n_\ell \times n_\ell$ identity matrix and $1_{\ell=\ell'}$ is $1$ if $\ell=\ell'$ and $0$ otherwise. Let $N = n_1+\dots+n_L$ be the total number of observations. We will collect these level dependent vectors and matrices into block vectors and block matrices
$$\boldsymbol{f} := \begin{pmatrix} \boldsymbol{f}_1 \\ \vdots \\ \boldsymbol{f}_L\end{pmatrix} \in \bbR^N, \qquad \by := \begin{pmatrix} \by_1 \\ \vdots \\ \by_L\end{pmatrix} \in \bbR^N, \qquad \bM := \begin{pmatrix} \bM_1 \\ \vdots \\ \bM_L \end{pmatrix} \in \bbR^N,$$
$$\bK(\ell,\bx) := \begin{pmatrix} \bK_1(\ell,\bx) \\ \vdots \\ \bK_L(\ell,\bx) \end{pmatrix} \in \bbR^N, \qquad \mK := \begin{pmatrix} \mK_{11} & \cdots & \mK_{1L} \\ \vdots & \ddots & \vdots \\ \mK_{L1} & \cdots & \mK_{LL} \end{pmatrix} \in \bbR^{N \times N},$$
$$\mXi := \begin{pmatrix} \xi_1 \mI_{n_1} & & \\ & \ddots & \\ & & \xi_L \mI_{n_L} \end{pmatrix} \in \bbR^{N \times N},$$
$$\tmK := \mK + \mXi \in \bR^{N \times N}.$$
As with standard GPs, the posterior distribution of $f$ after observing $\by$ is also a GP. Using the above notations, we may write the posterior distribution as 
\begin{align*} 
    f | \by &\sim \mathrm{GP}(\bbE[f | \by],\mathrm{Cov}[f,f | \by]), \\
    \bbE[f(\ell,\bx) | \by] &= M(\ell,\bx) + \bK^\intercal(\ell,\bx) \tmK^{-1} \left[\by - \bM\right], \\
    \mathrm{Cov}[f(\ell,\bx),f(\ell',\bx') | \by] &= K((\ell,\bx),(\ell',\bx')) - \bK^\intercal(\ell,\bx) \tmK^{-1} \bK(\ell',\bx').
\end{align*}
Without additional structure, multitask GPs cost $\calO(N^3 + d N^2)$ and require $\calO(N^2)$ storage, including hyperparameter optimization. In \Cref{sec:fmtgps} we will show how these costs can be reduced using special SI/DSI kernels along with special point sets. 

Let us introduce some additional notations to be used in the following subsections. We collect powers of inverse Gram matrices into blocks
$$\tmK^{-1} = \mA = \begin{pmatrix} \mA_{11} & \cdots & \mA_{1L} \\ \vdots & \ddots & \vdots \\ \mA_{L1} & \ddots & \mA_{LL} \end{pmatrix}, \qquad \tmK^{-2} = \mB = \begin{pmatrix} \mB_{11} & \cdots & \mB_{1L} \\ \vdots & \ddots & \vdots \\ \mB_{L1} & \ddots & \mB_{LL} \end{pmatrix}.$$ 
We will also use the ``level summing'' matrix 
$$\mT = \begin{pmatrix} \bone_1 & & \\ & \ddots & \\ & & \bone_L \end{pmatrix} \in \bbR^{N \times L}$$
where $\bone_\ell$ is the length $n_\ell$ vector of $1$. For example, if $L=2$ with $n_1=2$ and $n_2=3$, then 
$$\mT = \begin{pmatrix} 1 & 0 \\ 1 & 0 \\ 0 & 1 \\ 0 & 1 \\ 0 & 1 \end{pmatrix} \in \bbR^{5 \times 2}.$$

First, \Cref{sec:hpopt_mtgps} will discuss multitask GP hyperparameter optimization, then \Cref{sec:mtgp_bayesian_cubature} will discuss Bayesian cubature for multitask GPs, and finally \Cref{sec:mtgp_product_structure} will introduce product structure for multitask GP kernels. 

\Subsection{Hyperparameter Optimization (HPOPT)} \label{sec:hpopt_mtgps}

Here we will discuss HPOPT for multitask GPs which closely follows \Cref{sec:hpopt_sgps}. As was the case there, we note that the kernel typically relies on hyperparameters $\btheta$, e.g., $\btheta = \{\balpha, \gamma, \bEta,\mG,\bt\}$ where $\balpha$ are smoothness parameters, $\gamma$ is a global scaling parameter, $\bEta$ are lengthscales, and $\mG,\bt$ control inter-task correlations as we will discuss in \Cref{sec:mtgp_product_structure}. As was the case for standard GPs, we will discuss HPOPT via the NMLL (\Cref{sec:mll_mtgp}) or GCV (\Cref{sec:gcv_mtgp}) loss. 

\Subsection{HPOPT: Negative Marginal Log-Likelihood} \label{sec:mll_mtgp}

\begin{theorem} \label{thm:optimal_tau_mtgp_mll}
    If the prior mean are constants $\btau=(\tau_1,\dots,\tau_L)^\intercal$ at each level, i.e., $M(\ell,\bx) = \tau_\ell$ for any $\ell \in \{1,\dots,L\}$ and any $\bx \in \calX$, then the NMLL is, up to an additive constant independent of $\btheta$, proportional to 
    \begin{equation}
        \begin{aligned}
            \calL_\mathrm{NMLL}(\btau,\btheta) &:= \begin{pmatrix} \by_1^\intercal - \tau_1 \bone_1^\intercal & \cdots & \by_L^\intercal - \tau_L \bone_L^\intercal \end{pmatrix} \begin{pmatrix} \mA_{11} & \cdots & \mA_{1L} \\ \vdots & \ddots & \vdots \\ \mA_{L1} & \ddots & \mA_{LL} \end{pmatrix} \begin{pmatrix} \by_1 - \tau_1 \bone_1 \\ \vdots \\ \by_L - \tau_L \bone_L \end{pmatrix} \\
            &+ \log \lvert \tmK \rvert.
        \end{aligned}
        \label{eq:mtgp_mll}
    \end{equation}
    Moreover, the optimal prior mean constants $\sbtau$ which minimizes the NMLL satisfy 
    \begin{equation}
        \begin{pmatrix} \bone_1^\intercal \mA_{11} \bone_1 & \cdots & \bone_1^\intercal \mA_{1L} \bone_L \\ \vdots & \ddots & \vdots \\ \bone_L^\intercal \mA_{L1} \bone_1 & \cdots & \bone_L^\intercal \mA_{LL} \bone_L \end{pmatrix} \begin{pmatrix} \stau_1 \\ \vdots \\ \stau_L \end{pmatrix} = \mT^\intercal \mA \by.
        \label{eq:optimal_tau_mtgp_mll}
    \end{equation}
\end{theorem}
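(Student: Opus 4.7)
The plan is to derive the expression in two stages: first, cast the NMLL in a form that exposes the dependence on the constant-prior vector $\btau$, and second, differentiate in $\btau$ and solve the resulting normal equations.

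First, I would start from the standard multivariate Gaussian marginal log-likelihood. Since $\by \sim \calN(\bM, \tmK)$ under the prior,
$$-\log p(\by) = \tfrac{1}{2}(\by-\bM)^\intercal \tmK^{-1}(\by-\bM) + \tfrac{1}{2}\log\lvert \tmK\rvert + \tfrac{N}{2}\log(2\pi),$$
so dropping the $\btheta$-independent constant and multiplying by $2$ gives a loss proportional to $(\by-\bM)^\intercal \tmK^{-1}(\by-\bM) + \log\lvert\tmK\rvert$. Next I would use the constant-prior assumption to write $\bM = \mT\btau$, where $\mT$ is the level-summing matrix introduced just before the theorem. Expanding $\by - \mT\btau$ blockwise yields $(\by_\ell - \tau_\ell \bone_\ell)_{\ell=1}^L$, and substituting into the quadratic form gives exactly \eqref{eq:mtgp_mll} once the block structure of $\mA = \tmK^{-1}$ is used to expand the quadratic form as a double sum over levels.

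For the second part, I would view $\calL_\mathrm{NMLL}(\btau,\btheta)$ for fixed $\btheta$ as a quadratic form in $\btau$:
$$\calL_\mathrm{NMLL}(\btau,\btheta) = (\by - \mT\btau)^\intercal \mA (\by - \mT\btau) + \log\lvert\tmK\rvert.$$
Since $\mA$ is SPD (as $\tmK$ is SPD, guaranteed by $\xi_\ell>0$), this is a strictly convex quadratic in $\btau$, so the unique minimizer is characterized by the first-order condition $\nabla_\btau \calL_\mathrm{NMLL} = -2\mT^\intercal \mA(\by - \mT\sbtau) = \bzero$. Rearranging yields $\mT^\intercal \mA \mT \sbtau = \mT^\intercal \mA \by$, which is \eqref{eq:optimal_tau_mtgp_mll} once I verify that the $(\ell,\ell')$ entry of $\mT^\intercal \mA \mT$ is $\bone_\ell^\intercal \mA_{\ell\ell'} \bone_{\ell'}$. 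This identity is immediate from the definition of $\mT$: its columns are the indicator vectors of each level's rows, so $\mT^\intercal$ applied on the left picks out and sums the corresponding block rows of $\mA$, and $\mT$ on the right picks out and sums block columns.

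I do not anticipate any serious obstacle; the argument is essentially bookkeeping in block form plus the standard Gaussian MLE derivation. The only point requiring mild care is to justify that the stationary point is indeed the global minimum in $\btau$, which follows from SPD-ness of $\mT^\intercal \mA \mT$ whenever $\mT$ has full column rank --- and this holds because the $L$ columns of $\mT$ have disjoint supports and are each nonzero (assuming every level contributes at least one observation, $n_\ell \geq 1$). If one wants to keep the statement fully clean, I would add this trivial rank assumption to the proof as a remark, since otherwise $\mT^\intercal \mA \mT$ could be singular and the optimal $\sbtau$ would only be defined up to the kernel of $\mT$.
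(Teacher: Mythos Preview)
Your proposal is correct and follows essentially the same approach as the paper: write the NMLL as the standard Gaussian quadratic form plus log-determinant, substitute the constant prior mean, and set the derivative in $\btau$ to zero. Your execution is in fact cleaner than the paper's, which differentiates coordinate-wise in each $\tau_\ell$ and then reassembles the scalar equations into the linear system, whereas you work directly with the compact identity $\bM = \mT\btau$ and obtain the normal equations $\mT^\intercal \mA \mT\,\sbtau = \mT^\intercal \mA \by$ in one step; your remark on strict convexity via full column rank of $\mT$ is also a nice addition that the paper omits.
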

\begin{proof}
    Using our updated notations the NMLL is still proportional to \eqref{eq:mll}, so 
    $$\calL_\mathrm{NMLL}(\btau,\btheta) = \sum_{\ell,\ell'=1}^L (\by_\ell^\intercal - \tau_\ell \bone_\ell^\intercal) \mA_{\ell\ell'} (\by_{\ell'} - \tau_{\ell'}\bone_{\ell'}) + \log \lvert \tmK \rvert.$$
    Setting the derivative with respect to $\tau_\ell$ equal to $0$ gives 
    \begin{align*}
        0 &= \partial \calL_\mathrm{NMLL}(\btau,\btheta)/(\partial \tau_\ell) \\
        &= \partial \left[(\by_\ell-\tau_\ell \bone_\ell^\intercal) \tmA_{\ell\ell'} (\by_\ell-\tau_\ell\bone_\ell) + 2 \sum_{\ell' \neq \ell} (\by_\ell-\tau_\ell \bone_\ell^\intercal) \mA_{\ell\ell'} (\by-\tau_{\ell'} \bone_{\ell'})\right]/(\partial \tau_\ell) \\
        &= 2 \tau_\ell \bone_\ell^\intercal \mA_{\ell\ell} \bone_\ell - 2 \bone_\ell^\intercal \mA_{\ell\ell}\by_\ell - 2 \sum_{\ell' \neq \ell} \bone_\ell^\intercal \mA_{\ell\ell'} (\by_{\ell'}-\tau_{\ell'} \bone_{\ell'}),
    \end{align*}
    so the optimal $\stau_\ell$ satisfies 
    \begin{align*}
        \stau_\ell &= \frac{\bone_\ell^\intercal \mA_{\ell\ell} \by_\ell + \sum_{\ell \neq \ell} \bone_\ell^\intercal \mA_{\ell\ell'} (\by_{\ell'} - \stau_{\ell'} \bone_{\ell'})}{\bone_\ell^\intercal \mA_{\ell\ell} \bone_\ell} \\
        &= \frac{\stau_\ell \bone_\ell^\intercal \mA_{\ell\ell} \bone_\ell + \sum_{\ell'=1}^L \bone_\ell^\intercal \mA_{\ell\ell'} (\by_{\ell'} - \stau_{\ell'} \bone_{\ell'})}{\bone_\ell^\intercal \mA_{\ell\ell} \bone_\ell} \\
        &= \stau_\ell + \frac{\sum_{\ell'=1}^L \bone_\ell^\intercal \mA_{\ell\ell'} (\by_{\ell'} - \stau_{\ell'} \bone_{\ell'})}{\bone_\ell^\intercal \mA_{\ell\ell} \bone_\ell} \\
        \therefore\; 0 &= \frac{\sum_{\ell'=1}^L \bone_\ell^\intercal \mA_{\ell\ell'} (\by_{\ell'} - \stau_{\ell'} \bone_{\ell'})}{\bone_\ell^\intercal \mA_{\ell\ell} \bone_\ell}, \\
        \therefore\; \sum_{\ell'=1}^L \bone_\ell^\intercal \mA_{\ell\ell'} \by_{\ell'} &= \sum_{\ell'=1}^L \stau_{\ell'} \bone_\ell^\intercal \mA_{\ell\ell'} \bone_{\ell'}.
    \end{align*}
    Expanding over all $\ell$ gives the desired linear system. 
\end{proof}

\Subsection{HPOPT: Generalized Cross Validation} \label{sec:gcv_mtgp}

\begin{theorem} \label{thm:optimal_tau_mtgp_gcv}
    If the prior mean are constants $\btau=(\tau_1,\dots,\tau_L)^\intercal$ at each level, i.e., $M(\ell,\bx) = \tau_\ell$ for any $\ell \in \{1,\dots,L\}$ and any $\bx \in \calX$, then the GCV loss is proportional to  
    \begin{equation}
        \calL_\mathrm{GCV}(\btau,\btheta) := \frac{\begin{pmatrix} \by_1^\intercal - \tau_1 \bone_1^\intercal & \cdots & \by_L^\intercal - \tau_L \bone_L^\intercal \end{pmatrix}}{\trace^2(\mA)} \begin{pmatrix} \mB_{11} & \cdots & \mB_{1L} \\ \vdots & \ddots & \vdots \\ \mB_{L1} & \ddots & \mB_{LL} \end{pmatrix} \begin{pmatrix} \by_1 - \tau_1 \bone_1 \\ \vdots \\ \by_L - \tau_L \bone_L \end{pmatrix}.
        \label{eq:mtgp_gcv}
    \end{equation}
    Moreover, the optimal prior mean constants $\sbtau$ which minimizes the GCV loss satisfies 
    \begin{equation} 
        \begin{pmatrix} \bone_1^\intercal \mB_{11} \bone_1 & \cdots & \bone_1^\intercal \mB_{1L} \bone_L \\ \vdots & \ddots & \vdots \\ \bone_L^\intercal \mB_{L1} \bone_1 & \cdots & \bone_L^\intercal \mB_{LL} \bone_L \end{pmatrix} \begin{pmatrix} \stau_1 \\ \vdots \\ \stau_L \end{pmatrix} = \mT^\intercal \mB \by.
        \label{eq:optimal_tau_mtgp_gcv}
    \end{equation}
\end{theorem}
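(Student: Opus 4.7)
The plan is to proceed in close parallel with the proof of Theorem 8.13 (the NMLL version), since the GCV loss has the same algebraic structure in $\btau$ as the NMLL loss: a quadratic form in $\by - \bM$ plus a term independent of $\btau$. The key structural observation is that with constant per-level prior means, $\bM$ can be written compactly as $\bM = \mT \btau$ using the level-summing matrix $\mT$ defined in the preamble to Section 8.7.1.

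First, I would plug $M(\ell,\bx) = \tau_\ell$ into the general GCV loss \eqref{eq:gcv} adapted to the multitask setting, recognize $\tmK^{-2} = \mB$, and write $\by - \bM = \by - \mT\btau$. Expanding the block structure of $\mB$ row-wise by level yields the stated expression \eqref{eq:mtgp_gcv}. The only mild subtlety here is bookkeeping: checking that $\mT^\intercal \mB \mT$ indeed produces the $L \times L$ matrix with $(\ell,\ell')$ entry $\bone_\ell^\intercal \mB_{\ell\ell'} \bone_{\ell'}$ and that $\mT^\intercal \mB \by$ produces the right-hand side vector. Both follow directly from the definition of $\mT$.

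Next, for the optimal $\sbtau$, the crucial simplification compared to the NMLL case is that the denominator $\trace^2(\mA)$ does not depend on $\btau$. Thus, setting $\nabla_\btau \calL_\mathrm{GCV}(\btau,\btheta) = \bzero$ and using symmetry of $\mB$ (which follows from symmetry of $\tmK$ and hence of $\tmK^{-1}$ and $\tmK^{-2}$) gives
\begin{equation*}
    -\frac{2\,\mT^\intercal \mB (\by - \mT \sbtau)}{\trace^2(\mA)} = \bzero,
\end{equation*}
so $\mT^\intercal \mB \mT \sbtau = \mT^\intercal \mB \by$. Translating this compact identity into block-entry form yields \eqref{eq:optimal_tau_mtgp_gcv}. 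Convexity of the numerator in $\btau$ (since $\mB = \tmK^{-2}$ is SPSD, in fact SPD because $\tmK$ is SPD by $\xi_\ell > 0$) guarantees that this critical point is a genuine minimizer, and SPD-ness of $\mT^\intercal \mB \mT$ (which follows from $\mT$ having full column rank and $\mB$ being SPD) ensures the linear system has a unique solution.

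I do not anticipate a hard step in this proof; the main point of care is simply notational, making sure the block expansions of $\mT^\intercal \mB \mT$ and $\mT^\intercal \mB \by$ are rendered correctly. Unlike the NMLL derivation, there is no need here to separate diagonal and off-diagonal contributions in the partial derivative because the derivative can be computed in one shot via matrix calculus on the quadratic form $(\by-\mT\btau)^\intercal \mB (\by - \mT\btau)$; this makes the GCV proof slightly shorter than the NMLL proof.
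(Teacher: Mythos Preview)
Your proposal is correct and follows essentially the same approach as the paper: write the GCV loss as a quadratic form in $\by - \mT\btau$ divided by a $\btau$-independent denominator, differentiate, and set to zero. The paper differentiates componentwise in each $\tau_\ell$ (mirroring its NMLL proof) and then refers back to that argument for the remaining algebra, whereas you take the gradient in one shot via matrix calculus on $(\by-\mT\btau)^\intercal \mB (\by-\mT\btau)$; this is a minor presentational difference, not a different method.
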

\begin{proof}
    Using our updated notations, the GCV loss is still proportional to \eqref{eq:gcv},
    $$\calL_\mathrm{GCV}(\btau,\btheta) = \mathrm{trace}^{-2}(\mA) \sum_{\ell,\ell'=1}^L (\by_\ell^\intercal - \tau_\ell \bone_\ell^\intercal) \mB_{\ell\ell'} (\by_{\ell'} - \tau_{\ell'}\bone_{\ell'}).$$
    Setting the derivative with respect to $\tau_\ell$ equal to $0$ gives 
    \begin{align*}
        0 = 2 \tau_\ell \bone_\ell^\intercal \mB_{\ell\ell} \bone_\ell - 2 \bone_\ell^\intercal \mB_{\ell\ell}\by_\ell - 2 \sum_{\ell' \neq \ell} \bone_\ell^\intercal \mB_{\ell\ell'} (\by_{\ell'}-\tau_{\ell'} \bone_{\ell'}).
    \end{align*}
    The remaining steps directly follow the proof of \Cref{thm:optimal_tau_mtgp_mll} with $\mA$ replaced by $\mB$, so the result follows.
\end{proof}

\Subsection{Bayesian Cubature} \label{sec:mtgp_bayesian_cubature}

We are often interested in a linear combination of integrals on each level. With weights $\bomega \in \bbR^L$, we denote
\begin{equation*}
    \nu_\bomega := \sum_{\ell=1}^L \omega_\ell \int_\calX f(\ell,\bx) \D \bx.
\end{equation*}
The posterior distribution of $\nu_\bomega$ is the Gaussian 
$$\nu_\bomega \sim \calN(\bbE[\nu_\bomega | \by],\bbV[\nu_\bomega | \by])$$
with respective mean and variance 
\begin{align*}
    \bbE[\nu_\bomega | \by] &= \bomega^\intercal \brho \qquad\text{and}\\
    \bbV[\nu_\bomega | \by] &= \bomega^\intercal \mSigma \bomega 
\end{align*}
where 
\begin{align*}
    \brho &= \tbomega + \tmE \tmK^{-1}(\by-\bM)  \in \bbR^L, \\
    \mSigma &= \approxhat{\mE} - \tmE^\intercal \tmK^{-1}\tmE \in \bbR^{L \times L}, \\
    \tbomega &= \left(\int_\calX M(\ell,\bx) \D \bx\right) \in \bbR^L, \\
    \tmE &= \left(\int_\calX \bK(\ell,\bx) \D \bx\right)_{\ell=1}^L \in \bbR^{N \times L}, \\
    \approxhat{\mE} &= \left(\int_\calX \int_\calX K((\ell,\bx),(\ell',\bx')) \D \bx' \D \bx\right)_{\ell,\ell'=1}^L \in \bbR^{L \times L}.
\end{align*}

\begin{theorem} \label{thm:mtgp_bc_optimal_weights}
    Suppose we are given $\bchi \in \bbR^L$, e.g., $\bchi = (0,\dots,0,1)^\intercal$ or $\bchi = \bone$. Let us denote the mean squared error (MSE)
    \begin{equation*}
        \mathrm{MSE}(\bomega) = \bbE[(\nu_\bomega-\bbE[\nu_\bchi | \by])^2 | \by] = \bbV[\nu_\bomega | \by] + \mathrm{BIAS}^2(\nu_\bomega)
    \end{equation*}
    where 
    $$\mathrm{BIAS}(\nu_\bomega) = \bbE[\nu_\bomega | \by] - \bbE[\nu_\bchi | \by].$$ 
    Then 
    \begin{equation} \label{eq:mtgp_bc_optimal_weights} 
        \sbomega := \argmin \mathrm{MSE}(\bomega) = (\brho^\intercal \bchi) \left(\mSigma + \brho\brho^\intercal\right)^{-1} \brho.
    \end{equation}
    Moreover,
    $$\mathrm{MSE}(\sbomega) = (\brho^\intercal \bchi)^2 (1 -  \brho^\intercal (\mSigma + \brho \brho^\intercal)^{-1} \brho).$$
\end{theorem}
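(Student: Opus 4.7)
The plan is to reduce the problem to an unconstrained quadratic minimization in $\bomega$. First I would use the bias-variance decomposition given in the statement, and observe that since $\bbE[\nu_\bomega | \by] = \bomega^\intercal \brho$ is linear in $\bomega$, we have $\mathrm{BIAS}(\nu_\bomega) = (\bomega - \bchi)^\intercal \brho$. Combining this with $\bbV[\nu_\bomega | \by] = \bomega^\intercal \mSigma \bomega$ gives
\begin{equation*}
    \mathrm{MSE}(\bomega) = \bomega^\intercal (\mSigma + \brho\brho^\intercal) \bomega - 2 (\bchi^\intercal \brho)\, \brho^\intercal \bomega + (\bchi^\intercal \brho)^2.
\end{equation*}

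Next I would note that $\mSigma + \brho\brho^\intercal$ is symmetric positive definite (since $\mSigma$ is a posterior covariance matrix and $\brho\brho^\intercal$ is positive semi-definite, assuming $\mSigma$ is strictly positive definite as is typical once a positive nugget is included in $\tmK$). Therefore $\mathrm{MSE}$ is strictly convex in $\bomega$, and its unique minimizer is the stationary point obtained by setting the gradient to zero:
\begin{equation*}
    2(\mSigma + \brho\brho^\intercal) \bomega - 2 (\bchi^\intercal \brho)\, \brho = \bzero,
\end{equation*}
which yields $\sbomega = (\bchi^\intercal \brho)(\mSigma + \brho\brho^\intercal)^{-1} \brho$, matching \eqref{eq:mtgp_bc_optimal_weights} up to the scalar equality $\bchi^\intercal \brho = \brho^\intercal \bchi$.

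Finally, to evaluate $\mathrm{MSE}(\sbomega)$, I would substitute the stationary-point identity $(\mSigma + \brho\brho^\intercal) \sbomega = (\bchi^\intercal \brho)\brho$ into the quadratic form above, collapsing the quadratic term into a linear one in $\sbomega$:
\begin{equation*}
    \mathrm{MSE}(\sbomega) = (\bchi^\intercal \brho)\, \brho^\intercal \sbomega - 2 (\bchi^\intercal \brho)\, \brho^\intercal \sbomega + (\bchi^\intercal \brho)^2 = (\bchi^\intercal \brho)^2 - (\bchi^\intercal \brho)\, \brho^\intercal \sbomega.
\end{equation*}
Substituting the expression for $\sbomega$ gives the stated closed form. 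The only real subtlety is justifying positive definiteness of $\mSigma + \brho\brho^\intercal$ so that the inverse exists and the critical point is a global minimum; this is the main thing that needs to be spelled out, and it follows from the standing assumption of positive noise nuggets $\xi_\ell > 0$ which makes $\tmK$ (and hence $\mSigma$ via its Schur-complement form $\approxhat{\mE} - \tmE^\intercal \tmK^{-1}\tmE$) positive definite.
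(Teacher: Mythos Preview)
Your proof is correct and follows essentially the same route as the paper: expand $\mathrm{MSE}(\bomega)$ as a quadratic in $\bomega$, set the gradient to zero to obtain $(\mSigma+\brho\brho^\intercal)\bomega=(\brho^\intercal\bchi)\brho$, and substitute back for the optimal value. You are actually more careful than the paper in addressing convexity and invertibility of $\mSigma+\brho\brho^\intercal$; note, though, that the nugget only guarantees $\tmK$ is positive definite, and strict positive definiteness of the Schur-complement posterior covariance $\mSigma$ requires a little more (e.g., that the joint covariance of observations and integrals is itself positive definite), which the paper simply leaves implicit.
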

\begin{proof}
    The bias variance decomposition is standard:
    \begin{align*}
        \bbE[(\nu_\bomega-\bbE[\nu_\bchi | \by])^2 | \by] &= \bbE[\nu_\bomega^2 | \by] - 2 \bbE[\nu_\bomega | \by] \bbE[\nu_\bchi | \by] + \bbE^2[\nu_\bchi | \by] \\
        &= \bbV[\nu_\bomega | \by] + \bbE^2[\nu_\bomega | \by] - 2 \bbE[\nu_\bomega | \by] \bbE[\nu_\bchi | \by] + \bbE^2[\nu_\bchi | \by] \\
        &= \bbV[\nu_\bomega | \by] + \left(\bbE[\nu_\bomega | \by] - \bbE[\nu_\bchi | \by]\right)^2.
    \end{align*} 
    To prove the minimizer, write 
    \begin{equation*}
        \mathrm{MSE}(\bomega) = \bomega^\intercal \mSigma \bomega + (\bomega-\bchi)^\intercal \brho \brho^\intercal (\bomega-\bchi)
    \end{equation*}
    so that 
    $$\frac{\partial \mathrm{MSE}(\bomega)}{\partial \bomega} = 2 \mSigma \bomega + 2 \brho \brho^\intercal \bomega - 2 \brho\brho^\intercal \bchi = 0$$ 
    when 
    $$(\mSigma + \brho \brho^\intercal) \bomega = \brho (\brho^\intercal \bchi).$$
    Writing
    \begin{align*}
        \mathrm{MSE}(\bomega) = \bomega^\intercal (\mSigma + \brho \brho^\intercal)\bomega + \bchi^\intercal \brho \brho^\intercal \bchi - 2 \bchi^\intercal \brho \brho^\intercal \bomega
    \end{align*}
    and plugging in the optimal $\sbomega$ from \eqref{eq:mtgp_bc_optimal_weights} gives the expression for the optimal MSE. 
\end{proof}

Notice that regardless of $\bchi$, the minimizer of $\mathrm{MSE}(\bomega)$ is always proportional to $(\mSigma + \brho \brho^\intercal)^{-1} \brho$. 

\Subsection{Product Structure} \label{sec:mtgp_product_structure}

Almost all literature and implementations of multitask GPs assume $K$ has a product structure  
\begin{equation}
    K((\ell,\bx),(\ell',\bx')) = R(\ell,\ell') Q(\bx,\bx'), \qquad \forall\; \ell,\ell' \in \{1,\dots,L\}, \quad \forall\; \bx,\bx' \in [0,1]^d 
    \label{eq:mt_product_kernel}
\end{equation}
for some kernels $R: \{1,\dots,L\} \times \{1,\dots,L\} \to \bbR$ and $Q: \calX \times \calX \to \bbR$. Here $Q$ is taken to be a standard kernel such as those in \Cref{sec:gp_kernels}, e.g., a squared exponential or (digitally-)shift-invariant one. The kernel $R$ is typically parameterized directly by a low rank matrix $\mG \in \bbR^{L \times s}$ with $s \in \{0,\dots,L\}$ and a diagonal vector $\bt \in \bbR_{>0}^L$ so that the Gram matrix is 
$$\mR = (R(\ell,\ell'))_{\ell,\ell'=1}^L = \mG \mG^\intercal + \mathrm{diag}(\bt).$$

\Section{Fast Multitask GPs} \label{sec:fmtgps}

This section extends the fast GPs formulation in \Cref{sec:fast_gps} to the multitask setting. Of course, the single task setting is a special case of this multitask formulation. The resulting structures and algorithms we derive here are not available elsewhere in the literature. 

Let us assume we are using a product kernel as in \eqref{eq:mt_product_kernel} and have sample sizes $n_\ell = 2^{m_\ell}$ for $\ell \in \{1,\dots,L\}$, so the total number of points is $N = n_1 + \dots n_L$. We will enable fast multitask GPs by either 
\begin{enumerate}
    \item choosing $Q$ to be a shift-invariant (SI) kernel (\Cref{def:si_kernels}) and $(\bx_{\ell i})_{i=0}^{2^{m_\ell}-1}$ to be shifted rank-1 lattices in radical inverse order with possibly different shifts on each level, or 
    \item choosing $Q$ to be a digitally-shift-invariant (DSI) kernel (\Cref{def:dsi_kernels}) and $(\bx_{\ell,i})_{i=0}^{2^{m_\ell}-1}$ to be digitally-shifted base $2$ digital nets in radical inverse order with possibly different digital shifts on each level. As the generating matrices must be the same on each level, a single linear matrix scramble (LMS) may be applied to all generating matrices, but different LMS on each level is \emph{not} permitted. Nested uniform scrambling (NUS) is also prohibited. 
\end{enumerate}
Of course, this requires the domain to be $\calX = [0,1)^d$. 

After potentially reordering levels, let us assume $n_1 \geq \cdots \geq n_L$ without loss of generality. Then, under either of the two special pairings above, we may use \Cref{lemma:remove_krons} and \Cref{thm:fast_eigenvalue_computation} to say that for any $1 \leq \ell \leq \ell' \leq L$,
\begin{align*}
    \tmK_{\ell\ell'} &= \mV_\ell \tmLambda_{\ell\ell'} \overline{\mV_{\ell'}}, \\
    \tblambda_{\ell\ell'} &= \tmLambda_{\ell\ell'} \bone = \sqrt{2^{m_{\ell'}}}\;\overline{\mV_\ell} (\tmK_{\ell\ell'})_{:,0}.
\end{align*}
Here $\mV_\ell := \mV_{m_\ell} \in \bbC^{2^{m_\ell} \times 2^{m_\ell}}$ are fast transform matrices satisfying \Cref{cond:fast_transform}, $\tmLambda_{\ell\ell'}$ is a $2^{m_\ell-m_{\ell'}} \times 1$ block matrix with $2^{m_{\ell'}} \times 2^{m_{\ell'}}$ diagonal blocks, and $(\tmK_{\ell\ell'})_{:,0}$ is the zeroth column of $\tmK_{\ell\ell'}$. Therefore, we may write 
\begin{align*}
    \tmK &= 
    \begin{pmatrix}
        \mV_1 \tmLambda_{11} \overline{\mV_1} & \cdots & \mV_1 \tmLambda_{1L} \overline{\mV_L} \\
        \vdots & \ddots & \vdots \\
        \mV_L \overline{\tmLambda_{1L}}\; \overline{\mV_1} & \cdots & \mV_L \tmLambda_{LL} \overline{\mV_L} 
    \end{pmatrix} \\
    &= 
    \begin{pmatrix}
        \mV_1 & & \\
        & \ddots & \\
        & & \mV_L
    \end{pmatrix}
    \begin{pmatrix}
        \tmLambda_{11} & \cdots & \tmLambda_{1L} \\
        \vdots & \ddots & \vdots \\
        \overline{\tmLambda_{1L}} & \cdots & \tmLambda_{LL} 
    \end{pmatrix}
    \begin{pmatrix}
        \overline{\mV_1} & & \\
        & \ddots & \\
        & & \overline{\mV_L}
    \end{pmatrix} \\
    &=: \mV \tmLambda \overline{\mV}.
\end{align*}
We will often denote the collection of the first $\ell$ row blocks and first $\ell'$ column blocks by  
$$\tmLambda_{:\ell,:\ell'} := \begin{pmatrix} \tmLambda_{11} & \cdots & \tmLambda_{1\ell'} \\ \vdots & \ddots & \vdots \\ \overline{\tmLambda_{\ell1}} & \cdots & \tmLambda_{\ell\ell'} \end{pmatrix}$$

Both $\mV,\overline{\mV} \in \bbC^{N \times N}$ are fixed and known. Multiplication by $\mV,\overline{\mV}$ can be done quickly using fast transforms. As detailed in \Cref{sec:fast_kernel_methods}, the fast transforms are 
\begin{enumerate}
    \item the FFTBR/IFFTBR algorithms in the case of SI kernels and shifted rank-1 lattices in radical inverse order, and 
    \item the FWHT algorithm in the case of DSI kernels and base $2$ digitally-shifted digital nets in radical inverse order. 
\end{enumerate}
It is therefore sufficient to work with the sparse matrix $\tmLambda$. Multiplication by $\tmLambda$ is straightforward. In order to fit a multitask GP, we will also need to invert $\tmLambda$ and compute the determinant. 

To motivate our fast inverse and determinant algorithm, let us consider an example with $L=3$ tasks having $n_1 = 8$, $n_2 = 4$ and $n_3 = 2$. Then $\tmLambda$ has the following sparsity pattern
\begin{equation}
    \tmLambda = 
    \begin{pmatrix}
        \tmLambda_{11} & \tmLambda_{12} & \tmLambda_{13} \\
        \overline{\tmLambda_{12}} & \tmLambda_{22} & \tmLambda_{23} \\
        \overline{\tmLambda_{13}} & \overline{\tmLambda_{23}} & \tmLambda_{33}
    \end{pmatrix}
    = 
    \left[\begin{array}{cccccccc|cccc|cc}
    . &   &   &   &   &   &   &   & . &   &   &   & . &   \\
      & . &   &   &   &   &   &   &   & . &   &   &   & . \\
      &   & . &   &   &   &   &   &   &   & . &   & . &   \\
      &   &   & . &   &   &   &   &   &   &   & . &   & . \\
      &   &   &   & . &   &   &   & . &   &   &   & . &   \\
      &   &   &   &   & . &   &   &   & . &   &   &   & . \\
      &   &   &   &   &   & . &   &   &   & . &   & . &   \\
      &   &   &   &   &   &   & . &   &   &   & . &   & . \\
      \hline
    . &   &   &   & . &   &   &   & . &   &   &   & . &   \\
      & . &   &   &   & . &   &   &   & . &   &   &   & . \\
      &   & . &   &   &   & . &   &   &   & . &   & . &   \\
      &   &   & . &   &   &   & . &   &   &   & . &   & . \\
      \hline 
    . &   & . &   & . &   & . &   & . &   & . &   & . &  \\
      & . &   & . &   & . &   & . &   & . &   & . &   & .
  \end{array}\right].
  \label{eq:example_eval_mat}
\end{equation}
Then
\begin{align*}
    \tmLambda_{:2,:2}^{-1} 
    &:= \begin{pmatrix}
        \tmLambda_{11} & \tmLambda_{12} \\
        \overline{\tmLambda_{12}} & \tmLambda_{22}
    \end{pmatrix}^{-1}
    = 
    \begin{pmatrix}
        \tmLambda_{11}^{-1} + \tmLambda_{11}^{-1} \tmLambda_{12} \mS_2^{-1} \overline{\tmLambda_{12}} \tmLambda_{11}^{-1} & -\tmLambda_{11}^{-1} \tmLambda_{12} \mS_2^{-1} \\
        - \mS_2^{-1} \overline{\tmLambda_{12}} \tmLambda_{11}^{-1} & \mS_2^{-1}
    \end{pmatrix} \\
    &= 
    \left[\begin{array}{cccccccc|cccc}
    . &   &   &   & . &   &   &   & . &   &   &   \\
      & . &   &   &   & . &   &   &   & . &   &   \\
      &   & . &   &   &   & . &   &   &   & . &   \\
      &   &   & . &   &   &   & . &   &   &   & . \\
    . &   &   &   & . &   &   &   & . &   &   &   \\
      & . &   &   &   & . &   &   &   & . &   &   \\
      &   & . &   &   &   & . &   &   &   & . &   \\
      &   &   & . &   &   &   & . &   &   &   & . \\
      \hline 
    . &   &   &   & . &   &   &   & . &   &   &   \\
      & . &   &   &   & . &   &   &   & . &   &   \\
      &   & . &   &   &   & . &   &   &   & . &   \\
      &   &   & . &   &   &   & . &   &   &   & .
  \end{array}\right]
\end{align*}
where
$$\mS_2 = \tmLambda_{22} - \overline{\tmLambda_{12}} \tmLambda_{11}^{-1} \tmLambda_{12}$$
is the Schur complement. We also have that 
$$\lvert \tmLambda_{:2,:2} \rvert = \lvert \tmLambda_{11} \rvert \cdot \lvert \mS_2 \rvert.$$
Then in the next step we may say 
\begin{align*}
    \tmLambda^{-1} = \tmLambda_{:3,:3}^{-1} 
    &:= \begin{pmatrix}
        \tmLambda_{:2,:2} & \tmLambda_{:2,3} \\
        \overline{\tmLambda_{:2,3}} & \tmLambda_{33}
    \end{pmatrix}^{-1}
    = 
    \begin{pmatrix}
        \tmLambda_{:2,:2}^{-1} + \tmLambda_{:2,:2}^{-1} \tmLambda_{:2,3} \mS_3^{-1} \overline{\tmLambda_{:2,3}} \tmLambda_{:2,:1}^{-1} & -\tmLambda_{:2,:2}^{-1} \tmLambda_{:2,3} \mS_3^{-1} \\
        \mS_3^{-1} \overline{\tmLambda_{:2,3}} \tmLambda_{:2,:2}^{-1} & \mS_3^{-1}
    \end{pmatrix}
\end{align*}
where 
$$\mS_3 = \tmLambda_{33} - \overline{\tmLambda_{:2,3}} \tmLambda_{:2,:2}^{-1} \tmLambda_{:2,3}$$
is another Schur complement and 
$$\lvert \tmLambda \rvert = \lvert \tmLambda_{:3,:3} \rvert = \lvert \tmLambda_{:2,:2} \rvert \cdot \lvert \tmLambda_3 \rvert.$$
Crucially, $\mS_\ell$ are diagonal and $\tmLambda^{-1}_{:\ell,:\ell}$ only depends on $\tmLambda^{-1}_{:\ell-1,:\ell-1}$, $\tmLambda_{:\ell-1,\ell}$, and $\tmLambda_{\ell\ell}$. 

\Cref{algo:inv_det_Theta} gives a general routine for computing $\tmLambda^{-1}$ and $\lvert \tmLambda \rvert$. Then, \Cref{thm:costs_fast_mtgp} analyzes the cost and storage requirements for computing $\tmLambda$, multiplying by $\tmK$, inverting $\tmK$, and finding the determinant of $\tmK$. 

\begin{algorithm}[!ht]
    \fontsize{12}{10}\selectfont
    \caption{Compute the Inverse $\tmLambda^{-1}$ and Determinant $\lvert \tmLambda \rvert$}
    \label{algo:inv_det_Theta}
    \begin{algorithmic}
        \Require $\tmLambda$ diagonal block matrix with $m_1 \geq \dots \geq m_L$.
        \State $\mD \gets \tmLambda_{11}$ \Comment{A $2^{m_1} \times 2^{m_1}$ diagonal matrix.}
        \State $\mA \gets \mD^{-1}$ \Comment{A $2^{m_1} \times 2^{m_1}$ diagonal matrix. Costs $\calO(2^{m_1})$.} 
        \State $\varrho \gets  \lvert \mD \rvert$ \Comment{A scalar. Costs $\calO(2^{m_1})$.}
        \State $\ell \gets 2$
        \While{$\ell \leq L$}
            \State $\mD \gets \tmLambda_{\ell\ell}$ \Comment{A $2^{m_\ell} \times 2^{m_\ell}$ diagonal matrix.}
            \State $\mB \gets \tmLambda_{:(\ell-1),\ell}$ \Comment{A $\sum_{\ell'=1}^{\ell-1} 2^{m_{\ell'} - m_\ell} \times 1$ block matrix with $2^{m_\ell} \times 2^{m_\ell}$ diagonal blocks.}
            \State $\mE \gets \mA \mB$ \Comment{A $\sum_{\ell'=1}^{\ell-1} 2^{m_{\ell'}-m_{\ell}} \times 1$ block matrix with $2^{m_\ell} \times 2^{m_\ell}$ diagonal blocks. Costs $\calO([\sum_{\ell'=1}^{\ell-1} 2^{m_{\ell'} - m_{\ell-1}}][\sum_{\ell'=1}^{\ell-1} 2^{m_{\ell'}}])$.}
            \State $\mF \gets \overline{\mB}^\intercal \mE$ \Comment{A $2^{m_\ell} \times 2^{m_\ell}$ diagonal matrix. Costs $\calO(\sum_{\ell'=1}^{\ell-1} 2^{m_{\ell'}})$.}
            \State $\mS \gets \mD-\mF$ \Comment{A $2^{m_\ell} \times 2^{m_\ell}$ diagonal matrix (the Schur complement). Costs $2^{m_\ell}$.}
            \State $\varrho \gets \varrho \lvert \mS \rvert$ \Comment{A scalar. Costs $\calO(2^{m_\ell})$. Equivalent to $\lvert \tmLambda_{:\ell,:\ell} \rvert$.}
            \State $\mG \gets \mS^{-1}$ \Comment{A $2^{m_\ell} \times 2^{m_\ell}$ diagonal matrix. Costs $\calO(2^{m_\ell})$.}
            \State $\mH \gets \mE \mG$ \Comment{A $\sum_{\ell'=1}^{\ell-1} 2^{m_{\ell'}-m_{\ell}} \times 1$ block matrix with $2^{m_\ell} \times 2^{m_\ell}$ diagonal blocks. Costs $\calO(\sum_{\ell'=1}^{\ell-1} 2^{m_{\ell'}})$.}
            \State $\mJ \gets \mH \overline{\mE}^\intercal$ \Comment{A $\sum_{\ell'=1}^{\ell-1} 2^{m_{\ell'}-m_{\ell}} \times \sum_{\ell'=1}^{\ell-1} 2^{m_{\ell'}-m_{\ell}}$ block matrix with $2^{m_\ell} \times 2^{m_\ell}$ diagonal blocks. Costs $\calO([\sum_{\ell'=1}^{\ell-1} 2^{m_{\ell'} - m_\ell}][\sum_{\ell'=1}^{\ell-1} 2^{m_{\ell'}}])$.}
            \State $\mM \gets \mA + \mJ$ \Comment{A $\sum_{\ell'=1}^{\ell-1} 2^{m_{\ell'}-m_{\ell}} \times \sum_{\ell'=1}^{\ell-1} 2^{m_{\ell'}-m_{\ell}}$ block matrix with $2^{m_\ell} \times 2^{m_\ell}$ diagonal blocks. Costs $\calO([\sum_{\ell'=1}^{\ell-1} 2^{m_{\ell'} - m_\ell}][\sum_{\ell'=1}^{\ell-1} 2^{m_{\ell'}}])$.}
            \State $\mA \gets \begin{pmatrix} \mM & -\mH \\ -\overline{\mH}^\intercal & \mG \end{pmatrix}$ \Comment{A $\sum_{\ell'=1}^\ell 2^{m_{\ell'}-m_\ell} \times \sum_{\ell'=1}^\ell 2^{m_{\ell'}-m_\ell}$ block matrix with $2^{m_\ell} \times 2^{m_\ell}$ diagonal blocks. Equivalent to $\tmLambda_{:\ell,:\ell}^{-1}$.}
            \State $\ell \gets \ell+1$
        \EndWhile
        \State $\tmLambda^{-1} \gets \mA$ \Comment{A $\sum_{\ell=1}^L 2^{m_{\ell}-m_L} \times \sum_{\ell=1}^L 2^{m_{\ell}-m_L}$ block matrix with $2^{m_L} \times 2^{m_L}$ diagonal blocks.}
        \State $\lvert \tmLambda \rvert \gets \varrho$ \Comment{A scalar.}
        \\\Return $\tmLambda^{-1},\lvert \tmLambda \rvert$
    \end{algorithmic}
\end{algorithm}

\begin{theorem} \label{thm:costs_fast_mtgp}
    Assume $n_\ell = 2^{m_\ell}$ with $m_\ell \in \bbN_0$ and, without loss of generality, assume $n_1 \geq \cdots \geq n_L$ and $N=n_1+\cdots+n_L$. Evaluating $\tmLambda \in \bbC^{N \times N}$ requires 
    \begin{align*}
        \calO\left(\sum_{\ell=1}^L (L-\ell+1) (n_\ell \log(n_\ell) + d n_\ell)\right) \text{ computations and } \\ 
        \calO\left(\sum_{\ell=1}^L (L-\ell+1) n_\ell\right) \text{ storage}.
    \end{align*}
    Computing $\tmK \by = \mV \tmLambda \overline{\mV} \by$ for any $\by \in \bbR^N$ requires 
    $$\calO\left(\sum_{\ell=1}^L n_\ell \log(n_\ell) + \sum_{\ell=1}^L (L-\ell+1) n_\ell\right) \text{ computations}.$$
    Then, evaluating $\lvert \tmK \rvert = \lvert \tmLambda \rvert$ and $\tmLambda^{-1}$ using \Cref{algo:inv_det_Theta} requires 
    $$\calO\left(\sum_{\ell=2}^L \left(\sum_{\ell'=1}^{\ell-1} n_{\ell'}\right)^2/n_\ell\right) \text{ computations and } \calO(N^2/n_L) \text{ storage}.$$
    Having stored $\tmLambda^{-1}$, evaluating $\tmK^{-1} \by= \mV \tmLambda^{-1} \overline{\mV} \by$ for any $\by \in \bbR^N$ requires
    $$\calO\left(\sum_{\ell=1}^L n_\ell \log(n_\ell) + N^2/n_L\right) \text{ computations}.$$
\end{theorem}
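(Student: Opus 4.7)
The plan is to establish the four cost and storage claims in order, handling each by a careful block-wise bookkeeping argument that leverages (i) \Cref{thm:fast_eigenvalue_computation} to convert kernel matrix blocks into their diagonal eigenvalue representations, and (ii) the sparsity-preserving structure of $\tmLambda$ together with the block elimination logic of \Cref{algo:inv_det_Theta}. The ordering assumption $n_1 \geq \cdots \geq n_L$ is purely for bookkeeping: whenever we index a block by $(\ell,\ell')$ with $\ell \leq \ell'$, we have $n_\ell \geq n_{\ell'}$ so that the block $\tmLambda_{\ell\ell'}$ is a $2^{m_\ell - m_{\ell'}} \times 1$ block matrix with $2^{m_{\ell'}} \times 2^{m_{\ell'}}$ diagonal blocks and holds exactly $n_\ell$ scalar entries. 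I would first recall this count and verify it against the storage claim by summing across columns: for each row $\ell$ the upper-triangular blocks $\tmLambda_{\ell,\ell},\ldots,\tmLambda_{\ell,L}$ contribute $(L-\ell+1)n_\ell$ entries, giving $\sum_\ell (L-\ell+1)n_\ell$ overall. The cost of actually forming these entries follows from \Cref{thm:fast_eigenvalue_computation}: the zeroth column of $\tmK_{\ell\ell'}$ costs $\calO(d n_\ell)$ to assemble (one kernel evaluation per row, each costing $\calO(d)$), and applying $\overline{\mV_{m_\ell}}$ to it via FFTBR or FWHT costs $\calO(n_\ell \log n_\ell)$; summing over all upper-triangular blocks yields the first claim.

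Next, for $\tmK \by = \mV \tmLambda \overline{\mV} \by$, I would decompose the computation into three stages: apply the block-diagonal $\overline{\mV}$, multiply by the sparse $\tmLambda$, and apply $\mV$. The outer two stages each reduce to one fast transform per level at cost $\calO(n_\ell \log n_\ell)$, so their combined cost is $\calO(\sum_\ell n_\ell \log n_\ell)$. For the middle stage, I would observe that a sparse multiplication costs $\calO(\mathrm{nnz}(\tmLambda))$, and although the full nonzero count includes lower-triangular blocks $\overline{\tmLambda_{\ell'\ell}}$ (each carrying $n_{\ell'}$ entries), a short relabeling shows this total is $\Theta(\sum_\ell (L-\ell+1) n_\ell)$ — exactly the upper-triangular storage up to a constant factor. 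A small example mirroring \eqref{eq:example_eval_mat} will make this bookkeeping transparent.

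The third claim, covering \Cref{algo:inv_det_Theta}, is the technical heart of the proof and the step I anticipate will require the most care. I would proceed by induction on $\ell$, showing that after iteration $\ell$ the matrix $\mA$ equals $\tmLambda_{:\ell,:\ell}^{-1}$ and has the sparsity pattern of a $\sum_{\ell' \leq \ell} 2^{m_{\ell'}-m_\ell} \times \sum_{\ell' \leq \ell} 2^{m_{\ell'}-m_\ell}$ block matrix with $2^{m_\ell} \times 2^{m_\ell}$ diagonal blocks, while $\varrho$ equals $\lvert \tmLambda_{:\ell,:\ell} \rvert$. The inductive step is the block-inverse formula using the Schur complement $\mS = \tmLambda_{\ell\ell} - \overline{\mB}^\intercal \mA \mB$; the key non-obvious fact, which must be checked, is that $\mS$ itself is a $2^{m_\ell} \times 2^{m_\ell}$ diagonal matrix. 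This follows because multiplying a matrix with $2^{m_{\ell-1}} \times 2^{m_{\ell-1}}$ diagonal blocks against one with $2^{m_\ell} \times 2^{m_\ell}$ diagonal blocks (and $m_{\ell-1} \geq m_\ell$) yields an output preserving $2^{m_\ell} \times 2^{m_\ell}$ diagonal-block structure — a fact essentially identical in spirit to \Cref{lemma:remove_krons}. Once this structural invariant is secured, the per-iteration cost of the dominant step $\mE \gets \mA \mB$ (and symmetrically $\mJ \gets \mH \overline{\mE}^\intercal$) is bounded by $\calO\bigl((\sum_{\ell'<\ell} n_{\ell'}/n_\ell)(\sum_{\ell'<\ell} n_{\ell'})\bigr) = \calO\bigl((\sum_{\ell'<\ell} n_{\ell'})^2/n_\ell\bigr)$; all other operations within the iteration are strictly cheaper. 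The final storage bound $\calO(N^2/n_L)$ follows by counting nonzeros in the terminal $\mA$: $(\sum_\ell n_\ell/n_L)^2 \cdot n_L = N^2/n_L$.

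The fourth claim is then immediate from the third: we reuse the stored $\tmLambda^{-1}$ and invoke the same three-stage decomposition as in claim two, with $\tmLambda^{-1}$ in place of $\tmLambda$. The transform stages still cost $\calO(\sum_\ell n_\ell \log n_\ell)$, and the sparse multiplication by $\tmLambda^{-1}$ now costs $\calO(\mathrm{nnz}(\tmLambda^{-1})) = \calO(N^2/n_L)$ from the preceding storage analysis. The hardest obstacle throughout is rigorously tracking the block-sparsity invariant through \Cref{algo:inv_det_Theta} — in particular, verifying that the Schur complement inherits diagonal structure at every step — since a breakdown here would inflate both storage and per-iteration cost. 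Everything else reduces to careful but routine summations over levels.
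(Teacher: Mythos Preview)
Your proposal is correct and follows the same approach as the paper. The paper does not include a formal proof environment for this theorem; instead, the argument is distributed across the worked $L=3$ example preceding \Cref{algo:inv_det_Theta} (which demonstrates the diagonal-block sparsity invariant through the Schur complement recursion) and the cost annotations embedded as comments in \Cref{algo:inv_det_Theta} itself. Your plan formalizes exactly these ingredients: block-wise storage counting via \Cref{thm:fast_eigenvalue_computation}, the three-stage $\mV\tmLambda\overline{\mV}$ decomposition, the inductive Schur-complement analysis with the structural invariant on $\mA$, and the terminal nonzero count $(\sum_\ell n_\ell/n_L)^2 \cdot n_L = N^2/n_L$.
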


\begin{corollary}
    Assume $n:=n_\ell = 2^{m}$ with $m \in \bbN_0$ for all $\ell \in \{1,\dots,L\}$ so $N=Ln$. Evaluating $\tmLambda \in \bbC^{N \times N}$ requires 
    $$\calO\left(L^2(n \log n + dn)\right) \text{ computations and } \calO\left(L^2 n\right) \text{ storage}.$$
    Computing $\tmK \by = \mV \tmLambda \overline{\mV} \by$ for any $\by \in \bbR^N$ requires 
    $$\calO\left(L n \log n + L^2 n\right) \text{ computations}.$$
    Then, evaluating $\lvert \tmK \rvert = \lvert \tmLambda \rvert$ and $\tmLambda^{-1}$ using \Cref{algo:inv_det_Theta} requires 
    $$\calO\left(L^3 n\right) \text{ computations and } \calO(L^2n) \text{ storage}.$$
    Having stored $\tmLambda^{-1}$, evaluating $\tmK^{-1} \by= \mV \tmLambda^{-1} \overline{\mV} \by$ for any $\by \in \bbR^N$ requires
    $$\calO\left(L n \log n + L^2 n\right) \text{ computations}.$$
\end{corollary}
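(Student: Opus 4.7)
The plan is to obtain the corollary by direct substitution of $n_\ell = n$ for all $\ell \in \{1,\dots,L\}$ into the four estimates provided by Theorem~\ref{thm:costs_fast_mtgp}, and then to collapse the resulting arithmetic and geometric sums into closed forms using elementary identities. Since each of the four complexity bounds in the theorem is stated in terms of sums over levels, the work reduces to evaluating (or bounding by) these sums under the equal-sample-size assumption, so the structural content of the corollary is already carried by the theorem itself.

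More concretely, I would handle the four bounds in turn. For the cost of evaluating $\tmLambda$ and its storage, I would use $\sum_{\ell=1}^L (L-\ell+1) = L(L+1)/2 = \calO(L^2)$ to get the claimed $\calO(L^2(n\log n + dn))$ and $\calO(L^2 n)$. For the cost of computing $\tmK\by$, I would split the bound into the pure FFT/FWHT contribution $\sum_{\ell=1}^L n\log n = Ln\log n$ and the remaining weighted sum, which again collapses via $L(L+1)/2$ to $\calO(L^2 n)$. For the determinant and inversion cost via Algorithm~\ref{algo:inv_det_Theta}, I would evaluate
\begin{equation*}
    \sum_{\ell=2}^L \Bigl(\sum_{\ell'=1}^{\ell-1} n\Bigr)^2 / n = n\sum_{\ell=2}^L (\ell-1)^2 = n\sum_{k=1}^{L-1} k^2 = \calO(L^3 n),
\end{equation*}
and verify that $N^2/n_L = (Ln)^2/n = L^2 n$ gives the storage estimate. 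For the final $\tmK^{-1}\by$ cost, I would simply combine $\sum_{\ell=1}^L n\log n = Ln\log n$ with the already-computed $N^2/n_L = L^2n$.

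There is no substantive obstacle here: every step is a routine simplification of the sums in Theorem~\ref{thm:costs_fast_mtgp}. The only mildly delicate bookkeeping is in the third bound, where one must recognize that the inner sum $\sum_{\ell'=1}^{\ell-1} n_{\ell'}$ becomes $(\ell-1)n$ after substitution (rather than being bounded cruder by $(\ell-1)n_L$ or $N$), so that the square yields an $L^3 n$ rather than a looser $L^2 N = L^3 n$ that accidentally agrees — the agreement is what one should double-check to make sure no sharper estimate is being lost in the equal-$n$ regime. Once those sums are written down, the corollary follows immediately from the theorem.
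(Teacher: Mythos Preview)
Your proposal is correct and matches the paper's approach: the corollary is presented immediately after Theorem~\ref{thm:costs_fast_mtgp} without a separate proof, so the intended argument is exactly the direct substitution $n_\ell = n$ and simplification of the resulting sums that you describe. Your bookkeeping on all four bounds is accurate.
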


Let us introduce some additional notations useful in the next subsections. Similar to what was done for standard multitask GPs, we will collect powers of $\tmLambda$ into blocks
$$\tmLambda^{-1} = \mGamma = \begin{pmatrix} \mGamma_{00} & \cdots & \mGamma_{1L} \\ \vdots & \ddots & \vdots \\ \mGamma_{L1} & \ddots & \mGamma_{LL} \end{pmatrix}, \qquad \tmLambda^{-2} = \mUpsilon = \begin{pmatrix} \mUpsilon_{00} & \cdots & \mUpsilon_{1L} \\ \vdots & \ddots & \vdots \\ \mUpsilon_{L1} & \ddots & \mUpsilon_{LL} \end{pmatrix},$$
so that
$$\mA = \tmK^{-1} = \mV \mGamma \overline{\mV}, \qquad \mB = \tmK^{-2} = \mV \mUpsilon \overline{\mV}.$$
Moreover, let us collect 
$$\mPi = \left(\sqrt{n_\ell n_{\ell'}} (\mGamma_{\ell\ell'})_{00}\right)_{\ell,\ell'=1}^L \in \bbR^{L \times L}, \qquad \mP = \left(\sqrt{n_\ell n_{\ell'}} (\mUpsilon_{\ell\ell'})_{00}\right)_{\ell,\ell'=1}^L \in \bbR^{L \times L}$$
where $(\mGamma_{\ell\ell'})_{00}$ and $(\mUpsilon_{\ell\ell'})_{00}$ are the upper left elements of $\mGamma_{\ell\ell'}$ and $\mUpsilon_{\ell\ell'}$ respectively. 

\Cref{sec:hpopt_fmtgps} will discuss hyperparameter optimization (HPOPT) for fast multitask GPs and then \Cref{sec:fmtgp_bayesian_cubature} will discuss fast Bayesian cubature. Both HPOPT and Bayesian cubature for fast multitask GPs do not generally increase the cost or storage requirements beyond that of solving  linear system in $\tmK$ and computing the determinant $\lvert \tmK \rvert$, both of which were analyzed in \Cref{thm:costs_fast_mtgp}.

\Subsection{Hyperparameter Optimization (HPOPT)} \label{sec:hpopt_fmtgps}

This section closely follows \Cref{sec:hpopt_mtgps} which discussed HPOPT for standard multitask GPs via either the NMLL (\Cref{sec:mll_fmtgp}) or GCV (\Cref{sec:gcv_fmtgp}) losses.

\Subsection{HPOPT: Negative Marginal Log-Likelihood (NMLL)} \label{sec:mll_fmtgp}

\begin{theorem} \label{thm:optimal_tau_fmtgp_mll}
    When the prior mean are constants $\btau=(\tau_1,\dots,\tau_L)^\intercal$ at each level, i.e., $M(\ell,\bx) = \tau_\ell$ for any $\ell \in \{1,\dots,L\}$ and any $\bx \in [0,1]^d$, then the optimal $\sbtau$ which minimizes the NMLL \eqref{eq:mtgp_mll} satisfies 
    $$\mPi \sbtau = \mT \mA \by.$$ 
\end{theorem}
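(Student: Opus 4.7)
The plan is to reduce this claim to the already-established Theorem~\ref{thm:optimal_tau_mtgp_mll} for standard multitask GPs, which states that the optimal $\sbtau$ satisfies a linear system whose coefficient matrix has $(\ell,\ell')$-entry $\bone_\ell^\intercal \mA_{\ell\ell'} \bone_{\ell'}$ and whose right-hand side is $\mT^\intercal \mA \by$. The theorem follows from Theorem~\ref{thm:optimal_tau_mtgp_mll} once I verify the single matrix identity
\begin{equation*}
    \mT^\intercal \mA \mT \;=\; \mPi.
\end{equation*}
(Here I am reading the right-hand side of the stated theorem as $\mT^\intercal \mA \by$ to match dimensions and to be consistent with equation~\eqref{eq:optimal_tau_mtgp_mll}.)

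The key step is to use the eigendecomposition $\tmK = \mV \tmLambda \overline{\mV}$ derived in this section, which gives $\mA = \tmK^{-1} = \mV \mGamma \overline{\mV}$ with $\mGamma = \tmLambda^{-1}$. Substituting yields $\mT^\intercal \mA \mT = \mT^\intercal \mV \mGamma \overline{\mV} \mT$, so the task reduces to computing $\overline{\mV} \mT$ and $\mT^\intercal \mV$. For $\overline{\mV}\mT$, block column $\ell$ vanishes outside block $\ell$ and equals $\overline{\mV_\ell} \bone_\ell$ there; since the zeroth column of $\mV_\ell$ equals $\bone_\ell/\sqrt{n_\ell}$ by \Cref{cond:fast_transform}, we have $\bone_\ell = \sqrt{n_\ell} \mV_\ell e_0$, and unitarity ($\overline{\mV_\ell} \mV_\ell = \mI$) gives $\overline{\mV_\ell} \bone_\ell = \sqrt{n_\ell} e_0$ with $e_0 \in \bbR^{n_\ell}$ the first standard basis vector.

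The main obstacle is the factor $\mT^\intercal \mV$, since the analogous simplification $\mV_\ell \bone_\ell = \sqrt{n_\ell} \mV_\ell^2 e_0$ does not collapse the way $\overline{\mV_\ell}\bone_\ell$ does. My plan to handle this is to bypass direct simplification and instead use orthonormality. Because $\mV_\ell$ is both unitary and symmetric, the row-orthogonality relation $\sum_i \overline{(\mV_\ell)_{0i}} (\mV_\ell)_{ji} = \delta_{0j}$ together with $(\mV_\ell)_{0i} = 1/\sqrt{n_\ell}$ (real) yields $\sum_i (\mV_\ell)_{ji} = \sqrt{n_\ell}\,\delta_{0j}$. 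This is exactly the statement that $\bone_\ell^\intercal \mV_\ell = \sqrt{n_\ell}\, e_0^\intercal$.

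Combining these two identities, the $(\ell,\ell')$ entry of $\mT^\intercal \mA \mT$ collapses to
\begin{equation*}
    \bone_\ell^\intercal \mV_\ell \,\mGamma_{\ell\ell'}\, \overline{\mV_{\ell'}} \bone_{\ell'} \;=\; \sqrt{n_\ell}\, e_0^\intercal \mGamma_{\ell\ell'} \sqrt{n_{\ell'}}\, e_0 \;=\; \sqrt{n_\ell n_{\ell'}}\,(\mGamma_{\ell\ell'})_{00},
\end{equation*}
which is precisely the $(\ell,\ell')$ entry of $\mPi$. Plugging this identification into the characterization from Theorem~\ref{thm:optimal_tau_mtgp_mll} yields the claimed linear system for $\sbtau$.
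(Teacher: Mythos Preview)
Your proposal is correct and follows essentially the same route as the paper: both reduce to Theorem~\ref{thm:optimal_tau_mtgp_mll} and then verify $\bone_\ell^\intercal \mA_{\ell\ell'}\bone_{\ell'} = \sqrt{n_\ell n_{\ell'}}(\mGamma_{\ell\ell'})_{00}$ via the block eigendecomposition $\mA_{\ell\ell'} = \mV_\ell \mGamma_{\ell\ell'}\overline{\mV_{\ell'}}$ together with the identities $\bone_\ell^\intercal \mV_\ell = \sqrt{n_\ell}\,\be_\ell^\intercal$ and $\overline{\mV_{\ell'}}\bone_{\ell'} = \sqrt{n_{\ell'}}\,\be_{\ell'}$. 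Your justification of these identities from the symmetry, unitarity, and constant-first-column properties in \Cref{cond:fast_transform} is a bit more explicit than the paper's, which simply asserts them; and your observation that the right-hand side should read $\mT^\intercal \mA \by$ (as in \eqref{eq:optimal_tau_mtgp_mll}) rather than $\mT\mA\by$ is well taken.
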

\begin{proof}
    Recall the optimal $\sbtau$ in \eqref{eq:optimal_tau_mtgp_mll} and that
    $$\bone_\ell^\intercal \mA_{\ell\ell'} = \bone_\ell^\intercal \mV_\ell \mGamma_{\ell\ell'} \overline{\mV_{\ell'}} = \sqrt{n_\ell} \be_\ell \mGamma_{\ell\ell'} \overline{\mV_{\ell'}} = \sqrt{n_\ell} (\mGamma_{\ell \ell'})_{0,:} \overline{\mV_{\ell'}},$$
    $$\bone_\ell^\intercal \mA_{\ell\ell'}\bone_{\ell'} = \bone_\ell^\intercal \mV_\ell \mGamma_{\ell\ell'} \overline{\mV_{\ell'}} \bone_{\ell'} = \sqrt{n_\ell n_{\ell'}}\be_\ell \mGamma_{\ell\ell'} \be_\ell = \sqrt{n_\ell n_{\ell'}} = \sqrt{n_\ell n_{\ell'}} (\mGamma_{\ell \ell'})_{00}$$
    where $\be_\ell=(1,0,0,\dots)^\intercal \in \bbR^{n_\ell}$ and $(\mGamma_{\ell \ell'})_{0,:}$ is the zeroth row of $\mGamma_{\ell \ell'}$. 
\end{proof}

\Subsection{HPOPT: Generalized Cross Validation (GCV)} \label{sec:gcv_fmtgp}

\begin{theorem} \label{thm:optimal_tau_fmtgp_gcv}
    When the prior mean are constants $\btau=(\tau_1,\dots,\tau_L)^\intercal$ at each level, i.e., $M(\ell,\bx) = \tau_\ell$ for any $\ell \in \{1,\dots,L\}$ and any $\bx \in [0,1]^d$, then the optimal $\sbtau$ which minimizes the GCV loss \eqref{eq:mtgp_gcv} satisfies 
    $$\mP \sbtau = \mT \mA^2 \by.$$ 
\end{theorem}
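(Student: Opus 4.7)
The plan is to mirror the proof of Theorem \ref{thm:optimal_tau_fmtgp_mll} almost verbatim, replacing $\mA=\tmK^{-1}$ with $\mB=\tmK^{-2}$ and $\mGamma=\tmLambda^{-1}$ with $\mUpsilon=\tmLambda^{-2}$ throughout. The starting point is the optimality system from Theorem \ref{thm:optimal_tau_mtgp_gcv} for the standard multitask GCV, which characterizes $\sbtau$ via the $L\times L$ linear system with left-hand matrix $\bigl(\bone_\ell^\intercal \mB_{\ell\ell'} \bone_{\ell'}\bigr)_{\ell,\ell'=1}^L$ and right-hand side $\mT^\intercal \mB \by$. So the task reduces to rewriting each of these quantities in terms of $\mUpsilon$ and $\mP$.

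First, I would substitute the block eigendecomposition $\mB_{\ell\ell'}=\mV_\ell\,\mUpsilon_{\ell\ell'}\,\overline{\mV_{\ell'}}$, which is immediate from $\mB=\mV\mUpsilon\overline{\mV}$ together with the block-diagonal form $\mV=\mathrm{diag}(\mV_1,\dots,\mV_L)$ and the definition of $\mUpsilon$ as the block partition of $\tmLambda^{-2}$. Next, I would invoke Condition \ref{cond:fast_transform}: since the zeroth column of $\mV_\ell$ is the constant vector $1/\sqrt{n_\ell}$ and $\mV_\ell$ is symmetric unitary, one has the key identities $\overline{\mV_{\ell'}}\bone_{\ell'}=\sqrt{n_{\ell'}}\,\be_{\ell'}$ and $\bone_\ell^\intercal \mV_\ell=\sqrt{n_\ell}\,\be_\ell^\intercal$, where $\be_\ell=(1,0,\dots,0)^\intercal\in\bbR^{n_\ell}$.

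Combining these two observations isolates the upper-left entry of each block, giving
\[
\bone_\ell^\intercal \mB_{\ell\ell'}\bone_{\ell'}
=\sqrt{n_\ell n_{\ell'}}\,\be_\ell^\intercal \mUpsilon_{\ell\ell'}\be_{\ell'}
=\sqrt{n_\ell n_{\ell'}}\,(\mUpsilon_{\ell\ell'})_{00},
\]
which is exactly the $(\ell,\ell')$ entry of $\mP$. Hence the left-hand matrix from Theorem \ref{thm:optimal_tau_mtgp_gcv} coincides with $\mP$. Since $\mB=\mA^2$, the right-hand side is $\mT^\intercal \mA^2 \by$ (matching the statement up to the standard dimensional reading of $\mT$ as the $N\times L$ level-summing matrix), and the claim follows.

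I do not anticipate a genuine obstacle: the argument is a routine adaptation of the NMLL case in Theorem \ref{thm:optimal_tau_fmtgp_mll}, and all needed structural facts ($\mV$ block diagonal, $\mUpsilon$ blocks inheriting the same diagonal-of-blocks shape as $\mGamma$, and the constant zeroth column of each $\mV_\ell$) have already been established. The only mildly delicate point is confirming that passing from $\mA$ to $\mB=\mA^2$ preserves the eigenbasis $\mV$, so that the block-by-block reduction goes through unchanged; this is immediate from $\mA=\mV\mGamma\overline{\mV}$ and $\overline{\mV}\mV=\mI$.
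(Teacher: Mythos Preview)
Your proposal is correct and takes essentially the same approach as the paper: the paper's proof is a single sentence stating that the argument is analogous to \Cref{thm:optimal_tau_fmtgp_mll} with $\mA,\mGamma$ replaced by $\mB,\mUpsilon$, which is precisely what you carry out in detail. Your explicit verification that $\bone_\ell^\intercal \mB_{\ell\ell'}\bone_{\ell'}=\sqrt{n_\ell n_{\ell'}}\,(\mUpsilon_{\ell\ell'})_{00}$ via the zeroth-column property of $\mV_\ell$ matches the paper's computation for the NMLL case line by line, and your remark about $\mT$ versus $\mT^\intercal$ correctly flags a dimensional typo present in both this statement and its NMLL counterpart.
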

\begin{proof}
    Recalling the optimal $\sbtau$ in \eqref{eq:optimal_tau_mtgp_gcv}, the proof is analogous to that of \Cref{thm:optimal_tau_fmtgp_mll} but with $\mA,\mGamma$ replaced by $\mB,\mUpsilon$ respectively.  
\end{proof}

\Subsection{Bayesian Cubature} \label{sec:fmtgp_bayesian_cubature}

Recall that for the product-form SI kernels in \Cref{def:si_kernels} or product-form DSI kernels in \Cref{def:dsi_kernels} that 
$$\int_{[0,1]^d} Q(\bx,\bx') \D \bx' = \int_{[0,1]^d} \int_{[0,1]^d} Q(\bx,\bx') \D \bx' \D \bx = \gamma$$
for any $\bx \in [0,1)^d$ where $\gamma \in \bbR_{>0}$ is the global scaling parameter. 

\begin{theorem}
    Following the notation in \Cref{sec:mtgp_bayesian_cubature}, we have 
    \begin{align*}
        \mSigma &= \gamma \mR - \gamma^2 \mR \mPi \mR \in \bbR^{L \times L}.
    \end{align*}
\end{theorem}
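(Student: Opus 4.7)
The plan is to reduce the two building blocks of $\mSigma = \approxhat{\mE} - \tmE^\intercal \tmK^{-1} \tmE$ to expressions in $\gamma$, $\mR$, and $\mPi$ by exploiting the product structure $K((\ell,\bx),(\ell',\bx')) = R(\ell,\ell')Q(\bx,\bx')$ together with the integration identities for $Q$ recalled just before the statement, namely $\int_{[0,1]^d} Q(\bx,\bx')\D\bx' = \gamma$ for all $\bx$ (and, by symmetry of $Q$, for the other argument), and the full double integral also equals $\gamma$.

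First I would handle $\approxhat{\mE}$: pulling $R(\ell,\ell')$ out of both integrals gives $\approxhat{\mE}_{\ell\ell'} = \gamma R(\ell,\ell')$, so $\approxhat{\mE} = \gamma \mR$. Next I would identify $\tmE = \gamma\,\mT\mR$. The $\ell$-th column of $\tmE$ is $\int \bK(\ell,\bx)\D\bx \in \bbR^N$, whose $\ell'$-th block (of length $n_{\ell'}$) is $R(\ell,\ell')\bigl(\int Q(\bx,\bx_{\ell' i})\D\bx\bigr)_{i=0}^{n_{\ell'}-1} = \gamma R(\ell,\ell')\bone_{\ell'}$. Comparing this block-by-block with $(\gamma\,\mT\mR)_{:,\ell}$, whose $\ell'$-th block is exactly $\gamma R(\ell',\ell)\bone_{\ell'} = \gamma R(\ell,\ell')\bone_{\ell'}$ by symmetry of $\mR$, gives $\tmE = \gamma\,\mT\mR$.

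Substituting these two identities yields
\begin{equation*}
\mSigma = \gamma\mR - \gamma^2\,\mR^\intercal\,\mT^\intercal \tmK^{-1}\mT\,\mR = \gamma\mR - \gamma^2\,\mR\,(\mT^\intercal \mA\, \mT)\,\mR,
\end{equation*}
since $\mR$ is symmetric. It therefore remains to show $\mT^\intercal \mA\,\mT = \mPi$. This is precisely the calculation already performed inside the proof of Theorem~\ref{thm:optimal_tau_fmtgp_mll}: writing $\mA_{\ell\ell'} = \mV_\ell \mGamma_{\ell\ell'}\overline{\mV_{\ell'}}$ and using property (3) of Condition~\ref{cond:fast_transform} (the zeroth column of $\mV_m$ equals $1/\sqrt{2^m}\,\bone$) gives $\bone_\ell^\intercal \mA_{\ell\ell'}\bone_{\ell'} = \sqrt{n_\ell n_{\ell'}}\,(\mGamma_{\ell\ell'})_{00}$, which is the $(\ell,\ell')$ entry of $\mPi$ by definition.

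The main obstacle is really just the bookkeeping in the second step, carefully tracking the block structure of $\tmE$ and making sure the symmetry of $\mR$ (and of $Q$, needed to swap the argument in the integral) is applied correctly so that $\tmE = \gamma\,\mT\mR$ rather than $\gamma\,\mT\mR^\intercal$; all the other work reduces to citing identities already established earlier in the chapter.
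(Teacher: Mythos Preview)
Your proof is correct and follows essentially the same approach as the paper. The paper carries out the block-matrix computation of $\tmE^\intercal \tmK^{-1}\tmE$ explicitly, whereas you package it more cleanly via the factorization $\tmE=\gamma\,\mT\mR$ and then invoke the identity $\mT^\intercal\mA\,\mT=\mPi$ already established in the proof of Theorem~\ref{thm:optimal_tau_fmtgp_mll}; the underlying calculation---reducing $\bone_\ell^\intercal\mA_{\ell\ell'}\bone_{\ell'}$ to $\sqrt{n_\ell n_{\ell'}}(\mGamma_{\ell\ell'})_{00}$ through the fast-transform structure---is identical in both.
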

\begin{proof}
    \begin{align*}
        \tmE =& \begin{pmatrix} \mR_{11} \bone_1 & \cdots & \mR_{1L} \bone_L \\ \vdots & \ddots & \vdots \\ \mR_{L1} \bone_1 & \cdots & \mR_{LL} \bone_L \end{pmatrix} q \in \bbR^{N \times L} \\
        \approxhat{\mE} &= \gamma^2 \mR \in \bbR^{L \times L} \\
        \tmE^\intercal \tmK^{-1} \tmE =& q^2 \begin{pmatrix} \mR_{11} \bone_1^\intercal & \cdots & \mR_{1L} \bone_L^\intercal \\ \vdots & \ddots & \vdots \\ \mR_{L1} \bone_1^\intercal & \cdots & \mR_{LL} \bone_L^\intercal \end{pmatrix} \begin{pmatrix} \mV_1 & & \\ & \ddots & \\ & & \mV_L \end{pmatrix} \begin{pmatrix} \mGamma_{11} & \cdots & \mGamma_{1L} \\ \vdots & \ddots & \vdots \\ \mGamma_{L1} & \ddots & \mGamma_{LL} \end{pmatrix} \\
        &\begin{pmatrix} \overline{\mV_1} & & \\ & \ddots & \\ & & \overline{\mV_L} \end{pmatrix} \begin{pmatrix} \mR_{11} \bone_1 & \cdots & \mR_{1L} \bone_1 \\ \vdots & \ddots & \vdots \\ \mR_{L1} \bone_L & \cdots & \mR_{LL} \bone_L \end{pmatrix} \\
        =& q^2  \begin{pmatrix} \mR_{11} \be_1^\intercal & \cdots & \mR_{1L} \be_L^\intercal \\ \vdots & \ddots & \vdots \\ \mR_{L1} \be_1^\intercal & \cdots & \mR_{LL} \be_L^\intercal \end{pmatrix} \begin{pmatrix} n_1 \mGamma_{11} & \cdots & \sqrt{n_1 n_L} \mGamma_{1L} \\ \vdots & \ddots & \vdots \\ \sqrt{n_L n_1} \mGamma_{L1} & \ddots & n_L \mGamma_{LL} \end{pmatrix} \\
        &\begin{pmatrix} \mR_{11} \be_1 & \cdots & \mR_{1L} \be_1 \\ \vdots & \ddots & \vdots \\ \mR_{L1} \be_l & \cdots & \mR_{LL} \be_L \end{pmatrix} \\
        =& q^2 \mR \mPi \mR
    \end{align*}
    where again $\be_\ell=(1,0,0,\dots)^\intercal \in \bbR^{n_\ell}$.
\end{proof}

\Section{Derivative-Informed GPs} \label{sec:gps_deriv_informeds}


Here we discuss GPs with derivative information \cite[Chapter 9.4]{rasmussen.gp4ml}, which may be seen as a special case of optimal recovery in Hilbert spaces as discussed in \cite[Chapter 16]{wendland2004scattered}. \Cref{sec:background_gps_derivatives} provides additional background on GPs which incorporate derivatives. Our development of derivative informed fast GPs using SI/DSI kernels is novel in the literature. 

\Subsection{Background on Derivatives of GPs}

\begin{definition}
    A process $f$ is mean squared differentiable if the following limit exists:
    $$\lim_{h_1,h_2 \to 0} \bbE\left[\left(\frac{f(x+h_1)-f(x)}{h_1} - \frac{f(x+h_2)-f(x)}{h_2}\right)^2\right].$$
    When the limits are replaced by right or left limits the process is said to be right or left mean squared differentiable respectively. 
\end{definition}

\begin{lemma} \label{lemma:msdiff_kernel_conditions}
    If a SPSD kernel $K: \calX \times \calX \to \bbR$ satisfies $K(x,x) = K(0,0)$ for all $x \in \calX$ with $\calX \subseteq \bbR$, then a Gaussian process $f \sim \calN(0,K)$ is mean squared differentiable if and only if both the following limit exists:
    $$\lim_{h \to 0} \frac{K(x+h,x)-K(x,x)}{h^2}\qquad\text{and}$$
    $$\lim_{h_1,h_2 \to 0} \frac{K(x+h_1,x+h_2)-K(x+h_1,x) - K(x+h_2,x) + K(x,x)}{h_1h_2}.$$
    Right and left mean squared differentiability follow from replacing limits by right and left limits respectively. 
\end{lemma}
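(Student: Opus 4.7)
The plan is to evaluate $E(h_1, h_2) := \bbE\left[\left(\frac{f(x+h_1)-f(x)}{h_1} - \frac{f(x+h_2)-f(x)}{h_2}\right)^2\right]$ in closed form and identify its components as precisely the two limits in the statement. First I would expand the square and use $\bbE[f(u)f(v)] = K(u,v)$ together with the hypothesis $K(u,u)=K(0,0)$ to simplify the self-terms via $\bbE[(f(x+h)-f(x))^2] = 2K(0,0) - 2K(x+h,x) = -2(K(x+h,x) - K(x,x))$, and (using symmetry $K(x,x+h_2)=K(x+h_2,x)$) to write the cross term as $\bbE[(f(x+h_1)-f(x))(f(x+h_2)-f(x))] = K(x+h_1,x+h_2) - K(x+h_1,x) - K(x+h_2,x) + K(x,x)$. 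Dividing by the appropriate $h_i$'s yields
$$E(h_1,h_2) = -2 G(h_1) - 2 G(h_2) - 2 H(h_1,h_2),$$
where $G(h)$ and $H(h_1,h_2)$ are the two quotients whose limits appear in the lemma. This decomposition reduces the equivalence to a question about individual versus joint limits of $G$ and $H$.

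The backward direction is then immediate: if $A := \lim_{h \to 0} G(h)$ and $B := \lim_{h_1,h_2 \to 0} H(h_1,h_2)$ both exist, then $E(h_1,h_2) \to -4A - 2B$ jointly, establishing mean squared differentiability. Note that since $E \geq 0$ and $E(h,h) \equiv 0$, this joint limit must equal $0$, forcing $B = -2A$, which recovers the familiar identity $\bbE[f'(x)^2] = \lim_h (K(x+h,x)-K(x,x))/h^2$ up to sign.

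For the forward direction, suppose $\lim_{h_1,h_2 \to 0} E(h_1,h_2)$ exists. Evaluating along the diagonal $h_1 = h_2 = h$ gives $E(h,h) = 0$, and joint continuity of the limit forces the limit to be $0$. Hence $\{(f(x+h)-f(x))/h\}_{h \neq 0}$ is Cauchy in $L^2$, so there exists a random variable $f'(x) \in L^2$ with $(f(x+h)-f(x))/h \to f'(x)$ in $L^2$ as $h \to 0$. Continuity of the $L^2$-inner product then gives
$$-2 G(h) = \left\lVert \tfrac{f(x+h)-f(x)}{h}\right\rVert_{L^2}^2 \longrightarrow \lVert f'(x)\rVert_{L^2}^2,$$
$$H(h_1,h_2) = \left\langle \tfrac{f(x+h_1)-f(x)}{h_1},\, \tfrac{f(x+h_2)-f(x)}{h_2}\right\rangle_{L^2} \longrightarrow \lVert f'(x)\rVert_{L^2}^2,$$
so both individual limits exist (with $A = -\tfrac{1}{2}\lVert f'(x)\rVert^2$ and $B = \lVert f'(x)\rVert^2$).

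The main technical point is the forward direction: extracting existence of two separate limits from the joint existence of $E$. The trick is that $E \geq 0$ plus $E(h,h) = 0$ forces the joint limit of $E$ to be $0$, which is exactly the $L^2$-Cauchy property; the $L^2$-limit $f'(x)$ then anchors both $G$ and $H$ by Hilbert-space continuity. The one-sided (right/left) version follows by the identical argument restricted to positive (resp.\ negative) increments, since the diagonal identity and positivity still force the one-sided joint limit of $E$ to vanish and produce a one-sided $L^2$-Cauchy family.
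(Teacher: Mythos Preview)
Your proof is correct and begins with exactly the same algebraic decomposition as the paper: expand the square, use $\bbE[f(u)f(v)]=K(u,v)$ and the hypothesis $K(x,x)=K(0,0)$ to obtain $E(h_1,h_2)=-2G(h_1)-2G(h_2)-2H(h_1,h_2)$. The paper's proof stops there, treating the biconditional as self-evident from the decomposition.

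Where you go further is the forward direction, which the paper does not argue. Your observation that $E\ge 0$ and $E(h,h)\equiv 0$ force the joint limit of $E$ to vanish, turning the difference quotients into an $L^2$-Cauchy family, is a clean way to extract the existence of $\lim G$ and $\lim H$ separately via continuity of the Hilbert-space norm and inner product. This is a genuine addition: without it, one would need to explain why joint convergence of $-2G(h_1)-2G(h_2)-2H(h_1,h_2)$ implies convergence of $G$ and $H$ individually, which is not automatic for arbitrary functions. Your route makes this rigorous and also recovers the identity $B=-2A$ as a byproduct.
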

\begin{proof}
    \begin{align*}
        \bbE&\left[\left(\frac{f(x+h_1)-f(x)}{h_1} - \frac{f(x+h_2)-f(x)}{h_2}\right)^2\right] \\
        &= \bbE\left[\left(\frac{f(x+h_1)-f(x)}{h_1}\right)^2\right] + \bbE\left[\left(\frac{f(x+h_2)-f(x)}{h_2}\right)^2\right] \\
        &- 2 \bbE\left[\frac{f(x+h_1)-f(x)}{h_1} \cdot \frac{f(x+h_2)-f(x)}{h_2}\right].
    \end{align*}
    Now, 
    \begin{align*}
        \lim_{h \to 0} \bbE\left[\left(\frac{f(x+h)-f(x)}{h}\right)^2\right] &= \lim_{h \to 0} \frac{\bbE\left[f^2(x+h) - 2 f(x+h)f(x) + f^2(x)\right]}{h^2} \\
        &= \lim_{h \to 0} \frac{K(x+h,x+h) - 2 K(x+h,x) + K(x,x)}{h^2} \\
        &= 2\lim_{h \to 0} \frac{K(x,x)-K(x+h,x)}{h^2},
    \end{align*}
    and 
    \begin{align*}
        \lim_{h_1,h_2 \to 0} & \bbE\left[\frac{f(x+h_1)-f(x)}{h_1} \cdot \frac{f(x+h_2)-f(x)}{h_2}\right] \\
        &= \lim_{h_1,h_2 \to 0} \bbE\left[\frac{f(x+h_1)f(x+h_2)-f(x+h_1)f(x)-f(x+h_2)f(x)+f^2(x)}{h_1h_2}\right] \\
        &= \lim_{h_1,h_2 \to 0} \frac{K(x+h_1,x+h_2)-K(x+h_1,x) - K(x+h_2,x) + K(x,x)}{h_1h_2}
    \end{align*}
\end{proof}

If a process $f$ is mean squared differentiable, we will denote its derivative process as $f^{(1)}$. If $f^{(1)}$ is mean squared differentiable, then we will denote its derivative by $f^{(2)}$, and so on. If it exists, we will say $f^{(\beta)}$ is the $\beta^{th}$ derivative of the process $f$. For multidimensional domains $\calX \subseteq \bbR^d$ and $f: \calX \to \bbR$, as we assume going forward, we will denote $f^{(\bbeta)}$, with $\bbeta \in \bbN_0^d$, to be the process taking $\beta_j$ derivatives in dimension $j \in \{1,\dots,d\}$. 

When a function is mean squared differentiable, then, as differentiation is a linear operator, we may apply the derivatives directly to the kernel. The squared exponential kernel \Cref{def:se_kernel} is infinitely mean squared differentiable. Discussions on the differentiability of other kernels are readily available, see for example \cite{fasshauer.meshfree_approx_methods_matlab,banerjee2003directional}. When GP derivatives exist, the required kernel derivatives may often be readily obtained using automatic differentiation.

\Subsection{Connection to Multitask GPs}

Suppose we have $L$ compatible derivative multi-indices $\bbeta_1,\dots,\bbeta_L \in \bbN_0^d$, and we observe at level $\ell \in \{1,\dots,L\}$ derivative observations
$$\by_\ell = f^{(\bbeta_\ell)} + \bvarepsilon_\ell.$$
As with the multitask GPs discussed in \Cref{sec:mtgps}, we will assume IID zero-mean Gaussian noise $\bvarepsilon_\ell$ with common noise variance (nugget) $\xi_\ell >0$. For computational purposes, we will also assume $\bvarepsilon_1,\dots,\bvarepsilon_L$ are independent as was the case for multitask GPs. 

To construct a GP over $f$ which incorporates derivative observations, we must define a  prior mean $M:\calX \to \bbR$ and a prior covariance kernel $K:\calX \times \calX \to \bbR$ which are $\bbeta_\ell$ times mean squared differentiable for every $\ell \in \{1,\dots,L\}$. For non-zero prior means, differentiability still holds as long as the prior mean is sufficiently differentiable. GPs with derivatives can be seen as a special case of the multitask GPs in \Cref{sec:mtgps} with
\begin{align*}
    M(\ell,\bx) &:= M^{(\bbeta_\ell)}(\bx), \\
    K((\ell,\bx),(\ell',\bx)) &:= K^{(\bbeta_\ell,\bbeta_{\ell'})}(\bx,\bx')
\end{align*} 
for all $\bx,\bx' \in \calX$ and $\ell,\ell' \in \{1,\dots,L\}$. 

\Subsection{Fast Derivative-Informed GPs}

The following corollary concerns derivatives of GPs with our product-form shift-invariant (SI) and digitally-shift-invariant (DSI) kernels which have $\calX = [0,1]^d$. As these kernels are of product-form, their per-dimension differentiability holds for their multidimensional forms. 

\begin{corollary} \label{corollary:msdiff_si_dsi_kernels}
    Suppose the GP prior mean $M:[0,1]^d \to \bbR$ is sufficiently mean squared differentiable, e.g., the constant means we considered earlier.
    \begin{enumerate}
        \item For the product-form SI kernel $K$ in \Cref{def:si_kernels} built from one-dimensional kernels $\tK_{\alpha_j}$  with smoothness parameters $\balpha \in \bbR_{>1/2}^d$, the GP $f \sim \mathrm{GP}(M,K)$ is mean squared differentiable $\bbeta \in \bbN_0^d$ times, i.e., $f^{(\bbeta)}$ exists, so long as $2\balpha-\bbeta>1$ elementwise. The derivative forms are described in \Cref{def:dsi_kernels}.  
        \item For the product-form DSI kernel $K$ in \Cref{def:dsi_kernels} built from one-dimensional kernels $\ddK_{\alpha_j,0,0}$ with smoothness parameters $\balpha \in \{2,3,4\}^d$, the GP $f \sim \mathrm{GP}(M,K)$ is not mean squared differentiable. However, if $\balpha \in \{3,4\}^d$, then the posterior mean is right differentiable on all dyadic rationals $x \in \bbB$ with derivative given in \Cref{lemma:b2_alpha234_digital_dright_derivs}.  
        \item For the product-form DSI kernel $K$ in \Cref{def:dsi_kernels} built from one-dimensional kernels $\dddK_{\alpha_j}$  with smoothness parameters $\balpha \in \bbR_{>0}^d$, the GP $f \sim \mathrm{GP}(M,K)$ is mean squared differentiable $\bbeta \in \bbN_0^d$ times, i.e., $f^{(\bbeta)}$ exists, so long as $\balpha-\bbeta>2$ elementwise. All derivatives, when they exist, are equal to $0$, see \Cref{thm:derivs_dddK}. 
    \end{enumerate}
\end{corollary}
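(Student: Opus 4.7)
The plan is to reduce each of the three assertions to one-dimensional statements already proved in Section~\ref{sec:fast_kernel_methods}. Since all of the kernels in question are of product form (\Cref{def:si_kernels,def:dsi_kernels}), the mean squared differentiability with respect to coordinate $j$ at a point $\bx$ depends only on the one-dimensional kernel in that coordinate: the remaining factors evaluate to constants along the one-dimensional increment, so applying the multivariate generalization of Lemma~\ref{lemma:msdiff_kernel_conditions} in direction $\be_j$ reduces to checking its two limits for the univariate kernel $K_{\alpha_j}$ alone.

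For Part~1, I would induct on $|\bbeta|$. At each step the derivative process $f^{(\bbeta')}$ with $\bbeta' \leq \bbeta$ has covariance kernel $K^{(\bbeta',\bbeta')}$, which by Theorem~\ref{thm:K_lattice_beta_kappa} is again a product of SI kernels, now with reduced smoothness parameters $\alpha_j - \beta'_j$ in each dimension. Theorem~\ref{thm:msdiff_Kcheckhat} then handles taking the next derivative in dimension $j$ under the stated smoothness condition, yielding the elementwise conclusion. For Part~2, the same dimension-wise reduction applies, but Theorem~\ref{thm:msdiff_K} shows that the first of the two limits in Lemma~\ref{lemma:msdiff_kernel_conditions} blows up to infinity for each $\alpha_j \in \{2,3,4\}$, so the prior process cannot be mean squared differentiable in any such coordinate. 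The posterior mean claim is then a separate, purely deterministic statement: from \eqref{eq:post_mean}, the posterior mean is a finite linear combination of kernel sections $\bx \mapsto K(\bx,\bx_i)$, so its right differentiability at a dyadic rational in dimension $j$ follows immediately from Lemma~\ref{lemma:b2_alpha234_digital_dright_derivs} applied factor-by-factor.

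For Part~3, Theorem~\ref{thm:msdiff_Kdotdot} shows that when $\alpha>2$ both limits in Lemma~\ref{lemma:msdiff_kernel_conditions} equal zero, so the first-order derivative process has identically zero covariance and must therefore almost surely equal its prior mean. Since the stated hypothesis is that the (constant) prior mean is sufficiently mean squared differentiable, its derivative is identically zero, which gives the stated conclusion. Iterating this reasoning, together with the product structure and the reduction of the smoothness parameter at each differentiation step, yields both the existence claim under $\balpha - \bbeta > 2$ elementwise and the fact that all such derivatives vanish.

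The main obstacle is executing the product structure argument carefully across dimensions and iterations. For Part~1 this amounts to writing $K^{(\bbeta',\bbeta')}$ in product form via Theorem~\ref{thm:K_lattice_beta_kappa} and verifying that the reduced-smoothness Korobov kernel still satisfies the hypotheses of Theorem~\ref{thm:msdiff_Kcheckhat} at the next inductive step, where the smoothness condition must be traced carefully elementwise. For Part~2 the delicate point is to cleanly separate the non-differentiability of the prior process, which follows from the blow-up of the first limit, from the right differentiability of the deterministic posterior mean, which only needs pointwise right differentiability of $\ddK_{\alpha_j,0,0}$ at the relevant dyadic rationals. For Part~3 the subtlety is identifying the effective derivative kernel of $\dddK_\alpha$ and tracking its smoothness parameter through repeated differentiation, so that the induction on $|\bbeta|$ remains valid as long as $\balpha - \bbeta > 2$ holds elementwise.
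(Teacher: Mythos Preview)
Your proposal is correct and follows essentially the same route as the paper's proof, which simply cites \Cref{lemma:msdiff_kernel_conditions} together with \Cref{thm:msdiff_Kcheckhat}, \Cref{thm:msdiff_K}, and \Cref{thm:msdiff_Kdotdot} for the three parts respectively. You supply more detail than the paper does---in particular the explicit induction on $|\bbeta|$ via \Cref{thm:K_lattice_beta_kappa} for Part~1, and the decomposition of the posterior mean into kernel sections for the right-differentiability claim in Part~2---but these are elaborations of the same underlying argument rather than a different approach.
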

\begin{proof}
    All claims use \Cref{lemma:msdiff_kernel_conditions}. The claim regarding SI kernels follows from applying \Cref{thm:msdiff_Kcheckhat}. The claim regarding the first DSI kernel built from $\ddK_{\alpha_j,0,0}$ follows from \Cref{thm:msdiff_K}. The claim regarding the second DSI kernel built from $\dddK_\alpha$ follows from \Cref{thm:msdiff_Kdotdot}. 
\end{proof}

All the technology developed in \Cref{sec:fmtgps} for fast multitask GPs immediately applies to fast derivative-informed GPs. The only requirement is that the derivative kernels $K^{(\bbeta_\ell,\bbeta_{\ell'})}$ are still shift-invariant (SI) if we are using lattices, or digitally-shift-invariant (DSI) if we are using digital nets. \Cref{def:si_kernels} shows this is in fact the case for SI kernels. We have yet to find practically useful DSI kernels which are also mean squared differentiable.


\Section{Numerical Experiments} \label{sec:gps_numerical_experiments}

The numerical experiments use our \texttt{FastGPs} package (\url{https://alegresor.github.io/fastgps/}) version 1.0 or later, which relies on both the \texttt{QMCPy} and \texttt{PyTorch} packages. \texttt{FastGPs} is readily installed using the command \texttt{pip install -U fastgps}, after which we may import our required packages:
\lstinputlisting[style=Python]{snippets_fgps/imports.py}
\noindent While we do not provide code snippets here, extensive documentation and examples are available at the project URL above. \Cref{sec:FastBayesianMLQMC} contains additional experiments using the fast Bayesian cubature support in \texttt{FastGPs} to enable multilevel Quasi-Monte Carlo without replications. 

Our experiments use the $\alpha=2$ SI product kernel in \Cref{def:si_kernels} and $\alpha=4$ DSI product kernel in \Cref{def:dsi_kernels} with higher-order univariate kernels $\ddK_{\alpha,0,0}$. LD point sets are generated using \texttt{QMCPy}'s default generating vector for lattices and default generating matrices for digital nets. We found our results to be robust to these choices. 
All optimizations of the MLL were run to convergence using the resilient back propagation (Rprop) algorithm \cite{riedmiller1993direct} with learning rate $0.1$. We found optimization of the SI and DSI kernels converged in orders of magnitude fewer iterations than for squared exponential (SE) kernels. 
Accuracy will be measured by the $L_2$ relative error 
\begin{equation}
    \sqrt{\int_{[0,1]^d} \llvert \sf(\bx)-\bbE[f(\bx) | \by] \rrvert^2 \D \bx / \int_{[0,1]^d} \llvert \sf(\bx) \rrvert^2 \D \bx}
\end{equation}
between the true benchmark functions $\sf$ and the posterior means $\bbE[f(\bx) | \by]$. This is approximated numerically using the $\ell_2$ norm and LD Halton points. All computations were performed on the CPUs of a 2023 MacBook Pro with an M3 Max processor.

We are interested in comparing zero-mean GPs with SE, SI, and DSI kernels both with and without gradient information. We note that our benchmarks are smooth, two-sided differentiable functions which do not necessarily align with our fast GP modeling assumptions. Rather than benchmarking only on functions for which fast GPs are theoretically justified, we are instead interested in seeing how our methods apply to more realistic functions where our kernel assumptions may not hold.  

\Subsection{Low-dimensional functions without derivatives} 

We begin in the low-dimensional setting without derivative information and compare across SE GPs with lattice points, fast SI GPs with lattice points, and fast DSI GPs with digital nets. \Cref{fig:1d_gps} considers a set of visually diverse $d=1$ curves. \Cref{fig:2d_gp_error} considers the $d=2$-dimensional Ackley function \cite{ackley2012connectionist}. Notice the periodicity of SI predictions and the discontinuity of DSI predictions as reflected in the kernels.

\begin{figure*}[!ht]
    \centering
    \includegraphics[width=.95\linewidth]{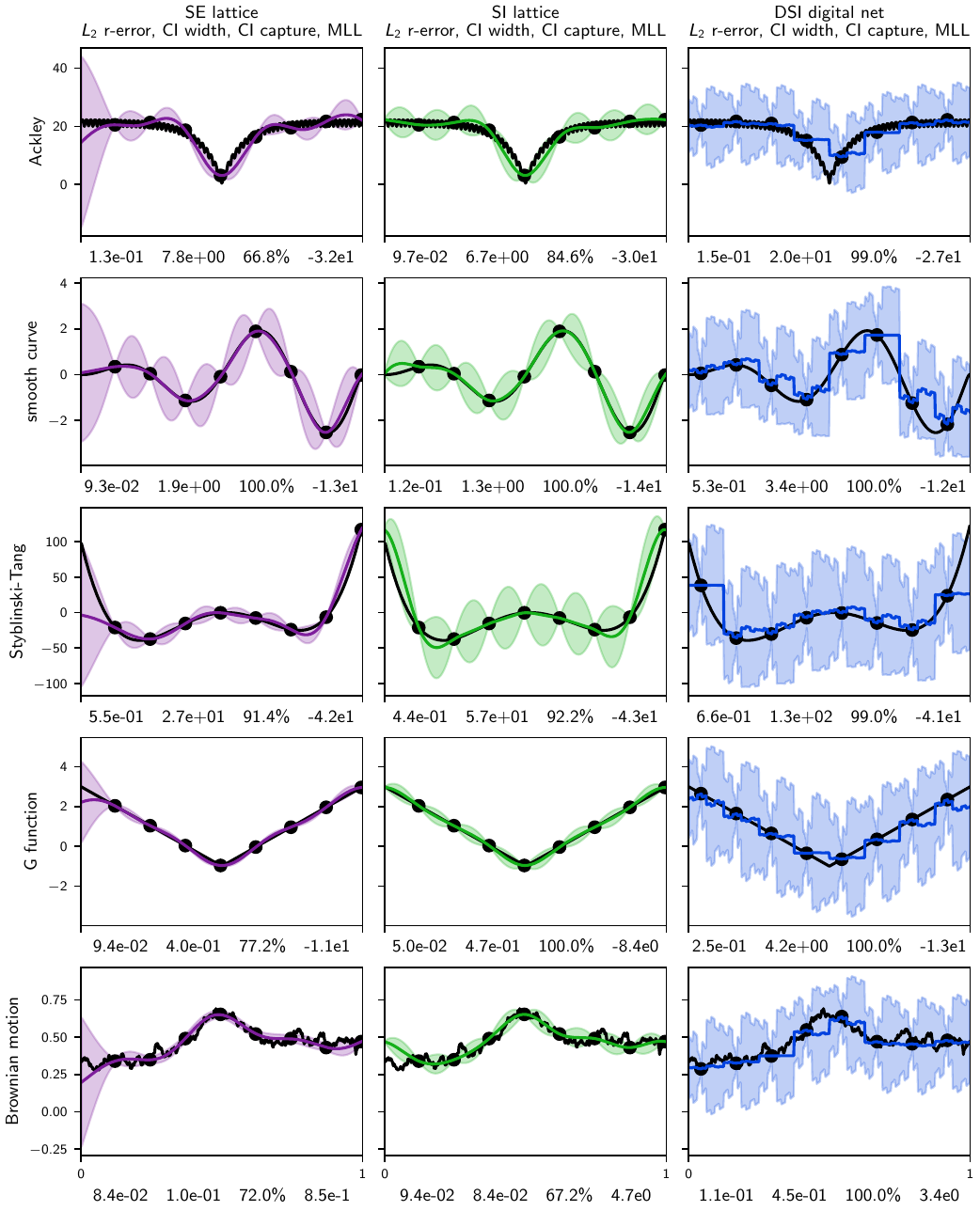}
    \caption{Comparison of Gaussian process (GP) methods on visually diverse example functions in dimension $d=1$. Black lines are the true functions, colored lines are the posterior means, and shaded regions show 99\% confidence intervals (CIs). The figure lists the $L_2$ relative errors, the average CI width, the proportion of inputs for which the  CI captures the true function, and the optimized MLL.}
    \label{fig:1d_gps}
\end{figure*}

For the $d=1$ examples, similar accuracies and MLLs were achieved by all three methods. SE and SI GPs yield similar $99\%$ confidence intervals (CIs)
\begin{equation}
    \bbE[f(\bx) | \by] \pm 2.58 \sqrt{\bbV[f(\bx) | \by]} 
\end{equation}
as evidence by their similar (average) CI widths and CI capture proportions. DSI GPs are often less confident in their predictions, but the CI capture proportions always match the desired 99\% confidence. SE GPs tend to be overconfident as evidenced by the low capture fractions. SI GPs tend to yield moderately sized CIs with moderate coverage while DSI GPs give large CIs with high coverage. 

For the $d=2$ Ackley function fit to $n=4096$ points, our fast SI and DSI GP methods achieved slightly better accuracy than the classic SE GP method and were around $2000$ and $1000$ times faster respectively. However, the optimized MLL was over $6000$ for SE GP and only around $-0.004$ for SI and DSI GPs. 

\Subsection{Benchmarks with and without gradients} 

Next, \Cref{tab:benchmarks} shows accuracy, fitting times, and marginal log-likelihoods (MLLs) for GPs using $n=1024$ points both with and without gradient information on a number of synthetic benchmarks from the Virtual Library of Simulation Experiments \cite{VLSE}. For the SI kernel, the periodizing baker transform $\phi(x) = 1-2\lvert x-1/2 \rvert$ was used to improve estimates for the gradient-informed Branin and Hartmann functions. 

\begin{table}[!ht]
    \caption{Comparison of Gaussian process (GP) error, fitting time, and MLL estimates with $n=1024$ points on benchmark functions: Ackley $d=1$, Branin $d=2$, Camel $d=2$, Styblinski--Tang (StyTang) $d=2$, and Hartman $d=6$. SE is the standard GP using lattice points. SI is the fast GP with lattices. DSI is the fast GP with digital nets. $f$ indicates the GP was fit to function values only, while $(f,\nabla f)$ indicates the GP was fit with function and gradient information.}
    \centering
    \begin{tabular}{r|cc|cc|c}
        & \multicolumn{5}{c}{$L_2$ relative error} \\
        & \multicolumn{2}{c|}{SE} & \multicolumn{2}{c|}{SI} & \multicolumn{1}{c}{DSI} \\
        & $f$ & $(f, \nabla f)$ & $f$ & $(f, \nabla f)$ & $f$ \\
        \hline
            Ackley& 2.8e-4 & 3.6e-1 & 1.0e-4 & 8.8e-5 & 1.9e-1 \\
            Branin& 6.8e-6 & 3.5e-7 & 1.7e-1 & 2.4e-3 & 1.8e-2 \\
            Camel & 2.3e-5 & 2.5e-6 & 2.1e-2 & 1.4e-1 & 4.3e-2 \\
            StyTang & 6.6e-5 & 3.3e-6 & 3.2e-2 & 3.5e-1 & 5.0e-2 \\
            Hartmann & 3.3e-2 & 7.8e-3 & 5.7e-2 & 3.7e-2 & 8.0e-2 \\
        \hline 
        \hline 
        & \multicolumn{5}{c}{time per optimization step} \\
        & \multicolumn{2}{c|}{SE} & \multicolumn{2}{c|}{SI} & \multicolumn{1}{c}{DSI} \\
        & $f$ & $(f, \nabla f)$ & $f$ & $(f, \nabla f)$ & $f$ \\
        \hline
            Ackley& 5.3e-2 & 2.7e-1 & 5.7e-4 & 1.4e-3 & 1.1e-3 \\
            Branin& 5.2e-2 & 1.0e0 & 5.2e-4 & 2.8e-3 & 1.1e-3 \\
            Camel& 4.7e-2 & 1.0e0 & 5.5e-4 & 2.8e-3 & 1.1e-3 \\
            StyTang& 4.9e-2 & 1.1e0 & 5.2e-4 & 2.8e-3 & 1.1e-3 \\
            Hartmann& 8.5e-2 & 2.2e1 & 7.5e-4 & 1.3e-2 & 1.1e-3 \\
        \hline 
        \hline 
        & \multicolumn{5}{c}{MLL (Marginal Log-Likelihood)} \\
        & \multicolumn{2}{c|}{SE} & \multicolumn{2}{c|}{SI} & \multicolumn{1}{c}{DSI} \\
         & $f$ & $(f, \nabla f)$ & $f$ & $(f, \nabla f)$ & $f$ \\
        \hline
            Ackley& 8.7e2 & -1.9e7 & 5.0e2 & -4.7e3 & -1.8e3 \\
            Branin & 3.4e3 & 1.3e4 & -4.8e3 & -1.4e4 & -2.8e3 \\
            Camel & 3.3e3 & 1.0e4 & -1.8e3 & -1.4e4 & -1.0e3 \\
            StyTang & 3.4e3 & 1.1e4 & -2.4e3 & -1.8e4 & -8.5e2 \\
            Hartmann & 1.1e3 & 6.7e3 & 6.5e2 & -5.5e3 & 6.2e2 \\
        \hline 
    \end{tabular}
    \label{tab:benchmarks}
\end{table}

Here we found results to be highly dependent on the benchmark. On the simpler $d=2$ Branin, Camel, and Styblinski--Tang (StyTang) benchmarks, SE was orders of magnitude more accurate than SI and DSI GPs both with and without gradient information. However, on the highly oscillatory $d=1$ Ackley function, SI is able to achieve comparable accuracy, and SI was able to efficiently utilize gradient information while SE cannot. 
Moreover, on the $d=6$ Hartmann example, all three methods attain similar accuracy without gradient information. SI was not able to utilize gradient information on either the Camel or StyTang benchmarks. This lack of efficiency may be attributed to the benchmarks not satisfying the assumptions of fast GPs as evidenced by the lower MLLs for SI/DSI GPs compared to SE GPs.

For GPs on $n=1024$ points without gradients, the SI GP was consistently around $100$ times faster than the SE GP. Including gradients at each of the $n=1024$ points makes SI over $350$ times faster than SE on $d=2$ examples and almost $1700$ times as fast on the $d=6$ Hartmann function. DSI typically takes around twice as long as SI, partially due to the requirement to compute the DSI sum term $S$ in \eqref{eq:K4_sum_term}. The $\alpha=2$ and $\alpha=3$ DSI kernels are faster to evaluate and did not significantly degrade accuracy in our testing.

\Subsection{Accuracy vs timing vs dimension for StyTang} 

Finally, in \Cref{fig:speed_accuracy_comp_full} we track effects of dimension $d \in \{1,2,4,8,16\}$ and sample size $n$ on the accuracy and time for the Styblinski--Tang benchmark \cite{styblinski1990experiments}.  We found that as $d$ increases the SI and DSI GPs become significantly more accurate than the SE GPs. For SI kernels and small $n$, the benefit of gradient information appears to increase with $d$; this benefit is diminished at a horizon for $n$ that also appears to increase with $d$. In other words, we found gradient information for fast GPs to be most valuable for large $d$ and small $n$.

\begin{figure*}[!ht]
    \centering
    \includegraphics[width=1\linewidth]{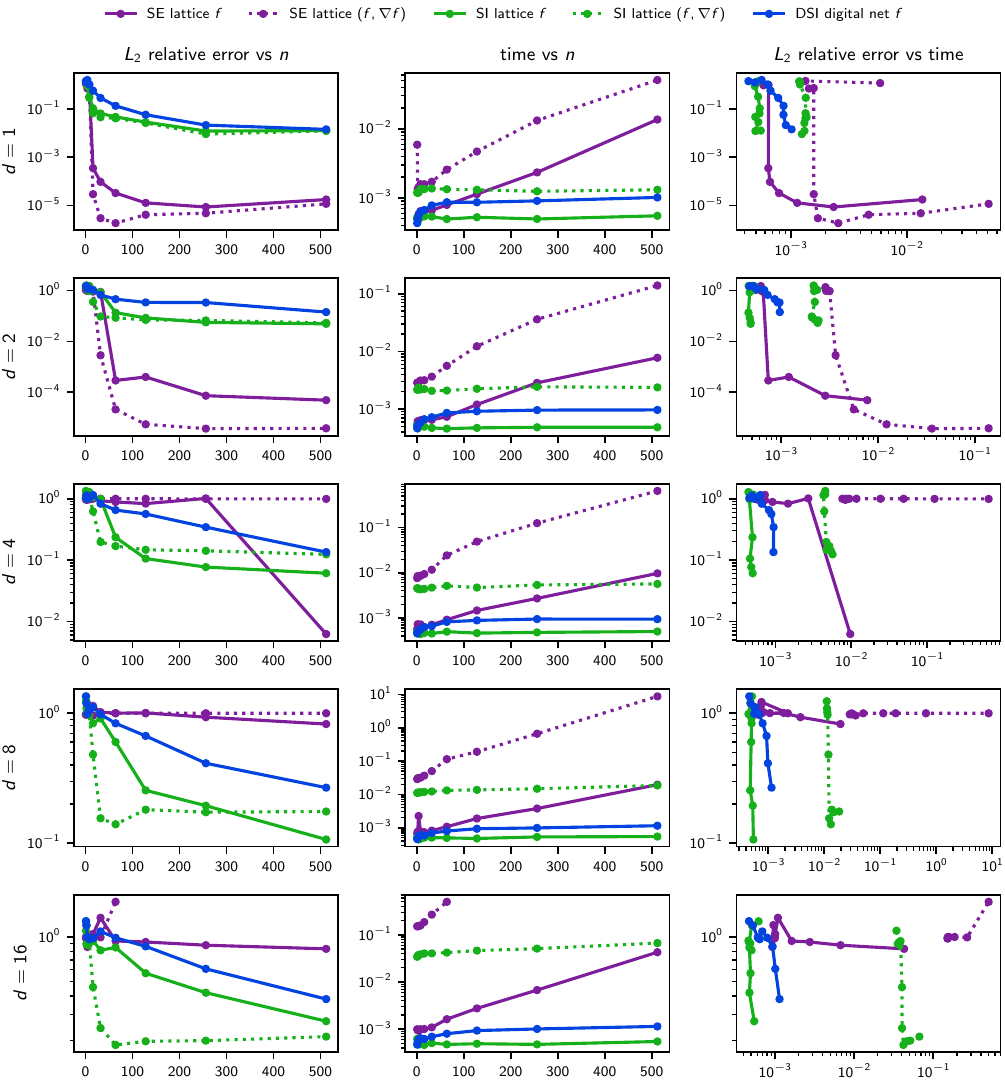}
    \caption{Gaussian processes (GPs) of varying dimension $d$ on the Styblinski--Tang benchmark function. Here we compare the error, number of samples $n$, time per optimization step, and inclusion of gradient information $\nabla f$. GP fitting was run for $250$ optimization steps.}
    \label{fig:speed_accuracy_comp_full}
\end{figure*}

\Chapter{Applications in UQ and SciML} 

This chapter will highlight a number of application projects in: 
\begin{enumerate}
    \item Building fast Gaussian processes for multilevel Monte Carlo without replications (\Cref{sec:FastBayesianMLQMC}). 
    \item Using Gaussian process to predict probability of system failure (\Cref{sec:pfgpci}).
    \item Solving the Darcy flow equation using multilevel Gaussian processes (\Cref{sec:gp4darcy}). 
    \item Modeling radiative heat transfer using neural operators (\Cref{sec:RTE_DeepONet}).
    \item Solving nonlinear PDEs with random coefficients to machine precision accuracy using scientific machine learning (\Cref{sec:CHONKNORIS}). 
\end{enumerate}
Each section corresponds to a paper written with collaborators across academia, national labs, and industry. We will focus on the technical contributions of each project. The full papers contain additional problem context and connections to the broader literature. 

\Section{Fast Bayesian Multilevel Quasi-Monte Carlo} \label{sec:FastBayesianMLQMC}

\begin{quotation}
    This section follows \cite{sorokin.FastBayesianMLQMC}, a paper I worked on during my 2025 DOE SCGSR (US Department of Energy Office of Science Graduate Student Research) Program fellowship in collaboration with Pieterjan Robbe, Fred J. Hickernell, Gianluca Geraci, and Michael S. Eldred.
\end{quotation}

Existing multilevel Quasi-Monte Carlo (MLQMC) methods use multiple independent randomizations of a low-discrepancy (LD) sequence in order to compute error estimates on each level. Using multiple randomizations is inherently less efficient than sampling more points from a single LD sequence. We propose to cast the MLQMC problem into a Bayesian cubature framework which requires only a single LD sequence and provides error estimates in terms of the posterior variance. Moreover, using \Cref{sec:fast_gps} we may perform Gaussian process (GP) regression at only $\calO(n \log n)$ cost in the number of points $n$, including hyperparameter optimization, by pairing certain LD sequences to matching kernel forms. As existing MLQMC procedures iteratively double the sample size on the level of maximum utility, we develop a novel utility function for fast Bayesian MLQMC which trades off the cost of doubling and the forecasted posterior variance after doubling, assuming fixed GP hyperparameters. We also showcase a new digitally-shift-invariant (DSI) kernel of adaptive smoothness for use with fast digital net GPs which takes a hyperparameter-weighted sum of existing higher-order smoothness DSI kernels. A series of numerical experiment validate the performance of our fast Bayesian approximations and error estimates for both single-level problems and multilevel problems with a fixed number of levels. We find the proposed fast Bayesian MLQMC methods indeed provide superior approximations compared to standard MLQMC methods with multiple randomizations, and the Bayesian error estimates using digital nets are accurate for the true errors.  

Quasi-Monte Carlo was detailed in \Cref{sec:qmc} with an initial overview of multilevel Monte Carlo (MLMC) \cite{giles2015multilevel,giles.MLMC_path_simulation} and multilevel Quasi-Monte Carlo (MLQMC) \cite{giles.mlqmc_path_simulation} methods available in \Cref{sec:stop_crit_multilevel}. We will compare against the classic MLQMC method with replications which on each level employs the replicated QMC estimator described in \Cref{sec:stop_crit_qmc_rep_student_t}. \Cref{sec:fast_gps} provided background on fast GPs with low-discrepancy points and matching (digitally-)shift-invariant kernels, their hyperparameter optimization, and their application to fast Bayesian cubature. A unified introduction is give in the full paper \cite{sorokin.FastBayesianMLQMC}. 

The remainder of this section is organized as follows. \Cref{sec:BMLQMC_methods} details our methods. We describe the existing IID-MLMC method in \Cref{sec:mlmc}, then we describe the existing MLQMC method with multiple randomizations (replications)  in \Cref{sec:mlqmc}, and then we describe fast Bayesian cubature along with our proposed fast Bayesian MLQMC method in \Cref{sec:bmlqmc}. Algorithms are provided for each of these three methods. Afterwards, \Cref{sec:numerical_experiments} gives numerical experiments comparing the three algorithms for both single-level problems and multilevel problems with a fixed number of levels. Here will only present results for the two QMC methods using digital nets. Results for lattices are available in the full text \cite{sorokin.FastBayesianMLQMC}.

\Subsection{Methods} \label{sec:BMLQMC_methods}

Assume we have access to quantities of interest $f_\ell: [0,1]^d \to \bbR$ where $\ell \in \{1,\dots,L\}$ indexes the fidelity of the simulation, with larger $\ell$ corresponding to higher-fidelity simulations. Sometimes the dimension $d$ varies with the level so $f_\ell: [0,1]^{d_\ell} \to \bbR$; for example, $d_\ell$ may be the  number of monitoring times of an asset price for option pricing with $d_\ell$ increasing in $\ell$. In such cases, we take $d = \max(d_1,\dots,d_L)$ and subset dimensions accordingly. We will also assume the number of levels is fixed, although extensions to adaptive multilevel schemes where the highest-fidelity level is determined by the remaining bias should be straightforward \cite{giles2015multilevel}.

We are interested in approximating the expectation of the maximum-fidelity simulation $\nu := \bbE[f_L(\bX)]$ subject to $\bX \sim \calU[0,1]^d$. For problems with non-uniform stochasticity, one may employ a change of variables to produce a compatible form, see \Cref{sec:variable_transforms}. As is standard for MLMC methods, we will expand the above quantity of interest as a telescoping sum 
\begin{equation} \label{eq:tele_mlmc}
    \nu = \bbE[f_L(\bX)] = \sum_{\ell=1}^L \bbE[f_\ell(\bX) - f_{\ell-1}(\bX)] = \sum_{\ell=1}^L \bbE[Y_\ell(\bX)] = \sum_{\ell=1}^L \mu_\ell
\end{equation}
where $Y_\ell := f_\ell-f_{\ell-1}$, $\mu_\ell := \bbE[Y_\ell]$, and we set $f_0 = 0$ so $Y_1 = f_1$.
We will denote by $C_\ell$ the cost of evaluating $Y_\ell$ and by $V_\ell := \bbV[Y_\ell(\bX)]$ the variance of the $\ell^\mathrm{th}$ difference, again subject to $\bX \sim \calU[0,1]^d$.

\Subsubsection{Multilevel Monte Carlo with Independent Points} \label{sec:mlmc}

Classic Monte Carlo methods approximate the true mean of an expectation by the $n$-sample mean at IID (independent and identically distributed) sampling locations. Standard MLMC methods \cite{giles.MLMC_path_simulation} use IID samples on each level, 
$$\bx_{1,0},\dots,\bx_{1,n_1-1},\dots,\bx_{L,0},\dots,\bx_{L,n_L-1} \simiid \calU[0,1]^d,$$
to create the estimate 
\begin{equation} \label{eq:mlmc}
    \hnu = \sum_{\ell=1}^L \hmu_\ell \qqtqq{where} \hmu_\ell = \frac{1}{n_\ell} \sum_{i=0}^{n_\ell-1} Y_\ell(\bx_{\ell,i}).
\end{equation}
This gives $\bbV[\hnu] = \sum_{\ell=1}^L V_\ell / n_\ell$, which, for a given error tolerance on $\bbV[\hnu]$, will optimally allocate $n_\ell \propto \sqrt{V_\ell/ C_\ell}$. As $V_\ell$ is not known, it is typically approximated by an unbiased estimator. For consistency with the QMC schemes we will present in \Cref{sec:mlqmc} and \Cref{sec:bmlqmc}, we will use an iterative doubling version of this IID-MLMC scheme as detailed in \Cref{alg:mlmc}. 

\begin{algorithm}[!ht]
    \fontsize{12}{10}\selectfont
    \caption{\texttt{MC}: MLMC With Independent Points}
    \label{alg:mlmc}
    \begin{algorithmic}
        \Require $N>0$ \Comment{the budget}
        \Require $C_1,\dots,C_L > 0 $ \Comment{the cost of evaluating $Y_1,\dots,Y_L$ respectively}
        \Require $n_1^\mathrm{next},\dots,n_L^\mathrm{next} \in \bbN$ satisfying $\sum_{\ell=1}^L n_\ell^\mathrm{next} C_\ell \leq N$ \Comment{the initial sample sizes}
        \State $n_\ell \gets 0$ for $\ell \in \{1,\dots,L\}$ \Comment{the number of model evaluations on each level}
        \State $\calL \gets \{1,\dots,L\}$ \Comment{the set of levels to update}
        \While{true}
            \State Generate $\bx_{\ell,i} \sim \calU[0,1]^d$ for $\ell \in \calL$ and $n_\ell \leq i < n_\ell^\mathrm{next}$ all independently
            \State Evaluate $Y_\ell(\bx_{\ell,i})$ for $\ell \in \calL$ and $n_\ell \leq i < n_\ell^\mathrm{next}$
            \State $\hmu_\ell \gets \frac{1}{n_\ell} \sum_{i=0}^{n_\ell-1} Y_\ell(\bx_{\ell,i})$ for $\ell \in \calL$ \Comment{estimate of $\mu_\ell$}
            \State $\sigma_\ell^2 \gets \frac{1}{n_\ell-1} \sum_{i=0}^{n_\ell-1} (Y_\ell(\bx_{\ell,i}) - \hmu_\ell)^2$ for $\ell \in \calL$\Comment{estimate of $V_\ell$}
            \State $\calL_\mathrm{feasible} \gets \{\ell \in \{1,\dots,L\}: \sum_{\ell'=1}^L C_{\ell'} n_{\ell'} + C_\ell n_\ell \leq N\}$ \Comment{feasible set of levels}
            \If{$\calL_\mathrm{feasible} = \emptyset$} break \EndIf \Comment{exit loop before going over budget}
            \State $\starhat{\ell} \gets \argmax_{\ell \in \calL_\mathrm{feasible}} \frac{\sigma_\ell^2}{n_\ell C_\ell}$ \Comment{choose the level of maximum utility} 
            \State $\calL \gets \{\starhat{\ell}\}$ and $n_{\starhat{\ell}} \gets n_{\starhat{\ell}}^\mathrm{next}$ and $n_{\starhat{\ell}}^\mathrm{next} \gets 2n_{\starhat{\ell}}$ \Comment{double samples on the chosen level}
        \EndWhile
        \State $\hnu \gets \sum_{\ell=1}^L \hmu_\ell$ \Comment{estimate of $\nu$}
        \State $\sigma^2 \gets \sum_{\ell=1}^L \hsigma_\ell^2/n_\ell$ \Comment{estimate of $\bbV\left[\hnu\right]$}
        \\ \Return $\hnu,\sigma,\{n_\ell\}_{\ell=1}^L$ \Comment{the estimate, its standard error, and samples}
    \end{algorithmic}
\end{algorithm}

\Subsubsection{Multilevel Quasi-Monte Carlo with Replications} \label{sec:mlqmc}

For Quasi-Monte Carlo (QMC) methods, we will consider shifted rank-1 lattices and digitally-shifted base $2$ digital nets, both in the extensible radical inverse order. These low-discrepancy sequences were plotted alongside IID points in \Cref{fig:points}. From the unrandomized lattice or digital sequence $(\bz_i)_{i \in \bbN_0}$, we will write the (digitally) shifted point set $(\bx_i)_{i \in \bbN_0}$ with $\bx_i = \bz_i \oplus \bDelta$ and shift $\bDelta \in [0,1)^d$. For fixed $(\bz_i)_{i \in \bbN_0}$, if $\bDelta \sim \calU[0,1]^d$ then $\bx_i \sim \calU[0,1]^d$. For lattices $\oplus$ corresponds to addition modulo $1$, while for digital nets $\oplus$ performs the elementwise exclusive or (XOR) between binary expansions. See \Cref{sec:lattices} and \Cref{sec:dnets} for complete definitions and additional details on lattices and digital nets respectively. 

The multilevel QMC (MLQMC) method from \cite{giles.mlqmc_path_simulation} uses $R$ independent shifts on each level 
$$\bDelta_{1,1},\dots,\bDelta_{1,R},\dots,\bDelta_{L,1},\dots,\bDelta_{L,r} \simiid \calU[0,1)^d$$
to construct $LR$ replications of LD point sets 
\begin{equation} \label{eq:mlqmc} 
    (\bx_{\ell,r,i})_{i=0}^{n_\ell-1}, \qquad \bx_{\ell,r,i} := \bz_i \oplus \bDelta_{\ell,r}, \qquad 1 \leq \ell \leq L, \quad 1 \leq r \leq R.
\end{equation}
One then replaces $\hnu$ in \eqref{eq:mlmc} by an approximation derived from the $LR$ independent sample means $\tmu_{\ell,r}$:
$$\hnu = \sum_{\ell=1}^L \hmu_\ell \qqtqq{where} \hmu_\ell = \frac{1}{R} \sum_{r=1}^R \tmu_{\ell,r} \qqtqq{and} \tmu_{\ell,r} = \frac{1}{n_\ell} \sum_{i=0}^{n_\ell-1} Y_\ell(\bx_{\ell,r,i}).$$
Let $\tV_{\ell,n_\ell} = \bbV[1/n_\ell\sum_{i=0}^{n_\ell-1} Y_\ell(\bz_i \oplus \bDelta)]$ be the variance of the LD sample mean with respect to the random shift $\bDelta \sim \calU[0,1]^d$, which should go to $0$ as $n_\ell \to \infty$. Then we have $\bbV\left[\hnu\right] = \sum_{\ell=1}^L \tV_{\ell,n_\ell}/R$. Rather than increasing the number of randomizations $R$, it will be more efficient to increase the number of LD point $n_\ell$ and exploit the space filling properties of extensible LD sequences. We present the standard multilevel QMC method in \Cref{alg:mlqmc}, which exploits the fact that the extensible lattices and base $2$ digital sequences we consider achieve nice space filling properties at samples sizes which are powers of $2$.

\begin{algorithm}[!ht]
    \fontsize{12}{10}\selectfont
    \caption{\texttt{RQMC}: MLQMC With Replications}
    \label{alg:mlqmc}
    \begin{algorithmic}
        \Require $N>0$ \Comment{the budget}
        \Require $C_1,\dots,C_L > 0 $ \Comment{the cost of evaluating $Y_1,\dots,Y_L$ respectively}
        \Require $R>0$ \Comment{the fixed number of randomizations for each level, we use $R=8$.} 
        \Require $n_1^\mathrm{next},\dots,n_L^\mathrm{next} \in \bbN$ powers of two satisfying $R \sum_{\ell=1}^L n_\ell^\mathrm{next} C_\ell \leq N$ \\ \Comment{the initial sample sizes}
        \Require A generating vector for lattices or generating matrices for digital nets. 
        \State Generate $\bDelta_{1,1},\dots,\bDelta_{1,R},\dots,\bDelta_{L,1},\dots,\bDelta_{L,r} \simiid \calU[0,1)^d$
        \State $n_\ell \gets 0$ for $\ell \in \{1,\dots,L\}$ \Comment{the number of evaluations per randomization}
        \State $\calL \gets \{1,\dots,L\}$ \Comment{the set of levels to update}
        \While{true}
            \State Generate $x_{\ell,r,i} = \bz_i \oplus \bDelta_{\ell,r}$ for $\ell \in \calL$ and $1 \leq r \leq R$ and $n_\ell \leq i < n_\ell^\mathrm{next}$
            \State Evaluate $Y_\ell(\bx_{\ell,r,i})$ for $\ell \in \calL$ and $1 \leq r \leq R$ and $n_\ell \leq i < n_\ell^\mathrm{next}$
            \State $\tmu_{\ell,r} \gets \frac{1}{n_\ell} \sum_{i=0}^{n_\ell-1} Y_\ell(\bx_{\ell,r,i})$ for $\ell \in \calL$ and $1 \leq r \leq R$ \\ \Comment{per-randomization estimate of $\mu_\ell$}
            \State $\hmu_\ell \gets \frac{1}{R} \sum_{r=1}^{R} \tmu_{\ell,r}$ for $\ell \in \calL$ \Comment{aggregate estimate of $\mu_\ell$}
            \State $\tsigma_\ell^2 \gets \frac{1}{R-1} \sum_{r=1}^{R} (\tmu_{\ell,r} - \hmu_\ell)^2$ for $\ell \in \calL$ \Comment{estimate of $\tV_{\ell,n_\ell}$}
            \State $\calL_\mathrm{feasible} \gets \{\ell \in \{1,\dots,L\}: \sum_{\ell'=1}^L R C_{\ell'} n_{\ell'} + R C_\ell n_\ell \leq N\}$ \Comment{feasible set of levels}
            \If{$\calL_\mathrm{feasible} = \emptyset$} break \EndIf \Comment{exit loop before going over budget}
            \State $\starhat{\ell} \gets \argmax_{\ell \in \calL_\mathrm{feasible}} \frac{\tsigma_\ell^2}{R n_\ell C_\ell}$ \Comment{choose the level of maximum utility}
            \State $\calL \gets \{\starhat{\ell}\}$ and $n_{\starhat{\ell}} \gets n_{\starhat{\ell}}^\mathrm{next}$ and $n_{\starhat{\ell}}^\mathrm{next} \gets 2n_{\starhat{\ell}}$ \Comment{double samples on the chosen level}
        \EndWhile
        \State $\hnu \gets \sum_{\ell=1}^L \hmu_\ell$ \Comment{estimate for $\nu$}
        \State $\tsigma^2 \gets \sum_{\ell=1}^L \tsigma_\ell^2/R$ \Comment{estimate of $\bbV[\hnu]$}    
        \\ \Return $\hnu,\tsigma,\{R n_\ell\}_{\ell=1}^L$ \Comment{the estimate, its standard error, and samples}
    \end{algorithmic}
\end{algorithm}

\Subsubsection{Fast Bayesian Multilevel Quasi-Monte Carlo Without Replications} \label{sec:bmlqmc}

We will assume $Y_\ell$ is a Gaussian processes (GP) \cite{rasmussen.gp4ml} and, while not necessarily being justified theoretically, we make the modeling assumption that $Y_1,\dots,Y_L$ are independent GPs. For now, let us consider a fixed $\ell \in \{1,\dots,L\}$ and drop $\ell$ subscripts. At each level we prescribe $Y \sim \mathrm{GP}(\tau,K)$ where $\tau$ is a constant prior mean and $K$ is the kernel which depends on hyperparameter $\btheta = \{\gamma,\bEta,\tau\}$ (or $\btheta = \{\gamma,\bEta,\tau,\ba\}$ if using the adaptive smoothness kernel in \Cref{def:BMLQMC_dsi_kernel}). For our fast GP constructions, we will use product kernels as in \eqref{eq:product_kernels} which take the form 
\begin{equation}
    K(\bx,\bx') = \gamma \prod_{j=1}^d (1+\eta_j R(x_j,x_j'))
    \label{eq:prod_kernel_BMLQMC}
\end{equation}
for some univariate kernel $R$. For lattices, our univariate kernels will take the form of \Cref{def:si_kernels} with $R := K_\alpha$; we will use $\alpha=1$ to assume minimal periodicity. For digital nets, we use a new digitally-shift-invariant (DSI) kernel of adaptive smoothness defined below. This adaptive DSI kernel takes a hyperparameter-weighted sum of individual DSI kernels $\tR_\alpha$ of various smoothness $\alpha$ where $\tR_\alpha$ correspond to forms derived in \Cref{sec:dsi_kernels_smooth_functions}. 

\begin{definition}[Adaptive smoothness digitally-shift-invariant (DSI) product kernel] \label{def:BMLQMC_dsi_kernel}
    Our DSI kernels take the form of \eqref{eq:prod_kernel_BMLQMC} using a weighted sum of univariate kernels with hyperparameter weights $\ba >0$ so that 
    \begin{equation}
        \begin{aligned} 
        R(x,x') &:= \tR(x \oplus x') = a_1 \tR_1(x,x') + a_2 \tR_2(x,x') + a_3 \tR_3(x,x') + a_4 \tR_4(x,x') \quad\text{with } \\
        \tR_\alpha(x) &= \begin{cases} 6\left(1-\frac{1}{2} t_1(x)\right), & \alpha=1 \\ -1+-\beta(x) x + \frac{5}{2}\left[1-t_1(x)\right], & \alpha = 2 \\ -1+\beta(x)x^2-5\left[1-t_1(x)\right]x+\frac{43}{18}\left[1-t_2(x)\right], & \alpha = 3 \\ -1 -\frac{2}{3}\beta(x)x^3+5\left[1-t_1(x)\right]x^2 - \frac{43}{9}\left[1-t_2(x)\right]x \\
        \quad +\frac{701}{294}\left[1-t_3(x)\right]+\beta(x)\left[\frac{1}{48} \sum_{a=1}^\infty \frac{\mx_a}{2^{3(a-1)}} - \frac{1}{42}\right], & \alpha = 4 \end{cases}.
        \end{aligned}
        \label{eq:dsi_kernels}
    \end{equation}
    Here $\beta(x) = - \lfloor \log_2(x) \rfloor$, $t_\nu(x) = 2^{-\nu \beta(x)}$, and $x \oplus x' = \sum_{a=1}^\infty ((\mx_a + \mx_a') \bmod 2) 2^{-(a+1)}$ is the exclusive or (XOR) between binary expansions of $x = \sum_{a =0}^\infty \mx_a 2^{-a}$ and $x' = \sum_{a=0}^\infty \mx_a' 2^{-a}$ as used for digital nets in \Cref{sec:dnets}. 
\end{definition}

For the fast Bayesian cubature discussed in \Cref{sec:fast_gps}, we have posterior distribution of the integral is the Gaussian random variable
$$\mu := \bbE_\bX[Y(\bX)] \sim \calN(\hmu,\hV_n), \qquad \bX \sim \calU[0,1]^d$$
where $\hmu := \bbE[\mu | \mX, \bY]$ and $\hV_n = \bbV[\mu | \mX]$ for $\mX := (\bx_i)_{i=0}^{n-1}$ and $\bY := (Y(\bx_i))_{i=0}^{n-1}$. Under our choice of product kernels built from univariate (digitally-)shift-invariant kernels, and assuming we optimize the prior mean $\tau$ using the marginal log-likelihood (MLL) loss in \Cref{sec:mll_sgp}, then we have 
\begin{equation} \label{eq:fgp_post_cubature_mean_var}
    \hmu = \frac{1}{n} \sum_{i=0}^{n-1} Y(\bx_i) \qqtqq{and} \hV_n = \gamma \left[1-\left(\frac{1}{n} \sum_{i=0}^{n-1} \calK(\bx_i,\bx_0)\right)^{-1}\right]
\end{equation}
where $\calK(\bx,\bx') = \prod_{j=1}^d \left(1+\eta_j R(x, x_j')\right)$ is the unscaled version of the product kernel in \eqref{eq:prod_kernel_BMLQMC}. 

A key observation is that $\hV_n$ only depends on the points $\mX$, not the function evaluations $\bY$. This implies that, for fixed hyperparameters $\btheta$, we may compute the projected variance $\hV_\hn$ for any $\hn \geq n$ points from the given LD sequence. We prefer sample sizes $\hn$ which are powers of two as there the LD sequences we consider attain desirable uniformity properties and enable fast GP computations. While one may compute $\hV_\hn$ exactly for any $\hn$, for later utility we will define $\hV_\hn$ when $\hn$ is not a power of $2$ to be the log-log linear interpolation between surrounding powers of two so that
\begin{equation} \label{eq:logloglininterp}
    \hV_\hn := \left(\frac{\hV_{2^p}^{p+1}}{\hV_{2^{p+1}}^p}\right) \hn^{\log_2\left(\hV_{2^{p+1}}/\hV_{2^p}\right)} \qqtqq{when} 2^p < \hn < 2^{p+1} \qqtqq{for} p \in \bbN_0
\end{equation}
with $p = \lfloor \log_2(\hn) \rfloor$, see \Cref{fig:logloglininterp}.  

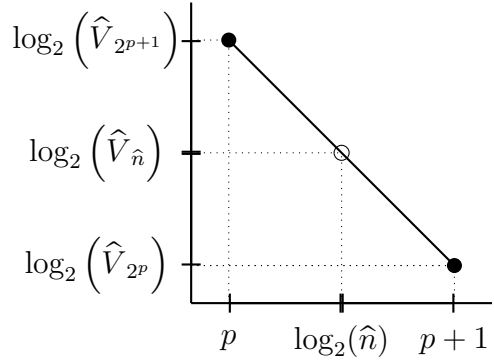
\begin{figure}[!ht]
    \centering 
    \begin{tikzpicture}
        \draw[-,thick] (0,0) -- (4,0);
        \draw[-,thick] (0,0) -- (0,4);
        \draw[-,thick] (0.5,3.5) -- (3.5,0.5);
        \fill (0.5,3.5) circle [radius=.1];
        \fill (3.5,0.5) circle [radius=.1];
        \draw (2,2) circle [radius=.1];
        \draw[|-|,thick] (.5,0) -- (2,0);
        \draw[|-|,thick] (2,0) -- (3.5,0);
        \draw[|-|,thick] (0,.5) -- (0,2);
        \draw[|-|,thick] (0,2) -- (0,3.5);
        \draw[-,thin,dotted] (2,0) -- (2,2);
        \draw[-,thin,dotted] (0,2) -- (2,2);
        \draw[-,thin,dotted] (0.5,0) -- (0.5,3.5);
        \draw[-,thin,dotted] (0,0.5) -- (3.5,0.5);
        \draw[-,thin,dotted] (0,0.5) -- (3.5,0.5);
        \draw[-,thin,dotted] (0,3.5) -- (0.5,3.5);
        \draw[-,thin,dotted] (3.5,0) -- (3.5,0.5);
        \node at (0.5,-0.5) {$p$};
        \node at (2,-0.5) {$\log_2(\hn)$};
        \node at (3.5,-0.5) {$p+1$};
        \node at (-1.25,0.5) {$\log_2\left(\hV_{2^p}\right)$};
        \node at (-1.25,2) {$\log_2\left(\hV_\hn\right)$};
        \node at (-1.25,3.5) {$\log_2\left(\hV_{2^{p+1}}\right)$};
    \end{tikzpicture}
    \caption{Log-log linear interpolation for $\hV_\hn$ when $\hn$ is not a power of $2$ using $p = \lfloor \log_2(\hn) \rfloor$.}
    \label{fig:logloglininterp}
\end{figure}

Let us now consider the multilevel Bayesian cubature setup where we fit independent GPs to each difference $Y_\ell$. We will select shifts $\bDelta_1,\dots,\bDelta_L \in [0,1)^d$ and use LD points $\bx_{\ell,i} = \bz_i \oplus \bDelta_{\ell}$ on each level. Here the shifts are arbitrary and need not be independent as was the case for MLQMC with replication in \Cref{sec:mlqmc}. One may even choose the same shift on each level, $\bDelta_1 = \dots = \bDelta_L$, to use the same collocation points on each level as is common in recycling MLMC methods \cite{robbe2019recycling, kumar2017multigrid}. Different hyperparameters $\btheta_\ell$ will be fit on each level. Using \eqref{eq:fgp_post_cubature_mean_var} with level-dependent notations, we estimate $\nu$ in \eqref{eq:tele_mlmc} by the posterior cubature mean 
\begin{equation*} \label{eq:nu_bmlqmc_pcmean}
    \hnu := \bbE[\nu | (\mX_\ell,\bY_\ell)_{\ell=1}^L] = \sum_{\ell=1}^\ell \hmu_\ell \qqtqq{where} \hmu_\ell := \bbE[\mu_\ell | \mX_\ell,\bY_\ell] = \frac{1}{n_\ell} \sum_{i=0}^{n_\ell-1} Y_\ell(\bx_{\ell,i}). 
\end{equation*}
Additionally, under the assumption of independent GPs $Y_1,\dots,Y_L$, the posterior cubature variance becomes 
\begin{equation} \label{eq:nu_bmlqmc_pcvar}
    \begin{aligned}
    \hV_{n_1,\dots,n_L} &:= \bbV[\nu | (\mX_\ell)_{\ell=1}^L] = \sum_{\ell=1}^L \hV_{\ell,n_\ell} \qtq{where} \\
    \hV_{\ell,n_\ell} &:= \bbV[\mu_\ell | \mX_\ell] = \gamma_\ell \left[1-\left(\frac{1}{n_\ell} \sum_{i=0}^{n_\ell-1} \calK_\ell(\bx_{\ell,i},\bx_{\ell,0})\right)^{-1}\right]
    \end{aligned}
\end{equation}
with $\calK_\ell$ depending on hyperparameters $\btheta_\ell \setminus \{\gamma\}$, i.e., all hyperparameters except the global scaling factor $\gamma$. If one is not willing to assume independent GPs $Y_1,\dots,Y_L$, then the Cauchy--Schwarz inequality may be used to bound 
$$\bbV[\nu | (\mX_\ell)_{\ell=1}^L] \leq \left(\sum_{\ell=1}^L \sqrt{\hV_{\ell,n_\ell}}\right)^2.$$
In our numerical experiments we use the independent GP assumption and take the standard error to be $\hV_{n_1,\dots,n_L}$ in \eqref{eq:nu_bmlqmc_pcvar} as we found it provided robust error estimation across our suite of test problems.  

As was the case for MLQMC with replications in \Cref{alg:mlqmc}, we will employ a scheme which iteratively doubles the sample size at selected levels. Our complete fast Bayesian MLQMC method is summarized in \Cref{alg:bmlqmc} which selects the level at which to double using the procedure in \Cref{alg:level_select_bmlqmc}. At each iteration, our selected level will depend on the projected variance $\hV_{\ell,\hn_\ell}$ for certain $\hn_\ell$. Specifically, we compare subsequent levels in decreasing cost-of-doubling order and iteratively select the level with the greater decrease in projected variance for the same cost. Concretely, when comparing levels $\ell$ and $\ell'$, if $n_\ell C_\ell = n_{\ell'} C_{\ell'}$, i.e., the cost of doubling on each level is equivalent, then we choose to move forward with the level which gives the larger decrease in variance between $V_{\ell,n_\ell} - V_{\ell,2n_\ell}$ and $V_{\ell',n_{\ell'}}-V_{\ell',2n_{\ell'}}$. If $n_\ell C_\ell > n_{\ell'} C_{\ell'}$, i.e., the cost of doubling on level $\ell$ is greater than the cost of doubling on level $\ell'$, then we choose to move forward with level which gives the larger decrease in variance between $V_{\ell,n_\ell} - V_{\ell,2n_\ell}$ and $V_{\ell',n_{\ell'} - V_{\ell',\hn_{\ell'}}}$ where $\hn_{\ell'}$ is chosen to satisfy $n_\ell C_\ell = C_{\ell'}(\hn_{\ell'} - n_{\ell'})$ so doubling the sample size on level $\ell$ would require the same cost as increasing the sample size to $\hn_{\ell'}$ on level $\ell'$. As $\hn_{\ell'}$ will usually not be a power of two, we use the log-log interpolation between surrounding powers of two as written in \eqref{eq:logloglininterp}. The fact that the log-log linear interpolation recovers the case when $n_\ell C_\ell = n_{\ell'} C_{\ell'}$ is reflected in the simplified presentation of \Cref{alg:level_select_bmlqmc}. 

\begin{algorithm}[!ht]
    \fontsize{12}{10}\selectfont
    \caption{\texttt{level\_select\_BQMC}: Level selection for fast Bayesian MLQMC}
    \label{alg:level_select_bmlqmc}
    \begin{algorithmic}
        \Require $\calL_\mathrm{feasible} \subseteq \{1,\dots,L\}$ with $\tL := \lvert \calL_\mathrm{feasible} \rvert>0$ elements \Comment{levels to consider}
        \Require $C_1,\dots,C_L > 0 $ \Comment{the cost of evaluating $Y_1,\dots,Y_L$ respectively}
        \Require $\btheta_1,\dots,\btheta_L$ \Comment{GP hyperparameters for each level} 
        \Require $n_1,\dots,n_L > 0$ \Comment{the current number of samples on each level}
        \State Set unique $l_1,\dots,l_\tL \in \calL_\mathrm{feasible}$ so that $n_{l_1} C_{l_1} \geq \cdots \geq n_{l_\tL} C_{l_\tL}$ 
        \newline \Comment{order feasible levels by non-increasing cost for doubling samples on each level}
        \State $\ell \gets l_1$ \Comment{initialize the selected level}
        \For{$k=2,\dots,\tL$}
            \State $\ell' \gets \ell_k$ \Comment{will compare levels $\ell$ and $\ell'$ with $n_\ell C_\ell \geq n_{\ell'} C_{\ell'}$}
            \State $\hn_{\ell'} \gets n_\ell C_\ell/C_{\ell'} + n_{\ell'}$ \Comment{equal costs for increasing sample sizes}
            \State $p \gets \lfloor \log_2(\hn_{\ell'}) \rfloor$ \Comment{implies $2^p \leq \hn_{\ell'}  < 2^{p+1}$}
            \State $\hV_{\ell',\hn_{\ell'}} \gets (\hV_{\ell',2^p}^{p+1}/\hV_{\ell',2^{p+1}}^p) \hn_{\ell'}^{\log_2\left(\hV_{\ell',2^{p+1}}/\hV_{\ell',2^p}\right)}$ \Comment{log-log interpolation as in \eqref{eq:logloglininterp}} 
            \State  \Comment{if $n_\ell C_\ell = n_{\ell'} C_{\ell'}$ directly compute $\hV_{\ell',\hn_{\ell'}} = \hV_{\ell',2n_{\ell'}}$ and avoid evaluating $\hV_{\ell',4n_{\ell'}}$}
            \If{$\hV_{\ell',n_{\ell'}} - \hV_{\ell',\hn_{\ell'}} \geq \hV_{\ell,n_\ell} - \hV_{\ell,2n_\ell}$} \\ \Comment{check for a greater projected decrease in variance for the same cost}
                \State $\ell \gets \ell'$ \Comment{update the selected level}
            \EndIf
        \EndFor
        \\ \Return $\ell$ \Comment{the selected level}
    \end{algorithmic}
\end{algorithm}

\begin{algorithm}[!ht]
    \fontsize{12}{10}\selectfont
    \caption{\texttt{BQMC}: Fast Bayesian MLQMC Without Replications}
    \label{alg:bmlqmc}
    \begin{algorithmic}
        \Require $N>0$ \Comment{the budget}
        \Require $C_1,\dots,C_L > 0 $ \Comment{the cost of evaluating $Y_1,\dots,Y_L$ respectively}
        \Require $n_1^\mathrm{next},\dots,n_L^\mathrm{next} \in \bbN$ powers of two satisfying $\sum_{\ell=1}^L n_\ell^\mathrm{next} C_\ell \leq N$ \\ \Comment{the initial sample sizes} 
        \Require A generating vector for lattices or generating matrices for digital nets. 
        \Require $\btheta_1,\dots,\btheta_L$ initial hyperparameters containing a global scale $\gamma$, lengthscales $\bEta$, and, if using weighted sums of DSI kernels as in \Cref{def:BMLQMC_dsi_kernel}, the kernel weights $\ba$ 
        \Require $\bDelta_1,\dots,\bDelta_L \in [0,1)^d$ \Comment{(digital) shifts, we use $\bDelta_1,\dots,\bDelta_L \simiid \calU[0,1)^d$.}
        \State $n_\ell \gets 0$ for $\ell \in \{1,\dots,L\}$ \Comment{the number of evaluations}
        \State $\calL \gets \{1,\dots,L\}$ \Comment{the set of levels to update}
        \While{true}
            \State Generate $x_{\ell,i} = \bz_i \oplus \bDelta_\ell$ for $\ell \in \calL$ and $n_\ell \leq i < n_\ell^\mathrm{next}$
            \State Evaluate $Y_\ell(\bx_{\ell,i})$ for $\ell \in \calL$ and $n_\ell \leq i < n_\ell^\mathrm{next}$
            \State Update $\btheta$ to optimize the NMLL for $\ell \in \calL$
            \State $\hmu_\ell \gets \frac{1}{n_\ell} \sum_{i=0}^{n_\ell-1} Y_\ell(\bx_{\ell,i})$ for $\ell \in \calL$ \Comment{posterior cubature mean}
            \State $\hV_{\ell,n_\ell} \gets \gamma_\ell \left[1-\left(\frac{1}{n_\ell} \sum_{i=0}^{n_\ell-1} \calK_\ell(\bx_{\ell,i},\bx_{\ell,0})\right)^{-1}\right]$ for $\ell \in \calL$ \\ \Comment{posterior cubature variance}
            \State $\calL_\mathrm{feasible} \gets \{\ell \in \{1,\dots,L\}: \sum_{\ell'=1}^L C_{\ell'} n_{\ell'} + C_\ell n_\ell \leq N\}$ \Comment{feasible set of levels}
            \If{$\calL_\mathrm{feasible} = \emptyset$} break \EndIf \Comment{exit loop before going over budget}
            \State $\starhat{\ell} \gets \texttt{level\_select\_BQMC}\left(\calL_\mathrm{feasible},(C_\ell,\btheta_\ell,n_\ell)_{\ell=1}^L\right)$ \Comment{\Cref{alg:level_select_bmlqmc}}
            \State $\calL \gets \{\starhat{\ell}\}$ and $n_{\starhat{\ell}} \gets n_{\starhat{\ell}}^\mathrm{next}$ and $n_{\starhat{\ell}}^\mathrm{next} \gets 2n_{\starhat{\ell}}$ \Comment{double samples on the chosen level}
        \EndWhile
        \State $\hnu \gets \sum_{\ell=1}^L \hmu_\ell$ \Comment{estimate for $\nu$}
        \State $\hV_{n_1,\dots,n_L} \gets \sum_{\ell=1}^L \hV_{\ell,n_\ell}$ \Comment{posterior cubature variance assuming independent GPs}    
        \\ \Return $\hnu,\sqrt{\hV_{n_1,\dots,n_L}},\{n_\ell\}_{\ell=1}^L$ \Comment{the estimate, its standard error, and samples}
    \end{algorithmic}
\end{algorithm}

\Subsection{Numerical Experiments} \label{sec:numerical_experiments}

In this section we present a number of numerical experiments for both single-level (Q)MC (\Cref{sec:examples_single_level}) and multilevel (Q)MC (\Cref{sec:examples_multilevel}) using the three presented (Q)MC algorithms:
\begin{itemize}
    \item \textbf{\texttt{MC}} (Multilevel) Monte Carlo with IID points as described in \Cref{sec:mlmc,} and \Cref{alg:mlmc}.
    \item \textbf{\texttt{RQMC}} (Multilevel) QMC with replications as described in \Cref{sec:mlqmc} and \Cref{alg:mlqmc}.
    \item \textbf{\texttt{BQMC}} (Multilevel) Bayesian QMC without replications as described in \patchoverfull \Cref{sec:bmlqmc} and \Cref{alg:bmlqmc}.
\end{itemize}
Our Python implementation of these algorithms uses the fast Gaussian process regression package \texttt{FastGPs} from \Cref{sec:fastgps_features} and the QMC software package \texttt{qmcpy} \Cref{sec:qmcpy_features}. We will run $250$ independent trials of each (multilevel) (Q)MC approximation for a given problem and budget. For QMC methods, the full text \cite{sorokin.FastBayesianMLQMC} considers both lattices and the digital nets. Here we will only present results for digital nets. We will always use the \texttt{joe-kuo-6.21201} digital net generating matrices from \url{https://web.maths.unsw.edu.au/~fkuo/sobol/index.html} \cite{joe2003remark,joe2008constructing}. This is the default choices in \texttt{qmcpy}. Comparisons against other generating matrices is a valuable avenue for future work. 

Most of the single-level test functions in \Cref{sec:examples_single_level} were considered in \cite{lecuyer.RQMC_CLT_bootstrap_comparison}, albeit with the ridge functions having equal weights. There, comprehensive experiments showed that RQMC methods outperform alternative bootstrap methods in terms of confidence interval coverage across a range of benchmarks. Their experiments tested up to dimension $d=32$ and only observed RQMC coverage failures when using $R=5$ randomizations. We will use $R=8$ replications in our RQMC testing. As $R$ decreases, RQMC will typically yield better true errors with less accurate standard errors. We note that our choice of $R=8$ is rather aggressive in comparison to existing literature which often uses at least $R=25$.

\Subsubsection{Single-Level Problems} \label{sec:examples_single_level}

Here we consider single-level test functions $Y$ and run (Q)MC algorithms to estimate $\mu = \bbE[Y(\bX)]$ for $\bX \sim \calU[0,1]^d$ and $d=32$. The ridge functions will be defined as $Y(\bx) = g(u(\bx))$ for $u(\bx) = \sum_{j=1}^d c_j \Phi^{-1}(x_j)$ with weights $(c_j)_{j=1}^d$ and $\Phi$ the CDF of the standard normal $\calN(0,1)$. We will only present results for \emph{sparse weights} $c_j = 2^{-j} / \sqrt{\sum_{j'=1}^d 2^{-2j'}}$ which make the problem QMC-friendly, but we observed similar findings for ridge functions with \emph{equal weights} $c_j = d^{-1/2}$. Both sparse and equal weights make $u(\bX) \sim \calN(0,1)$ when $\bX \sim \calU[0,1]^d$. Our four test functions are listed below.

\begin{enumerate}
    \item \textbf{sumxex:} $Y(\bx) = -d+\sum_{j=1}^d x_j\exp(x_j)$. This is a smooth and additive integrand which is easy for QMC methods.
    \item \textbf{ridge PL sparse:} $Y(\bx) = g(u(\bx))$ with $g(u)=\max\left(u-1,0\right) - \phi(1)+\Phi(-1)$ and $\phi$ the density of $\calN(0,1)$. This is a continuous piecewise linear (PL) function with a kink, a feature often observed in problems from computational finance. 
    \item \textbf{ridge JSU sparse:} $Y(\bx) = g(u(\bx))$ with $g(u) = -\eta+F^{-1}\left(\Phi\left(v\right)\right)$ where $F$ is the CDF of a Johnson's SU distribution \cite{johnson1949systems} with parameters $\gamma=\delta=\lambda=1$ and $\xi=0$, and $\eta$ is the mean of that distribution. For $\bX \sim \calU[0,1]^d$, $Y(\bX)$ has skewness $-5.66$ and kurtosis $96.8$ making it heavy tailed. 
    \item \textbf{Genz corner-peak 2:} $Y(\bx) = \left(1+\sum_{j=1}^d c_j x_j\right)^{-(d+1)}$ with coefficients $c_j = j^{-2}/(4\sum_{j'=1}^d (j')^{-2})$. This is a corner-peak version of the Genz function \cite{genz1993comparison} (as opposed to the oscillatory version) with coefficients of the second kind (out of three common options). The different coefficient options represent increasing levels of anisotropy and decreasing effective dimension.
\end{enumerate}

In the top row of \Cref{fig:dnet.convergence} we plot both the median true errors and median standard errors for single-level problems with an increasing budget $N=n$. For these single-level problems, the true error and standard error for IID Monte Carlo always converge at the expected $\calO(n^{-1/2})$ rate with the true errors closely matching the standard errors. The RQMC methods also yield standard errors which closely match true errors, with both converging faster than IID-MC methods. RQMC converges like $\calO(n^{-2})$ for the easy sumxex problem, slightly below $\calO(n^{-1})$ for both the ridge PL sparse and ridge JSU sparse problems, and between $\calO(n^{-1})$ and $\calO(n^{-3/2})$ for the Genz corner-peak 2 function. As expected, the BQMC methods almost always provide better true errors compared to the RQMC methods. In the majority of cases, the true error rates for BQMC match those for RQMC, but with smaller constant multiples. One exception is the true errors of BQMC-net for the sumxex problem which appears to converge at a rate slightly better than $\calO(n^{-3})$ compared to the $\calO(n^{-2})$ rate for the RQMC-net methods.

We found the BQMC standard errors are often able to match the true error convergence rates and provide slight improvements over the corresponding RQMC standard errors. However, the rate constants for the BQMC standard error are rather conservative for the true errors when higher-order convergence beyond $\calO(n^{-2})$ is observed. For example, when applying BQMC-net to the easy sumxex problem, the true error converges like $\calO(n^{-3})$ while the standard error converges with a rate around $\calO(n^{-2})$. One potential method to improve BQMC-net standard error convergence would be to expand the DSI sum kernels in \Cref{def:BMLQMC_dsi_kernel} to consider even higher-order smoothness kernels beyond $\alpha=4$. However, evaluating higher-order smoothness kernels is more expensive in terms of both computations and memory, and a larger set of hyperparameters weights $\ba$ would need to be optimized. We also found the BQMC-net standard error convergence rate appears slightly worse than that for RQMC-net for the Genz corner-peak $2$ function.

In \Cref{fig:dnet.sl.error_scatters} we plot the standard error and true error for each trial against each other. Again we see the conservative nature the BQMC standard error estimations in comparison to RQMC methods. This is especially evident for the sumxex and Genz corner-peak 2 functions at higher budgets $n$. For the two ridge functions, we found the BQMC standard errors were accurate for the true errors with both medians almost exactly matching.

\begin{figure}[!ht]
    \centering
    \includegraphics[width=1\textwidth]{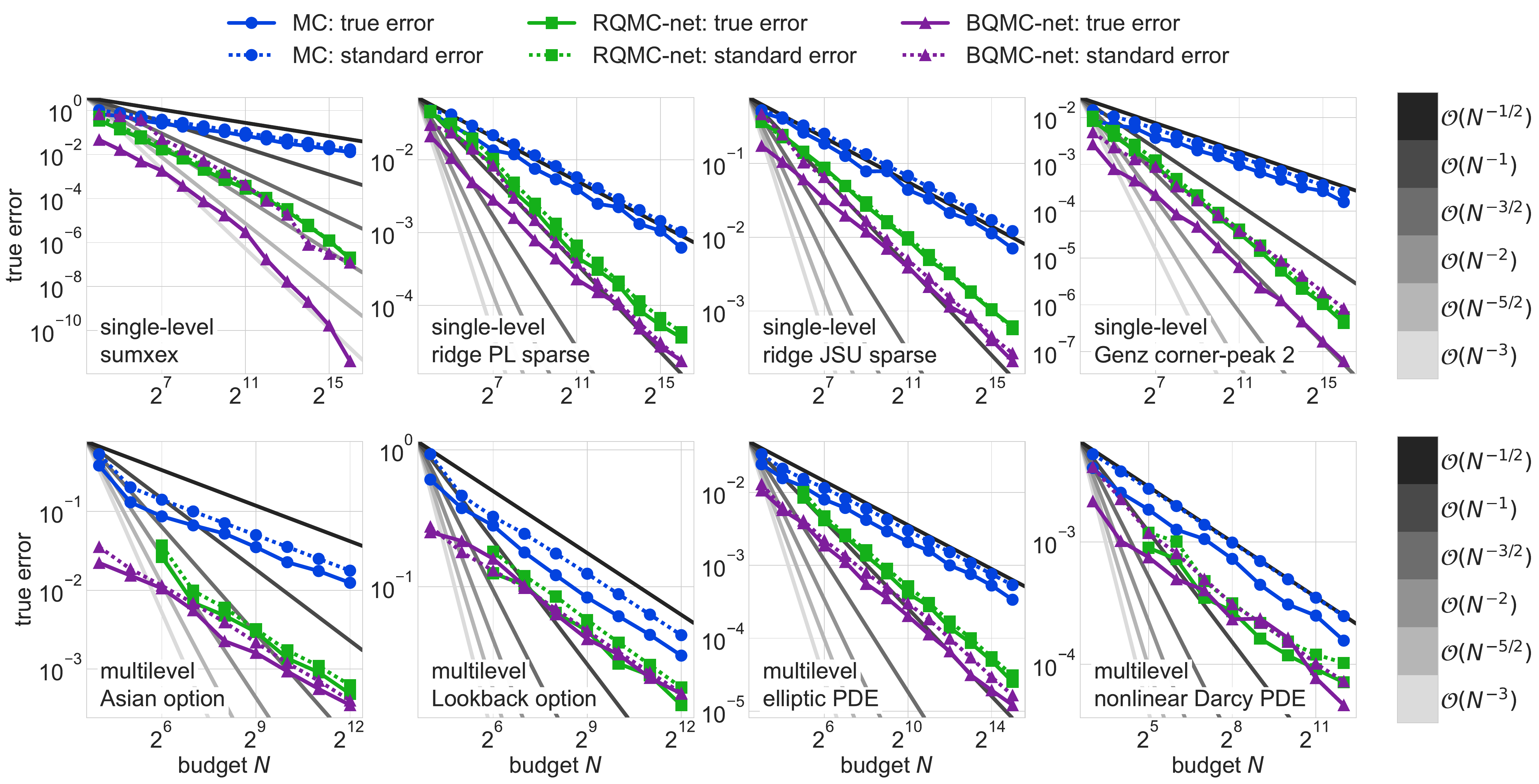}
    \caption{Median true error and standard error versus budget for Monte Carlo (MC) with IID points, Quasi-Monte Carlo with replications (RQMC), and Quasi-Monte Carlo with fast Bayesian cubature (BQMC).} 
    \label{fig:dnet.convergence}
\end{figure}

\begin{figure}[!ht]
    \centering
    \includegraphics[width=1\textwidth]{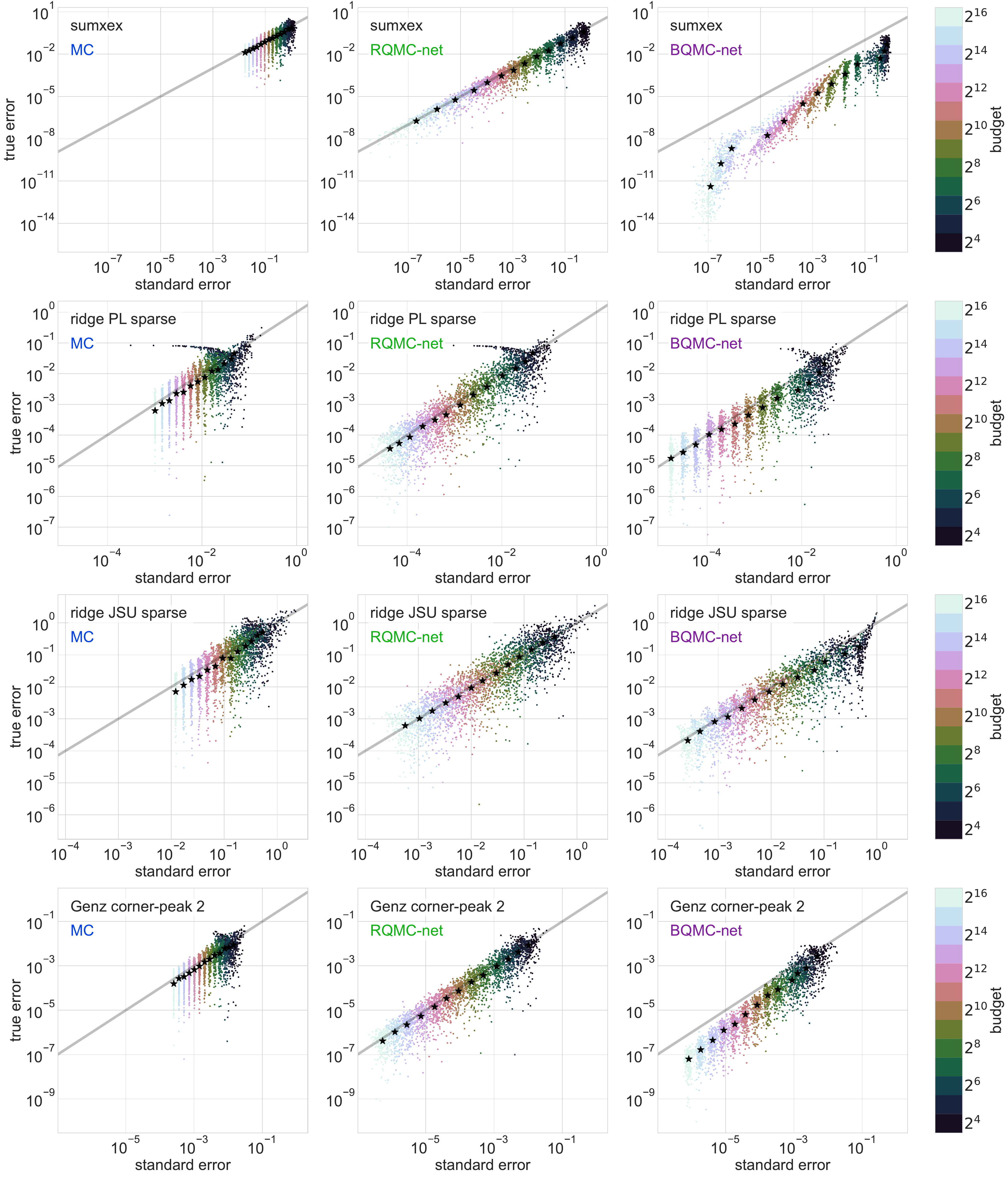}
    \caption{Standard error versus true error across trials for (single-level) Monte Carlo (MC) with IID points, Quasi-Monte Carlo with replications (RQMC), and Quasi-Monte Carlo with fast Bayesian cubature (BQMC). The stars represent the median true and standard errors for each budget.} 
    \label{fig:dnet.sl.error_scatters}
\end{figure}

\Subsubsection{Multilevel Problems} \label{sec:examples_multilevel}

Here we consider multilevel test functions with a fixed number of levels. We will describe our test functions for option pricing (\Cref{sec:bqmc_ml_opt_pricing}), solving a one-dimensional elliptic PDE (\Cref{sec:bqmc_elliptic}), and solving a two-dimensional nonlinear Darcy flow PDE (\Cref{sec:bqmc_nonlinear_darcy}). Our multilevel numerics follow in \Cref{sec:numerics_ml}.  \Cref{tab:ml_decay} gives the approximate means $\mu_\ell$ and standard deviations $\sqrt{V_\ell}$ for each problem. Our cost always increases with level, and we always normalize so that the cost on the maximum level is $1$.  

\begin{table}[!ht]
    \caption{Decay of the means $\mu_\ell$ and standard deviations $\sqrt{V_\ell}$ of differences with increasing level $\ell$.}
    \centering
    \resizebox{\textwidth}{!}{
    \begin{tabular}{rrrrrrrrr} 
        \multicolumn{9}{c}{mean $\mu_\ell = \bbE[Y_\ell(\bX)] = \bbE[f_\ell(\bX)-f_{\ell-1}(\bX)]$ with $\bX \sim \calU[0,1]^d$} \\
        problem/$\ell$ & 1 & 2 & 3 & 4 & 5 & 6 & 7 & 8 \\
        \hline 
        Asian option & 6.3e0  & -3.0e-1 & -1.5e-1 & -7.2e-2 & -3.7e-2 & -1.8e-2 & -9.3e-3 & -4.8e-3 \\
        Lookback option & 1.3e1 & 1.4e0 & 8.9e-1 & 5.8e-1& 4.1e-1 & 2.8e-1 & 1.8e-1 & 1.3e-1 \\
        elliptic PDE & 1.6e-1 & -1.1e-2 & 2.5e-3 & 1.5e-3  \\
        nonlinear Darcy PDE & 4.1e-2 & 8.5e-4 & 3.3e-4 \\
        \hline 
        \hline 
        \multicolumn{9}{c}{standard deviation $\sqrt{V_\ell} = \sqrt{\bbV[Y_\ell(\bX)]} = \sqrt{\bbV[f_\ell(\bX)-f_{\ell-1}(\bX)]}$ with $\bX \sim \calU[0,1]^d$} \\
        problem/$\ell$ & 1 & 2 & 3 & 4 & 5 & 6 & 7 & 8 \\
        \hline 
        Asian option & 8.7e0 & 6.8e-1 & 3.5e-1 & 1.7e-1 & 8.4e-2 & 4.3e-2 & 2.1e-2 & 1.1e-2 \\
        Lookback option & 1.3e1 & 2.1e0 & 1.4e0 & 9.0e-1 & 6.3e-1 & 4.2e-1 & 2.9e-1 & 2.0e-1 \\
        elliptic PDE & 1.4e-1 & 6.2e-2 & 1.0e-2 & 3.5e-3 \\
        nonlinear Darcy PDE & 7.8e-2 & 6.7e-3 & 1.9e-3 \\
        \hline 
    \end{tabular}
    }
    \label{tab:ml_decay}
\end{table}

\Subsubsection{Multilevel Option Pricing} \label{sec:bqmc_ml_opt_pricing}

Suppose we are given a financial option with starting price $S_0$, strike price $K$, interest rate $r$, and volatility $\sigma$ which is exercised at time $1$. We use $S_0=K=100$, $r=0.05$, and $\sigma=0.2$. We will assume the asset path follows a geometric Brownian motion $S(t) = S_0 e^{(r-\sigma^2/2)t+\sigma B(t)}$ where $B(t)$ is a standard Brownian motion. At level $\ell$, we monitor the asset at $d_\ell = 2^{2+\ell}$ times $(j/d_\ell)_{j=1}^{d_\ell}$ so that $(B(1/d_\ell),B(2/d_\ell),\dots,B(1))^\intercal \sim \calN\left(\bzero,\mSigma_\ell\right)$ where $\mSigma_\ell = \left(\min(j/d_\ell,j'/d_\ell)\right)_{j,j'=1}^{d_\ell}$. We will use the eigendecomposition $\mSigma_\ell = \mA_\ell \mA_\ell^\intercal$ to write $(B(1/d_\ell),B(2/d_\ell),\dots,B(1))^\intercal \sim \mA_\ell \Phi^{-1}(\bX_\ell)$ where $\Phi^{-1}$ is the inverse CDF of the standard normal applied elementwise to $\bX_\ell \sim \calU[0,1]^{d_\ell}$. The cost $C_\ell$ on level $\ell$ is proportional to $2^\ell$. We will consider the following two options with $L=8$ levels of discretization.

\begin{enumerate}
    \item \textbf{Asian Call Option with Geometric Averaging} Here the discounted fair prices of the infinite-dimensional options with continuous monitoring is \\ $\bbE\left[\max\left(\exp\left(\int_0^1 \log(S(t)) \D t\right) - K,0\right)\right]e^{-r}$, and for discrete monitoring at level $\ell$ we use the finite-dimensional approximation 
    $$f_\ell(\bX) = \max\left(\prod_{j=1}^{d_\ell} S(j/d_\ell) - K,0\right)e^{-r}.$$ 
    \item \textbf{Lookback Option} Here the discounted fair price of the infinite-dimensional option with continuous monitoring is $\bbE\left[S(1)-\min_{0 \leq t \leq 1} S(t)\right] e^{-r}$, and for discrete monitoring at level $\ell$ we use the finite-dimensional approximation 
    $$f_\ell(\bX) = \left[S(1) - \min(S(1/d_\ell),S(2/d_\ell),\dots,S(1))\right] e^{-r}.$$
\end{enumerate}

\Subsubsection{Multilevel Elliptic PDE} \label{sec:bqmc_elliptic}

Let us consider the one-dimensional elliptic PDE $-\nabla(e^{a(u,\bx)} \nabla F(u,\bx)) = g(u)$ with $u \in [0,1]$ and boundary conditions $F(0,\bx) = F(1,\bx) = 0$ for all $\bx \in [0,1]^d$. The forcing term $g$ is set to the constant $g(u) = 1$ for all $u \in [0,1]$. We will generate $a(u,\bX) = \sum_{j=1}^d \Phi^{-1}(X_j) \sin(\pi k u)/j$ with $\bX \sim \calU[0,1]^d$ in $d=8$ dimensions and $\Phi$ the CDF of the standard normal so $\Phi^{-1}(X_j) \sim \calN(0,1)$. Let us denote by $F_\ell$ the level $\ell$ numerical PDE solution using a finite difference method with $2^{1+\ell}+1$ evenly spaced mesh points $(k/2^{1+\ell})_{k=0}^{2^{1+\ell}}$. We will take the discretized solution to be $f_\ell(\bX) = F_\ell(1/2,\bX)$ and use $L=4$ levels. For each query of $f_\ell$ we need to solve a tridiagonal linear system, which can be done with linear complexity. Therefore, we take the cost $C_\ell$ on level $\ell$ to be proportional to $2^\ell$. 

\Subsubsection{Multilevel Nonlinear Darcy Flow PDE} \label{sec:bqmc_nonlinear_darcy}

\begin{figure}[!ht]
    \centering
    \includegraphics[width=1\textwidth]{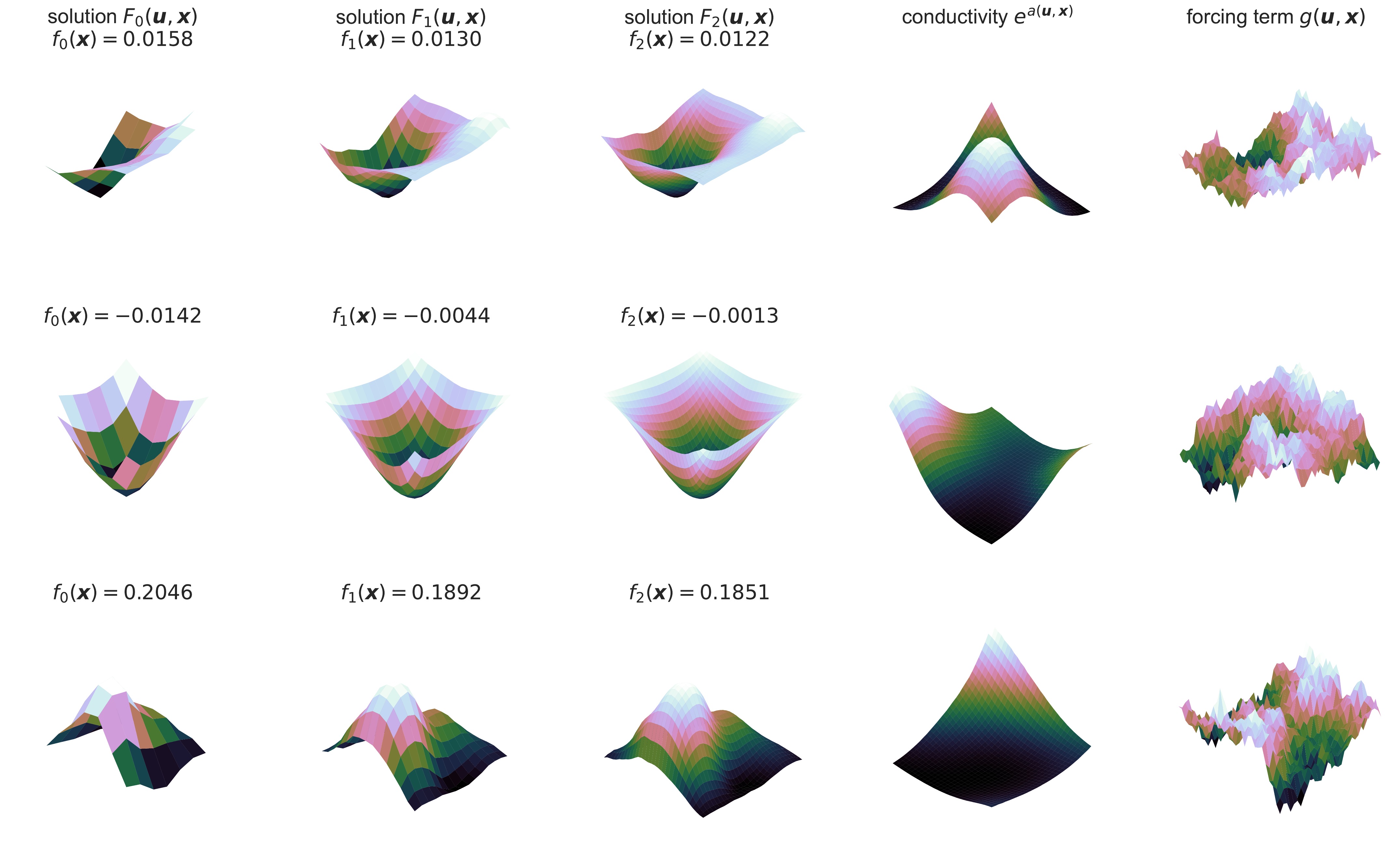}
    \caption{Visualization of the Darcy flow PDE where each row represents a different realization $\bx$ of both the random conductivity and forcing term. The resolution increases with each of the $L=3$ levels with the conductivity and forcing term shown at their maximum resolution. Here the quantity of interest on level $\ell\in \{1,2,3\}$ is $f_\ell(\bX) = \max_{\bu \in [0,1]^d} F_\ell(\bu,\bX)$ for $\bX \sim \calU[0,1]^d$.} 
    \label{fig:darcy}
\end{figure}

We now consider a two-dimensional nonlinear Darcy flow problem 
\begin{equation*} \label{eq:BMLQMC_Darcy}
\begin{cases}
    - \nabla \cdot (e^{a(\bu,\bx)} . \nabla F(\bu,\bx)) +  (F(\bu,\bx))^3 = g(\bu,\bx), & \bu \in [0,1]^2 \\
    F(\bu,\bx) = 0, & \bu \in \partial [0,1]^2
\end{cases}
\end{equation*}
where $g$ is a forcing term and $e^a$ represents the conductivity. Here we assume $g$ and $a$ are both random and independently drawn from Gaussian processes with the underlying stochasticity of both processes controlled by $\bX \sim \calU[0,1]^d$. We will take $g$ to be a draw from a GP with a $1/2$-Mat\'ern kernel and $a$ to be a draw from a GP with a Gaussian kernel. The exact parameterizations of these GPs can be found in the implementation code. As with the elliptic PDE example, we will let $F_\ell$ denote the level $\ell$ numerical PDE solution solved on a computational grid with $2^{2+\ell}-1$ mesh points $(k/2^{2+\ell})_{k=1}^{2^{2+\ell}-1}$ in each dimension. We numerically solve the PDE via iterative linearizations and a Levenberg--Marquardt scheme \cite{levenberg1944method,marquardt1963algorithm}. Our quantity of interest is $\bbE[\max_\bu(F(\bu,\bx))]$, so we take the discretized solution to be $f_\ell(\bX) = \max_\bu F_\ell(\bu,\bX)$ with $\bX \sim \calU[0,1]^d$. \Cref{fig:darcy} visualizes realizations of the conductivity, forcing term, and PDE solutions at varying levels of resolution. The numerical solver is run to machine precision backwards error. Theoretically, the cost of evaluating $f_\ell$ is $\calO(S_\ell 2^{6\ell})$, with $S_\ell$ the number of Levenberg--Marquardt steps, as at each step we need to solve a dense linear system in a $(2^{2+2\ell}-1) \times (2^{2+2\ell}-1)$ matrix. Practically, we take $L=3$ levels and set the costs proportional to the respective average runtimes $(4.2 \times 10^{-5},9.9 \times 10^{-5},2.8 \times 10^{-3})$. Due to the large scale nature of this problem with $d=3844$ dimensions, we chose to only consider the $\alpha=4$ DSI kernel in \Cref{def:BMLQMC_dsi_kernel}, i.e., we fixed $\beta_1=\beta_2=\beta_3=0$ and set $\beta_4=1$. This enabled faster hyperparameter optimization and reduced storage. Even so, our Darcy flow experiment required over $12$ hours running in parallel on $5$ NVIDIA A100 80 GB GPUs. 

\Subsubsection{Results for Multilevel Numerical Experiments} \label{sec:numerics_ml}

In the bottom row of \Cref{fig:dnet.convergence} we plot both the median true errors and median standard errors for multilevel problems with an increasing budget $N$. As was the case with the single-level problems, for these multilevel problems we again find IID Monte Carlo methods achieve matching true and standard errors with convergence like $\calO(N^{-1/2})$ for each problem. The RQMC methods also yielded accurate standard error estimates which closely matched the true errors for all problems. RQMC converged like $\calO(N^{-1/2})$ for both option pricing problems and the Darcy flow PDE with smaller rate constants than IID Monte Carlo in each case. For the elliptic PDE, RQMC converges like $\calO(N^{-1})$. For BQMC we are able to accommodate smaller budgets $N$ compared to the RQMC methods which require multiple replications, as evidence by the curves extending further to the left for BQMC than RQMC.

As with single-level problems, BQMC-net outperforms RQMC-net in terms of both true and standard errors in the vast majority of cases. We observed a better rate constant for BQMC-net than RQMC-net on both the Asian option and elliptic PDE examples. On the harder Lookback option and nonlinear Darcy examples we observed very similar errors between BQMC-net and RQMC-net. We did observe slightly better RQMC-net median true errors for the Darcy problem with the $N=2^9$ and $N=2^{10}$ budgets compared to BQMC-net, but BQMC-net then yields better true errors at our largest tested budgets of $N=2^{11}$ and $N=2^{12}$. Due to the lack of higher-order convergence, we also found the BQMC-net standard error estimate to be accurate for the true errors, even in small sample regimes. 

In \Cref{fig:dnet.ml.error_scatters} we plot the standard error and true error for each trial against each other. For BQMC methods we observed that as the budget increases the standard errors become tighter across the trials. In order words, for larger budgets $N$ the BQMC methods become more consistent and confident in their standard error predictions. This feature is shared with IID Monte Carlo methods, but not with RQMC methods which always aggregate a fixed number of independent estimates $R$. 

Finally, in \Cref{fig:dnet.ml.sample_allocation_group_level} we plot the sample allocation versus budget. IID Monte Carlo often seems to allocate a greater proportion of the budget to lower levels in comparison to RQMC and BQMC methods. The RQMC and BQMC methods yielded similar allocations across most problems for both point sets. BQMC did show a slight preference for allocating more budget to moderate levels while RQMC tended to allocate a slightly greater proportion of the budget to the high and low levels. 

\begin{figure}[!ht]
    \centering
    \includegraphics[width=1\textwidth]{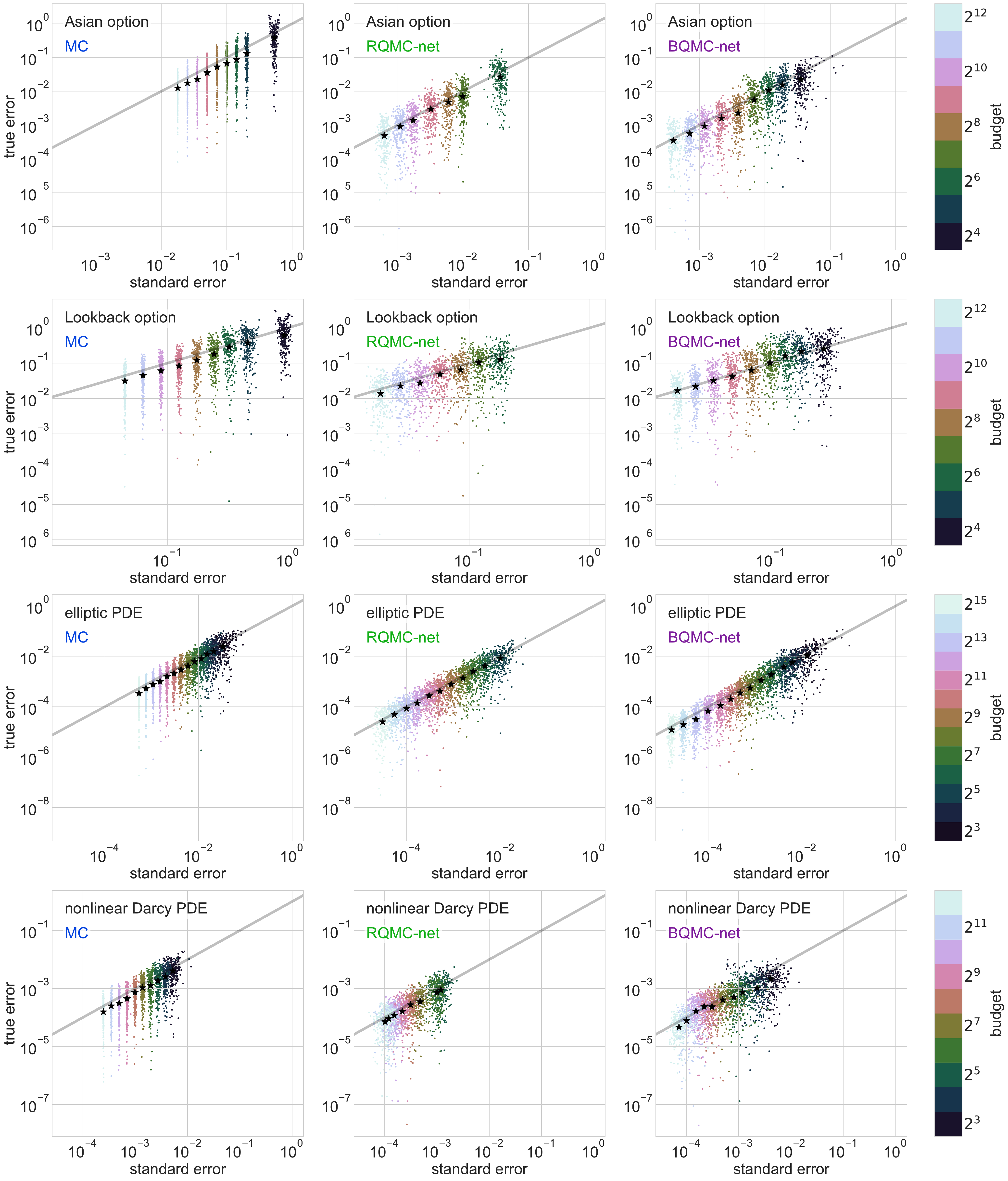}
    \caption{Standard error versus true error across trials for multilevel Monte Carlo (MC) with IID points, Quasi-Monte Carlo with replications (RQMC), and Quasi-Monte Carlo with fast Bayesian cubature (BQMC). The stars represent the median true and standard errors for each budget.} 
    \label{fig:dnet.ml.error_scatters}
\end{figure}

\begin{figure}[!ht]
    \centering
    \includegraphics[width=1\textwidth]{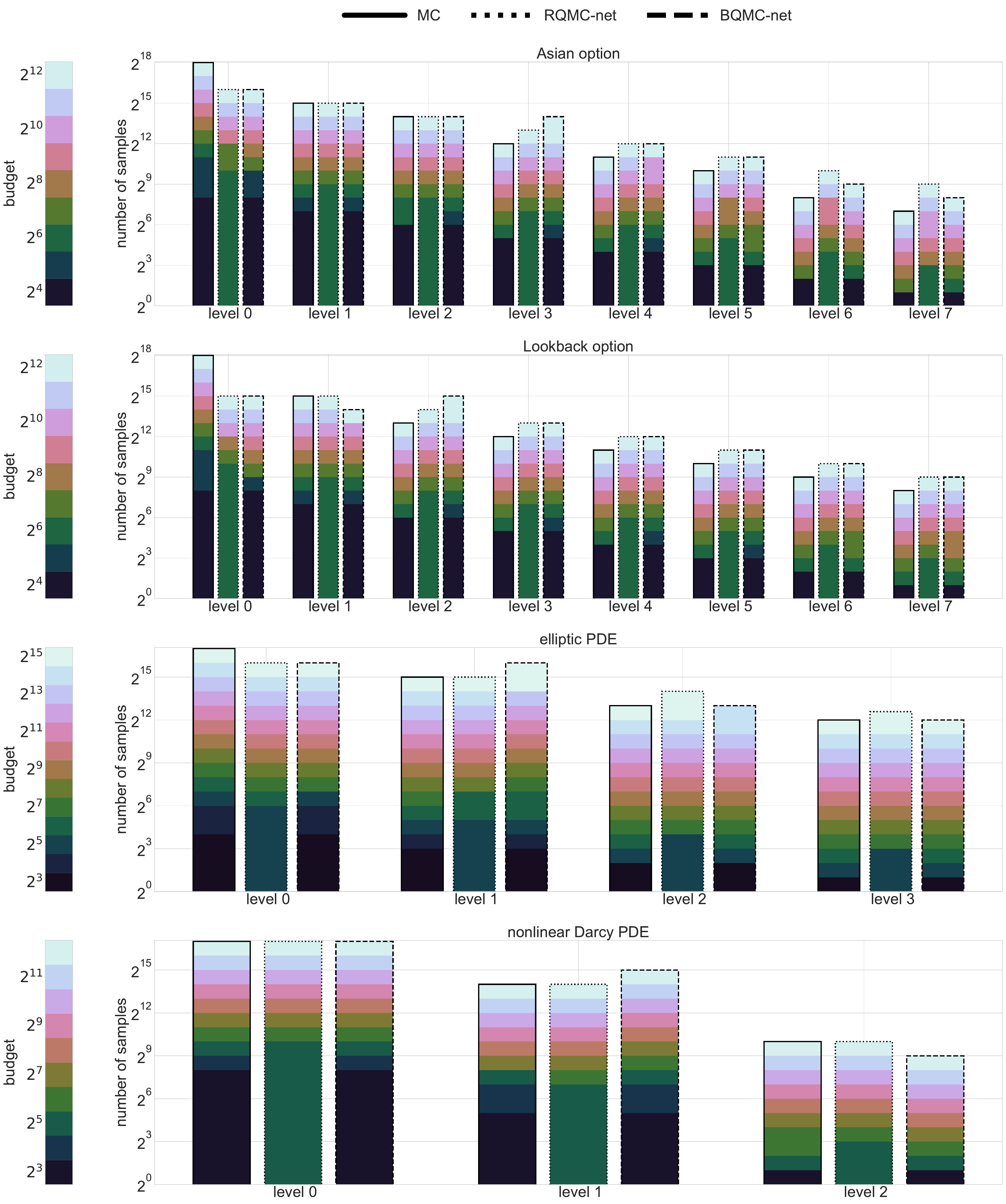}
    \caption{Sample allocation by budget versus level for multilevel Monte Carlo (MC) with IID points, Quasi-Monte Carlo with replications (RQMC), and Quasi-Monte Carlo with fast Bayesian cubature (BQMC).} 
    \label{fig:dnet.ml.sample_allocation_group_level}
\end{figure}

\Section{Gaussian Processes for Probability of Failure Estimation} \label{sec:pfgpci}

\begin{quotation}
    This section follows \cite{sorokin.adaptive_prob_failure_GP}, a paper I worked on during my 2022 Givens Associate Intern appointment at Argonne National Laboratories in collaboration with Vishwas Rao.
\end{quotation}

Efficiently approximating the probability of system failure has gained increasing importance as expensive simulations begin to play a larger role in reliability quantification tasks in areas such as structural design, power grid design, and safety certification among others. This work derives credible intervals on the probability of failure for a simulation which we assume is a realization of a Gaussian process. We connect these intervals to binary classification error and comment on their applicability to a broad class of iterative schemes proposed throughout the literature. A novel iterative sampling scheme is proposed which can suggest multiple samples per batch for simulations with parallel implementations. We empirically test our scalable, open-source implementation on a variety of simulations including a Tsunami model where failure is quantified in terms of maximum wave height.

Specifically, the novel contributions of this section are as follows.
\begin{itemize}
    \item Provide credible intervals that hold with guaranteed confidence for schemes which derive probability of failure estimates from a probabilistic surrogate model.
    \item Propose a sampling scheme for iteratively updating a Gaussian process surrogate to approximate probability of failure. This scheme is suitable to high performance computing (HPC) settings where the expensive simulation may be evaluated at multiple parameter configurations in parallel.  
    \item Provide a scalable, open-source implementation into the \texttt{QMCPy} library. 
\end{itemize}

\Subsection{Binary Classification with Gaussian Processes} \label{sec:binary_classificaiton_GP}

Suppose we have access to a simulation $g: [0,1]^d \to \bbR$ where we assume failure occurs whenever $g(\bu) \geq 0$. Our goal is to quantify the probability of system failure subject to random parameters $\bU \sim \calU[0,1]^d$; see \Cref{sec:variable_transforms} for a discussion on transformations to uniform randomness. We will model $g$ as a Gaussian process (GP) with mean $M: [0,1]^d \to \bbR$ and SPD covariance kernel $K: [0,1]^d \times [0,1]^d \to \bbR$. The mean $M$ and kernel $K$ may correspond to either the prior or posterior distributions, see \Cref{sec:gps} for discussion on GP regression. 

Here we have two sources of uncertainty: there is uncertainty in the canonical $\bU$ on probability space $([0,1]^d,\calB([0,1]^d),\bbU)$ where $\bbU$ is the Lebesgue measure on $[0,1]^d$, and there is uncertainty in the canonical stochastic GP $g$ on probability space $(\Omega,\calF,\bbG)$ indexed by elements in $[0,1]^d$. Here $\bbU$ may be thought of as a ``horizontal'' probability over the parameter space while $\bbG$ may be thought of as the ``vertical'' probability over the GP function space.

Let us define the random \emph{horizontal failure and success regions} $F,S: \Omega \to \calB([0,1]^d)$ by
\begin{align*}
    F(g) &:= \{\bu \in [0,1]^d: g(\bu) \geq 0\} \qqtqq{and} S(g) &:= \{\bu \in [0,1]^d: g(\bu) < 0\}
\end{align*}
respectively. Then our quantity of interest is the \emph{horizontal probability of failure} $P: \Omega \to [0,1]$ defined as 
\begin{equation}
    P(g) := \bbU(F(g)) = \bbE_\bbU[1_{F(g)}(\bU)].
    \label{eq:P(g)}
\end{equation}
We will also define the \emph{vertical probability of failure} $p: [0,1]^d \to [0,1]$ by 
\begin{equation}
    p(\bU) := \bbE_\bbG[1_{F(g)}(\bU)] = \Phi\left(\frac{M(\bU)}{\sigma(\bU)}\right)
    \label{eq:p(U)}
\end{equation}
where $\Phi$ is the CDF of a standard normal distribution and $\sigma^2(\bU) := K(\bU,\bU)$ is the variance of the GP at $\bU$. The framework presented here applies to more general probabilistic models when the last equality above is replaced appropriately for non-Gaussian processes. 

Let the deterministic \emph{predicted horizontal failure and success regions} $\hat{F},\hat{S} \subseteq [0,1]^d$ be
\begin{align*}
    \hat{F} &:= \{\bu \in [0,1]^d: p(\bu) \geq 1/2 \} = \{\bu \in [0,1]^d: M(\bu) \geq 0\}, \\
    \hat{S} &:= \{\bu \in [0,1]^d: p(\bu) < 1/2\} = \{\bu \in [0,1]^d: M(\bu) < 0\}.
\end{align*}
In other words, $\hat{F}$ and $\hat{S}$ are the regions where the GP mean fails and succeeds respectively. Then we may define the \emph{true positive, true negative, false positive, and false negative} functions $\mathrm{TP},\mathrm{FP},\mathrm{TN},\mathrm{FN}: \Omega \to \calB([0,1]^d)$ by  
\begin{alignat*}{2}
    \mathrm{TP}(g) &= \hat{F} \cap F(g), \qquad 
    \mathrm{FP}(g) &= \hat{F} \cap S(g), \\
    \mathrm{TN}(g) &= \hat{S} \cap S(g), \qquad 
    \mathrm{FN}(g) &= \hat{S} \cap F(g).
\end{alignat*}
Here positive indicates failure and negative indicates success. The bottom two panels of \Cref{fig:TP_FP_TN_FN} visualize these four disjoint regions covering $[0,1]^d$ for some $g_1,g_2 \in \Omega$. 

Notice that $1_{\mathrm{TP}(g)}(\bU) + 1_{\mathrm{TN}(g)}(\bU) + 1_{\mathrm{FP}(g)}(\bU) + 1_{\mathrm{FN}(g)}(\bU) = 1$. Then the \emph{expected accuracy} $\mathrm{ACC}: [0,1]^d \to [0,1]$ is naturally defined as
\begin{align*}
  \mathrm{ACC}(\bU) &:= \bbE_\bbG \left[1_{\mathrm{TP}(g)}(\bU)+1_{\mathrm{TN}(g)}(\bU)\right] \\
  &= 1_{\hat{F}}(\bU)p(\bU) + 1_{\hat{S}}(\bU)[1-p(\bU)] \\
  &= \max\{p(\bU),1-p(\bU)\},
\end{align*}
and the \emph{expected error rate} $\mathrm{ERR}: [0,1]^d \to [0,1]$ is defined as 
\begin{equation}
    \begin{aligned}
    \mathrm{ERR}(\bU) &:= 1-\mathrm{ACC}(\bU) \\
    &= 1_{\hat{F}}(\bU)[1-p(\bU)] + 1_{\hat{S}}(\bU)p(\bU) \\
    &= \min\{p(\bU),1-p(\bU)\}.
    \end{aligned}
    \label{eq:ERR}
\end{equation}
The $\max$ and $\min$ expressions result from the fact that $\bU \in \hat{F}$ if and only if $p(\bU) \geq 1/2$. 

\begin{figure}[!ht]
    \centering
    \includegraphics[width=\textwidth]{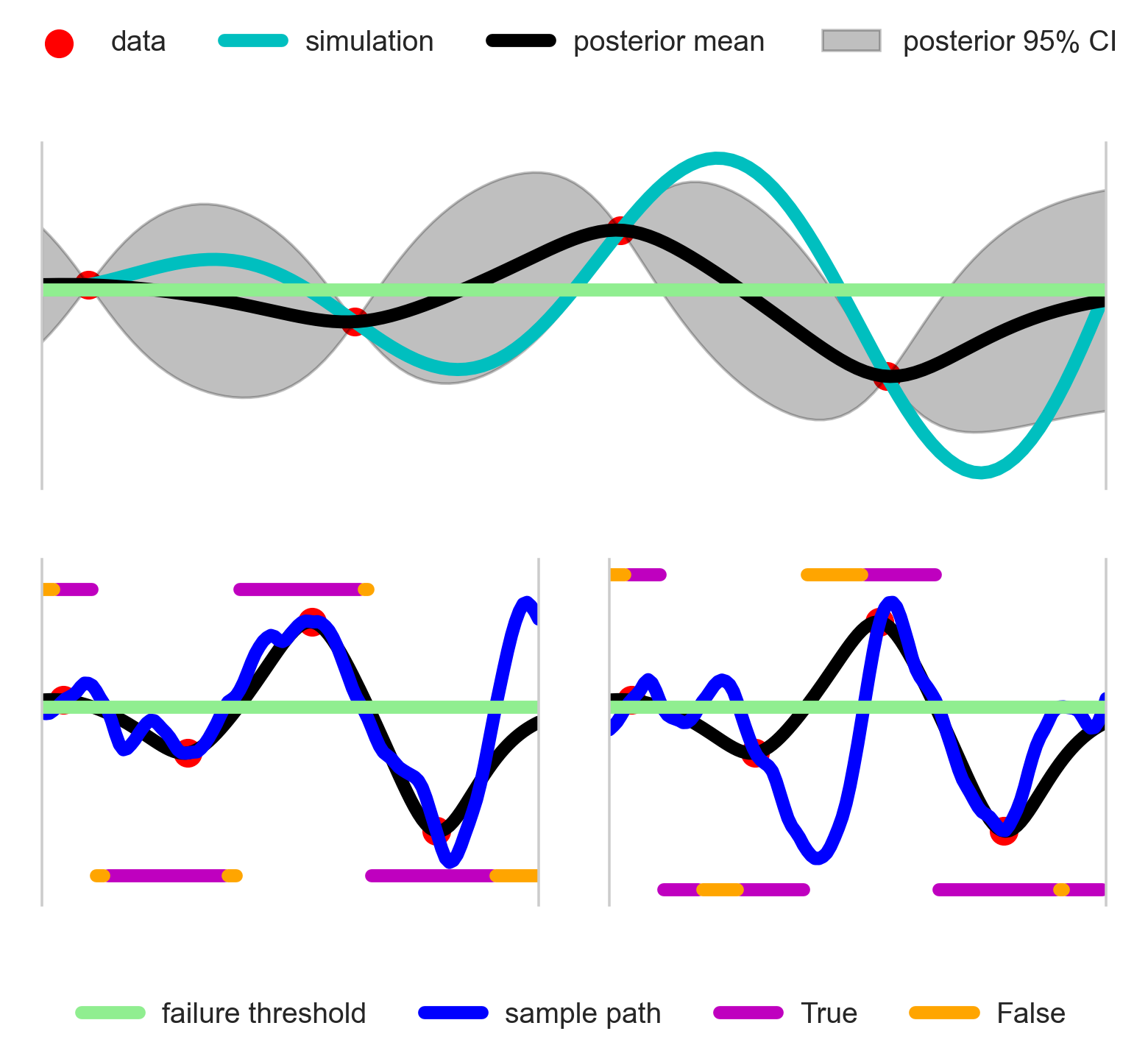}
    \caption{The top figure shows the true simulation and a Gaussian process fit to a few data points. The bottom two plots show some Gaussian process sample paths $g_1,g_2 \in \Omega$ with their corresponding $\mathrm{TP}$, $\mathrm{FP}$, $\mathrm{TN}$, and $\mathrm{FN}$ regions (true positive, false positive, true negative, and false negative regions). The predicted failure $\mathrm{TP}$ and $\mathrm{FP}$ regions are shown along the top of each plot while the predicted success $\mathrm{TN}$ and $\mathrm{FN}$ regions are shown along the bottom of each plot.}
    \label{fig:TP_FP_TN_FN}
\end{figure}

\Subsection{Estimators and Credible Intervals} \label{sec:estimators_error_bounds}

Our main contributions are \Cref{thm:ci_P_check} and \Cref{thm:ci_P_hat} in this section which derive error bounds for the horizontal probability of failure quantity of interest $P(g)$ in \eqref{eq:P(g)}. Specifically, we derive credible intervals of the form $[\underline{P},\overline{P}]$ so that 
\begin{equation}
    \bbG\left(P \in [\underline{P},\overline{P}]\right) \geq 1-\alpha.
    \label{eq:ci}
\end{equation}
where $\alpha \in (0,1)$ is some uncertainty threshold.

\begin{theorem}[Credible Interval from Posterior Mean Estimate] \label{thm:ci_P_check}
    Denote the posterior mean estimate by
    \begin{equation}
        \cP := \bbE_\bbG[P(g)] = \bbE_\bbG[\bbE_\bbU[1_{F(g)}(\bU)]] = \bbE_\bbU[p(\bU)].
        \label{eq:P_check}
    \end{equation} 
    Then \eqref{eq:ci} holds when 
    \begin{equation}
        \underline{P} = \max\left\{\cP-\check{\gamma},0\right\}, \qquad \overline{P} = \min\left\{\cP+\check{\gamma},1\right\}, \qquad \check{\gamma} := \frac{2\bbE_\bbU\left[p(\bU)(1-p(\bU))\right]}{\alpha}. 
        \label{eq:p_bounds_check}
    \end{equation}
\end{theorem}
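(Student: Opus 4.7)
The plan is to prove \eqref{eq:ci} via a one-sided Markov inequality applied to the random variable $|P(g) - \cP|$, using the observation that for fixed $\bu$, the quantity $1_{F(g)}(\bu)$ is a Bernoulli random variable under $\bbG$ with parameter $p(\bu)$, so its mean absolute deviation from $p(\bu)$ equals exactly $2p(\bu)(1-p(\bu))$.

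First I would show that $\bbE_\bbG\left[|P(g) - \cP|\right] \leq 2\bbE_\bbU[p(\bU)(1-p(\bU))]$. Starting from the definition $P(g) - \cP = \bbE_\bbU\!\left[1_{F(g)}(\bU) - p(\bU)\right]$, I would apply Jensen's inequality to pull the absolute value inside the $\bbU$ expectation, then Fubini to swap the order of expectations:
\begin{equation*}
\bbE_\bbG\!\left[|P(g) - \cP|\right] \;\leq\; \bbE_\bbG \bbE_\bbU\!\left[\left|1_{F(g)}(\bU) - p(\bU)\right|\right] \;=\; \bbE_\bbU \bbE_\bbG\!\left[\left|1_{F(g)}(\bU) - p(\bU)\right|\right].
\end{equation*}
Next, for each fixed $\bu$, the indicator $1_{F(g)}(\bu)$ is $\mathrm{Bernoulli}(p(\bu))$ under $\bbG$ by \eqref{eq:p(U)}, so a direct two-case computation yields $\bbE_\bbG\!\left[|1_{F(g)}(\bu) - p(\bu)|\right] = (1-p(\bu))\,p(\bu) + p(\bu)\,(1-p(\bu)) = 2p(\bu)(1-p(\bu))$. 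Integrating over $\bbU$ then gives the claimed bound on the mean absolute deviation.

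Next I would apply Markov's inequality to $|P(g) - \cP| \geq 0$:
\begin{equation*}
\bbG\!\left(|P(g) - \cP| \geq \check{\gamma}\right) \;\leq\; \frac{\bbE_\bbG\!\left[|P(g) - \cP|\right]}{\check{\gamma}} \;\leq\; \frac{2\bbE_\bbU\!\left[p(\bU)(1-p(\bU))\right]}{\check{\gamma}} \;=\; \alpha,
\end{equation*}
where the last equality is the definition of $\check{\gamma}$ in \eqref{eq:p_bounds_check}. Taking complements shows $P(g) \in [\cP - \check{\gamma}, \cP + \check{\gamma}]$ with $\bbG$-probability at least $1-\alpha$. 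Finally, since $P(g) \in [0,1]$ deterministically (it is the $\bbU$-measure of a set), I can intersect the interval $[\cP - \check{\gamma}, \cP + \check{\gamma}]$ with $[0,1]$ without decreasing the coverage probability, which yields exactly the stated $\underline{P}$ and $\overline{P}$.

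The main obstacle is the interchange of $\bbE_\bbG$ and $\bbE_\bbU$ via Fubini, which requires either integrability of $|1_{F(g)}(\bU) - p(\bU)|$ on the product space (immediate since it is bounded by $1$) together with measurability of $(g, \bu) \mapsto 1_{F(g)}(\bu) = 1_{\{g(\bu) \geq 0\}}$ jointly in $g$ and $\bu$; under standard separability assumptions on the GP this joint measurability holds, and I would note this explicitly. The remainder of the argument is routine.
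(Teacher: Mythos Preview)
Your proposal is correct and follows essentially the same approach as the paper: Markov's inequality on $|P-\cP|$, Jensen to pull the absolute value inside $\bbE_\bbU$, Fubini to swap expectations, and the Bernoulli mean-absolute-deviation identity $\bbE_\bbG[|1_{F(g)}(\bu)-p(\bu)|]=2p(\bu)(1-p(\bu))$. The only cosmetic difference is that the paper obtains this last identity via the decomposition $1_{F(g)}-p = 1_{F(g)}(1-p) - 1_{S(g)}\,p$ followed by the triangle inequality, whereas you compute the Bernoulli MAD directly; your route is slightly more transparent, and your explicit treatment of the Fubini measurability hypothesis and the final clipping to $[0,1]$ are welcome additions that the paper leaves implicit.
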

\begin{proof}
    Markov's inequality, Jensen's inequality, and the triangle inequality imply that for $\gamma > 0$ we have 
    \begin{align*}
        \gamma \bbG\left(\left\lvert P -\cP \right\rvert \geq \gamma \right) &\leq \bbE_\bbG \left[\left\lvert P -\cP \right\rvert\right] \\
        &= \bbE_\bbG\left[\left\lvert \bbE_\bbU [1_{F(g)}(\bU)] - \bbE_\bbU [p(\bU)] \right\rvert \right] \\
        &= \bbE_\bbG\left[\left\lvert \bbE_\bbU \left[1_{F(g)}(\bU) - p(\bU)\right]\right\rvert\right] \\
        &= \bbE_\bbG\left[\left\lvert \bbE_\bbU\left[1_{F(g)}(\bU)\left(1-p(U)\right) - 1_{S(g)}(\bU)p(\bU)\right] \right\rvert \right] \\
        &\leq \bbE_\bbG\left[\bbE_\bbU\left[1_{F(g)}(\bU)\left(1-p(\bU)\right) + 1_{S(g)}(\bU)p(\bU)\right] \right] \\
        &= \bbE_\bbU\left[\bbE_\bbG\left[1_{F(g)}(\bU)\right]\left(1-p(\bU)\right) + \bbE_\bbG\left[1_{S(g)}(\bU)\right]p(\bU)\right] \\
        &= 2\bbE_\bbU\left[p(\bU)\left(1-p(\bU)\right)\right].
    \end{align*}
\end{proof}

\begin{theorem}[Credible Interval from Predicted Failure Region Estimate] \label{thm:ci_P_hat}
    Denote the probability of the predicted failure region estimate by 
    \begin{equation}
        \hat{P} := \bbU(\hat{F}) = \bbE_\bbU[1_{\hat{F}}(\bU)]. \label{eq:P_hat}
    \end{equation}
    Then \eqref{eq:ci} holds when 
    \begin{equation}
        \underline{P} = \max\left\{\hat{P}-\hat{\gamma},0\right\}, \qquad \overline{P} = \min\left\{\hat{P}+\hat{\gamma},1\right\}, \qquad \hat{\gamma} := \frac{\bbE_\bbU\left[\mathrm{ERR}(\bU)\right]}{\alpha}
        \label{eq:p_bounds_hat}
    \end{equation}
    for $\mathrm{ERR}(\bU) = \min\{p(\bU),1-p(\bU)\}$ as defined in \eqref{eq:ERR}.
\end{theorem}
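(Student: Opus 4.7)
The plan is to mirror the Markov-plus-triangle-inequality strategy used in \Cref{thm:ci_P_check}, but with the key identity shifted from expected-posterior-variance terms $p(1-p)$ to the classification error $\mathrm{ERR}(\bU) = \min\{p(\bU),1-p(\bU)\}$. The target is to show $\bbG(\lvert P-\hat{P}\rvert \geq \hat{\gamma}) \leq \alpha$, since intersecting the resulting interval with $[0,1]$ preserves coverage (as $P$ is automatically in $[0,1]$).

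First, I will apply Markov's inequality: for any $\gamma>0$,
$$\gamma \, \bbG\bigl(\lvert P - \hat{P}\rvert \geq \gamma\bigr) \leq \bbE_\bbG\bigl[\lvert P - \hat{P}\rvert\bigr].$$
Next I will rewrite the integrand using the definitions \eqref{eq:P(g)} and \eqref{eq:P_hat}:
$$P(g) - \hat{P} = \bbE_\bbU\bigl[1_{F(g)}(\bU) - 1_{\hat{F}}(\bU)\bigr] = \bbE_\bbU\bigl[1_{\mathrm{FN}(g)}(\bU) - 1_{\mathrm{FP}(g)}(\bU)\bigr],$$
since $F(g) \setminus \hat{F} = \mathrm{FN}(g)$ and $\hat{F} \setminus F(g) = \mathrm{FP}(g)$ by the definitions in \Cref{sec:binary_classificaiton_GP}. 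Pulling the absolute value inside $\bbE_\bbU$ via the triangle inequality and using $\lvert a - b\rvert \leq a+b$ for the non-negative indicators yields
$$\lvert P(g) - \hat{P}\rvert \leq \bbE_\bbU\bigl[1_{\mathrm{FN}(g)}(\bU) + 1_{\mathrm{FP}(g)}(\bU)\bigr].$$

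Then I will take $\bbE_\bbG$ of both sides and swap the order of integration (Fubini/Tonelli, justified by non-negativity). The crux is the pointwise identity
$$\bbE_\bbG\bigl[1_{\mathrm{FN}(g)}(\bU) + 1_{\mathrm{FP}(g)}(\bU)\bigr] = 1_{\hat{S}}(\bU)\,p(\bU) + 1_{\hat{F}}(\bU)\,(1-p(\bU)) = \mathrm{ERR}(\bU),$$
which follows from $\mathrm{FN}(g) = \hat{S} \cap F(g)$ and $\mathrm{FP}(g) = \hat{F} \cap S(g)$ together with the definition \eqref{eq:p(U)} of $p(\bU)$ and the $\min/\max$ form of $\mathrm{ERR}$ in \eqref{eq:ERR}. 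Chaining the bounds gives
$$\bbE_\bbG\bigl[\lvert P - \hat{P}\rvert\bigr] \leq \bbE_\bbU\bigl[\mathrm{ERR}(\bU)\bigr].$$
Finally, setting $\gamma = \hat{\gamma} = \bbE_\bbU[\mathrm{ERR}(\bU)]/\alpha$ in the Markov bound yields $\bbG(\lvert P - \hat{P}\rvert \geq \hat{\gamma}) \leq \alpha$, and clipping the endpoints to $[0,1]$ preserves the inclusion since $P \in [0,1]$ almost surely.

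I do not foresee a serious obstacle: the one place to be careful is the identification of the symmetric difference $F(g) \triangle \hat{F}$ with $\mathrm{FN}(g) \cup \mathrm{FP}(g)$ as disjoint sets, so that the triangle-inequality step does not lose more than necessary. The rest is purely a Fubini swap and substitution of the definition of $\mathrm{ERR}$, both of which are immediate from \Cref{sec:binary_classificaiton_GP}.
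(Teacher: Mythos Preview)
Your proposal is correct and follows essentially the same route as the paper: Markov's inequality on $\lvert P-\hat P\rvert$, the decomposition $1_{F(g)}-1_{\hat F}=1_{\mathrm{FN}(g)}-1_{\mathrm{FP}(g)}$, the triangle inequality to replace the difference by the sum, Fubini, and the pointwise identity $\bbE_\bbG[1_{\mathrm{FN}(g)}+1_{\mathrm{FP}(g)}]=\mathrm{ERR}(\bU)$. The only cosmetic difference is that the paper writes the decomposition as $1_{F(g)}1_{\hat S}-1_{S(g)}1_{\hat F}$ before recognizing it as $\mathrm{FN}-\mathrm{FP}$, whereas you invoke the $\mathrm{FN}/\mathrm{FP}$ notation directly.
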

\begin{proof}
    Markov's inequality, Jensen's inequality, and the triangle inequality imply that for $\gamma > 0$ we have 
    \begin{align*}
        \gamma \bbG\left(\left\lvert P - \hat{P} \right\rvert \geq \gamma \right) &\leq \bbE_\bbG \left[\left\lvert P - \hat{P} \right\rvert\right] \\
        &= \bbE_\bbG\left[\left\lvert \bbE_\bbU\left[1_{F(g)}(\bU) - 1_{\hat{F}}(\bU)\right]\right\rvert\right] \\
        &= \bbE_\bbG\left[\left\lvert \bbE_\bbU\left[1_{F(g)}(\bU)1_{\hat{S}}(\bU) - 1_{S(g)}(\bU)1_{\hat{F}}(\bU)\right]\right\rvert\right] \\
        &\leq \bbE_\bbG\left[\bbE_\bbU\left[1_{F(g)}(\bU)1_{\hat{S}}(\bU) + 1_{S(g)}(\bU)1_{\hat{F}}(U)\right]\right] \\
        &= \bbE_\bbU\left[\bbE_\bbG\left[1_{F(g)}(\bU)1_{\hat{S}}(\bU) + 1_{\bar{F}}(\bU)\right]1_{\hat{F}}(\bU)\right] \\
        &= \bbE_\bbU[\bbE_\bbG[1_\mathrm{FN}(\bU)+1_\mathrm{FP}(\bU)]] \\
        &= \bbE_\bbU[\mathrm{ERR}(\bU)].
    \end{align*}
\end{proof}

Since for any $\bU \in [0,1]^d$ either $p(\bU) \geq 1/2$ or $1-p(\bU) \geq 1/2$, we see that $\min\{p(\bU),1-p(\bU)\} \leq 2p(\bU)\left(1-p(\bU)\right)$. Therefore, the credible interval in \Cref{thm:ci_P_hat} is tighter than the credible interval in \Cref{thm:ci_P_check}, and going forward we take $[\underline{P},\overline{P}]$ to be defined as in \eqref{eq:p_bounds_hat}. 

\Subsection{Adaptive Algorithm} \label{sec:adaptive_sampling_algorithm_cost}

Our proposed algorithm sequentially updates a surrogate GP to refine the probability of failure estimate and shrink the resulting credible interval. The general iterative procedure is as follows. 
\begin{enumerate}
    \item \textbf{\textbf{Input}} a GP prior distribution $\bbG_0$ specified by a prior mean function $M_0$ and prior covariance function $K_0$.
    \item \textbf{\textbf{Input}} a simulation $g$ which we assume is a realization of the GP. The prior number of samples of $g$ is $n \gets 0$. 
    \item \textbf{\textbf{Step 1}} Evaluate $g$ at some batch of nodes $\bx_1,\dots,\bx_b$ and set $n \gets n+b$.
    \item \textbf{\textbf{Step 2}} Based on all previous evaluations of $g$, update the GP posterior distribution to $\bbG_n$ with posterior mean $M_n$ and posterior covariance function $K_n$. 
    \item \textbf{\textbf{Step 3}} Compute $N$-sample Quasi-Monte Carlo (see \Cref{sec:qmc}) approximations $\hat{P}^\mathrm{QMC}_n$ and $\hat{\gamma}^\mathrm{QMC}_{n}$ of $\hat{P}_n$ and $\hat{\gamma}_{n}$ respectively as defined in \eqref{eq:P_hat}, \eqref{eq:p_bounds_hat} based on the posterior distribution $\bbG_n$. 
    \item \textbf{\textbf{Step 4}} If the approximate $1-\alpha$ credible interval 
    $$[\underline{P}^\mathrm{QMC}_{n},\overline{P}^\mathrm{QMC}_{n}] := [\max\{\hat{P}^\mathrm{QMC}_n-\hat{\gamma}^\mathrm{QMC}_{n},0\},\min\{\hat{P}^\mathrm{QMC}_n+\hat{\gamma}^\mathrm{QMC}_{n},1\}]$$
    is desirably narrow or the budget for sampling $g$ has expired, we are done. Otherwise, return to Step 1 to continue refining the approximate estimate and credible interval. 
\end{enumerate}

Despite not being written explicitly, the estimates $\hat{P}_n^\mathrm{QMC}$ and $\hat{\gamma}_n^\mathrm{QMC}$ in Step 3 depend on the desired uncertainty threshold $\alpha$, the posterior distribution $\bbG_n$, and the $N$ QMC sampling nodes $\mU_N := \{\bu_i\}_{i=0}^{N-1}$. We choose to leave the low-discrepancy nodes $\mU_N$ unchanged across iterations. $N$ is chosen to be large, e.g., $N=2^{20}$, as the QMC estimates are computed using only the posterior mean $M_n$ and posterior covariance $K_n$ which are relatively cheap to evaluate. 

The choice of sampling scheme in Step 1 is arbitrary. A number of deterministic one step ($b=1$) look ahead schemes are proposed in \cite{bae.pf_gp_uncertainty_reduction,bect.sequential_design_experiments_pf,wagner2022rare,vazquez2009sequential,bichon2008efficient,lv2015new,renganathan.PF_CAMERA_multifidelity,dalbey2014gaussian,echard2013combined}. While some are theoretically applicable to $b>1$, the optimization required in practice quickly becomes intractable. We wish to allow $b>1$ so parallel implementations of simulations on HPC systems may be fully utilized.

To this end, we propose to sample $b$ IID points from the distribution with unnormalized density $2\mathrm{ERR}_n \leq 1$ as defined in \eqref{eq:ERR} under $\bbG_n$. We use rejection sampling \cite{flury1990acceptance,ripley2009stochastic,rubinstein2016simulation} to draw IID samples from the unnormalized density $\varrho = 2\mathrm{ERR}_n$, see \Cref{alg:RS}. The random number of tries $T$ required to draw a single sample from density $\varrho$ with rejection sampling follows a geometric distribution $T \sim \mathrm{Geom}(c)$ where $c = \bbE\left[\varrho(\bU)\right]$ is equivalent to the rejection sampling efficiency with $\bU \sim \calU[0,1]^d$. Therefore, the expected number of tries required to draw $b$ IID points from $2\mathrm{ERR}_n$ is $b/(2\bbE_\bbU[\mathrm{ERR}_n(\bU)])=b/(2\alpha \hat{\gamma}_n)$. This randomized, non-greedy scheme shows strong empirical performance in the next section. 
 
\begin{algorithm}[!ht]
    \fontsize{12}{10}\selectfont
    \caption{$\boldsymbol{\mathrm{AlgRS}}(\varrho,b)$: Rejection Sampling} \label{alg:RS}
    \begin{algorithmic}
        \Require $\varrho:[0,1]^d \to [0,1]$ \Comment{unnormalized density to sample from}
        \Require $b \in \bbN$ \Comment{the number of samples to draw}
        \State{$\mX_0 \gets \emptyset$ \Comment{initialize an empty point set of IID draws from $\varrho$}}
        \State{$i \gets 0$ \Comment{initialize accepted counter}}
        \State{$t \gets 0$ \Comment{initialize tried counter}}
        \While{$i<b$}
            \State{draw $\bX_t \sim \calU[0,1]^d$ independent of $\{\bX_j: j<t\}, \{\bU_j: j<t\}$ \Comment{draw candidate}}
            \State{draw $\bU_t \sim \calU(0,1)$ independent of $\{\bX_j: j \leq t\}, \{\bU_j: j < t\}$ \Comment{draw threshold}}
            \If{$\bU_t \leq \varrho(\bX_t)$}
              \State{$i \gets i+1$ \Comment{increment accepted counter}}
              \State{$\mX_i \gets \mX_{i-1} \cup \{\bX_t\}$ \Comment{add candidate to the accepted set}}
            \EndIf
            \State{$t \gets t+1$ \Comment{increment tried counter}}
        \EndWhile
        \\\Return{$\mX_{b}$ \Comment{return the $b$ IID draws from $\varrho$}}
    \end{algorithmic}
\end{algorithm}

\Subsection{Numerical Experiments}\label{sec:pfgpci_numerical_experiments}

The following numerical experiments are based on our open-source Python implementation in \texttt{QMCPy}. 
Since $\mathrm{ERR}_0 \propto 1_{[0,1]^d}$, a uniform density, the initial samples are selected from a randomized low-discrepancy sequence rather than drawing IID points using rejection sampling as indicated in the algorithm. As discussed in \Cref{sec:hpopt_sgps}, the prior mean $M_0$ and kernel $K_0$ often involve hyperparameters which may be optimized to better fit the data. We will use the Mat\'ern kernels in \Cref{def:matern_kernel} which have parameters for smoothness, lengthscales, and a global scale. Our implementation allows one to either re-optimize hyperparameters at each iteration or keep them fixed after a certain number of iterations. 

We consider a number of benchmark examples in small and medium dimensions. These are listed in \Cref{table:toy_examples} alongside their true solution and corresponding figure reference. Such examples are often defined by an original function $\tilde{g}: \calT \to \bbR$ with respect to non-uniform random variable $\bV$ on $\calT$ and where failure occurs when the simulation exceeds some $\xi \in \bbR$. We may perform a change of variables so $\bV \sim \varphi(\bU)$ where $\bU \sim \calU[0,1]^d$ as desired and the probability of failure of $g(\bU) := \tilde{g}(\varphi(\bU))-\xi$ is equivalent to the probability of failure of $\tilde{g}(\bV)$. See \Cref{sec:variable_transforms} or \cite{choi.QMC_software} for a more rigorous variable transformation framework and further examples.

\begin{table}[!ht]
    \caption{Benchmark problems across a variety of dimensions.}
    \centering
    \begin{tabular}{c c c c c}
        problem & dimension & true $P(g)$ & figure\\
        \hline
        Sine & $1$ & $0.50$ & \Cref{fig:sine} \\
        \hline \\
        Multimodal \cite{bichon2008efficient} & $2$ & $0.30$ & \Cref{fig:multimodal} \\
        \hline \\
        Four Branch \cite{schobi2017rare} & $2$ & $0.21$ & \Cref{fig:fourbranch} \\
        \hline \\
        Ishigami \cite{ishigami1990importance} & $3$ & $0.16$ & \Cref{fig:ishigami} \\
        \hline \\
        Hartmann \cite{balandat2020botorch} & $6$ & $0.0074$  & \Cref{fig:hartmann} \\ 
        \hline
    \end{tabular}
    \label{table:toy_examples}
\end{table}

As a more realistic example, we look at the probability the maximum height of a Tsunami exceeds $3$ meters SSHA (sea surface height anomaly) subject to uniform uncertainty on the origin of the Tsunami. The Tsunami simulation was run using the \texttt{UM-Bridge} software \cite{umbridge.software}, an interface for deploying containerized models. The Tsunami simulation is detailed  in \cite{seelinger2021high}. 

In all the examples we see the credible interval captures the true mean and the width of the credible interval decreases steadily across iterations. The true error is usually at least one order of magnitude below the error indicated by the credible interval. Notice the desirable behavior where the samples begin to cluster near the failure boundary in later iterations. We found results to be highly dependent on the choice of the prior kernel, prior mean, and their hyperparameters. 

\begin{figure}[!ht]
    \centering
    \includegraphics[width=\textwidth]{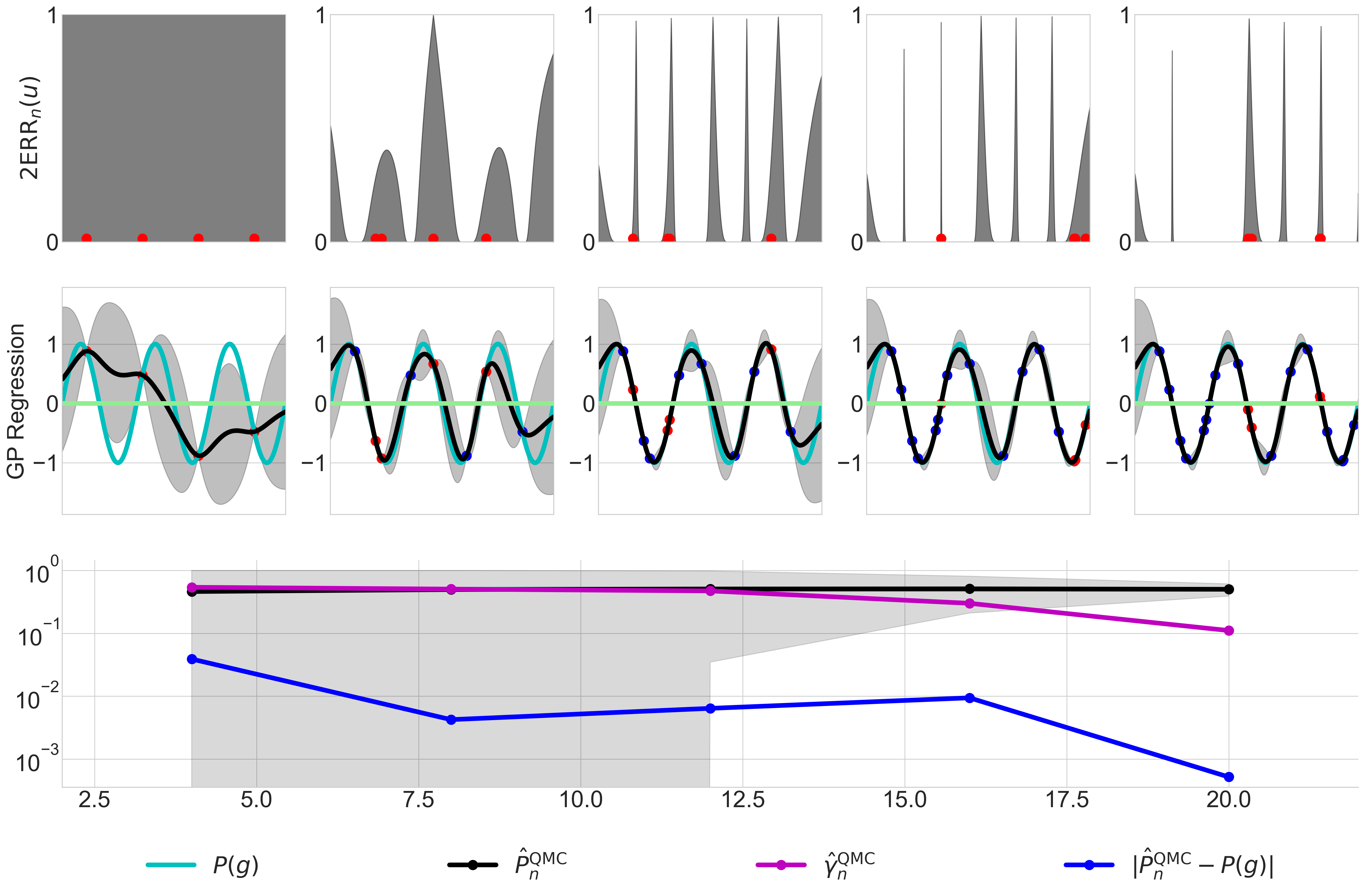}
    \caption{Sine function, 1-dimensional. Moving left to right across columns traverses algorithm iterations. In each column, the top row shows the unnormalized density $2\mathrm{ERR}_n$ from which samples are drawn. The second row shows the true function in cyan with the corresponding Gaussian process visualized by its posterior mean in black and a 95\% point-wise confidence interval in gray. Red point are new points introduced in this iteration while blue points are those form previous iterations. The green line is the failure threshold. The bottom plot shows convergence of the algorithm against the number of points $n$ with the resulting approximate credible interval in gray.}
    \label{fig:sine}
\end{figure}

\begin{figure}[!ht]
    \centering
    \includegraphics[width=\textwidth,trim={0 0cm 0 0},clip]{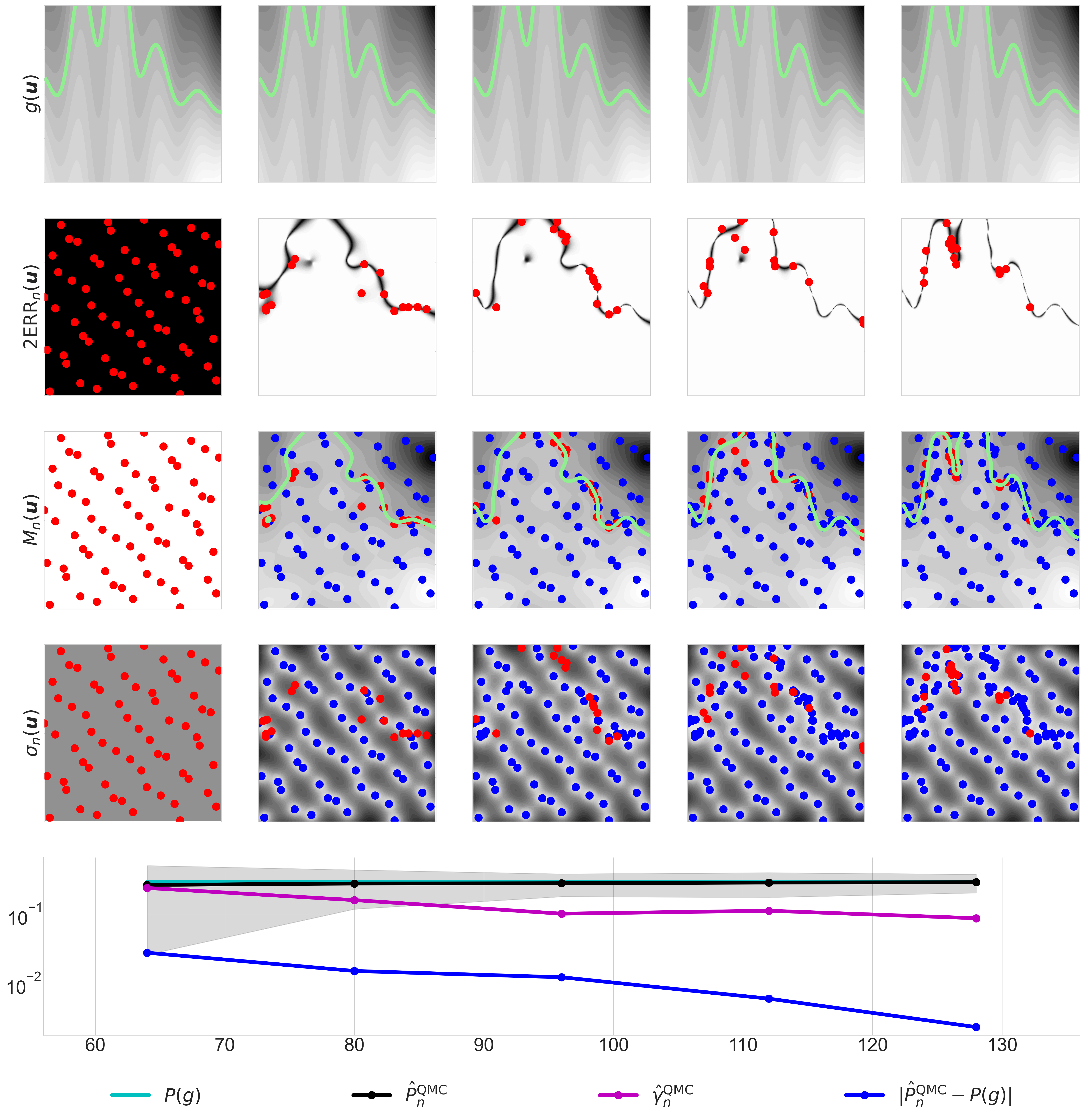}
    \caption{Multimodal function, 2-dimensional. Moving left to right across columns traverses algorithm iterations. In each column, the top row visualizes the true function with true failure boundary $\{\bu \in [0,1]^d: g(\bu) = 0\}$ in green. The second row plots the unnormalized sampling density $2\mathrm{ERR}_n$ while the third and fourth row plot the Gaussian process posterior mean and standard deviation respectively. In the third row we plot the predicted failure boundary based on the posterior mean in green. The final row visualizes convergence as in \Cref{fig:sine}.} 
    \label{fig:multimodal}
\end{figure}

\begin{figure}[htbp]
    \centering
    \includegraphics[width=\textwidth,trim={0 0 0 0},clip]{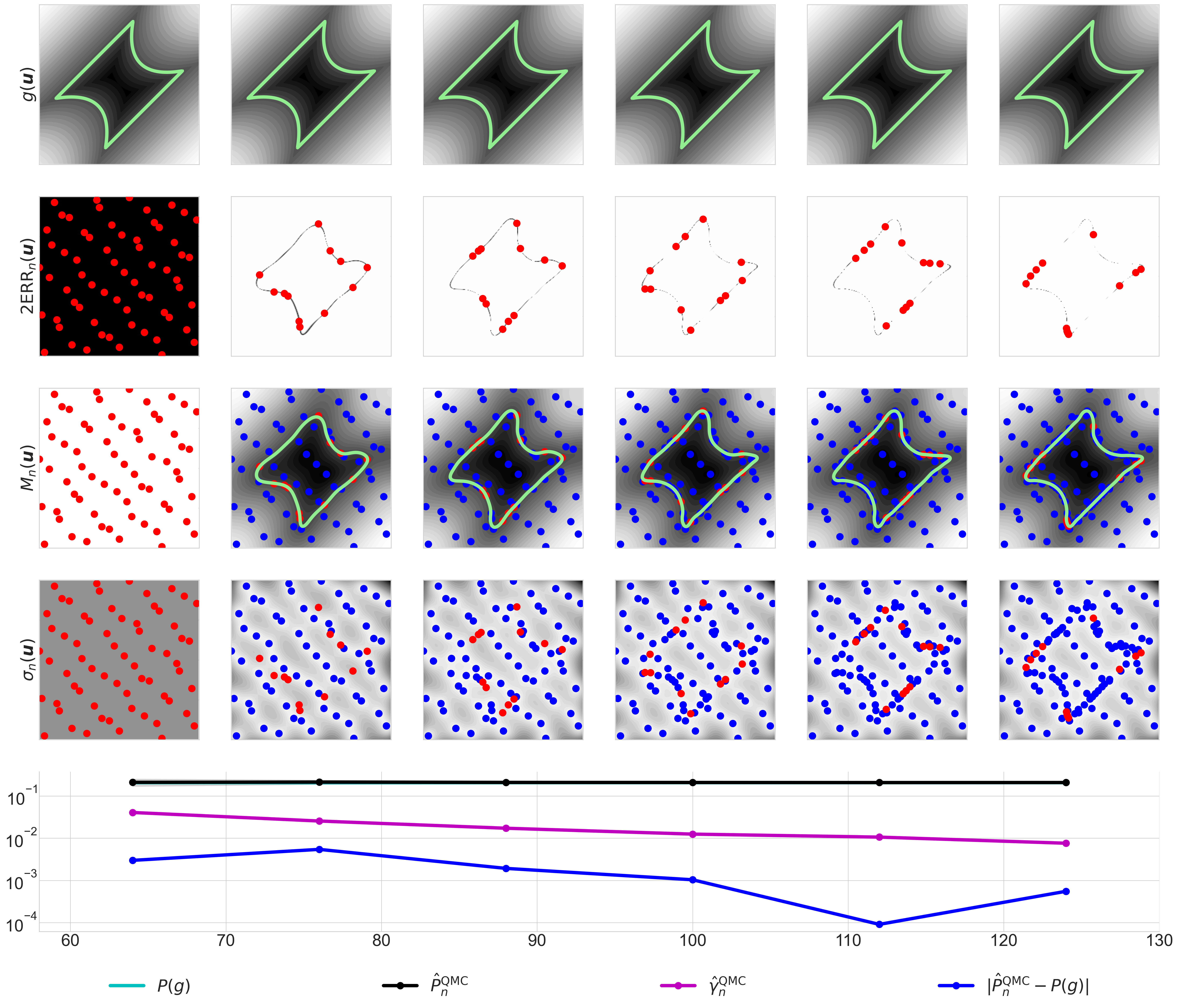}
    \caption{Four Branch function, 2-dimensional. See \Cref{fig:multimodal}.}
    \label{fig:fourbranch}
\end{figure}

\begin{figure}[!ht]
    \centering
    \includegraphics[width=.8\textwidth,trim={0 0 0 0},clip]{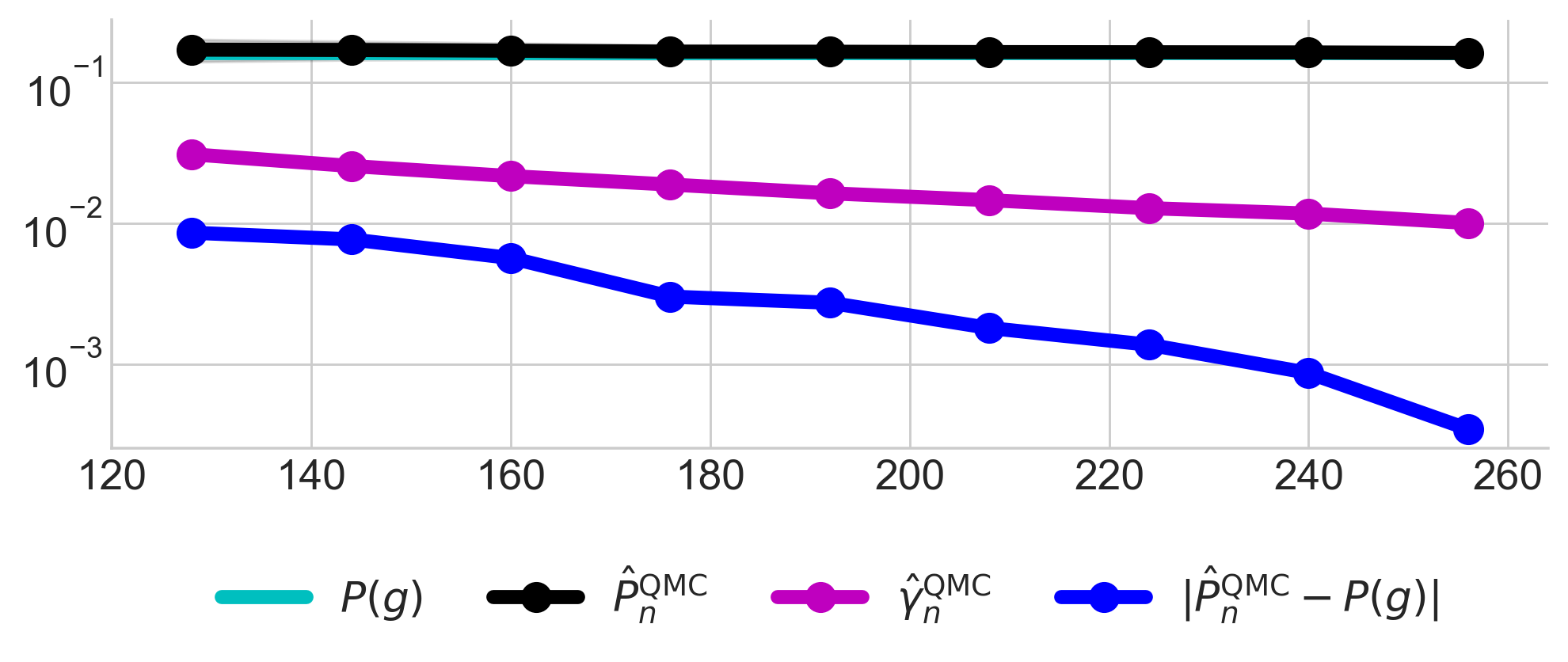}
    \caption{Ishigami function, 3-dimensional. Convergence plots as in \Cref{fig:sine}.}
    \label{fig:ishigami}
\end{figure}

\begin{figure}[!ht]
    \centering
    \includegraphics[width=.8\textwidth,trim={0 0 0 0},clip]{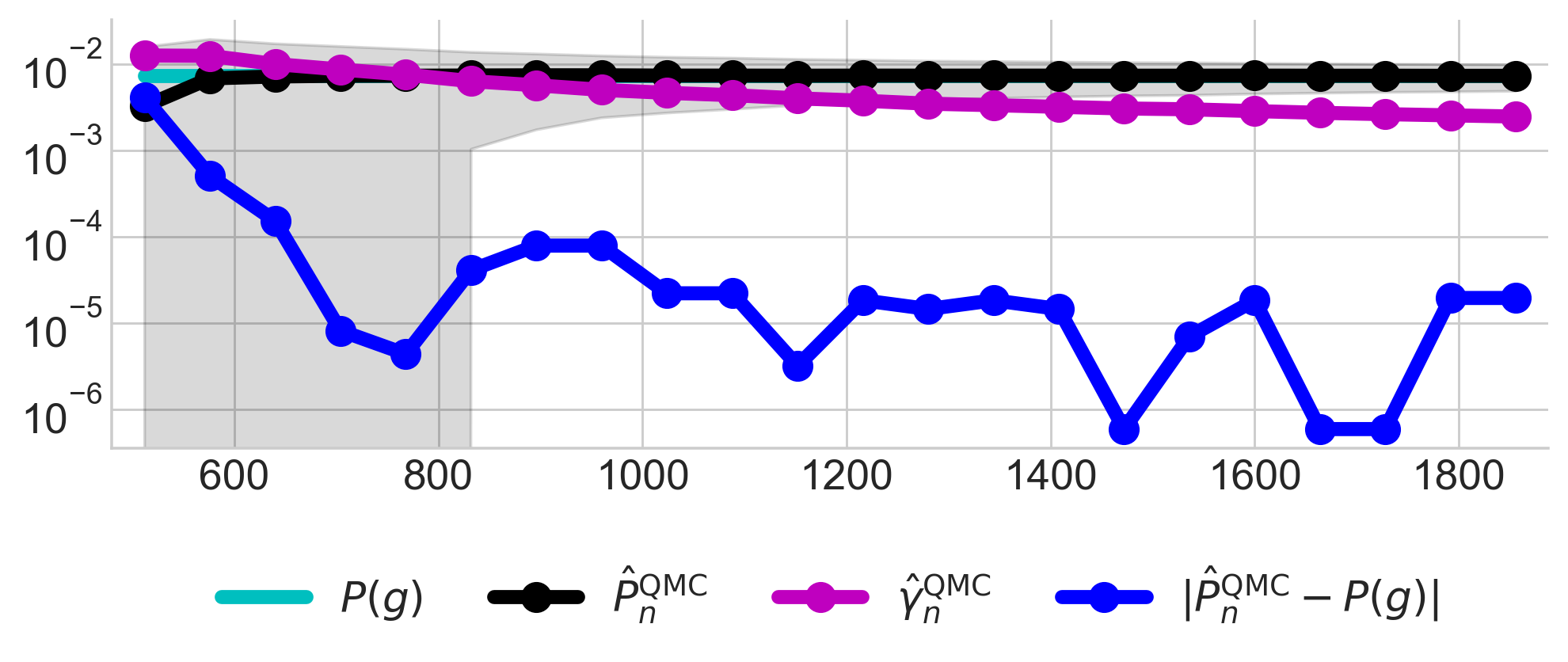}
    \caption{Hartmann function, 6-dimensional. Convergence plots as in \Cref{fig:sine}.}
    \label{fig:hartmann}
\end{figure}

\begin{figure}[!ht]
    \centering
    \includegraphics[width=\textwidth,trim={0 0 0 0},clip]{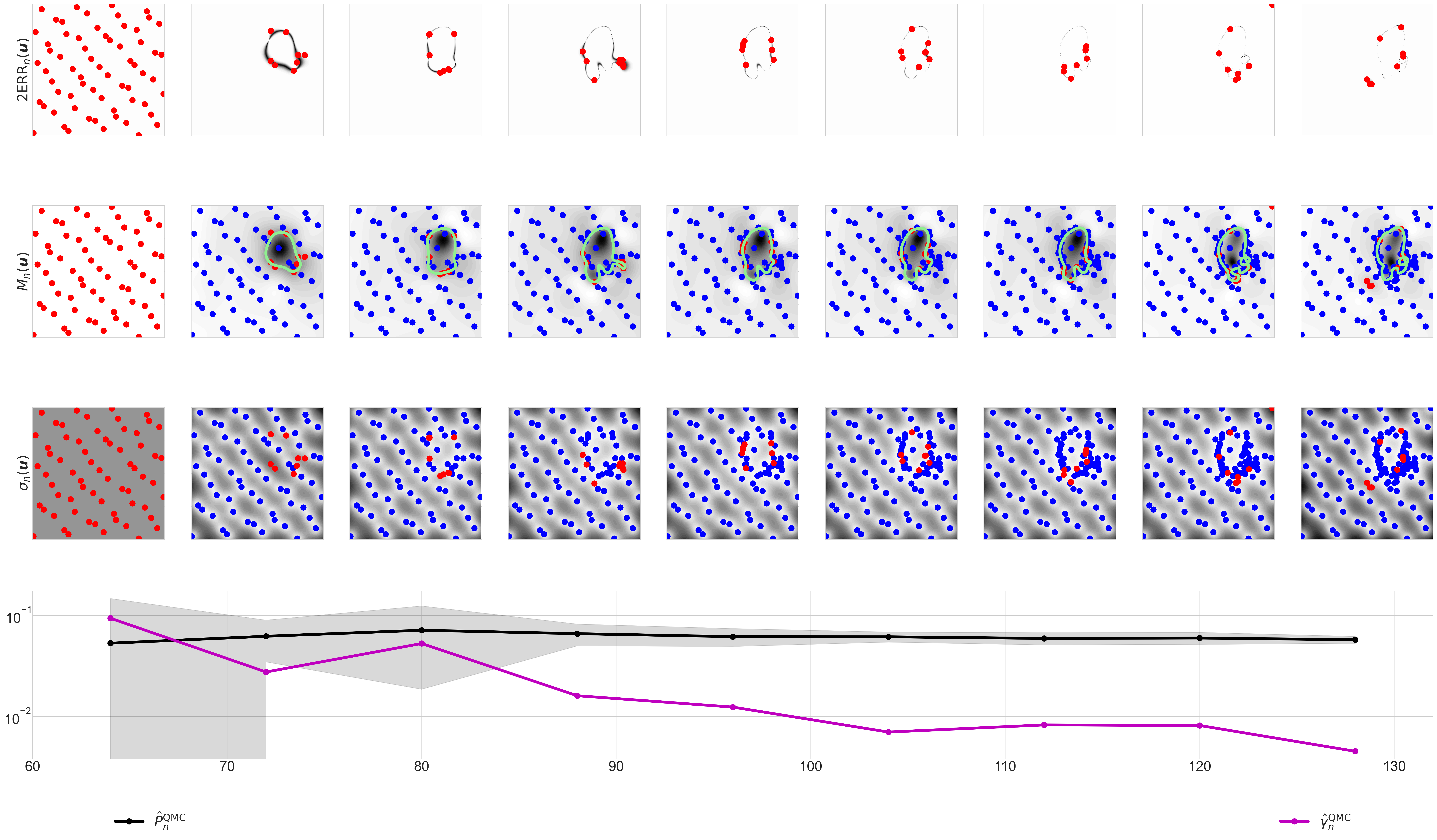}
    \caption{Tsunami simulation, 2-dimensional. Similar to \Cref{fig:multimodal}, except the first row is removed as we can no longer plot the true function. Also, since we no longer know the true mean the convergence plot in the bottom row cannot visualize our true error.}
\end{figure}

\Subsection{Remaining Challenges} \label{sec:conclusions_future_work}

Let us end this section by highlighting some remaining challenges and directions for future work.
\begin{itemize}
    \item The credible intervals quantify uncertainty in $P$. However, the credible interval bounds $\underline{P}$ and $\overline{P}$ cannot be computed exactly, so we resort to high precision QMC approximation. These QMC approximates come with their own relatively small uncertainty which should be accounted for in more rigorous credible intervals. 
    \item There are many sampling schemes to refine the posterior distribution and thus shrink the width of the credible interval. A valuable contribution would be analytically or empirically compare the convergence rates of credible interval width across sampling schemes. 
    \item There is a delicate interplay between the time it takes to run a simulation, fit a GP regression model, and propose samples. For simulations that are expensive to evaluate, it may be worthwhile to spend more time performing GP hyperparameter optimization and looking for good next sampling locations. For cheap functions, one may wish to match the sampling scheme and kernel in order to rapidly update a fast GP surrogate, see \Cref{sec:fast_gps}. Significant savings may be had by optimizing the trade-off between time spent simulating and time spent between simulations.
    \item This section has discussed the simple case where the simulation is assumed to be deterministic. However, many functions are not deterministic but contain uncertainty of their own. Moreover, simulations may have cheaper, lower-fidelity counterparts. Extending the framework to noisy, multi-fidelity, and/or multilevel problems would greatly widen the scope of applicability and potentially yield significant computational savings. 
\end{itemize}

\Section{Subsurface Flow Through Porous Media with Multilevel Fast Gaussian Processes} \label{sec:gp4darcy}

\begin{quotation}
    This section follows \cite{sorokin.gp4darcy}, a publication I worked on during my 2023 Graduate Internship appointment at Los Alamos National Laboratory in collaboration with Aleksandra Pachalieva, Daniel O'Malley, James M. Hyman, Nicolas W. Hengartner, and Fred J. Hickernell.
\end{quotation}

Limiting the injection rate to restrict the pressure below a threshold at a critical location can be an important goal of simulations that model the subsurface pressure between injection and extraction wells.   
The pressure is approximated by the solution of Darcy's partial differential equation for a given permeability field. The subsurface permeability is modeled as a random field since it is known only up to statistical properties. This induces uncertainty in the computed pressure.
Solving the partial differential equation for an ensemble of random permeability simulations enables estimating a probability distribution for the pressure at the critical location.   
These simulations are computationally expensive, and practitioners often need rapid online guidance for real-time pressure management.  
An ensemble of numerical partial differential equation solutions is used to construct a Gaussian process regression model that can quickly predict the pressure at the critical location as a function of the extraction rate and permeability realization. The Gaussian process surrogate analyzes the ensemble of numerical pressure solutions at the critical location as noisy observations of the true pressure solution, enabling robust inference using the conditional Gaussian process distribution. 
Our first novel contribution is to identify a sampling methodology for the random environment and matching kernel technology for which fitting the Gaussian process regression model scales as $\mathcal{O}(n \log n)$ instead of the typical $\mathcal{O}(n^3)$ rate in the number of samples $n$ used to fit the surrogate, see \Cref{sec:fast_gps}.  
The surrogate model allows almost instantaneous predictions for the pressure at the critical location as a function of the extraction rate and permeability realization. Our second contribution is a novel algorithm to calibrate the uncertainty in the surrogate model to the discrepancy between the true pressure solution of Darcy's equation and the numerical solution. 
Although our method is derived for building a surrogate for the solution of Darcy's equation with a random permeability field, the framework broadly applies to solutions of other partial differential equations with random coefficients.

\Subsection{Introduction}

\Cref{fig:confidences_lineplot} shows the GP model prediction for the expected confidence that the pressure at the critical location will be below a given threshold. For example, suppose the threshold pressure is 2\,mmH$_2$O (millimeters of water). In this case, the extraction rate must be at least 0.01\,m$^3$/s (cubic meters per second) to have 90\% confidence that the pressure at the critical location will be below the threshold. Notice the confidence computed using the GP surrogate increases in both the extraction rate and threshold, which matches the physics of the simulation.

\begin{figure}[!ht]
    \centering
    \includegraphics[width=1\textwidth]{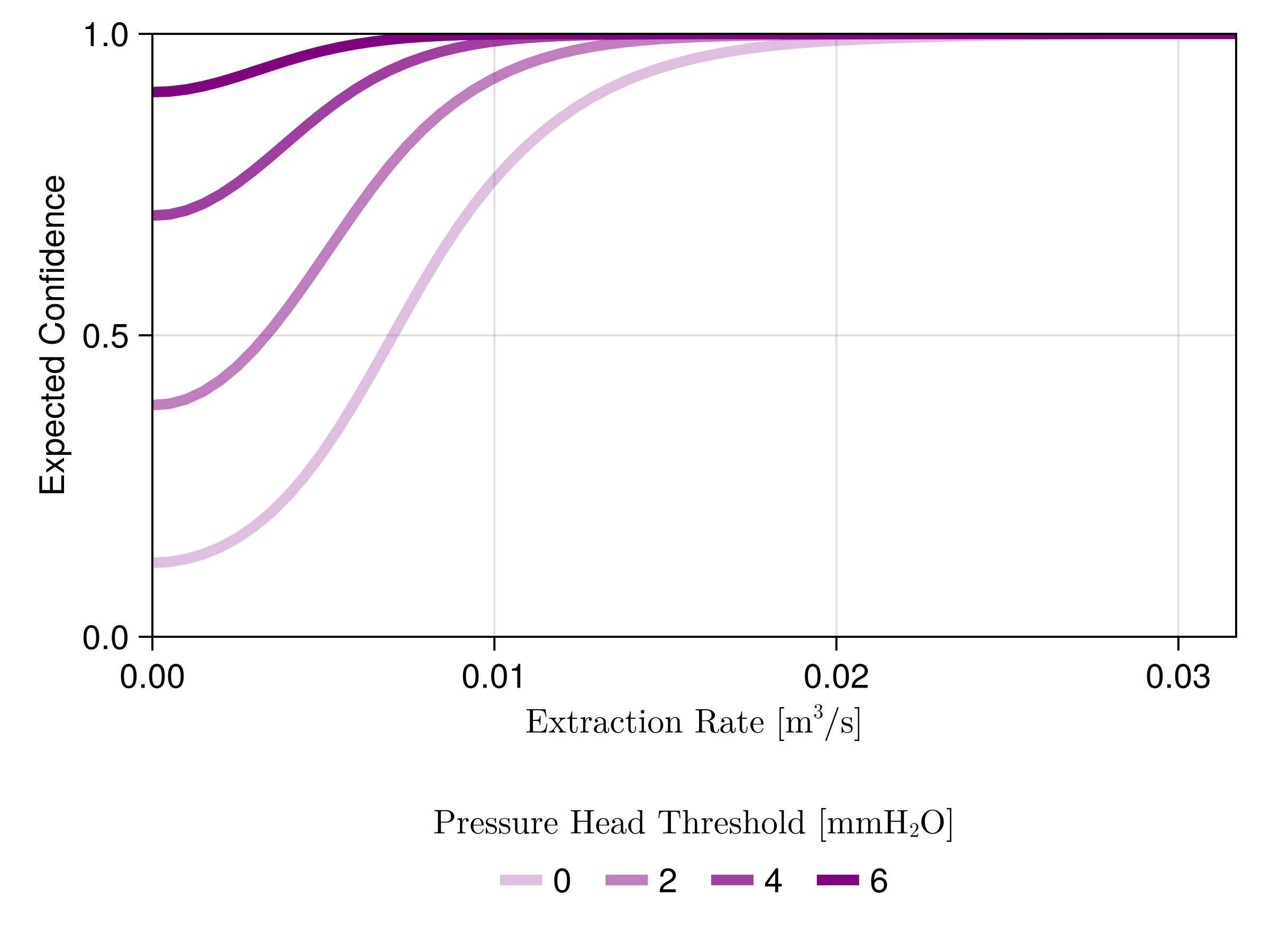}
    \caption{The expected confidence in maintaining pressure below a threshold as a function of extraction rate [m$^3$/s]. \Cref{fig:confidences_heatmap} extends this plot to a continuum of thresholds.}
    \label{fig:confidences_lineplot}
\end{figure}

\Cref{fig:subsurface_flow_workflow} shows the workflow of our approach:
\begin{enumerate}
\item We sample the feasibility space by generating a uniform low-discrepancy sample for the extraction rates and permeability fields. Then, for each extraction-permeability pairing, we solve Darcy's equation numerically at a sequence of increasing fidelities using the \texttt{DPFEHM} software package \cite{DPFEHM.jl}. For this work, we used the default solver in \texttt{DPFEHM}, which is a conjugate gradient iterative solver preconditioned with algebraic multigrid. The simulation errors depend on the fidelity of the permeability field and the fidelity of the finite volume numerical solver. The decay of differences between simulations at increasing fidelities informs an approximate upper bound on the noise variance. The differences between simulations at the highest fidelity and the target fidelity inform an approximate lower bound on the noise variance. These bounds are used in the next step for optimizing the noise variance. Once these bounds have been found, we may choose to sample more at the target fidelity as these numerical solutions will be used to build the Gaussian process (GP) regression model. 
\item With numerical solutions in hand, the observations at the target fidelity are used to fit a fast GP model. We emphasize that the target fidelity is not necessarily the maximum among the increasing sequence of fidelities at which we numerically solve the PDE. The target fidelity is chosen based on budgetary restrictions and the GP surrogate's noise will adapt to this choice. The GP model may be fit quickly since the chosen low-discrepancy sampling locations and matching kernel have induced a circulant kernel matrix structure, see \Cref{sec:fast_gps}. This fast fitting cost includes optimizing hyperparameters such as the noise variance. The noise variance optimization is initialized to the approximate upper bound and restricted to be above the approximate lower bound derived in the previous step.
\item The GP surrogate fit to the target fidelity observations can then be used to select an optimal extraction rate in real-time for any pressure threshold. The key is that the GP surrogate is much faster to evaluate than the high-fidelity numerical PDE solver and the GP models the true analytic PDE solution, not the numerical one. This enables rapid real-time analysis for a variety of objectives from an error-aware model.
\end{enumerate}

\begin{figure}[!ht]
    \centering
    \includegraphics[width=1\textwidth]{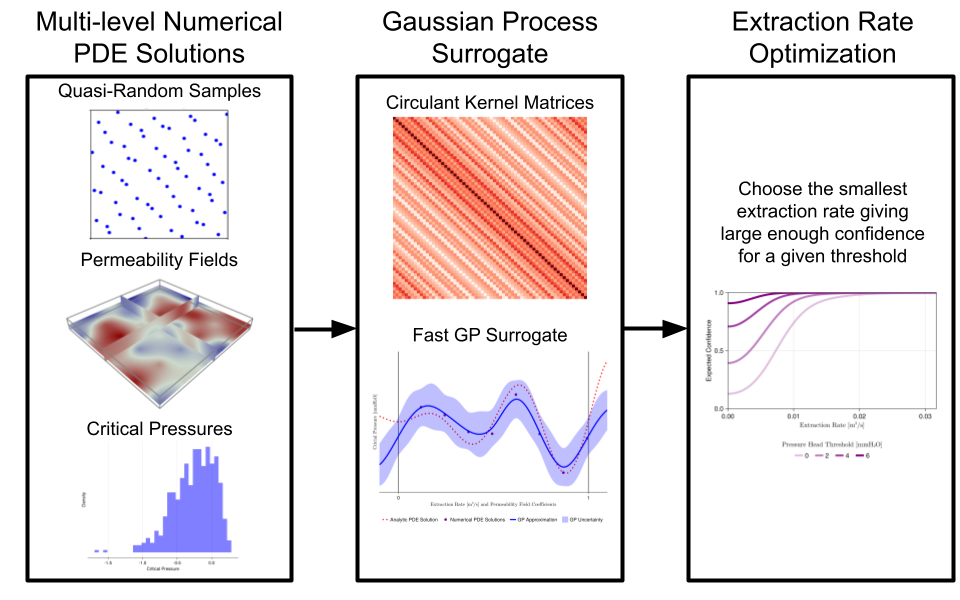}
    \caption{Workflow diagram visualizing the three stages of our method. First, the possible extraction rate and permeability field realizations are sampled with a low-discrepancy sequence. 
    Each pair is input to the numerical partial differential equation solver, which returns an approximation for the pressure at a critical location. Next, a GP model is optimized to the data relating extraction rate and permeability to pressure at the critical location. The optimization for $n$ samples is done at $\mathcal{O}(n \log n)$ cost by exploiting structure in the Gram kernel matrix induced by using low-discrepancy samples and matching kernels, see \Cref{sec:fast_gps}. The trained GP model can be quickly evaluated to identify the lowest extraction rate, which rarely over-pressurizes a critical location.}
    \label{fig:subsurface_flow_workflow}
\end{figure}

\Cref{sec:methods} introduces the modeling equations, notation for the problem formulation, and the existing methods we build upon later in the chapter. This includes details on numerical solutions of Darcy's equation and an overview of GP modeling. Our novel theoretical contributions are detailed in \Cref{sec:novel_contributions} where we first discuss a method for calibrating the GP noise to the numerical PDE solution error and then tie in our fast $\calO(n \log n)$ GP construction from \Cref{sec:fast_gps}. \Cref{sec:results} discusses details of our numerical simulations, exemplifies the use of the trained GP model for real-time pressure management, and explores the efficacy of both the  Gaussian noise assumption and calibration routine. This section concludes by applying our method to the Darcy problem with a three-dimensional subsurface, emphasizing the generality of our algorithm. 

\Subsection{Methods} \label{sec:methods}

This subsection describes the problem and the model equations of interest. We start by formulating Darcy's equation and describing our quantity of interest: the confidence (probability) that the pressure at a critical location stays below a given threshold. Using a Gaussian process (GP) regression surrogate gives a distribution over possible pressures at the critical location leading to a random confidence.  Next, we discuss the numerical solution of Darcy's equation including the permeability field discretization using the Karhunen--Loève expansion and the two-point flux finite volume method. Finally, we describe how the GP surrogate views the numerical PDE solutions as noisy evaluations of the true pressure solutions in order to fit a distribution over admissible pressure solutions.

\Subsubsection{Problem Formulation}

Consider a pressure management problem of a single-phase fluid in a heterogeneous permeability field. Darcy's partial differential equation can model the pressure throughout the subsurface 
\begin{equation}
    \nabla  \cdot (G(x)  \cdot \nabla H(x)) = f(x),
    \label{eq:darcy}
\end{equation}
when the subsurface permeability field is known. Darcy's equation describes the pressure head $H(x)$ in the subsurface over a domain $D \subset \mathbb{R}^2$ with permeability field $G(x)$ and external forcing function $f(x)$. The steady-state Darcy equation \eqref{eq:darcy} allows us to evaluate the long-term impact of the injection and extraction on the pressure head. For pressure management, the forcing function $f$ is composed of an
injection rate $w \geq 0$ at $x_\text{injection}$ and an extraction rate
$-r \leq 0$ at $x_\text{extraction}$. Following 
\cite{pachalieva.physics_informed_ML_differential_programming}, we write 
\begin{equation}
    f(x;r) := \begin{cases} 
        w, & x = x_\text{injection} \\ 
        -r, & x = x_\text{extraction} \\
        0, & x \in D \backslash \{x_\text{injection},x_\text{extraction}\}
    \end{cases}.
    \label{eq:flow_rate}
\end{equation}
Throughout this section, we treat the \emph{injection rate} $w$ as fixed and focus on optimizing the \emph{extraction rate} $r$, assuming $r$ does not exceed $w$.

The details of the permeability field $G$ are rarely known in practice. Instead, one is often given statistical properties of that field, such as the mean permeability and spatial correlations of variations of the permeability around its mean. Given these descriptors, it is convenient to model the permeability $G$ as a \emph{random log-normal} field. Solving Darcy's equation \eqref{eq:darcy} using a stochastic permeability field $G$ induces randomness in the pressure $H$.

Our goal is to quickly estimate the probability that the pressure at the \emph{critical location} $x_\text{critical}$ remains below a desired \emph{threshold} $\bar{h}$ as a function of the extraction rate $r$. This enables practitioners to implement control policies to manage pressure at critical locations in real-time. 
Let $H^c(r,G) := H(x_\text{critical};r,G)$ denote the pressure at the critical location,  which is a function of the extraction rate, $r$, and permeability field, $G$. 
For a fixed upper bound $\bar h$, we seek to evaluate the \emph{confidence},
\begin{equation}
    c(r) := P_G(H^c(r,G) \leq \bar{h}),
    \label{eq:confidence}
\end{equation}
where the probability $P_G$ is taken over the distribution of the permeability field $G$. \Cref{fig:numerical_pde.birdseye} illustrates the described setup. 

\begin{figure}[!ht]
    \centering
    \includegraphics[width=1\textwidth]{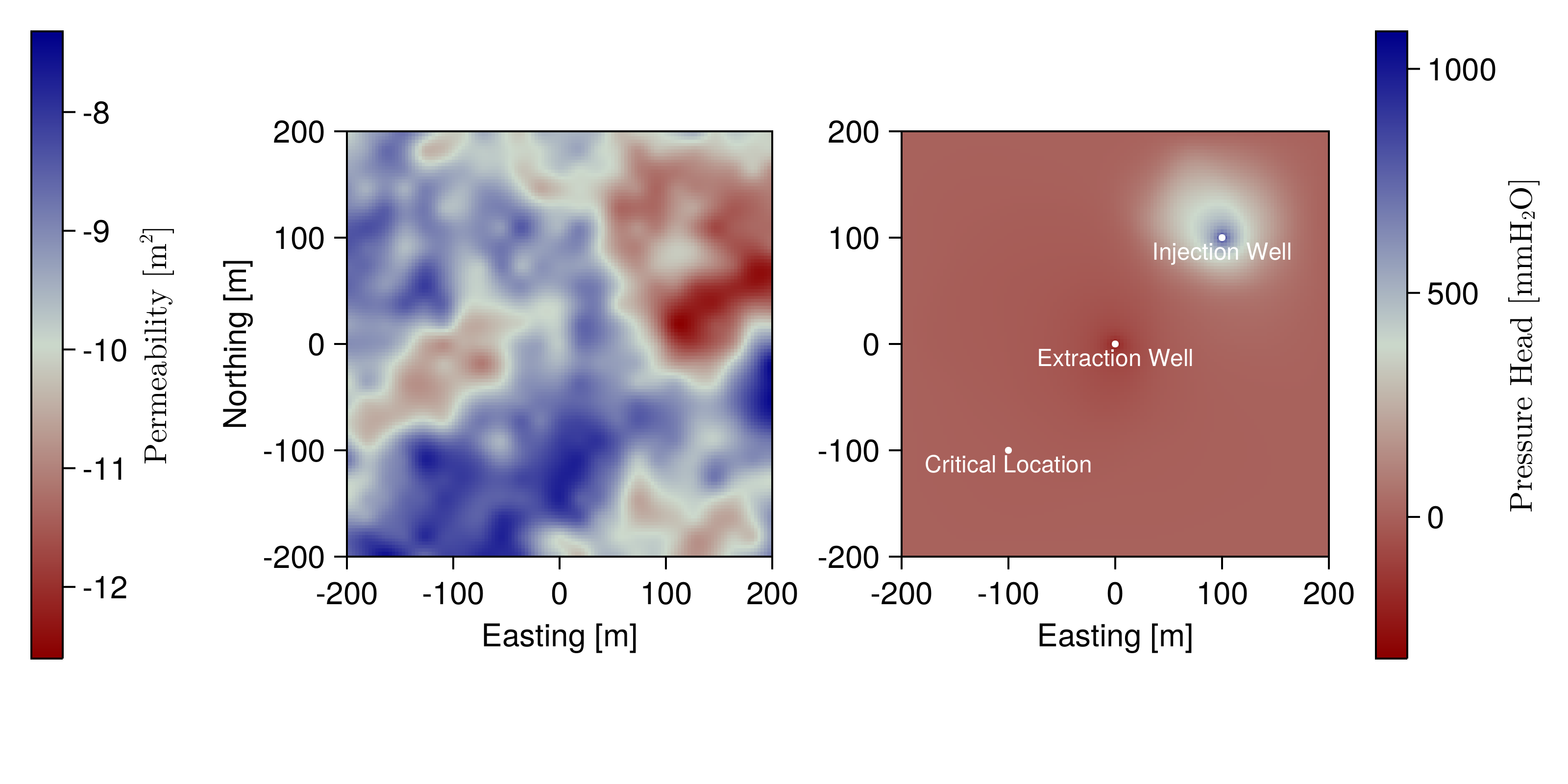}
    \caption{The left plot shows a realization of the log-permeability field $\log\, G(x)$[m$^2$] and the right plot shows the resulting pressure $H(x; r,G)$ for some extraction rate $r$ [m$^3$/s]. The fixed injection, extraction, and critical locations in our two-dimensional setup are also shown.}
    \label{fig:numerical_pde.birdseye}
\end{figure}

Although the confidence, $c(r)$, cannot be computed explicitly, it can be approximated for each fixed extraction rate $r$ by numerically solving for critical pressure $H^c(r,G)$ in Darcy's equation \eqref{eq:darcy} for many realizations of $G$. Unfortunately, this method is biased as the numerical critical pressure is only an approximation for the true  critical pressure $H^c(r,G)$.  Also, the associated cost of solving the PDE multiple times is impractical for practitioners desiring fast online inference. 

Our approach provides rapid solutions and error estimates for the confidence $c(r)$ by building a surrogate model for the critical pressure $H^c(r,G)$. This statistical approach treats the numerically computed critical pressures as noisy observations of the analytic critical pressures. GP modeling is a natural and efficient approach for this framework and can provide immediate online estimates for $c(r)$ as a function of the extraction rate. 

Given $n$ numerical critical pressure observations, the GP surrogate $H^c_n(r,G)$ estimates the critical pressure $H^c(r,G)$. We plug this estimate into \eqref{eq:confidence} and get the \emph{conditional confidence} 
\begin{equation}
    \hat{C}_n(r) := P_G(H^c_n(r,G) \leq \bar{h} | H_n^c).
    \label{eq:conditional_confidence}
\end{equation}
The \emph{expected conditional confidence} is a natural estimate for $c(r)$ denoted by 
\begin{equation}
    c_n(r) := \mathbb{E}_{H_n^c} \left[\hat{C}_n(r)\right] = P_{(G,H_n^c)}(H^c_n(r,G) \leq \bar{h}).
    \label{eq:expected_conditional_confidence}
\end{equation}
We approximate the unknown analytic solution $c(r)$ by the computationally tractable $c_n(r)$, which only uses the surrogate model. After interchanging expectations, the above equation can be efficiently computed with (Quasi-)Monte Carlo, see \Cref{sec:qmc}.

\Subsubsection{Numerical Solution of Darcy's Equation}

To solve Darcy's equation \eqref{eq:darcy}, we apply a standard two-point flux finite volume method in the square domain $D$ on a discrete mesh. A truncated Karhunen--Loève expansion represents the log of the log-normal permeability field $G$ over $D$. This enables us to draw samples of $G$, which can be evaluated at a mesh grid of any fidelity. 

We say the physical domain has discretization dimension $d$ when the finite volume mesh has $d+1$ mesh points in each dimension of $D$. The choice of $d=2^m$ creates nested mesh grids. While there is no restriction on the mesh grids with an equal number of points in each dimension, this reduces the number of parameters we must consider when approximating the numerical error later in this section. 

The Karhunen--Loève expansion \cite{karhunen.kl_expansion} 
of the permeability field may be used to find a good finite-dimensional approximation of $G$. Specifically, we may write
\begin{equation}
    \log \,  G(x) = \sum_{j=1}^\infty \sqrt{\lambda_j} \varphi_j(x) Z_j
    \label{eq:KL_expansion}
\end{equation}
where $\varphi_j$ are deterministic and orthonormal and $Z_1, Z_2, \dots$ are independent standard Gaussian random variables. Ordering $\lambda_1 \geq \lambda_2 \geq \lambda_3 \geq \dots$, we approximate $G$ by 
\begin{equation}
    \log \, 
 G^s(x) := \sum_{j=1}^s \sqrt{\lambda_j} \varphi_j(x) Z_j 
    \label{eq:KL_expansion_approx}
\end{equation}
which optimally compacts the variance into earlier terms. \Cref{fig:numerical_pde.full} shows different pairs of $s$ and $d$ for a common realization of $Z_1,Z_2,\dots$. Notice the greater detail in $G^s$ as $s$ increases and the finer mesh over $D$ as $d$ increases. We will often call the pair $(s,d)$ the fidelity of the numerical solution.

\begin{figure}[!ht]
    \centering
    \includegraphics[width=1\textwidth]{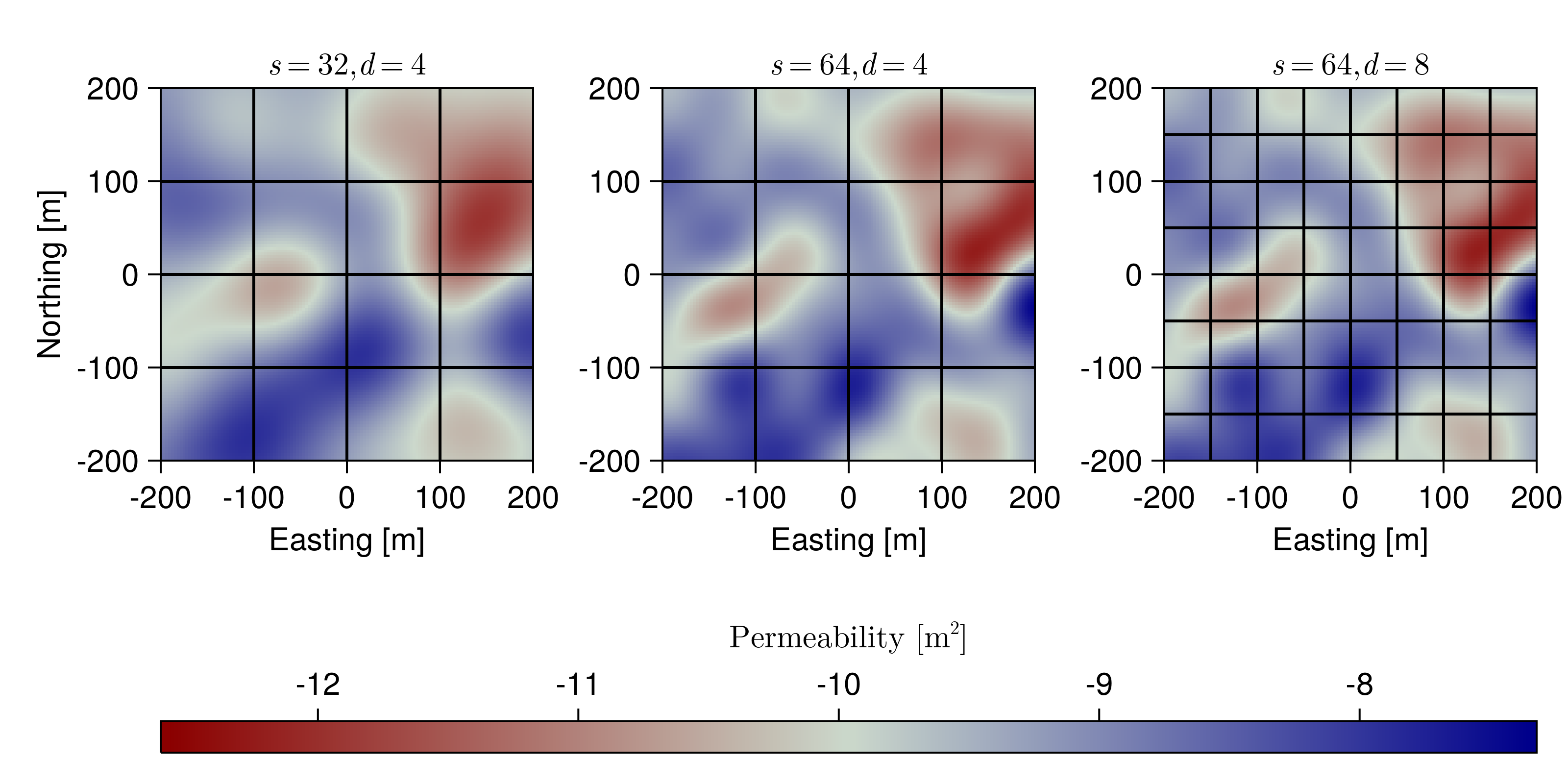}
    \caption{The same realization of the log-permeability field as in \Cref{fig:numerical_pde.birdseye} but for various choices of permeability discretization dimension $s$ and domain discretization dimension $d$. In the left plot, a small $s$ and $d$ are chosen, corresponding to a lack of small-scale changes in the permeability realization and a coarse mesh grid over the domain, respectively. Moving from the left to center plot maintains the same mesh grid while increasing $s$ to yield more small-scale changes in the realization. Moving from the center to the right plot keeps the same realization while increasing $d$ to yield a finer mesh over $D$.}
    \label{fig:numerical_pde.full}
\end{figure}

We let $H^c_{s,d}(r,\boldsymbol{Z}^{s})$ denote the \emph{numerical critical pressure} computed by solving the PDE \eqref{eq:darcy} with domain discretization dimension $d$ and permeability discretization dimension $s$. Here $\boldsymbol{Z}^{s} = (Z_1,\dots,Z_s)$ is a vector of independent standard Gaussians, which uniquely determine the approximate permeability field $G^s$. In our implementation, we use the \texttt{GaussianRandomFields.jl} package \cite{GaussianRandomFields.jl} to simulate permeability fields $G^s$ and solve the PDE numerically with the \texttt{DPFEHM.jl} package \cite{DPFEHM.jl}. 

\Subsubsection{A Probabilistic GP Surrogate}

We model the relationship between the inputs of extraction rate $r$ and permeability field $G$ and the output critical pressure $H^c(r,G)$. The model is built on observed numerical critical pressures $Y^n := \{H^c_{s,d}(r_i,\boldsymbol{Z}_i^s)\}_{i=1}^n$ at strategically chosen sampling locations $(r_i,G_i^s)_{i=1}^n$. Following \Cref{sec:gps}, our GP surrogate views $Y^n$ as noisy observations of $H^c$ with 
\begin{equation}
    H_{s,d}^c(r,\boldsymbol{Z}^s) = H^c(r,G) + \varepsilon_{s,d}.
    \label{eq:gp_model}
\end{equation}
We model the noise $\varepsilon_{s,d}$, which encodes the discretization error, as a random variable.  That random variable is assumed to be independent of the sampling location $(r,\boldsymbol{Z}^s)$ but dependent on the fidelity $(s,d)$.  We further assume that the errors are zero-mean Gaussians with variance $\zeta_{s,d}$, i.e., 
\begin{equation}
    \varepsilon_{s,d} \sim \mathcal{N}(0,\zeta_{s,d}).
    \label{eq:noise_var_normal_assumption}
\end{equation}
Assuming homogeneous variances enables us to exploit numerical tricks that lead to fast computations of the posterior expectations.
Later we will discuss a method for approximating bounds on the noise variance $\zeta_{s,d}$, which are then used during hyperparameter optimization.

GP modeling assumes $H^c$ is a GP, and therefore, the conditional distribution of $H^c$ given $Y^n$ is also a GP \cite{rasmussen.gp4ml}. We use this conditional, or posterior, distribution on $H^c$ as an error-aware surrogate. Closed form expressions for the posterior mean and variance can be found in \Cref{sec:gps}. Note that here, the GP takes as input the $1+s$-dimensional vector $t := (r,\boldsymbol{Z}^s)$. \Cref{fig:noisy_lattice_qgp.4} illustrates an example posterior GP. While the figure assumes $t$ is one-dimensional, which is impossible for our problem, GPs extend naturally to arbitrarily large dimensions. We emphasize that the posterior GP is a surrogate for the critical pressure $H^c$, not the numerical critical pressure $H^c_{s,d}$ whose evaluations are used for fitting. Our GP surrogate provides a distribution on $H^c$ whose expectation can be taken as a point estimate for the analytic critical pressure solution.

\Subsection{Theory} \label{sec:novel_contributions}

This subsection describes in detail the novel theoretical contributions of this section. We will approximate upper and lower bounds on the variance of the error between the numerical PDE solutions used to fit the GP and the true PDE solutions. When optimizing the GP noise variance $\zeta_{s,d}$, the approximate upper bound is used as a starting value and the optimization is restricted to search above the approximate lower bound. We have paired these with the fast GP surrogate techniques from \Cref{sec:fast_gps} which enable GP fitting costs to be reduced from $\mathcal{O}(n^3)$ to $\mathcal{O}(n \log n)$ using an intelligent design of experiments and matching GP kernel. This speedup technology is more generally applicable to surrogate modeling when one has control over the design of experiments.

Recall the GP model with zero-mean Gaussian noise assumes the numerical solution is unbiased for the analytic solution. Therefore, the noise variance $\zeta_{s,d}$ is the mean squared error (MSE) between the analytic PDE solution and the numerical PDE solutions. This section derives approximate upper and lower bounds on the root mean squared error (RMSE) $\sqrt{\zeta_{s,d}}$. The upper bound is used as a starting point to calibrate $\zeta_{s,d}$ when performing hyperparameter optimization of the GP model. This bound is derived by tracking the decay in average solution differences as the problem fidelity is increased. The lower bound is used to restrict the search domain for this hyperparameter optimization. This heuristic lower bound is derived by looking at the MSE of differences between solutions at the maximum and target fidelities.

We start by approximating an upper bound on $\zeta_{s,d}$. First, notice the assumption of zero-mean Gaussian noise in \eqref{eq:noise_var_normal_assumption} implies that the standard deviation of the GP noise may be written as the RMSE
\begin{equation}
    \begin{aligned}
        \sqrt{\zeta_{s,d}} &= \left\lVert H^c(R,G)-H^c_{s,d}(R,\boldsymbol{Z}^{s}) \right\rVert \\
        &= \sqrt{\mathbb{E}_{\left(R,\boldsymbol{Z}^{s}\right)} \left[H^c(R,G)-H^c_{s,d}(R,\boldsymbol{Z}^{s})\right]^2} \\
        &=: \mathrm{RMSE}_{s,d}.
    \end{aligned}
    \label{eq:post_var_N_as_expected_val}
\end{equation}
Here the extraction rate $R$ is assumed to be uniformly distributed between $0$ and $w$, i.e., $R \sim \mathcal{U}[0,w]$. Moreover, the extraction rate is assumed to be independent of $\boldsymbol{Z}^{s}$.

Following ideas from multilevel Monte Carlo \cite{giles.MLMC_path_simulation,giles2015multilevel,robbe.multi_index_qmc}, we choose strictly increasing sequences $(s_j)_{j \geq 0}$ and $(d_j)_{j \geq 0}$ with $s = s_T$ and $d = d_T$ and rewrite \eqref{eq:post_var_N_as_expected_val} as the telescoping sum 
\begin{equation}
    \mathrm{RMSE}_{s_T,d_T} = \left\lVert \sum_{j=T+1}^\infty \left[\Delta_{s_j}(R,\boldsymbol{Z}^{s_j}) + \Delta_{d_j}(R,\boldsymbol{Z}^{s_j}) \right]\right\rVert
    \label{eq:telescoping_post_var_N}
\end{equation}   
where 
\begin{align*}
    \Delta_{s_{j+1}}(R,\boldsymbol{Z}^{s_{j+1}}) &= H^c_{s_{j+1},d_j}(R,\boldsymbol{Z}^{s_{j+1}}) - H^c_{s_j,d_j}(R,\boldsymbol{Z}^{s_j}), \\
    \Delta_{d_{j+1}}(R,\boldsymbol{Z}^{s_{j+1}}) &= H^c_{s_{j+1},d_{j+1}}(R,\boldsymbol{Z}^{s_{j+1}}) - H^c_{s_{j+1},d_j}(R,\boldsymbol{Z}^{s_{j+1}}).
\end{align*}
Here $T \geq 0$ is the index of the target fidelity whose observations will be used to fit the GP model.
Let us assume that 
\begin{equation}
    \lVert \Delta_{s_j}(R,\boldsymbol{Z}^{s_j}) \rVert = 2^{b_s} s_j^{a_s} \quad\text{and}\quad  \lVert \Delta_{d_j}(R,\boldsymbol{Z}^{s_j}) \rVert = 2^{b_d}d_j^{a_d}.
    \label{eq:logloglinear_delta_dims}
\end{equation}
The parameters $(a_s,b_s)$ and $(a_d,b_d)$ will be fit using linear regression in the log-log domain. Let $s_j = v_s 2^j$ and $d_j = v_d 2^j$ where $v_s$ and $v_d$ are the respective initial values chosen by the user. Applying the triangle inequality to \eqref{eq:telescoping_post_var_N} gives 
\begin{equation}
    \begin{aligned}
        \mathrm{RMSE}_{s_T,d_T} &\leq \sum_{j=T+1}^\infty \left[2^{b_s}v_s^{a_s} \left(2^{a_s}\right)^j + 2^{b_d} v_d^{a_d} \left(v_d^{a_d}\right)^j\right] \\
        &= 2^{b_s} v_s^{a_s} \frac{2^{(T+1) a_s}}{1-2^{a_s}} + 2^{b_d}v_d^{a_d} \frac{2^{(T+1) a_d}}{1-2^{a_d}} \\
        &=: \overline{\mathrm{RMSE}}_{s_T,d_T}
    \end{aligned}
    \label{eq:bar_rmse_upper_bound}
\end{equation}
using the expression for the sum of a geometric series.

\Cref{fig:convergence_GP4Darcy} illustrates the above idea for both a two-dimensional subsurface (above plot) and three-dimensional subsurface (below plot). First, we pick an $M \geq T$ so that $(s_M,d_M)$ is the maximum fidelity at which we will numerically solve the PDE and $(s_T,d_T)$ is the target fidelity whose numerical solutions will be used to fit the GP surrogate. We emphasize that $M \geq T$ since we do not require the GP surrogate to be built on maximum fidelity solves. Now, at every fidelity $(s_j,d_j)$ with $1 \leq j \leq M$ we solve the PDE at the same $(R_i,\boldsymbol{Z}_i^{s_M})_{i=1}^m$ points to get $\{H^c_{s_j,d_j}(R_i,\boldsymbol{Z}_i^{s_j})\}_{i=1}^m$. Here $\boldsymbol{Z}_i^{s_j}$ is the first $s_j$ element of $\boldsymbol{Z}_i^{s_M}$. For $ 1 \leq j \leq M$ we make the approximations 
\begin{equation}
    \begin{aligned}
        \lVert \Delta_{s_j}(R,\boldsymbol{Z}^{s_j}) \rVert \approx \sqrt{\frac{1}{m} \sum_{i=1}^m \Delta^2_{s_j}(R_i,\boldsymbol{Z}^{s_j})}, \qquad \lVert \Delta_{d_j}(R,\boldsymbol{Z}^{s_j}) \rVert \approx \sqrt{\frac{1}{m} \sum_{i=1}^m \Delta^2_{d_j}(R_i,\boldsymbol{Z}^{s_j})} 
    \end{aligned}
    \label{eq:RMSEs_approxes}
\end{equation}
corresponding to the plotted red dots and blue squares, respectively. The slope intercept pairings $(a_s,b_s)$ and $(a_d,b_d)$ from \eqref{eq:logloglinear_delta_dims} are fit to the values in \eqref{eq:RMSEs_approxes} with lines in the respective colors. The upper bounds $\overline{\mathrm{RMSE}}_{s_j,d_j}$ from \eqref{eq:bar_rmse_upper_bound} are visualized by the purple stars. Notice that the model will find $\overline{\mathrm{RMSE}}_{s_T,d_T}$ for any target fidelity $(s_T,d_T)$ we choose. As $d_T$ increases, the mesh size shrinks, and the PDE becomes more expensive to solve numerically. As $s_T$ increases, the input dimension to the GP model grows, and more PDE solves are required to build an accurate model. Notice that the RMSE is dominated by the error in the permeability field discretization rather than the error in the domain discretization.  

\begin{figure}[!ht]
    \centering
    \begin{center}\textbf{2-Dimensional Subsurface} \end{center}
    \includegraphics[width=.75\textwidth]{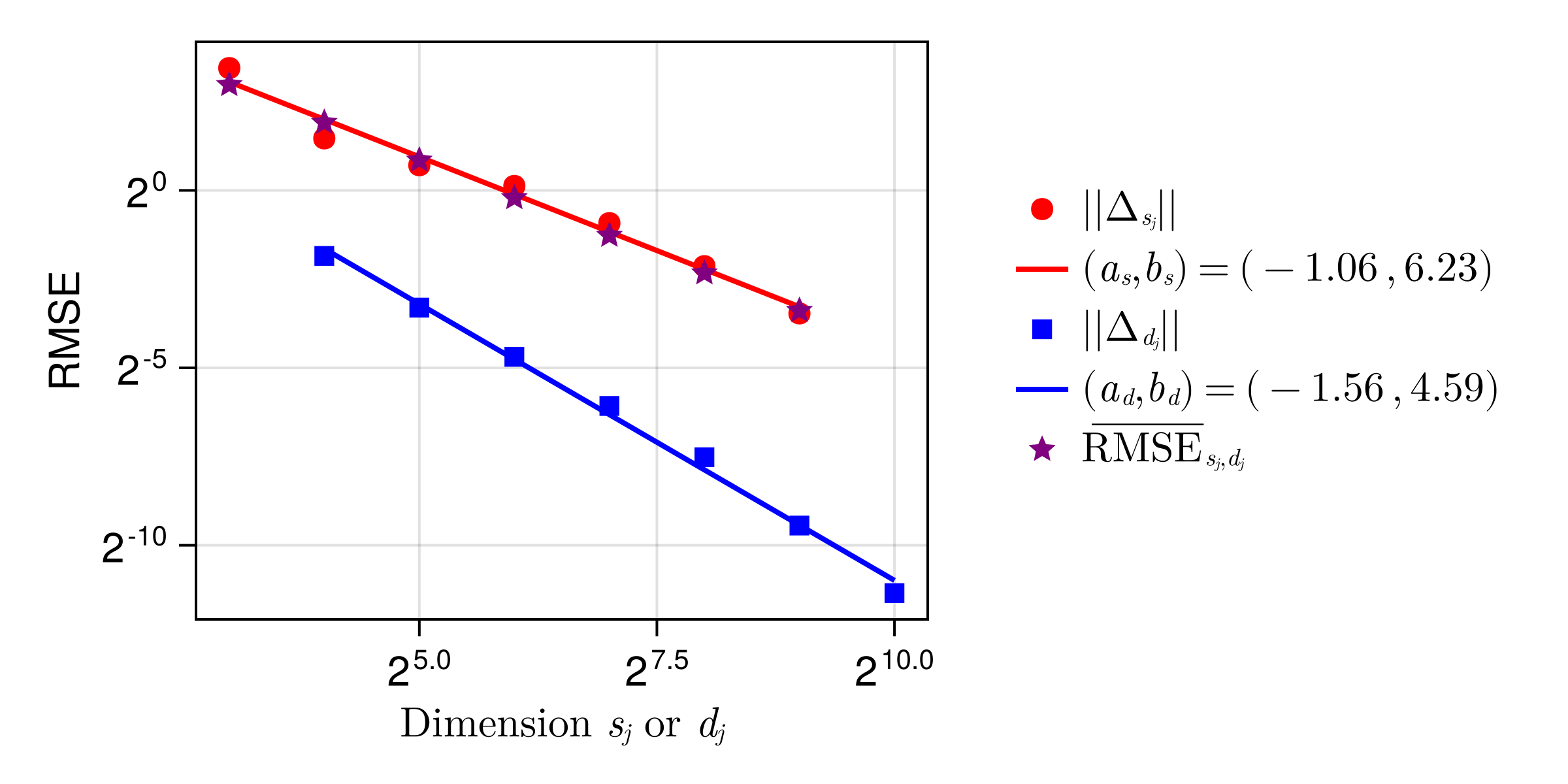}
    \begin{center}\textbf{3-Dimensional Subsurface} \end{center}
    \includegraphics[width=.75\textwidth]{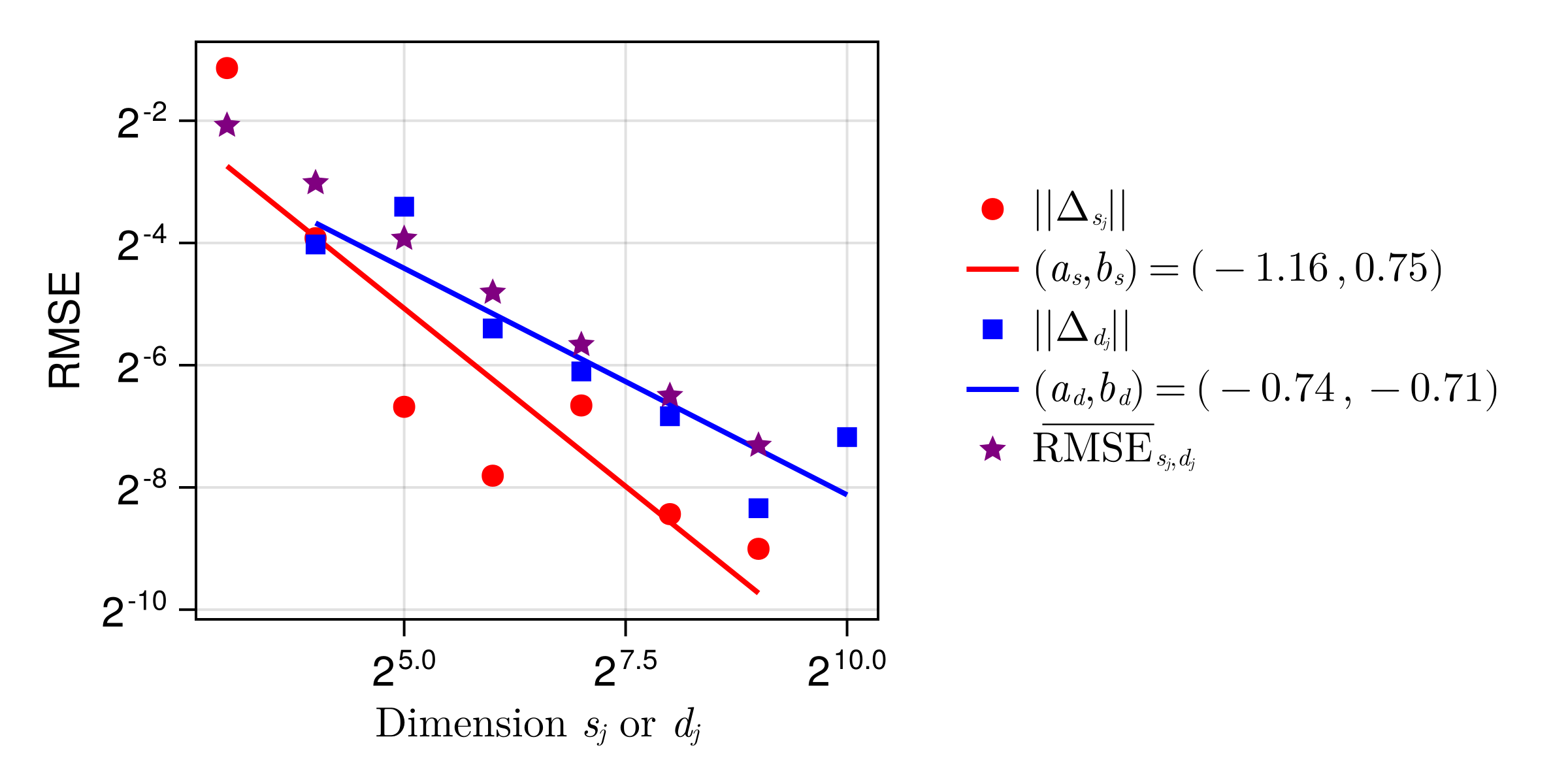}
    \caption{The Karhunen--Loève expansion of the permeability field is approximated by the sum of the first $s_j$ terms. The physical domain $D$ is discretized into a grid with mesh width $1/d_j$ in each dimension. The RMSE of the difference between numerical critical pressures with discretizations $(s_j,d_j/2)$ and $(s_j/2,d_j/2)$ is scatter plotted as $\lVert \Delta_{s_j} \rVert$. The RMSE of the difference between numerical critical pressures with discretizations $(s_j,d_j)$ and $(s_j,d_j/2)$ is scatter plotted as $\lVert \Delta_{d_j} \rVert$.  Simple regression models are fit to these two scatter trends where $a$ and $b$ are the slope and intercept, respectively, of the plotted lines. These models are extrapolated through an infinite telescoping sum to derive the approximate upper bound on the RMSE of the difference between the numerical critical pressure with discretization $(s_j,d_j)$ and the target critical pressure.}
    \label{fig:convergence_GP4Darcy}
\end{figure}

We now approximate a lower bound on $\zeta_{s,d} = \zeta_{s_T,d_T}$. Recall that $\zeta_{s_T,d_T}$ is MSE between analytic solutions and the numerical solutions at target fidelity $(s_T,d_T)$. Practically speaking, we expect this to be at  least as large as the MSE between the solutions at the maximum fidelity $(s_M,d_M)$ and target fidelity $(s_T,d_T)$, i.e., 
\begin{align*}
    \sqrt{\zeta_{s_T,d_T}} &= \sqrt{\mathbb{E}_{\left(R,\boldsymbol{Z}^{s_M}\right)} \left[H^c(R,G)-H^c_{s_M,d_M}(R,\boldsymbol{Z}^{s_M})\right]^2} \\
    &\gtrapprox \sqrt{\mathbb{E}_{\left(R,\boldsymbol{Z}^{s_M}\right)} \left[H^c_{s_M,d_M}(R,\boldsymbol{Z}^{s_M})-H^c_{s_T,d_T}(R,\boldsymbol{Z}^{s_T})\right]^2} \\
    &=: \underline{\mathrm{RMSE}}_{s_T,d_T}
\end{align*}
Similar to \eqref{eq:RMSEs_approxes}, we approximate this heuristic lower bound by the sample RMSE: 
\begin{equation}
    \underline{\mathrm{RMSE}}_{s_T,d_T} \approx \sqrt{\frac{1}{m} \sum_{i=1}^m \tilde{\Delta}_{M,T,i}^2}.
    \label{eq:noise_var_lower_bound}
\end{equation}
where 
\begin{equation}
    \tilde{\Delta}_{M,T,i} = H^c_{(s_M,d_M)}(R_i,\boldsymbol{Z}_i^{s_M})-H^c_{s_T,d_T}(R_i,\boldsymbol{Z}_i^{s_T}).
    \label{eq:tildeDelta}
\end{equation}

\Subsection{Numerical Experiments} \label{sec:results}

This subsection discusses the numerical experiments used to test our method. We stress that the experiments are meant to be a proof of concepts which is easily generalized to more realistic problems rather than an exhaustive real-life application. We begin by describing implementation specifics such as transformations required to make the fast GP framework compatible with Darcy's problem and estimation techniques for the expected conditional confidence. We then describe two numerical experiments used to evaluate our method. The first is a straightforward application of our trained GP model for real-time analysis of the relationship between extraction rate and confidence in maintaining a low enough pressure at the critical location. The second experiment visualizes the calibration procedure for the GP noise variance and explores the requisite assumption of Gaussian noise. Finally, we emphasize the generality of our algorithm by applying it to the same Darcy problem but with a three-dimensional subsurface.

\Subsubsection{Implementation Considerations}

To enable the reproducibility of our results, we describe the specifics for approximating the confidence $c(r)$ in \eqref{eq:confidence} by the expected conditional confidence $c_n(r)$ in \eqref{eq:expected_conditional_confidence}. Recall that lattice sequences are defined in the unit cube $[0,1]^{1+s}$. Moreover, the posterior mean of a GP surrogate fit with lattice sampling locations, and a matching covariance kernel has a periodic posterior mean. We first transform the extraction rate $r$ and independent Gaussians $\boldsymbol{Z}^s$, which defines the permeability field $G^s$, to the unit cube domain. We then periodize the critical pressure using the baker transform.

To transform our problem to the unit cube $[0,1]^{1+s}$, recall the assumption that the extraction rate no larger than the injection rate, i.e., $r \in [0,w]$. Let $\Phi^{-1}$ denote the inverse distribution function of a standard Gaussian random variable. For any $r_u \in [0,1]$ and $u_1,u_2,\dots \in (0,1)$ we may use \eqref{eq:KL_expansion} to write
\begin{equation}
    M^c(r_u,u_1,u_2,\dots) := H^c\left(wr_u,\exp\left(\sum_{j \geq 1} \sqrt{\lambda_j} \varphi_j(x) \Phi^{-1}(u_j)\right)\right).
\end{equation}
For independent standard uniform random variables $U_1,U_2,\dots \overset{\mathrm{IID}}{\sim} \mathcal{U}[0,1]$ we have 

$$P_G(H^c(r,G) \leq \bar{h}) = P_{(U_1,U_2,\dots)}(M^c(r/w,U_1,U_2,\dots) \leq \bar{h}).$$ 

To periodize $M^c(r_u,u_1,u_2,\dots)$, define the baker transform \cite[Chapter 16]{owen.mc_book_practical} 
\begin{equation}
    b(u) = 1 - 2 \left\lvert u - \frac{1}{2}\right\rvert = \begin{cases} 2u, & 0 \leq u \leq 1/2 \\ 2(1-u) & 1/2 \leq u \leq 1 \end{cases}
\end{equation}
so that 
\begin{equation}
    \mathring{M}^c(r_u,u_1,u_2,\dots) := M^c(b(r_u),b(u_1),b(u_2),\dots).
\end{equation}
Since $b(U) \sim \mathcal{U}[0,1]$ when $U \sim \mathcal{U}[0,1]$, we have 
\begin{equation}
    \begin{aligned}
        c(r) &= P_{(U_1,U_2,\dots)}\left(M^c(r/w,U_1,U_2,\dots) \leq \bar{h}\right) \\
        &= P_{(U_1,U_2,\dots)}\left(\mathring{M}^c(r/(2w),U_1,U_2,\dots) \leq \bar{h}\right).
    \end{aligned}
    \label{eq:periodic_confidence}
\end{equation}

Let $\mathring{M}^c_n(r_u,u_1,\dots,u_s)$ denote the posterior Gaussian process for \patchoverfull $\mathring{M}^c(r_u,u_1,u_2,\dots)$. Substituting  $\mathring{M}^c_n(r_u,u_1,\dots,u_s)$ into \eqref{eq:periodic_confidence} and taking the expectation gives 
\begin{equation}
    \begin{aligned}
        \mathring{c}_n(r) &:= P_{(\mathring{M}^c_n,U_1,\dots,U_s)}(\mathring{M}^c_n(r/(2w),U_1,\dots,U_s) \leq \bar{h}) \\
        &= \mathbb{E}_{(U_1,\dots,U_s)}\left[\Phi\left(\frac{\bar{h} - \mathring{M}_n(r,U_1,\dots,U_s)}{\mathring{\sigma}_n(r,U_1,\dots,U_s)}\right)\right].
    \end{aligned}
    \label{eq:periodic_expected_conditional_confidence}
\end{equation}
Equation \eqref{eq:periodic_confidence} motivates us approximating $c(r)$ by $\mathring{c}_n(r)$. Here the final inequality follows from Fubini's theorem \cite{fubini1907sugli} and $\mathring{m}_n$ and $\mathring{\sigma}_n$ are the GP posterior mean and standard deviation of $\mathring{M}_n^c$. We use the Quasi-Monte Carlo estimate 
\begin{equation}
    \mathring{c}_n(r) \approx \frac{1}{N} \sum_{i=1}^N \Phi\left(\frac{\bar{h} - \mathring{m}_n(r,U_{i1},\dots,U_{is})}{\mathring{\sigma}_n(r,U_{i1},\dots,U_{is})}\right)
    \label{eq:qmc_estimate_expected_conf}
\end{equation}
where $(U_{i1},\dots,U_{is})_{i=1}^N$ are low-discrepancy points, e.g., the first $N$ points of a lattice or digital sequence, see \Cref{sec:qmc} for details. 

\Subsubsection{Two-Dimensional Experiments and Analysis}

The domain of our subsurface is square with side lengths of 200\,m with the injection well, extraction well, and critical location shown in \Cref{fig:numerical_pde.birdseye}. We set the injection rate to 0.031688\,m$^3$/s (equivalent to 1 million metric tons per year [MMT/y]) and test extraction rates between -0.031688\,m$^3/s$ and 0.0\,m$^3$/s. We use a zero-mean log-normal permeability field with a Matérn covariance kernel having a correlation length 50\,m.

Our GP surrogate is fit to numerical experiments with fidelity $(s,d)=(64,128)$, i.e., $64$ were terms kept in the KL expansion and the mesh width for the finite volume solver was $1/128$ in both dimensions. The sequence of fidelities used to find upper and lower bounds for noise variance tuning were $(s_j)_{j=0}^M = (4,8,16,32,64,128,256,512)$ and $(d_j)_{j=0}^M = (8,16,32,64,128,256,512,1024)$ as shown in \Cref{fig:convergence_GP4Darcy} for the two-dimensional subsurface. At each fidelity, the PDE was solved numerically at $m=128$ extraction-permeability pairings, and $n=1024$ solves at fidelity $(s_T,d_T) = (64,128)$ were used to fit the GP. The Quasi-Monte Carlo approximation in \eqref{eq:qmc_estimate_expected_conf} was performed using $N=1024$ randomly shifted lattice points. The CPU time required for each step of experimentation are given in \Cref{tab:CPUtimes}. The condition number of the noisy circulant kernel matrix is $387$. As the condition number is the ratio between the largest and smallest eigenvalues, one may decrease the condition number by raising the lower bound on the GP noise variance.

\begin{table}[ht!]
    \caption{CPU time for different stages of the proposed method on the Darcy problem in two and three dimensions. First, the KL expansion is performed on a fine grid. Then the finite volume method from \texttt{DPFEHM} \cite{DPFEHM.jl} is used solve Darcy's equation on common permeability realizations at different fidelities. The Gaussian process (GP) regression model is then fit at the target fidelity, and GP inference may be performed at a fraction of the cost compared to a traditional solver. Note that GP fitting includes the finite volume (FV) solves. The fidelity parameters are the number of samples $m$, the number of KL terms / number of input dimensions to the GP $s$, and the domain discretization fidelity $d$. For the two-dimensional Darcy problem, the fidelity $d$ indicates a $d \times d$ computational mesh while for the three-dimensional Darcy problem the fidelity $d$ indicates a $d \times d \times 9$ computational mesh. Note that the cost of KL is independent of the number of samples, so these entries are left blank.}
    \centering
    \resizebox{\textwidth}{!}{
    \begin{tabular}{l | l | r l l l | l | r l l l}
        \textbf{Step} & \multicolumn{5}{|c|}{\textbf{Darcy 2D}} & \multicolumn{5}{c}{\textbf{Darcy 3D}} \\ 
        & \multicolumn{2}{c}{CPU time [sec]} & $m$ & $s$ & $d$ & \multicolumn{2}{c}{CPU time [sec]} & $m$ & $s$ & $d$ \\ 
        \hline 
        \hline 
        KL & \multicolumn{2}{l}{$229$} & & $512$ & $1024$ & \multicolumn{2}{l}{$2578$} &  & $512$ & $1024$ \\
        \hline 
        \multirow{11}{*}{FV solves} & \multirow{11}{*}{$7857$} & $47$ & $128$ & $4$ & $8$ & \multirow{11}{*}{$46657$} & $320$ & $128$ & $4$ & $8$ \\
        & & $76$  & $128$ & $8$ & $8$     & & $341$  & $128$ & $8$ & $8$ \\
        & & $82$  & $128$ & $8$ & $16$    & & $340$  & $128$ & $8$ & $16$ \\
        & & $137$ & $128$ & $16$ & $16$   & & $384$  & $128$ & $16$ & $16$ \\ 
        & & $220$ & $128$ & $16$ & $32$   & & $383$  & $128$ & $16$ & $32$ \\
        & & $263$ & $128$ & $32$ & $32$   & & $468$  & $128$ & $32$ & $32$ \\ 
        & & $317$ & $128$ & $32$ & $64$   & & $532$  & $128$ & $32$ & $64$ \\
        & & $336$ & $128$ & $64$ & $64$   & & $663$  & $128$ & $64$ & $64$ \\ 
        & & $340$ & $128$ & $64$ & $128$  & & $745$  & $128$ & $64$ & $128$ \\ 
        & & $375$ & $128$ & $128$ & $128$ & & $1077$ & $128$ & $128$ & $128$ \\
        & & $389$ & $128$ & $128$ & $256$ & & $1645$ & $128$ & $128$ & $256$ \\
        & & $464$ & $128$ & $256$ & $256$ & & $2309$ & $128$ & $256$ & $256$ \\ 
        & & $527$ & $128$ & $256$ & $512$ & & $5502$ & $128$ & $256$ & $512$ \\ 
        & & $671$ & $128$ & $512$ & $512$ & & $6849$ &  $128$ & $512$ & $512$ \\
        & & $908$ & $128$ & $512$ & $1024$& & $19182$&  $128$ & $512$ & $1024$\\
        & & $2704$& $1024$& $64$  & $128$  & & $5916$& $1024$ & $64$ & $128$ \\ 
        \hline
        GP fit  & \multicolumn{2}{l}{$3.5$} & $1024$ & $64$ & $128$ & \multicolumn{2}{l}{$2.0$} & $1024$ & $64$ & $128$ \\
        \hline 
        GP predict & \multicolumn{2}{l}{$3.0$} & $1024$ & $64$ & $128$ & \multicolumn{2}{l}{$2.8$} & $1024$ & $64$ & $128$ \\
        \hline 
    \end{tabular}
    }
    \label{tab:CPUtimes}
\end{table}

In \Cref{fig:confidences_heatmap} for the two-dimensional subsurface, we plot the approximate expected conditional confidence in \eqref{eq:qmc_estimate_expected_conf} for a range of extraction rates $r$ and pressure thresholds $\bar{h}$. While the surrogate is not constrained to be monotonically increasing in both extraction rate $r$ and threshold $\bar{h}$, the expected confidence appears to have this qualitative behavior. This reassures us that our surrogate captures the physics in the model. \Cref{fig:confidences_lineplot} may be viewed as slices of the left plot of \Cref{fig:confidences_heatmap} at fixed pressure threshold $\bar{h}$. For a fixed pressure threshold $\bar{h}$, numerous methods exist to use the GP model to find an extraction rate which yields a desired confidence. We emphasize this can all be done in real-time using only evaluations of the GP surrogate. 

\begin{figure}[!ht]
    \centering
    \begin{minipage}{.44\textwidth}
        \begin{center}\textbf{2-Dimensional Subsurface} \end{center}
        \includegraphics[trim={10cm 0 0 0},clip,width=1.\textwidth]{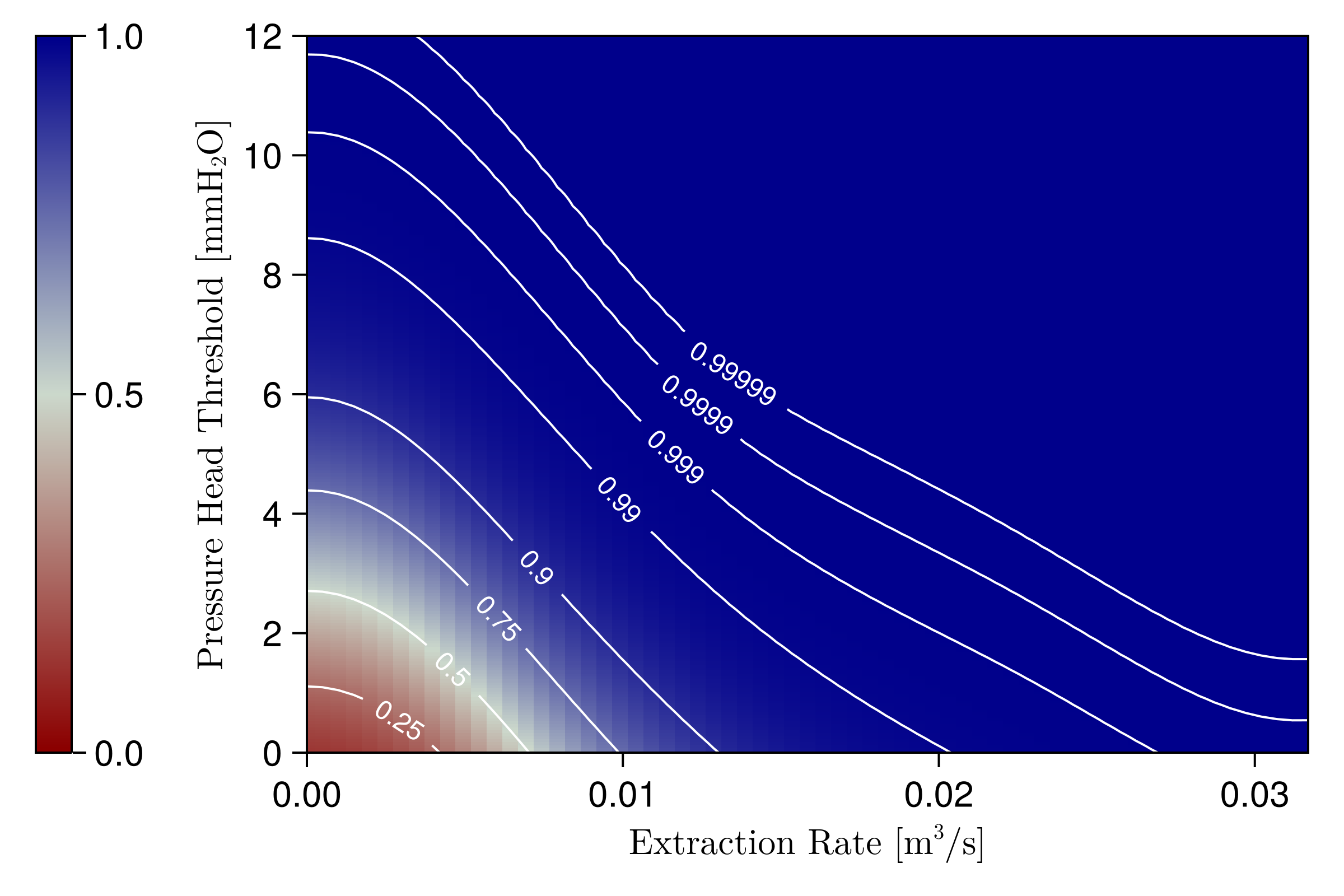}
    \end{minipage}
    \begin{minipage}{.50\textwidth}
        \begin{center}\textbf{3-Dimensional Subsurface} \end{center}
        \includegraphics[width=1.\textwidth]{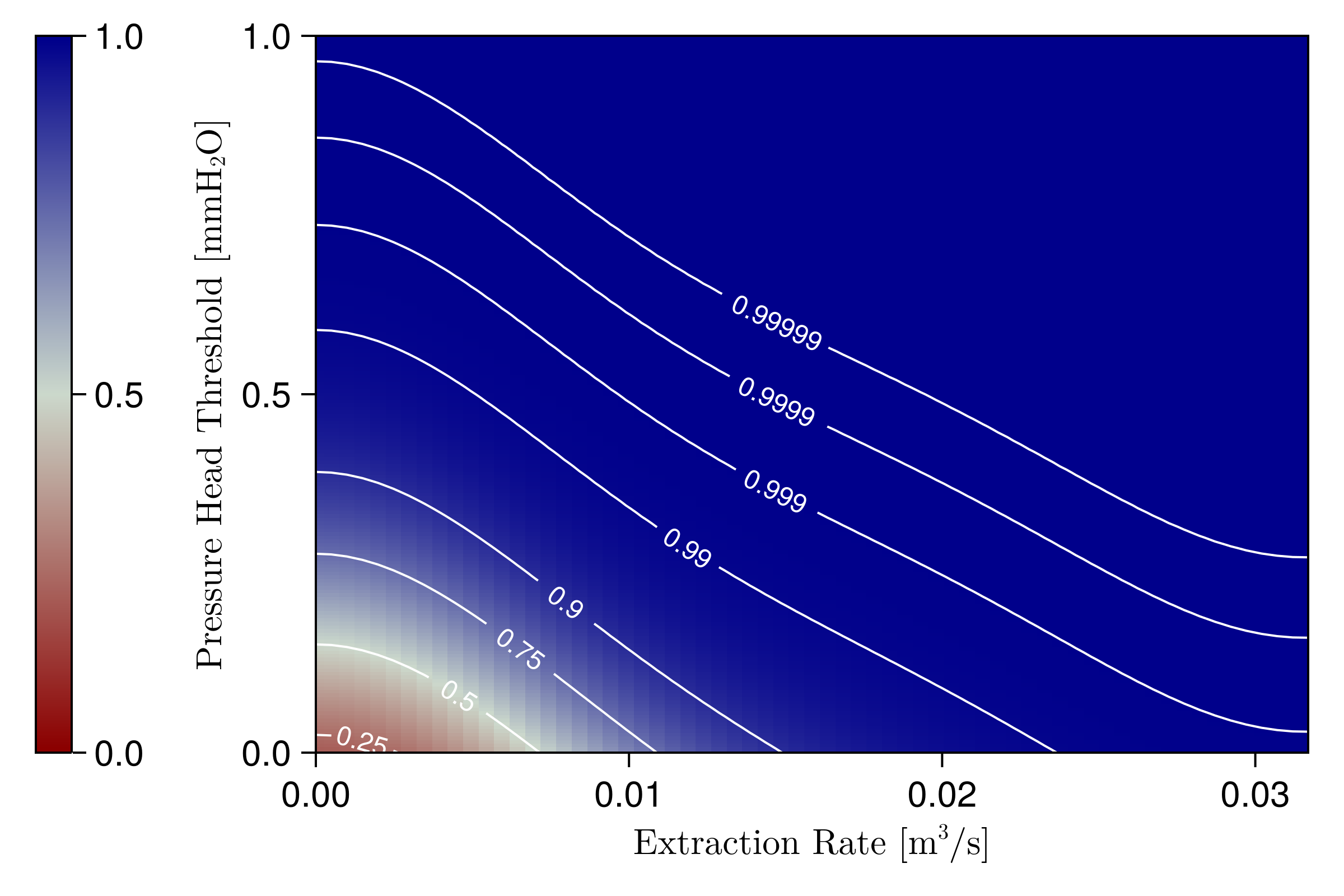}
    \end{minipage}
    \caption{Approximate expected posterior confidence from \eqref{eq:qmc_estimate_expected_conf} by extraction rate $r$ and pressure threshold $\bar{h}$.}
    \label{fig:confidences_heatmap}
\end{figure}

We now analyze our assumption of Gaussian noise for the GP model and our method of optimization. \Cref{fig:GP_noise} illustrates the critical distributions considered for noise variance fitting. A frequency plot of the errors between the maximum and target fidelities is plotted, i.e., a frequency plot of $\tilde{\Delta}_{M,T,i}$ from \eqref{eq:tildeDelta}. The sample MSE of these errors is used an approximate lower bound on the noise variance for optimization. The initial noise variance for optimization was set to the upper bound approximated the decay of differences in numerical solves at a sequence of increasing fidelities, see \eqref{eq:bar_rmse_upper_bound}. Details on both these approximate bounds will be provided later in the text. 

For the two-dimensional subsurface, \Cref{fig:GP_noise} shows the fitted GP noise variance essentially matches the lower bound. In fact, if we do not lower bound the noise variance, our optimization to maximize the marginal likelihood will choose an optimal noise variance orders of magnitude smaller than the lower bound. In practice, we observed the GP based confidence estimate is robust to the choice of observation noise. 

It may also be observed in the left plot of \Cref{fig:GP_noise} that the distribution of errors between the target and maximum fidelity numerical solutions does not appear Gaussian but instead appears to have heavier tails. The distribution of these errors should be close to the distribution of errors between the target fidelity numerical solutions and analytic solutions. This later error is what is modeled by our GP noise. The use of a GP necessitates the assumption of Gaussian noise, but our results suggest this assumption may not hold in practice.

\begin{figure}[!ht]
    \centering
    \begin{minipage}{.49\textwidth}
        \begin{center}\textbf{2-Dimensional Subsurface} \end{center}
        \includegraphics[width=1.\textwidth]{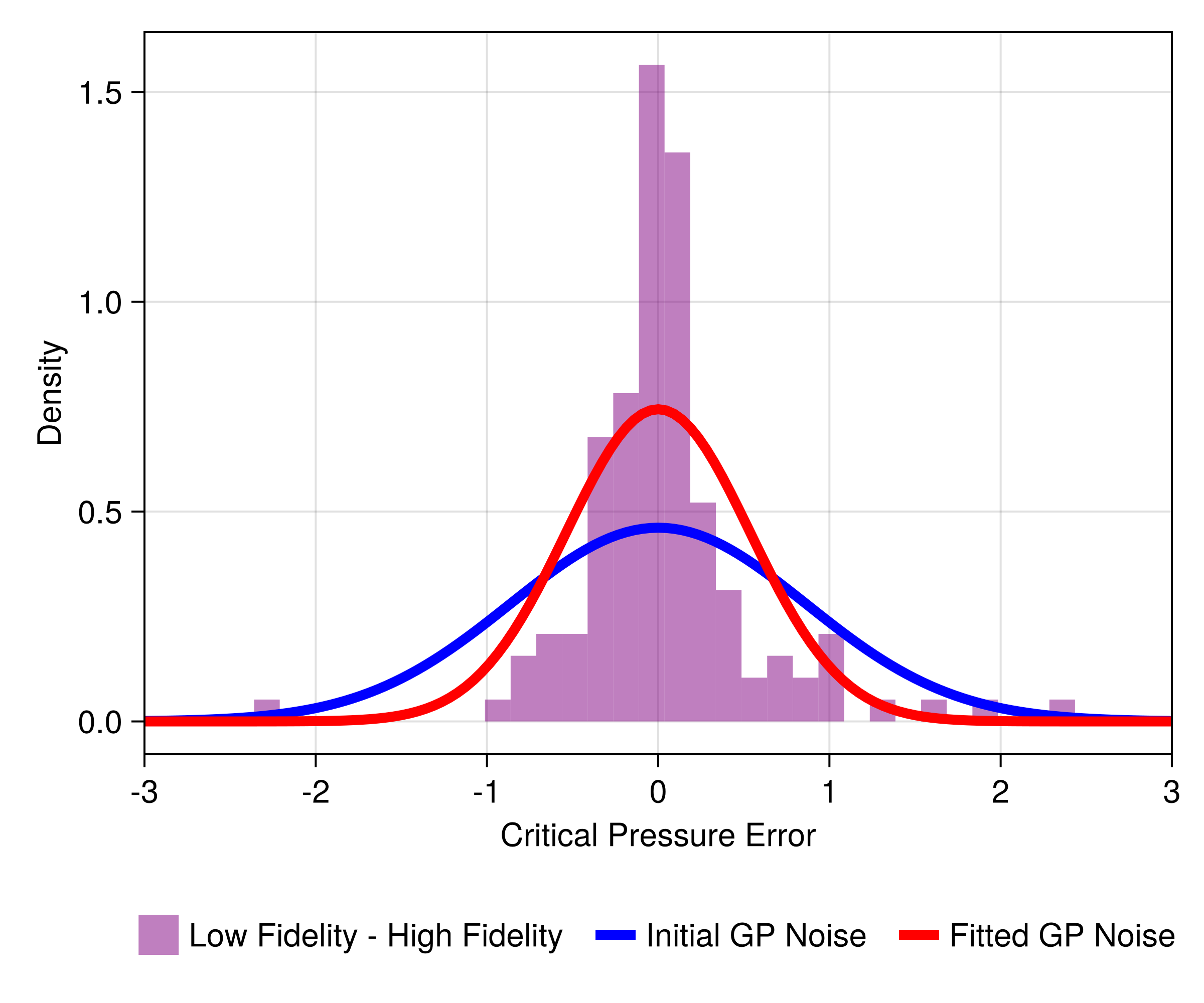}
    \end{minipage}
    \begin{minipage}{.49\textwidth}
        \begin{center}\textbf{3-Dimensional Subsurface} \end{center}
        \includegraphics[width=1.\textwidth]{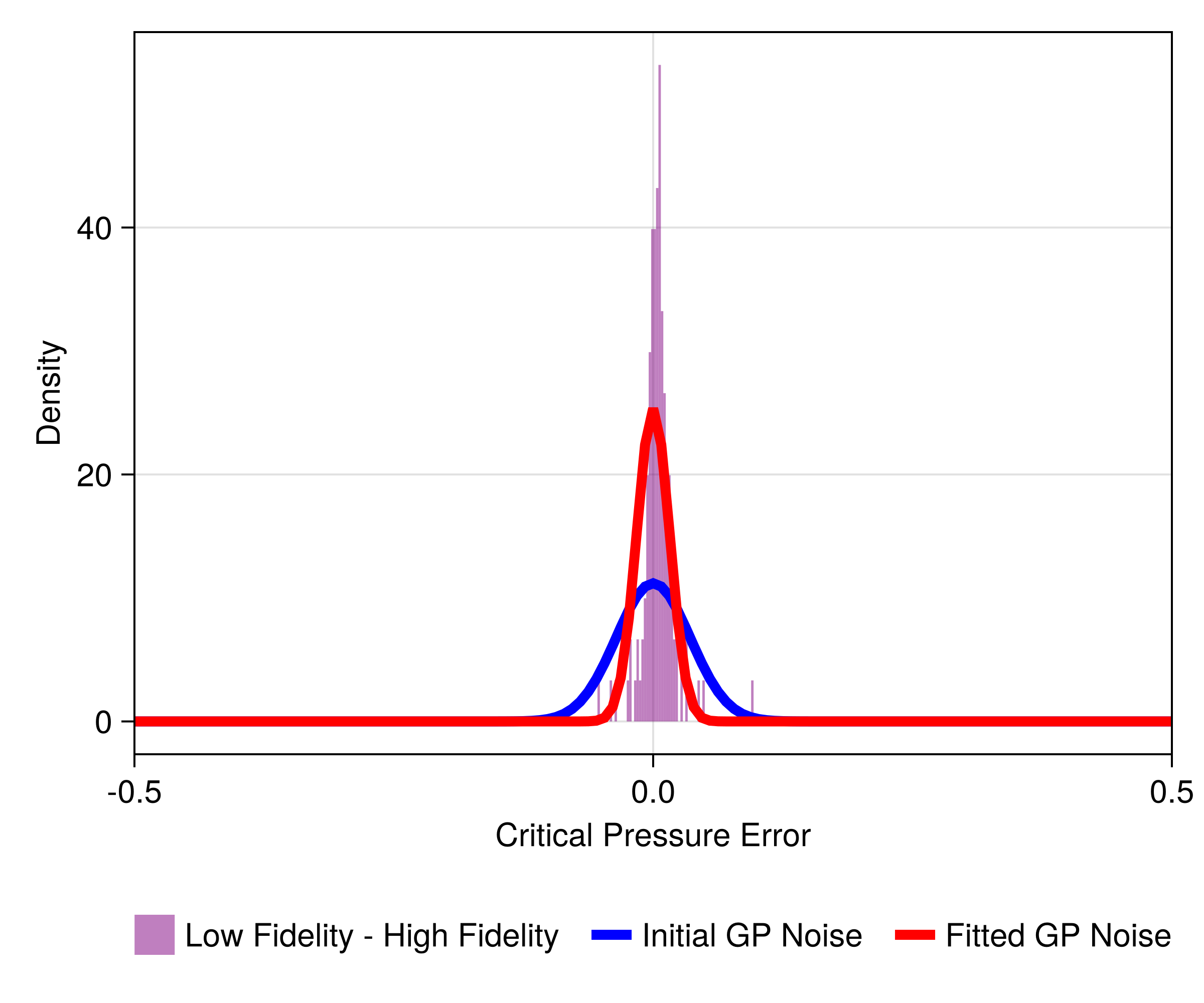}
    \end{minipage}
    \caption{Analysis of optimization for Gaussian process (GP) noise. The histogram shows frequency of errors between target and maximum fidelity observations, i.e., frequency of $\tilde{\Delta}_{M,T,i}$ from \eqref{eq:tildeDelta}. The mean square error of these $\tilde{\Delta}_{M,T,i}$ is used as a lower bound when optimizing the Gaussian process's noise variance. The initial noise variance for optimization, corresponding to the blue curve, is set to an upper bound approximated from differences in numerically solves of the PDE at a sequence of increasing fidelities, see \eqref{eq:bar_rmse_upper_bound}. The noise distribution after optimization is the red curve, which is indistinguishable from the lower bound distribution (not plotted). Our use of GP modeling necessitates the assumption of Gaussian noise which prohibits a better fit.}
    \label{fig:GP_noise}
\end{figure}

\Subsubsection{Three-Dimensional Experiments and Analysis}

To emphasize the generality of our method, we applied our algorithm to the Darcy flow problem to a three-dimensional domain. The experimental setup is the same as in the two-dimensional case, except now we set the subsurface to have a depth of 20\,m while the injection, extraction, and critical locations are all set at a depth of 10\,m. Also, the mesh grid for the finite volume solver had $(d_j+1) \times (d_j+1) \times 9$ mesh points in each dimension so the mesh width in each dimension at fidelity $j$ is $1/d_j,1/d_j,1/8$. The CPU time required for each step of experimentation are given in \Cref{tab:CPUtimes}. The condition number of the noisy circulant kernel matrix is $613$.

\Cref{fig:convergence_GP4Darcy} shows the convergence of telescoping sums used to derive an upper bound on the noise variance for the three-dimensional subsurface. The coefficients of determination are $0.81$ and $0.83$ for the $s$ and $d$ trends respectively. These are lower than the $0.98$ and $0.99$ respective values for Darcy's problem with a two-dimensional subsurface, but still large enough to justify the linear fits.

\Cref{fig:confidences_heatmap} shows, for the three-dimensional subsurface, the confidence in maintaining a low enough pressure at the critical location as a function of both pressure threshold and extraction rate. The pressure at the critical location is generally much lower in this three-dimensional setup than in the two-dimensional one. For instance, at an extraction rate of 0\,m${}^3$/s the two-dimensional setup gives a confidence of around 25\% that the pressure at the critical location is below 1\,mmH${}_2$O, while the three-dimensional setup has almost a 100\% confidence for the same extraction rate and threshold. Again, the monotonicity in both extraction rate and pressure threshold computed from the surrogate match our physical intuition for this three-dimensional subsurface problem.

Finally, \Cref{fig:GP_noise} shows the noise calibration process for the \patchoverfull three-dimensional subsurface, specifically the initial upper bound and final optimized bound. We again observe the heavier tails in the distribution of differences between target and maximum fidelities when compared to the assumed Gaussian distribution. There also appears to be a slight skew to the right in the distribution of these differences, indicating a potential bias in low-fidelity approximation. Again, we defer remedies to future work.

\Section{Solving Radiative Transfer Equations with Deep Operator Networks} \label{sec:RTE_DeepONet}

\begin{quotation}
    This section follows \cite{sorokin.RTE_DeepONet}, a publication I worked on during my 2024 Scientific Machine Learning Researcher appointment at FM (Factory Mutual Insurance Company) in collaboration with Xiaoyi Lu and Yi Wang. 
\end{quotation}

Radiative transfer is often the dominant mode of heat transfer in fires, and solving the governing radiative transfer equation (RTE) in CFD fire simulations is computationally intensive. This project develops a new versatile toolkit for training neural surrogates to solve various RTEs across different geometries and boundary conditions. We generalize previous work in the area to include unknown boundary conditions and to perform principal component analysis (PCA) for dimension reduction in this context. This enables efficient training of high-dimensional neural surrogate solvers for a large class of RTEs. The mesh free nature of these surrogates enables them to overcome the ray effect suffered by traditional solvers. Our results demonstrate that neural surrogates can provide fast and accurate radiation predictions for practical problems important to fire safety research. 

Specifically, the present study extends the work of \cite{lu2024surrogate} by developing DeepONet (deep operator network) surrogate RTE solvers for multidimensional radiation problems. We support RTEs with complex boundary conditions and use PCA to encode input functions into low-dimensional representations, resulting in more compact architectures. The new methodology is implemented in a flexible and extensible sciML toolkit for training neural surrogate RTE solvers using physics-informed, data-driven, or hybrid loss functions. We utilize the developed library to solve radiation problems in fire safety research.

\Subsection{Methods}

In CFD simulations, radiation is incorporated into the energy equation as a volumetric radiative heat loss term. This requires solving the radiative intensity field $I(\bx,\bs)$ at various locations $\bx$ and directions $\bs$ from the radiative transport equation (RTE), given by
\begin{equation}
    \calR_\mathrm{RTE}(I,(\bx,\bs)) = \bs\cdot\nabla I(\bx,\bs) +\kappa(\bx) I(\bx,\bs) - \kappa(\bx) I_{b}(\bx) = 0,
\label{eq:RTE_GE}
\end{equation}
subject to the boundary condition imposed for radiation rays emitting from bounding surfaces to the radiatively participating media within the domain
\begin{equation}
    \calR_\mathrm{BC}(I,(\bx,\bs)) = \epsilon(\bx) I_b(\bx) + \frac{\rho^{d}(\bx)}{\pi} \int I(\bx,\bs') \lvert \bn(\bx).\bs' \rvert d \Omega + \rho^{s}(\bx) I(\bx,\bs_s) - I(\bx,\bs) = 0.
    \label{eq:RTE_BC}
\end{equation}
Here $\bn$ denotes the boundary surface normal, $\bs_s =  \bs - 2 \bn (\bn . \bs)$ is the direction of a specular reflection, and $d\Omega$ is the solid angle on the unit sphere $\Omega$.
 
The input functions of the RTE are the absorption coefficient $u_1 (= -\kappa) \in \mathcal{U}_1$, the black body emissive power $u_2 (= I_b) \in \mathcal{U}_2$, the surface emissivity $u_3 (= \epsilon) \in \mathcal{U}_3$, the diffusive reflection coefficient $u_4 (= \rho^d) \in \mathcal{U}_4$, and the specular reflection coefficient $u_5 (= \rho^s) \in \mathcal{U}_5$. Here $I_b(\bx) = \sigma/\pi T^4(\bx)$ where $T$ is the temperature and $\sigma$ is the Stefan–Boltzmann constant. Boundary operators are generalized and parameterized, whereas prior work \cite{lu2024surrogate} assumed black walls at fixed temperatures. We use a DeepONet to approximate the RTE solution operator $G: \mathcal{U}_1\times \mathcal{U}_2\times \mathcal{U}_3\times \mathcal{U}_4\times \mathcal{U}_5 \to \mathcal{I}$ which maps input functions to the solution $I(\bx,\bs)\in \mathcal{I}$ as
\begin{equation}
G(u_1, \dots, u_5) = I. \label{eq:solution_operator}
\end{equation}

\begin{figure}[!ht]
    \centering
    \includegraphics[width=0.8\linewidth, clip=True]{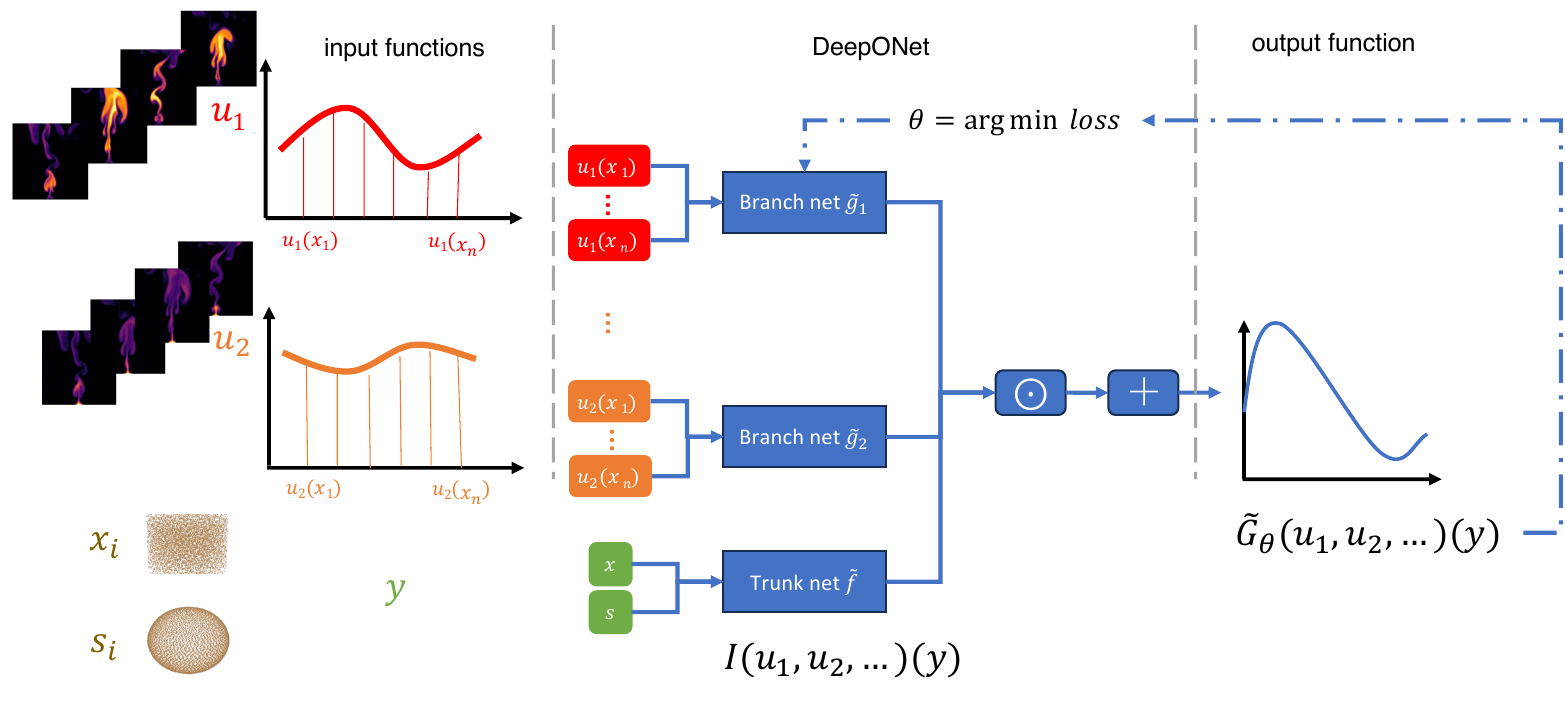}
    \caption{\captionsize The architecture of the deep operator network (DeepONet) for learning the mapping between function spaces.  We only show two of the five random coefficients for simplicity. }
    \label{fig:DeepONet_sketch}
\end{figure}

DeepONet comprises two main components: the branch net and the trunk net (see \cite{Lu21Nature}), as illustrated in \Cref{fig:DeepONet_sketch}. As shown in the leftmost panel, the branch nets $g_1,~\dots~g_5$ encode input functions $\bu_1(\bx), \dots \bu_5(\bx)$ at a fixed number of discrete sensor locations $\bx_i,~i=1,\dots,n$. The trunk net $f$ encodes the location and direction $\by=(\bx,\bs)$ where the output function $G(\bu_1,\dots \bu_5)(\by)$ is evaluated within a given domain. The multiple-input DeepONet prediction is 
\begin{equation}
    \tG_\Theta(\bu^1,\dots,\bu^5)(\by) = \calF_{\Theta_0}(\by) \odot \calG_{\Theta_1}(\bu^1) \odot \cdots \odot \calG_{\Theta_5}(\bu^5) + b
\end{equation}
where $\odot$ is the Hadamard product of the $H$ outputs from each MLP, $b$ is a trainable bias parameter, and $\Theta = \cup_{i=0}^R \Theta_i$ aggregates all networks parameters. 

The data for training a DeepONet is the Cartesian product of random coefficients $(\bu_k^1,\dots,\bu_k^5)_{k=1}^{N_\mathrm{RC}}$ and collocation points $(\bx_i,\bs_i)_{i=1}^{N_\mathrm{C}}$. The collocation points are input to the trunk net while the random coefficients are sampled at \emph{sensor locations} to get \emph{sensor values} which are input to the branch nets. Our implementation supports both aligned datasets, where the sensor locations match the collocation points, and unaligned datasets. Moreover, we lazily construct batches of the Cartesian product dataset on a GPU to significantly reduce the time spent loading data.

We support PCA to reduce the number of sensor values. Auto-encoders for sciML have also shown success for dimensionality reduction \cite{kontolati2023learning}, but they require additional training for the encoder and are not amenable to distributed evaluation. PCA instead can be done via a simple matrix-vector multiplication in a distributed manner when integrated in parallel CFD solvers. 

The weighted hybrid loss function combines terms for the RTE \eqref{eq:RTE_GE}, boundary condition \eqref{eq:RTE_BC}, and data from a traditional solver into 
\begin{equation}
    \calL(\Theta) = \omega_\mathrm{RTE} \lVert \calR_\mathrm{RTE}(\tG_\Theta, \calD_\mathrm{RTE}) \rVert + \omega_\mathrm{BC}\lVert \calR_\mathrm{BC}(\tG_\Theta,\calD_\mathrm{BC}) \rVert + \omega_\mathrm{data}\lVert \calR_\mathrm{data}(\tG_\Theta,\calD_\mathrm{data})\rVert.
    \label{eq:loss}
\end{equation}
Here $\omega$ are the weights, $\calR$ are the residuals, and $\calD$ are subsets of the sensor-collocation data with $\calD_\mathrm{data}$ also containing solution data from the reference solver. The gradient in \eqref{eq:RTE_GE} is evaluated exactly using automatic differentiation, while the integral in \eqref{eq:RTE_BC} is approximated using either Gauss--Legendre quadrature or Quasi-Monte Carlo, see \Cref{sec:qmc}. These cubature routines are also used to infer the incident radiation, radiative heat flux, or radiative heat loss. 
    
The core functionalities of the developed sciML library are implemented into two abstract classes. The first constructs the sensor-collocation datasets $\calD$ while the second defines the loss function $\calL(\Theta)$ by implementing both the residuals $\calR$ and the DeepONet $\tG_\Theta$. The complete code will be made available in a future publication.

\Subsection{Numerical Experiments}

Numerical experiments are conducted using the developed sciML library \\ \texttt{RTENet}, showcasing trained neural surrogates solving RTEs with complex boundary conditions and in practical settings of fire radiation transfer. The selected test problems include a cylinder enclosure problem from \cite[Fig. 5]{chui1992prediction}, the four special cases  considered in \cite[Section 4.3]{ge2016development}, and the small pool fire case from \texttt{FireFOAM/tutorials}. 

DeepONet training hyperparameters are summarized in \Cref{table:experimental_setup} where $L_T$ and $L_B$ are the number of hidden layers in trunk and branch nets respectively, $\gamma$ is the learning rate, $B$ is the equal batch size across datasets, and $E$ is the number of epochs. Note that the number of network parameters $\lvert \Theta \rvert$ is also a function of the number of sensor locations which is described in the following paragraphs. All training uses the AMSGrad variant of the Adam optimizer. 

\begin{table}[!ht]
    \caption{DeepONet training hyperparameters.}
    \centering
    \begin{tabular}{r | c c c | c c c c | c c c}
        & \multicolumn{3}{c|}{dataset} & \multicolumn{4}{c|}{DeepONet} & \multicolumn{3}{c}{training} \\
        problem & coefficients & $N_\mathrm{RC}$ & $N_\mathrm{C}$ & $\lvert \Theta \rvert$ & $L_T$ & $L_B$ & $H$ & $\gamma$ & $B$ & $E$ \\
        \hline 
        \cite{chui1992prediction} & $(\kappa)$ & $200$ & $12\mathrm{K}$ & $166\mathrm{K}$ & $5$ & $4$ & $128$ & $10^{-3}$ & $2\mathrm{K}$ & $500$ \\
        \cite{ge2016development} & $(\epsilon,\rho^d,\rho^s)$ & $259$ & $2\mathrm{K}$ & $150\mathrm{K}$ & $4$ & $3$ & $128$ & $10^{-3}$ & $1\mathrm{K}$ & $700$ \\
        pool fire & $(\kappa,I_b)$ & $4000$ & $365\mathrm{K}$ & $3.1\mathrm{M}$ & $5$ & $4$ & $512$ & $10^{-4}$ & $33\mathrm{K}$ & $12$ \\
        \hline 
    \end{tabular}
    \label{table:experimental_setup}
\end{table}

\begin{figure}[!ht]
    \centering 
    \includegraphics[width=1\linewidth, clip=True]{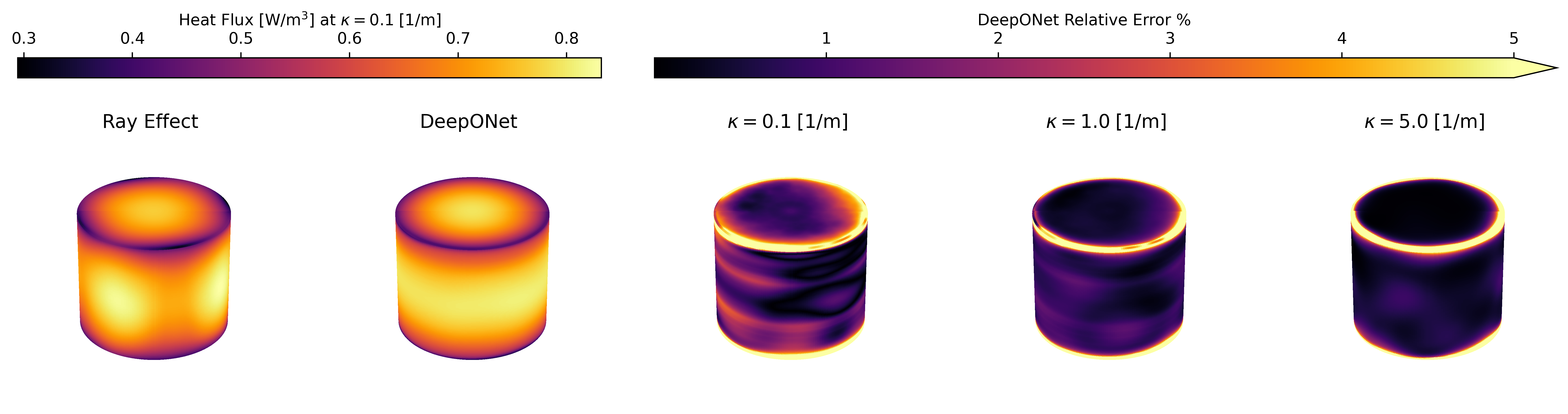}
    \caption{Deep operator networks (DeepONets) overcome the ray effect and yield small point-wise relative errors.}
    \label{fig:cylinder_deeponet_gray}
\end{figure}

A first test solves gas radiation transfer in a cylindrical enclosure. Boundaries are cold black walls and gas temperature is constant. The DeepONet was trained with $N_\mathrm{RC} = 200$ random realizations of constant $\kappa$ between $\kappa=0.01 \, [\mathrm{m}^{-1}]$ and $\kappa=6 \, [\mathrm{m}^{-1}]$ and no traditional solver data was provided, i.e., this was a purely physics-informed training. $L_2$ relative errors of $1.53\%$, $1.17\%$, and $1.88\%$ were attained for $\kappa=0.1$, $\kappa=1$, and $\kappa=5$ respectively as compared to the analytic gray gas solution. The point-wise relative errors are shown in \Cref{fig:cylinder_deeponet_gray}. The finite volume discrete ordinate method (FVDOM) solver in FireFOAM suffers from the ray effect when the angular discretization is coarse; this is quickly mitigated by the mesh-free nature of DeepONet training and inference. 

\begin{figure}[!ht]
    \centering
	\includegraphics[width=1\linewidth]{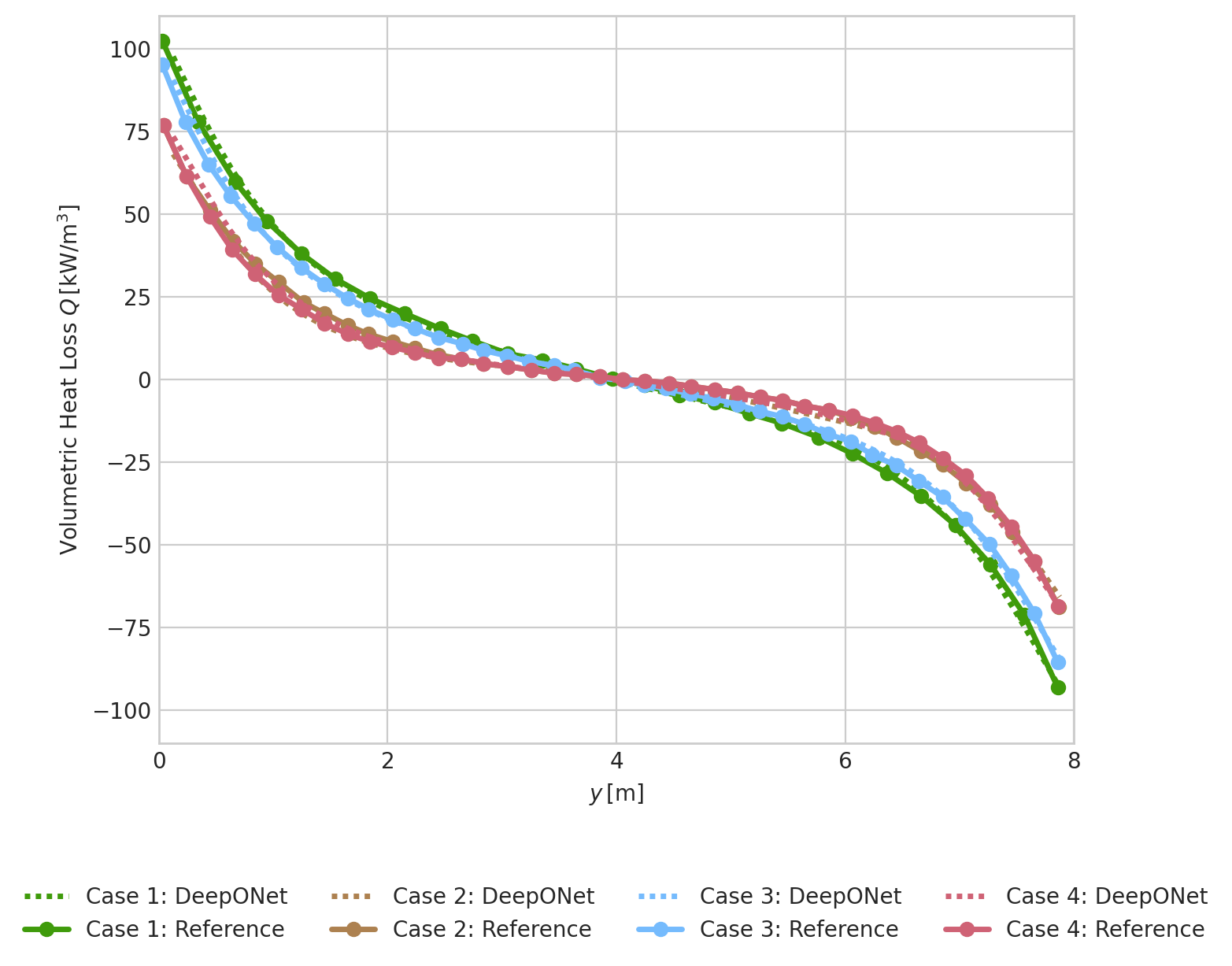}
	\caption{Deep operator networks (DeepONets) predicted heat loss compared to the higher-order spherical harmonics $P_7$ models in \cite{ge2016development}.}
	\label{fig:Ge_bc_rect_Q}
\end{figure}

A second test problem in \cite[Section 4.3]{ge2016development} considers gas radiation transfer in a rectangular domain with mixed boundary conditions. The gas temperature is prescribed, and the absorption coefficient is constant $\kappa = 0.5 \, [\mathrm{m}^{-1}]$, while $\epsilon,\rho^d,\rho^s$ are unknown piecewise constant functions satisfying $\epsilon + \rho^d + \rho^s = 1$. The top and bottom boundary have fixed $\epsilon=1$ while the left and right boundary are random constants $\epsilon_{\mathrm{LR}}$, $\rho_{\mathrm{LR}}^d$, and $\rho_{\mathrm{LR}}^s$. The DeepONet is trained on $N_\mathrm{RC} = 259$ random realizations of $\epsilon_{\mathrm{LR}}$, $\rho_{\mathrm{LR}}^d$, and $\rho_{\mathrm{LR}}^s$, and tested with the parameters specified in \cite[Section 4.3]{ge2016development}. \Cref{fig:Ge_bc_rect_Q} compares the predicted volumetric heat loss along a vertical slice of the domain between DeepONet and the $P_7$ model from \cite{ge2016development}.

The third problem trains DeepONet on $N_{\mathrm{RC}} = 4\mathrm{K}$ FireFOAM pool fire simulation snapshots, which use $151 \times 151$ cells and 16 discrete ordinates. The high-dimensional datasets are encoded into lower-dimensional latent spaces using PCA, reducing the branch net input size from $23\mathrm{K}$ to 500. \Cref{fig:poolfire} compares predicted incident radiation results on withheld testing realizations. Notably, the DeepONet approximation does not suffer from the ray effect observed in the discrete ordinate solver.

\Section{Operator Learning at Machine Precision} \label{sec:CHONKNORIS}

\begin{quotation}
    This section summarizes \cite{bacho.CHONKNORIS}, a paper I worked on during my 2025 DOE SCGSR (US Department of Energy Office of Science Graduate Student Research) Program fellowship under the guidance of Fred J. Hickernell and Pieterjan Robbe along with collaborators from the California Institute of Technology, including Houman Owhadi, Aras Bacho, Xianjin Yang, Th\'eo Bourdais, Edoardo Calvello, and Matthieu Darcy, as well as collaborators from the University of Washington, including Alex Hsu and Bamdad Hosseini.
\end{quotation}

Here we are interested in using scientific machine learning (sciML) for machine precision recovery of nonlinear PDEs with unknown/random coefficients, including both forward and inverse problems. For forward problems we want to recover the PDE solution from unknown coefficients such as a random forcing term in a nonlinear elliptic PDE, a random initial condition in Burgers' equation, or a random permeability field in a nonlinear Darcy flow equation. For inverse problems, we want to find a mapping from the (partially) observed PDE solution back to the unknown coefficients. For example, we will consider the seismic imaging problem where we model the mapping from acoustic data gather on the surface to the velocity map characterizing subsurface terrain. 

Scientific machine learning (sciML) has gained increasing interest in recent years as an efficient technology for modeling PDEs, see the brief literature review in \Cref{sec:background_sciml} or the more comprehensive review in the paper \cite{bacho.CHONKNORIS}. Operator learning sciML methods, such as Deep Operator Networks (DeepONets), Fourier Neural Operators (FNOs), and Gaussian Processes (GPs), have proven to be flexible tools for approximating mappings across both forward and inverse problems. However, such methods typically cannot achieve approximation errors below around 0.1\% ($L_2$ relative error) even after scaling up to large models. 

The goal of this section is to develop novel sciML models which are capable of achieving machine precision error in recovering discretized numerical solutions to nonlinear PDEs. The proposed method, which we call the Cholesky--Newton--Kantorovich Neural Operator Residual Iterative System, or CHONKNORIS for short, is applicable to PDEs with unknown coefficients spanning both forward and inverse problems. CHONKNORIS enables the computational efficiency and scalability of sciML operator learning models to be applied to accuracy-critical applications across scientific domains.

Solving nonlinear PDEs is often done by iteratively linearizing the system around an approximate solution and then perturbing the approximation in the direction of the linear system solution \cite{quarteroni1994numerical,zeidler2013nonlinear}. After writing the nonlinear PDE solution as the fixed point of a nonlinear least squares problem, one may use techniques like the Gauss--Newton algorithm or the (dampened) Levenberg--Marquardt algorithm to find such a solution. We use a function space generalization of these methods called the (dampened) Newton--Kantorovich (NK) iteration. The NK method iteratively perturbs an approximate PDE solution in a direction proportional to some operator acting on the PDE residual at the current approximation. Specifically, this operator is the (dampened) pseudo-inverse of the PDE residual's derivative at the current approximation. 

We use sciML to learn the mapping from an approximate PDE solution and random coefficients to the operator acting on the PDE residual. We emphasize that, while existing operator learning methods directly approximate the mapping from random coefficients to PDE solutions and typically achieve no better than 0.1\% error, our method approximates an operator-valued mapping which, when applied to PDE residuals, enables machine precision recovery of PDE solutions. Our method necessitates significantly larger sciML models. For example, in the discretized setting with $N$ collocation points, existing operator learning methods approximate a mapping from $\mathcal{O}(N)$ coefficient inputs to $\mathcal{O}(N)$ solution outputs,  while our method approximates a mapping from $\mathcal{O}(N)$ coefficient-solution inputs to $\mathcal{O}(N^2)$ residual operator outputs.

The Newton--Kantorovich (NK) method we employ may be written as 
\begin{equation}
    v_{n+1} = v_n - \left(\frac{\delta \mathcal{F}^\intercal}{\delta v}\frac{\delta \mathcal{F}}{\delta v} + \lambda I\right)^{-1} \frac{\delta \mathcal{F}^\intercal}{\delta v} \mathcal{F}(u,v_n)
\end{equation}
where $u$ are random coefficients,  $v_n := v_n(u)$ is the approximate PDE solution at iteration $n$, $\mathcal{F} := \mathcal{F}(u,v)$ is the (potentially data-driven) PDE residual equation, and $\lambda>0$ is a dampening (relaxation) parameter. CHONKNORIS learns the operator-valued mapping 
$$\mathcal{N}: (u,v) \mapsto \left(\frac{\delta \mathcal{F}^\intercal}{\delta v}\frac{\delta \mathcal{F}}{\delta v} + \lambda I\right)^{-1} \frac{\delta \mathcal{F}^\intercal}{\delta v}$$
so that
\begin{equation}
    v_{n+1} = v_n - \calN(u,v_n) \mathcal{F}(u,v_n).
\end{equation}
For forward problems, we use a sciML model $\mathcal{L}$ for the approximate Cholesky factor satisfying
\begin{equation}
    \mathcal{L}^\intercal\mathcal{L} \approx \left(\frac{\delta \mathcal{F}^\intercal}{\delta v}\frac{\delta \mathcal{F}}{\delta v} + \lambda I\right)^{-1}
\end{equation} 
so that 
\begin{equation}
    \mathcal{N} \approx \mathcal{L}^\intercal\mathcal{L} \frac{\delta \mathcal{F}^\intercal}{\delta v}.
\end{equation}

The following subsections will highlight a few of the PDE applications I was most active in developing. More extensive numerical experiments, including a foundation model for forward problems and other inverse problem applications, may be found in the full text \cite{bacho.CHONKNORIS}. The PDEs we will consider include:
\begin{itemize} 
    \item A one-dimensional nonlinear elliptic PDE with an unknown forcing term, see \Cref{sec:CHONKNORIS_elliptic}. 
    \item Burgers' equation in one dimension with unknown initial conditions leading to shocks, see \Cref{sec:CHONKNORIS_burgers}. 
    \item A nonlinear Darcy flow equation in two dimensions with fixed forcing and unknown permeability, see \Cref{sec:CHONKNORIS_darcy}. 
    \item A seismic imaging full waveform inversion problem for  recovering subsurface velocity maps from surface acoustic data, see \Cref{sec:CHONKNORIS_FWI}. 
\end{itemize} 
For each of these problems we were able to use CHONKNORIS to recover discretized solutions to within machine precision error.

\Subsection{Nonlinear Elliptic Equation}\label{sec:CHONKNORIS_elliptic}

The first example is a 1 dimensions nonlinear elliptic PDE, supplemented with periodic boundary conditions, which may be written as 
\begin{align}\label{eq:example_elliptic_pde}
\begin{cases}
    - \Delta v(x) + \kappa v(x)^3=u(x),& x\in\Omega\\
    v(0)=v(1)
\end{cases}
\end{align} 
with $\kappa = 50$. The corresponding differential operator and its derivative are
\begin{equation*}
    \mathcal{F}(u,v)=- \Delta v + \kappa v^3-u \qqtqq{and} \left[\frac{\delta \mathcal{F}}{\delta v}(u,v)\right](h) = [-\Delta + 3\kappa v^2](h)
\end{equation*}
respectively.

Cholesky factor data is generated for $N=63$ regular grid points across $1024$ realizations of $u$ and $5$ NK steps, each without relaxation, i.e.,  $\lambda = 0$. The random coefficient $u$ is sampled from a zero-mean Gaussian process with periodic kernel 
$$K(x,x') = \exp(-2/\ell \sin^2(\pi/p (x-x')))$$
with period length $p=1/2$ and lengthscale $\ell=10$. Data from $1/8$ of the realizations was withheld for testing. 

Here we fit two operator learning models. The first is a classic operator learning model $u \mapsto v$ whose prediction is used as an initial guess for the NK/CHONKNORIS method. The second is our CHONKNORIS predictor model for the Cholesky factor $v \mapsto L(v)$. For both models we use vector-valued Gaussian processes regression with tuned Mat\'ern $5/2$ and Gaussian kernels respectively. 

\Cref{fig:elliptic_pde_complete} visualizes the distributions of random coefficients $u$, solutions $v$, and convergence of error metrics. Here we measure the residual RMSE and $L_2$ relative error for discretized solutions using the NK and CHONKNORIS methods started from initial guesses $v_0=0$ or initial guesses from the classic operator learning method. The initial guess provides modest savings for both NK and CHONKNORIS. NK typically converges within 5 iterations.  CHONKNORIS, despite using approximate Cholesky factors, manages to converge to machine precision accuracy within $12$ iterations across all realizations.

\begin{figure}[H]
    \centering
    \includegraphics[width=1\linewidth]{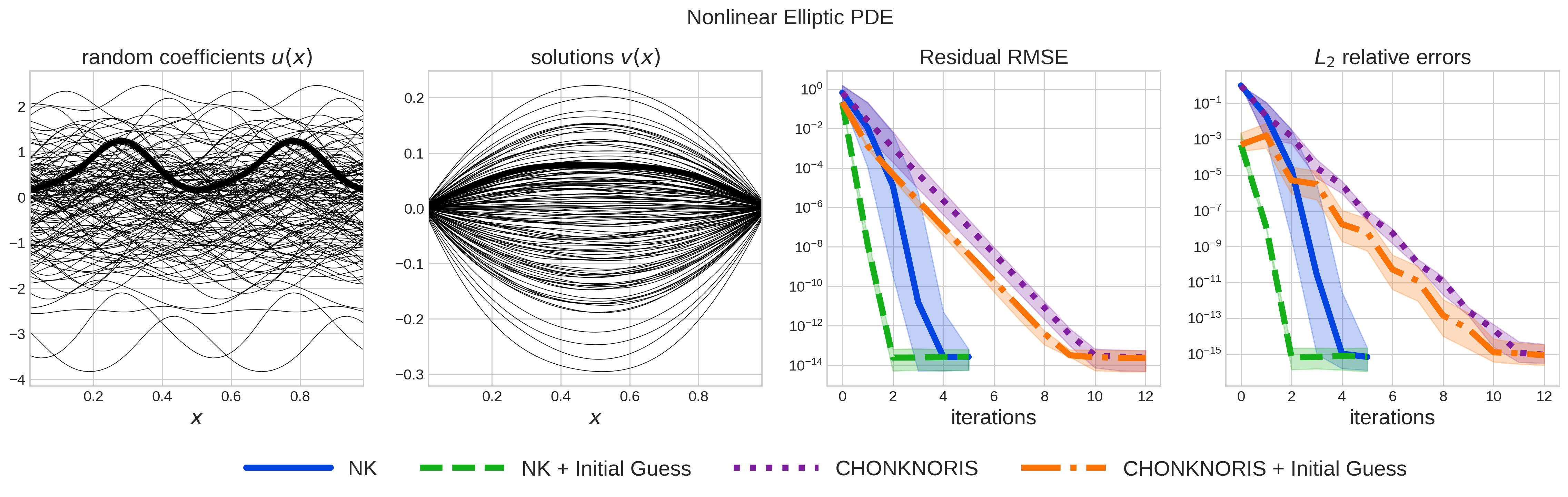}
    \caption{Results for the nonlinear elliptic PDE problem. A single coefficient-solution pair is shown in bold.  For each method the median is plotted along with $10\%-90\%$ quantiles across test realizations. Our CHONKNORIS method is able to achieve machine precision accuracy in less than $12$ iterations.}
    \label{fig:elliptic_pde_complete}
\end{figure}

\Subsection{Burgers' Equation}
\label{sec:CHONKNORIS_burgers}

The second example is the 1-dimensional Burgers' equation, supplemented with periodic boundary conditions and a random initial condition, which may be written as 
\begin{equation} \label{eq:Burger's}
    \begin{cases}
        \partial_t f = \nu \partial_{xx} f - f \partial_x f, & (x,t) \in \mathbb{T} \times [0,T], \\
        f(0,t) = f(1,t), & t \in [0,T], \\
        f(x,0) = f_0(x), & x \in \mathbb{T}
    \end{cases}.
\end{equation}
Here $\nu = 1/50$, $f_0$ is the initial condition and $\partial_t$, $\partial_x$, $\partial_{xx}$ are partial derivatives. We discretize $[0,T]$ using the $M$-point uniform grid $\{t_i\}_{i=0}^{M-1} := \{i\Delta t\}_{i=0}^{M-1}$, $\Delta t = T/(M-1)$, and apply an implicit Euler temporal  discretization leading to
$$\frac{f^{i+1}(x) - f^i(x)}{\Delta t} = \nu \partial_{xx} f^{i+1} (x) - f^{i+1}(x) \partial_{x} f^{i+1}(x)$$
where $f^{i}(x) = f(x,t_i)$ and $\Delta t = T/(M-1)$. Thus, the time-marching discretization scheme defines the next time step $f^{i+1}$ given $f^{i}$ so that $\mathcal{F}(f^{i},f^{i+1})=0$ with \begin{align*}
    \mathcal{F}(u,v)&=v - \Delta t \left(\nu \partial_{xx} v - v \partial_{x} v\right) - u\\
    \frac{\delta \mathcal{F}}{\delta v}(u,v)[h] &= h - \Delta t\left(\nu \partial_{xx} h - h \partial_{x} v - v \partial_{x} h\right)
\end{align*} Notice that $\delta \mathcal{F} / \delta v$ does \emph{not} depend on $u$.

Cholesky factor data is generated for $N=127$ regular grid points across $512$ realizations of the random initial condition $f_0$ and $4$ NK steps. Each NK step is done without relaxation, i.e., $\lambda=0$, but we store and predict Cholesky factors corresponding to $\lambda = 0.01$ for better conditioning. We take our random initial condition to be 
$$f_0(x) = \sum_{k=1}^3 a_k \sin(\pi k x)$$
where $(a_1,\dots,a_3) \sim \mathcal{N}(0,1)$. Data from $1/8$ of the realizations was withheld for testing. 

Here the CHONKNORIS Cholesky model $v \mapsto L(v)$ is a fully connected neural network (multilayer-perceptron, MLP). The MLP architecture we use has around $8.7$ million parameters across $N=127$ inputs, two hidden layers with $500$ and $1000$ nodes respectively, and an output layer of size $8128$, all connected with hyperbolic tangent (tanh) activations. 

\Cref{fig:burgers_complete} visualizes a few solutions for different initial conditions and plots the convergence of Burger's equation. The CHONKNORIS model is able to consistently recover the discretized solutions to machine precision despite many solutions containing shocks which can be difficult to capture for traditional solvers.

\begin{figure}[H]
    \centering
    \includegraphics[width=1\linewidth]{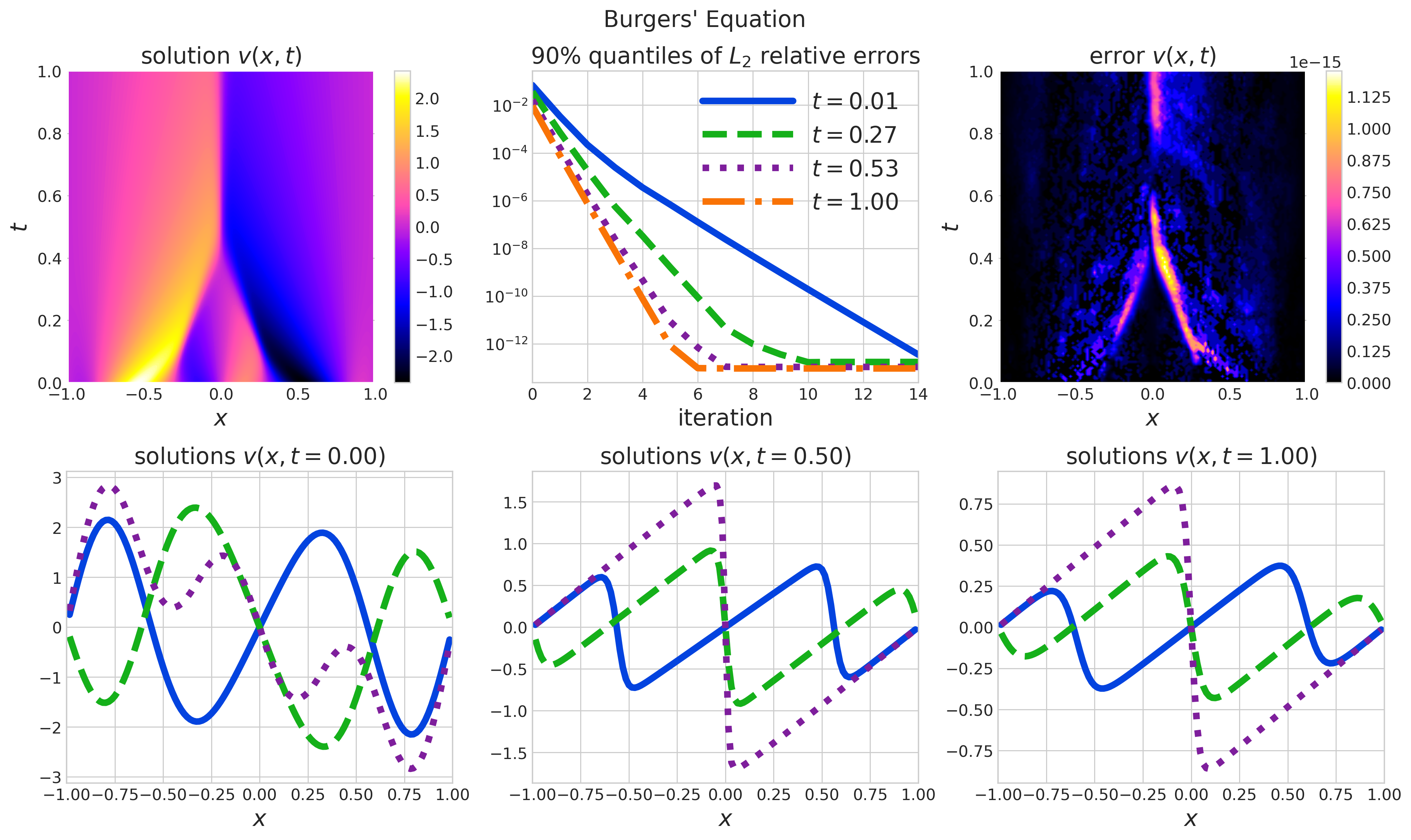}
    \caption{Results for Burgers' equation. The upper left plots show a held-out discretized solution containing shocks. The upper right shows the error in our CHONKNORIS prediction which is on the order of machine precision. The upper middle plot shows convergence of our CHONKNORIS method applied at various time steps with later time steps generally requiring fewer iterations. The bottom row shows a few solutions at different time steps with the $t=0$ case corresponding to our random initial conditions.}
    \label{fig:burgers_complete}
\end{figure}

\Subsection{Nonlinear Darcy Flow}\label{sec:CHONKNORIS_darcy}

The third example is the 2D Darcy flow equation, supplemented with homogeneous Dirichlet boundary conditions, which may be written as 
\begin{equation} \label{eq:Darcy}
\begin{cases}
    - \nabla \cdot (e^u \nabla v) +  v^3 = f, & x \in [0,1]^2 \\
    v = 0, & x \in \partial [0,1]^2
\end{cases}.
\end{equation}
Here $f$ is a forcing term and $e^u$ represents the conductivity. Expanding $- \nabla \cdot (e^u \nabla v) = - e^u [\nabla u \cdot \nabla v + \Delta v]$, we have
\begin{align}
\mathcal{F}\left(u,v\right) = -e^u [\nabla u \cdot \nabla v + \Delta v]+\kappa v^3 - f \qquad\text{and} \\
\left[\frac{\delta \mathcal{F}}{\delta v}(u,v)\right](h) = -e^u[\nabla u \cdot \nabla h + \Delta h]+3 \kappa v^2 h.
\end{align}

Cholesky factor data is generated for $N=400$ points on a $20 \times 20$ regular grid across $1024$ realizations of the random permeability $u$ and $6$ NK steps. Each NK step is done without relaxation, i.e., $\lambda=0$, but we store and predict Cholesky factors corresponding to $\lambda = 0.001$ for better conditioning. We take the fixed forcing term to be a single draw from a zero-mean GP with $5/2$ Mat\'ern kernel and constant lengthscale of $3/10$ across both dimension. The random coefficient $u$ was sampled from a Gaussian process with covariance $5(- \Delta + 1/100)^{-2}$ where $-\Delta$ denotes the Laplacian. Data from $1/8$ of the realizations was withheld for testing. 

As with the nonlinear elliptic example, we fit two operator learning models. The first is a classic operator learning model $u \mapsto v$ whose prediction is used as an initial guess for the NK/CHONKNORIS method. The second is our CHONKNORIS predictor model for the Cholesky factor $(u,v) \mapsto L(v)$. For both models we use vector-valued Gaussian processes regression with tuned Mat\'ern $5/2$ and Gaussian kernels respectively. 

\Cref{fig:darcy_complete} visualizes the distribution of errors across CHONKNORIS iterations, the fixed forcing term, and predictions for a single representative testing permeability. While the classic operator learning model achieves $L_2$ relative error on the order of $10^{-3}$, running the CHONKNORIS method for $5000$ iterations is able to recover solutions to machine precision for over $90\%$ of the withheld test realizations.  

\begin{figure}[H]
    \centering
    \includegraphics[width=1\linewidth]{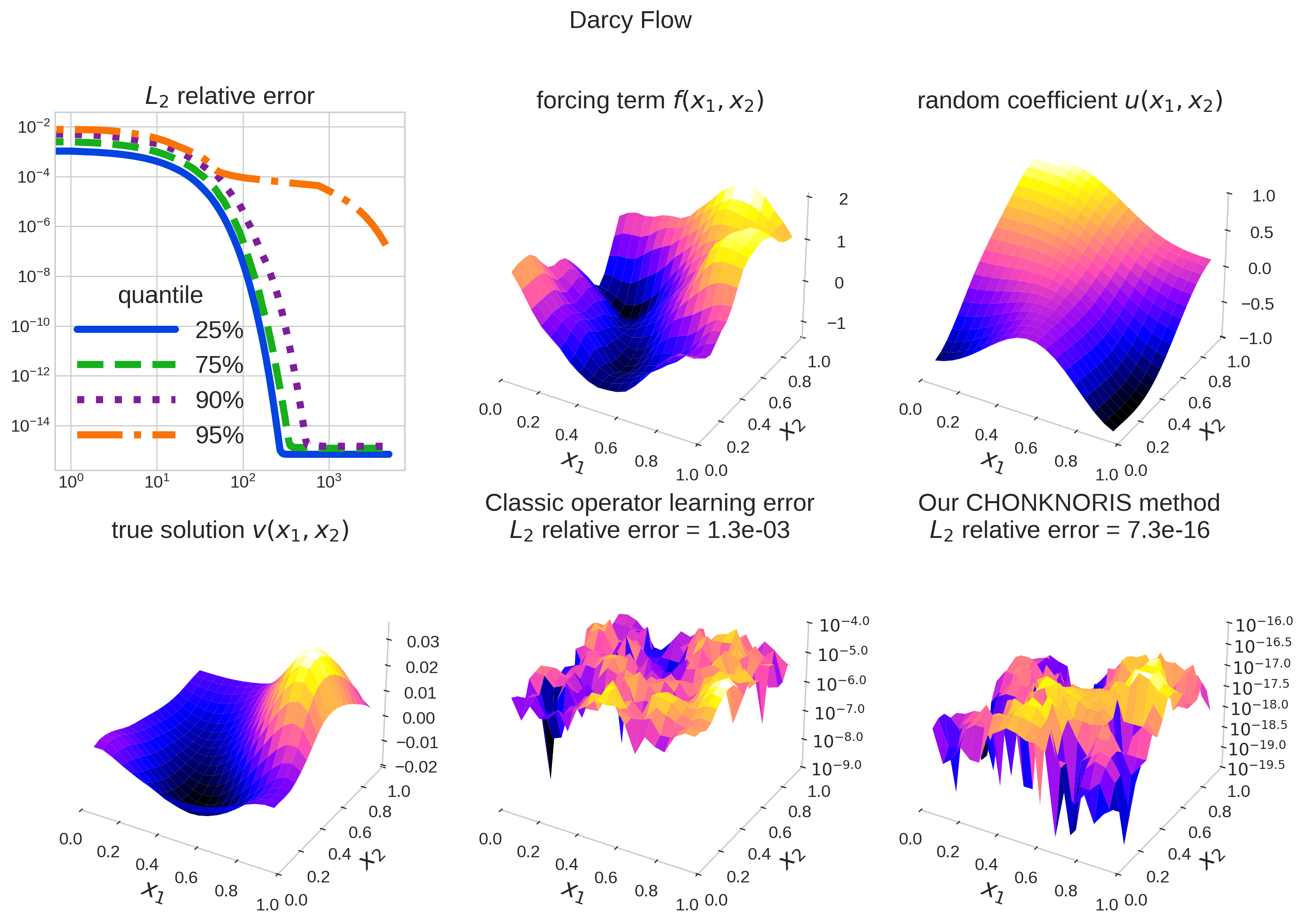}
    \caption{Results for the Darcy flow PDE. While classic operator learning methods can typically achieve $L_2$ relative errors of around $10^{-3}$, our CHONKNORIS method is capable of machine precision recovery for the vast majority of random permeability fields. Notice that more challenging realizations require more CHONKNORIS iterations.}
    \label{fig:darcy_complete}
\end{figure}

\Subsection{Seismic Imaging}\label{sec:CHONKNORIS_FWI}

We consider the acoustic wave equation relating the subsurface velocity $v$ and seismic wavefield $p$ which may be written as 
\begin{equation}\label{eq:seismic.imaging.PDE}\begin{cases}
    \Delta p(t,x)- \frac{1}{v^2(x)} p_{tt}(t,x) = s(t,x), & (x,t)\in \Omega\times [0,T],\\
    p(0,x)=p_t(0,x)=0, & x\in\Omega.
\end{cases}
\end{equation}
Here $s$ is a seismic source term, which we take to be a Ricker wavelet, and we only observe $u := p\vert_{z=0}$, the acoustic signal at the boundary $z=0$. The goal of full waveform inversion \cite{virieux2009overview} is to learn the map $u \mapsto v$.

Let $\mathcal{F}(u,v) = \calP(v)-u$ where $\calP$ is a numerical PDE solver for the forward problem which computes the acoustic signal at the boundary from a given velocity map $v$. We use automatic differentiation to compute the Jacobian of the forward solver $\mathcal{P}$. Data is sourced from the OpenFWI dataset \cite{deng2022openfwi}, and we use the same numerical forward solver they do which may be found at \url{https://csim.kaust.edu.sa/files/SeismicInversion/Chapter.FD/lab.FD2.8/lab.html} (a $2-4$ finite difference scheme with $2$nd-order accuracy in time and $4$th-order accuracy in space). Convergence of the exact NK method for a single $14 \times 14$ resolution velocity map is shown in \Cref{fig:nk_fwi_viz}.

While our previous experiments predicted the inverse Cholesky factor of the approximate Hessian, for this problem we found it easier to directly predict the Cholesky factor of the approximate Hessian and then solve the system using back-substitution. We note this does not change the computational complexity of inference. We found it necessary to simultaneously tune both the relaxation and learning rate parameters using line search in order to converge to machine precision in a reasonable number of iterations. Specifically, the learning rate $\alpha$ is used to write $v_{n+1} = v_n - \alpha\, \calN(u,v_n) \calF(u,v_n)$.

A comparison of metrics across different resolutions is given in \Cref{fig:fwi_convergence_plot}. For the rough velocity maps studied here, increasing the resolution increases the condition numbers of the relaxed approximate Hessian. CHONKNORIS is unable to exactly predict the Cholesky factor of ill-conditioned matrices, and thus resorts to inferring gradient descent steps which can make CHONKNORIS inference slow to converge. As expected, we also observe that the relaxation is decreased as NK nears convergence, indicating a smooth transition from gradient descent to Gauss--Newton updates. This behavior is more difficult to replicate with CHONKNORIS as the approximate Hessian near the solution is ill-conditioned and therefore difficult for CHONKNORIS to predict. Here we have again used a GP surrogate.

\begin{figure}[H]
    \centering 
    \includegraphics[width=1\textwidth]{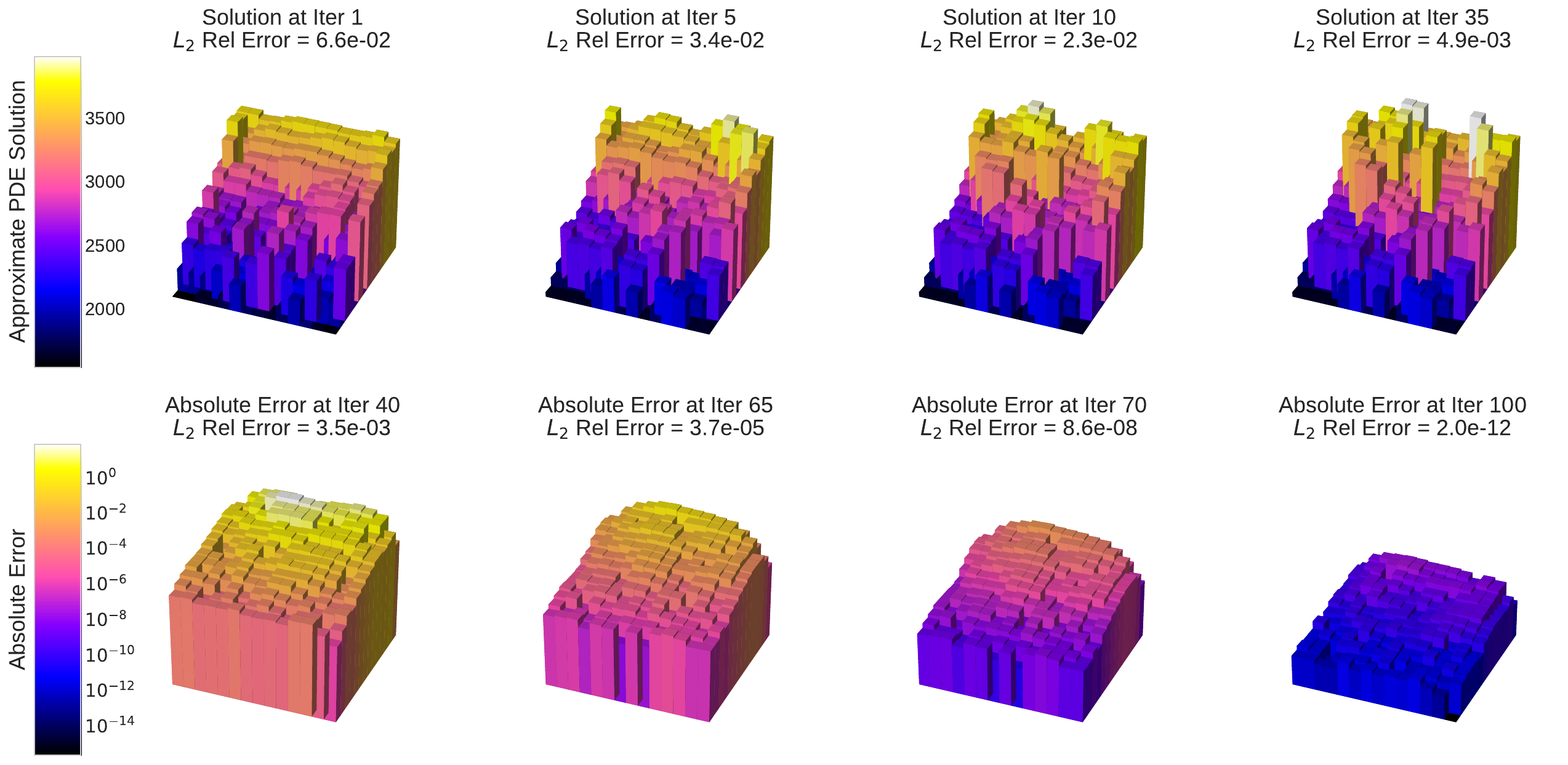}
    \caption{Visualization of the NK method applied to the FWI problem of recovering subsurface velocity maps from surface acoustic signals.} 
    \label{fig:nk_fwi_viz}
\end{figure} 

\begin{figure}[H]
    \centering 
    \includegraphics[width=1\textwidth]{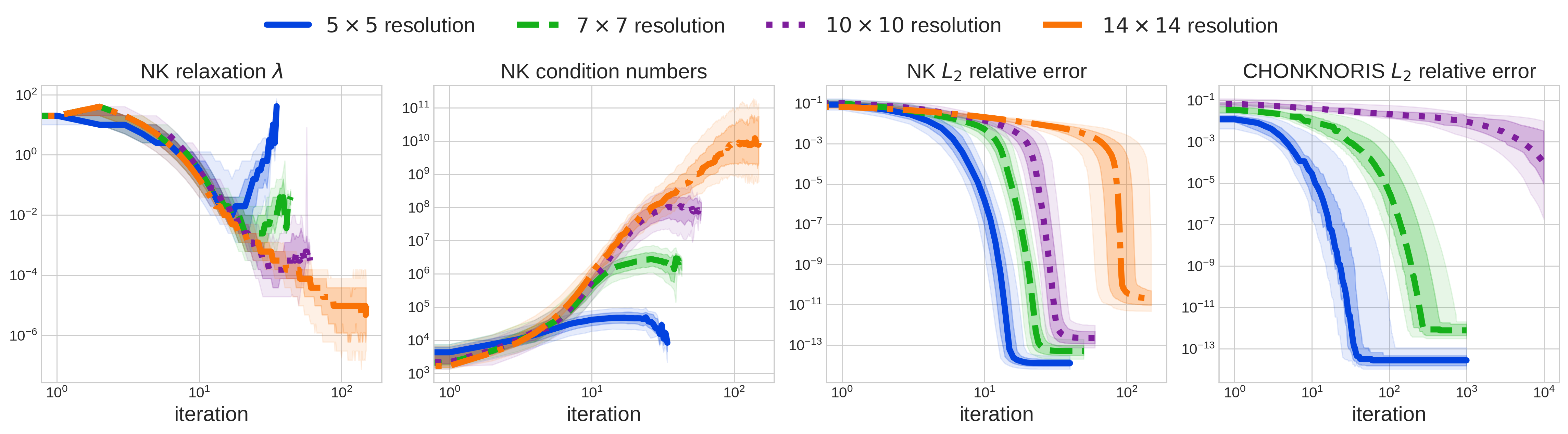}
    \caption{Convergence study for the full waveform inversion (FWI) problem. The NK method is able to converge to machine precision for the four resolutions considered. CHONKNORIS is capable of machine precision recovery for low resolutions, but is currently ineffective for predicting the ill-conditioned matrices which arise for rough coefficients, high resolutions, and/or near convergence iterations.} 
    \label{fig:fwi_convergence_plot}
\end{figure} 

\Chapter{Conclusion}

This thesis has presented a range of research across (Quasi-)Monte Carlo methods, Gaussian process regression, and machine learning. We developed efficient algorithms to
\begin{enumerate}
    \item provide efficient Quasi-Monte Carlo methods for both researchers and practitioners,
    \item enable accelerated Gaussian process regression modeling in multi-fidelity and Bayesian cubature settings, and 
    \item tackle challenging applications alongside collaborators across academia, national labs, and industry. 
\end{enumerate}
These algorithms were implemented into robust, open-source software libraries. 

\bibliography{main}

\end{document}